\newtheorem{theorem}{Theorem}
\newtheorem{example}{Example}[section]
\newcommand{\ts}{\textsuperscript}
\title{Diffusion Sampling with Momentum for Mitigating Divergence Artifacts}
\author{
  Suttisak ~Wizadwongsa \\
  VISTEC, Thailand \\
  \texttt{suttisak.w\textunderscore s19@vistec.ac.th} \\
  \And
  Worameth Chinchuthakun \\
  Tokyo Institute of Technology, Japan \\
  \texttt{chinchuthakun.w.aa@m.titech.ac.jp} \\
  \And
  Pramook Khungurn \\
  pixiv Inc.\\
  \texttt{pramook@gmail.com} \\
  \And
  Amit Raj \\
  Google \\
  \texttt{amitrajs@google.com} \\
  \And
  Supasorn ~Suwajanakorn \\
  VISTEC, Thailand \\
  \texttt{supasorn.s@vistec.ac.th} \\
}
\begin{document}

\maketitle

\begin{abstract}
Despite the remarkable success of diffusion models in image generation, slow sampling remains a persistent issue. To accelerate the sampling process, prior studies have reformulated diffusion sampling as an ODE/SDE and introduced higher-order numerical methods. However, these methods often produce \emph{divergence} artifacts, especially with a low number of sampling steps, which limits the achievable acceleration. In this paper, we investigate the potential causes of these artifacts and suggest that the small stability regions of these methods could be the principal cause. To address this issue, we propose two novel techniques. The first technique involves the incorporation of Heavy Ball (HB) momentum, a well-known technique for improving optimization, into existing diffusion numerical methods to expand their stability regions.  We also prove that the resulting methods have first-order convergence. The second technique, called Generalized Heavy Ball (GHVB), constructs a new high-order method that offers a variable trade-off between accuracy and artifact suppression. 
Experimental results show that our techniques are highly effective in reducing artifacts and improving image quality, surpassing state-of-the-art diffusion solvers on both pixel-based and latent-based diffusion models for low-step sampling.
Our research provides novel insights into the design of numerical methods for future diffusion work.
\end{abstract}
\section{Introduction}
Diffusion models \cite{ho2020denoising, song2020denoising} are a type of generative models that has garnered considerable attention due to their remarkable image quality. Unlike Generative Adversarial Networks (GANs) \cite{goodfellow2014generative}, which may suffer from mode collapse and instabilities during training, diffusion models offer reduced sensitivity to hyperparameters \citep{ho2020denoising, rombach2022high} and improve sampling quality \citep{dhariwal2021diffusion}. Additionally, diffusion models have been successfully applied to various image-related tasks, such as text-to-image generation \cite{nichol2021glide}, image-to-image translation \cite{su2022dual}, image composition \citep{sasaki2021unit}, adversarial purification \citep{wang2022guided, wu2022guided}, super-resolution \citep{choi2021ilvr}, and text-to-audio conversion \cite{ghosal2023text}.

One significant drawback of diffusion models, however, is their slow sampling speed.
This is because the sampling process involves a Markov chain that requires a large number of iterations to generate high-quality results. Recent attempts to accelerate the process include improvements to the noise schedule \citep{nichol2021improved, watson2021learning} and network distillation \cite{salimans2022progressive, watson2022learning, song2023consistency}. Fortunately, the sampling process can be represented by ordinary or stochastic differential equations, and numerical methods can be used to reduce the number of iterations required. While DDIM \cite{song2020denoising}, a 1\ts{st}-order method, is the most commonly used approach, it still requires a considerable number of iterations. Higher-order numerical methods, such as DEIS \cite{zhang2022fast}, DPM-Solver \cite{lu2022dpm}, and PLMS \cite{liu2022pseudo}, have been proposed to generate high-quality images in fewer steps. However, these methods begin to produce artifacts (see Figure \ref{fig:highlighted_image}) when the number of steps is decreased beyond a certain value, thereby limiting how much we can reduce the sampling time.

In this study, we investigate the potential causes of these artifacts and found that the narrow stability region of high-order numerical methods can cause solutions to diverge, resulting in divergence artifacts. 
To address this issue and enable low-step, artifact-free sampling, we propose two techniques. 
The first technique involves incorporating Polyak's Heavy Ball (HB) momentum \cite{polyak1987introduction}, a well-known technique for improving optimization, into existing diffusion numerical methods. This approach effectively reduces divergence artifacts, but its accuracy only has first order of convergence. In this context, the accuracy measures how close the approximated, low-step solution is to the solution computed from a very high-step solver (e.g., 1,000-step DDIM). 
The second technique, called Generalized Heavy Ball (GHVB), is a new high-order numerical method that offers a variable trade-off between accuracy and artifact suppression. Both techniques are training-free and incur negligible additional computational costs. Figure \ref{fig:highlighted_image} demonstrates the superiority of both techniques in reducing divergence artifacts compared to previous diffusion sampling methods. Furthermore, our experiments show that our techniques are effective on both pixel-based and latent-based diffusion models.

The paper is structured as follows. 
Section \ref{sec2} covers background and related work on the diffusion sampling process in differential equation forms and stability region. Section \ref{sec3} analyzes visual artifacts in diffusion sampling and establishes a connection to the stability region of the solver. Section \ref{sec4} proposes a technique to apply momentum to existing numerical methods, as well as a technique that generalizes momentum to high-order numerical methods. Section \ref{sec5} presents experiments and ablation studies. Finally, Section \ref{sec6} concludes and discusses the implications and impacts of our work.

\begin{figure}
    \centering
    \begin{tabu} to \textwidth {@{}l@{\hspace{5pt}}c@{\hspace{2pt}}c@{\hspace{2pt}}c@{\hspace{5pt}}c@{\hspace{2pt}}c@{\hspace{2pt}}c@{}}  
        & \multicolumn{3}{c}{\scriptsize DPM-Solver++ (2M) \cite{lu2022dpm}}
        & \multicolumn{3}{c}{\scriptsize PLMS4 \cite{liu2022pseudo}} \\
        
        & \tiny (a) & \tiny (b) &  \tiny (c) &\tiny  (a) & \tiny (b) & \tiny (c) \\
        
        \shortstack[c]{\tiny Without \\ \tiny Momentum} &
        \noindent\parbox[c]{0.14\columnwidth}{\includegraphics[width=0.14\columnwidth]{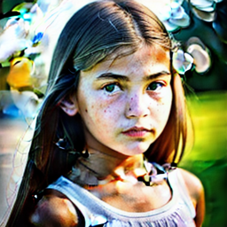}} & 
        \noindent\parbox[c]{0.14\columnwidth}{\includegraphics[width=0.14\columnwidth]{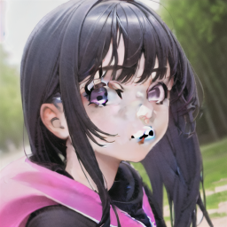}} &
        \noindent\parbox[c]{0.14\columnwidth}{\includegraphics[width=0.14\columnwidth]{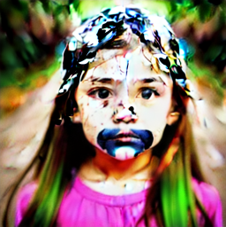}} &
        \noindent\parbox[c]{0.14\columnwidth}{\includegraphics[width=0.14\columnwidth]{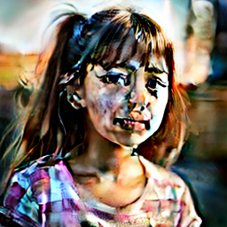}} &
        \noindent\parbox[c]{0.14\columnwidth}{\includegraphics[width=0.14\columnwidth]{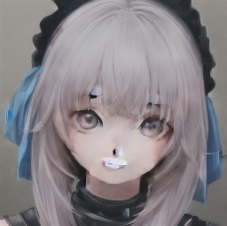}} &
        \noindent\parbox[c]{0.14\columnwidth}{\includegraphics[width=0.14\columnwidth]{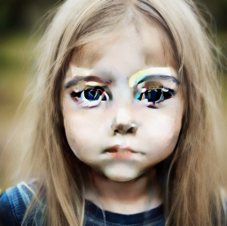}} \\

        \shortstack[c]{\tiny With \tiny HB 0.5 \\ \tiny (Ours)} &
        \noindent\parbox[c]{0.14\columnwidth}{\includegraphics[width=0.14\columnwidth]{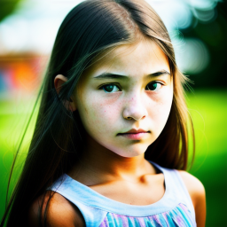}} & 
        \noindent\parbox[c]{0.14\columnwidth}{\includegraphics[width=0.14\columnwidth]{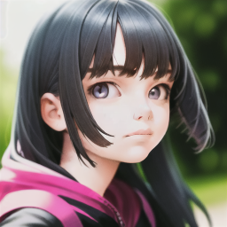}} &
        \noindent\parbox[c]{0.14\columnwidth}{\includegraphics[width=0.14\columnwidth]{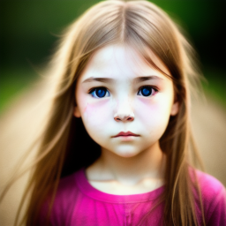}} &
        \noindent\parbox[c]{0.14\columnwidth}{\includegraphics[width=0.14\columnwidth]{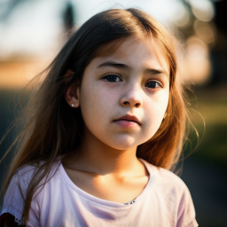}} &
        \noindent\parbox[c]{0.14\columnwidth}{\includegraphics[width=0.14\columnwidth]{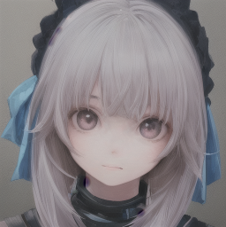}} &
        \noindent\parbox[c]{0.14\columnwidth}{\includegraphics[width=0.14\columnwidth]{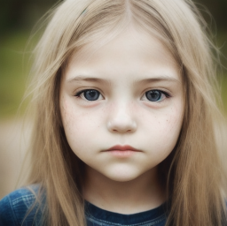}} \\

        \shortstack[c]{\tiny GHVB 1.5 \\ \tiny (Ours)} &
        \noindent\parbox[c]{0.14\columnwidth}{\includegraphics[width=0.14\columnwidth]{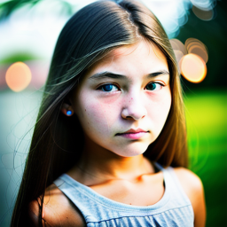}} & 
        \noindent\parbox[c]{0.14\columnwidth}{\includegraphics[width=0.14\columnwidth]{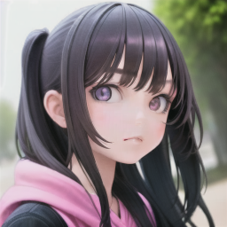}} &
        \noindent\parbox[c]{0.14\columnwidth}{\includegraphics[width=0.14\columnwidth]{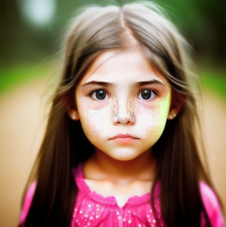}} &
        \noindent\parbox[c]{0.14\columnwidth}{\includegraphics[width=0.14\columnwidth]{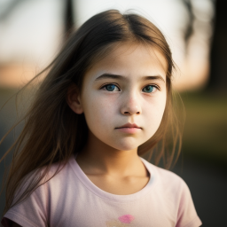}} &
        \noindent\parbox[c]{0.14\columnwidth}{\includegraphics[width=0.14\columnwidth]{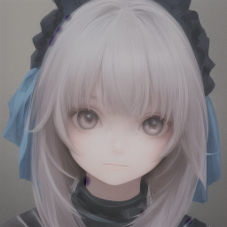}} &
        \noindent\parbox[c]{0.14\columnwidth}{\includegraphics[width=0.14\columnwidth]{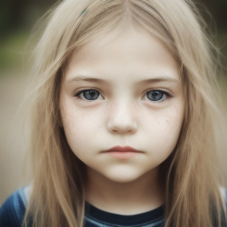}} \\
        
        & \tiny $s = 15$ & \tiny $s = 25$ &  \tiny $s = 20$ &\tiny  $s = 8$ & \tiny $s = 6$ & \tiny $s = 6$ 
    \end{tabu}
    \caption{We demonstrate the occurrence of divergence artifacts in DPM-Solver++\cite{lu2022dpm} and PLMS4\cite{liu2022pseudo} with 15 sampling steps, where $s$ denotes the text-guidance scale. By integrating HB momentum into these methods, we effectively mitigate the artifacts. Additionally, we compare the results with our GHVB 1.5 method. Prompt: "photo of a girl face" (a) Realistic Vision v2.0\cite{Realistic}, (b) Anything Diffusion v4.0\cite{Anything}, (c) Deliberate Diffusion\cite{Deliberate}}
    \label{fig:highlighted_image}
\end{figure}


\section{Background} \label{sec2}
This section first presents the theoretical foundation of diffusion sampling when modeled as an ordinary differential equation (ODE) and related numerical methods. Second, we discuss ODE forms for guided diffusion sampling and prior splitting numerical methods. Third, we cover the concept of stability region, which is our primary analysis tool.


\subsection{Diffusion in ODE Form}
Modeling diffusion sampling as an ODE is commonly based on the non-Markovian sampling of Denoising Diffusion Implicit Model (DDIM) \cite{song2020denoising}.
DDIM is well-known for its simplicity, as it enables deterministic sampling after a random initialization, given by:
\begin{equation} \label{ddim}
x_{t-1} = \sqrt{\frac{\alpha_{t-1}}{\alpha_{t}} }
\left( x_t - \sqrt{1-\alpha_{t}} \epsilon_\theta (x_t,t)\right)
+\sqrt{1-\alpha_{t-1}} \epsilon_\theta (x_t,t).
\end{equation}
Here, $\epsilon_\theta(x_t, t)$ is a neural network that predicts noise, with learnable parameters $\theta$ that take the current state $x_t$ and time $t$ as input. The parameter $\alpha_t$ is a schedule that controls the degree of diffusion at each time step.
Previous research has shown that DDIM \ref{ddim} can be rewritten into an ODE, making it possible to use numerical methods to accelerate the sampling process. Two ODEs have been proposed in the literature:

\begin{minipage}{0.48\textwidth}
\begin{equation} \label{eq:ode}
\frac{d\bar{x}}{d\sigma} = \bar{\epsilon}(\bar{x},\sigma),
\end{equation}
\end{minipage}
\begin{minipage}{0.48\textwidth}
\begin{equation} \label{eq:ode2}
\frac{d\tilde{x}}{d\tilde{\sigma}} = s(\tilde{x},\tilde{\sigma}),
\end{equation}
\end{minipage}

Equation \ref{eq:ode} can be obtained by re-parameterizing $\sigma = \sqrt{1-\alpha_{t}}/ \sqrt{\alpha_{t}}$, \: $\bar{x} = x_t/\sqrt{\alpha_{t}}$, and $\bar{\epsilon}(\bar{x},\sigma) = \epsilon_\theta(x_t, t)$. These transformations are widely used in various diffusion solvers \cite{zhang2022fast, lu2022dpm, liu2022pseudo, zhao2023unipc}. If $\epsilon_\theta(x_t, t)$ is a sum of multiple terms, such as in guided diffusion, we can easily split Equation \ref{eq:ode} and solve each resulting equation separately, as demonstrated in \cite{wizadwongsa2023accelerating}. Another ODE, given by Equation \ref{eq:ode2}, can be derived by defining $\tilde{\sigma} = \sqrt{\alpha_{t}}/\sqrt{1-\alpha_{t}}$ and $\tilde{x} = x_t/\sqrt{1-\alpha_{t}}$, where $s(\tilde{x},\tilde{\sigma}) = (x_t - \sqrt{1-\alpha_{t}} \epsilon_\theta(x_t, t))/\sqrt{\alpha_{t}}$, which is an approximation of the final result. This ODE has the advantage of keeping the differentiation bounded within the pixel value range in pixel-based diffusion.
Recent research on DPM-Solver++ \cite{lu2022dpmpp} has shown that Equation \ref{eq:ode2} outperforms Equation \ref{eq:ode} in many cases.

\subsection{Guided Diffusion Sampling}
Guided diffusion sampling is a widely used technique for conditional sampling, such as text-to-image and class-to-image generation. There are main two approaches for guided sampling: 

\textbf{Classifier guidance} \cite{dhariwal2021diffusion,song2020denoising} uses a pre-trained classifier model $p_\phi(c \mid x_t, t)$ to define the conditional noise prediction model at inference time:
\begin{equation}
 \hat{\epsilon}(x_t, t \mid c)=\epsilon_\theta(x_t, t)-s \nabla \log p_\theta(c \mid x_t, t), 
\end{equation}
 
 where $s>0$ is a ``guidance'' scale. The model can be extended to accept any guidance function, such as CLIP function \cite{radford2021learning} for text-to-image generation \cite{Adam2021disco}. This approach only modifies the sampling equation at inference time and thus can be applied to a trained diffusion model without retraining.

\textbf{Classifier-free guidance}, proposed by Ho et al. \cite{ho2022classifier}, trains a conditional noise model $\epsilon_\theta(x_t, t \mid c)$ to generate data samples with the label $c$:
\begin{equation}
\hat{\epsilon}(x_t, t \mid c)=\epsilon_\theta(x_t, t\mid \phi) + s(\epsilon_\theta(x_t, t \mid c)-\epsilon_\theta(x_t, t \mid
\phi)),
\end{equation}
where $\phi$ is a null label to allow for unconditional sampling. The sampling equations in both approaches can be expressed as a ``guided ODE'' of the form
\begin{align} \label{eq:guided_ode}
    \frac{d\bar{x}}{d \sigma} = \bar{\epsilon}(\bar{x},\sigma) + g(\bar{x},\sigma),
\end{align}
where $g(\bar{x},\sigma)$ represents a guidance function. To accelerate guided diffusion sampling, splitting numerical methods have been proposed, such as Lie-Trotter Splitting (LTSP) \cite{wizadwongsa2023accelerating}. This method divides Equation \ref{eq:guided_ode} into two subproblems, i) $\frac{dy}{d\sigma} = \bar{\epsilon}(y,\sigma)$ and ii) $\frac{dz}{d\sigma}=g(z,\sigma)$, but can only apply high-order numerical methods to the first equation while resorting to the Euler method for the second equation to avoid numerical instability.
Higher-order splitting methods, such as Strang Splitting (STSP) \cite{wizadwongsa2023accelerating}, are also able to mitigate artifacts. However, these methods require solving the second equation twice per step, which is comparable to increasing the total sampling step to avoid artifacts. Both approaches require non-negligible computation.

\subsection{Stability Region} \label{stab_region}
The stability region is a fundamental concept in numerical methods for solving ODEs. It determines the step sizes that enable numerical approximations to converge. To illustrate this concept, let us consider the Euler method, a simple, first-order method for solving ODEs, given by 
\begin{align} \label{euler}
    x_{n+1} = x_n + \delta f(x_n),
\end{align}
 where $x_n$ is the approximate solution and $\delta$ is the step size.
To analyze the stability of the Euler method, we can consider a test equation of the form $x' = \lambda x$, where $\lambda$ is a complex constant. The solution of this test equation can be expressed as 
\begin{align}
x_{n+1} = x_n + \delta \lambda x_n = (1 + \delta \lambda)x_n = (1 + \delta  \lambda)^{n+1} x_0,
\end{align}
where $x_0$ is the initial value.
For the approximate solution to converge to the true solution, it is necessary that $|1 + \delta \lambda| \leq 1$. Hence, the stability region of the Euler method is $S=\{z \in \mathbb{C}  : |1+z|\leq 1 \}$ because if $z = \delta\lambda$ lies outside of $S$, the solution $x_n$ will tend to $\pm \infty$ as $n \rightarrow \infty$.

\begin{minipage}{0.6\textwidth}
In diffusion sampling, another common numerical solver is the Adams-Bashforth (AB) methods, also referred to as Pseudo Linear Multi-Step (PLMS). AB methods encompass the Euler method as its first-order special case (AB1), and the second-order AB2 is given by:
\begin{equation}\label{2nd_Adam} 
x_{n+1} = x_n + \delta \left(\frac{3}{2} f(x_n) - \frac{1}{2} f(x_{n-1})\right).  
\end{equation} 
The stability regions of AB methods of various orders are derived in Appendix \ref{apx:stab_ana}. To visualize these regions, we use the boundary locus technique \cite{lambert1991numerical}, which determines the boundaries of the stability regions as depicted in Figure \ref{fig:stability_ab}.
As the order of the method increases, the stability region decreases in size, and its boundary becomes more restrictive. 
\end{minipage}
\hspace{0.01\textwidth}
\begin{minipage}{0.38\textwidth}
\centering
\includegraphics[width=0.99\textwidth]{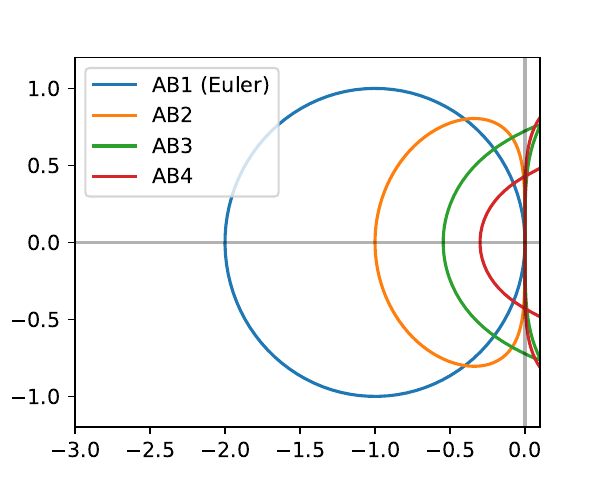}
\captionsetup{type=figure}
\caption{Boundaries of stability regions of the first 4 Adams-Bashforth methods.}
\label{fig:stability_ab}
\end{minipage}

\section{Understanding Artifacts in Diffusion Sampling} \label{sec3}
One unique issue in diffusion sampling is the occurrence of ``divergence’’ artifacts, which are characterized by regions with unrealistic, oversaturated pixels in the output. This problem typically arises due to several factors, including the use of high-order numerical solvers, too few sampling steps, or a high guidance scale. (See discussion in Appendix \ref{apx:artifact_factors}). The current solution is to simply avoid these factors, albeit at the cost of slower sampling speed or less effective guidance. 
This section investigates the source of these artifacts, then we propose solutions that do not sacrifice sampling speed in the next section.


\subsection{Analyzing Diffusion Artifacts}  \label{diffusion_artifacts} 

We analyze the areas where divergence artifacts occur during sampling by examining the magnitudes of the latent variables in those areas. Specifically, we use Stable Diffusion \cite{rombach2022high}, which operates and performs diffusion sampling on a latent space of dimension $64\times64\times4$, to generate images with and without artifacts by varying the number of steps. Then, we visualize each latent variable $z \in \mathbb{R}^4$ in the $64\times64$ spatial grid by subtracting the channel-wise mean and dividing by the channel-wise standard deviation, computed from the COCO dataset \cite{lin2014microsoft}. Figure \ref{fig:artifact_evidence} shows the magnitudes of the normalized latent variables after max pooling for visualization purposes.

We found that artifacts mainly appear in areas where the latent magnitudes are higher than usual. Note that images without artifacts can also have high latent magnitudes in some regions, although this is very rare. Conversely, when artifacts appear, those regions almost always have high magnitudes. In pixel-based diffusion models, the artifacts manifest directly as pixel values near 1 or 0 due to clipping, which can be observed in Figure \ref{fig:img_adm} in Appendix \ref{apx:adm}.

 
 \setlength\tabcolsep{3pt}
\begin{figure}
    \centering
    \begin{tabularx}{\textwidth}{lX}

        \shortstack[l]{\tiny (a) PLMS4 \\ \tiny 250 steps} & \noindent\parbox[c]{\hsize}{
            \includegraphics[width=0.093\columnwidth]{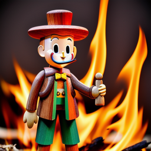}
            \includegraphics[width=0.093\columnwidth]{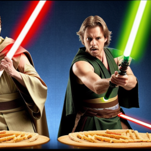}
            \includegraphics[width=0.093\columnwidth]{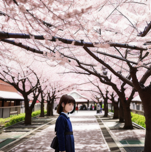}
            \includegraphics[width=0.093\columnwidth]{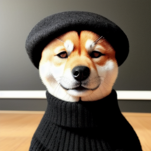}
            \includegraphics[width=0.093\columnwidth]{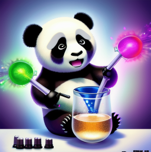}
            \includegraphics[width=0.093\columnwidth]{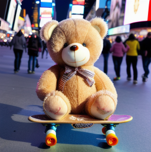}
            \includegraphics[width=0.093\columnwidth]{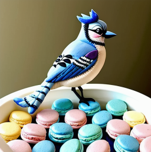}
            \includegraphics[width=0.093\columnwidth]{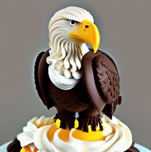}
            \includegraphics[width=0.093\columnwidth]{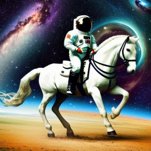}
        } \\ 

        & \noindent\parbox[c]{\hsize}{
            \includegraphics[width=0.093\columnwidth]{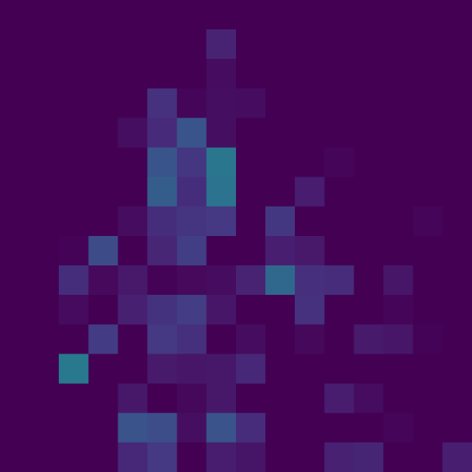}
            \includegraphics[width=0.093\columnwidth]{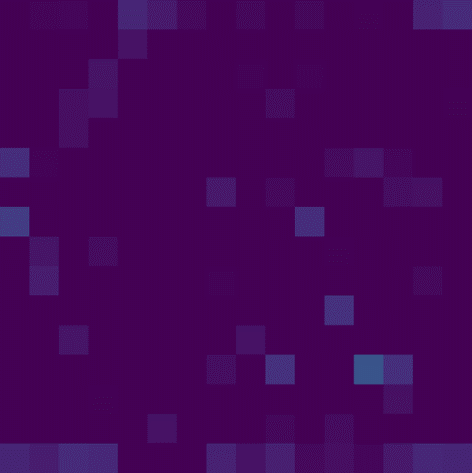}
            \includegraphics[width=0.093\columnwidth]{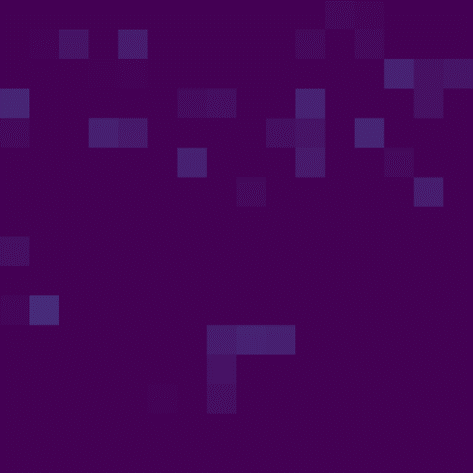}
            \includegraphics[width=0.093\columnwidth]{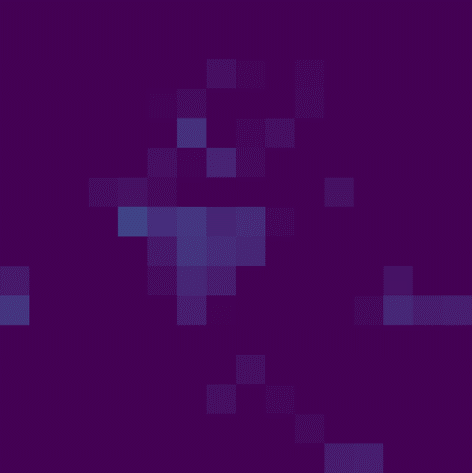}
            \includegraphics[width=0.093\columnwidth]{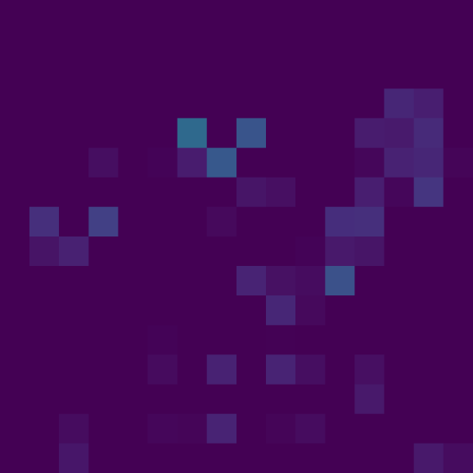}
            \includegraphics[width=0.093\columnwidth]{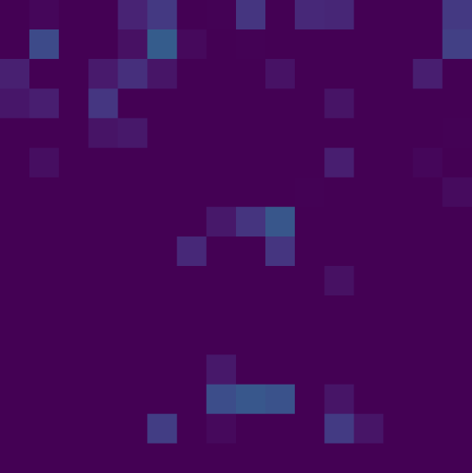}
            \includegraphics[width=0.093\columnwidth]{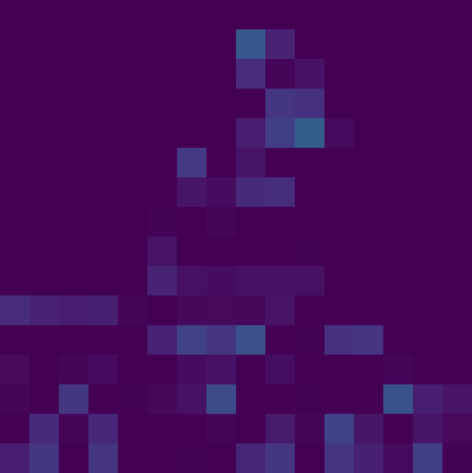}
            \includegraphics[width=0.093\columnwidth]{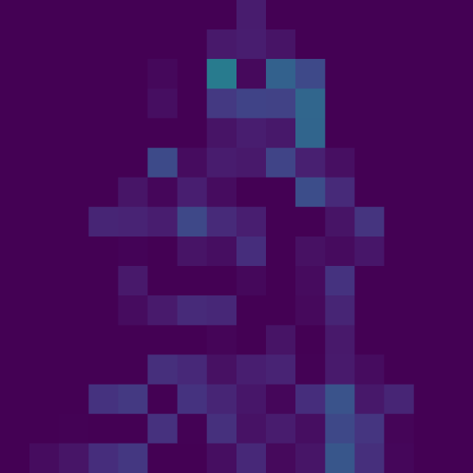}
            \includegraphics[width=0.093\columnwidth]{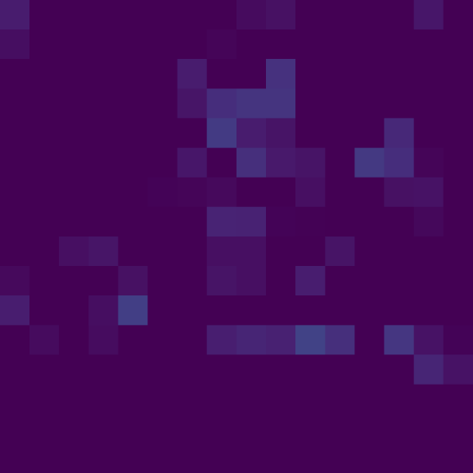}
        } \\[18pt] 

        \shortstack[l]{\tiny (a) PLMS4 \\ \tiny 15 steps} & \noindent\parbox[c]{\hsize}{
            \includegraphics[width=0.093\columnwidth]{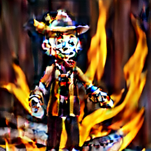}
            \includegraphics[width=0.093\columnwidth]{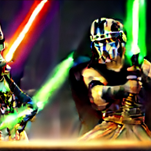}
            \includegraphics[width=0.093\columnwidth]{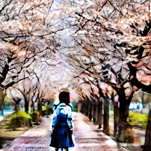}
            \includegraphics[width=0.093\columnwidth]{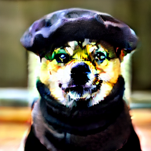}
            \includegraphics[width=0.093\columnwidth]{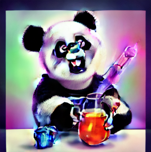}
            \includegraphics[width=0.093\columnwidth]{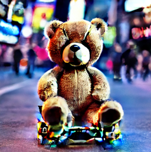}
            \includegraphics[width=0.093\columnwidth]{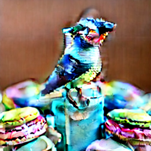}
            \includegraphics[width=0.093\columnwidth]{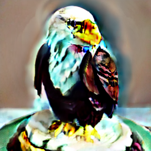}
            \includegraphics[width=0.093\columnwidth]{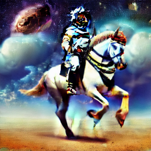}
        } \\ 

        & \noindent\parbox[c]{\hsize}{
            \includegraphics[width=0.093\columnwidth]{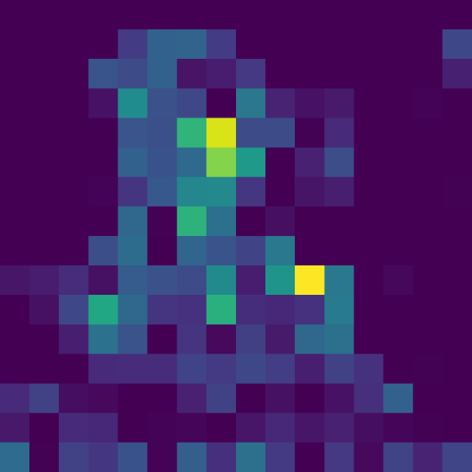}
            \includegraphics[width=0.093\columnwidth]{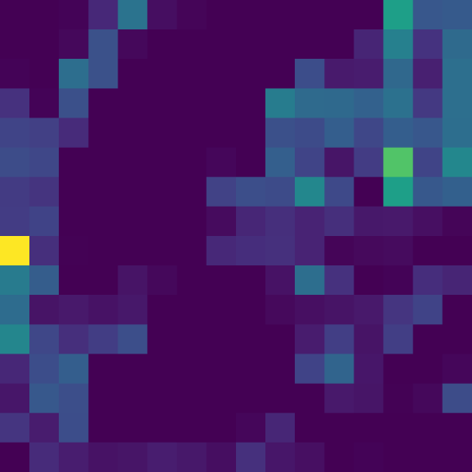}
            \includegraphics[width=0.093\columnwidth]{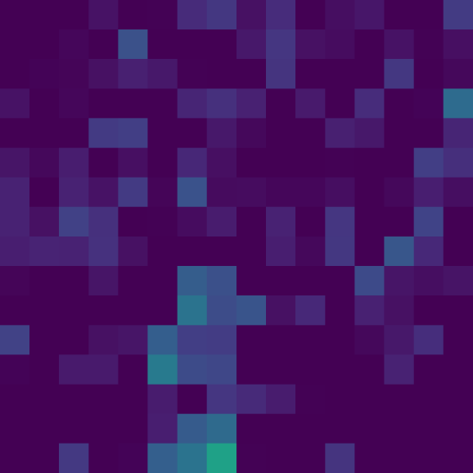}
            \includegraphics[width=0.093\columnwidth]{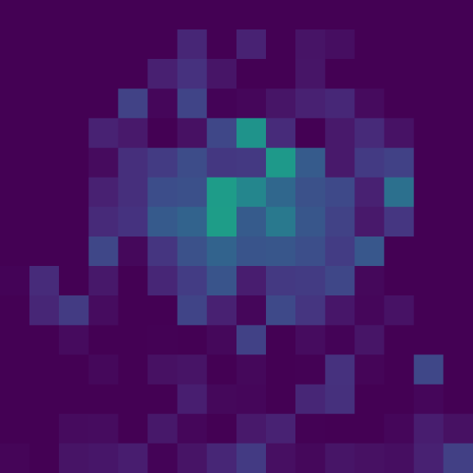}
            \includegraphics[width=0.093\columnwidth]{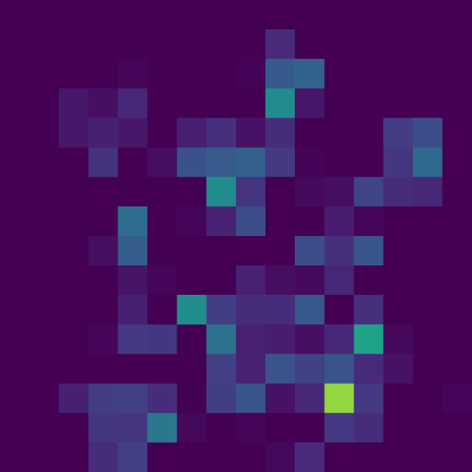}
            \includegraphics[width=0.093\columnwidth]{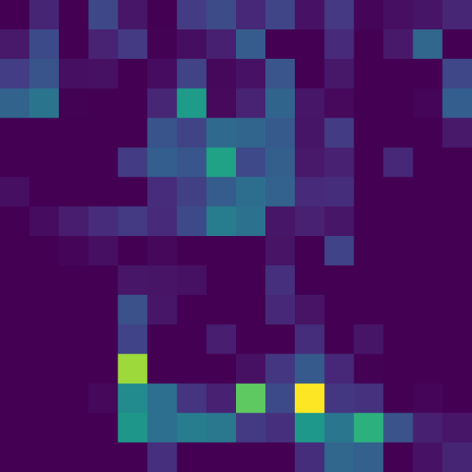}
            \includegraphics[width=0.093\columnwidth]{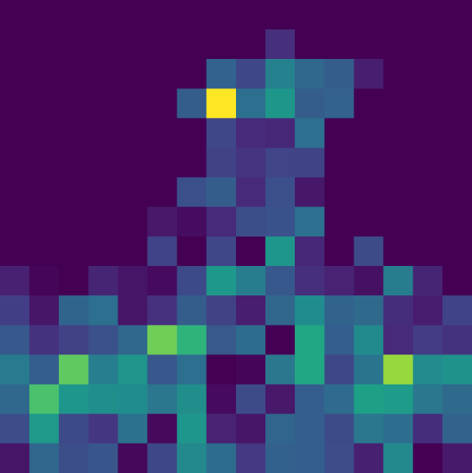}
            \includegraphics[width=0.093\columnwidth]{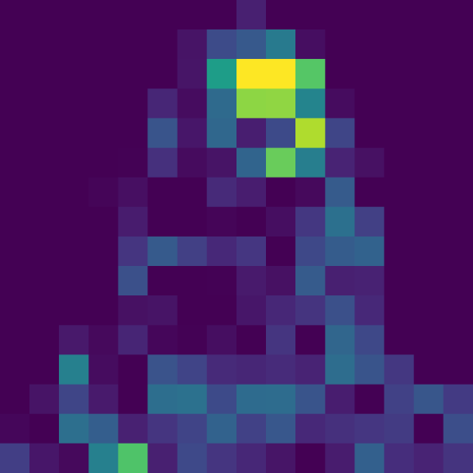}
            \includegraphics[width=0.093\columnwidth]{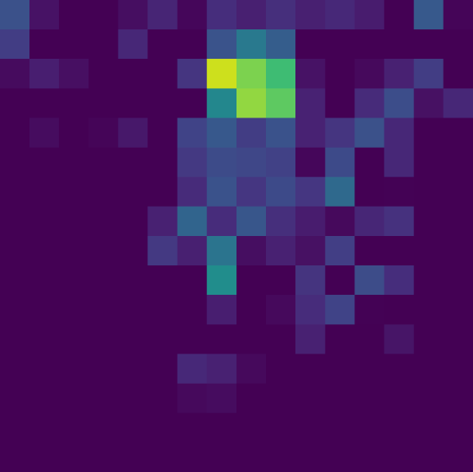}
        }

    \end{tabularx}
    \caption{Comparison of generated images and latent variable magnitudes with and without artifacts, obtained using low and high sampling steps. Latent magnitude maps are max-pooled to 16x16, with brighter colors indicating higher values. These results suggest a relationship between artifacts and large latent magnitudes.
    }
    \label{fig:artifact_evidence}
\end{figure}
 
\subsection{Connection Between ODE Solver and Artifacts} \label{artifact_sampling}
We hypothesize that numerical instability during sampling is the cause of these visual artifacts. To see this mathematically, we analyze the ODE for diffusion sampling in Equation \ref{eq:ode} using the problem reduction technique for stiffness analysis \cite{higham1993stiffness}. Assuming that the effect of $\sigma$ on the function $\bar{\epsilon}$ is negligible, we use Taylor expansion to approximate the RHS of Equation \ref{eq:ode}, which yields
\begin{align} \label{eq:expand_ode}
\frac{d\bar{x}}{d\sigma} = \nabla\bar{\epsilon}(x^*) (\bar{x}-x^*) + \mathcal{O}(\|\bar{x}-x^*\|^2).
\end{align}
Here, $x^*$ denotes the converged solution that should not have any noise left (i.e. $\bar{\epsilon}(x^*) = 0$), and $\nabla\bar{\epsilon}(x^*)$ denotes the Jacobian matrix at $x^*$. As $\bar{x}$ converges to $x^*$, the term $\mathcal{O}(\|\bar{x}-x^*\|^2)$ becomes negligibly small, so we may drop it from the equation.

Let $\lambda$ be an eigenvalue of $\nabla\bar{\epsilon}(x^*)^T$ and $v$ be the corresponding normalized eigenvector such that $\nabla\bar{\epsilon}(x^*)^Tv=\lambda v$. We define $u = v^T(\bar{x} - x^*)$ and obtain $u' = \lambda u$ as our test equation. According to Section\ref{stab_region}, if $\delta \lambda$ falls outside the stability region of a numerical method, the numerical solution to $u$ may diverge, resulting in diffusion sampling results with larger magnitudes that later manifest as divergence artifacts. Therefore, when the stability region is too small, divergence artifacts are more likely to occur. Although some numerical methods have infinite stability regions, those used in diffusion sampling have only finite stability regions, which implies that the solution will always diverge if the step size $\delta$ is sufficiently high. More details about the derivation can be found in Appendix \ref{apx:test_eq} and a 2D toy example illustrating this effect is provided in Appendix \ref{apx:toy}.

One possible solution to mitigate artifacts is to reduce the step size $\delta$, which shifts $\delta \lambda$ closer to the origin of the complex plane. However, this approach increases the number of steps, making the process slower. Instead, we will modify the numerical methods to enlarge their stability regions.

\section{Methodology} \label{sec4}

This section describes two techniques for improving stability region and reducing divergence artifacts. 
Specifically, we first show how to apply Polyak's Heavy Ball Momentum (HB) to diffusion sampling, and secondly, how to generalize HB to higher orders.
Our techniques are designed to be simple to implement and do not require additional training.

\subsection{Polyak's Heavy Ball Momentum for Diffusion Sampling}
Recall that Polyak's Heavy Ball Momentum \cite{polyak1987introduction} is an optimization algorithm that enhances gradient descent ($x_{n+1} = x_n - \beta_n \nabla f(x_n)$). The method takes inspiration from the physical analogy of a heavy ball moving through a field of potential with damping friction. The update rule for Polyak's HB optimization algorithm is given by:
\begin{align}
x_{n+1} = x_n + \alpha_n(x_n-x_{n-1}) - \beta_n \nabla f(x_n),
\end{align}
where $\alpha_n$ and $\beta_n$ are parameters. We can apply HB to the Euler method \ref{euler}, in which case we typically set $\alpha_n = (1-\beta_n)$, to obtain
\begin{align} \label{eqn:euler-with-hb}
x_{n+1} = x_n + (1-\beta_n)(x_n-x_{n-1}) + \delta \beta_n f(x_n),
\end{align}
and we may show that the numerical method above has the same order of convergence as the original Euler method. For simplicity, we assume that $\beta_n = \beta \in (0,1]$, which is a constant known as the {\it damping coefficient.} Then, we can reformulate Equation \ref{eqn:euler-with-hb} as:
\begin{align} \label{eq:1st_momentum}
v_{n+1} = (1-\beta)v_n + \beta f(x_n), \qquad x_{n+1} = x_n + \delta v_{n+1},
\end{align}
Here, we may interpret $x_n$ as the heavy ball's position, and $v_{n+1}$---the exponential moving average of $f(x_n)$---as its velocity. We can see that position is updated with ``displacement = time $\times$ velocity,'' much like in physics.

Consider a high-order method of the form  $x_{n+1} = x_n + \delta \sum^k_{i=0} b_i f(x_{n-i})$. We can apply HB to it as follows:
\begin{align}
v_{n+1} = (1-\beta) v_n + \beta \sum^k_{i=0} b_i f(x_{n-i}), \qquad x_{n+1} = x_n + \delta v_{n+1}.
\end{align}
The resulting numerical method has a larger stability region, as can be seen in 
Figures \ref{fig:2nd_polyak} to \ref{fig:4th_polyak}, in which we show stability boundaries of AB methods after HB is applied to them with varying $\beta$s. (We use HB 0.4 to denote $\beta=0.4$). However, Theorem \ref{thm:hb} in Appendix \ref{apx:conv} shows that as soon as $\beta$ deviates from 1, the theoretical order of convergence drops to 1, leading to a significant decrease in image quality, as illustrated in Figure \ref{fig:motivate_GHVB}. 
In the next subsection, we propose an alternative approach that increases the stability region while maintaining high order of convergence.

\begin{figure}[ht]
  \begin{subfigure}{0.26\textwidth}
    \includegraphics[width=\textwidth]{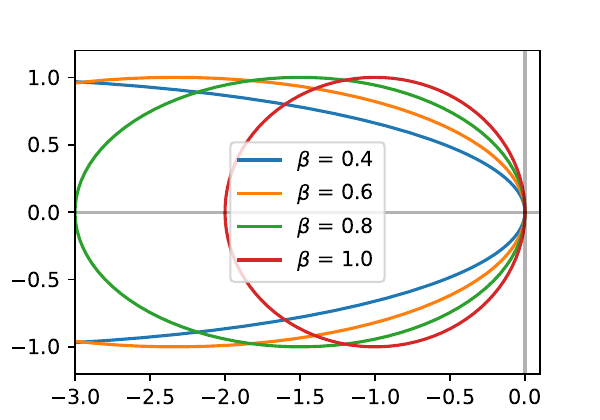}
    \caption{\small 1\ts{st} order \newline \tiny (PLMS1 w/ HB 0.4 - 1)}
    \label{fig:1st_polyak}
  \end{subfigure}
  \hspace{-0.03\textwidth}
  \begin{subfigure}{0.26\textwidth}
    \includegraphics[width=\textwidth]{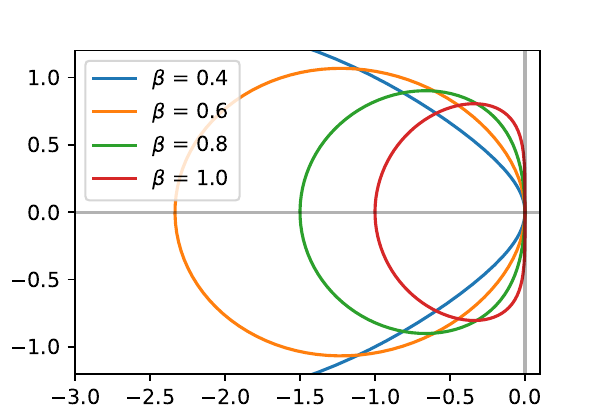}
    \caption{\small 2\ts{nd} order \newline \tiny (PLMS2 w/ HB 0.4 - 1) }
    \label{fig:2nd_polyak}
  \end{subfigure}
  \hspace{-0.03\textwidth}
  \begin{subfigure}{0.26\textwidth}
    \includegraphics[width=\textwidth]{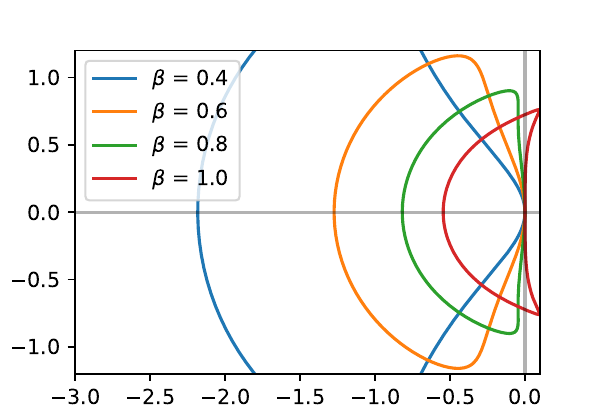}
    \caption{\small 3\ts{rd} order \newline \tiny (PLMS3 w/ HB 0.4 - 1) }
    \label{fig:3rd_polyak}
  \end{subfigure}
  \hspace{-0.03\textwidth}
  \begin{subfigure}{0.26\textwidth}
    \includegraphics[width=\textwidth]{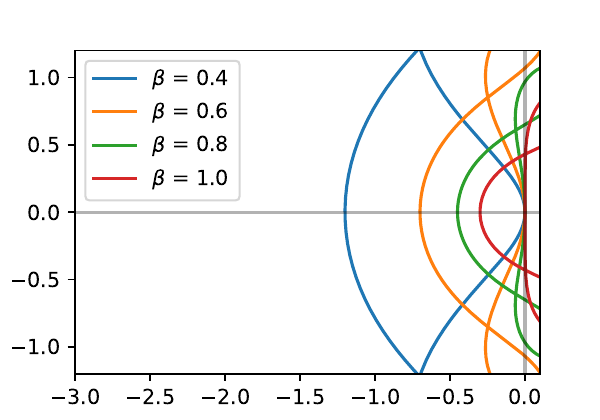}
    \caption{\small 4\ts{th} order \newline \tiny (PLMS4 w/ HB 0.4 - 1) }
    \label{fig:4th_polyak}
  \end{subfigure}
  \caption{Boundaries of stability regions of 1\ts{st}- to 4\ts{th}-order AB methods with HB applied to them with different values of the damping coefficient $\beta$.
  }
  \label{fig:polyak_comparison}
\end{figure}

\tabulinesep=1pt
\begin{figure}[ht]
    \centering
    \begin{tabu} to \textwidth {
        @{}
        l@{\hspace{6pt}}
        c@{\hspace{2pt}}
        c@{\hspace{2pt}}
        c@{\hspace{2pt}}
        c@{}
        c@{\hspace{10pt}} | @{\hspace{5pt}}
        c
        @{}
    }
        & \multicolumn{1}{c}{\shortstack{\scriptsize $\beta = 0.2$}}
        & \multicolumn{1}{c}{\shortstack{\scriptsize $\beta = 0.4$}}
        & \multicolumn{1}{c}{\shortstack{\scriptsize $\beta = 0.6$}}
        & \multicolumn{1}{c}{\shortstack{\scriptsize $\beta = 0.8$}} &
        & \multicolumn{1}{c}{\shortstack{\tiny PLMS4 $(\beta = 1.0)$}} \\

        \shortstack[l]{\scriptsize (a) PLMS4 with HB} &
        \noindent\parbox[c]{0.14\columnwidth}{\includegraphics[width=0.14\columnwidth]{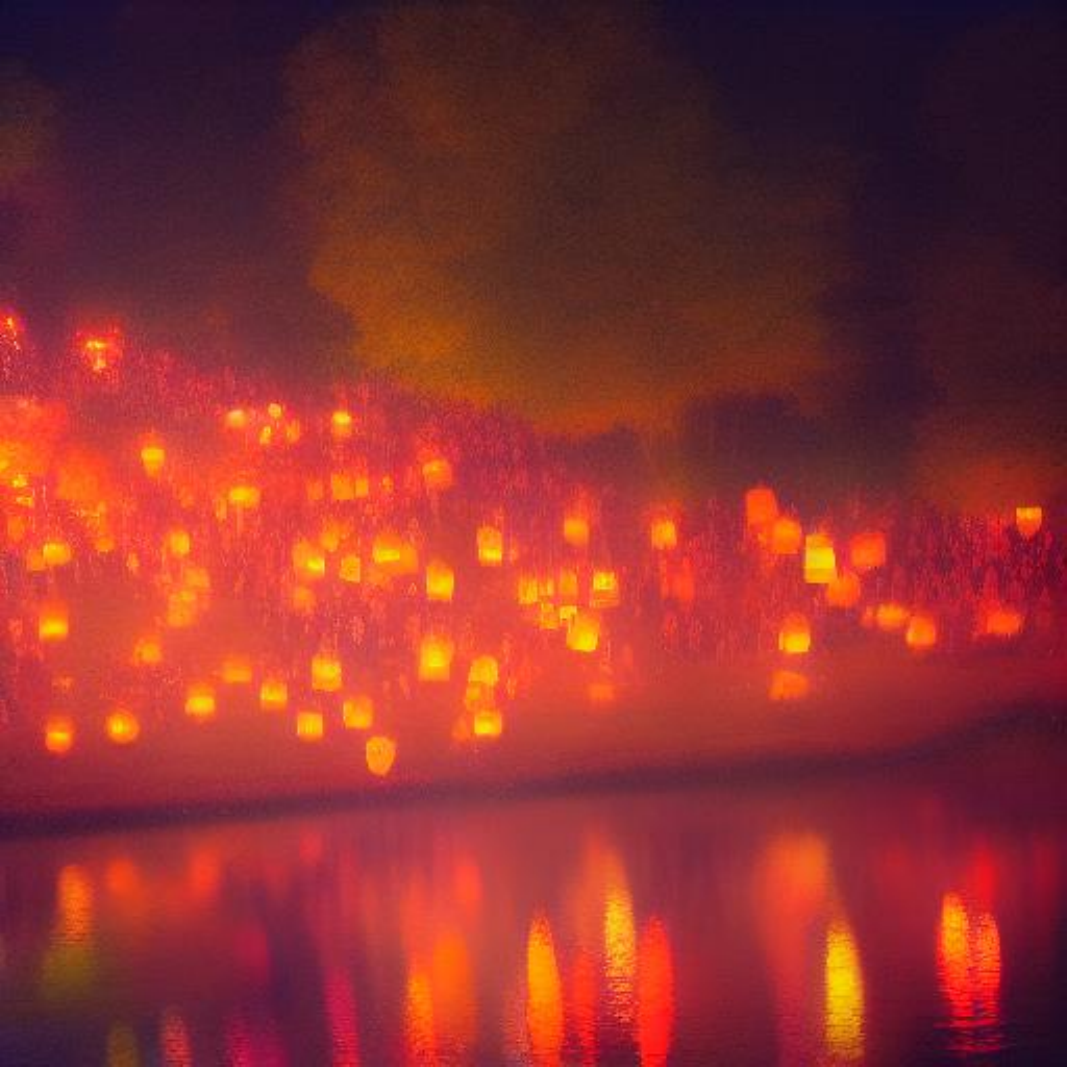}} & 
        \noindent\parbox[c]{0.14\columnwidth}{\includegraphics[width=0.14\columnwidth]{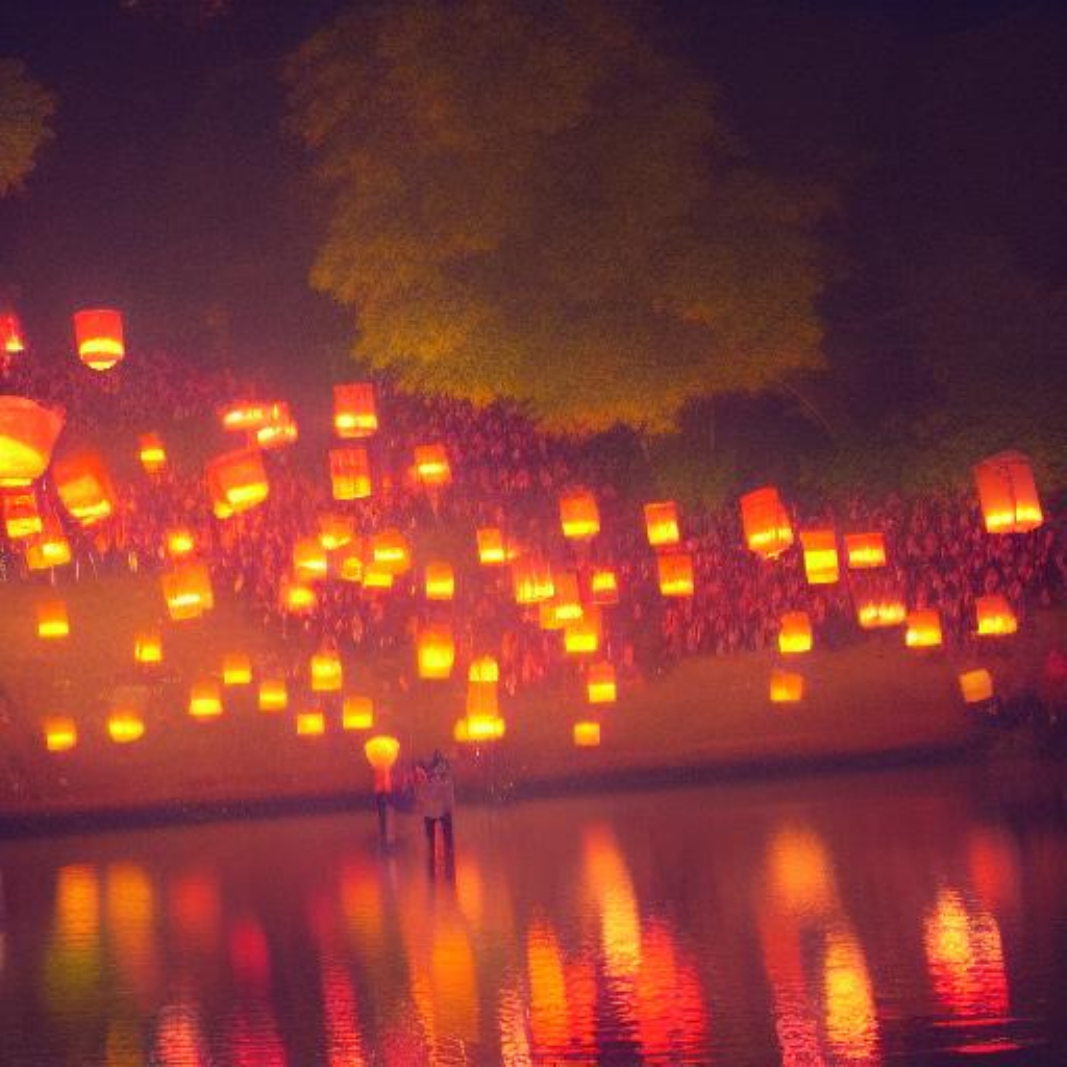}} & 
        \noindent\parbox[c]{0.14\columnwidth}{\includegraphics[width=0.14\columnwidth]{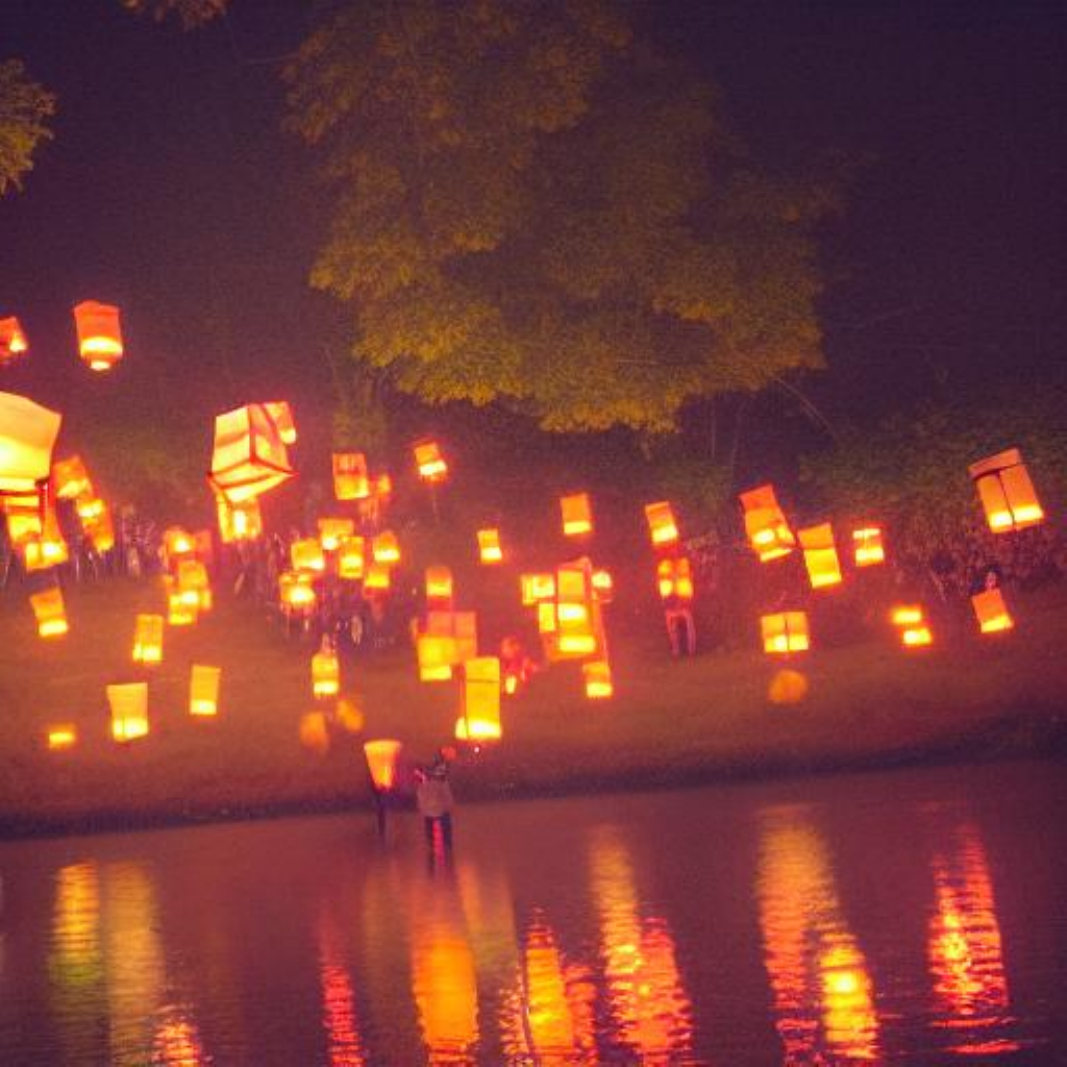}} & 
        \noindent\parbox[c]{0.14\columnwidth}{\includegraphics[width=0.14\columnwidth]{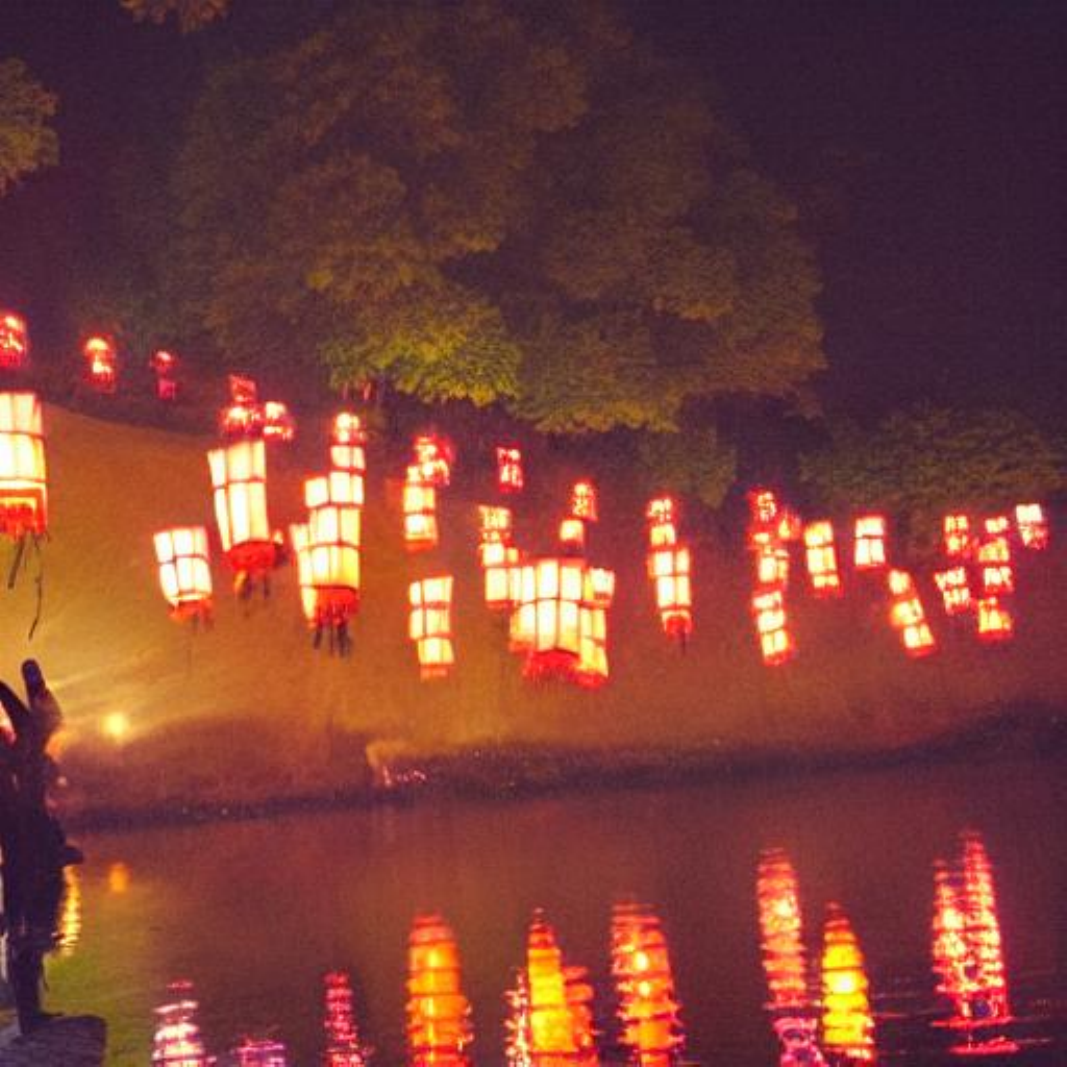}} & &
        \noindent\parbox[c]{0.14\columnwidth}{\includegraphics[width=0.14\columnwidth]{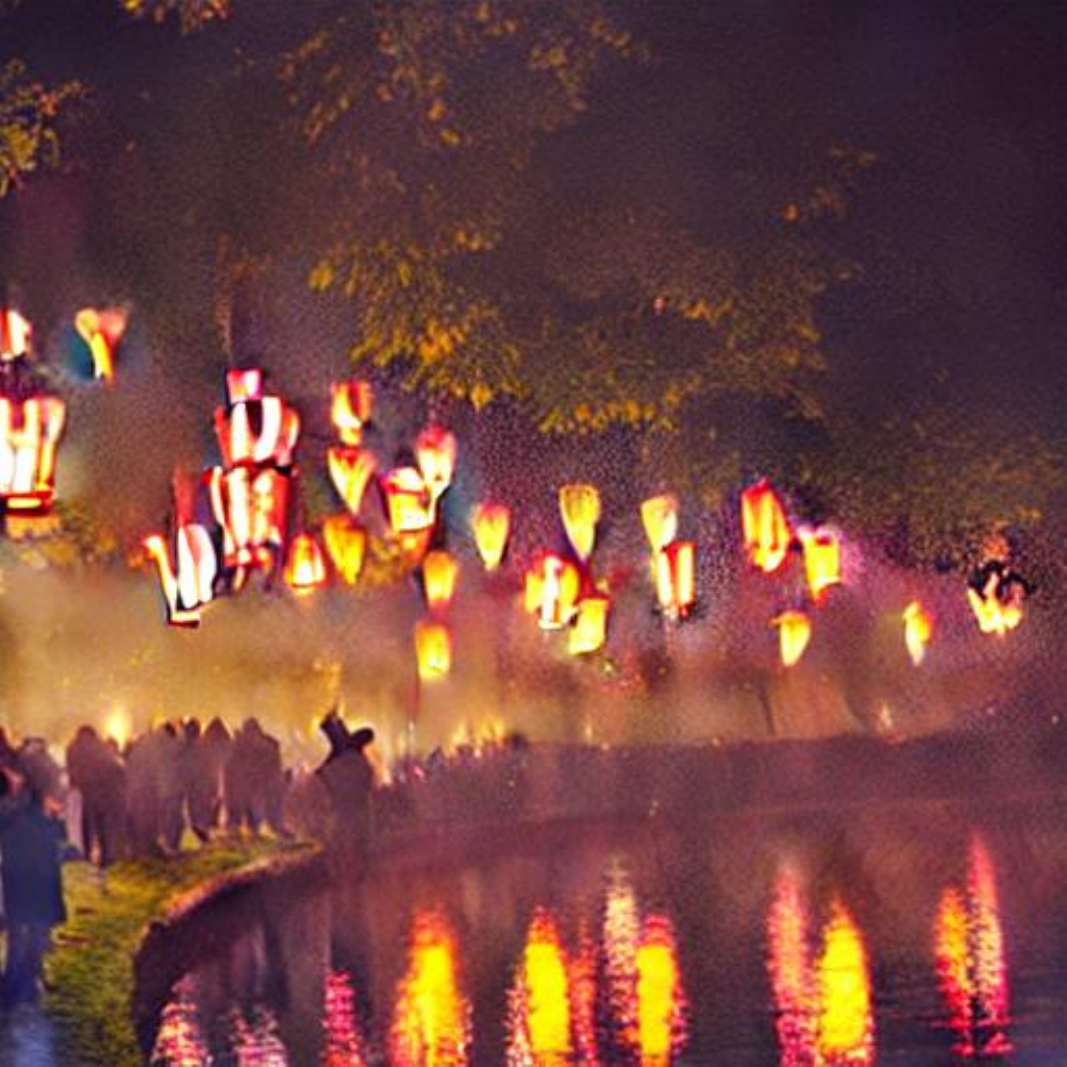}} \\

        & \multicolumn{1}{c}{\shortstack{\scriptsize $\beta = 0.2$}}
        & \multicolumn{1}{c}{\shortstack{\scriptsize $\beta = 0.4$}}
        & \multicolumn{1}{c}{\shortstack{\scriptsize $\beta = 0.6$}}
        & \multicolumn{1}{c}{\shortstack{\scriptsize $\beta = 0.8$}} &
        & \multicolumn{1}{c}{\shortstack{\tiny 1,000-steps DDIM \cite{song2020denoising}, }} \\

        \shortstack[l]{\scriptsize (b) 4\ts{th} order GHVB } &
        \noindent\parbox[c]{0.14\columnwidth}{\includegraphics[width=0.14\columnwidth]{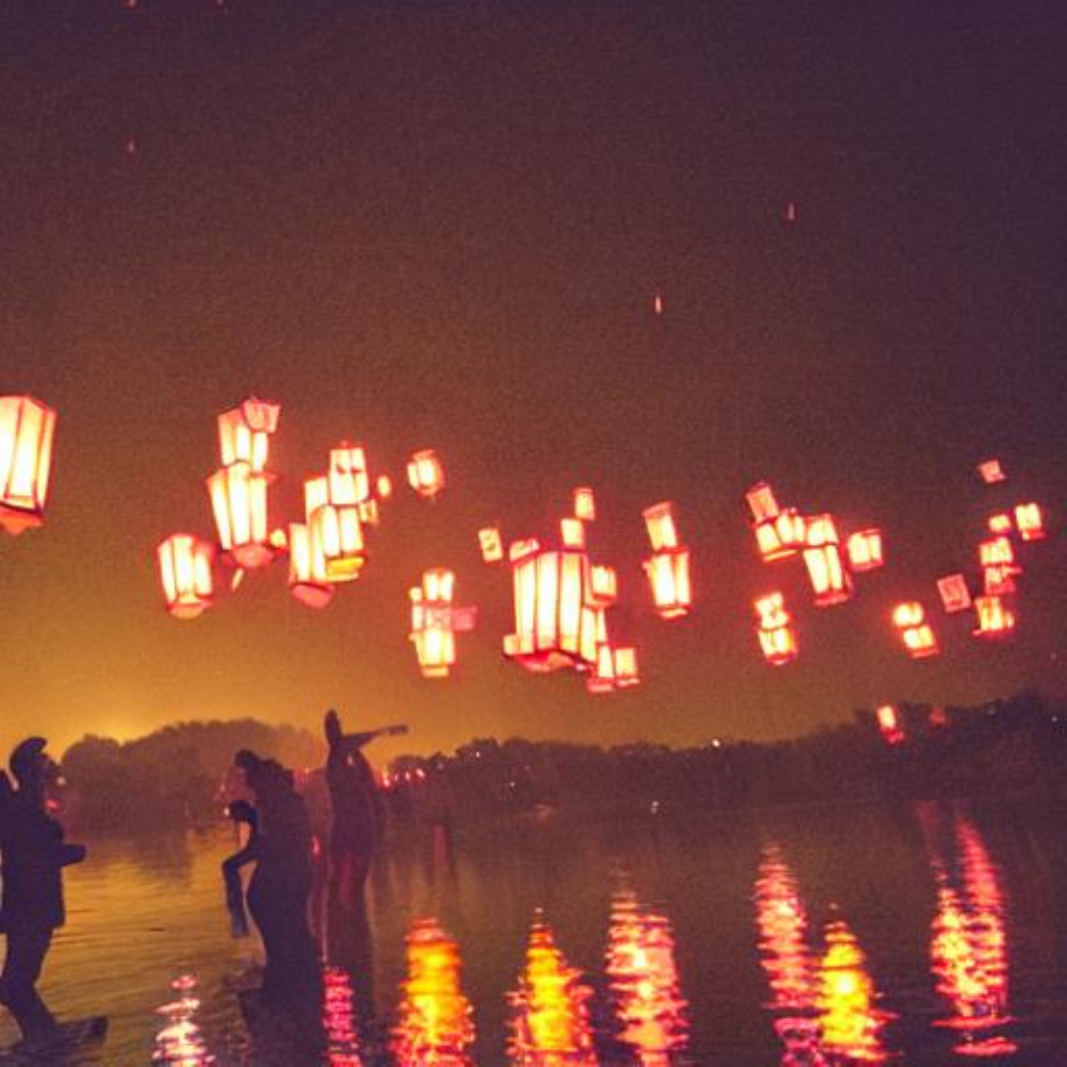}} & 
        \noindent\parbox[c]{0.14\columnwidth}{\includegraphics[width=0.14\columnwidth]{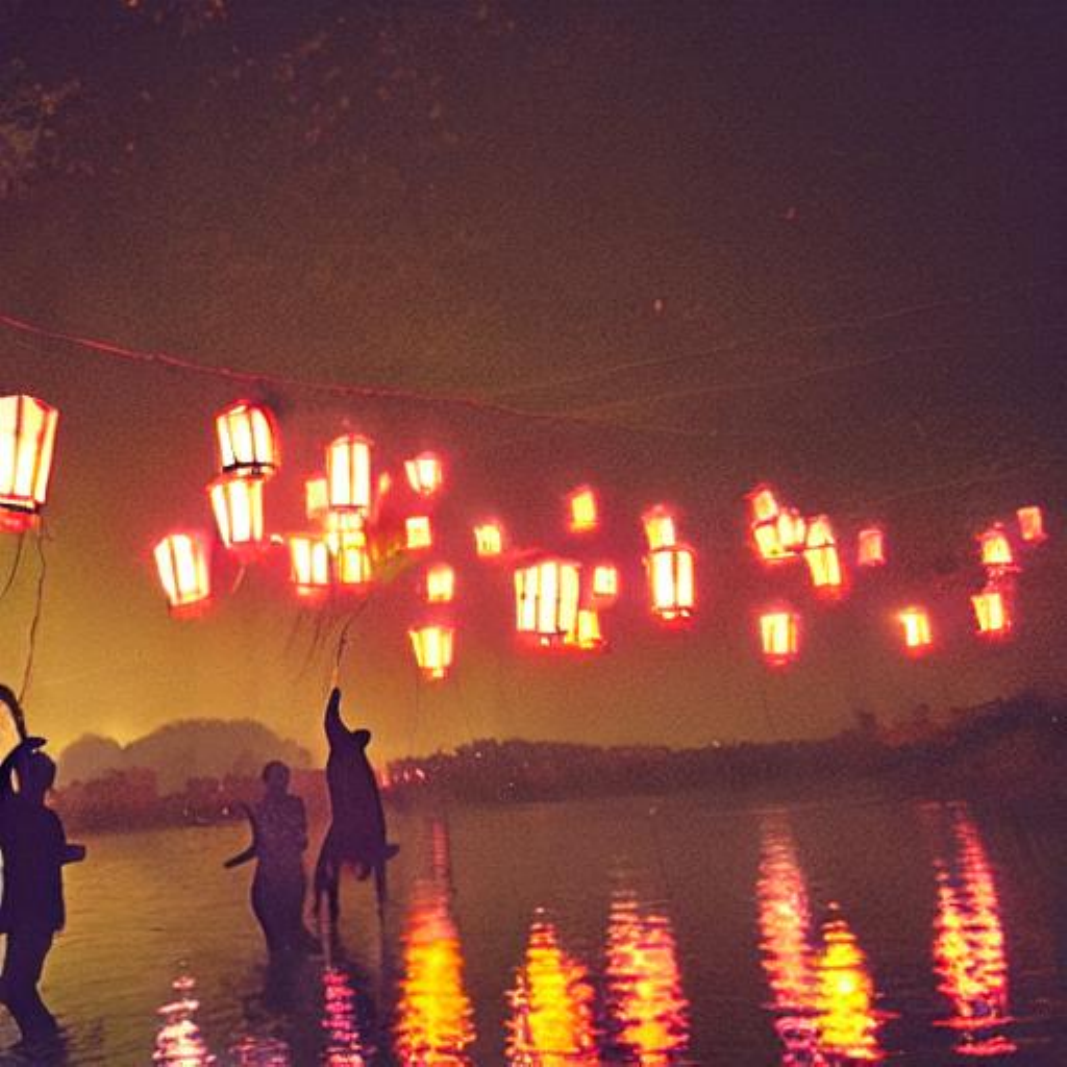}} & 
        \noindent\parbox[c]{0.14\columnwidth}{\includegraphics[width=0.14\columnwidth]{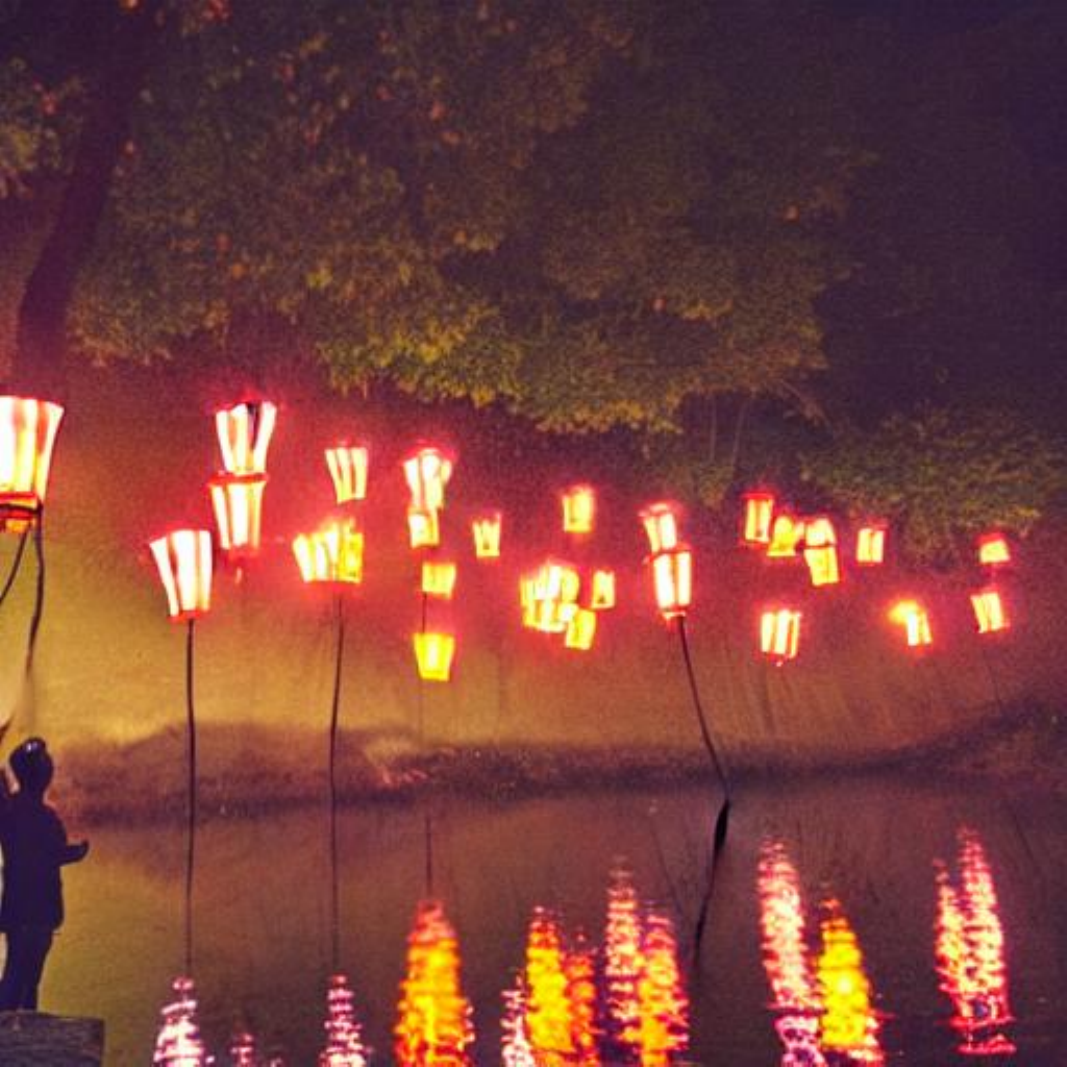}} & 
        \noindent\parbox[c]{0.14\columnwidth}{\includegraphics[width=0.14\columnwidth]{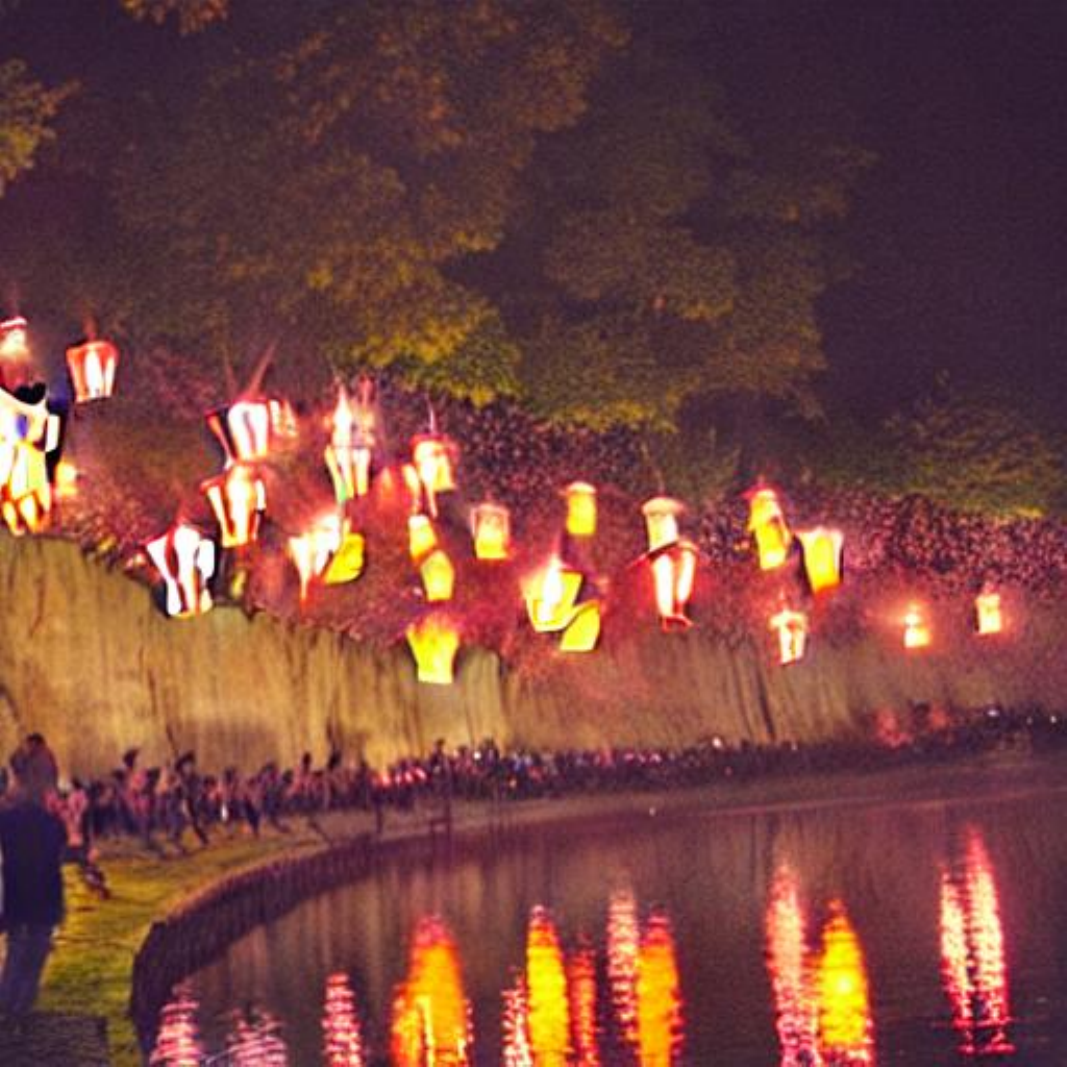}} & &
        \noindent\parbox[c]{0.14\columnwidth}{\includegraphics[width=0.14\columnwidth]{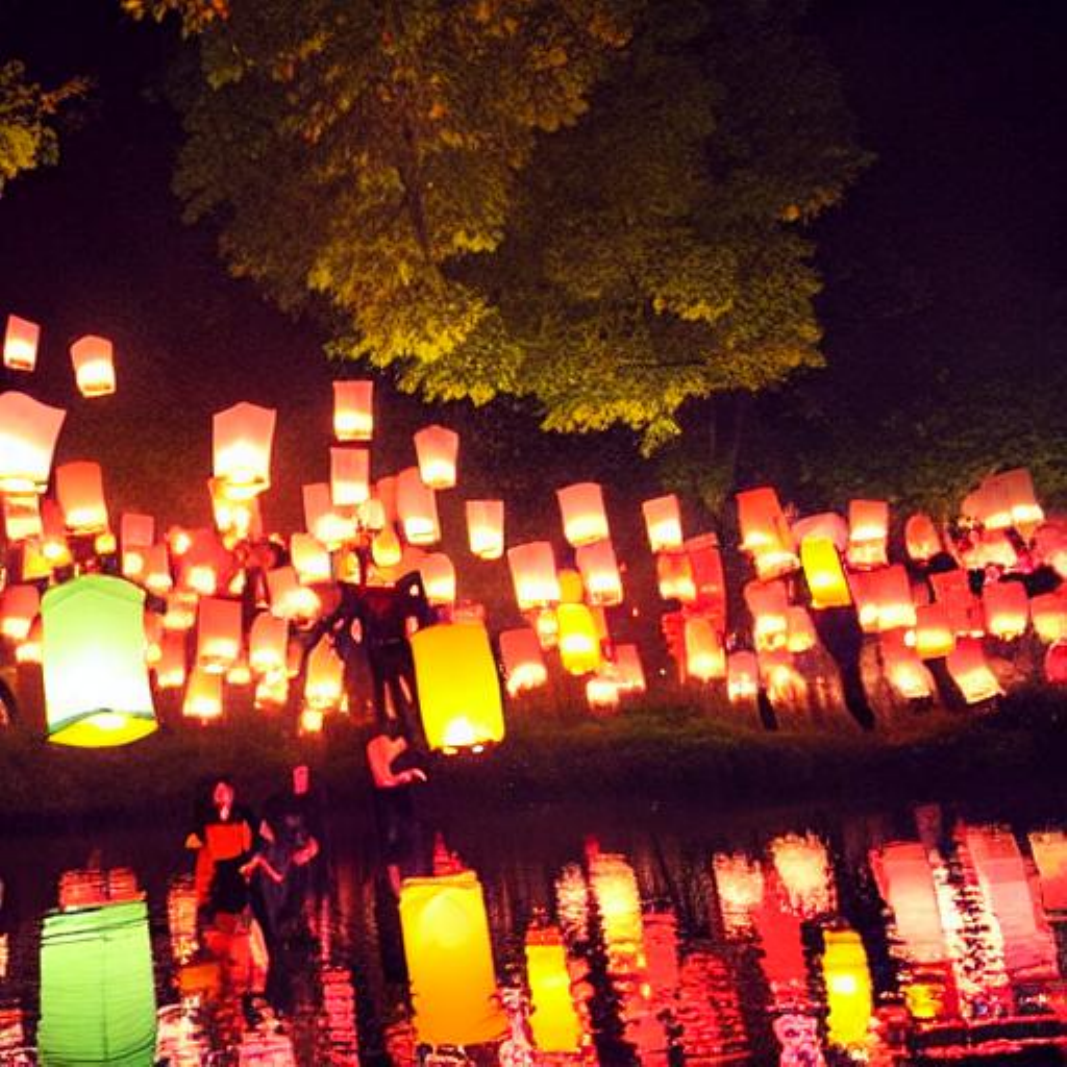}} \\

    \end{tabu}
    \caption{Comparison between the two techniques we propose: (a) HB and (b) GHVB, applied to PLMS4 \cite{liu2022pseudo} with 15 sampling steps. Both are effective at reducing artifacts, but HB's accuracy drops faster than GHVB's as $\beta$ moves away from $1$. Positions of the lanterns in Row (a) deviate more from the ground truth (1000 steps DDIM) than those in Row (b). Moreover, the image at $\beta = 0.2$ in Row (a) becomes blurry as HB yields a numerical method with a lower order of convergence than what GHVB does. Prompt: "A beautiful illustration of people releasing lanterns near a river".}
    \label{fig:motivate_GHVB}
\end{figure}

\subsection{Generalizing Polyak's Heavy Ball to Higher Orders} \label{sec:ghvb}
In this section, we generalize Euler method with HB momentum to achieve high-order convergence in a similar way to how the Adams–Bashforth methods generalize the Euler method. We define the backward difference operator $\Delta$ as $\Delta x_n = x_n - x_{n-1}$. According to \cite{berry2004implementation}, we can express the AB formula as:

\begin{align} \label{eq:adam_bash}
\Delta x_{n+1} = \: \delta \left(1 + \frac{1}{2}\Delta + \frac{5}{12}\Delta^2 + \frac{3}{8}\Delta^3 + \frac{251}{720}\Delta ^4 + \frac{95}{288}\Delta^5 + \ldots \right) f(x_n).
\end{align}
The order convergence is determined by the number of terms on the RHS. For example, the 2\ts{nd}-order AB method can be written as $\Delta x_{n+1} = \delta \left(1 + \frac{1}{2}\Delta \right) f(x_n)$.
The update rule for $v_n$ in Equation \ref{eq:1st_momentum} can be rewritten as $(\beta + (1-\beta)\Delta)v_{n+1} = \beta f(x_n)$. Multiplying both sides of Equation \ref{eq:adam_bash} by $(\beta + (1-\beta) \Delta)$, we have:
\begin{align} \label{eq:adam_bata}
   (\beta + (1-\beta) \Delta)\Delta x_{n+1} =& \: \delta \left(\beta + \frac{2-\beta}{2}\Delta + \frac{6-\beta}{12}\Delta^2 + \frac{10-\beta}{24}\Delta^3 +  \ldots \right) f(x_n).
\end{align}
Next, we can choose the order of convergence by fixing the number of terms on the RHS. To get, say, a 2\ts{nd}-order method, we may choose:
\begin{align}\label{derive_2nd}
   (\beta + (1-\beta) \Delta)\Delta x_{n+1} =& \: \delta \left(\beta + \frac{2-\beta}{2}\Delta \right) f(x_n)= \delta \left(1 + \frac{2-\beta}{2\beta}\Delta \right) \beta f(x_n) \\
   =& \: \delta \left(1 + \frac{2-\beta}{2\beta}\Delta \right) (\beta + (1-\beta) \Delta) v_{n+1}.
\end{align}
Eliminating $(\beta - (1-\beta)\Delta)$ from both sides, we obtain the 2\ts{nd}-order generalized HB method:
\begin{align} \label{2nd_bAdam}
   v_{n+1} = (1-\beta) v_n + \beta f(x_n), \qquad x_{n+1} = x_n + \delta \left(\frac{2+\beta}{2\beta} v_{n+1} + \frac{2-\beta}{2\beta} v_{n} \right).
\end{align}
Algorithm \ref{algo:ghvb} details a complete implementation.
When $\beta = 1$, the formulation in Equation \ref{2nd_bAdam} is equivalent to the AB2 formulation in Equation \ref{2nd_Adam}. As $\beta$ approaches 0, Equation \ref{derive_2nd} converges to the 1\ts{st}-order Euler method \ref{euler}. Thus, this generalization also serves as an interpolating technique between two adjacent-order AB methods, except for the 1\ts{st}-order GHVB, which is equivalent to the Euler method with HB momentum in Equation \ref{eq:1st_momentum}.

We call this new method the Generalized Heavy Ball (GHVB) and associate with it a \emph{momentum number}, whose ceiling indicates the method's order.
For example, GHVB 1.8 refers to the 2\ts{nd}-order GHVB with $\beta = 0.8$. The main difference between HB and GHVB is that HB calculates the moving average after summing high-order coefficients, whereas GHVB calculates it before the summation. 

We analyze the stability region of GHVB using the same approach as before and visualize the region's locus curve in Figure \ref{fig:stability_regions}.
The theoretical order of accuracy of this method is given by Theorem \ref{thm:ghvb} in Appendix \ref{apx:conv}. 
We discuss alternative momentum methods, such as Nesterov’s momentum, which can offer comparable performance but are less simple in Appendix \ref{apx:moment}.

\begin{figure}[ht]
  \begin{subfigure}{0.26\textwidth}
    \includegraphics[width=\textwidth]{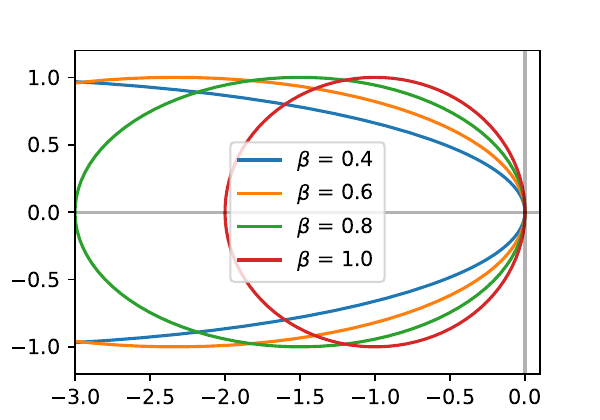}
    \caption{\small 1\ts{st} order \newline \tiny (GHVB 0.4 - 1.0)}
    \label{fig:1st_general}
  \end{subfigure}
  \hspace{-0.03\textwidth}
  \begin{subfigure}{0.26\textwidth}
    \includegraphics[width=\textwidth]{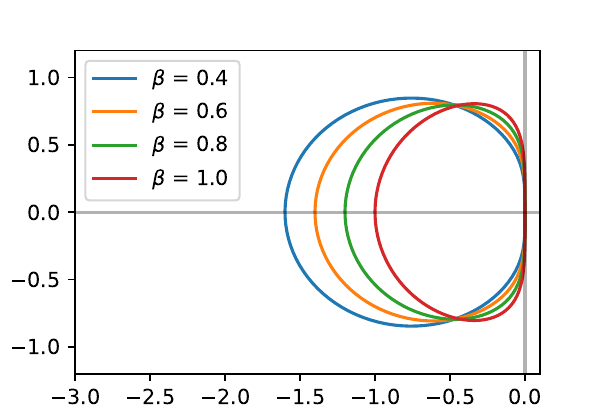}
    \caption{\small 2\ts{nd} order \newline \tiny (GHVB 1.4 - 2.0)}
    \label{fig:2nd_general}
  \end{subfigure}
  \hspace{-0.03\textwidth}
  \begin{subfigure}{0.26\textwidth}
    \includegraphics[width=\textwidth]{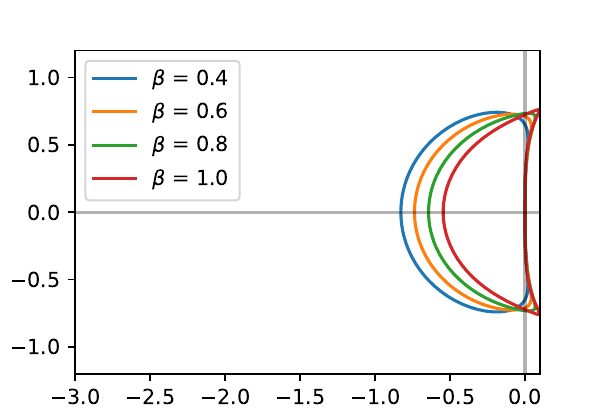}
    \caption{\small 3\ts{rd} order \newline \tiny (GHVB 2.4 - 3.0)}
    \label{fig:3rd_general}
  \end{subfigure}
  \hspace{-0.03\textwidth}
  \begin{subfigure}{0.26\textwidth}
    \includegraphics[width=\textwidth]{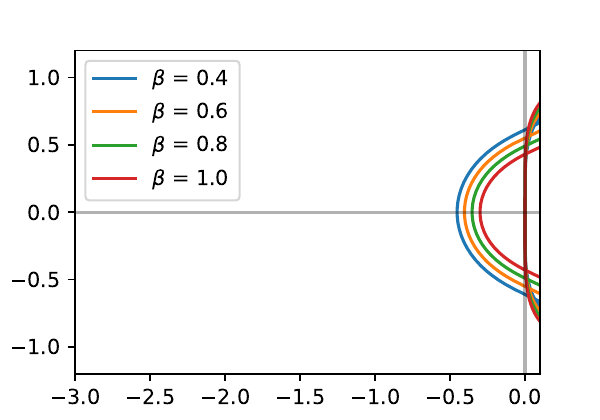}
    \caption{\small 4\ts{th} order\newline \tiny (GHVB 3.4 - 4.0)}
    \label{fig:4th_general}
  \end{subfigure}
  \caption{Boundary of stability regions for 1\ts{st}-to 4\ts{th}- order Generalized Heavy Ball Methods (GHVB).}
  \label{fig:stability_regions}
\end{figure}

\section{Experiments} \label{sec5}

We present a series of experiments to evaluate the effectiveness of our techniques. 
In Section \ref{sec:exp_sd}, we assess the reduction of divergence 
artifacts through qualitative results and quantitative measurements of the latent magnitudes in a text-to-image diffusion model. Besides reducing artifacts, another important goal is to ensure that the overall sampling quality improves and does not degenerate (e.g., becoming color blobs).
We test this with experiments on both pixel-based and latent-based diffusion models trained on ImageNet@256\cite{russakovsky2015imagenet} (Section \ref{sec:exp_adm} and \ref{sec:exp_dit}), which show that our techniques indeed significantly improve image quality, as measured by the standard Fréchet Inception Distance (FID) score.
Lastly, in Section \ref{sec:exp_ghvb}, we present an ablation study of GHVB methods with varying degrees of orders. A similar study on HB methods
can be found in Appendix \ref{apx:hb}.

\subsection{Artifacts Mitigation} \label{sec:exp_sd}
In this experiment, we apply our HB and GHVB techniques to the most popular 2\ts{nd} and 4\ts{th}-order solvers, DPM-Solver++ \cite{lu2022dpm} and PLMS4 \cite{liu2022pseudo}, using 15 sampling steps and various guidance scales on three different text-to-image diffusion models. The qualitative results in Figure \ref{fig:highlighted_image} show our techniques significantly reduce the divergence artifacts and produce realistic results (columns a, c).
More qualitative results are in Figure \ref{fig:highlighted_image2} in Appendix \ref{apx:sample}.

\begin{minipage}{0.62\textwidth}
Quantitatively measuring divergence artifacts can be challenging, as metrics like MSE or LPIPS may only capture the discrepancy between the approximated and the true solutions, which does not necessarily indicate the presence of divergence artifacts. 
In this study, we use the magnitudes of latent variables as introduced in Section \ref{diffusion_artifacts} as a proxy metric to measure artifacts. In particular, we define a magnitude score $v = \sum_{i,j} f(z'_{ij})$ that sums over the latent variables in a max-pooled latent grid, where $f(x) = x$ if $x \geq \tau$ and $0$ otherwise. 
We generate 160 samples from the same set of text prompts and seeds for each method from a fine-tuned Stable Diffusion model called Anything V4 \cite{Anything}.

The results using $\tau=3$ (magnitude considered high when above 3 std.) are shown in Figures \ref{fig:ablation_magnitude1} and \ref{fig:ablation_magnitude2}. We observe that the magnitude score increases as the number of sampling steps decreases and higher-order methods result in higher magnitude scores. Figure \ref{fig:ablation_magnitude1} shows that adding HB momentum to PLMS4 \cite{liu2022pseudo} or DPM-Solver++\cite{lu2022dpmpp} can reduce their magnitude scores, while Figure \ref{fig:ablation_magnitude2} shows that GHVB can also reduce the magnitude scores by reducing the momentum number. Next, we show that our results with less artifacts also have good image quality.
\end{minipage}
\begin{minipage}{0.37\textwidth}
\centering
    \includegraphics[width=0.99\textwidth]{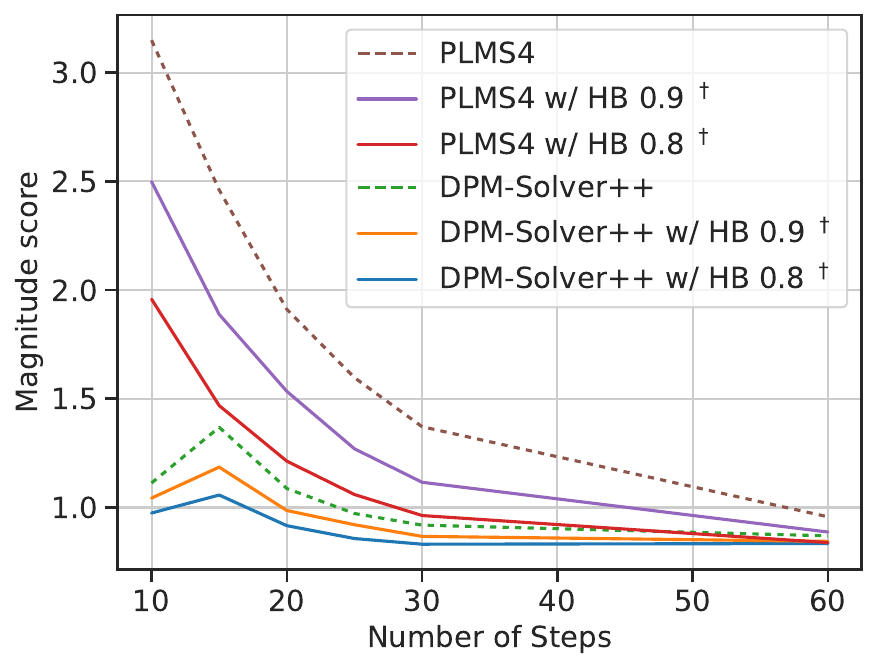}
\vspace{-0.13\textwidth}
\captionsetup{type=figure}
\caption{Average magnitude scores 
}
\label{fig:ablation_magnitude1}

    \includegraphics[width=0.99\textwidth]{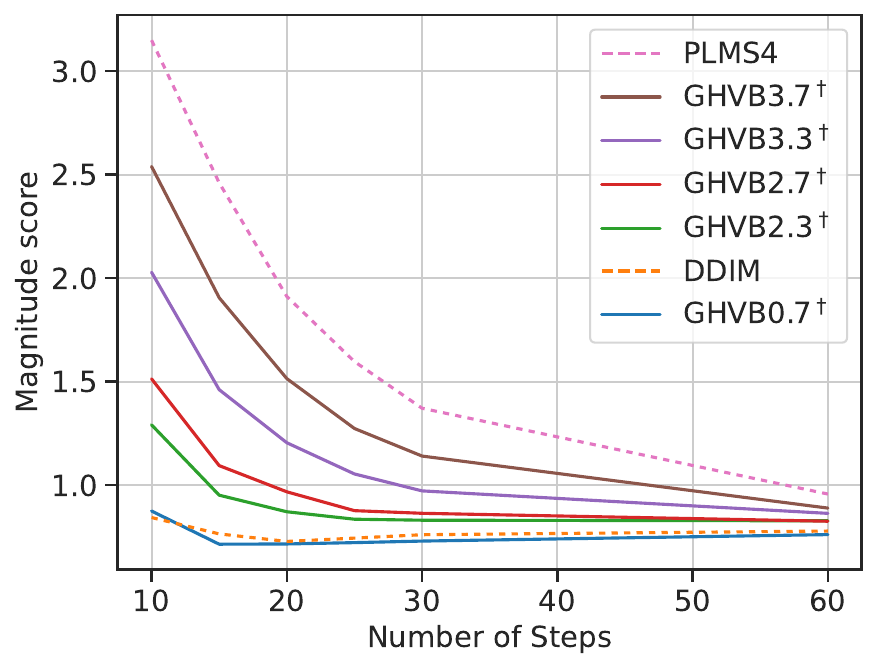}
\vspace{-0.13\textwidth}
\captionsetup{type=figure}
\caption{Average magnitude scores 
}
\label{fig:ablation_magnitude2}

\end{minipage}

\subsection{Experiments on Pixel-based Diffusion Models} \label{sec:exp_adm}
We evaluate our techniques using classifier-guided diffusion sampling with ADM \cite{peebles2022scalable}, an unconditioned pixel-based diffusion model, with their classifier model. Additionally, we compare our methods with two other diffusion sampling methods, namely DPM-Solver++ \cite{lu2022dpmpp} and LTSP \cite{wizadwongsa2023accelerating}, which have demonstrated strong performance in classifier-guided sampling.

\begin{minipage}{0.62\textwidth}
For DPM-Solver++, we use a 2\ts{nd}-order multi-step method and compare the results with and without HB momentum. For LTSP, a split numerical method, we use PLMS4 \cite{liu2022pseudo} to solve the first subproblem (see \cite{wizadwongsa2023accelerating}) and compare different methods for solving the second subproblem, including regular Euler method and Euler method with HB momentum (equivalent to GHVB 0.8).

Our techniques effectively improve FID scores for both DPM-Solver++ and LTSP, as shown in Figure \ref{fig:fid_adm}. Notably, applying our HB momentum to LTSP consistently produces the lowest FID scores. 
This experiment highlights the benefits of using HB momentum, which provides a better choice than Euler method. Table \ref{tab:results1} presents additional results, and Figure \ref{fig:img_adm} provides examples of the generated images.

\end{minipage}
\begin{minipage}{0.38\textwidth}
\centering
\includegraphics[width=0.99\textwidth]{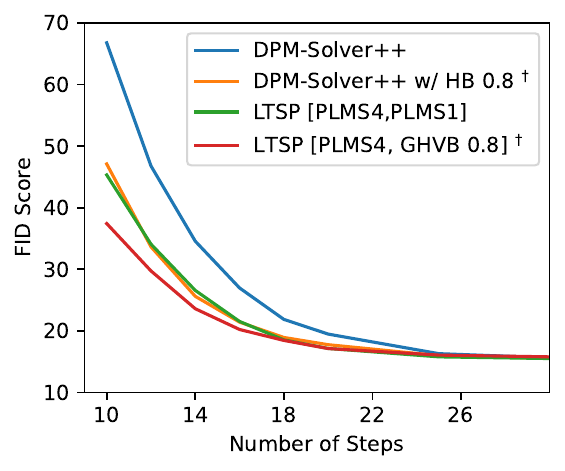}
\vspace{-0.13\textwidth}
\captionsetup{type=figure}
\caption{FID scores on ADM. ($^\dagger$ours)}
\label{fig:fid_adm}
\end{minipage}

\subsection{Experiment on Latent-based Diffusion Models} \label{sec:exp_dit}

We evaluate our techniques using classifier-free guidance diffusion sampling with DiT-XL \cite{peebles2022scalable}, a pre-trained latent-space diffusion model. In this particular setting, 4th-order solvers, such as PLMS4 \cite{liu2022pseudo}, demonstrate superior performance compared to other methods (refer to Appendix \ref{apx:dit}), making it our selected method for comparison.

\begin{minipage}{0.62\textwidth}
In Figure \ref{fig:fid_dit}, a significant gap in FID scores can be observed between 4\ts{th}-order PLMS4 and 1\ts{st}-order DDIM, but this is mostly due to the difference in convergence order rather than divergence artifacts.
Our GHVB 3.8 and 3.9 techniques successfully mitigate numerical divergence and lead to improved FID scores compared to PLMS4, particularly when the number of steps is below 10. Additionally, HB 0.9 also improves FID scores. However, using HB 0.8 with PLMS4 can worsen FID scores compared to using PLMS4 alone, since the method has 1\ts{st}-order convergence, which is the same as DDIM. For high sampling steps, both HB and GHVB achieve comparable performance to PLMS4 without significant degradation of quality. We provide additional results in Table \ref{tab:results}, and example images in Figure \ref{fig:img_dit}.

\end{minipage}
\begin{minipage}{0.37\textwidth}
\centering
\includegraphics[width=0.99\textwidth]{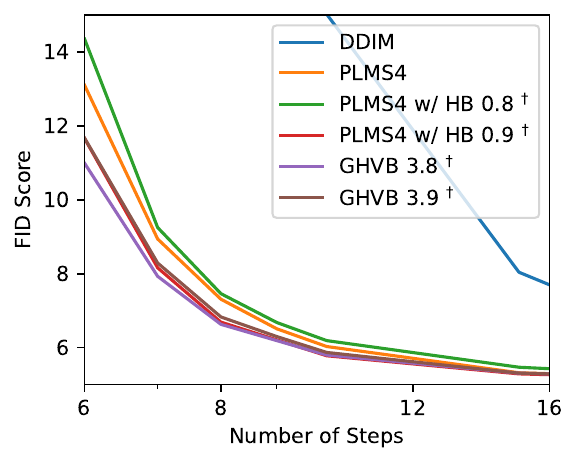}
\vspace{-0.13\textwidth}
\captionsetup{type=figure}
\caption{FID score on DiT. ($^\dagger$ours)}
\label{fig:fid_dit}
\end{minipage}

\subsection{Ablation Study of GHVB} \label{sec:exp_ghvb}

In this section, we conduct an ablation study of the GHVB method. As explained in Section \ref{sec:ghvb}, the damping coefficient $\beta$ of GHVB interpolates between two existing AB methods, DDIM and PLMS2. Our goal here is to analyze the convergence error of GHVB methods. The comparison is done on Stable Diffusion 1.5 with the target results obtained from a 1,000-step PLMS4 method. We measure the mean L2 distance between the sampled results and the target results in the latent space. The results in Figure \ref{fig:ablation_on_beta} suggest that the convergence error of GHVB 1.1 to GHVB 1.9 interpolates between the convergence errors of DDIM and PLMS2 accordingly.

Furthermore, we empirically verify that GHVB does achieve high order of convergence as predicted by Theorem \ref{thm:ghvb}.
We compute the numerical order of convergence using the formula $q \approx \frac{\log(e_{\text{new}} / e_{\text{old}})}{\log(k_{\text{new}} / k_{\text{old}})}$, where $e$ is the error between the sampled and the target latent codes, and $k$ is the number of sampling steps. As shown in Figure \ref{fig:ablation_on_ghvb}, the numerical orders of GHVB 0.5 and GHVB 1.5 approach 0.5 and 1.5, respectively, as the number of steps increases. However, for GHVB 2.5 and GHVB 3.5, the estimated error $e$ may be too small when tested with large numbers of steps, and other sources of error may hinder their convergence. 
Nonetheless, these GHVB methods can achieve high orders of convergence. A detailed analysis of this and other methods is in Appendix \ref{apx:order}.

\begin{minipage}{0.45\textwidth}
\centering

\includegraphics[width=.99\textwidth]{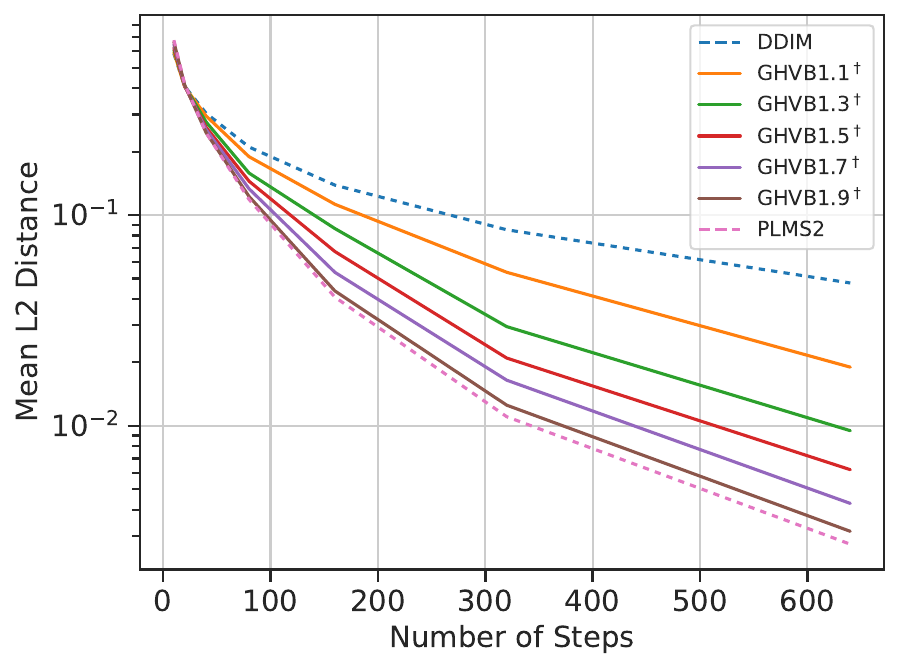}
\captionsetup{type=figure}
\caption{L2 distance in latent space between different sampling methods and the 1,000-step PLMS4 method.}
\label{fig:ablation_on_beta}

\end{minipage}
\hspace{0.01\textwidth}
\begin{minipage}{0.45\textwidth}
\centering

\includegraphics[width=0.99\textwidth]{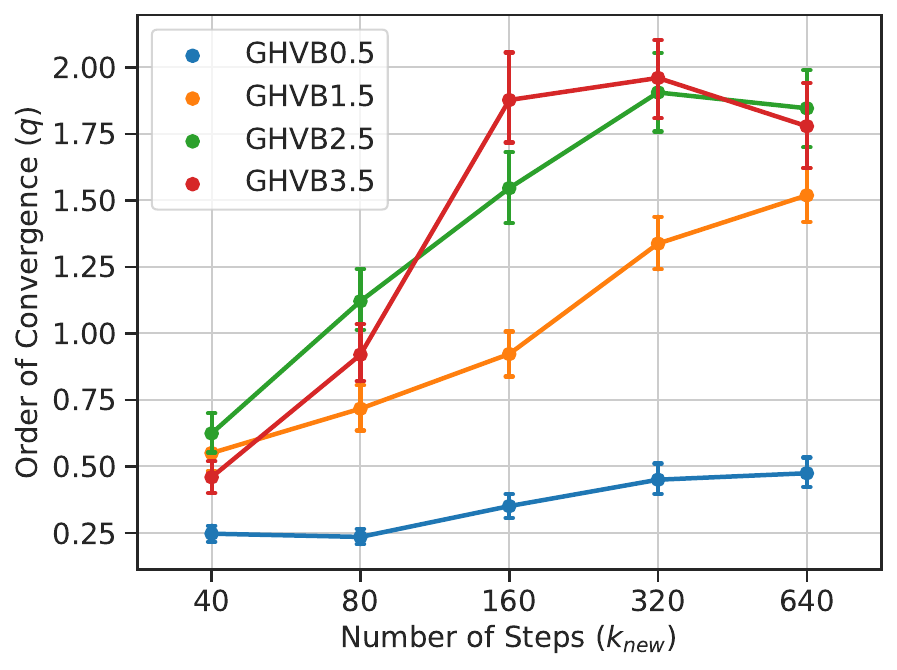}
\captionsetup{type=figure}
\caption{the numerical order of convergence for GHVB.}
\label{fig:ablation_on_ghvb}
\end{minipage}



\section{Discussion}\label{sec6}

The findings of our study highlight an issue when employing high-order methods for sampling diffusion models with a low number of steps. This can result in solution divergence and the emergence of artifacts. To tackle these challenges, we propose two techniques inspired by Polyak's HB momentum, which effectively reduce artifacts while maintaining efficient sampling.

Our work is closely related to several other approaches aimed at improving the sampling speed of diffusion models. One approach involves training separate models that can be sampled faster, which include
model distillation \cite{luhman2021knowledge,salimans2022progressive}, Schrödinger bridge \cite{de2021diffusion}, consistency models \cite{song2023consistency}, and GENIE \cite{dockhorn2022genie}. Another approach focuses on creating better samplers, such as high-order numerical methods, that can be applied to existing diffusion models.
While some samplers were designed for the SDE formulation of diffusion models \cite{tachibana2021ito,dockhorn2021score,song2020score}, most of them deal with the ODE formulation. These include linear multistep methods \cite{liu2022pseudo,lu2022dpmpp,zhang2022fast}, predictor-corrector methods \cite{karras2022elucidating,zhang2022gddim,zhao2023unipc}, and splitting methods \cite{wizadwongsa2023accelerating}. 
Our paper specifically proposes new numerical methods for the ODE formulation, but these methods can also be extended to other sampling approaches involving multiple steps, including the SDE formulation.
These techniques are not mutually exclusive.


\bibliography{reference}
\bibliographystyle{ref_style}
\newpage
\appendix
\addcontentsline{toc}{section}{Appendix} 
\part{Appendices} 

\startcontents 
\printcontents{}{0}{\noindent\textbf{Appendix contents}\vskip3pt\hrule\vskip5pt} 
\vspace{10pt}

\definecolor{mycolor}{RGB}{220, 220, 220}
\newcommand{\whitebox}[1]{{\setlength{\fboxsep}{2pt}\colorbox{white}{#1}}}
\newcommand{\graybox}[1]{{\setlength{\fboxsep}{2pt}\colorbox{mycolor}{#1}}}

\section{Stability Region of Adam-Bashforth Method} \label{apx:stab_ana}

To investigate the stability of the AB2 method, we apply AB2 to the test equation $x'=\lambda x$, which was also used with the Euler method (Section \ref{stab_region}). We have $x_{n+1} = x_n + \delta \left( \frac{3}{2} \lambda x_n - \frac{1}{2} \lambda x_{n-1} \right)$. To solve this linear recurrence relation, we substitute $x_n = r^n$ into the formula, where $r$ is a complex constant. Simplifying the resulting equation, we obtain the characteristic equation:
\begin{align}
r^2 - \left(1 + \frac{3}{2} \delta \lambda\right) r + \frac{1}{2} \delta \lambda = 0,
\end{align}
which has the solutions
\begin{align}
r_1 &= \frac{1}{2} \left(1 + \frac{3}{2} \delta \lambda + \sqrt{\left(1 + \frac{3}{2} \delta \lambda\right)^2-2\delta\lambda}\right), \\
r_2 &= \frac{1}{2} \left(1 + \frac{3}{2} \delta \lambda - \sqrt{\left(1 + \frac{3}{2} \delta \lambda\right)^2-2\delta\lambda}\right).
\end{align}
The general formulation of $x_n$ can be expressed as
\begin{align}
x_n = a_1 r_1^n + a_2 r_2^n,
\end{align}
where $a_1$ and $a_2$ are constants. The numerical solution $x_n$ tends to 0 as $n$ tends to infinity when both $|r_1|<1$ and $|r_2|<1$, which means the stability region of AB2 is determined by the complex region
\begin{align}
S = \left\{z \in \mathbb{C} : \left|\frac{1}{2} \left(1 + \frac{3}{2} z \pm \sqrt{\left(1 + \frac{3}{2} z\right)^2-2z} \right)\right|\leq 1 \right\}.
\end{align}

Solving for the complex area from the roots of the characteristic equation can pose significant challenges in numerical analysis. One commonly employed graphical technique to visualize the stability region is the boundary locus technique \cite{lambert1991numerical}.

\subsection{The Boundary Locus Technique}
The boundary locus technique \cite{lambert1991numerical} begins by defining the shift operator $E$ such that $Ex_k = x_{k-1}$. Note that $E^2 x_k = Ex_{k-1} = x_{k-2}$. Generally, a numerical method can be represented in the following form:
\begin{align} \label{eq:general_method}
A(E)x_{n} = \delta B(E)f(x_n),
\end{align}
where $A$ and $B$ are polynomials of $E$. For example, in the case of the AB2 method, we have $A(E) = 1 - E$ and $B(E) = \frac{3}{2}E - \frac{1}{2}E^2.$

To determine the stability region of a numerical method, we apply the boundary locus technique to the general form given by Equation \ref{eq:general_method}. The characteristic equation of the method can be obtained by substituting $f(x_n) = \lambda x_n$ (i.e., the test equation) and $x_n = r^n$, 
which yields
\begin{align}
A(r^{-1}) = \delta \lambda B(r^{-1}),
\end{align}
where $r$ is the root of the method's characteristic equation. The stability region of the method is the area in the complex plane where the characteristic root $r$ have modulus less than 1. The boundary of the stability region can be determined by substituting $r$ with a modulus of 1 (which means that $r = e^{i\theta}$ for some real value $\theta$) into the characteristic equation and solving for $z = \delta\lambda$. This yields the locus of points in the complex plane where the characteristic roots of the method are on the boundary of the stability region. Specifically, we can obtain the curve $z = s(\theta) = A(e^{-i\theta})/ B(e^{-i\theta})$, where $\theta \in [-\pi, \pi]$, that represents the boundary of the stability region in the complex plane. By comparing the stability regions of different numerical methods, we can determine which method is more stable and accurate for a given problem. The boundary locus technique provides a powerful tool for analyzing the stability of numerical methods and can help guide the selection of appropriate methods for solving ODE problems.

\begin{example}
(Euler Method) The Euler method, a numerical technique for approximating solutions of ODE, can be expressed as:
\begin{align}
(1-E) x_{n} = \delta E f(x_{n})
\end{align}
The associated polynomials for this method are:
\begin{align}
A(z) = 1 - z, \quad B(z) = z
\end{align}
The stability region of the Euler method corresponds to the locus curve in which the solution remains bounded. This region can be determined by evaluating the complex function:
\begin{align}
s (\theta) = \frac{A(e^{-i \theta})}{B(e^{-i\theta})} = \frac{1-e^{-i \theta} }{e^{-i \theta}} = e^{i\theta} - 1, \quad \theta \in [-\pi, \pi].
\end{align}
The locus curve forms a perfect circle with a radius of 1 and a center at -1.
\end{example}

\begin{example}
(AB Methods) The 2\ts{nd}-order Adams-Bashforth (AB2) method is given by:
\begin{align}
(1-E) x_{n} = \delta \left(\frac{3}{2}E -\frac{1}{2} E^2 \right) f(x_{n}).
\end{align}

The locus curve representing the stability region of this method is given by:
\begin{align}
s (\theta) = \frac{1 -e^{-i\theta}}{\frac{3}{2}e^{-i\theta} - \frac{1}{2} e^{-2i\theta}} = \frac{2(1 -e^{-i\theta})}{3e^{-i\theta} - e^{-2i\theta}} , \quad \theta \in [-\pi, \pi].
\end{align}

Similarly, the stability regions for the AB3 and AB4 methods can be obtained by evaluating the complex functions:
\begin{align}
s (\theta) = \frac{12(1 -e^{-i\theta})}{23e^{-i\theta} - 16 e^{-2i\theta} + 5 e^{-3i\theta}}, \quad \theta \in [-\pi, \pi].
\end{align}
\begin{align}
s(\theta) = \frac{24(1 -e^{-i\theta})}{55e^{-i\theta} - 59 e^{-2i\theta} + 37 e^{-3i\theta} - 9 e^{-4i\theta}}, \quad \theta \in [-\pi, \pi].
\end{align}
The locus curves for the  boundary of stability regions of first four AB methods are visualized in Figure \ref{fig:stability_ab}.
\end{example}

\section{Derivation of Test Equation} \label{apx:test_eq}
This section presents the derivation of the test equation $u'=\lambda u$, which serves as a fundamental tool for analyzing the stability of numerical methods in diffusion sampling, as discussed in the Section \ref{artifact_sampling}.

Starting from the differential equation for $\bar{x}$, we have 
\begin{align}
\frac{d\bar{x}}{d\sigma} = \nabla \bar{\epsilon}(x^*) (\bar{x}-x^*). 
\end{align}

We then define $u=v^T(\bar{x} - x^*)$, where $v$ is a normalized eigenvector of $\nabla \bar{\epsilon}(x^*)^T$ corresponding to the eigenvalue $\lambda$. Taking the derivative of $u$ with respect to $\sigma$ and using the chain rule, we have:

\begin{align}
\frac{du}{d\sigma} =& \: v^T \frac{d}{d\sigma} (\bar{x} - x^*) = v^T [\nabla \bar{\epsilon} (x^*)] (\bar{x} - x^*) \\
=& \: [\nabla \bar{\epsilon} (x^*)^Tv] ^T (\bar{x} - x^*) = (\lambda v)^T (\bar{x} - x^*) \\
=& \: \lambda u
\end{align}

Thus, we obtain the test equation $u'=\lambda u$.

\section{Toy ODE Problem} \label{apx:toy}
This section aims to demonstrate how the solutions yielded by numerical methods can diverge when the stability regions of the methods are too small. Additionally, we illustrate how our momentum-based techniques can enlarge the stability region. The demonstration is conducted on a 2D toy ODE problem given by:

\begin{align} \label{eq:toy_ode}
\frac{dx}{dt} =\begin{bmatrix} 0 & 1 \\ -9 & -10\end{bmatrix}\begin{bmatrix} x_1 \\ x_2\end{bmatrix}, \qquad x(0) = \begin{bmatrix} -1 \\ 0\end{bmatrix}.
\end{align}

The eigenvalues of the $2\times2$ matrix are $-9$ and $-1$, and the exact solution of Equation \ref{eq:toy_ode} is given by:
\begin{align}
x(t) = \frac{1}{8}\begin{bmatrix} 1 \\ -9 \end{bmatrix} e^{-9t} + \frac{9}{8}\begin{bmatrix} -1 \\ 1 \end{bmatrix} e^{-t}.
\end{align}
As $t$ increases, $x(t)$ converges to the origin.

Let us say we want to numerically compute $x(3)$ by integrating the ODE for 26 steps with a numerical method. We divide the time interval $[0,3]$ into 26 equal intervals, resulting in a step size of $\delta = 3/26$. For this particular setting, it turns out that the the 2\ts{nd}-order Adams-Bashforth (AB2) method diverges, but the Euler method converges. To see this, observe that the stability region of the AB2 method only cover the interval $[-1,0]$ of the real line, as depicted in Figures \ref{fig:cur_ABHB} and \ref{fig:cur_GHVB}. So, for the eigenvalue $\lambda = -9$, the product $\delta \lambda = -27/26$ lies just outside the region. Consequently, the numerical solution yielded by AB2 diverges, as indicated by the blue line in Figures \ref{fig:sol_ABHB} and \ref{fig:sol_GHVB}. In contrast, the Euler method's stability region contains both values of $\delta \lambda$, and the numerical solution, represented by the green line in Figures \ref{fig:sol_ABHB} and \ref{fig:sol_GHVB}, appears to be more accurate.


The AB2 method can be made to more accurately compute $x(3)$ by applying to it any of our proposed techniques: Heavy Ball momentum (HB) and GHVB. The stability regions of the modified AB2 methods are given by the red ($\beta = 0.8$) and yellow ($\beta = 0.9$) lines in Figures \ref{fig:cur_ABHB} and \ref{fig:cur_GHVB}. Observe that they contains the points associated with the $\lambda \delta$ values. As a result, the numerical solutions of the modified methods converge to the origin, as demonstrated by the red and yellow lines in Figures \ref{fig:sol_ABHB} and \ref{fig:sol_GHVB}, respectively.

\begin{figure}[ht]
  \begin{subfigure}{0.26\textwidth}
    \includegraphics[width=\textwidth]{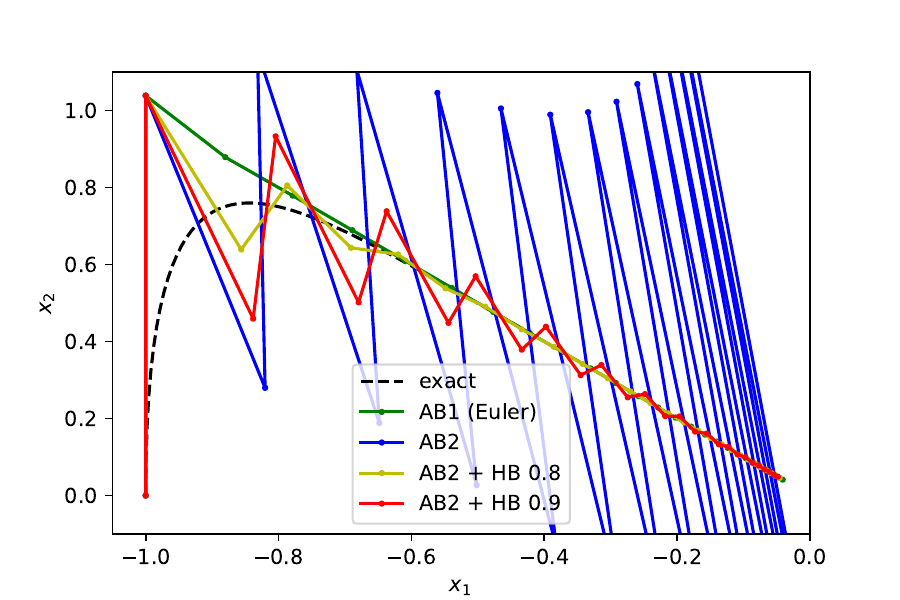}
    \caption{solution}
    \label{fig:sol_ABHB}
  \end{subfigure}
  \hspace{-0.03\textwidth}
  \begin{subfigure}{0.26\textwidth}
    \includegraphics[width=\textwidth]{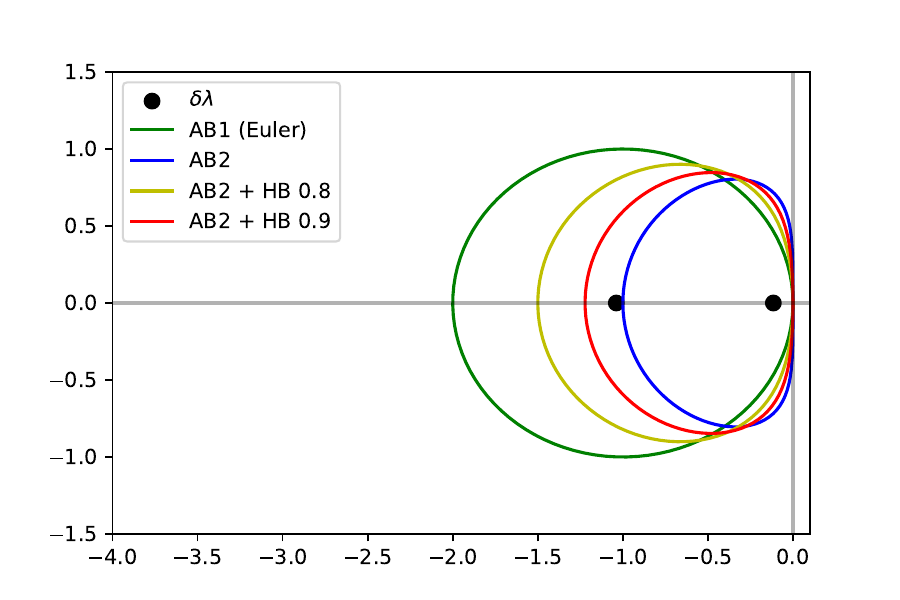}
    \caption{stability region}
    \label{fig:cur_ABHB}
  \end{subfigure}
  \hspace{-0.03\textwidth}
  \begin{subfigure}{0.26\textwidth}
    \includegraphics[width=\textwidth]{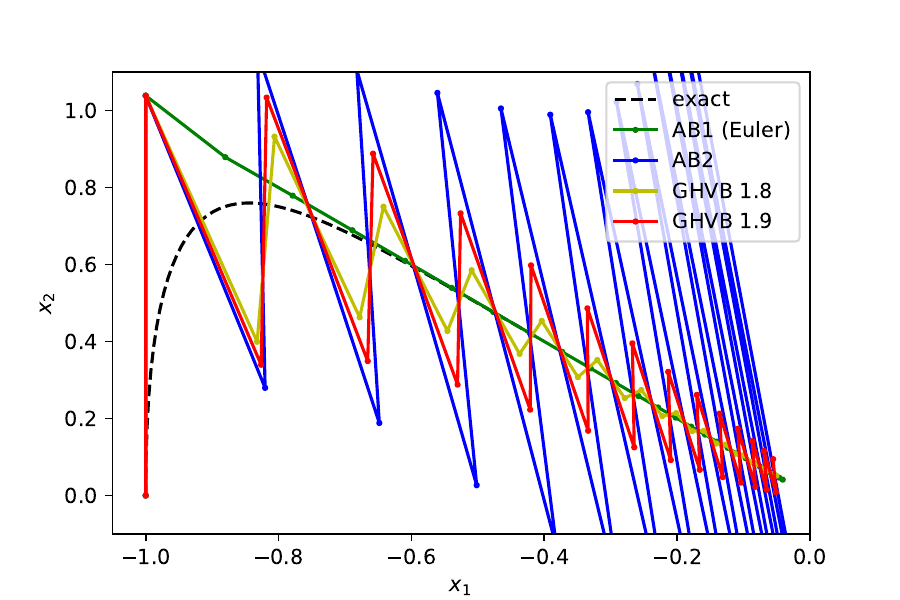}
    \caption{solution}
    \label{fig:sol_GHVB}
  \end{subfigure}
  \hspace{-0.03\textwidth}
  \begin{subfigure}{0.26\textwidth}
    \includegraphics[width=\textwidth]{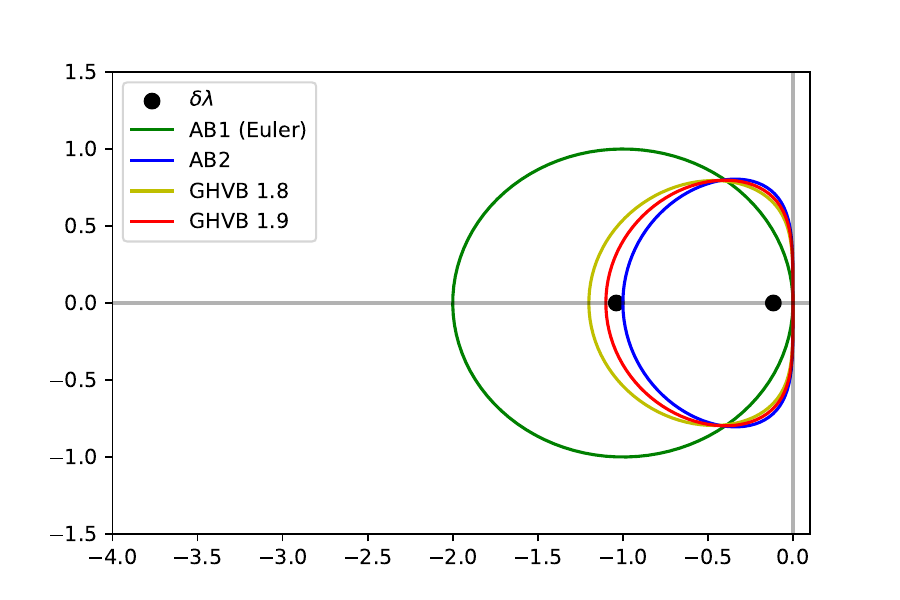}
    \caption{stability region}
    \label{fig:cur_GHVB}
  \end{subfigure}
  \caption{Comparison of solution trajectories and stability regions of various numerical methods when applied to the toy ODE problem. Here, we seek to compute $x(3)$ in 26 steps with the Euler method, the AB2 method, and methods resulting from modifying AB2 with our momentum-based techniques. Subfigure (a) presents the numerical solutions obtained using our modified AB2 method with HB momentum, while subfigure (c) showcases those obtained using our GHVB. The stability regions of the methods are depicted in subfigures (b) and (d) respectively.}
  \label{fig:toy_ex}
\end{figure}

\section{Implementation Details of PLMS with HB and GHVB Methods}

In this section, we present the complete algorithms for the PLMS method with the HB momentum and the GHVB method in Algorithm \ref{algo:plms_hb} and Algorithm \ref{algo:ghvb}, respectively. Additionally, we include the locus curves that represent the boundaries of the stability regions for each method.

For the PLMS method with HB momentum, the locus curves representing the stability regions with parameter $\beta$ are given by:

\textbf{PLMS1 with HB $\beta$:}
\begin{align}
s(\theta) = \frac{(1-e^{-i\theta})(1-(1-\beta)e^{-i\theta})}{\beta e^{-i\theta}}
\end{align}

\textbf{PLMS2 with HB $\beta$:}
\begin{align}
s(\theta) = \frac{2(1-e^{-i\theta})(1-(1-\beta)e^{-i\theta})}{\beta (3e^{-i\theta} - e^{-2i\theta})}
\end{align}

\textbf{PLMS3 with HB $\beta$:}
\begin{align}
s(\theta) = \frac{12(1-e^{-i\theta})(1-(1-\beta)e^{-i\theta})}{\beta (23e^{-i\theta} - 16e^{-2i\theta} + 5e^{-3i\theta})}
\end{align}

\textbf{PLMS4 with HB $\beta$:}
\begin{align}
s(\theta) = \frac{24(1-e^{-i\theta})(1-(1-\beta)e^{-i\theta})}{\beta (55e^{-i\theta} - 59e^{-2i\theta} + 37e^{-3i\theta} - 9e^{-4i\theta})}
\end{align}

The locus curves representing the boundaries of the stability regions for different values of $\beta$ are illustrated in Figure \ref{fig:polyak_comparison}.

For  GHVB method, the locus curve representing the stability region with parameter $\beta$ is given by:

\textbf{1st-order GHVB} (equivalent to PLMS1 with HB):
\begin{align}
s(\theta) = \frac{(1-e^{-i\theta})(1-(1-\beta)e^{-i\theta})}{\beta e^{-i\theta}}
\end{align}

\textbf{2nd-order GHVB}:
\begin{align}
s(\theta) = \frac{2(1-e^{-i\theta})(1-(1-\beta)e^{-i\theta})}{((2+\beta)e^{-i\theta} - (2-\beta)e^{-2i\theta})}
\end{align}

\textbf{3rd-order GHVB}:
\begin{align}
s(\theta) = \frac{12(1-e^{-i\theta})(1-(1-\beta)e^{-i\theta})}{(18+5\beta)e^{-i\theta} - (24-8\beta)e^{-2i\theta} + (6-\beta)e^{-3i\theta}}
\end{align}

\textbf{4th-order GHVB}:
\begin{align}
s(\theta) = \frac{24(1-e^{-i\theta})(1-(1-\beta)e^{-i\theta})}{(46+9\beta)e^{-i\theta} - (78-19\beta)e^{-2i\theta} + (42-5\beta)e^{-3i\theta} - (10-\beta)e^{-4i\theta}}
\end{align}

These locus curves describe the boundaries of the stability regions and are shown in Figure \ref{fig:stability_regions}.

\begin{algorithm}[htbp]
 \caption{PLMS with HB momentum} \label{algo:plms_hb}
\SetAlgoLined        
    \textbf{input:} $\bar{x}_n$ (previous result), $\delta$ (step size), \\
    $\{e_i\}_{i<n}$ (evaluation buffer), $r$ (method order),\\
    $v_n$ (previous velocity) \;
   \quad $e_n = \bar{\epsilon}_\sigma(\bar{x}_n)$ \; \quad $c= \min(r, n)$ \;
   \quad \textbf{if} $c==1$ \textbf{then}  \\ \quad \quad 
        $\hat{e} = e_n $ \;
   \quad \textbf{else if} $c==2$ \textbf{then} \\ \quad \quad 
        $\hat{e} = (3e_n - e_{n-1})/2 $ \;
   \quad \textbf{else if} $c==3$ \textbf{then} \\ \quad\quad 
        $\hat{e} = (23e_n - 16e_{n-1} + 5 e_{n-2})/12  $ \;
   \quad \textbf{else} \\ \quad\quad 
        $\hat{e} = (55e_n - 59e_{n-1} + 37e_{n-3} - 9e_{n-4})/24  $ \;
   \quad $v_{n+1} = (1-\beta)v_n + \beta \hat{e} $\; 
   \quad  \KwResult{ $\bar{x}_n + \delta v_{n+1}$ } 
\end{algorithm}

\begin{algorithm}[htbp]
 \caption{GHVB} \label{algo:ghvb}
\SetAlgoLined
    \textbf{input:} $\bar{x}_n$ (previous result), $\delta$ (step size), $\beta$ (damping parameter) \\
    $\{v_i\}_{i\leq n}$ (evaluation buffer), $r$ (method order),  \;
   \quad $v_{n+1} = (1-\beta)v_n + \beta\bar{\epsilon}_\sigma(\bar{x}_n)$ \; \quad $c= \min(r, n)$ \;
   \quad \textbf{if} $c==1$ \textbf{then}  \\ \quad \quad 
        $\hat{e} = v_{n+1} $ \;
   \quad \textbf{else if} $c==2$ \textbf{then} \\ \quad \quad 
        $\hat{e} = ((2+\beta)v_{n+1} - (2-\beta)v_{n})/2\beta $ \;
   \quad \textbf{else if} $c==3$ \textbf{then} \\ \quad\quad 
        $\hat{e} = ((18+5\beta)v_{n+1} - (24-8\beta)v_{n}$ \\ \quad\quad \:\:\:\:
        $+ (6-\beta)v_{n+1})/12\beta   $ \;
   \quad \textbf{else if} $c==4$  \textbf{then}\\ \quad\quad 
        $\hat{e} = ((46+9\beta)v_{n+1} - (78-19\beta)v_{n}$ \\ \quad\quad \:\:\:\:
        $ + (42-5\beta)v_{n-1} - (10-\beta)v_{n-2})/24\beta  $ \;
   \quad \textbf{else}\\ \quad\quad 
        $\hat{e} = ((1650+251\beta)v_{n+1} - (3420-646\beta)v_{n}$ \\ \quad\quad \:\:\:\:
        $ + (2880-264\beta)v_{n-1} - (1380-106\beta)v_{n-2} $  \\ \quad\quad \:\:\:\:
        $ + (270-19\beta)v_{n-3})/720\beta  $ \;
   \quad  \KwResult{ $\bar{x}_n + \delta \hat{e}$ } 
\end{algorithm}

\section{Variance Momentum Methods} \label{apx:moment}

In 2019, a variant of Polyak's HB momentum called aggregated momentum \cite{lucas2018aggregated} was proposed. Its objective is to enhance stability while also offering convergence advantages. This modification introduces multiple velocities, denoted by $v_n^{(i)}$, each associated with its specific damping coefficient $\beta^{(i)}$.
\begin{align}
v_{n+1}^{(i)} = (1-\beta^{(i)})v_n^{(i)} + \beta^{(i)}f(x_n), \qquad x_{n+1} = x_n + \delta \sum^K_{i=1} w^{(i)} v_{n+1}^{(i)}
\end{align}
Nesterov's momentum \cite{nesterov1983method} is one version of the classic momentum that can also be applied to diffusion sampling processes to improve stability. It can be written as follows:
\begin{align}
y_{n+1} = x_n + \delta \beta f(x_n), \qquad x_{n+1} = y_{n+1} + (1-\beta) (y_{n+1} - y_n)
\end{align}
In fact, Nesterov's momentum can be obtained from aggregated momentum by considering the following:
\begin{align}
v_{n+1}^{(1)} &= (1-\beta) v_n^{(1)} + \beta f(x_n), \qquad v_{n+1}^{(2)} = f(x_n), \nonumber \\
x_{n+1} &= x_n + \delta ((1-\beta) v_{n+1}^{(1)} + \beta v_{n+1}^{(2)}).
\end{align}

The stability regions of Nesterov's momentum when applied to the Euler method and high-order Adams-Bashforth methods are illustrated in Figures \ref{fig:1st_nest} through \ref{fig:4th_nest}. Observe that the stability regions of methods with Nesterov's momentum become larger in a similar manner to those with Polyak's HB momentum. However, the enlargement due to Nesterov's momentum is more pronounced in the vertical direction, while the Polyak's HB momentum's enlargement is more horizontal in nature. (See Figures \ref{fig:cur_ABHB} and \ref{fig:cur_GHVB}) The differences in the shapes of the stability regions suggest one type of momentum is more suitable to certain ODE problems than the other.

\begin{figure}[ht]
  \begin{subfigure}{0.26\textwidth}
    \includegraphics[width=\textwidth]{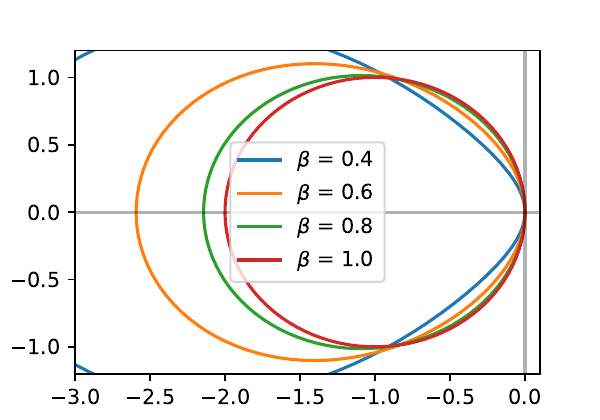}
    \caption{1\ts{st} order}
    \label{fig:1st_nest}
  \end{subfigure}
  \hspace{-0.03\textwidth}
  \begin{subfigure}{0.26\textwidth}
    \includegraphics[width=\textwidth]{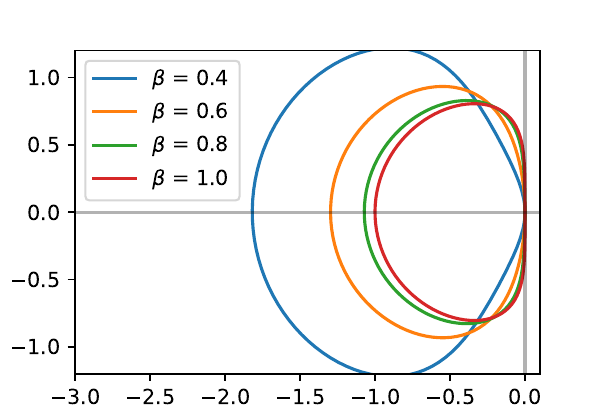}
    \caption{2\ts{nd} order}
    \label{fig:2nd_nest}
  \end{subfigure}
  \hspace{-0.03\textwidth}
  \begin{subfigure}{0.26\textwidth}
    \includegraphics[width=\textwidth]{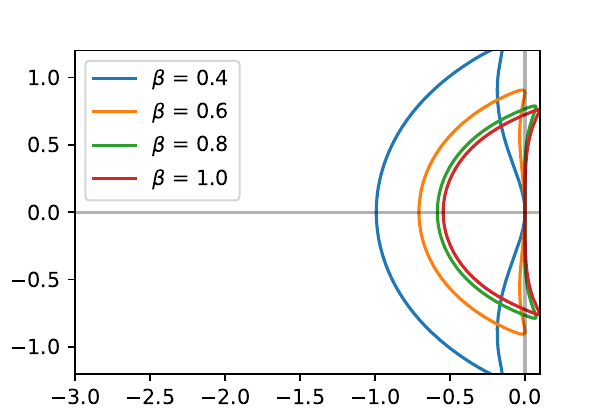}
    \caption{3\ts{rd} order}
    \label{fig:3rd_nest}
  \end{subfigure}
  \hspace{-0.03\textwidth}
  \begin{subfigure}{0.26\textwidth}
    \includegraphics[width=\textwidth]{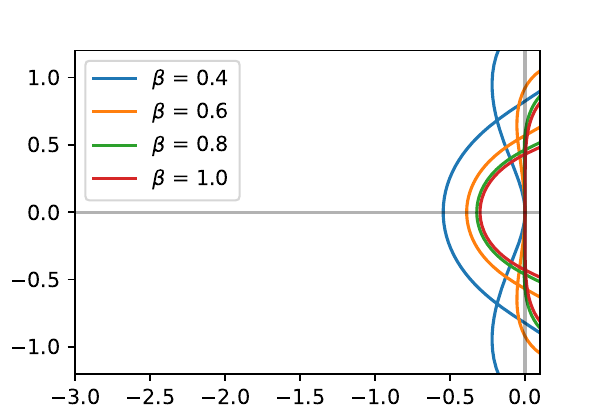}
    \caption{4\ts{th} order}
    \label{fig:4th_nest}
  \end{subfigure}
  \caption{Comparison of stability regions for different methods with different levels of Nesterov's momentum.}
  \label{fig:stability_nest}
\end{figure}

While generalizing the aggregated momentum method to higher-order methods is possible, it is no longer as straightforward as it is with the HB method. As an example, we will consider the 2\ts{nd}-order generalization of Nesterov's momentum method.

We begin by noting that $(\beta + (1-\beta)\Delta)v_{n+1} = \beta f(x_n)$. Our goal is to find polynomials $B$ and $C$ such that
\begin{align}
\Delta x_{n+1} = \delta (B(\Delta) v_{n+1} + C(\Delta) f(x_n)).
\end{align}
Multiplying both sides of the equation by $(\beta + (1-\beta)\Delta)$, we get
\begin{align}
(\beta + (1-\beta)\Delta)\Delta x_{n+1} = \delta (\beta B(\Delta) f(x_n) + (\beta + (1-\beta)\Delta) C(\Delta) f(x_n)).
\end{align}
Replace the left side with the first two terms from Equation \ref{eq:adam_bata}, we obtain
\begin{align}
\delta \left(\beta + \frac{2+\beta}{2}\Delta \right) f(x_n) = \delta (\beta B(\Delta) + (\beta + (1-\beta)\Delta) C(\Delta) ) f(x_n).
\end{align}

Let $B(\Delta) = b_0 + b_1 \Delta$ and $C(\Delta) = c_0$. Then, by balancing the coefficients of $\Delta$ on both sides,  we have $1 = b_0 + c_0$ and $\frac{2+\beta}{2} = \beta b_1 + (1-\beta) c_0$. We can now write the final formulation as follows:
\begin{align}
x_{n+1} = x_n + \delta (b_0 v_{n+1} + b_1 (v_{n+1} - v_n) + c_0 f(x_n)).
\end{align}
This suggests that there are countless different ways to expand the stability region of a numerical method, which offer many new research opportunities.

\section{Order of Convergence} \label{apx:conv}
When solving ODEs numerically, it is important to consider the accuracy of the method used. One way to measure accuracy is by considering the order method's of convergence of the. Suppose we have a numerical method of the form
\begin{align}
A(E)x_{n} = \delta B(E)f(x_n),
\end{align}
where $A(E) = a_0 + a_1 E + a_2 E^2 +...+a_s E^s$ and $B(E) = b_0 + b_1 E + ... + b_s E^s$. The method is said to be of $p^{th}$ order if and only if, for all sufficiently smooth functions $x$, we have that
\begin{align}
\sum^s_{m=0} a_m x(\sigma - m\delta) - \delta \sum^s_{m=0} b_m x'(\sigma - m\delta) = \mathcal{O}(\delta^{p+1}),
\end{align}
where $x'$ denotes the derivative of $x$.

To derive the order of convergence, we use Taylor expansion for both $x$ and $x'$, yielding
\begin{align*}
\text{L.H.S.} & = \sum^s_{m=0} a_m \sum^\infty_{k=0} \frac{(-m\delta)^k}{k!}x^{(k)}(\sigma) - \delta \sum^s_{m=0} b_m\sum^\infty_{k=0} \frac{(-m\delta)^k}{k!}x^{(k+1)}(\sigma)\\
& = \sum^\infty_{k=0}\left(\sum^s_{m=0} a_m\frac{(-m\delta)^k}{k!} \right)x^{(k)}(\sigma) - \delta \sum^\infty_{k=0}\left(\sum^s_{m=0} b_m\frac{(-m\delta)^k}{k!} \right)x^{(k+1)}(\sigma) \\
& = \sum^\infty_{k=0}\left(\sum^s_{m=0} a_m\frac{(-m\delta)^k}{k!} \right)x^{(k)}(\sigma) + \sum^\infty_{k=1}\left(\sum^s_{m=0} b_m\frac{m^{k-1}(-\delta)^k}{(k-1)!} \right)x^{k}(\sigma) \\
&= \sum^s_{m=0} a_m + \sum^\infty_{k=1}\left(\sum^s_{m=0} a_m\frac{m^k}{k!}+\sum^s_{m=0} b_m\frac{m^{k-1}}{(k-1)!} \right)x^{k}(\sigma)(-\delta)^k
\end{align*}
where $x^{(k)}(\sigma)$ denotes the $k^{th}$ derivative of $x$ evaluated at $\sigma$.

Therefore, the method has order $p$ if and only if the coefficients satisfy the conditions given by 
\begin{align} \label{eq:condi}
\sum^s_{m=0} a_m &= 0, \notag \\
\sum^s_{m=0} a_m\frac{m^k}{k!}+\sum^s_{m=0} b_m\frac{m^{k-1}}{(k-1)!} &= 0, \quad k = 0, 1, ..., p.
\end{align}

Now, we discuss the convergence order of any numerical method after HB momentum is applied to it. An example of such an algorithm is the modified PLMS method, presented in Algorithm \ref{algo:plms_hb}.

\begin{theorem}[Convergence order of numerical methods with HB momentum] \label{thm:hb}
Suppose that a $p$\ts{th}-order numerical method has the form $x_{n+1} = x_n + \delta \sum^s_{m=0} b_m f(x_{n-m})$, where $p\geq 1$.  The modified method that uses HB momentum can be expressed as follows:
\begin{align}
v_{n+1} &= (1-\beta) v_n + \beta \sum^s_{m=0} b_m f(x_{n-m}), \\
x_{n+1} &= x_n + \delta v_{n+1}.
\end{align}
It has first-order convergence.
\end{theorem}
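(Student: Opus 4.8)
The plan is to eliminate the auxiliary velocity variable $v_n$ and rewrite the momentum-modified scheme as a single linear multistep method in the standard form $A(E)x_n = \delta B(E)f(x_n)$, and then apply the order conditions \eqref{eq:condi} derived earlier in the excerpt. First I would write the base $p$\ts{th}-order method as $(1-E)x_n = \delta P(E) f(x_n)$ where $P(E) = \sum_{m=0}^{s} b_m E^m$; since it has order $p\geq 1$ it is at least consistent, so $P(1) = 1$ (this follows from the $k=1$ order condition: $\sum_m b_m = 1$). The velocity recursion $v_{n+1} = (1-\beta)v_n + \beta \sum_m b_m f(x_{n-m})$ reads $(1 - (1-\beta)E)v_{n+1} = \beta P(E) f(x_n)$, and the position update is $(1-E)x_{n+1} = \delta v_{n+1}$. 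Applying the operator $(1-(1-\beta)E)$ to the position update and substituting gives the combined scheme
\begin{align}
(1-E)(1-(1-\beta)E)\,x_{n+1} = \delta\,\beta\, P(E)\, f(x_n),
\end{align}
i.e. $A(E) = (1-E)(1-(1-\beta)E)$ and $B(E) = \beta P(E)$ (up to the appropriate shift of indices so that both sides are expressed in terms of $x_n$; the shift is immaterial for the order analysis).

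Next I would verify consistency (order at least $1$) and then show the order is exactly $1$ when $\beta \neq 1$. For consistency one checks the two conditions $A(1) = 0$ and $A'(1) + B(1) = 0$ in the normalization of \eqref{eq:condi}. Here $A(1) = 0$ is immediate from the factor $(1-E)$. For the first-derivative condition, $A'(E) = -(1-(1-\beta)E) - (1-\beta)(1-E)$, so $A'(1) = -(1-(1-\beta)) = -\beta$, while $B(1) = \beta P(1) = \beta$; hence $A'(1) + B(1) = 0$ and the method is consistent, giving order $\geq 1$. To show the order does not reach $2$, I would compute the next residual coefficient, which (matching the expansion in Appendix~\ref{apx:conv}) involves $\tfrac12 A''(1) + A'(1) + B'(1)$ or the equivalent combination $\sum a_m \tfrac{m^2}{2} + \sum b_m m$; a direct computation of $A''(1)$ from the product rule and of $B'(1) = \beta P'(1)$ shows this coefficient equals a nonzero multiple of $(1-\beta)$, so it vanishes only when $\beta = 1$. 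Therefore for $\beta \in (0,1)$ the method is consistent of order exactly $1$.

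Finally, to conclude first-order \emph{convergence} (not merely consistency), I would invoke the standard Dahlquist equivalence theorem: a linear multistep method converges at the order of its consistency provided it is zero-stable, i.e. the roots of $A(r) = 0$ lie in the closed unit disk with any root on the unit circle being simple. The roots of $A(r) = (1-r)(1-(1-\beta)r)$ are $r = 1$ and $r = 1/(1-\beta)$; since $\beta \in (0,1)$, the second root has modulus $1/(1-\beta) > 1$. \textbf{This is the main obstacle}: the combined scheme as written is \emph{not} zero-stable in the classical sense because of the spurious root outside the unit disk. The resolution is that this root is an artifact of multiplying through by $(1-(1-\beta)E)$ and does not correspond to a genuine mode of the original two-variable $(x_n, v_n)$ system — the $v$-recursion has its own damped root $1-\beta$ (inside the disk) which cancels it. So the cleaner route is to analyze the coupled system directly: treat $(x_n, v_n)$ as a vector, write the one-step map for the test equation $f(x) = \lambda x$, and check that the companion matrix has its genuine characteristic roots (the root near $1$ governing accuracy, plus $1-\beta$ and the parasitic roots of $P$) all inside the closed unit disk for $\delta$ small, with the principal root simple. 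Combined with the local truncation error being $\mathcal{O}(\delta^2)$ from the order-$1$ consistency computation above, the usual Lax-type argument then yields global error $\mathcal{O}(\delta)$, establishing first-order convergence.
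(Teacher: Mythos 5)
Your core argument is the same as the paper's: eliminate the velocity to obtain the single multistep scheme $x_{n+1}-x_n-(1-\beta)(x_n-x_{n-1}) = \delta\beta\sum_m b_m f(x_{n-m})$ and verify the order conditions \eqref{eq:condi}; the paper's proof consists of exactly that elimination plus the checks for $k=0,1$ (it neither verifies that the second-order condition fails nor discusses zero-stability). Your extra claim that the order-$2$ residual is a nonzero multiple of $(1-\beta)$ is correct: using $\sum_m m\, b_m = -\tfrac12$ for a base method of order $\geq 2$, the residual works out to exactly $1-\beta$, so the modified method is of order exactly one for $\beta\in(0,1)$.

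However, the ``main obstacle'' you identify is not real: the combined scheme \emph{is} zero-stable. The root condition concerns the characteristic roots $r$ obtained by substituting $x_n=r^n$; with the paper's backward-shift convention $Ex_k = x_{k-1}$ these are the solutions of $A(r^{-1})=0$, namely $r=1$ and $r=1-\beta$. Equivalently, in forward form the first characteristic polynomial is $\rho(\zeta)=\zeta^2-(2-\beta)\zeta+(1-\beta)=(\zeta-1)\bigl(\zeta-(1-\beta)\bigr)$, whose roots lie in the closed unit disk with the root on the unit circle simple for $\beta\in(0,1]$. The value $1/(1-\beta)$ you computed is a root of $A$ viewed as a polynomial in the backward-shift argument, i.e.\ the reciprocal of a characteristic root, so it does not violate the root condition. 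Consequently the standard ``consistency plus zero-stability implies convergence'' argument applies to the combined scheme directly, and your coupled $(x_n,v_n)$ detour, while it would also work, is unnecessary.
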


\begin{proof}
From the condition given in \ref{eq:condi}, it follows that $\sum^s_{m=0} b_m = 1$. 
We can rewrite these equations as:
\begin{align} \label{eq:sing_hb}
x_{n+1} - x_n - (1-\beta) (x_n - x_{n-1}) = \delta \beta \sum^s_{m=0} b_m f(x_{n-m}).
\end{align}
To estimate the order of the modified method, we evaluate Equation \ref{eq:sing_hb} and obtain:
\begin{align}
\sum^s_{m=0} a_m &= 1 - 1 - (1-\beta) (1-1) = 0, \\
\sum^s_{m=0} a_m\frac{m^1}{1!}+\sum^s_{m=0} b_m\frac{m^{0}}{0!} &= 0 - 1 - (1-\beta) (1-2) + \beta\sum^s_{m=0} b_m \nonumber \\
&= - \beta + \beta = 0.
\end{align}
Therefore, we have shown that the modification to the method has first-order convergence.
\end{proof}

Next, we turn our attention to the GHVB method.

\begin{theorem}[Convergence order of the GHVB method] \label{thm:ghvb}
The $r$\ts{th}-order GHVB (Algorithm \ref{algo:ghvb}) has order of convergence of $r$.
\end{theorem}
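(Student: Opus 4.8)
The plan is to reduce the claim to the order condition \eqref{eq:condi} applied to the linear multistep representation of the $r$\ts{th}-order GHVB update. The key observation is that GHVB was constructed precisely by multiplying the Adams--Bashforth series \eqref{eq:adam_bash} through by the operator $(\beta + (1-\beta)\Delta)$ and then truncating the right-hand side after $r$ terms in $\Delta$. So the first step is to write the $r$\ts{th}-order GHVB scheme in the single-equation form $A(E)x_n = \delta\,B(E)f(x_n)$ by eliminating the auxiliary velocity variable $v_{n+1}$: since $(\beta + (1-\beta)\Delta)v_{n+1} = \beta f(x_n)$, substituting into $\Delta x_{n+1} = \delta\,Q_r(\Delta)v_{n+1}$ (where $Q_r$ is the truncated polynomial) and multiplying by $(\beta + (1-\beta)\Delta)$ yields
\begin{align}
(\beta + (1-\beta)\Delta)\,\Delta x_{n+1} = \delta\,\beta\,Q_r(\Delta)\,f(x_n),
\end{align}
which is exactly the degree-$r$ truncation of \eqref{eq:adam_bata}. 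Rewriting $\Delta = 1 - E$ and shifting indices then gives explicit polynomials $A$ and $B$.

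The second step is to compute the local truncation error of this scheme. Here I would argue structurally rather than by brute Taylor expansion: the $r$\ts{th}-order AB method satisfies, for smooth $x$,
\begin{align}
\Delta x(\sigma+\delta) - \delta\,P_r(\Delta)x'(\sigma) = \mathcal{O}(\delta^{r+1}),
\end{align}
where $P_r$ is the degree-$r$ truncation of the AB series; this is the known order of AB$_r$. Applying the \emph{bounded} linear operator $(\beta + (1-\beta)\Delta)$ — which, acting on a smooth function, maps an $\mathcal{O}(\delta^{r+1})$ quantity to another $\mathcal{O}(\delta^{r+1})$ quantity, since each application of $\Delta$ to a smooth function costs one power of $\delta$ but also the operator has a nonzero constant term $\beta$ — preserves the error order. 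The only subtlety is that truncating $\beta\cdot(\text{AB series})\cdot$ after $r$ terms versus truncating the AB series after $r$ terms and then multiplying by $(\beta+(1-\beta)\Delta)$ differ by terms of order $\Delta^{r+1}$ and higher, hence contribute only $\mathcal{O}(\delta^{r+1})$; this needs to be checked but is routine bookkeeping on the operator series \eqref{eq:adam_bata}.

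The third step is to convert the local truncation error bound into the global order-of-convergence statement, which for linear multistep methods follows from the standard Dahlquist-type argument: a consistent scheme (order $\geq 1$) that is zero-stable converges at the order of its local truncation error. So I would verify zero-stability of the GHVB characteristic polynomial $A(E)$: its roots are $r=1$ (from the $\Delta$ factor) together with the root of $\beta + (1-\beta)(1-r^{-1}) = 0$, i.e.\ $r = 1-\beta$, which lies strictly inside the unit disk for $\beta \in (0,1]$ (and equals $0$ when $\beta=1$), plus the roots coming from $P_r$, which for Adams-type methods all sit at the origin. Hence the root condition holds and global order $r$ follows. For the GHVB $r=1$ case, the scheme coincides with Euler-with-HB, whose first-order convergence is the content of Theorem~\ref{thm:hb}, so that case is already covered.

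\textbf{Main obstacle.} The delicate point is the second step: carefully justifying that multiplying the truncated AB operator by $(\beta + (1-\beta)\Delta)$ and re-truncating does not destroy the $r$\ts{th}-order cancellation, i.e.\ that the coefficients of the resulting $A,B$ genuinely satisfy \eqref{eq:condi} up to $k=r$. I expect this is cleanest to handle by working entirely at the level of the formal power series in $\Delta$ (equivalently, in $\log E$): the AB series is characterized by the generating-function identity $\sum_m b_m E^{-m} \cdot(\text{something}) = $ the exact integration operator, and multiplying both that identity and its truncation by the polynomial $(\beta+(1-\beta)\Delta)$ shifts the order of the remainder by a controlled amount. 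Spelling out this generating-function argument — rather than balancing finitely many coefficients by hand for general $r$ — is where the real work lies, but it is conceptually straightforward once the operator formalism from \eqref{eq:adam_bash}--\eqref{eq:adam_bata} is in place.
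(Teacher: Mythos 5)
Your route is sound but genuinely different from the paper's. The paper proves the theorem by eliminating $v$ exactly as you do in your first step, and then simply verifying the order conditions \eqref{eq:condi} by explicit coefficient balancing for the representative case $r=2$ (checking $\sum_m a_m=0$ and the $k=1,2$ conditions term by term), remarking that other orders follow "in a similar fashion"; within the paper's framework "order" is \emph{defined} by \eqref{eq:condi}, so no zero-stability or Dahlquist step appears there at all. You instead propose a uniform-in-$r$ operator argument: multiply the exact AB series identity by $(\beta+(1-\beta)\Delta)$, observe that re-truncating after degree $r-1$ only discards terms of the form $\delta\,\mathcal{O}(\Delta^{\geq r})$ acting on the smooth function $x'$, hence $\mathcal{O}(\delta^{r+1})$, and then upgrade local order to global convergence via the root condition. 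This buys generality (no case-by-case coefficient check) and actually addresses the consistency-to-convergence gap the paper leaves implicit, and your root analysis is correct: the parasitic root $1-\beta$ lies strictly inside the unit disk for $\beta\in(0,1]$. Two caveats: the step you flag as "routine bookkeeping" is indeed the whole content of the theorem, and your proposal defers it rather than executes it — to close it you only need the one-line observation above that each dropped term is $\delta$ times at least $r$ backward differences of a smooth function; and the spurious characteristic roots at the origin do not "come from $P_r$" (which enters only the $B$ polynomial and is irrelevant to zero-stability) but from padding $A(E)$ to the full step number, a harmless misattribution. With those points fixed, your argument is a valid and somewhat stronger proof than the paper's worked example.
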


\begin{proof}
We will use the 2\ts{nd}-order method as an example. Using Equation \ref{2nd_bAdam}, we can write an equivalent equation as:
\begin{align}
x_{n+1} - x_n - (1-\beta) (x_n - x_{n-1}) = \delta \left(\frac{2+\beta}{2} f(x_{n}) - \frac{2-\beta}{2} f(x_{n-1})\right).
\end{align}
To estimate the order of the modified method, we evaluate Equation \ref{eq:sing_hb} and obtain:
\begin{align*}
\sum^s_{m=0}& a_m = 1 - 1 - (1-\beta) (1-1) = 0, \\
\sum^s_{m=0}& a_m\frac{m^1}{1!}+\sum^s_{m=0} b_m\frac{m^{0}}{0!} = 0 - 1 - (1-\beta) (1-2) + \left(\frac{2+\beta}{2} - \frac{2-\beta}{2}\right) = 0,\\
\sum^s_{m=0}& a_m\frac{m^2}{2!}+\sum^s_{m=0} b_m\frac{m^1}{1!} = \frac{1}{2}(0 - 1^2 - (1-\beta) (1^2-2^2)) + \left(\frac{2+\beta}{2}1^1 - \frac{2-\beta}{2}2^1\right) \\
&= \frac{2-3\beta}{2} - \frac{2-3\beta}{2} = 0.
\end{align*}
Thus, the method  has a convergence order of two. Methods of other orders can be dealt with in a similar fashion.
\end{proof}

\section{Qualitative Comparisons} \label{apx:sample}
Figure \ref{fig:highlighted_image} compares our momentum-based methods, HB and GHVB, with two different diffusion solver methods, DPM-Solver++ \cite{lu2022dpmpp} and PLMS4 \cite{liu2022pseudo}, without momentum. The number of sampling steps is held constant while varying the guidance scale $s$ to intentionally induce divergence artifacts. (Note that the guidance scales that yield such artifacts are different between diffusion models.) The figure demonstrates that, under the difficult settings of low step counts and high guidance scales where the baseline methods produce artifacts, our proposed techniques can successfully eliminate them. 


We present additional qualitative results to show the effect of the damping parameter $\beta$ on the quality of images generated by methods modified with HB momentum. We use methods of varying orders, including DPLM-Solver++\cite{lu2022dpmpp}, UniPC\cite{zhao2023unipc}, and PLMS4\cite{liu2022pseudo}. The diffusion models utilized in our analysis are Realistic Vision v2.0\footnote{\url{https://huggingface.co/SG161222/Realistic\textunderscore Vision\textunderscore V2.0}}, Anything Diffusion v4.0\footnote{\url{https://huggingface.co/andite/anything-v4.0}}, Counterfeit Diffusion V2.5\footnote{\url{https://huggingface.co/gsdf/Counterfeit-V2.5}}, Pastel-Mix\footnote{\url{https://huggingface.co/andite/pastel-mix}}, Deliberate Diffusion\footnote{\url{https://huggingface.co/XpucT/Deliberate}}, and Dreamlink Diffusion V1.0\footnote{\url{https://huggingface.co/dreamlike-art/dreamlike-diffusion-1.0}}. The results are shown in Figure \ref{fig:highlighted_image2}. Notice that stronger momentum (lower $\beta$) leads to fewer and less severe artifacts.

\begin{figure}
    \centering
    \subcaptionbox{DPM-Solver++(2M) \cite{lu2022dpmpp}, a 2\ts{nd}-order method, using 15 steps. Prompt:"a tiny cute bunny"}{
    \begin{tabu} to \textwidth {@{}l@{\hspace{5pt}}c@{\hspace{2pt}}c@{\hspace{6pt}}c@{\hspace{2pt}}c@{\hspace{6pt}}c@{\hspace{2pt}}c@{}}  
        
        
        \shortstack[c]{\tiny Without \\ \tiny Momentum} &
        \noindent\parbox[c]{0.12\columnwidth}{\includegraphics[width=0.12\columnwidth]{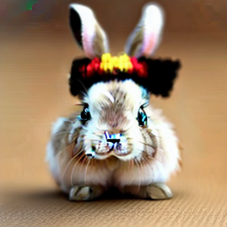}} & 
        \noindent\parbox[c]{0.12\columnwidth}{\includegraphics[width=0.12\columnwidth]{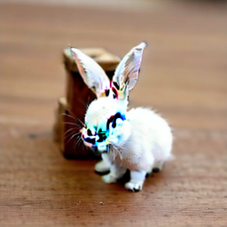}} &
        \noindent\parbox[c]{0.12\columnwidth}{\includegraphics[width=0.12\columnwidth]{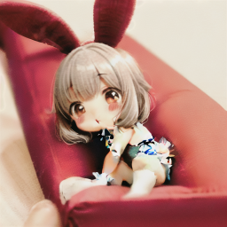}} &
        \noindent\parbox[c]{0.12\columnwidth}{\includegraphics[width=0.12\columnwidth]{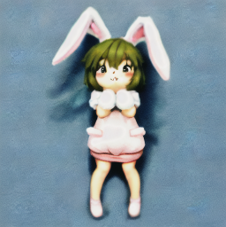}} &
        \noindent\parbox[c]{0.12\columnwidth}{\includegraphics[width=0.12\columnwidth]{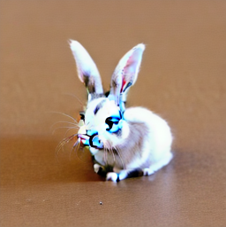}} &
        \noindent\parbox[c]{0.12\columnwidth}{\includegraphics[width=0.12\columnwidth]{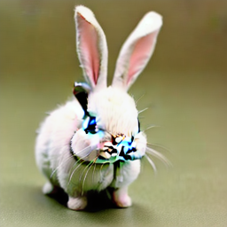}} \\

        \shortstack[c]{\tiny With \tiny HB 0.8 \\ \tiny (Ours)} &
        \noindent\parbox[c]{0.12\columnwidth}{\includegraphics[width=0.12\columnwidth]{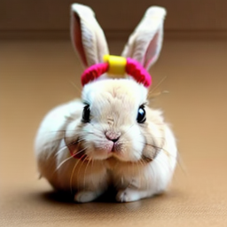}} & 
        \noindent\parbox[c]{0.12\columnwidth}{\includegraphics[width=0.12\columnwidth]{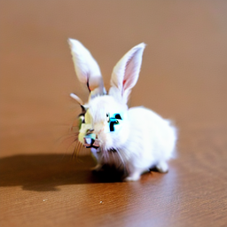}} &
        \noindent\parbox[c]{0.12\columnwidth}{\includegraphics[width=0.12\columnwidth]{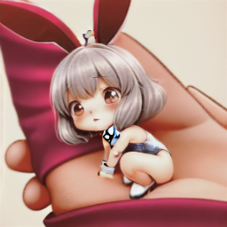}} &
        \noindent\parbox[c]{0.12\columnwidth}{\includegraphics[width=0.12\columnwidth]{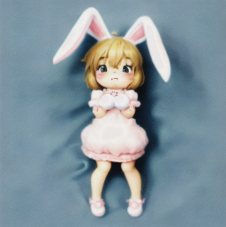}} &
        \noindent\parbox[c]{0.12\columnwidth}{\includegraphics[width=0.12\columnwidth]{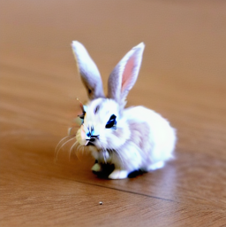}} &
        \noindent\parbox[c]{0.12\columnwidth}{\includegraphics[width=0.12\columnwidth]{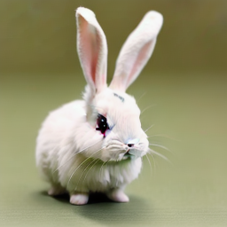}} \\

        \shortstack[c]{\tiny With \tiny HB 0.5 \\ \tiny (Ours)} &
        \noindent\parbox[c]{0.12\columnwidth}{\includegraphics[width=0.12\columnwidth]{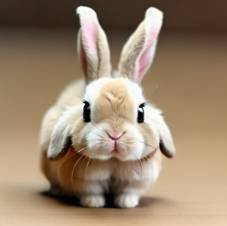}} & 
        \noindent\parbox[c]{0.12\columnwidth}{\includegraphics[width=0.12\columnwidth]{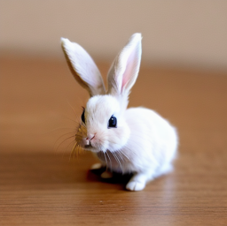}} &
        \noindent\parbox[c]{0.12\columnwidth}{\includegraphics[width=0.12\columnwidth]{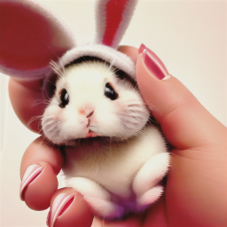}} &
        \noindent\parbox[c]{0.12\columnwidth}{\includegraphics[width=0.12\columnwidth]{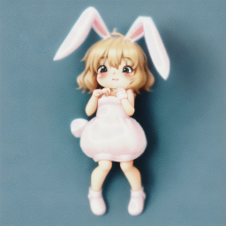}} &
        \noindent\parbox[c]{0.12\columnwidth}{\includegraphics[width=0.12\columnwidth]{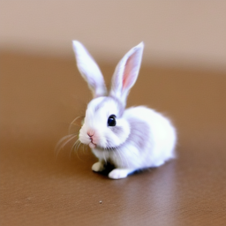}} &
        \noindent\parbox[c]{0.12\columnwidth}{\includegraphics[width=0.12\columnwidth]{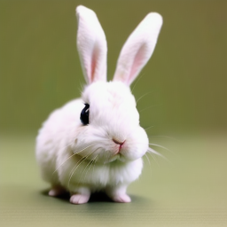}} \\
        
        & \multicolumn{2}{c}{\tiny Realistic Vision V2.0. ($s = 16$)} & \multicolumn{2}{c}{\tiny Anything V4.0 ($s=20$)} & \multicolumn{2}{c}{\tiny Deliberate ($s=16$)}
        
    \end{tabu}
    }
    \subcaptionbox{UniPC \cite{zhao2023unipc} method, a 3\ts{rd}-order method, using 8 steps. Prompt: "a cute kitty "}{
    \begin{tabu} to \textwidth {@{}l@{\hspace{5pt}}c@{\hspace{2pt}}c@{\hspace{6pt}}c@{\hspace{2pt}}c@{\hspace{6pt}}c@{\hspace{2pt}}c@{}}

        \shortstack[c]{\tiny Without \\ \tiny Momentum} &
        \noindent\parbox[c]{0.12\columnwidth}{\includegraphics[width=0.12\columnwidth]{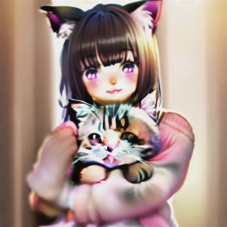}} & 
        \noindent\parbox[c]{0.12\columnwidth}{\includegraphics[width=0.12\columnwidth]{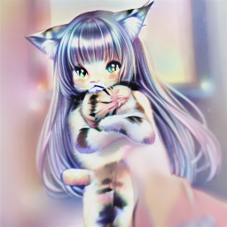}} &
        \noindent\parbox[c]{0.12\columnwidth}{\includegraphics[width=0.12\columnwidth]{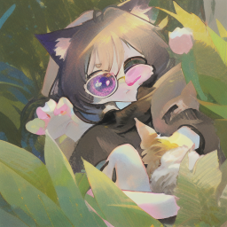}} &
        \noindent\parbox[c]{0.12\columnwidth}{\includegraphics[width=0.12\columnwidth]{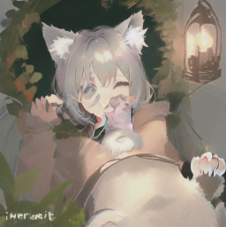}} &
        \noindent\parbox[c]{0.12\columnwidth}{\includegraphics[width=0.12\columnwidth]{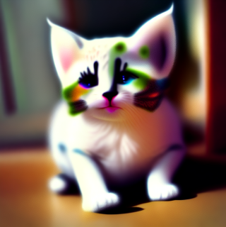}} &
        \noindent\parbox[c]{0.12\columnwidth}{\includegraphics[width=0.12\columnwidth]{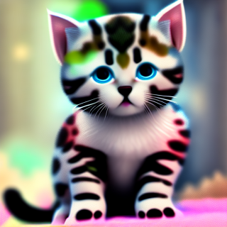}} \\

        \shortstack[c]{\tiny With \tiny HB 0.8 \\ \tiny (Ours)} &
        \noindent\parbox[c]{0.12\columnwidth}{\includegraphics[width=0.12\columnwidth]{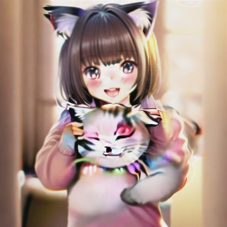}} & 
        \noindent\parbox[c]{0.12\columnwidth}{\includegraphics[width=0.12\columnwidth]{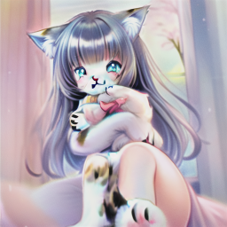}} &
        \noindent\parbox[c]{0.12\columnwidth}{\includegraphics[width=0.12\columnwidth]{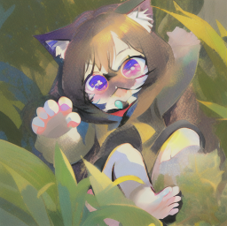}} &
        \noindent\parbox[c]{0.12\columnwidth}{\includegraphics[width=0.12\columnwidth]{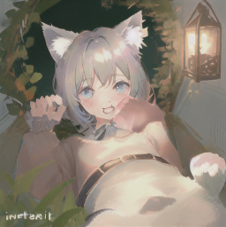}} &
        \noindent\parbox[c]{0.12\columnwidth}{\includegraphics[width=0.12\columnwidth]{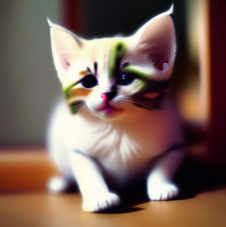}} &
        \noindent\parbox[c]{0.12\columnwidth}{\includegraphics[width=0.12\columnwidth]{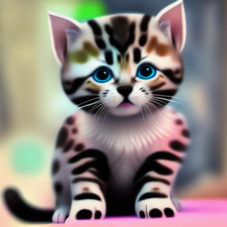}} \\

        \shortstack[c]{\tiny With \tiny HB 0.5 \\ \tiny (Ours)} &
        \noindent\parbox[c]{0.12\columnwidth}{\includegraphics[width=0.12\columnwidth]{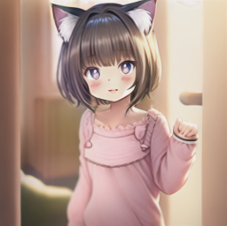}} & 
        \noindent\parbox[c]{0.12\columnwidth}{\includegraphics[width=0.12\columnwidth]{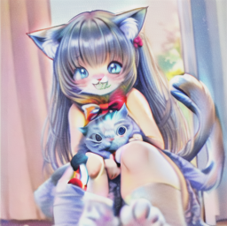}} &
        \noindent\parbox[c]{0.12\columnwidth}{\includegraphics[width=0.12\columnwidth]{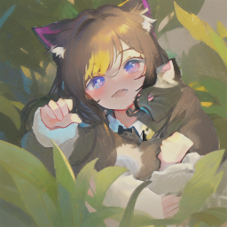}} &
        \noindent\parbox[c]{0.12\columnwidth}{\includegraphics[width=0.12\columnwidth]{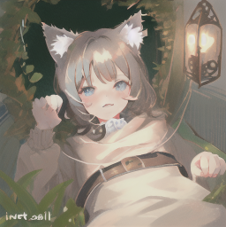}} &
        \noindent\parbox[c]{0.12\columnwidth}{\includegraphics[width=0.12\columnwidth]{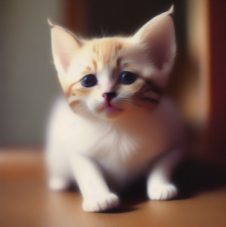}} &
        \noindent\parbox[c]{0.12\columnwidth}{\includegraphics[width=0.12\columnwidth]{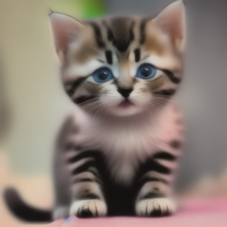}} \\
        
         & \multicolumn{2}{c}{\tiny Anything V4.0 ($s = 18$)} & \multicolumn{2}{c}{\tiny Pastel-mix V4.0 ($s=14$)} & \multicolumn{2}{c}{\tiny Dreamlike V1.0 ($s=10$)}
    \end{tabu}
    }
    \subcaptionbox{PLMS4 \cite{liu2022pseudo}, a 4\ts{th}-order method, using 15 steps, Prompt: "cute humanoid red panda"}{
    \begin{tabu} to \textwidth {@{}l@{\hspace{5pt}}c@{\hspace{2pt}}c@{\hspace{6pt}}c@{\hspace{2pt}}c@{\hspace{6pt}}c@{\hspace{2pt}}c@{}}

        \shortstack[c]{\tiny Without \\ \tiny Momentum} &
        \noindent\parbox[c]{0.12\columnwidth}{\includegraphics[width=0.12\columnwidth]{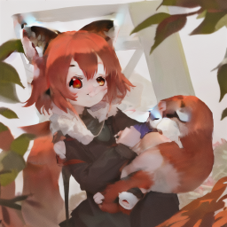}} & 
        \noindent\parbox[c]{0.12\columnwidth}{\includegraphics[width=0.12\columnwidth]{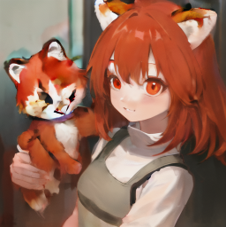}} &
        \noindent\parbox[c]{0.12\columnwidth}{\includegraphics[width=0.12\columnwidth]{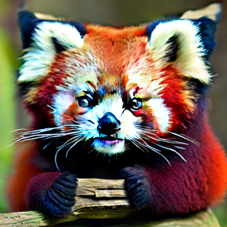}} &
        \noindent\parbox[c]{0.12\columnwidth}{\includegraphics[width=0.12\columnwidth]{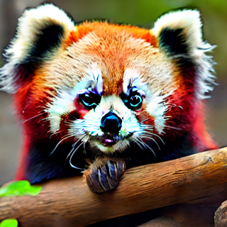}} &
        \noindent\parbox[c]{0.12\columnwidth}{\includegraphics[width=0.12\columnwidth]{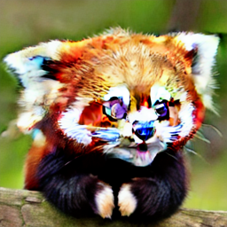}} &
        \noindent\parbox[c]{0.12\columnwidth}{\includegraphics[width=0.12\columnwidth]{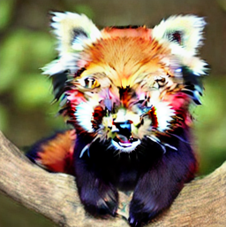}} \\

        \shortstack[c]{\tiny With \tiny HB 0.8 \\ \tiny (Ours)} &
        \noindent\parbox[c]{0.12\columnwidth}{\includegraphics[width=0.12\columnwidth]{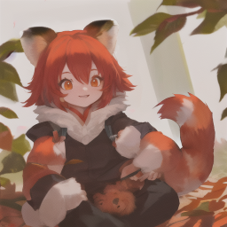}} & 
        \noindent\parbox[c]{0.12\columnwidth}{\includegraphics[width=0.12\columnwidth]{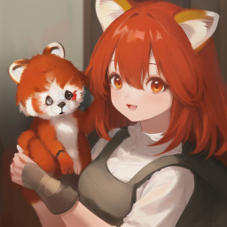}} &
        \noindent\parbox[c]{0.12\columnwidth}{\includegraphics[width=0.12\columnwidth]{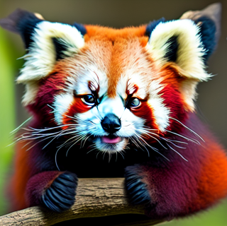}} &
        \noindent\parbox[c]{0.12\columnwidth}{\includegraphics[width=0.12\columnwidth]{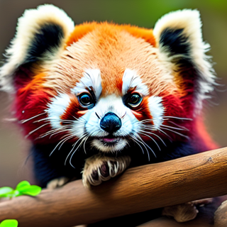}} &
        \noindent\parbox[c]{0.12\columnwidth}{\includegraphics[width=0.12\columnwidth]{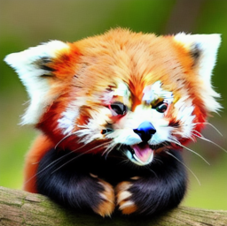}} &
        \noindent\parbox[c]{0.12\columnwidth}{\includegraphics[width=0.12\columnwidth]{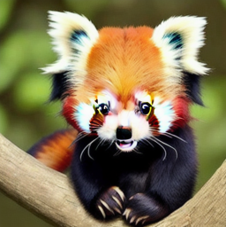}} \\

        \shortstack[c]{\tiny With \tiny HB 0.5 \\ \tiny (Ours)} &
        \noindent\parbox[c]{0.12\columnwidth}{\includegraphics[width=0.12\columnwidth]{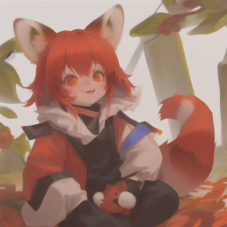}} & 
        \noindent\parbox[c]{0.12\columnwidth}{\includegraphics[width=0.12\columnwidth]{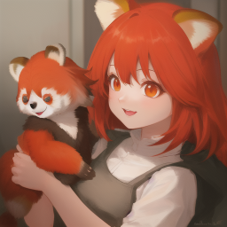}} &
        \noindent\parbox[c]{0.12\columnwidth}{\includegraphics[width=0.12\columnwidth]{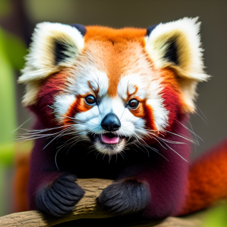}} &
        \noindent\parbox[c]{0.12\columnwidth}{\includegraphics[width=0.12\columnwidth]{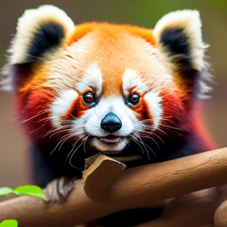}} &
        \noindent\parbox[c]{0.12\columnwidth}{\includegraphics[width=0.12\columnwidth]{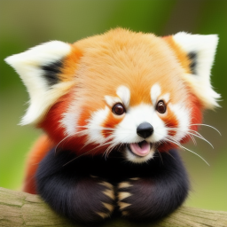}} &
        \noindent\parbox[c]{0.12\columnwidth}{\includegraphics[width=0.12\columnwidth]{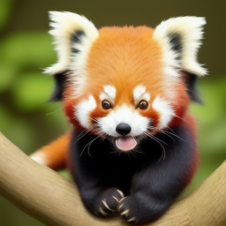}} \\
        
        & \multicolumn{2}{c}{\tiny Counterfeit V2.5 ($s = 7$)} & \multicolumn{2}{c}{\tiny Dreamlike V1.0 ($s=7$)} & \multicolumn{2}{c}{\tiny Deliberate ($s = 7$)}
    \end{tabu}
    }
    \caption{The impact of different damping coefficients $\beta$ on HB momentum for 2\ts{nd}-order (a), 3\ts{rd}-order (b), and 4\ts{th}-order (c) numerical methods. Notably, it is observed that incorporating higher momentum values (lower $\beta$) helps mitigate the occurrence of divergence artifacts.}
    \label{fig:highlighted_image2}
\end{figure}

\section{Experimental Details and Results of ADM} \label{apx:adm}

In this section, we present additional details and results for the ADM experiment in Section \ref{sec:exp_adm}. The primary objective of this experiment was to provide a quantitative evaluation of class-conditional diffusion sampling in the context of pixel-based images. The experiment was conducted using the pre-trained diffusion and classifier model at the following link: \footnote{\url{https://github.com/openai/guided-diffusion}}. The implementation used in our experiment was obtained directly from the official DPM-Solver GitHub repository \footnote{\url{https://github.com/LuChengTHU/dpm-solver}}.

To enhance the capabilities of DPM-Solver++, we incorporated HB momentum into DPM-Solver++ (just change a few lines of code) and implemented the splitting method LTSP both with and without HB momentum into the DPM-Solver code for comparative purposes. The experiment was done on four NVIDIA RTX A4000 GPUs and a 24-core AMD Threadripper 3960x CPU.

The results of the experiment, as measured by the full FID score, are presented in Table \ref{tab:results1} (as well as in Figrue \ref{fig:fid_adm}). Our technique is highlighted in grey within the table, while the best FID score for each number of steps is indicated in bold. Additionally, Figure \ref{fig:img_adm} showcases sample images from this experiment. 
As this experiment uses pixel-based diffusion models, the observed divergence artifacts differ from those in latent-based diffusion models. Specifically, the artifacts may display excessive brightness or darkness caused by pixel values nearing the maximum or minimum thresholds.


\begin{table}[ht] 
\centering 
\begin{tabular}{lrrrrrrrr} 
\toprule
&\multicolumn{4}{c}{Number of Steps} \\
  Method & 10 & 12 & 14 & 16 & 18 & 20 & 25 & 30 \\
\midrule
\whitebox{DPM-Solver++} & 66.77 & 46.77 & 34.56 & 26.97 & 21.87 & 19.48 & 16.31 & 15.63 \\
\graybox{DPM-Solver++ w/ HB 0.8} & 47.10 & 33.65 & 25.61 & 21.42 & 18.94 & 17.76 & 15.98 & 15.53 \\
\hline
\whitebox{LTSP [PLMS4, PLMS1]}  & 45.32 & 34.08 & 26.58 & 21.54 & 18.54 & 17.15 & \textbf{15.79} & \textbf{15.51} \\
\graybox{LTSP [PLMS4, GHVB 0.8]}  & \textbf{37.43} &\textbf{ 29.74} & \textbf{23.60} & \textbf{20.23} & \textbf{18.46} & \textbf{17.14} & 16.07 & 15.79 \\
\bottomrule
\end{tabular}
\vspace{0.1cm}
\caption{FID scores on classifier-guidance ADM models}
\label{tab:results1}
\end{table}

\begin{figure}[ht]
    \centering
    \begin{tabular}{cccc}
        DPM-Solver++\cite{lu2022dpmpp} & DPM-Solver++ & LTSP \cite{wizadwongsa2023accelerating} & LTSP\\
                     & with HB 0.8  & [PLMS4, PLMS1] & [PLMS4, GHVB 0.8]\\
                     &(Ours)  &       & (Ours))\\
        \includegraphics[width=0.20\linewidth]{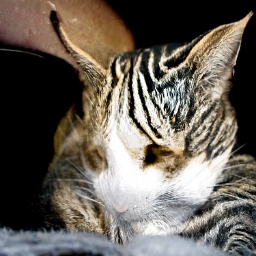} &
        \includegraphics[width=0.20\linewidth]{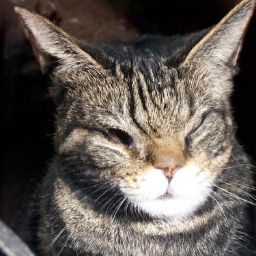} &
        \includegraphics[width=0.20\linewidth]{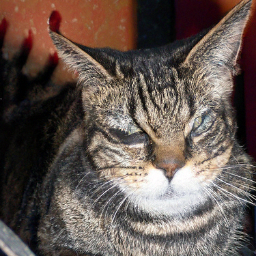} &
        \includegraphics[width=0.20\linewidth]{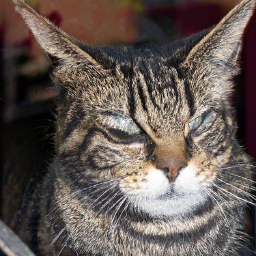} \\
        \includegraphics[width=0.20\linewidth]{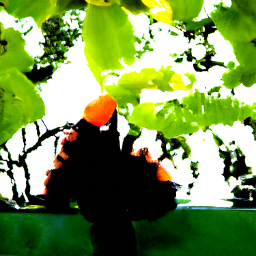} &
        \includegraphics[width=0.20\linewidth]{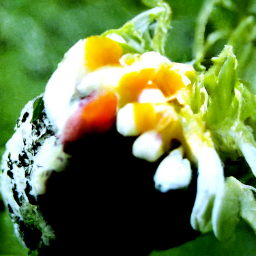} &
        \includegraphics[width=0.20\linewidth]{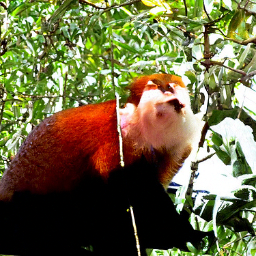} &
        \includegraphics[width=0.20\linewidth]{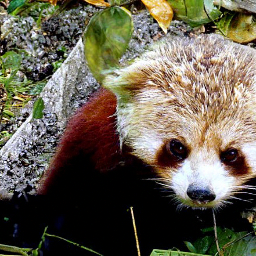} \\
        \includegraphics[width=0.20\linewidth]{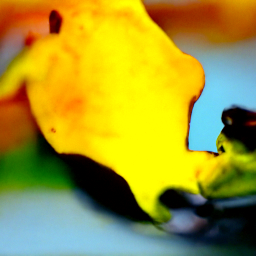} &
        \includegraphics[width=0.20\linewidth]{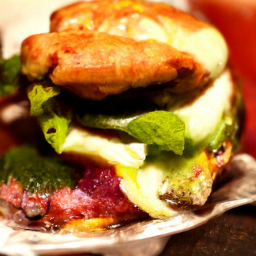} &
        \includegraphics[width=0.20\linewidth]{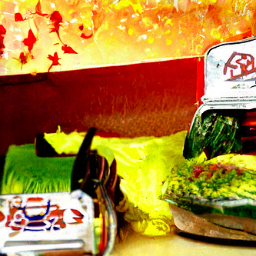} &
        \includegraphics[width=0.20\linewidth]{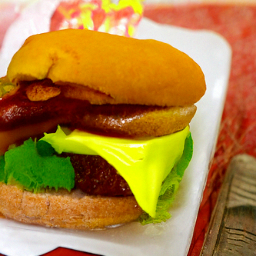} \\
        \includegraphics[width=0.20\linewidth]{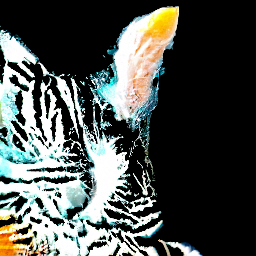} &
        \includegraphics[width=0.20\linewidth]{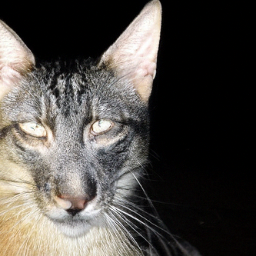} &
        \includegraphics[width=0.20\linewidth]{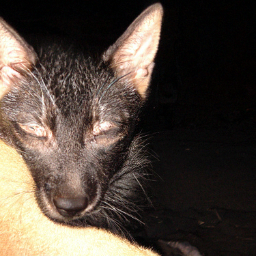} &
        \includegraphics[width=0.20\linewidth]{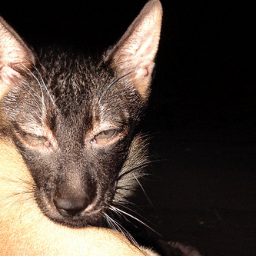} \\
        \includegraphics[width=0.20\linewidth]{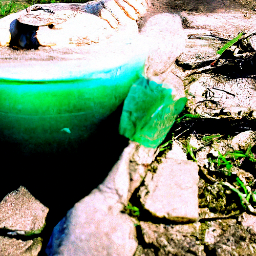} &
        \includegraphics[width=0.20\linewidth]{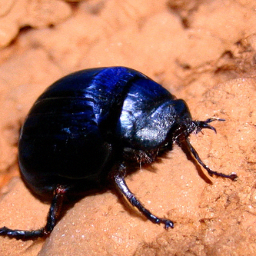} &
        \includegraphics[width=0.20\linewidth]{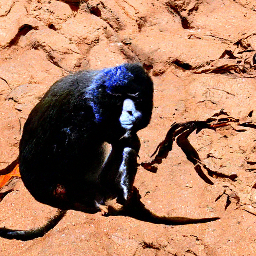} &
        \includegraphics[width=0.20\linewidth]{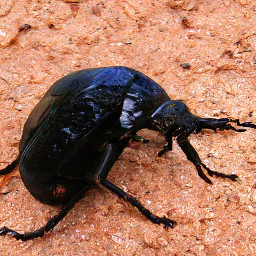} \\
        \includegraphics[width=0.20\linewidth]{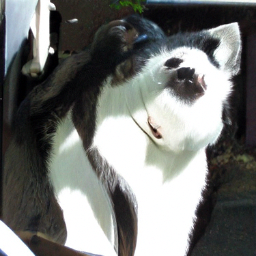} &
        \includegraphics[width=0.20\linewidth]{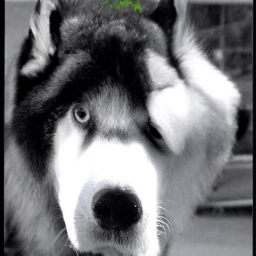} &
        \includegraphics[width=0.20\linewidth]{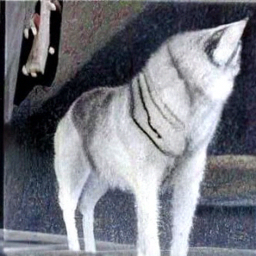} &
        \includegraphics[width=0.20\linewidth]{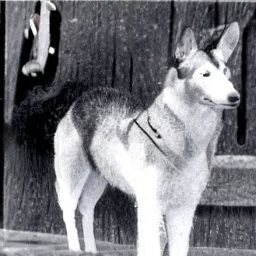} \\
        \includegraphics[width=0.20\linewidth]{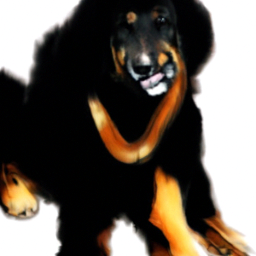} &
        \includegraphics[width=0.20\linewidth]{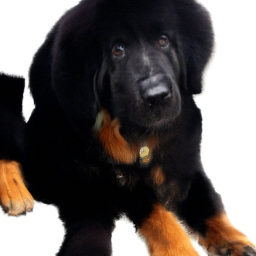} &
        \includegraphics[width=0.20\linewidth]{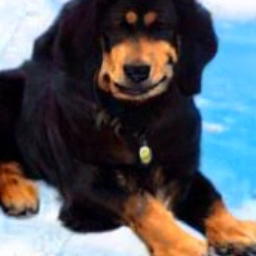} &
        \includegraphics[width=0.20\linewidth]{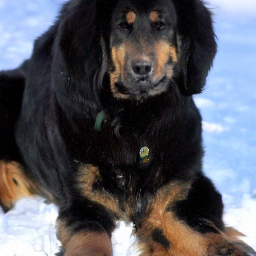} \\
    \end{tabular}
    \caption{Samples from various sampling methods employing classifier guidance diffusion with a guidance scale of 10 and 20 sampling steps.}
    \label{fig:img_adm}
\end{figure}

\section{Experimental Details and Results of DiT} \label{apx:dit}

This section presents supplementary details of the DiT experiment conducted in Figure \ref{fig:fid_dit} of Section \ref{sec:exp_dit} and further investigates the performance of the DiT model \cite{peebles2022scalable}. The code implementation and pre-trained DiT model were obtained directly from the official GitHub repository\footnote{\url{https://github.com/facebookresearch/DiT}}. The experiment was done on four NVIDIA GeForce RTX 2080 Ti GPUs and a 24-core AMD Threadripper 3960x CPU.

Our baselines include DDIM \cite{song2020denoising}, DPM-Solver++ \cite{lu2022dpmpp}, LTSP4 \cite{wizadwongsa2023accelerating}, and PLMS4 \cite{liu2022pseudo}. Based on the FID score, PLMS4 emerged as the most effective sampling method within the chosen context. As a result, only PLMS4 was included in Section \ref{sec:exp_dit} of our main paper. Our variations of the PLMS4 with HB 0.8 and 0.9, as well as GHVB 3.8 and 3.9. The results of the experiment are presented in Table \ref{tab:results}. Our technique is highlighted in grey in the table, and the best FID score for each number of steps is in bold. Notably, our method outperforms the others.

Moreover, we include the improved Precision and Recall metrics \cite{kynkaanniemi2019improved} in Tables \ref{tab:precision_dit} and \ref{tab:recall_dit}, respectively, where higher values indicate superior performance. Additionally, Figure \ref{fig:img_dit} displays sample images generated from different sampling methods.

\begin{table}[ht]
\centering
\begin{tabular}{lrrrrrrrr}
\toprule
&\multicolumn{8}{c}{Number of Steps}\\
Method & 6 & 7 & 8 & 9 & 10 & 15 & 20 & 25 \\
\midrule
\whitebox{DDIM} & 55.35 & 36.97 & 26.06 & 19.47 & 15.02 & 8.04 & 6.52 & 5.94 \\
\whitebox{DPM-Solver++}       & 18.60 & 10.80 & 7.93 & 6.72 & 6.13 & 5.49 & 5.30 & 5.24 \\
\whitebox{LTSP4 [PLMS4, PLMS1]} & 13.33 & 9.01  & 7.49 & 6.55 & 6.09 & 5.32 & 5.20 & \textbf{5.17} \\
\whitebox{PLMS4} & 13.10 & 8.94 & 7.31 & 6.51 & 6.03 & 5.32 & 5.21 & \textbf{5.17} \\
\hline
\graybox{PLMS4 w/ HB 0.8}
& 14.35 & 9.25 & 7.46 & 6.68 & 6.19 & 5.47 & 5.29 & 5.24 \\
\graybox{PLMS4 w/ HB 0.9} & 11.66 & 8.16 & 6.69 & 6.21 &\textbf{ 5.78} & \textbf{5.29} & \textbf{5.19} &\textbf{5.17} \\
\graybox{GHVB 3.8} &\textbf{ 10.99} & \textbf{7.93} & \textbf{6.63} & \textbf{6.19} & 5.80 & 5.31 & 5.22 & 5.18 \\
\graybox{GHVB 3.9}  & 11.67 & 8.29 & 6.83 & 6.30 & 5.87 & 5.31 & 5.22 & 5.18 \\
\bottomrule
\end{tabular}
\caption{FID scores on DiT-XL}
\label{tab:results}
\end{table}

\begin{table}[ht]
  \begin{minipage}{0.45\textwidth}
    \centering
    \begin{tabular}{lcccc}
      \toprule
      & \multicolumn{4}{c}{Number of Steps} \\
      Method & 6 & 8 & 10 & 20 \\
      \midrule
      \whitebox{DDIM} & 0.36 & 0.56 & 0.67 & 0.79 \\
      \whitebox{DPM-Solver++}  &    0.63 & 0.75 & 0.79 & \textbf{0.81} \\
      \whitebox{LTSP4} & 0.67 & 0.74 & 0.78 & \textbf{0.81} \\
      \whitebox{PLMS4} & 0.68 & 0.75 & 0.78 & \textbf{0.81} \\
\hline
      \graybox{PLMS4 w/ HB 0.8} & 0.68 & \textbf{0.77} & \textbf{0.79} & \textbf{0.81} \\
      \graybox{PLMS4 w/ HB 0.9} & 0.70 & \textbf{0.77} & \textbf{0.79} & \textbf{0.81} \\
      \graybox{GHVB3.8} & \textbf{0.71} & \textbf{0.77} & \textbf{0.79} & \textbf{0.81} \\
      \graybox{GHVB3.9} & 0.70 & 0.76 & 0.78 & \textbf{0.81} \\
      \bottomrule
    \end{tabular}
    \caption{Precision on DiT-XL} \label{tab:precision_dit}
  \end{minipage}\hfill
  \begin{minipage}{0.45\textwidth}
    \centering
    \begin{tabular}{lcccc}
      \toprule
      & \multicolumn{4}{c}{Number of Steps} \\
      Method & 6 & 8 & 10 & 20 \\
      \midrule
      \whitebox{DDIM} & 0.60 & 0.64 & 0.65 & 0.67 \\
      \whitebox{DPM-Solver++}    &   0.67 & 0.68 & 0.68 & \textbf{0.68} \\
      \whitebox{LTSP4} & \textbf{0.70} & \textbf{0.70} & \textbf{0.69} & \textbf{0.68} \\
      \whitebox{PLMS4} & \textbf{0.70} & \textbf{0.70} & \textbf{0.69} & \textbf{0.68} \\
\hline
      \graybox{PLMS4 w/ HB 0.8} & 0.68 & 0.68 & 0.67 & \textbf{0.68} \\
      \graybox{PLMS4 w/ HB 0.9} & 0.69 & 0.69 & 0.67 & \textbf{0.68} \\
      \graybox{GHVB3.8} & 0.69 & \textbf{0.70} & 0.68 & \textbf{0.68} \\
      \graybox{GHVB3.9} & \textbf{0.70} & \textbf{0.70} & \textbf{0.69} & \textbf{0.68} \\
      \bottomrule
    \end{tabular}
    \caption{Recall on DiT-XL} \label{tab:recall_dit}
  \end{minipage}
\end{table}

\tabulinesep=1pt
\begin{figure}
    \centering
    \begin{tabu} to \textwidth {@{}l@{\hspace{5pt}}c@{\hspace{2pt}}c@{\hspace{2pt}}c@{\hspace{2pt}}c@{\hspace{2pt}}c@{}}

        \multicolumn{1}{l}{\shortstack[l]{\scriptsize Number of steps}}
        & \multicolumn{1}{c}{\shortstack{\scriptsize 6}}
        & \multicolumn{1}{c}{\shortstack{\scriptsize 8}}
        & \multicolumn{1}{c}{\shortstack{\scriptsize 10}}
        & \multicolumn{1}{c}{\shortstack{\scriptsize 15}}
        & \multicolumn{1}{c}{\shortstack{\scriptsize 25}}
        \\
        
        \shortstack[l]{\tiny PLMS4 \cite{liu2022pseudo}} &
        \noindent\parbox[c]{0.16\columnwidth}{\includegraphics[width=0.16\textwidth]{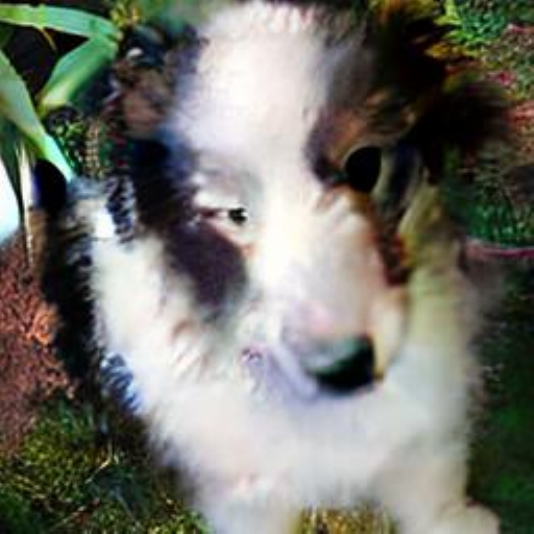}} & 
        \noindent\parbox[c]{0.16\columnwidth}{\includegraphics[width=0.16\textwidth]{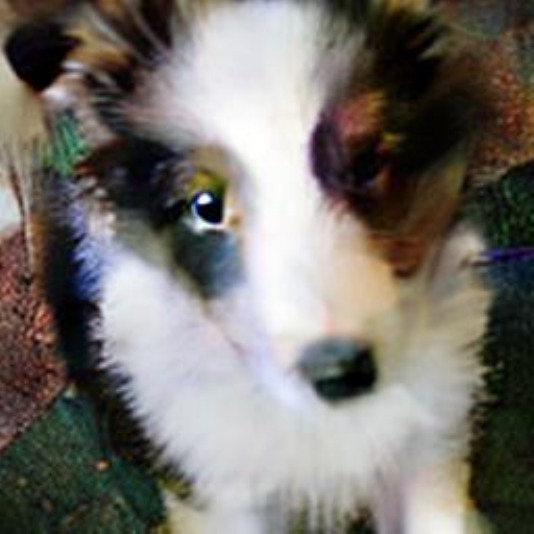}} & 
        \noindent\parbox[c]{0.16\columnwidth}{\includegraphics[width=0.16\textwidth]{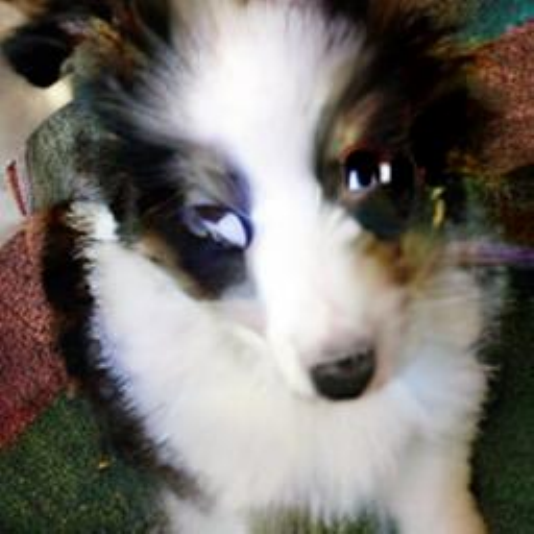}} & 
        \noindent\parbox[c]{0.16\columnwidth}{\includegraphics[width=0.16\textwidth]{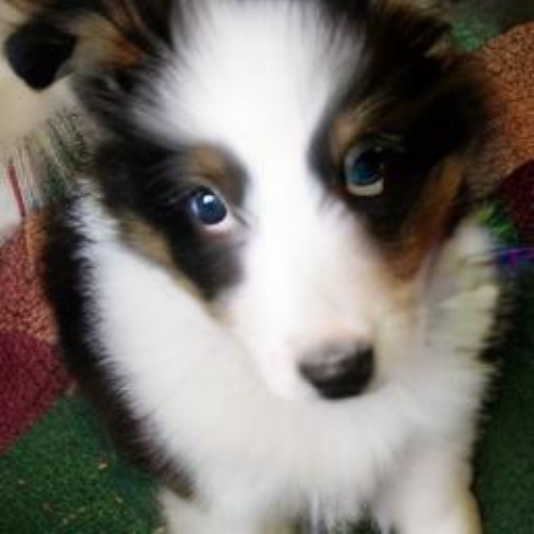}} & 
        \noindent\parbox[c]{0.16\columnwidth}{\includegraphics[width=0.16\textwidth]{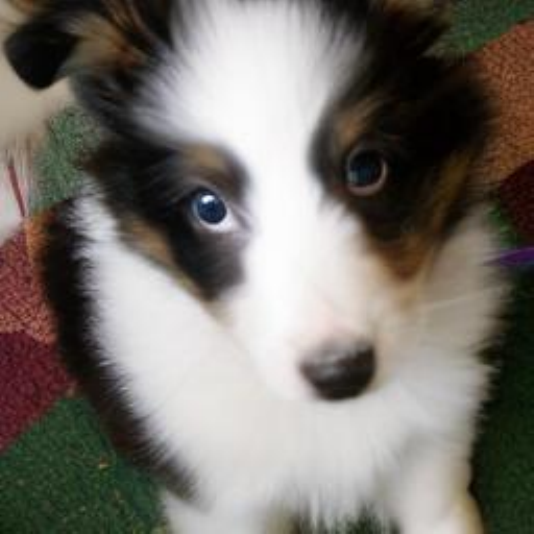}} \\

        \shortstack[l]{\tiny PLMS4 w/ HB 0.9} &
        \noindent\parbox[c]{0.16\columnwidth}{\includegraphics[width=0.16\textwidth]{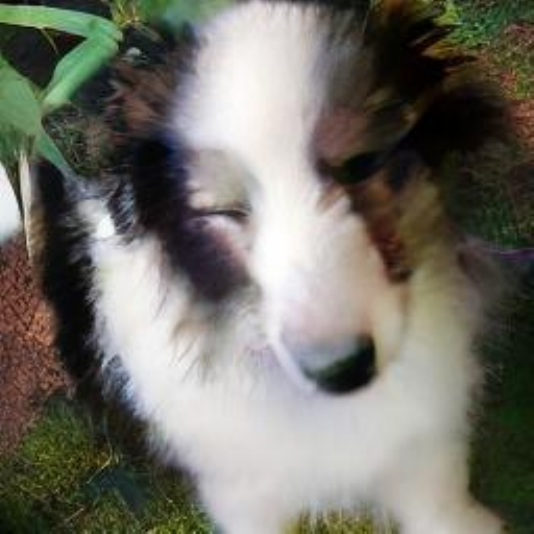}} & 
        \noindent\parbox[c]{0.16\columnwidth}{\includegraphics[width=0.16\textwidth]{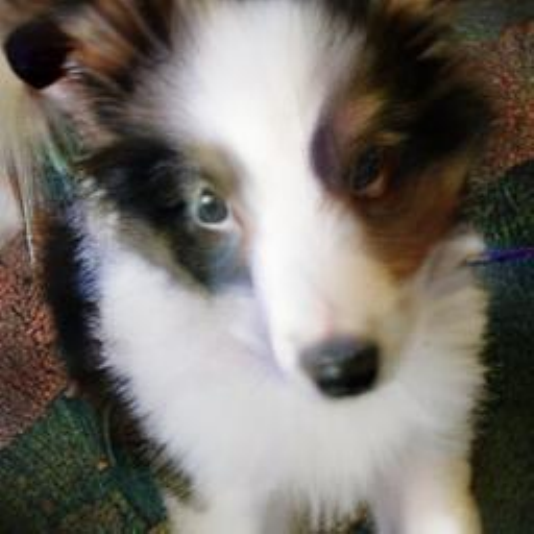}} & 
        \noindent\parbox[c]{0.16\columnwidth}{\includegraphics[width=0.16\textwidth]{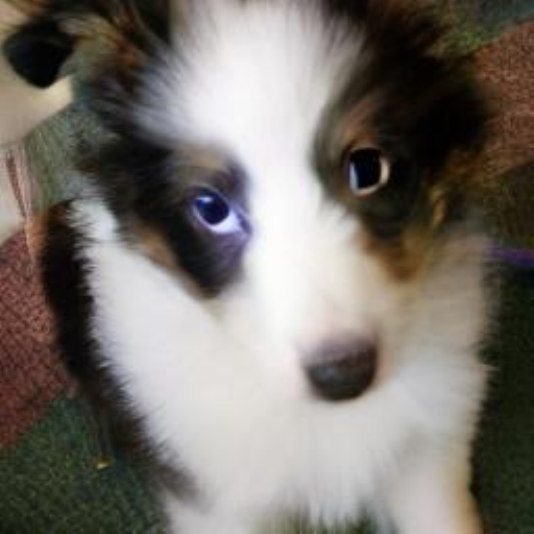}} & 
        \noindent\parbox[c]{0.16\columnwidth}{\includegraphics[width=0.16\textwidth]{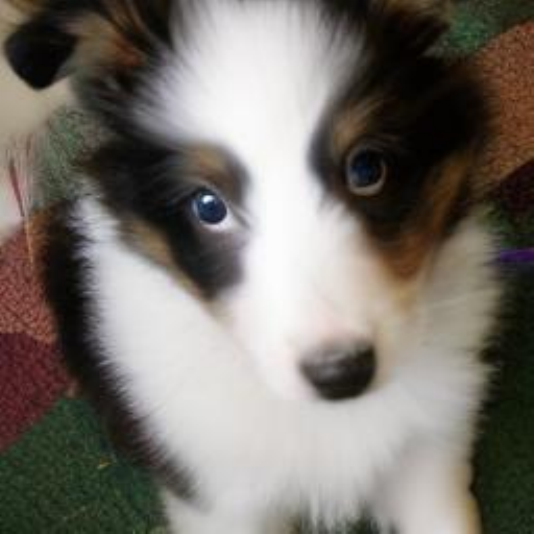}} & 
        \noindent\parbox[c]{0.16\columnwidth}{\includegraphics[width=0.16\textwidth]{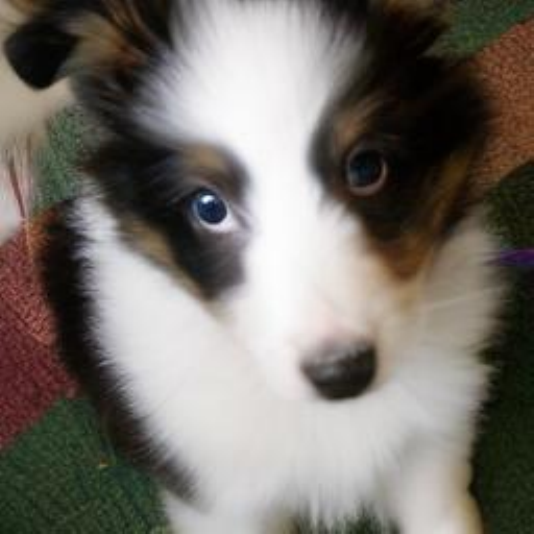}} \\

        \shortstack[l]{\tiny GHVB3.8} &
        \noindent\parbox[c]{0.16\columnwidth}{\includegraphics[width=0.16\textwidth]{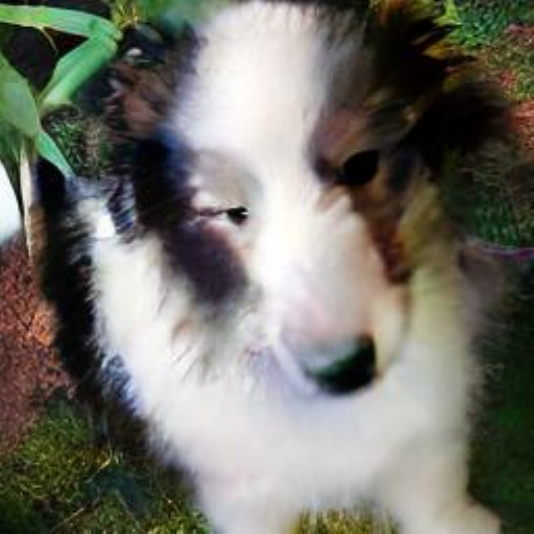}} & 
        \noindent\parbox[c]{0.16\columnwidth}{\includegraphics[width=0.16\textwidth]{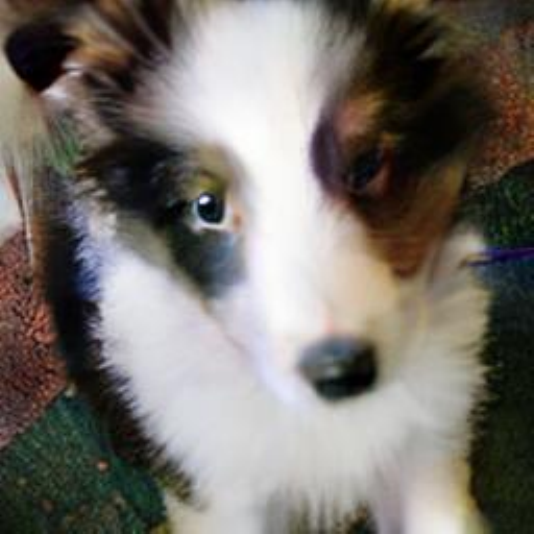}} & 
        \noindent\parbox[c]{0.16\columnwidth}{\includegraphics[width=0.16\textwidth]{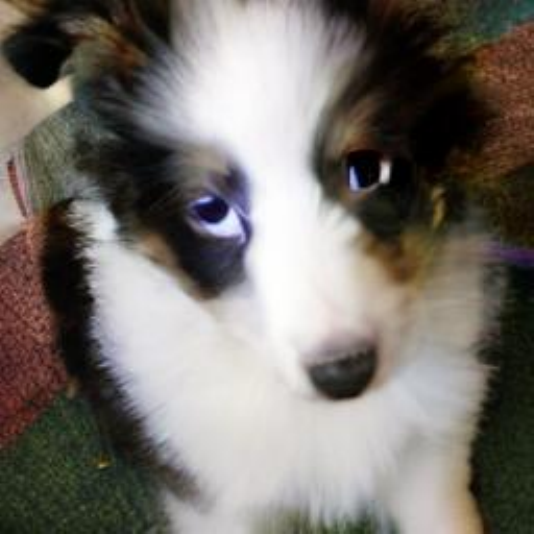}} & 
        \noindent\parbox[c]{0.16\columnwidth}{\includegraphics[width=0.16\textwidth]{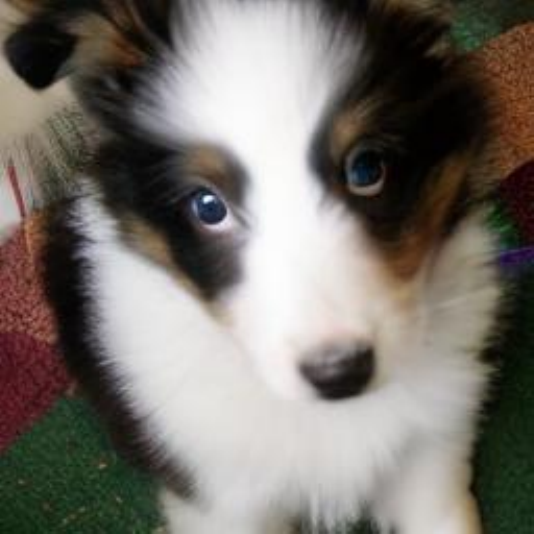}} & 
        \noindent\parbox[c]{0.16\columnwidth}{\includegraphics[width=0.16\textwidth]{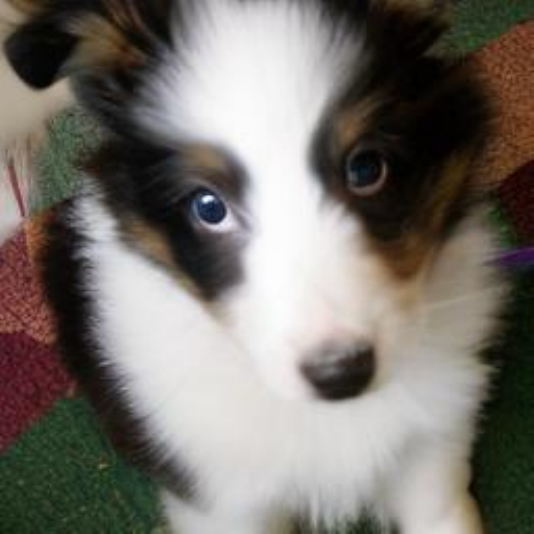}} \\
        
    \end{tabu}

    \medskip

    \begin{tabu} to \textwidth {@{}l@{\hspace{5pt}}c@{\hspace{2pt}}c@{\hspace{2pt}}c@{\hspace{2pt}}c@{\hspace{2pt}}c@{}}
        
        \shortstack[l]{\tiny PLMS4 \cite{liu2022pseudo}} &
        \noindent\parbox[c]{0.16\columnwidth}{\includegraphics[width=0.16\textwidth]{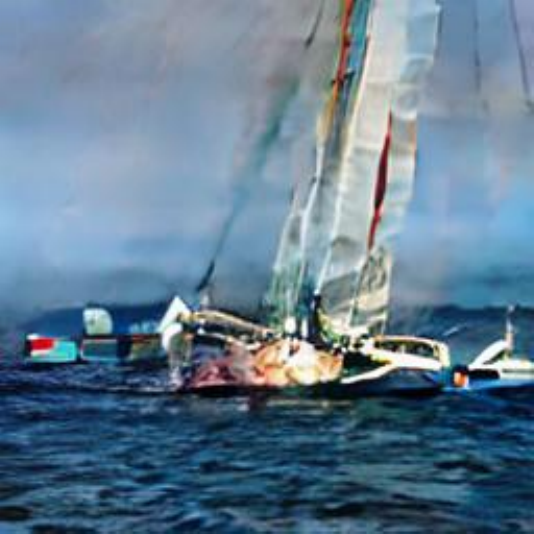}} & 
        \noindent\parbox[c]{0.16\columnwidth}{\includegraphics[width=0.16\textwidth]{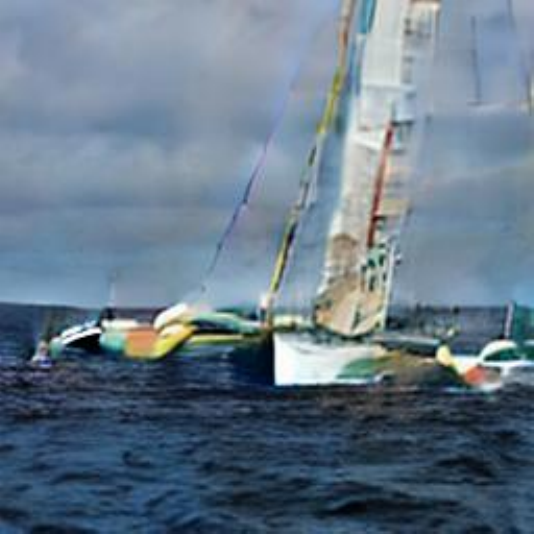}} & 
        \noindent\parbox[c]{0.16\columnwidth}{\includegraphics[width=0.16\textwidth]{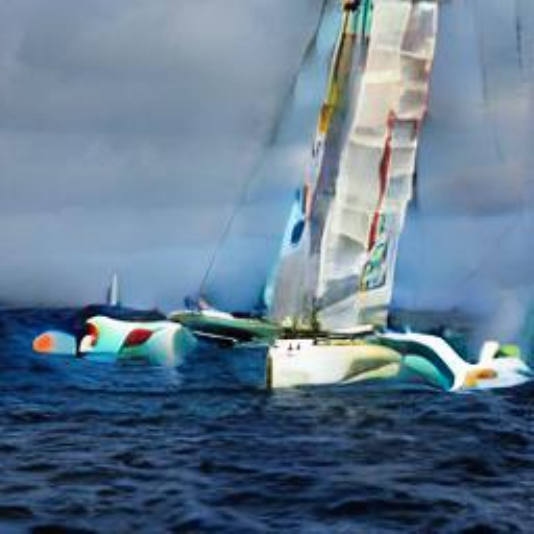}} & 
        \noindent\parbox[c]{0.16\columnwidth}{\includegraphics[width=0.16\textwidth]{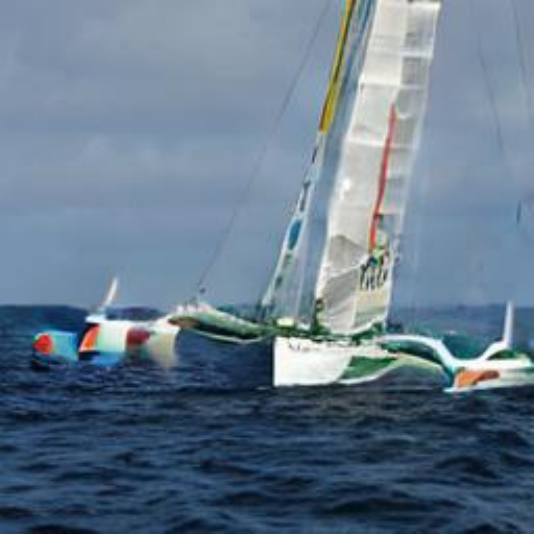}} & 
        \noindent\parbox[c]{0.16\columnwidth}{\includegraphics[width=0.16\textwidth]{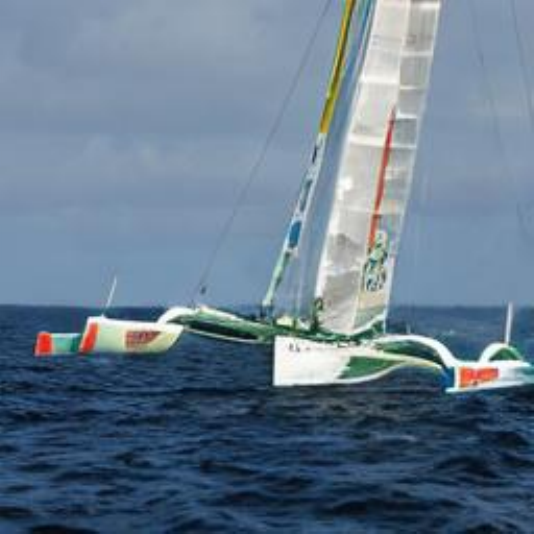}} \\

        \shortstack[l]{\tiny PLMS4 w/ HB 0.9} &
        \noindent\parbox[c]{0.16\columnwidth}{\includegraphics[width=0.16\textwidth]{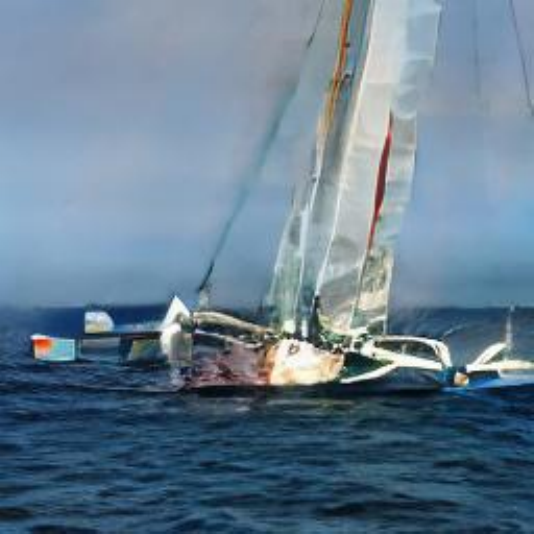}} & 
        \noindent\parbox[c]{0.16\columnwidth}{\includegraphics[width=0.16\textwidth]{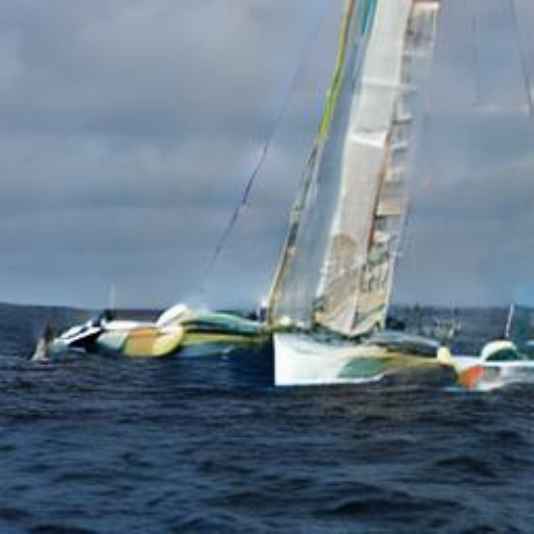}} & 
        \noindent\parbox[c]{0.16\columnwidth}{\includegraphics[width=0.16\textwidth]{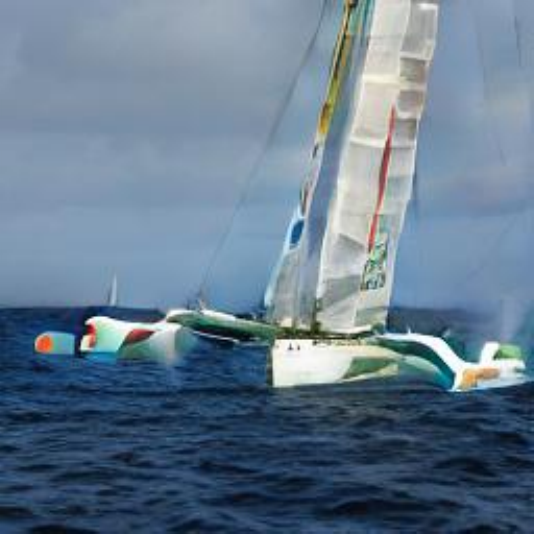}} & 
        \noindent\parbox[c]{0.16\columnwidth}{\includegraphics[width=0.16\textwidth]{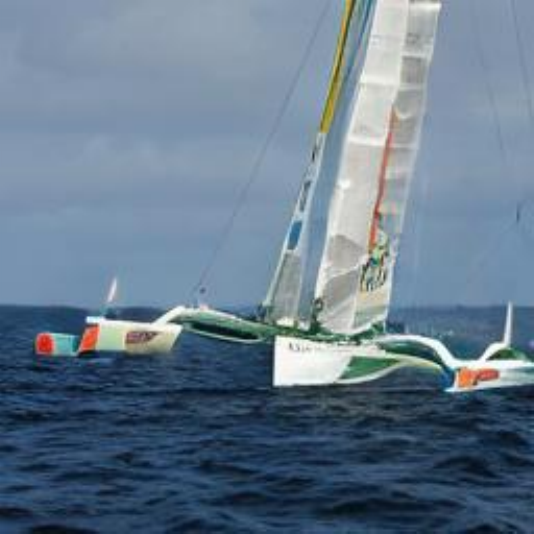}} & 
        \noindent\parbox[c]{0.16\columnwidth}{\includegraphics[width=0.16\textwidth]{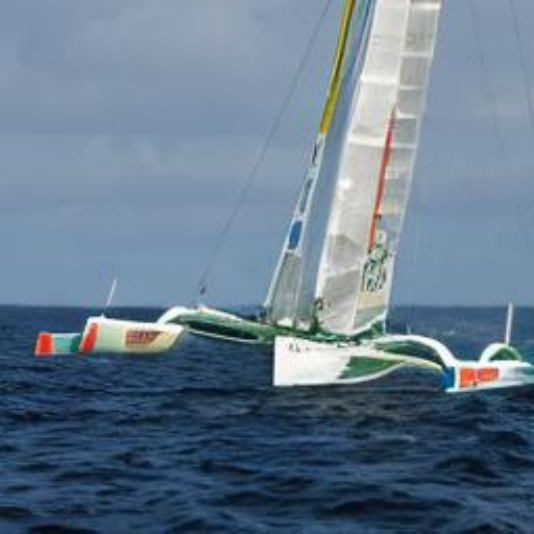}} \\

        \shortstack[l]{\tiny GHVB3.8} &
        \noindent\parbox[c]{0.16\columnwidth}{\includegraphics[width=0.16\textwidth]{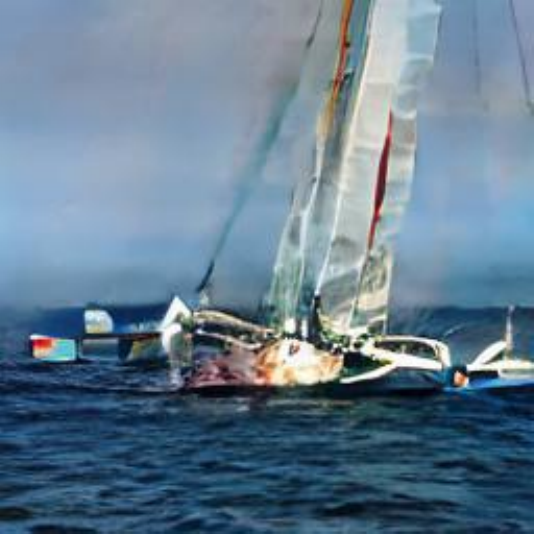}} & 
        \noindent\parbox[c]{0.16\columnwidth}{\includegraphics[width=0.16\textwidth]{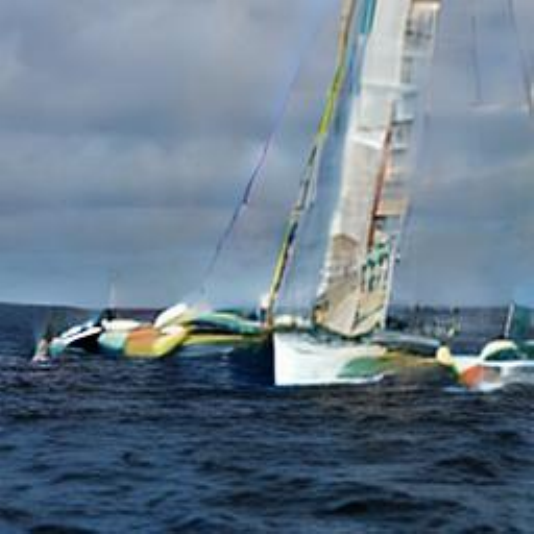}} & 
        \noindent\parbox[c]{0.16\columnwidth}{\includegraphics[width=0.16\textwidth]{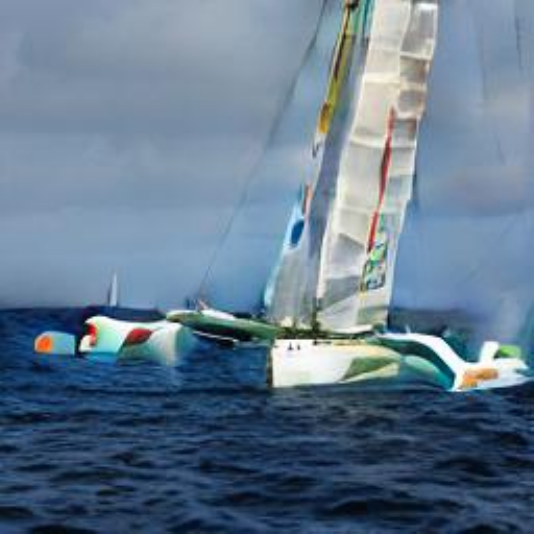}} & 
        \noindent\parbox[c]{0.16\columnwidth}{\includegraphics[width=0.16\textwidth]{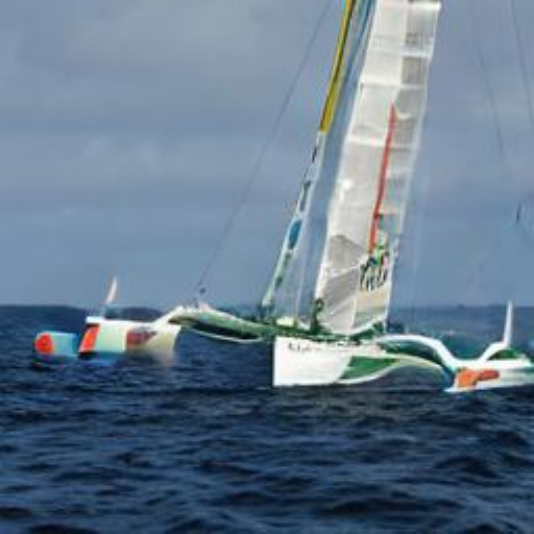}} & 
        \noindent\parbox[c]{0.16\columnwidth}{\includegraphics[width=0.16\textwidth]{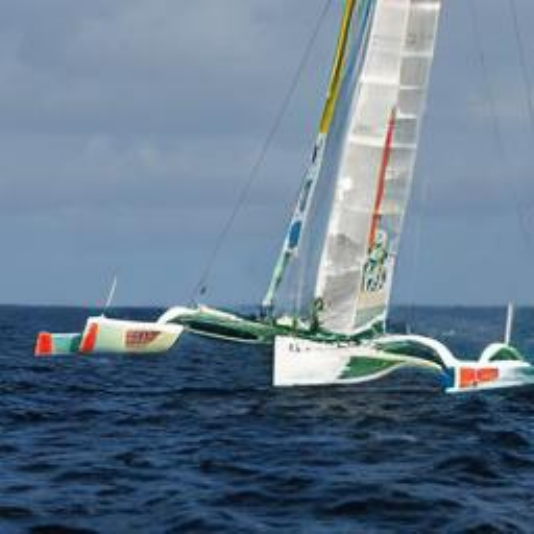}} \\
        
    \end{tabu}

    \caption{Comparison of samples generated from DiT-XL with a guidance scale of 3, using different sampling methods and sampling steps.
    }

    \label{fig:img_dit}
\end{figure}

\section{Extended Comparison on Text-to-Image Comparison} \label{apx:sota_sd}

To provide a more comprehensive evaluation of the methods discussed in Section \ref{sec:exp_sd}, we utilize a fine-tuned variant of Stable-Diffusion called Anything V4.  We consider full-path samples generated by PLMS4 \cite{liu2022pseudo} at 1,000 steps as reference solutions. The performance of each method is evaluated by measuring the image similarity between the generated samples produced using a reduced number of steps and the reference samples. Importantly, both sets of samples originate from identical initial noise maps. This comparison allows us to assess how well the solution from each configuration matches the full-path reference solution.

To quantify image similarity, we use the Learned Perceptual Image Patch Similarity (LPIPS) \cite{zhang2018unreasonable}, where lower values indicate higher similarity. Additionally, we measure similarity using the L2 norm in the latent space, as discussed in Section \ref{sec:ghvb}, again with lower values indicating higher similarity. The outcomes of these analyses are visually presented in Figure \ref{fig:sota_sd}.

Discrepancies between the numerical solutions and the 1,000-step reference solution can arise from two primary factors: the accuracy of the employed method and the presence of divergence artifacts. Notably, in this particular context, divergence artifacts tend to outweigh errors stemming from method accuracy. Consequently, higher-order methods exhibit greater discrepancies in both LPIPS and L2 similarity measurements. It is worth highlighting that our techniques demonstrate a remarkable ability to minimize deviations from the reference solution in the majority of cases. Furthermore, Figure \ref{fig:sota_sd} consistently demonstrates the superiority of our techniques compared to other methods, as evidenced by the lower LPIPS and L2 similarity scores. These results indicate that our techniques effectively both reduce divergence artifacts and handle errors related to method accuracy. 

\begin{figure}
    \centering
    \includegraphics[width=0.24\textwidth]{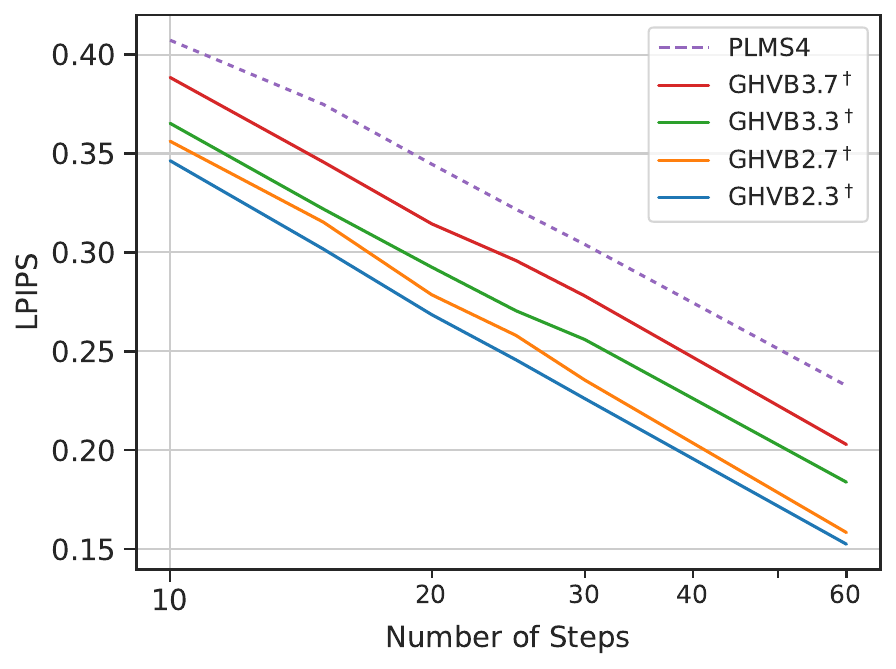}
    \includegraphics[width=0.24\textwidth]{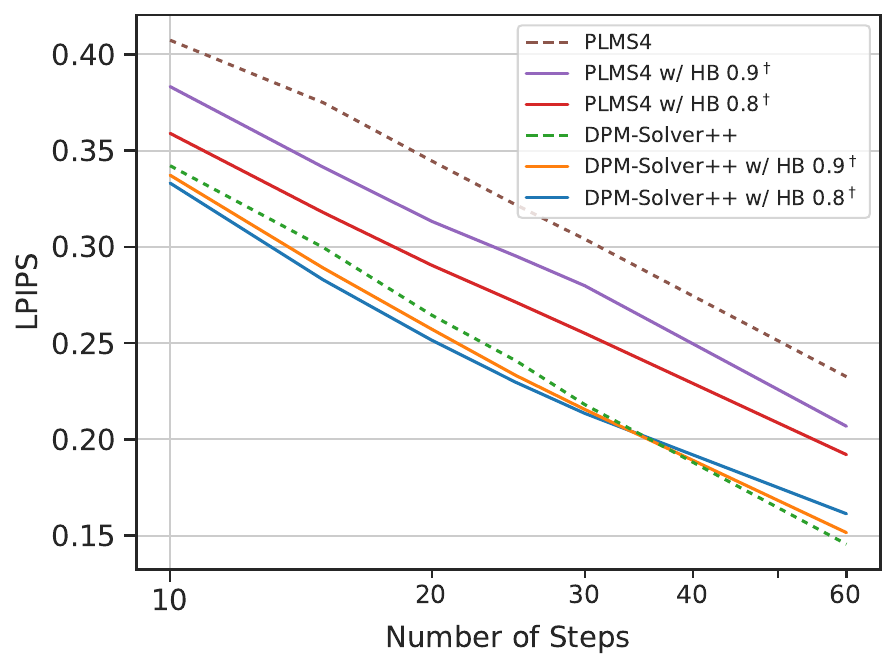}
    \includegraphics[width=0.24\textwidth]{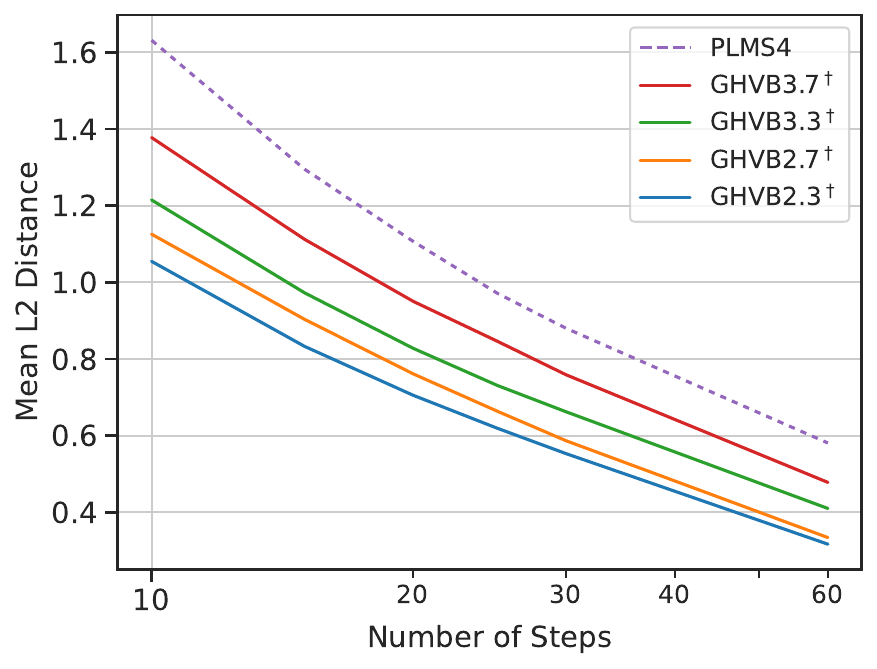}
    \includegraphics[width=0.24\textwidth]{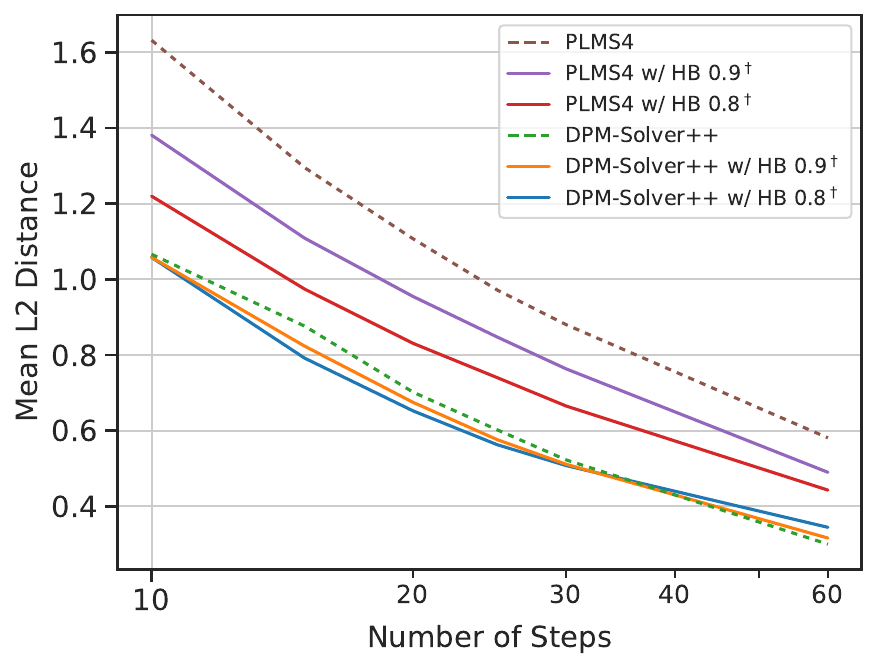}
    \caption{Comparison of LPIPS in the image space and L2 distance in the latent space across different sampling methods, with and without the utilization of our momentum techniques. The experimental setting is similar to that in Section \ref{sec:exp_sd}.}
    
    \label{fig:sota_sd}
\end{figure}

\section{Factors Contributing to Artifact Occurrence in Fine-tuned Diffusion Models} \label{apx:artifact_factors}

This section investigates factors that influence the occurrence of divergence artifacts in diffusion sampling, namely the number of steps, guidance scale, and the choice of diffusion models. The analysis includes a qualitative assessment that compares the results obtained from Stable Diffusion 1.5 (original) (Figure \ref{fig:scale_step_sd15}) with three fine-tuned diffusion models designed for specific purposes: generating Midjourney-style images (Figure \ref{fig:scale_step_openjourney}), Japanese animation-style images (Figure \ref{fig:scale_step_waifu}), and photorealistic images (Figure \ref{fig:scale_step_dreamlike}).

Our observations reveal that several factors contribute to the occurrence of artifacts in diffusion sampling, including the number of steps, guidance scale, and choice of diffusion models. Insufficient numbers of steps and high guidance scales positively correlate with the presence of divergence artifacts in the generated samples. Fine-tuned models exhibit a higher sensitivity to these factors, resulting in a greater incidence of artifacts compared to Stable Diffusion 1.5. Consistent with the findings presented in Section \ref{artifact_sampling}, reducing the number of steps increases the likelihood of artifact occurrence. Furthermore, increasing the guidance scale amplifies the magnitude of eigenvalues, contributing to the presence of artifacts.

The choice of diffusion model also has an impact on artifact occurrence. Stable Diffusion 1.5 exhibits the fewest artifacts compared to the fine-tuned models, which demonstrate a higher incidence of divergence artifacts. Among the fine-tuned models, Openjourney demonstrates the lowest occurrence of artifacts, while also producing results that look similar to those obtained using the original Stable Diffusion 1.5 model. This suggests that extensive changes to the model may alter the eigenvalues and result in an increased presence of artifacts.

Additionally, we present the results of our techniques for handling divergence artifacts in Figure \ref{fig:scale_step_sd15_hb}, \ref{fig:scale_step_openjourney_hb}, \ref{fig:scale_step_waifu_hb}, and \ref{fig:scale_step_dreamlike_hb}. The choice of the parameter $\beta$ plays a crucial role in achieving a balance between reducing artifacts and maintaining accuracy, with its optimal value being influenced by the chosen guidance scale and diffusion model.

\tabulinesep=1pt
\begin{figure}
    \centering
    \begin{tabu} to \textwidth {
        @{}l
        @{\hspace{5pt}}c
        @{\hspace{2pt}}c
        @{\hspace{2pt}}c
        @{\hspace{2pt}}c
        @{\hspace{2pt}}c
        @{}
    }

        \multicolumn{1}{c}{\shortstack[l]{\scriptsize Number \\ \scriptsize of steps}}
        & \multicolumn{1}{c}{\scriptsize 9}
        & \multicolumn{1}{c}{\scriptsize 12}
        & \multicolumn{1}{c}{\scriptsize 15}
        & \multicolumn{1}{c}{\scriptsize 18}
        & \multicolumn{1}{c}{\scriptsize 21}
        \\
        
        \shortstack[l]{\tiny PLMS1 (Euler)} &
        \noindent\parbox[c]{0.17\columnwidth}{\includegraphics[width=0.17\columnwidth]{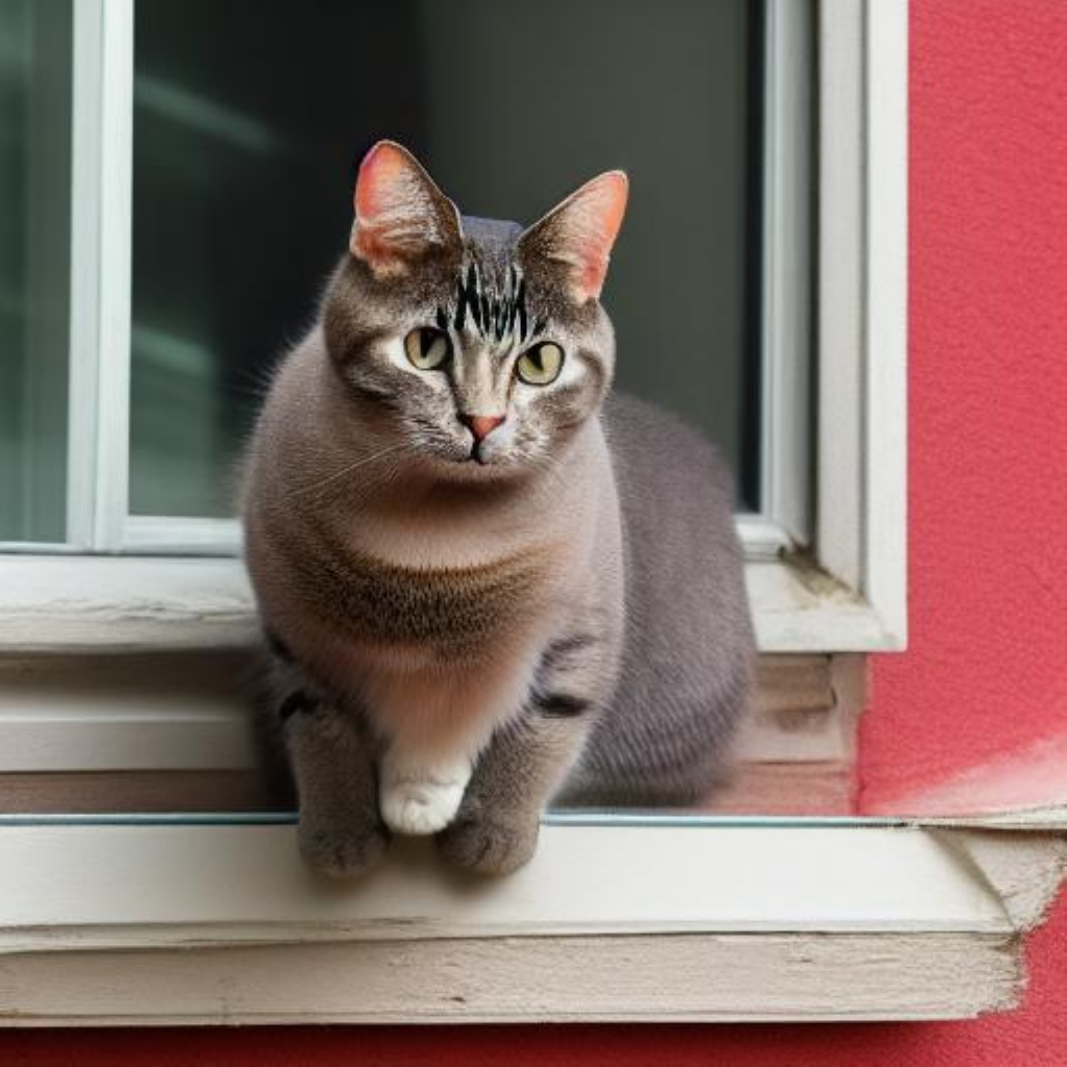}} & 
        \noindent\parbox[c]{0.17\columnwidth}{\includegraphics[width=0.17\columnwidth]{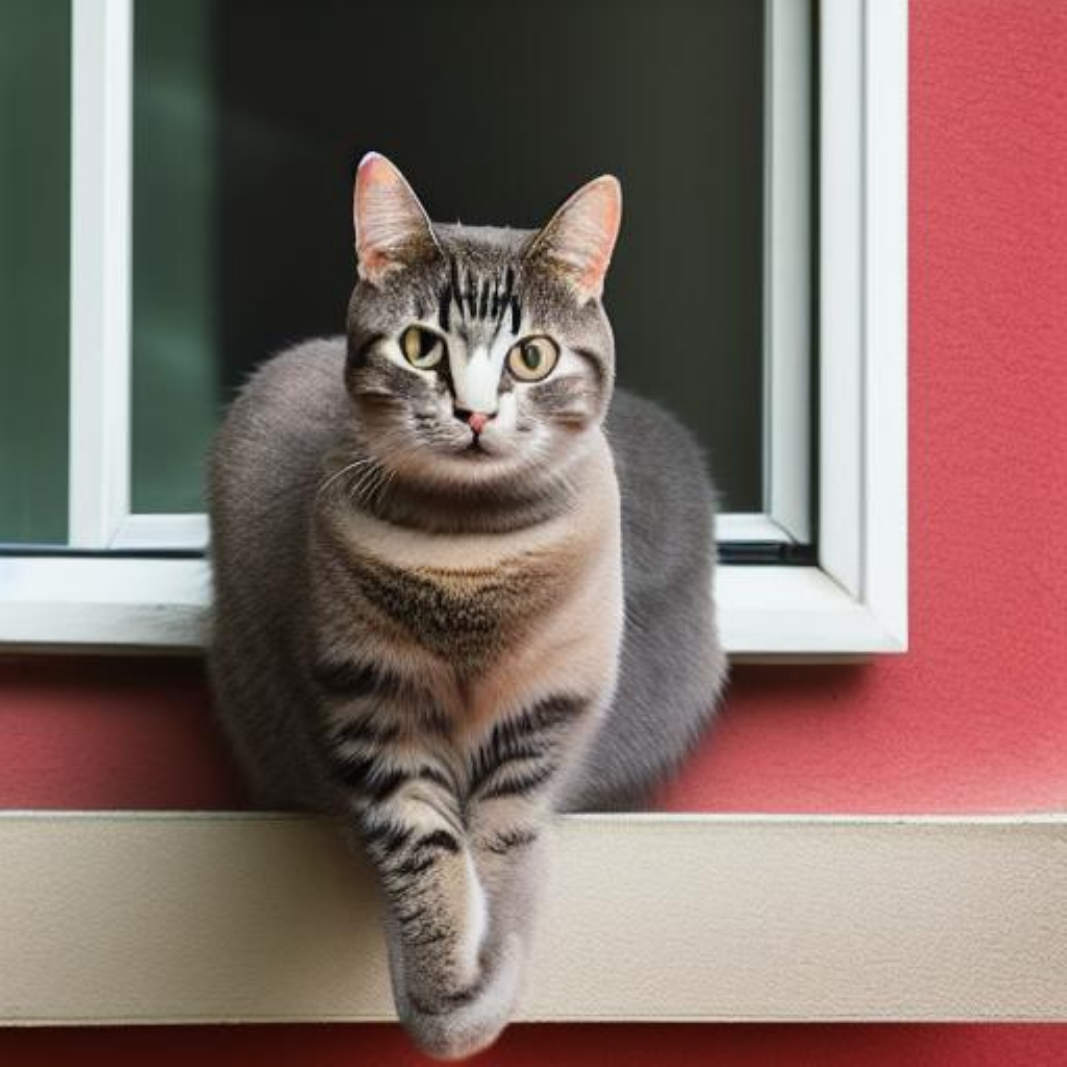}} & 
        \noindent\parbox[c]{0.17\columnwidth}{\includegraphics[width=0.17\columnwidth]{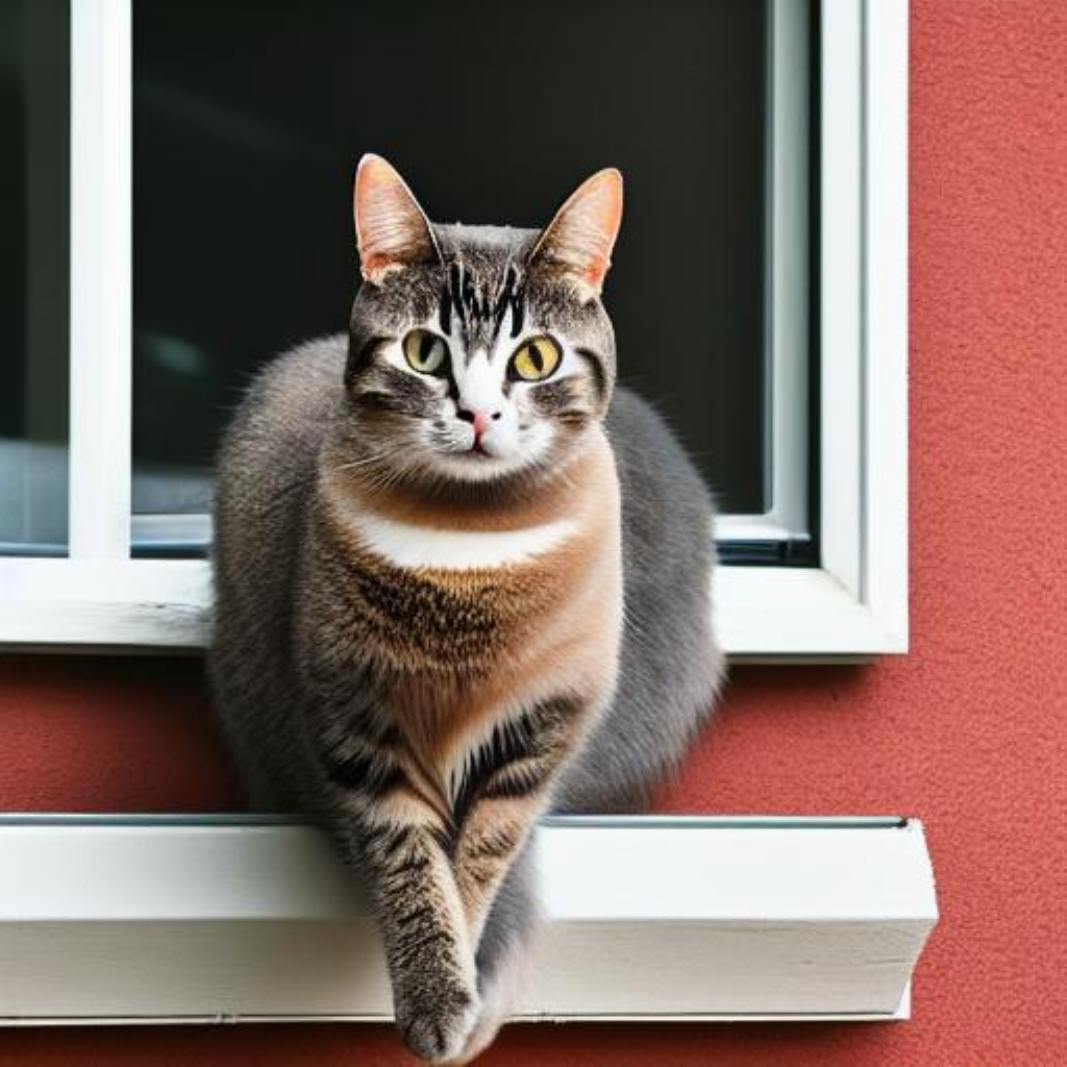}} & 
        \noindent\parbox[c]{0.17\columnwidth}{\includegraphics[width=0.17\columnwidth]{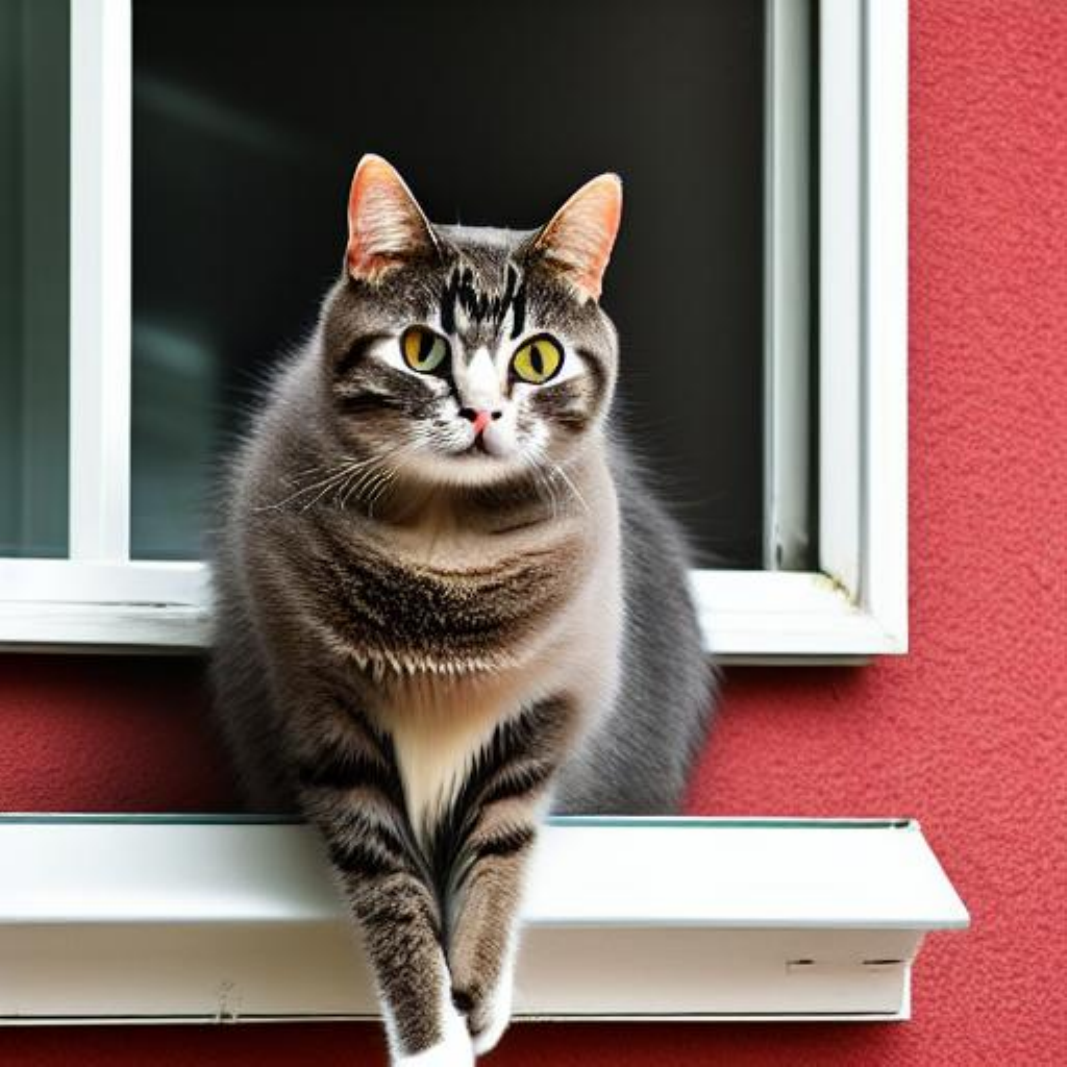}} & 
        \noindent\parbox[c]{0.17\columnwidth}{\includegraphics[width=0.17\columnwidth]{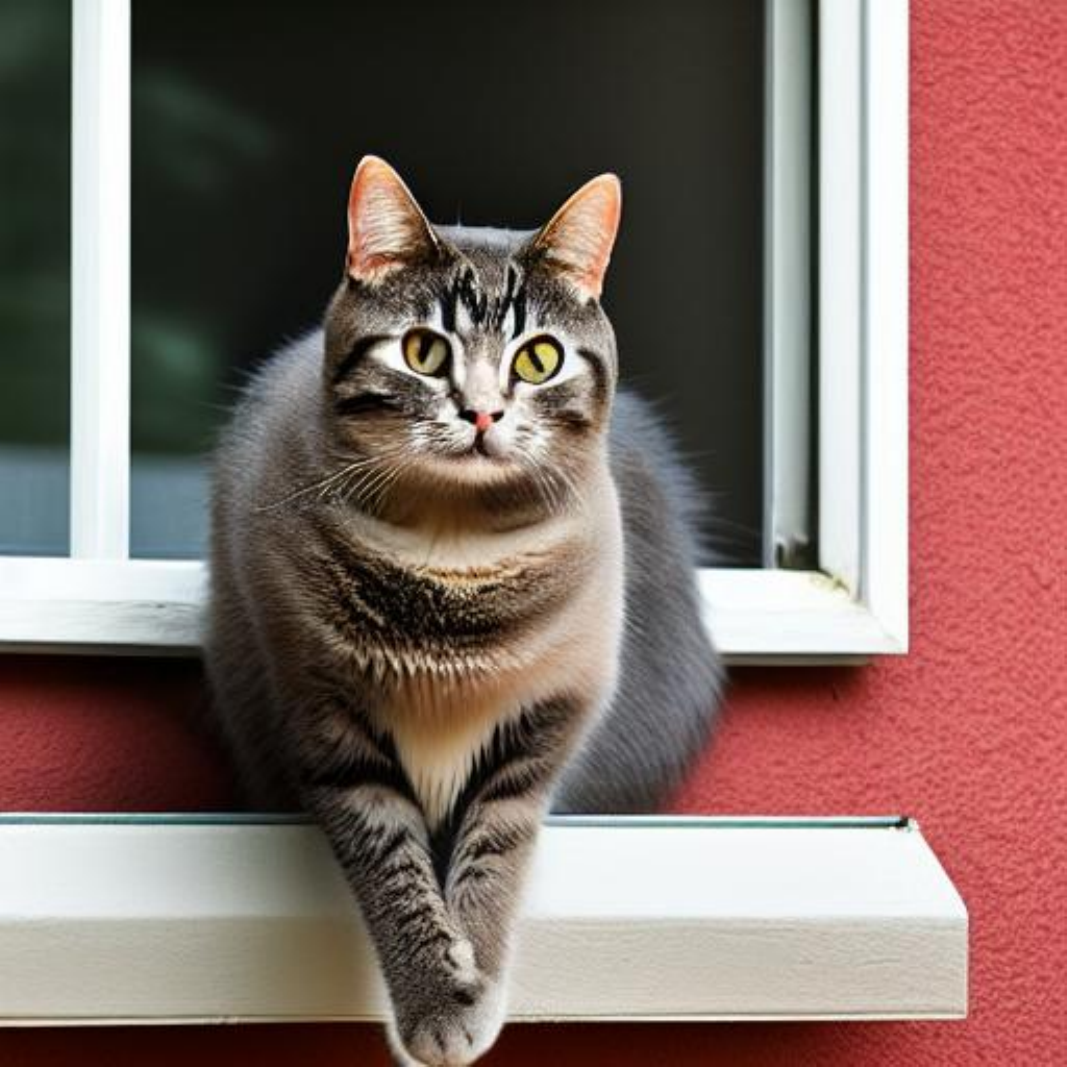}} \\

        \shortstack[l]{\tiny PLMS2} &
        \noindent\parbox[c]{0.17\columnwidth}{\includegraphics[width=0.17\columnwidth]{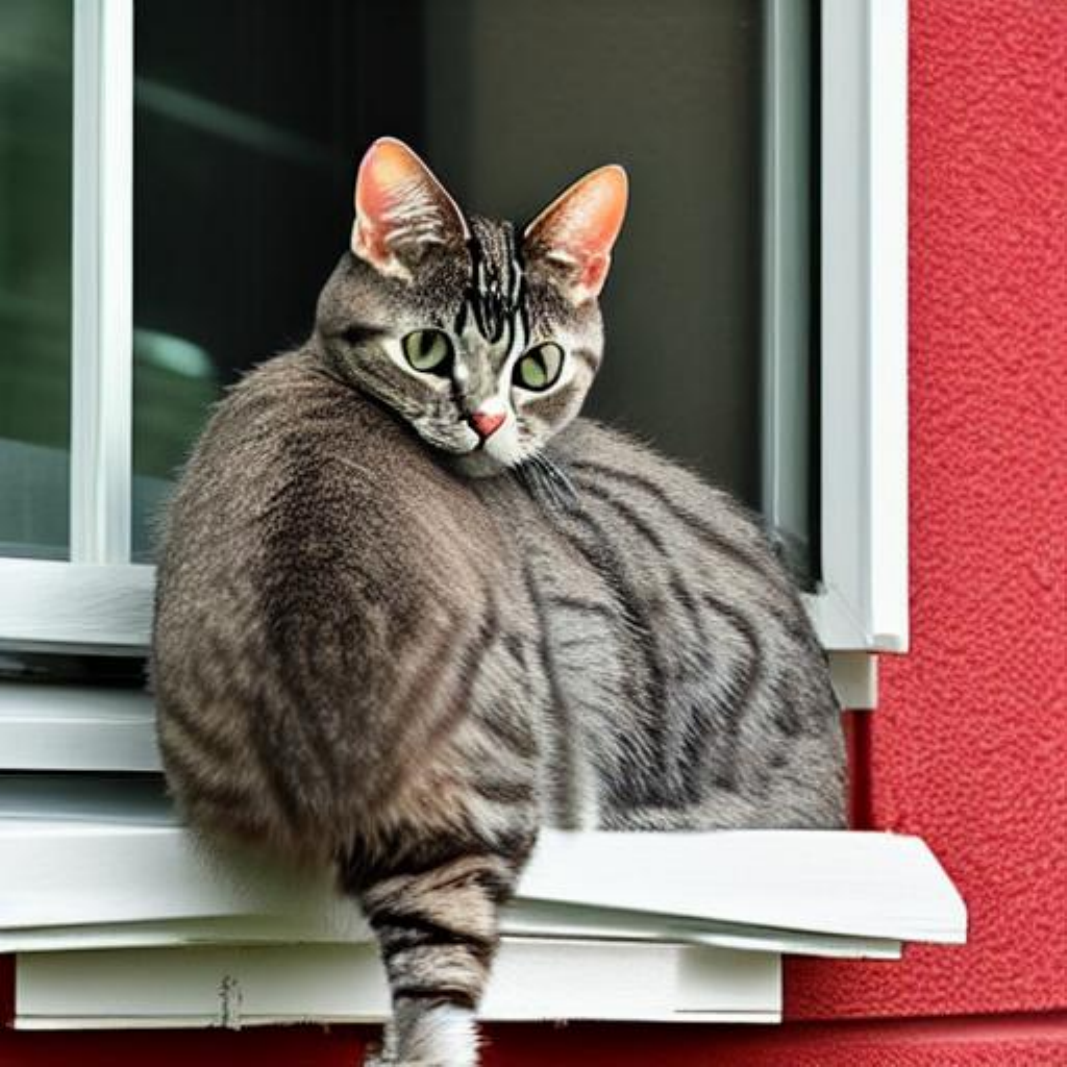}} & 
        \noindent\parbox[c]{0.17\columnwidth}{\includegraphics[width=0.17\columnwidth]{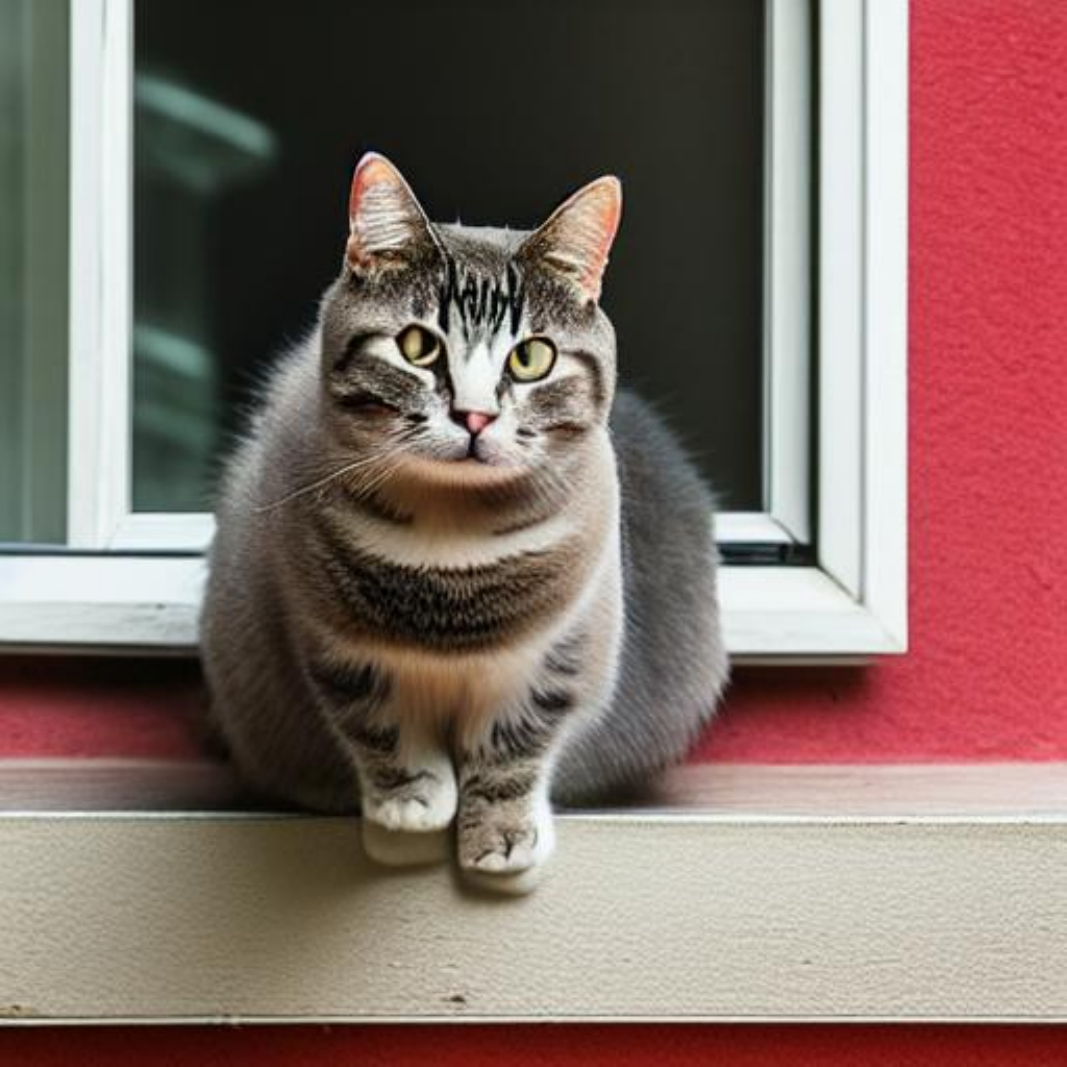}} & 
        \noindent\parbox[c]{0.17\columnwidth}{\includegraphics[width=0.17\columnwidth]{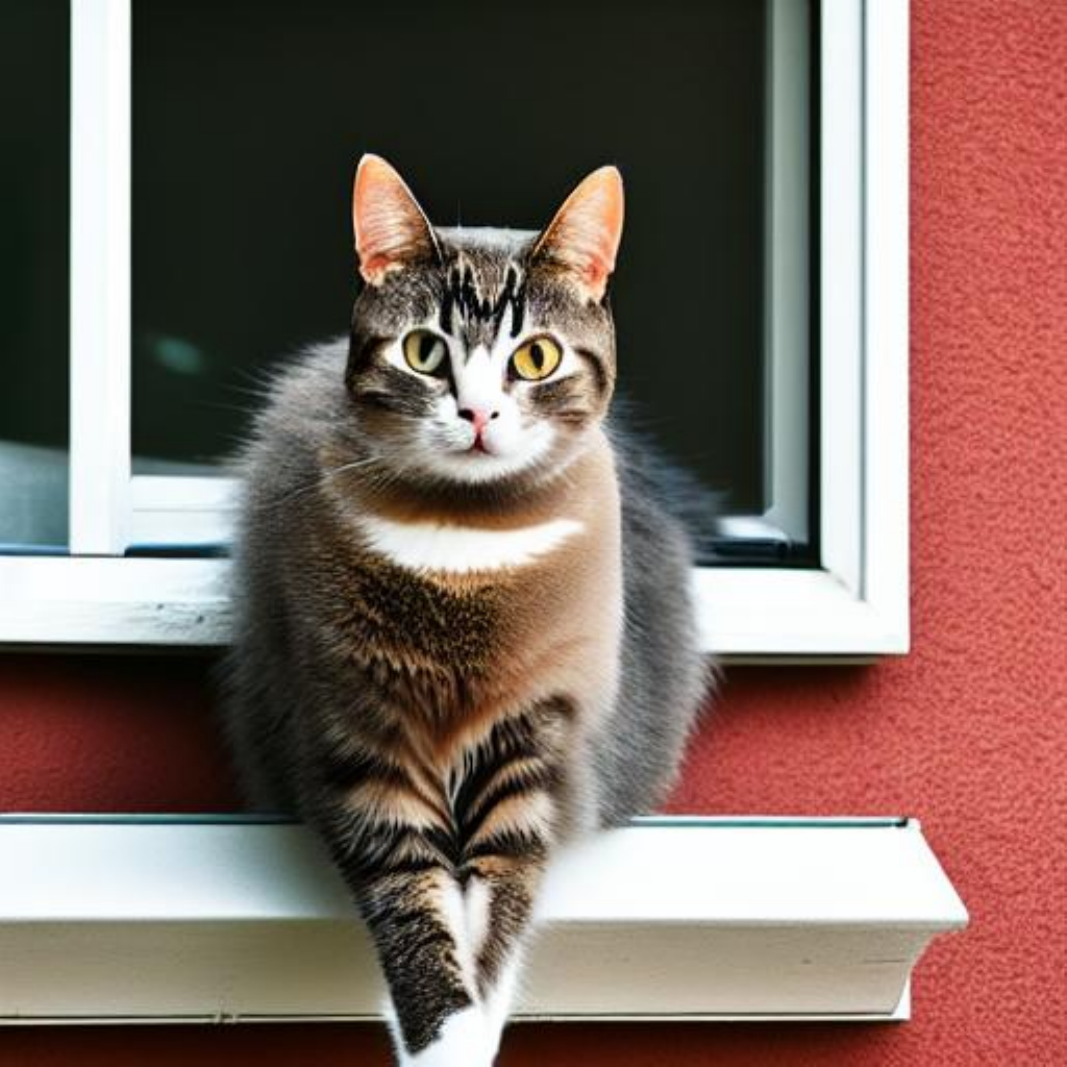}} & 
        \noindent\parbox[c]{0.17\columnwidth}{\includegraphics[width=0.17\columnwidth]{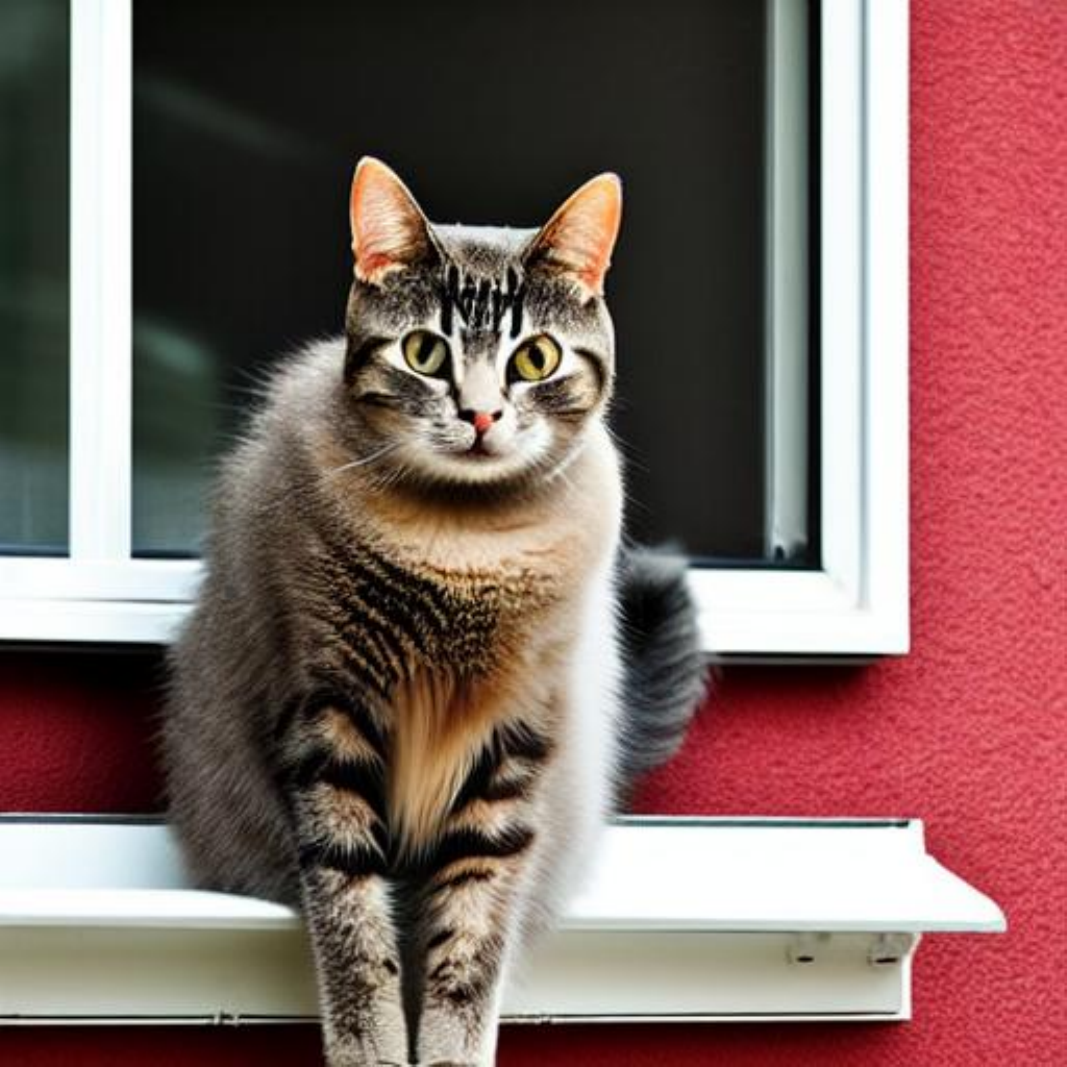}} & 
        \noindent\parbox[c]{0.17\columnwidth}{\includegraphics[width=0.17\columnwidth]{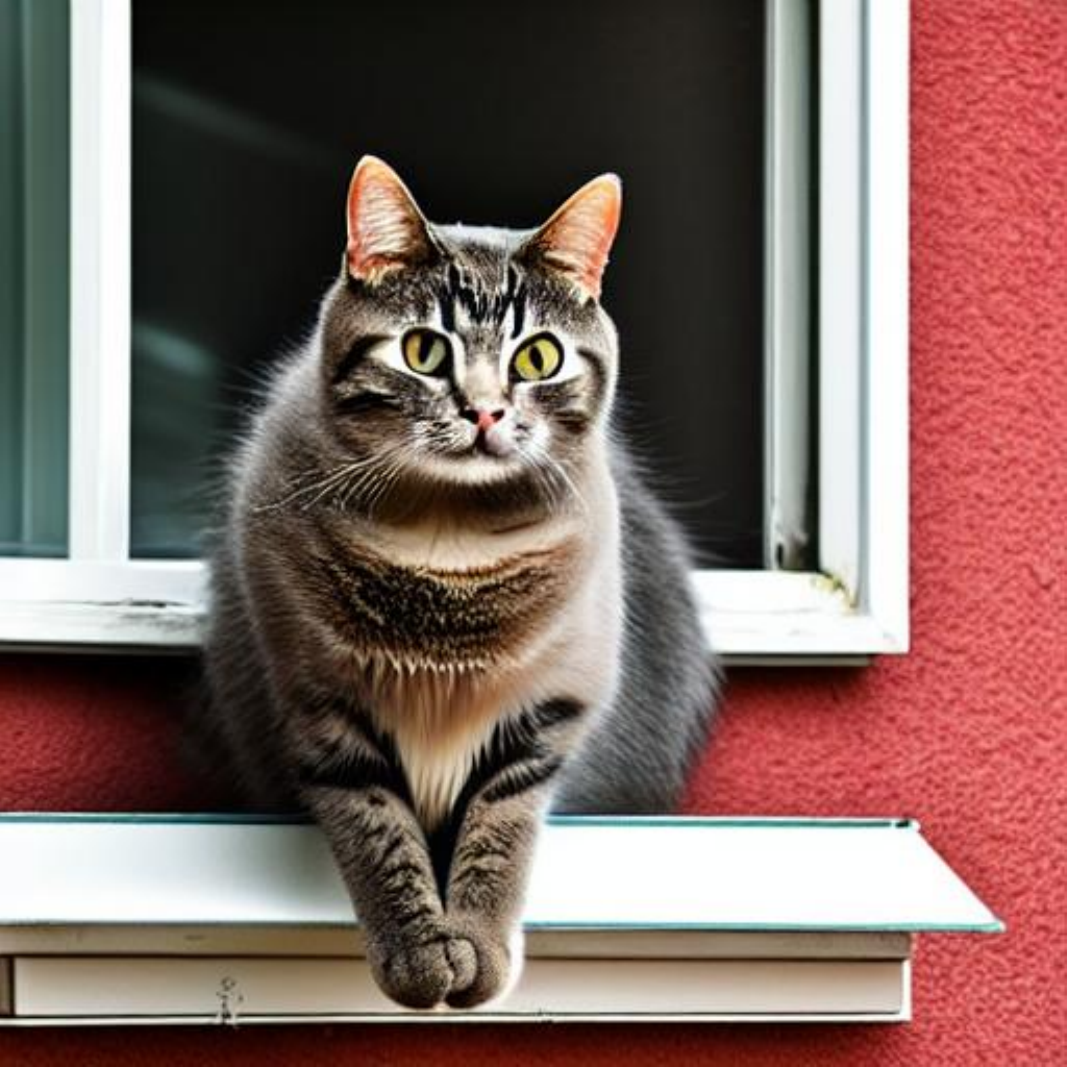}} \\

        \shortstack[l]{\tiny PLMS3} &
        \noindent\parbox[c]{0.17\columnwidth}{\includegraphics[width=0.17\columnwidth]{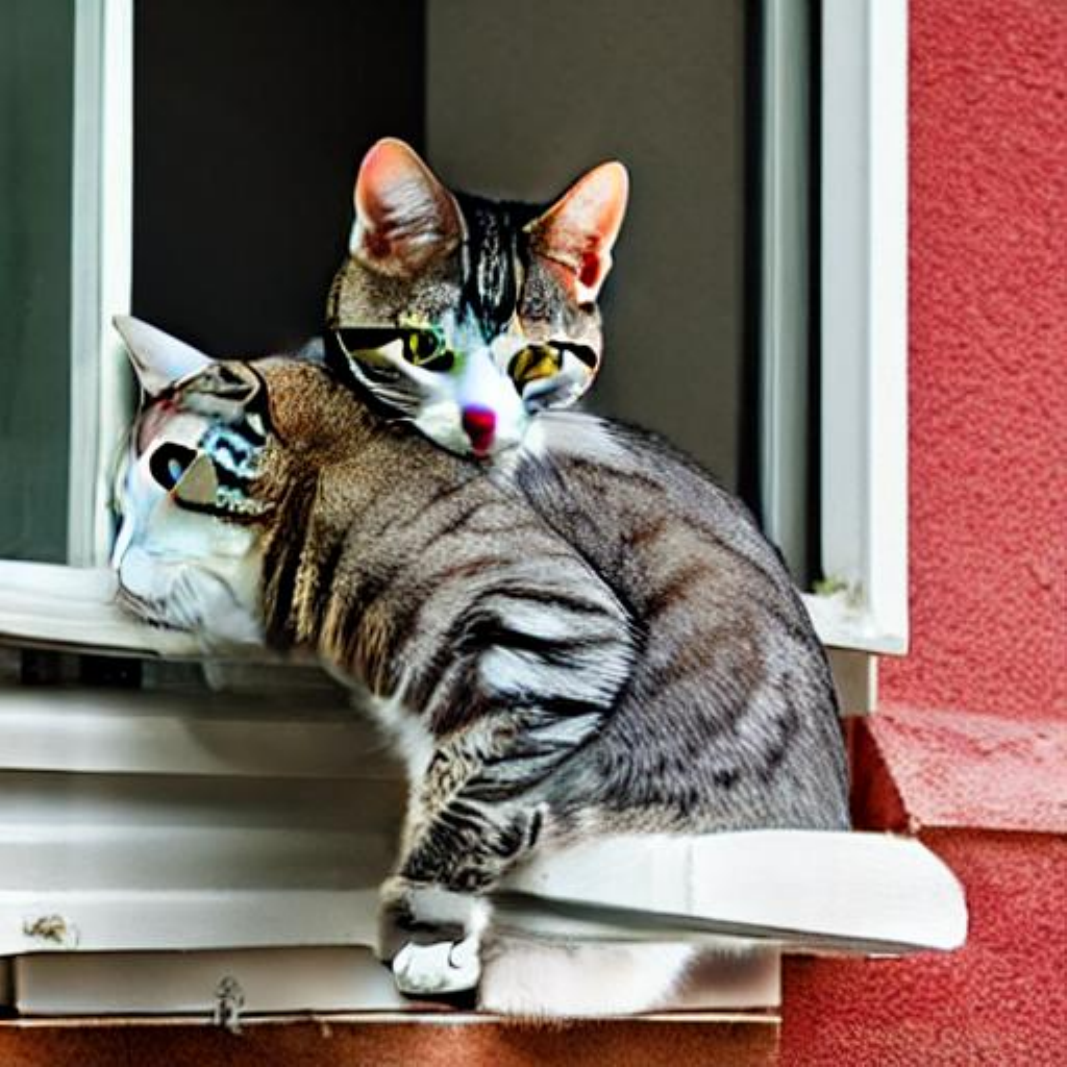}} & 
        \noindent\parbox[c]{0.17\columnwidth}{\includegraphics[width=0.17\columnwidth]{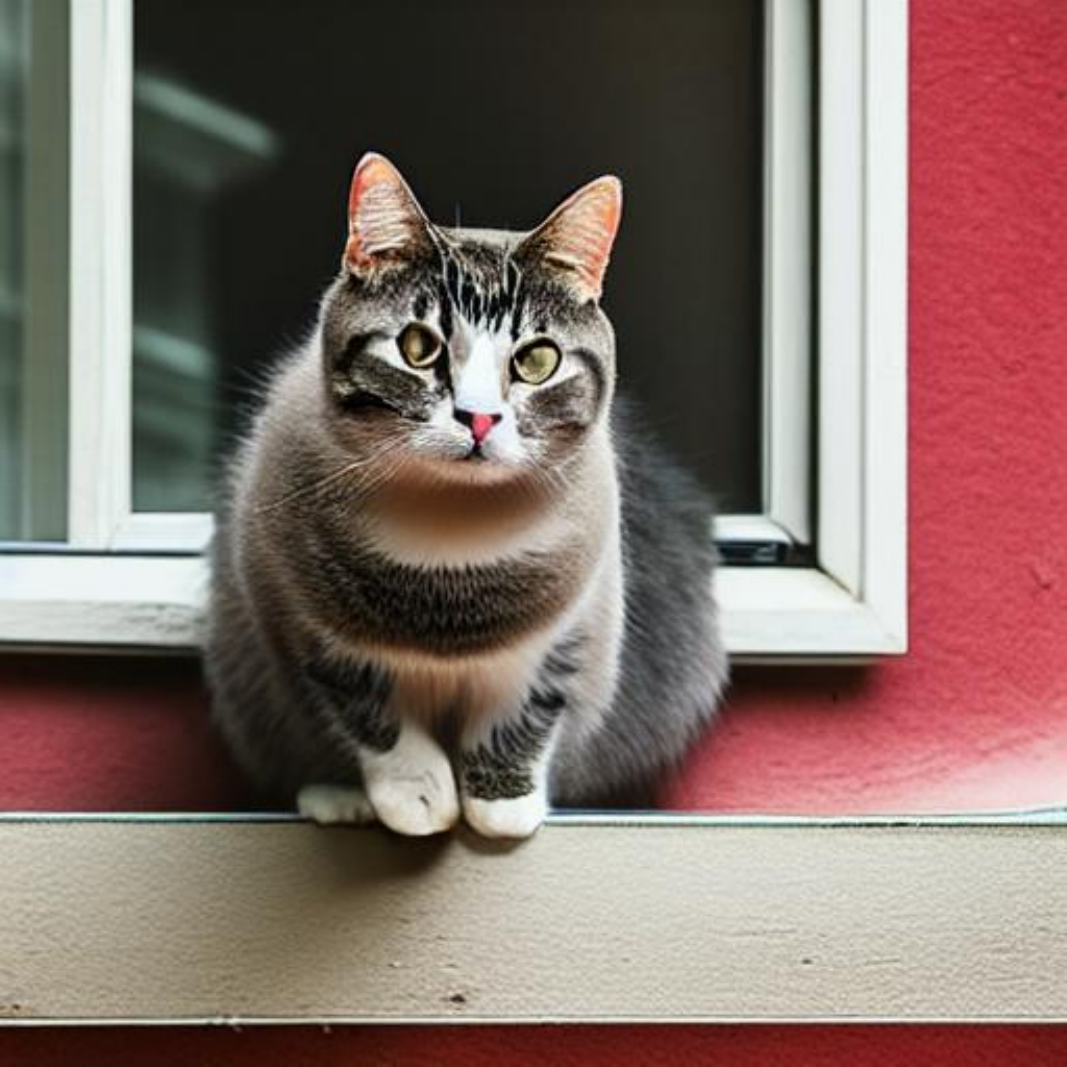}} & 
        \noindent\parbox[c]{0.17\columnwidth}{\includegraphics[width=0.17\columnwidth]{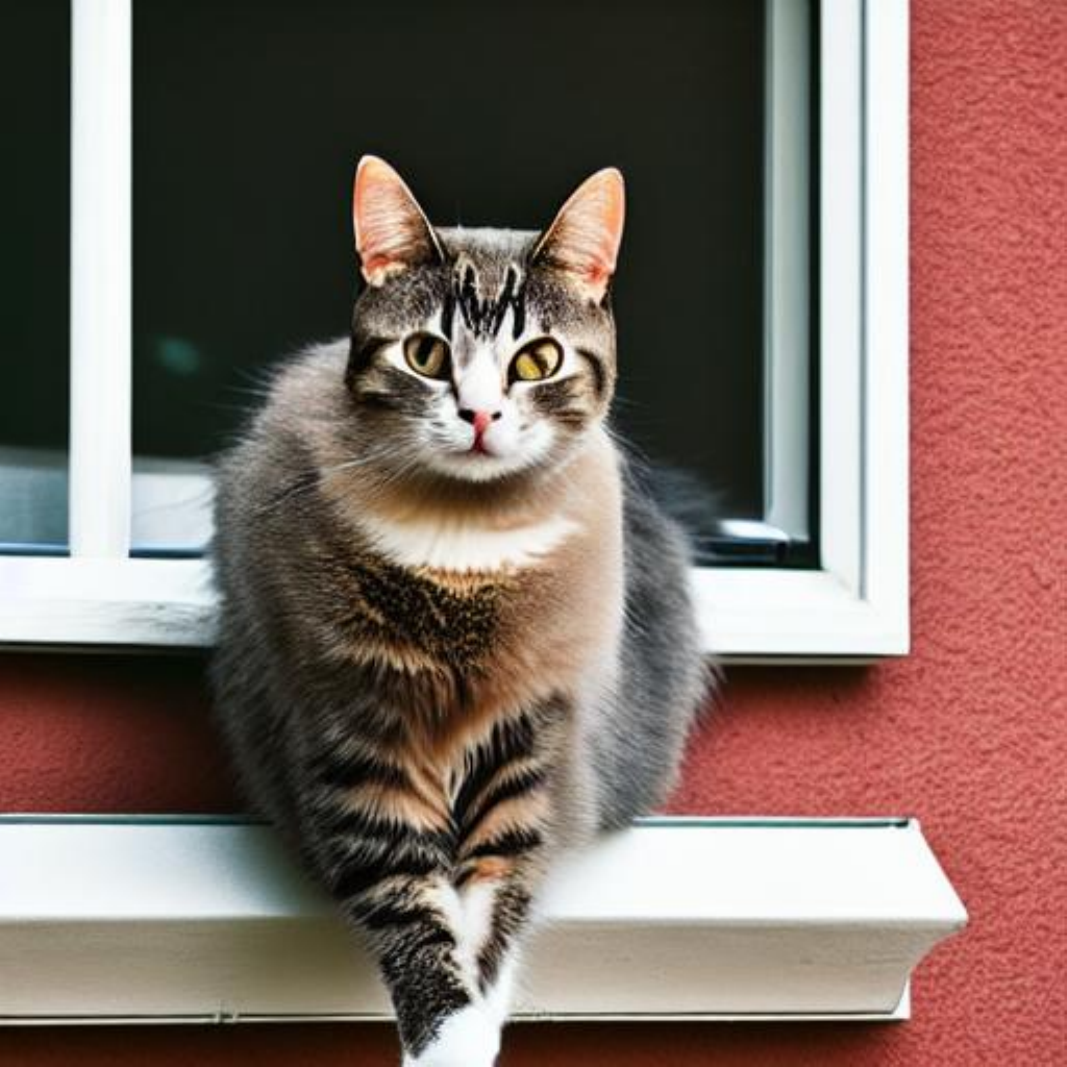}} & 
        \noindent\parbox[c]{0.17\columnwidth}{\includegraphics[width=0.17\columnwidth]{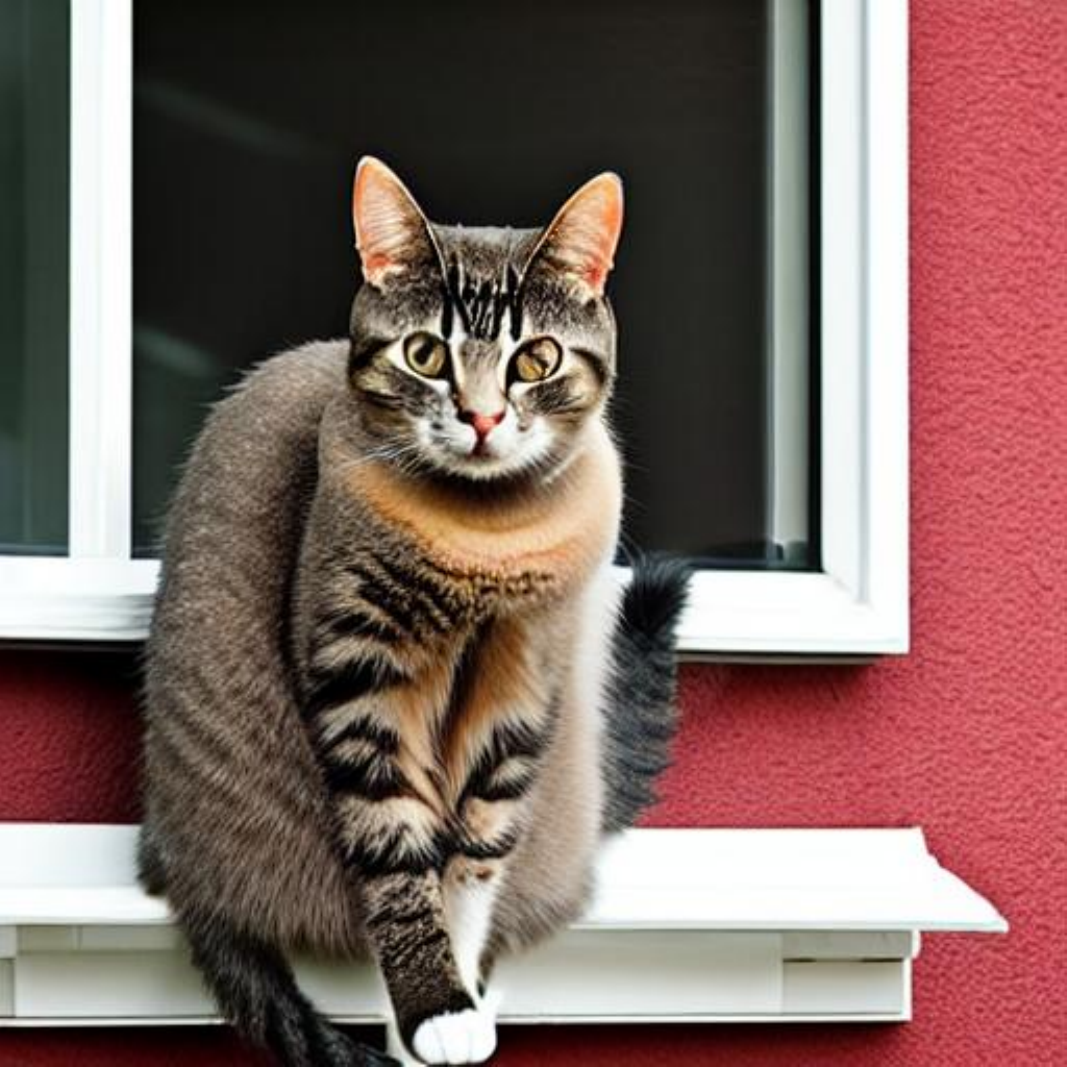}} & 
        \noindent\parbox[c]{0.17\columnwidth}{\includegraphics[width=0.17\columnwidth]{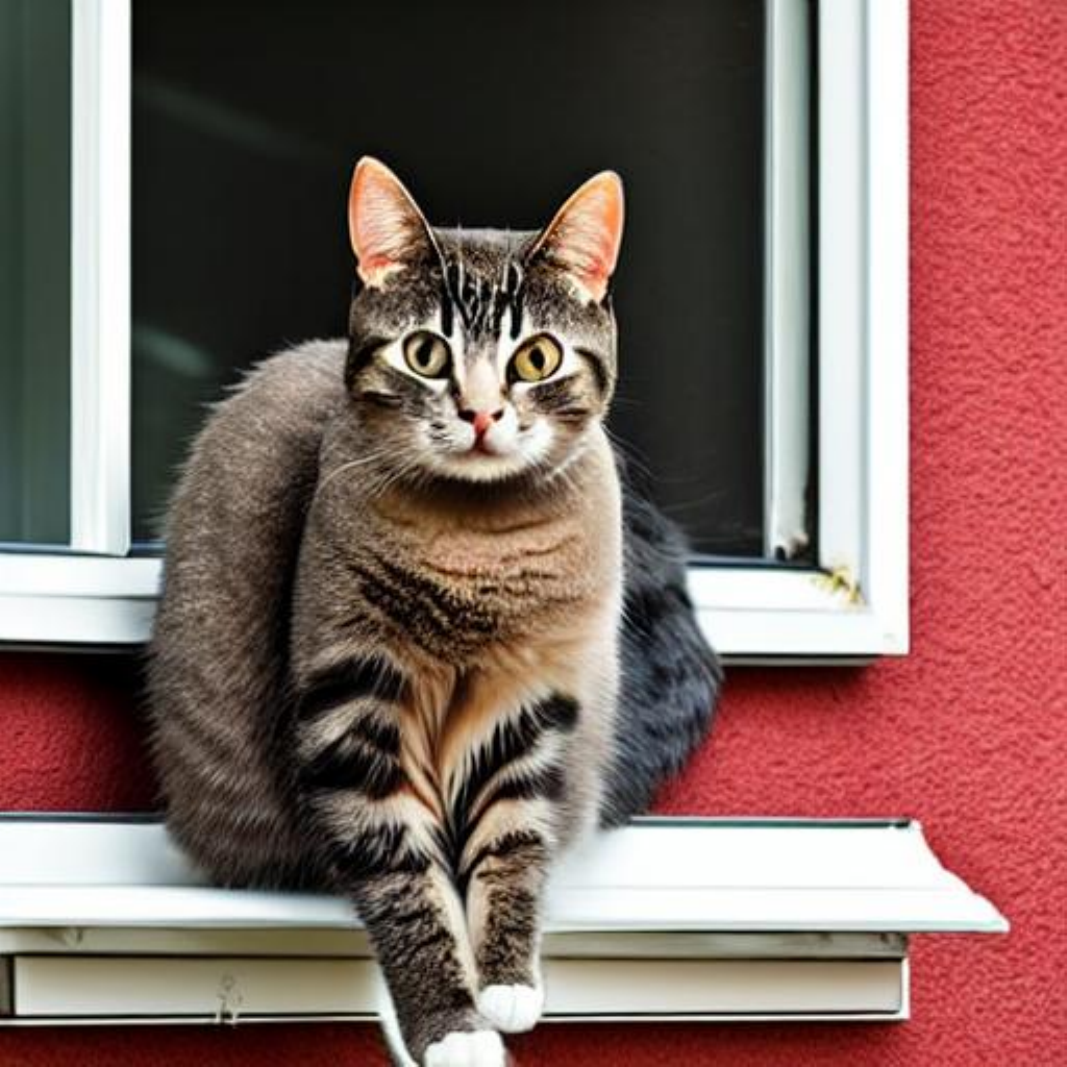}} \\

        \shortstack[l]{\tiny PLMS4} &
        \noindent\parbox[c]{0.17\columnwidth}{\includegraphics[width=0.17\columnwidth]{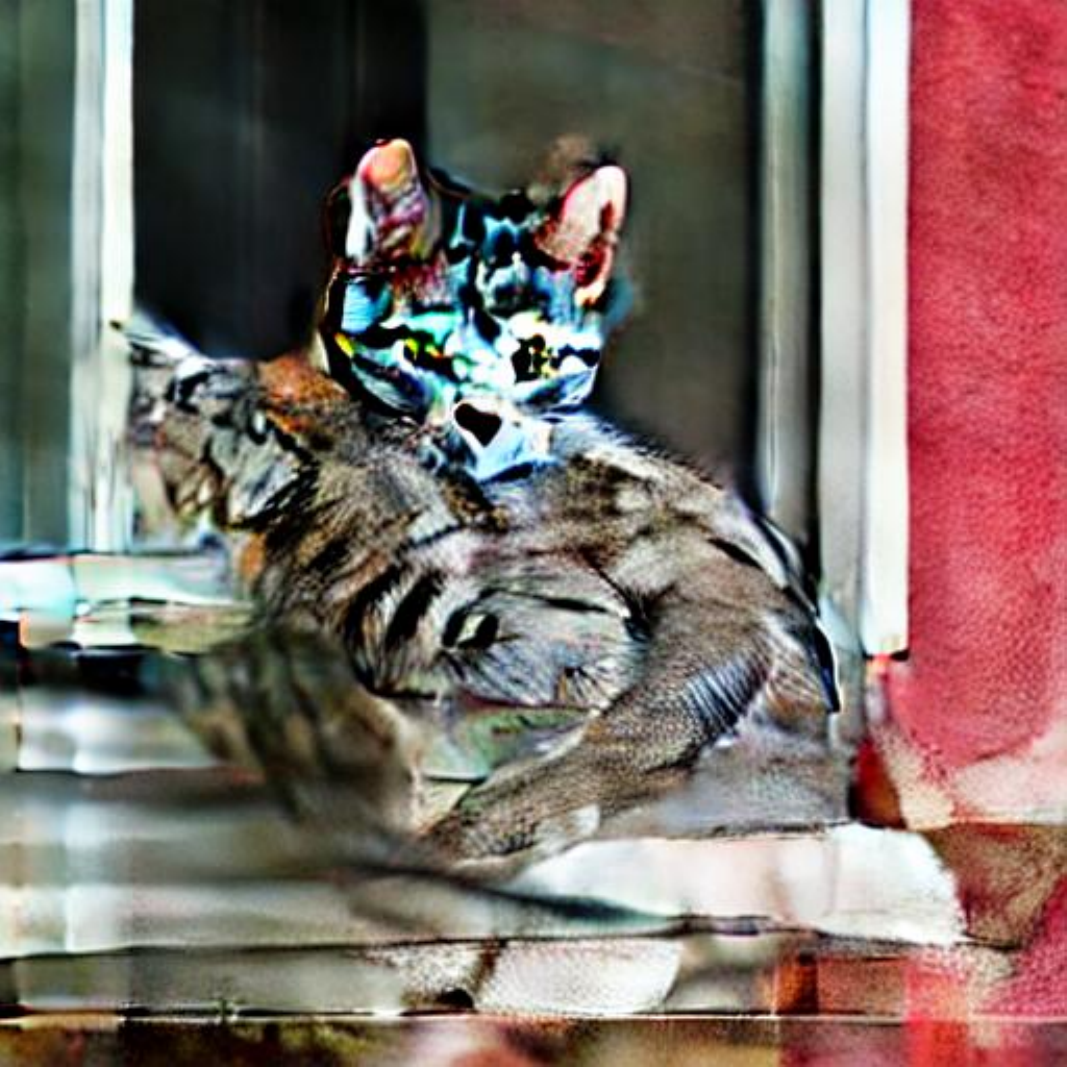}} & 
        \noindent\parbox[c]{0.17\columnwidth}{\includegraphics[width=0.17\columnwidth]{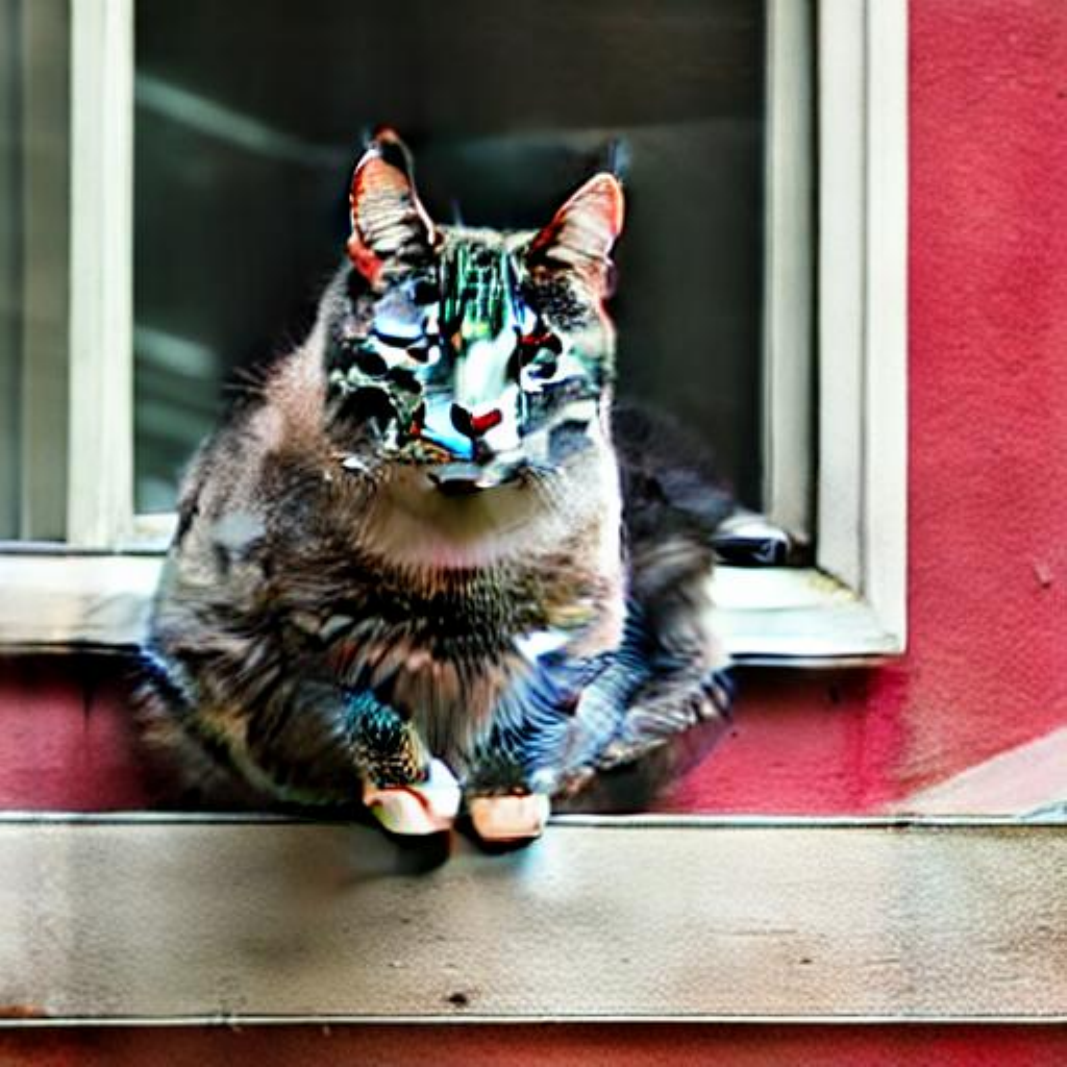}} & 
        \noindent\parbox[c]{0.17\columnwidth}{\includegraphics[width=0.17\columnwidth]{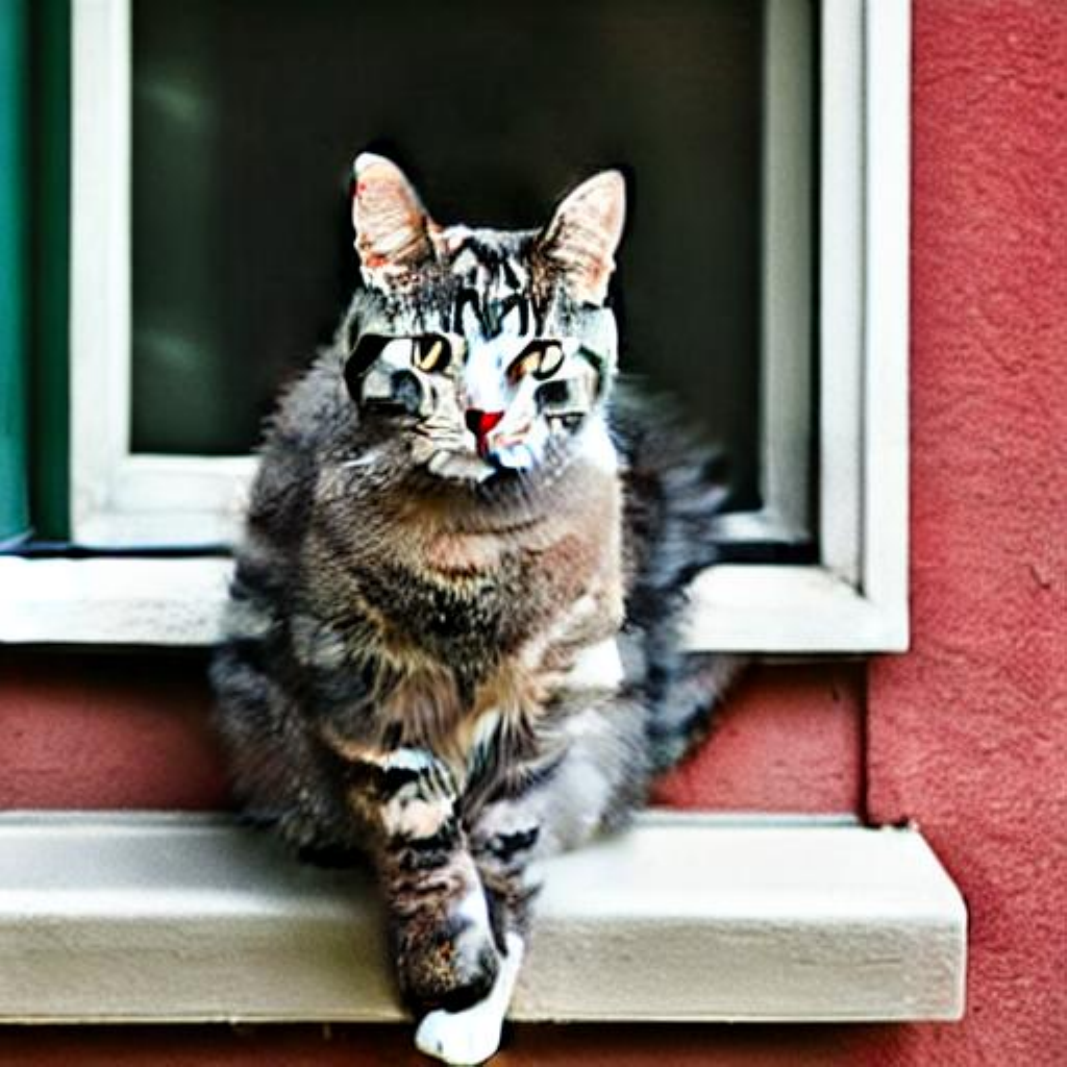}} & 
        \noindent\parbox[c]{0.17\columnwidth}{\includegraphics[width=0.17\columnwidth]{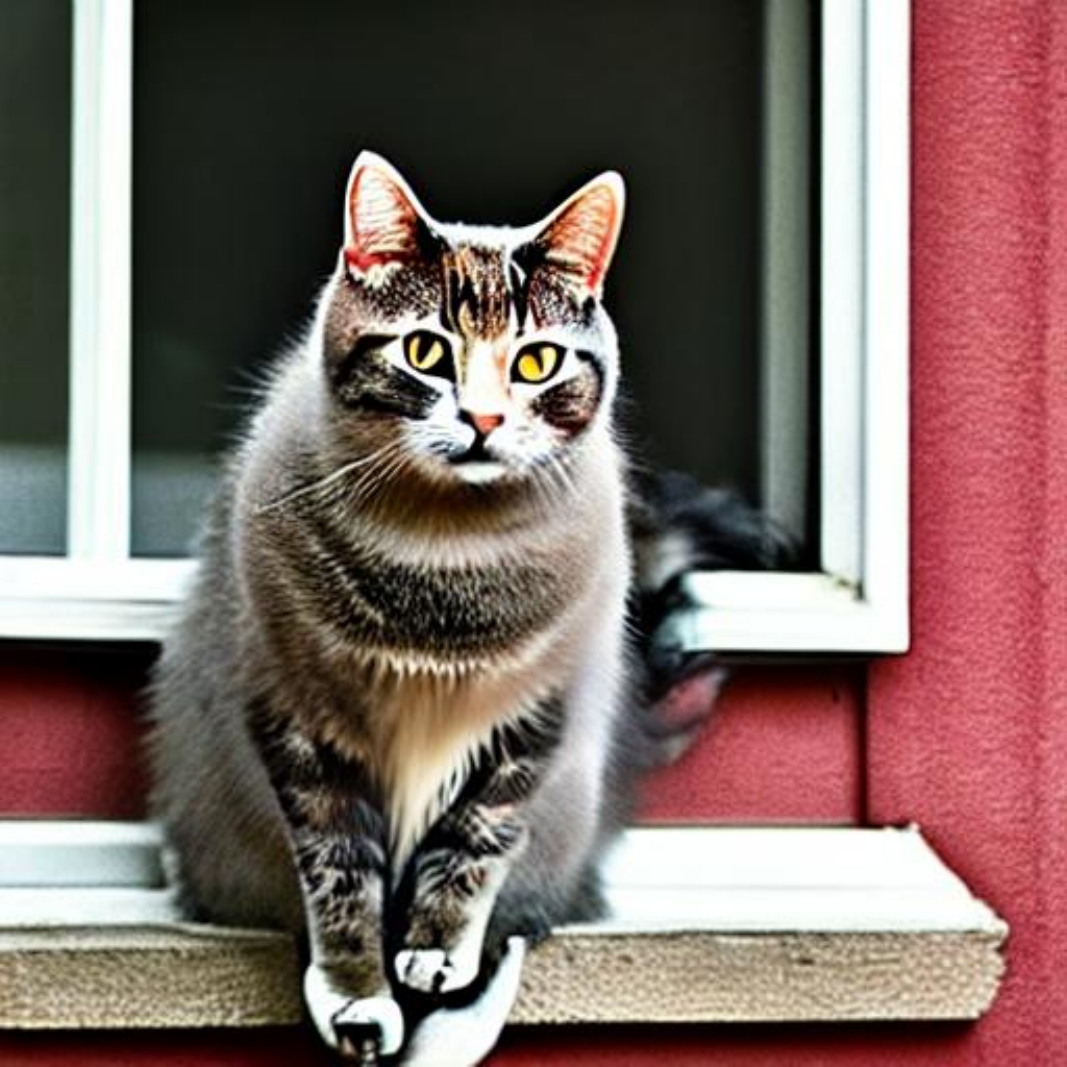}} & 
        \noindent\parbox[c]{0.17\columnwidth}{\includegraphics[width=0.17\columnwidth]{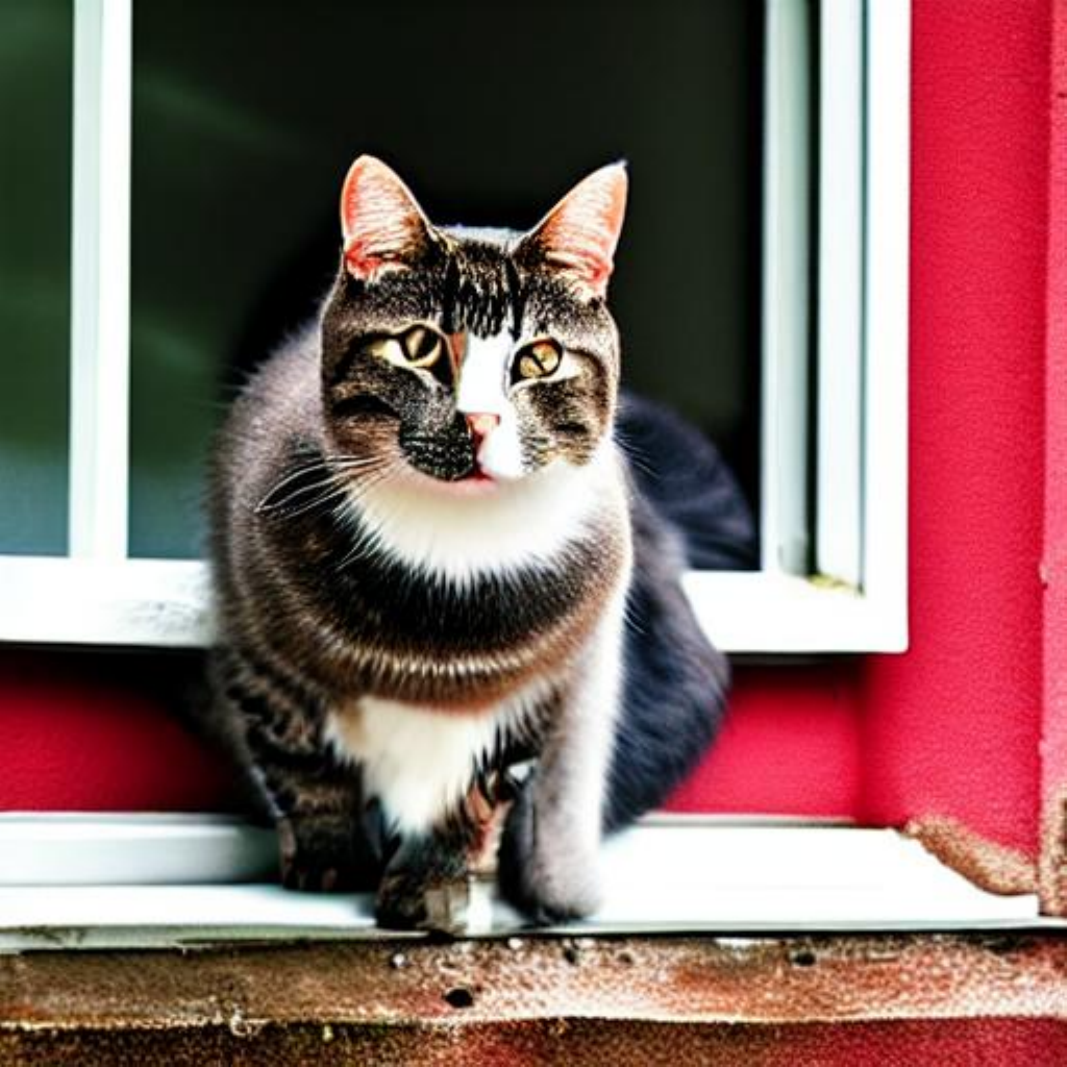}} \\

    \end{tabu}
    \caption{Comparison of samples generated from Stable Diffusion 1.5 \protect\footnotemark with different sampling steps and method orders. The guidance scale is fixed at 10. Prompt: "A cat sitting on a window sill"}
    \label{fig:order_step_sd15}
\end{figure}

\footnotetext{\url{https://huggingface.co/runwayml/stable-diffusion-v1-5}}

\pagebreak
\tabulinesep=1pt
\begin{figure}
    \centering
    \begin{tabu} to \textwidth {@{}l@{\hspace{5pt}}c@{\hspace{2pt}}c@{\hspace{2pt}}c@{\hspace{4pt}}c@{\hspace{2pt}}c@{\hspace{2pt}}c@{}}
        & \multicolumn{3}{c}{\shortstack{\scriptsize "A post-apocalyptic world with ruined \\ \scriptsize buildings, overgrown vegetation, and a red sky"}}
        & \multicolumn{3}{c}{\shortstack{\scriptsize "A girl standing in a park in \\ \scriptsize Japanese animation style"}} \\

        & \multicolumn{1}{c}{\shortstack{\scriptsize $s = 7.5$}}
        & \multicolumn{1}{c}{\shortstack{\scriptsize $s = 15$}}
        & \multicolumn{1}{c}{\shortstack{\scriptsize $s = 22.5$}}
        & \multicolumn{1}{c}{\shortstack{\scriptsize $s = 7.5$}}
        & \multicolumn{1}{c}{\shortstack{\scriptsize $s = 15$}}
        & \multicolumn{1}{c}{\shortstack{\scriptsize $s = 22.5$}}
        \\
        
        \shortstack[l]{\tiny 10 steps} &
        \noindent\parbox[c]{0.14\columnwidth}{\includegraphics[width=0.14\columnwidth]{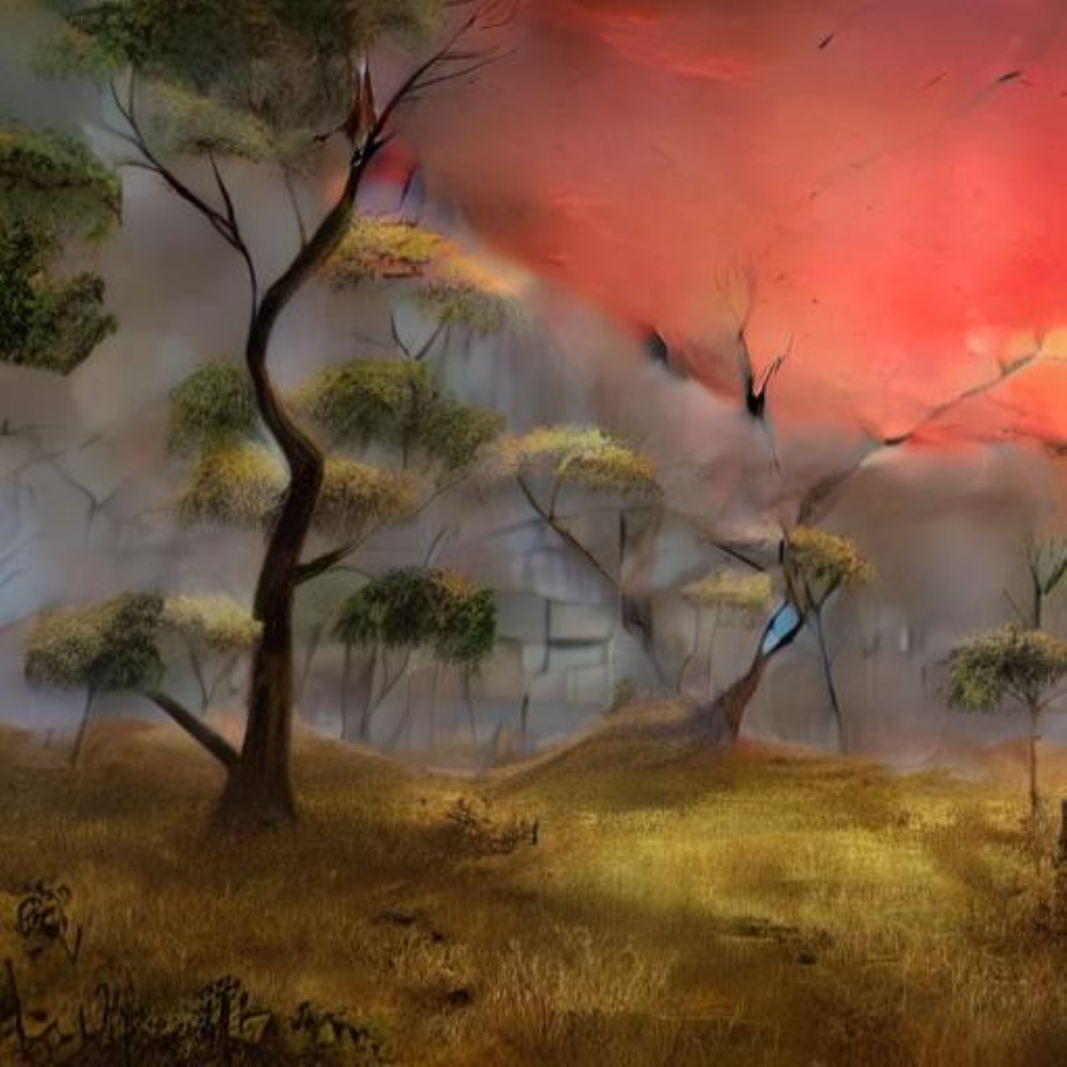}} & 
        \noindent\parbox[c]{0.14\columnwidth}{\includegraphics[width=0.14\columnwidth]{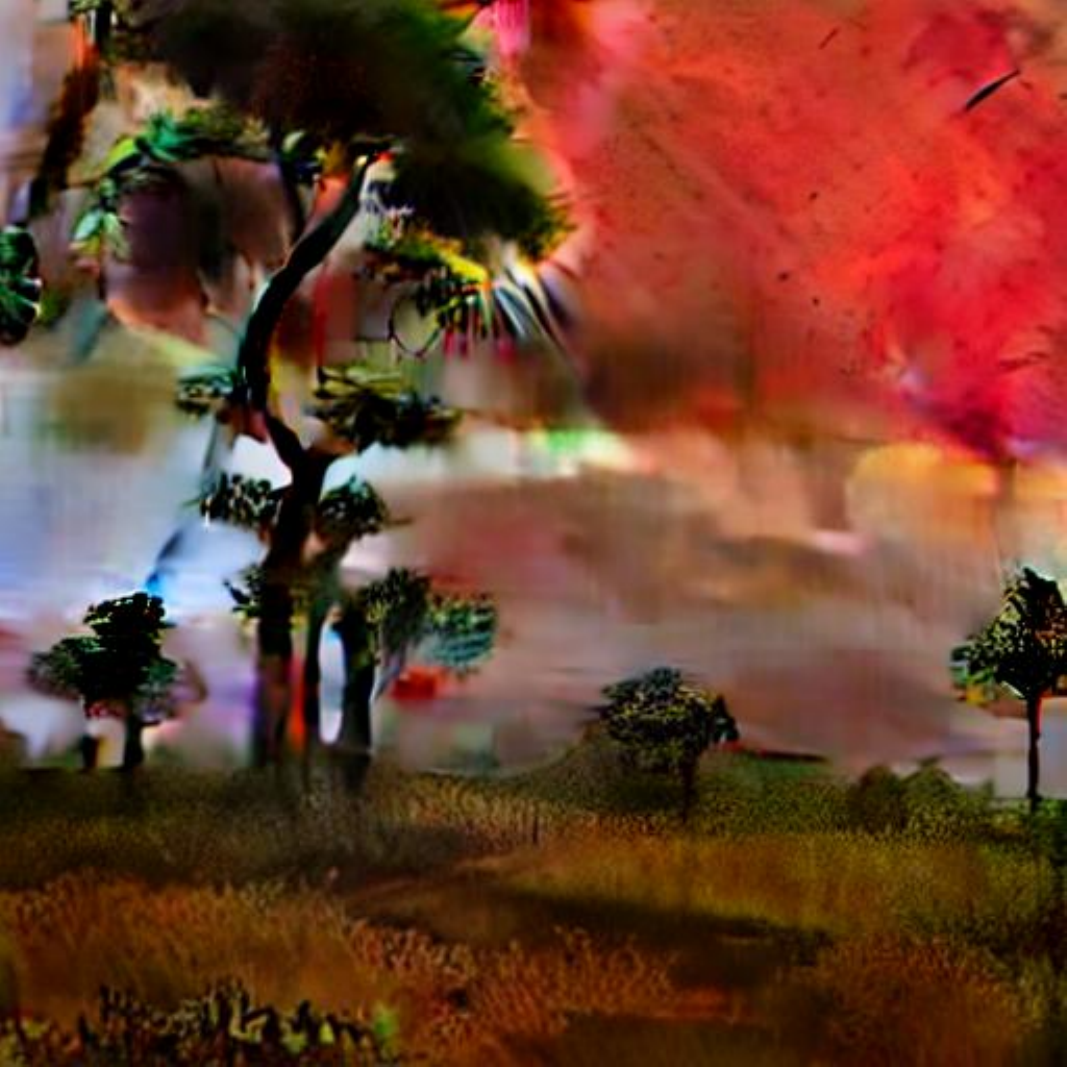}} & 
        \noindent\parbox[c]{0.14\columnwidth}{\includegraphics[width=0.14\columnwidth]{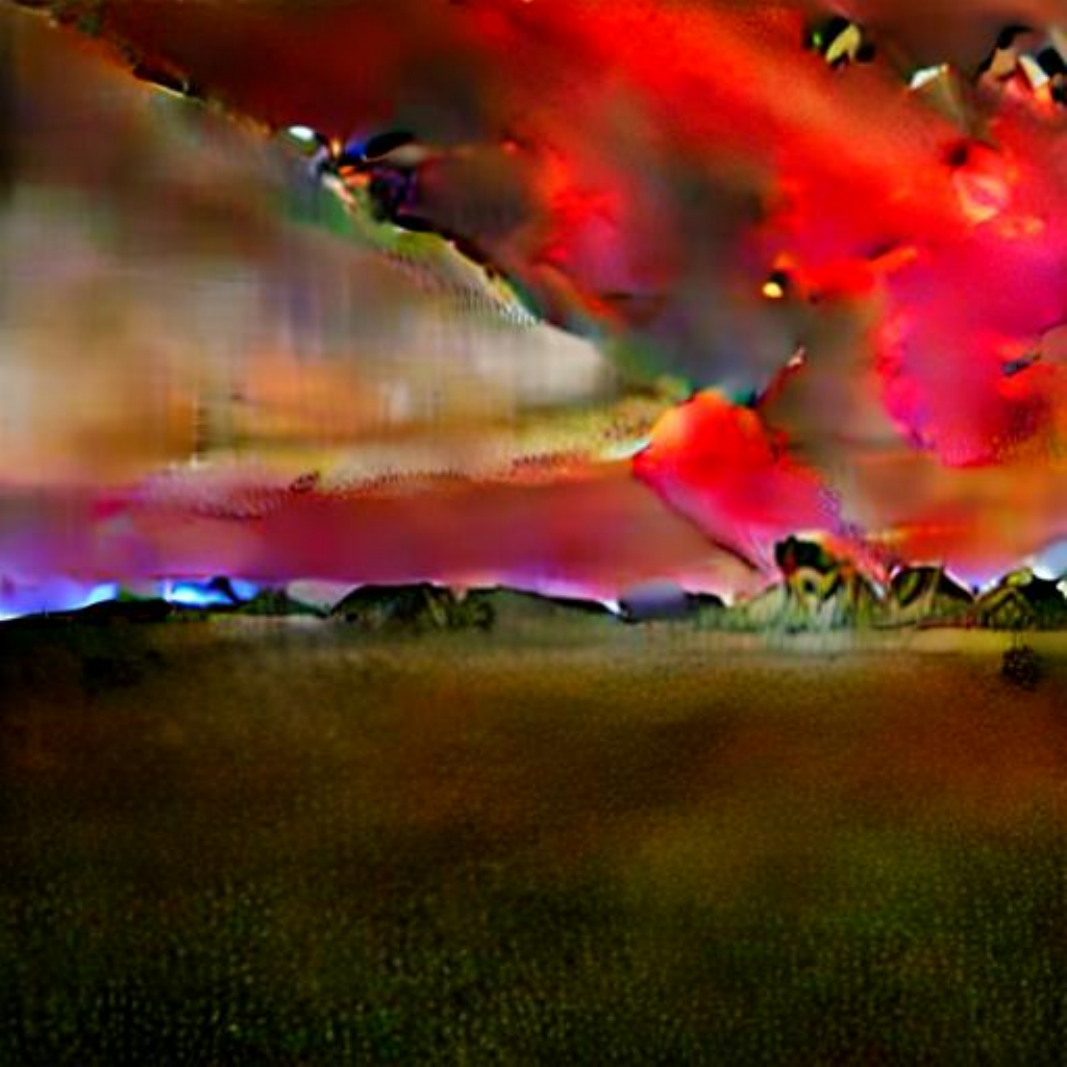}} & 
        \noindent\parbox[c]{0.14\columnwidth}{\includegraphics[width=0.14\columnwidth]{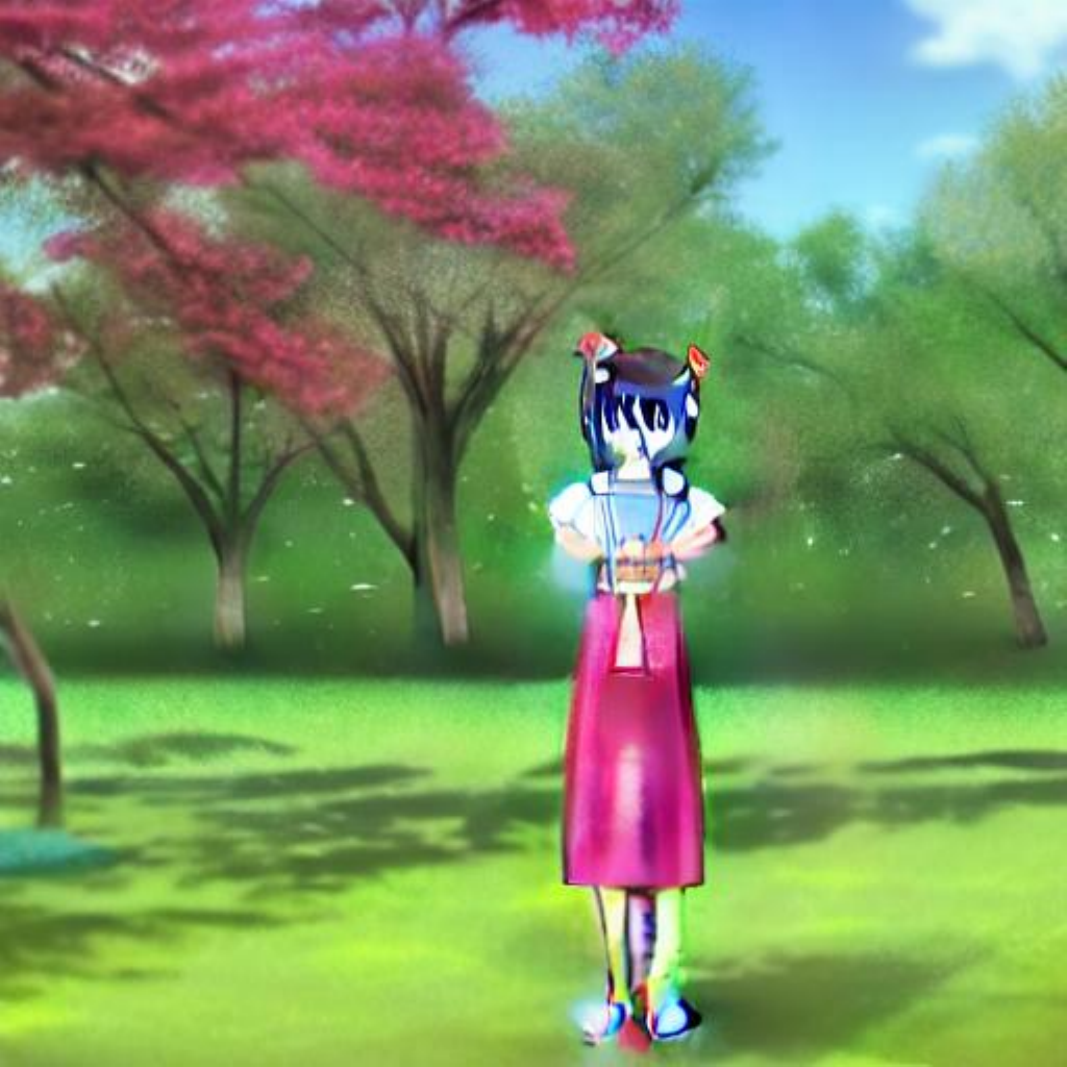}} & 
        \noindent\parbox[c]{0.14\columnwidth}{\includegraphics[width=0.14\columnwidth]{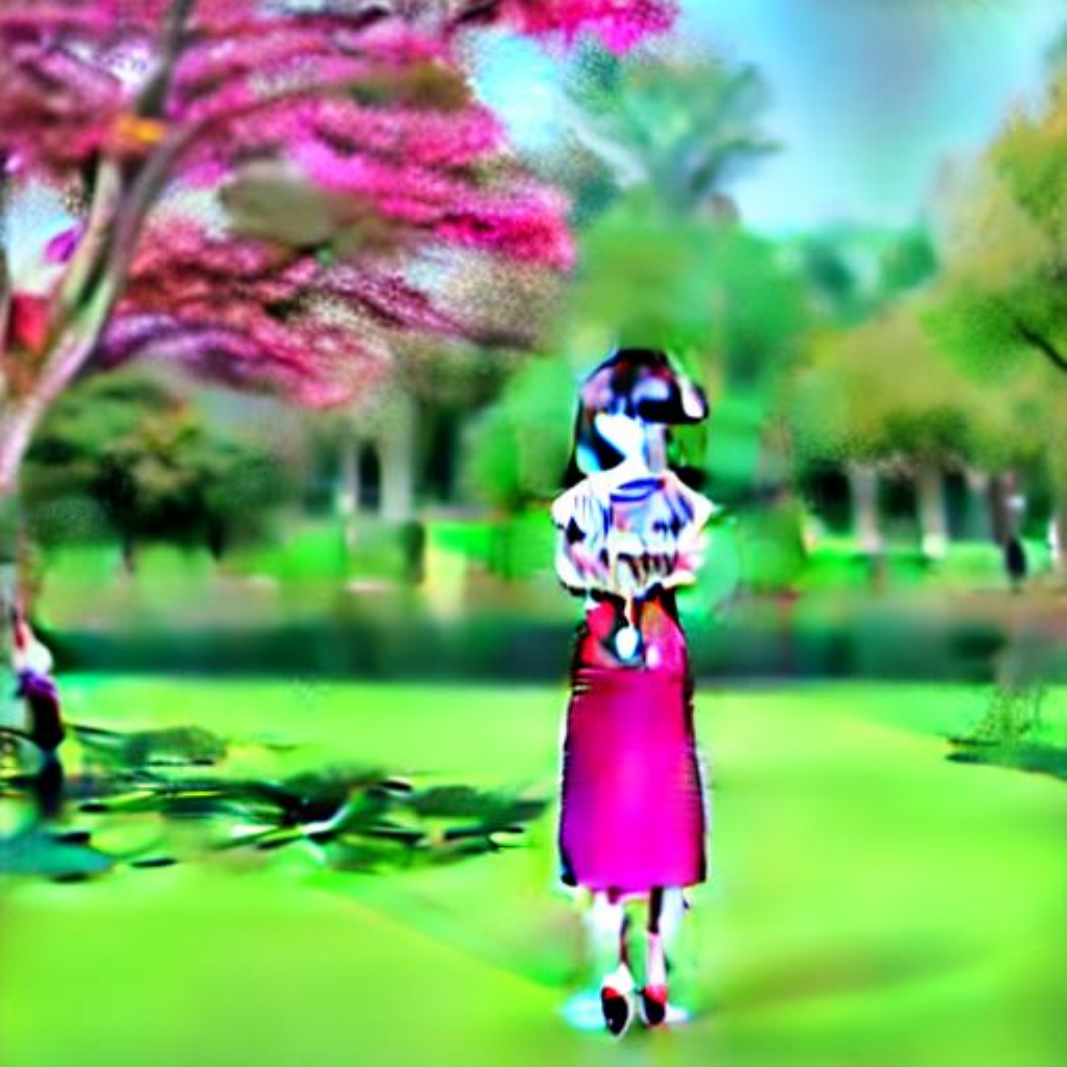}} & 
        \noindent\parbox[c]{0.14\columnwidth}{\includegraphics[width=0.14\columnwidth]{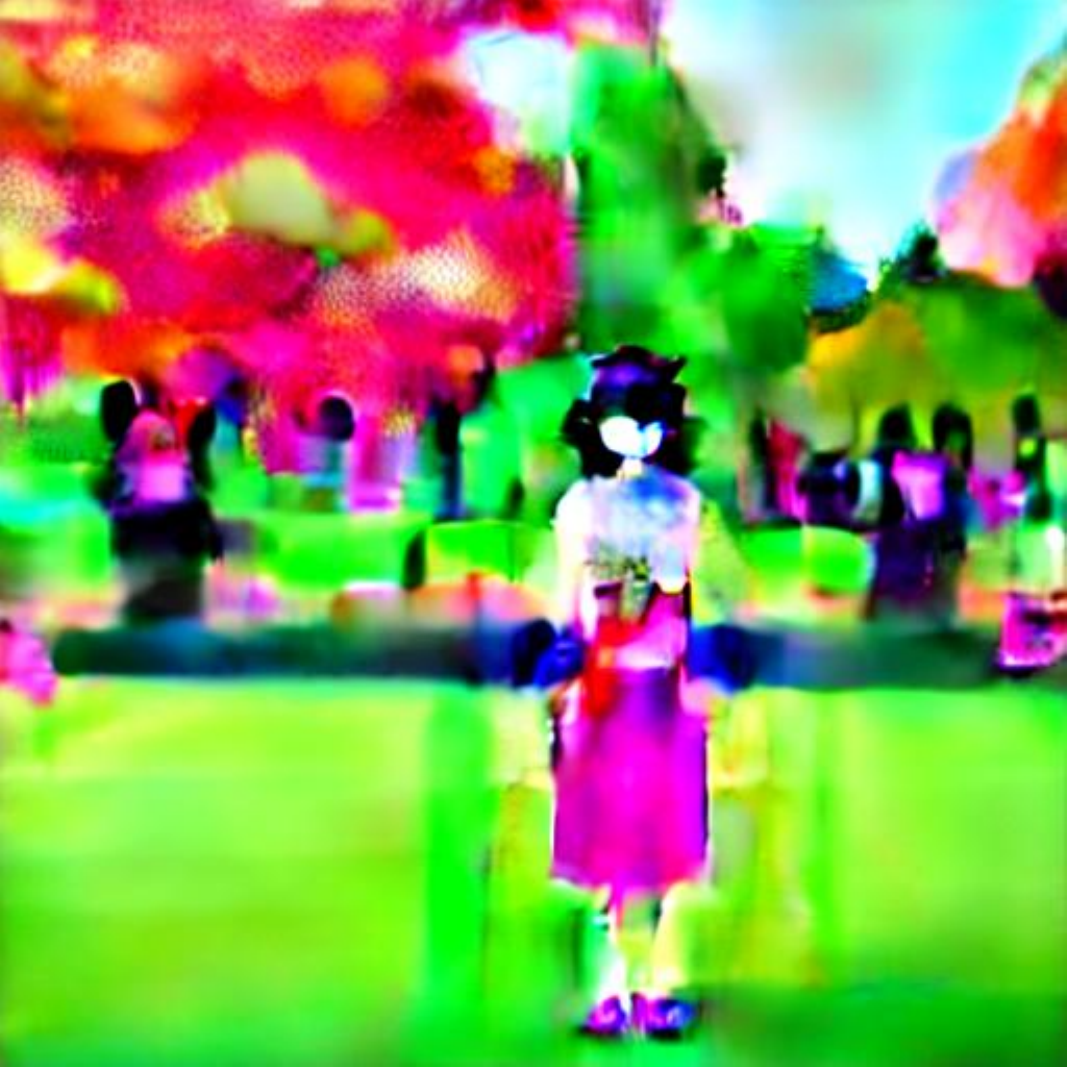}} \\

        \shortstack[l]{\tiny 15 steps} &
        \noindent\parbox[c]{0.14\columnwidth}{\includegraphics[width=0.14\columnwidth]{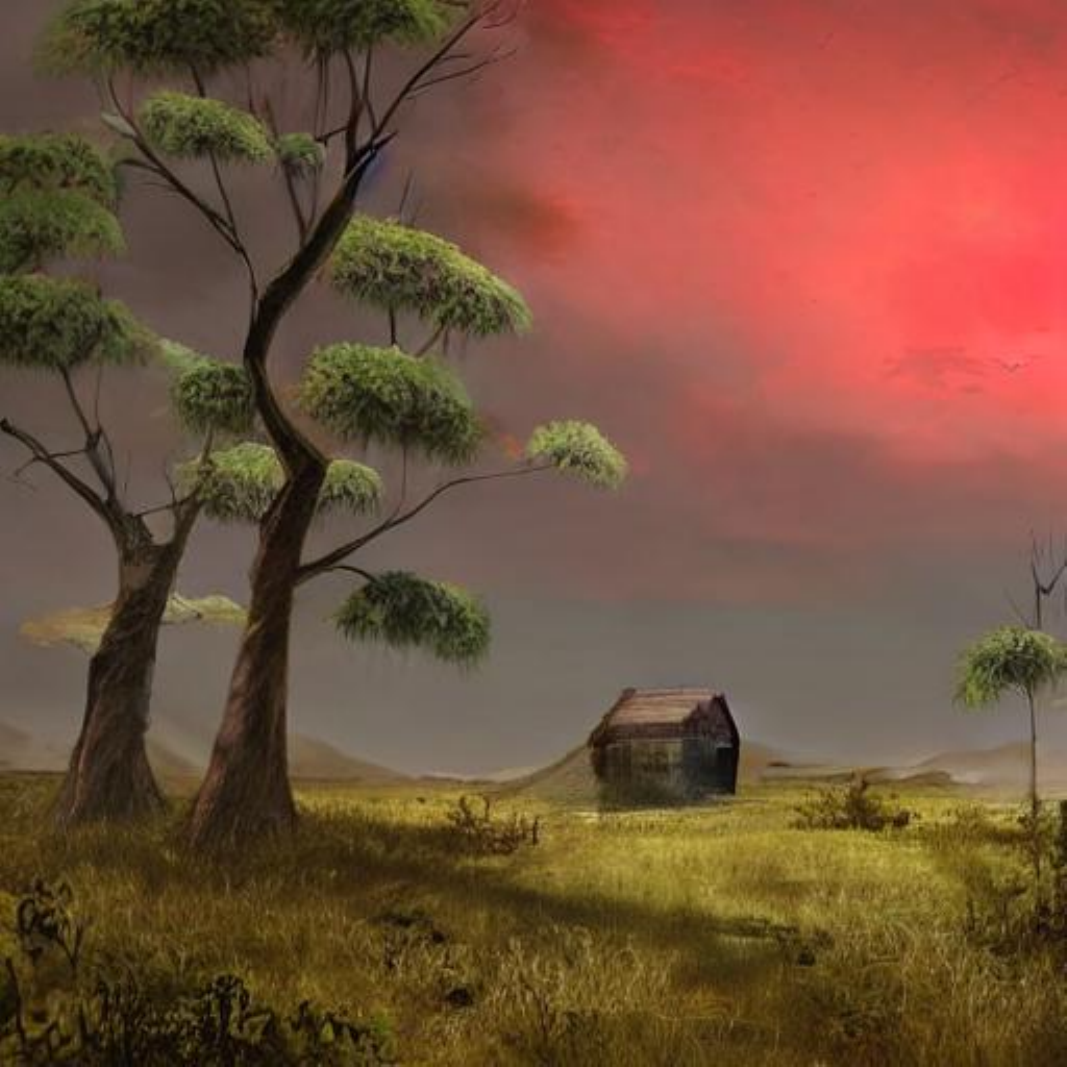}} & 
        \noindent\parbox[c]{0.14\columnwidth}{\includegraphics[width=0.14\columnwidth]{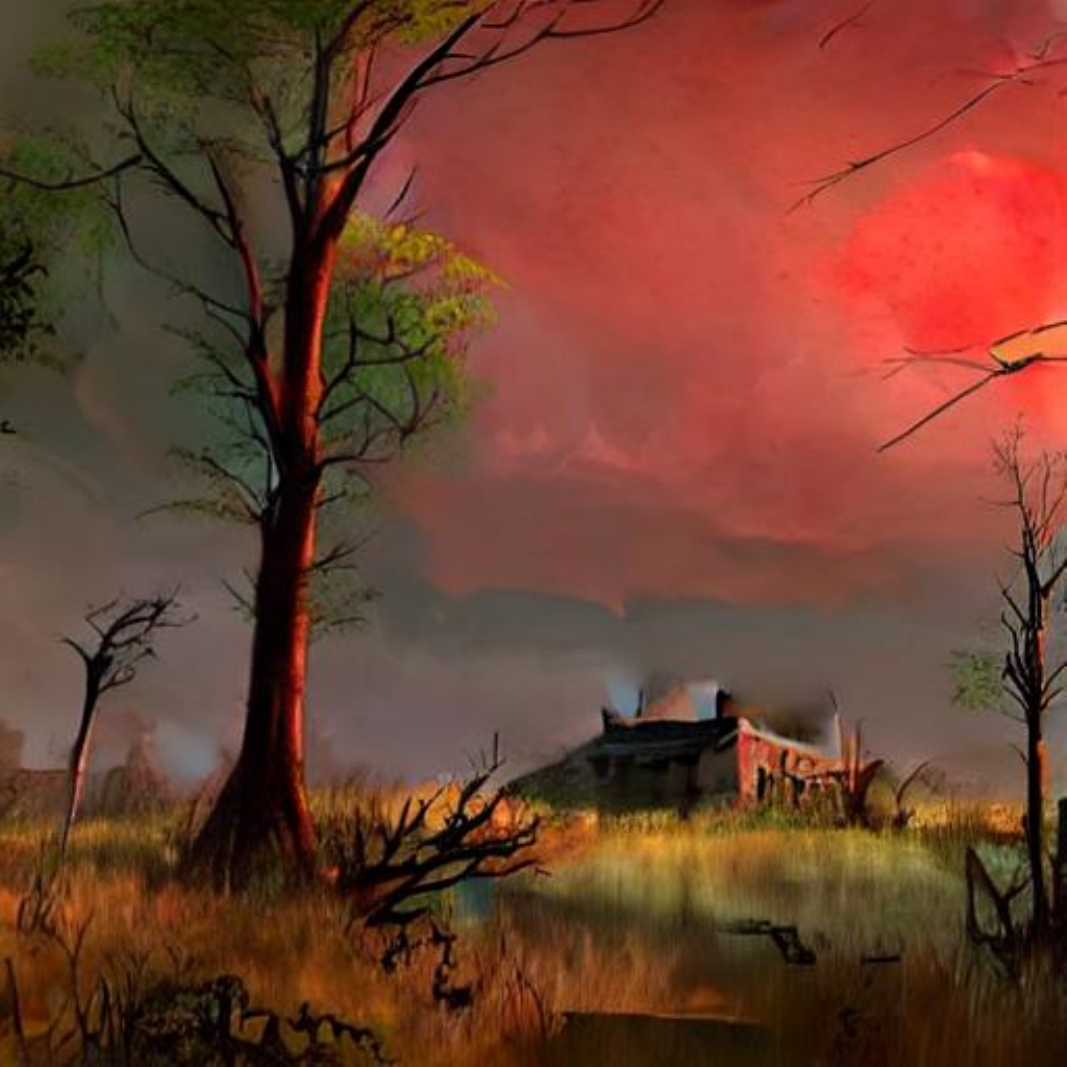}} & 
        \noindent\parbox[c]{0.14\columnwidth}{\includegraphics[width=0.14\columnwidth]{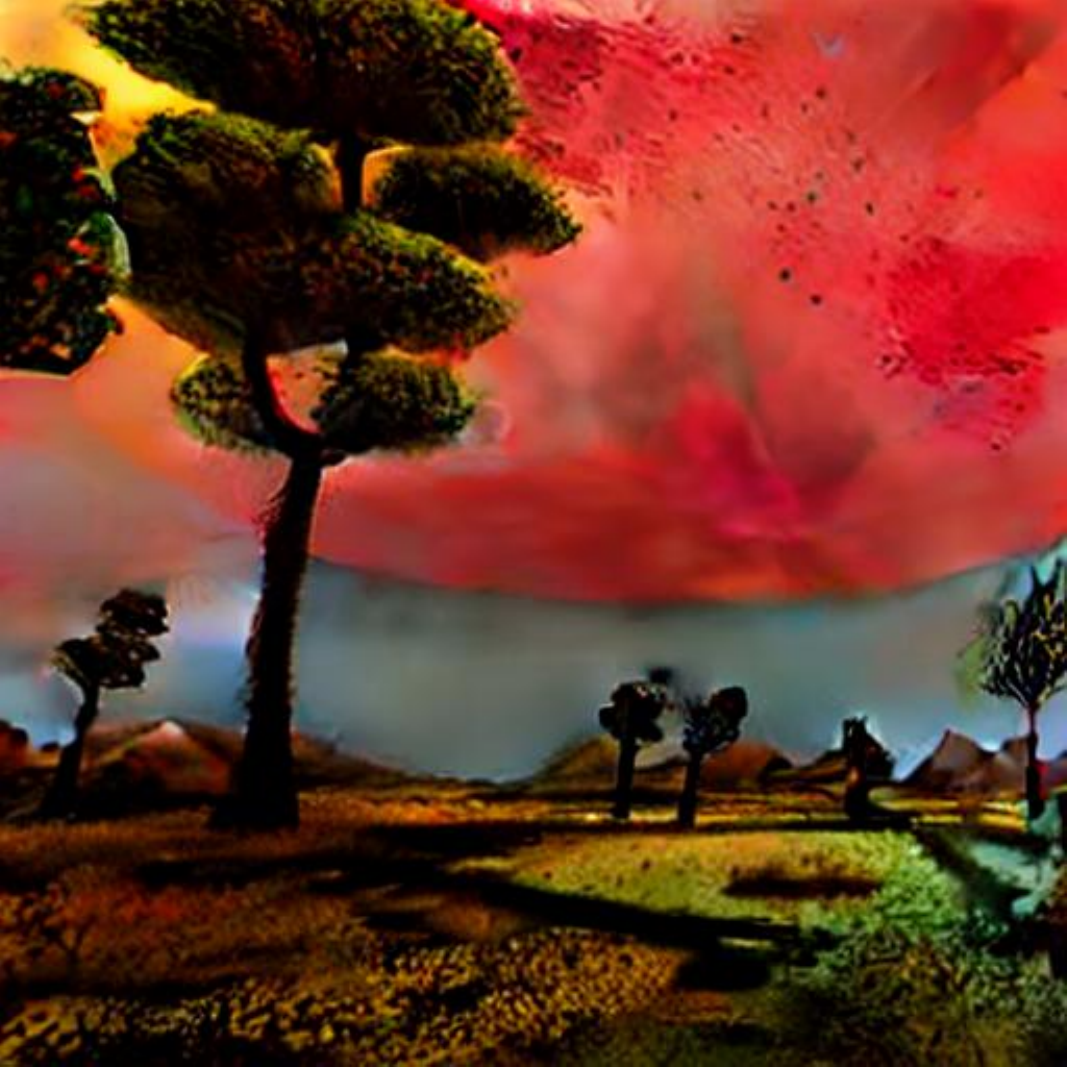}} & 
        \noindent\parbox[c]{0.14\columnwidth}{\includegraphics[width=0.14\columnwidth]{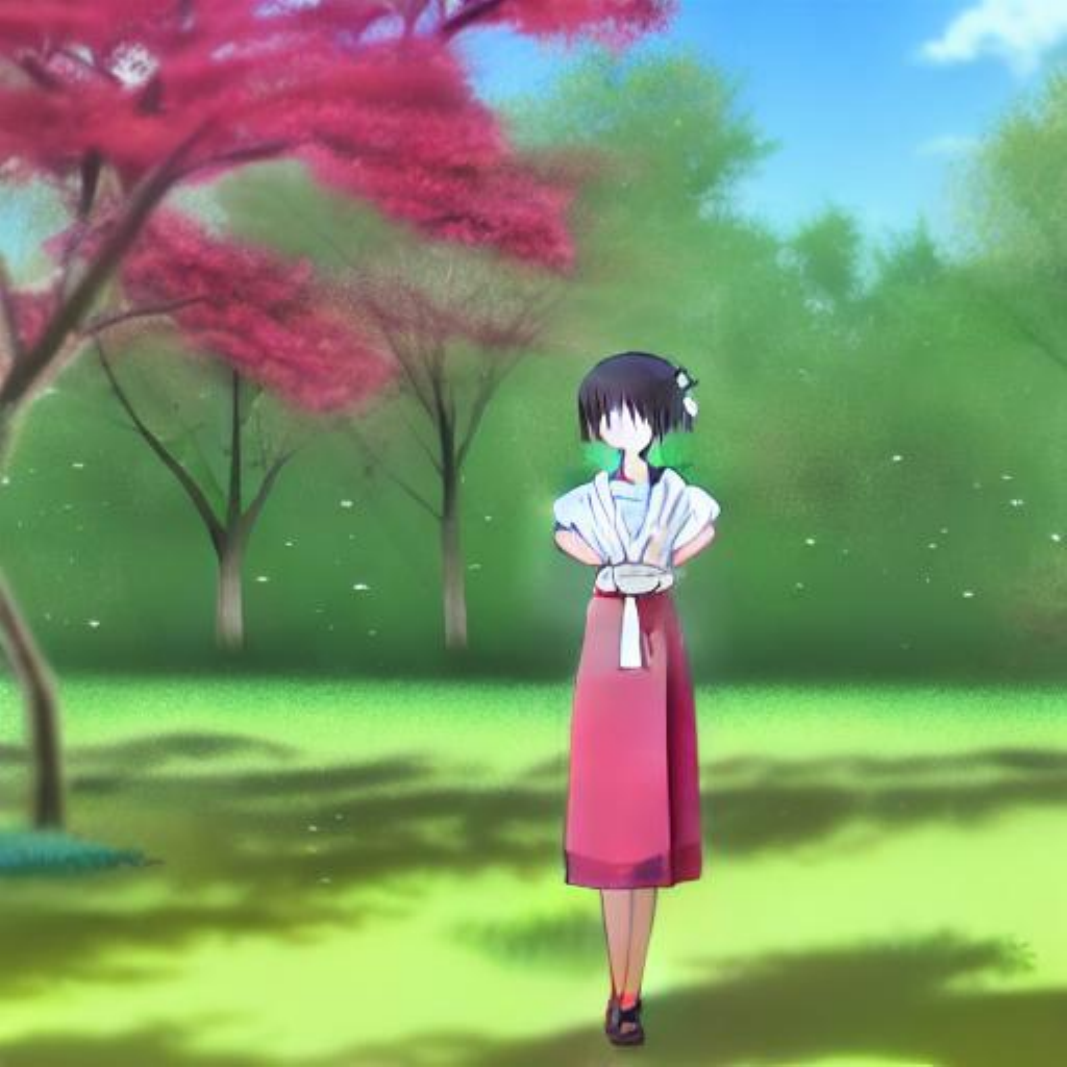}} & 
        \noindent\parbox[c]{0.14\columnwidth}{\includegraphics[width=0.14\columnwidth]{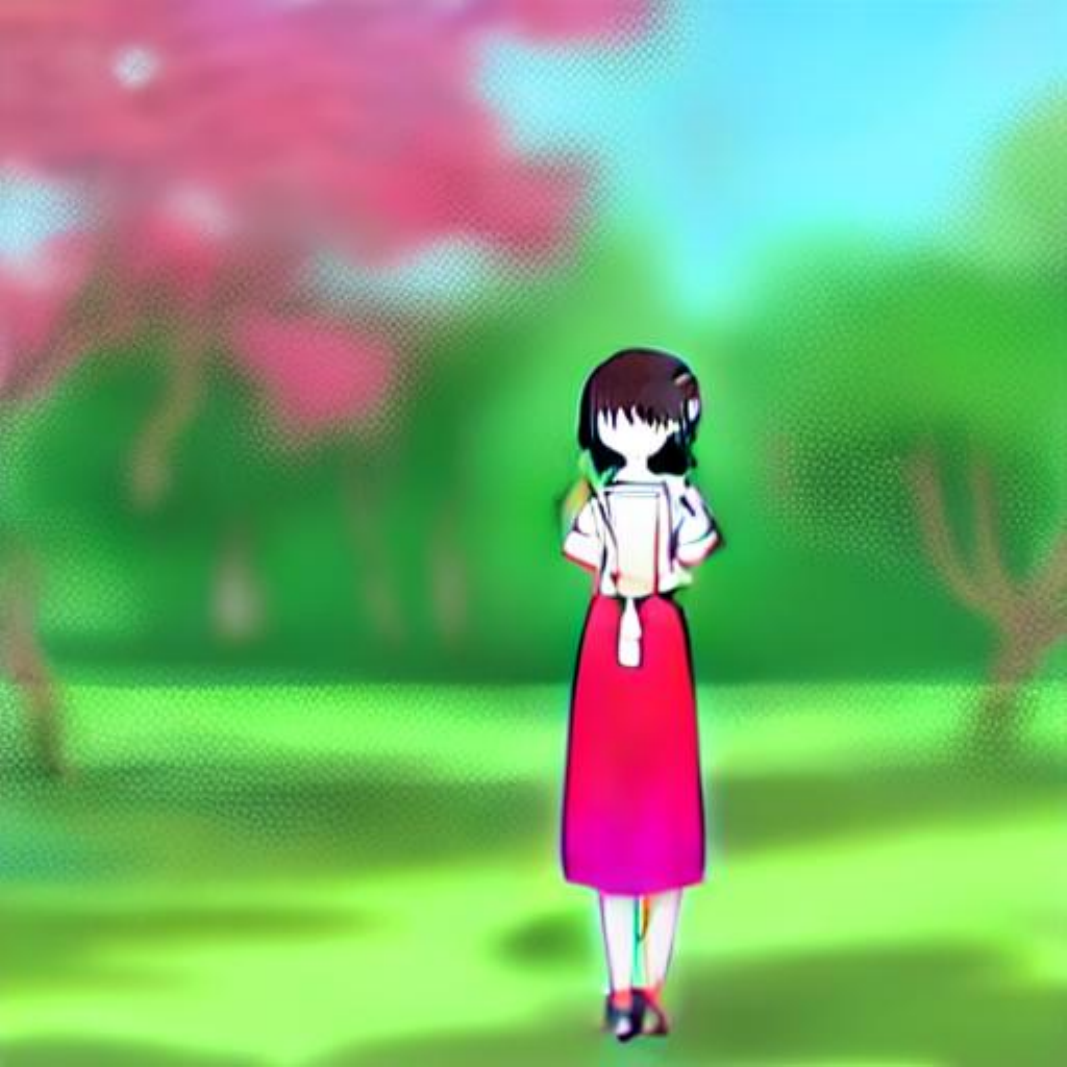}} & 
        \noindent\parbox[c]{0.14\columnwidth}{\includegraphics[width=0.14\columnwidth]{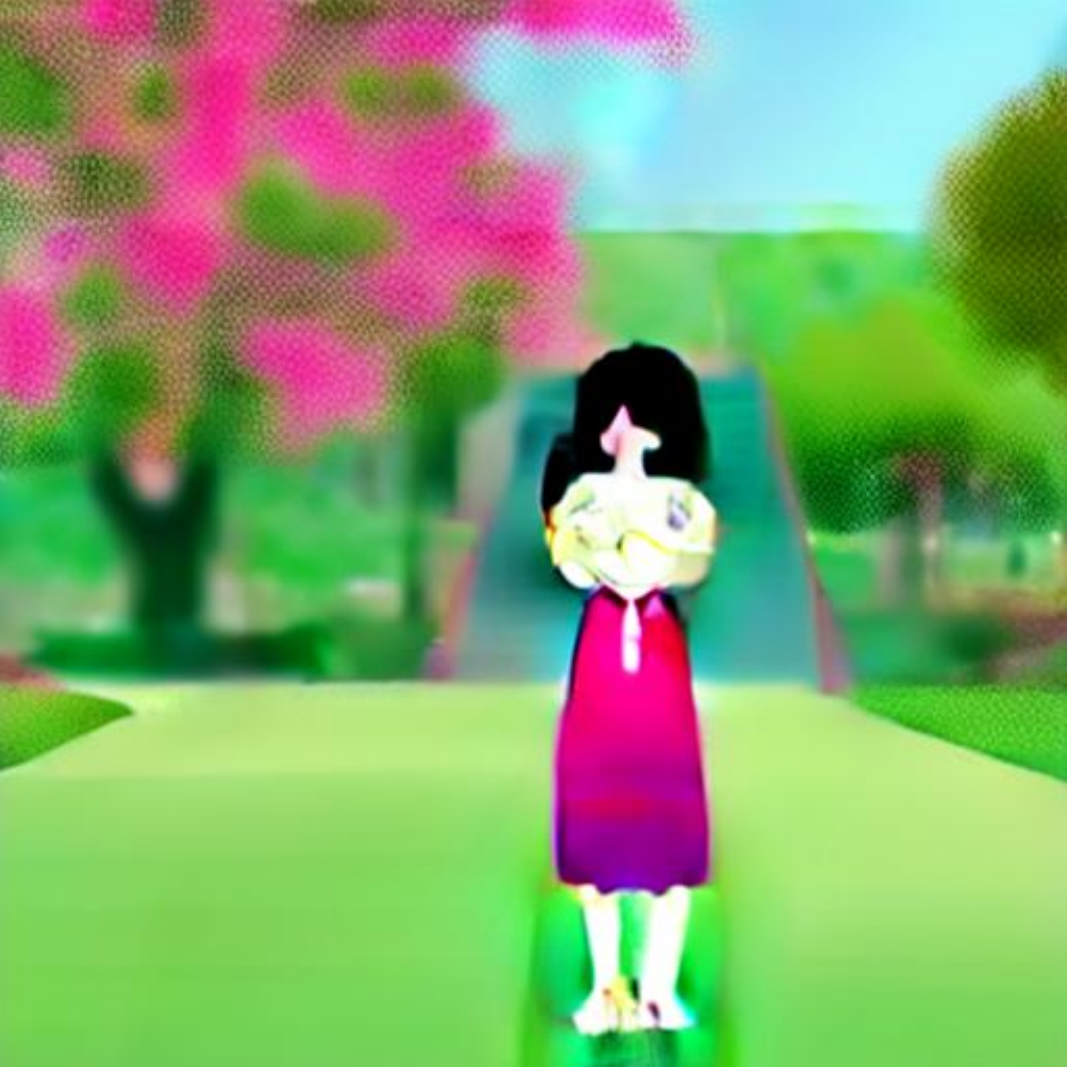}} \\

        \shortstack[l]{\tiny 20 steps} &
        \noindent\parbox[c]{0.14\columnwidth}{\includegraphics[width=0.14\columnwidth]{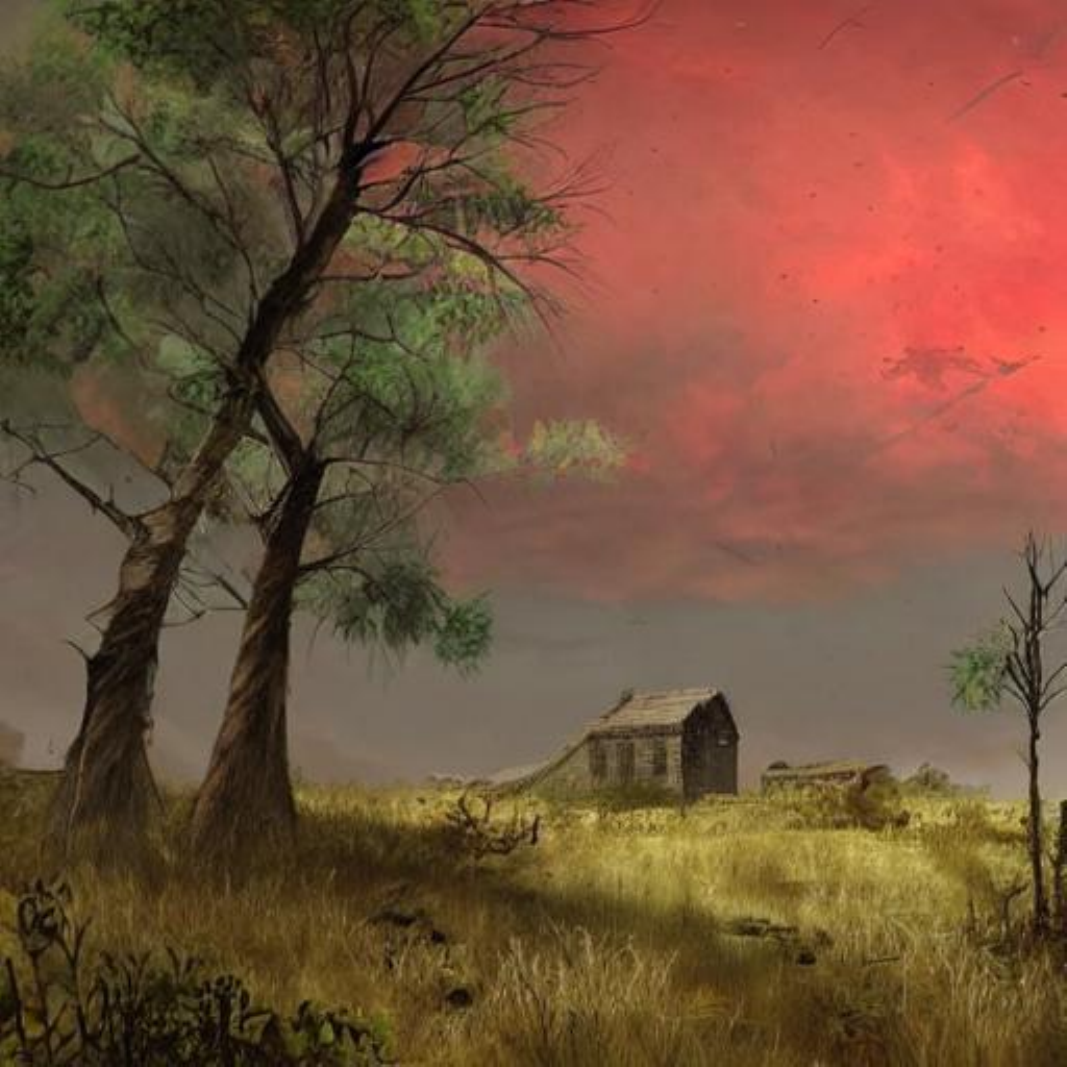}} & 
        \noindent\parbox[c]{0.14\columnwidth}{\includegraphics[width=0.14\columnwidth]{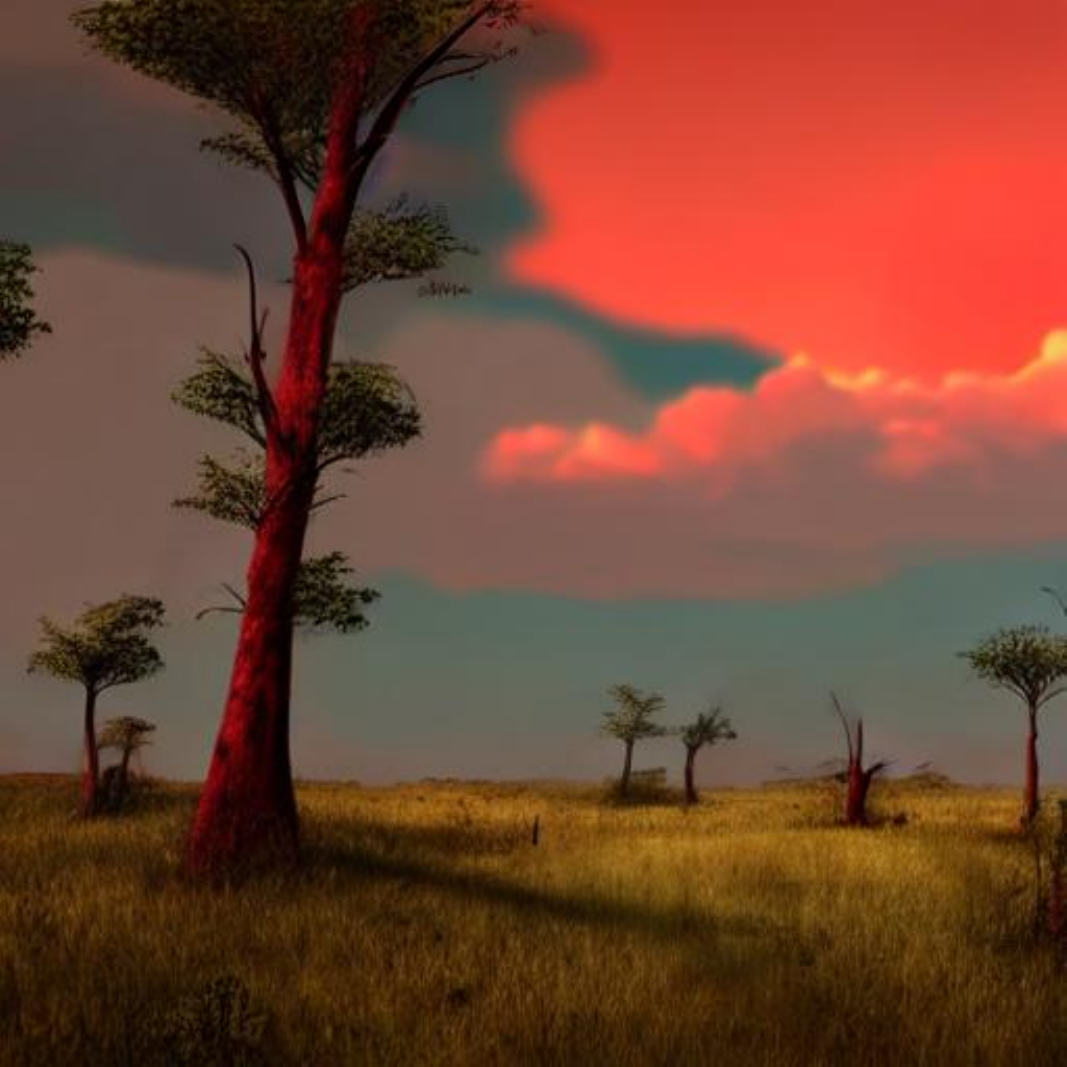}} & 
        \noindent\parbox[c]{0.14\columnwidth}{\includegraphics[width=0.14\columnwidth]{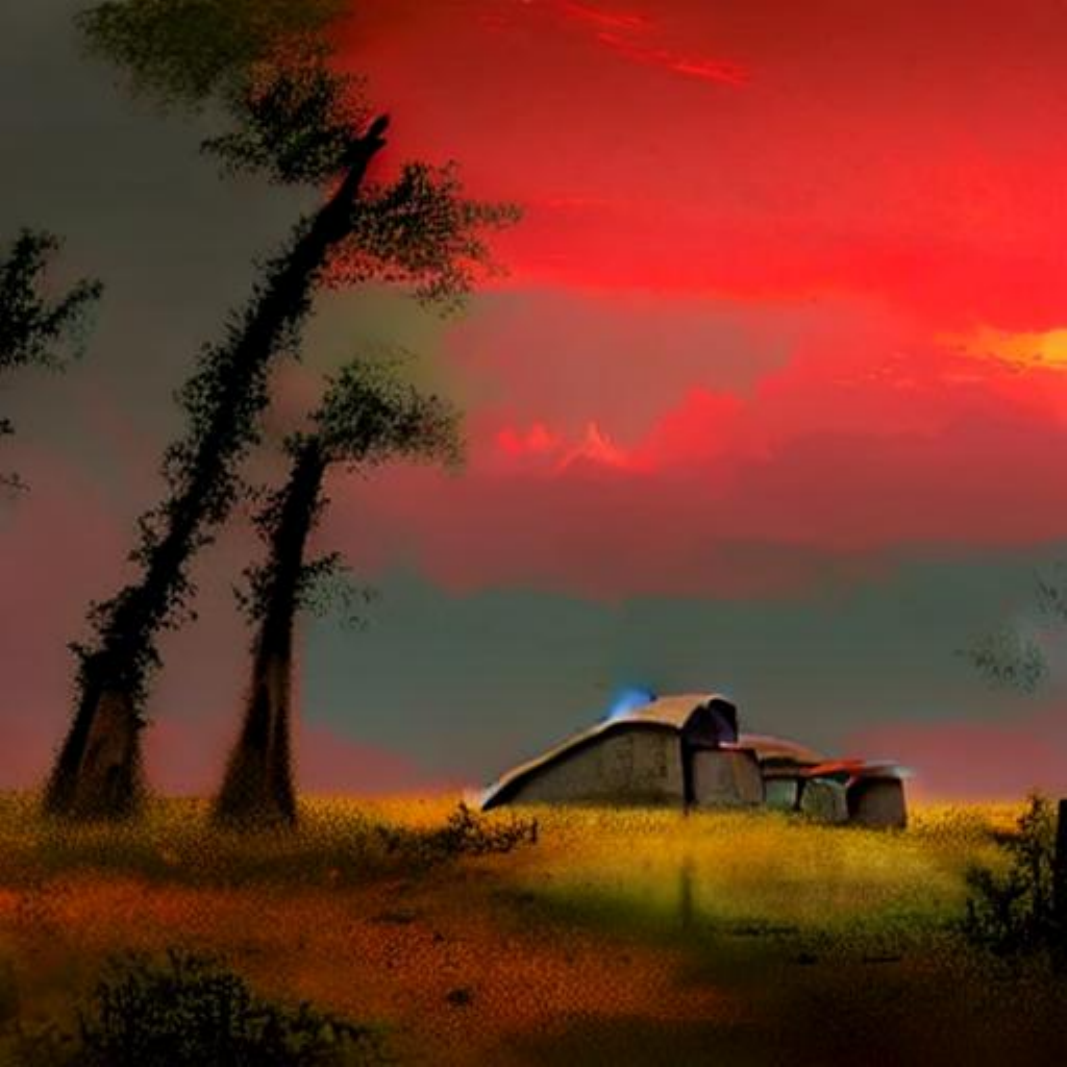}} & 
        \noindent\parbox[c]{0.14\columnwidth}{\includegraphics[width=0.14\columnwidth]{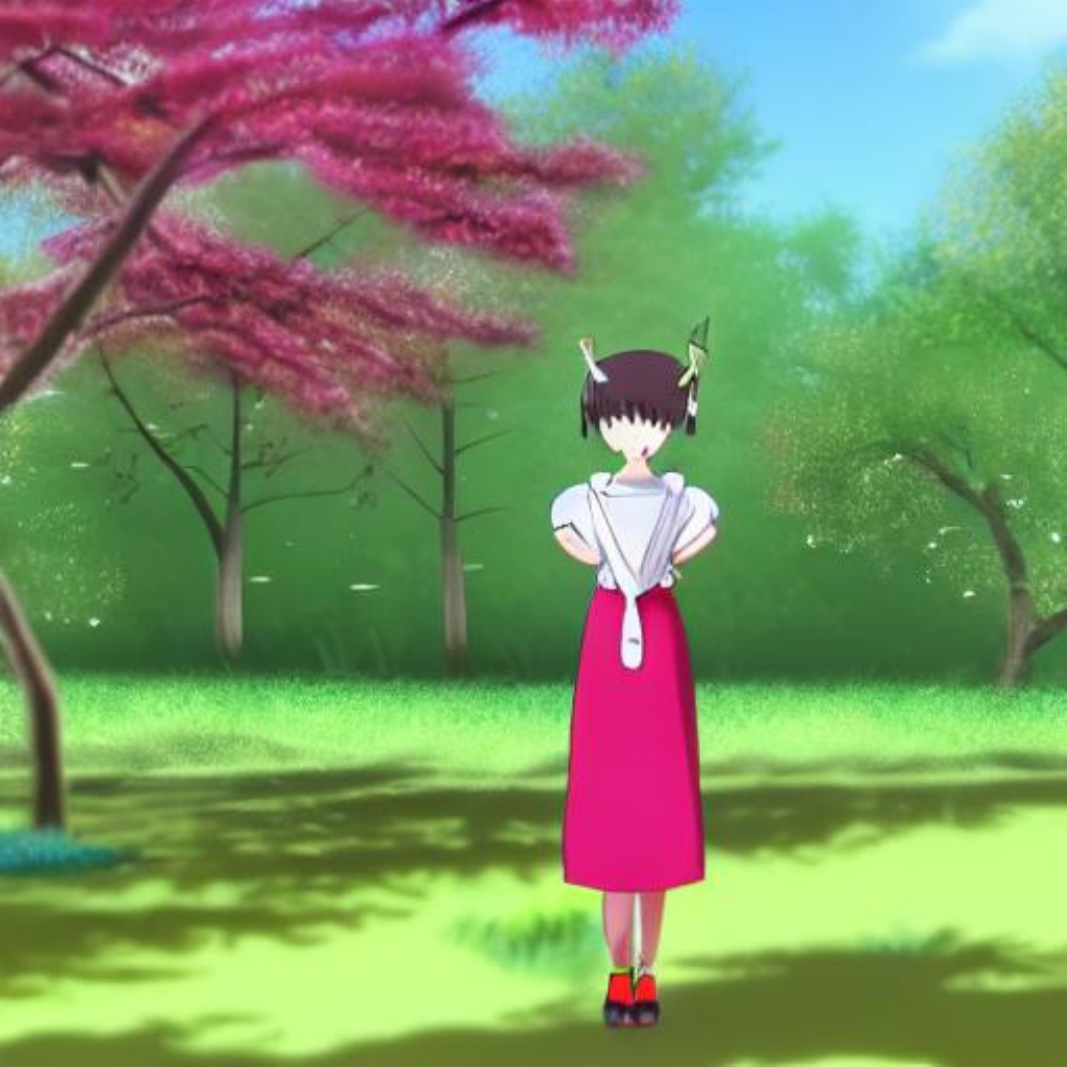}} & 
        \noindent\parbox[c]{0.14\columnwidth}{\includegraphics[width=0.14\columnwidth]{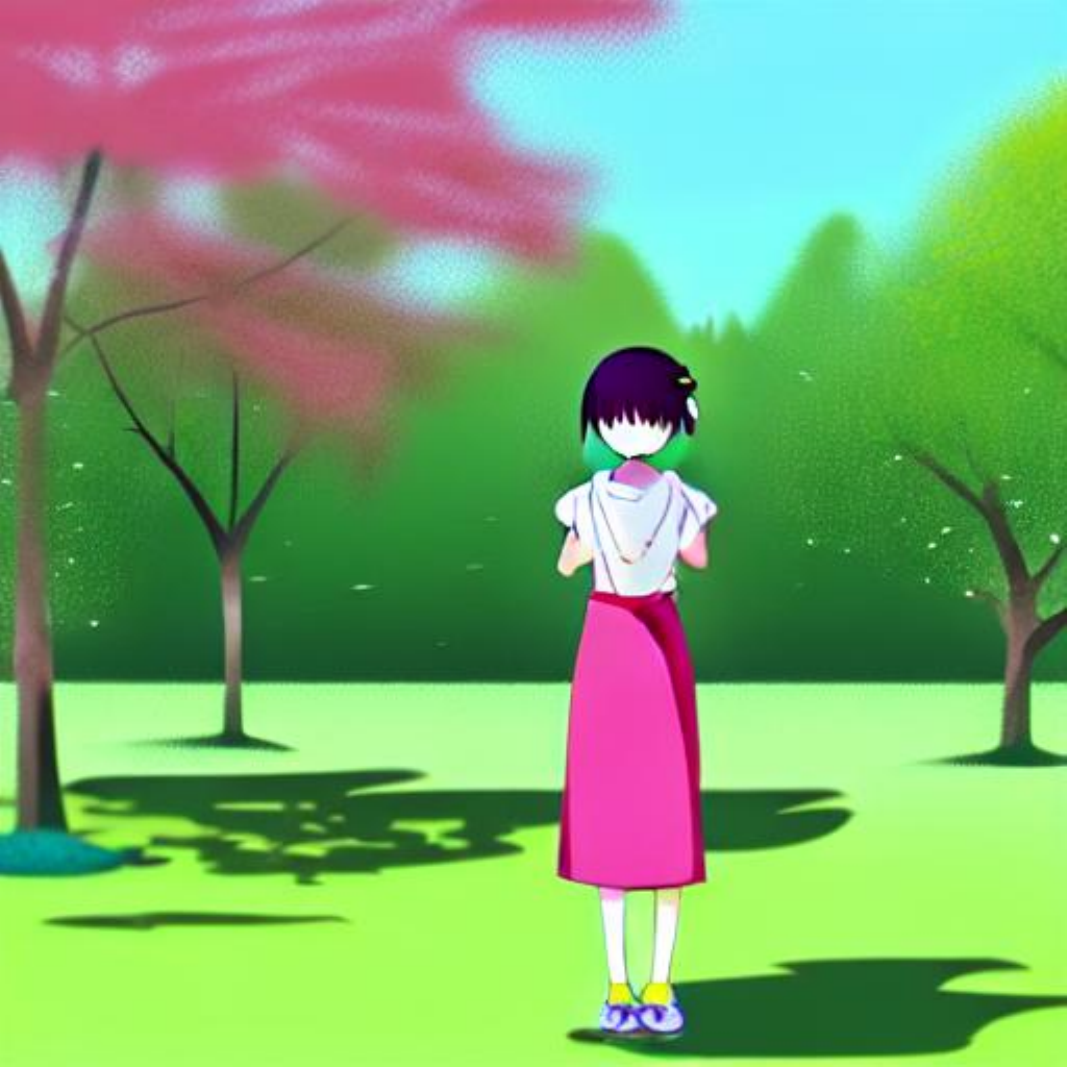}} & 
        \noindent\parbox[c]{0.14\columnwidth}{\includegraphics[width=0.14\columnwidth]{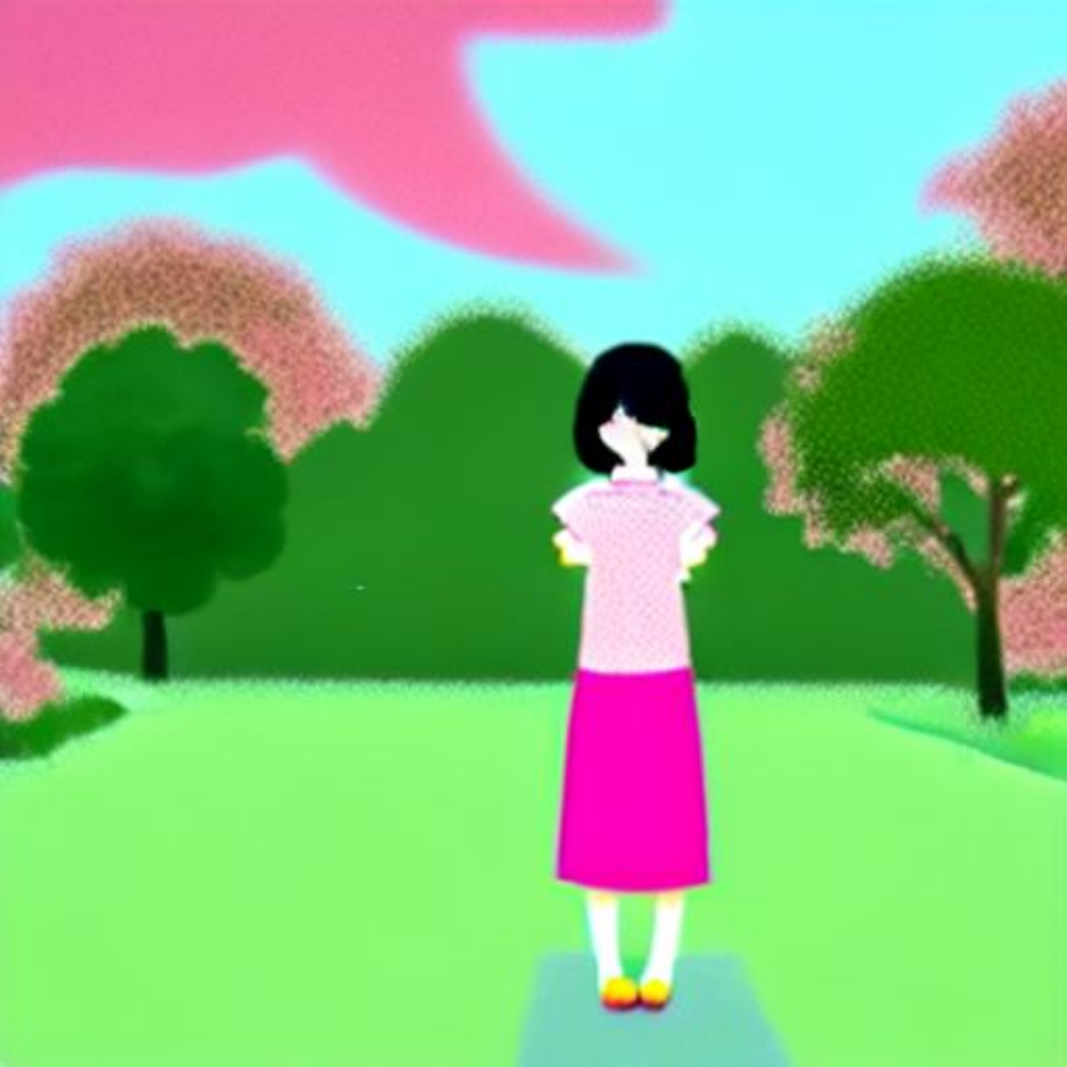}} \\

        \shortstack[l]{\tiny 40 steps} &
        \noindent\parbox[c]{0.14\columnwidth}{\includegraphics[width=0.14\columnwidth]{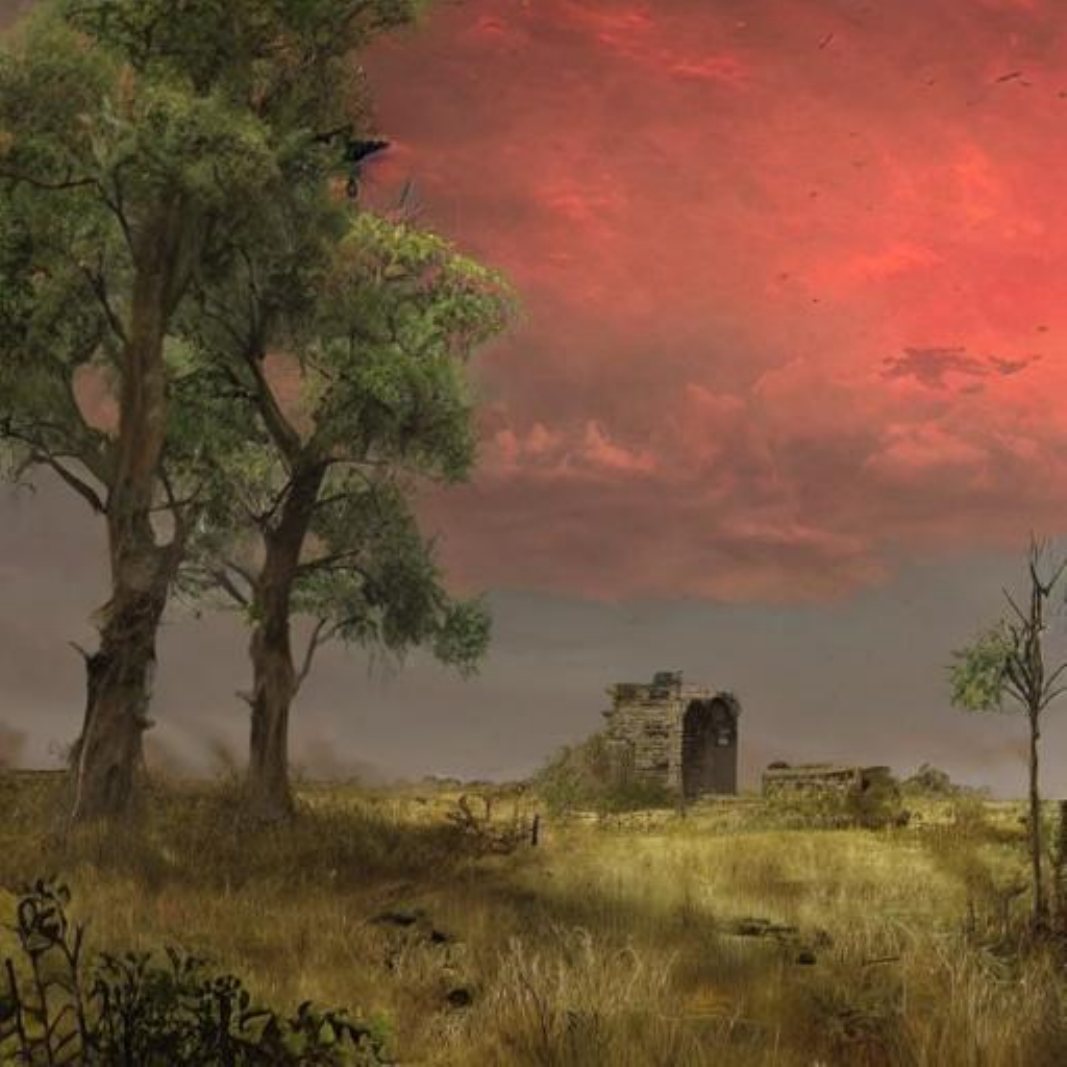}} & 
        \noindent\parbox[c]{0.14\columnwidth}{\includegraphics[width=0.14\columnwidth]{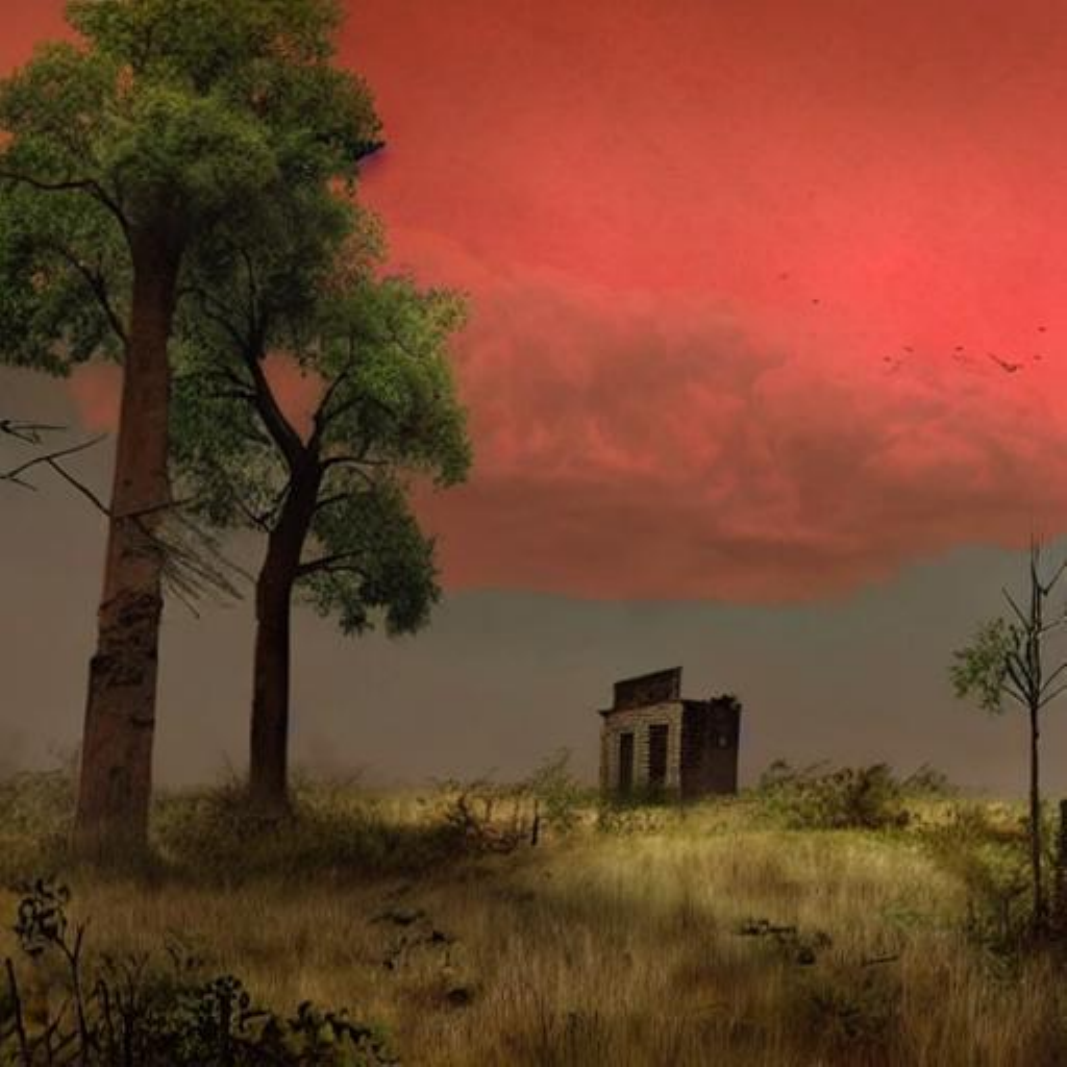}} & 
        \noindent\parbox[c]{0.14\columnwidth}{\includegraphics[width=0.14\columnwidth]{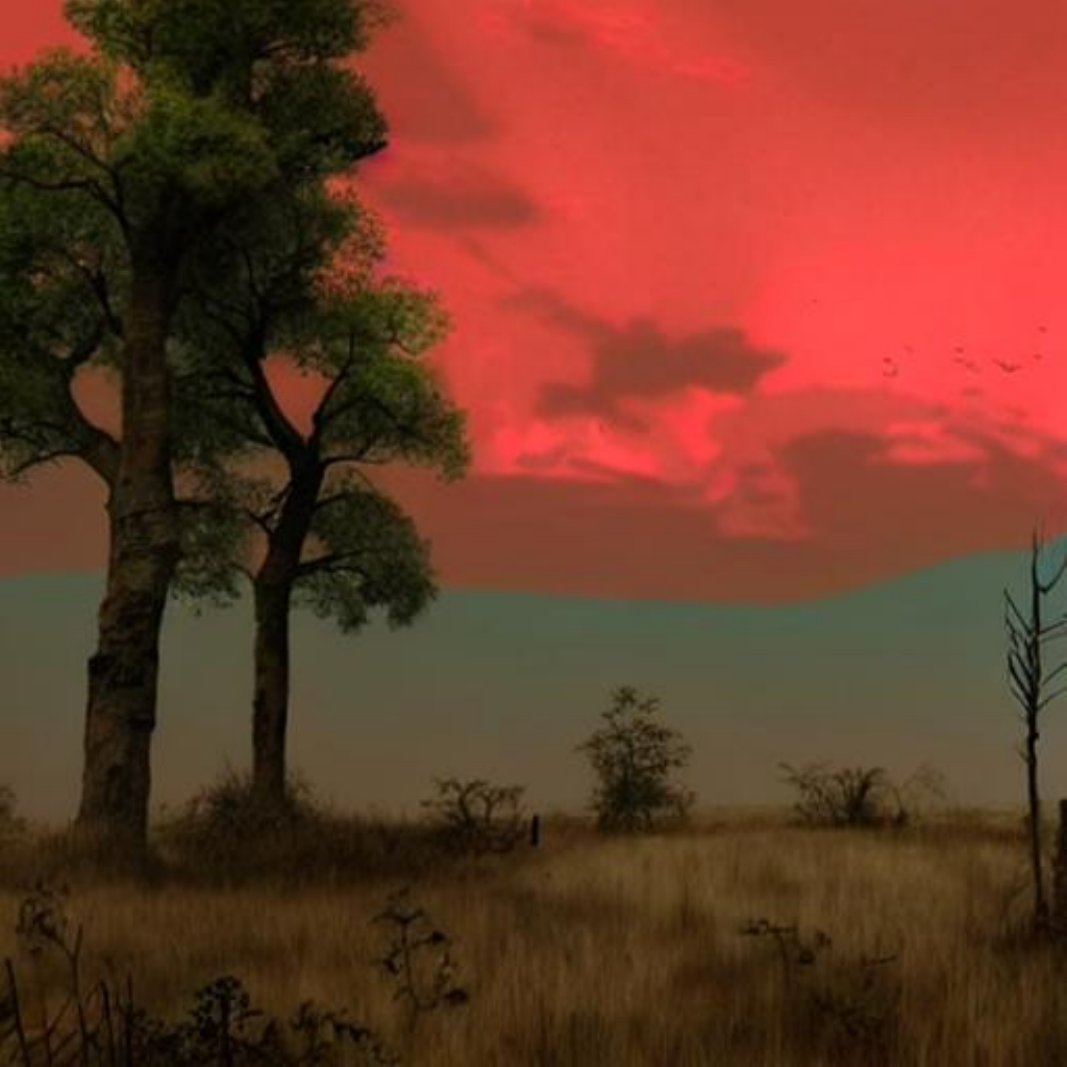}} & 
        \noindent\parbox[c]{0.14\columnwidth}{\includegraphics[width=0.14\columnwidth]{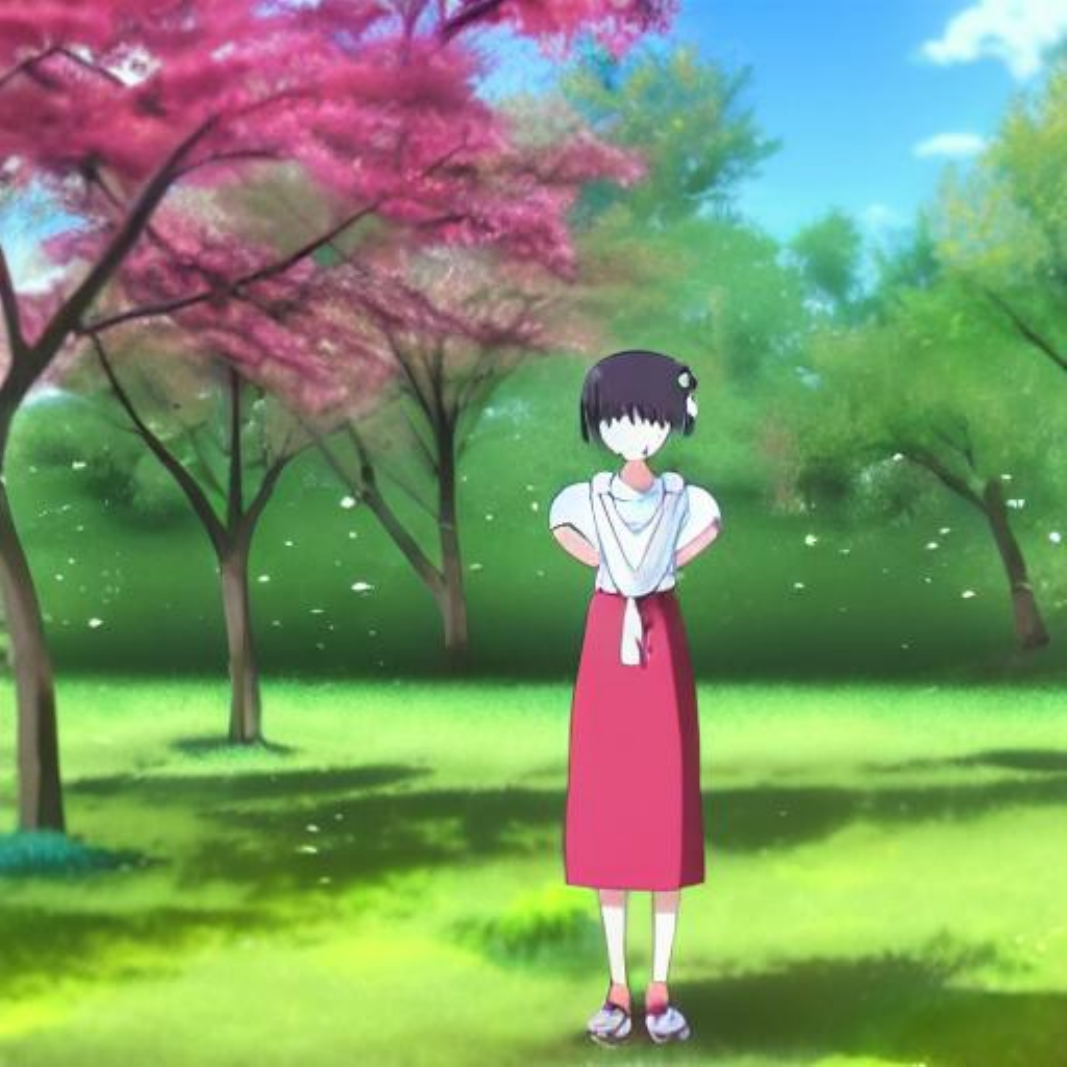}} & 
        \noindent\parbox[c]{0.14\columnwidth}{\includegraphics[width=0.14\columnwidth]{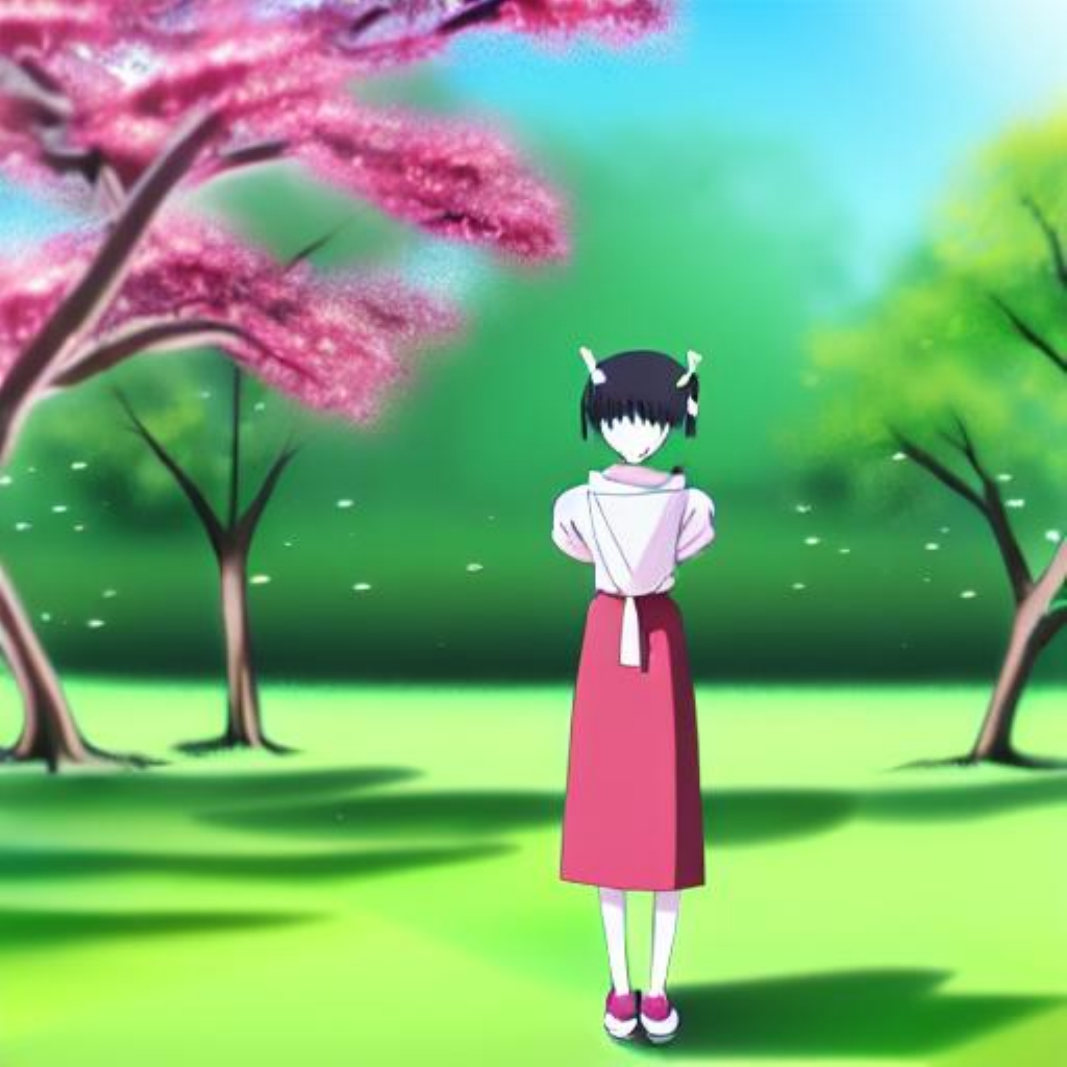}} & 
        \noindent\parbox[c]{0.14\columnwidth}{\includegraphics[width=0.14\columnwidth]{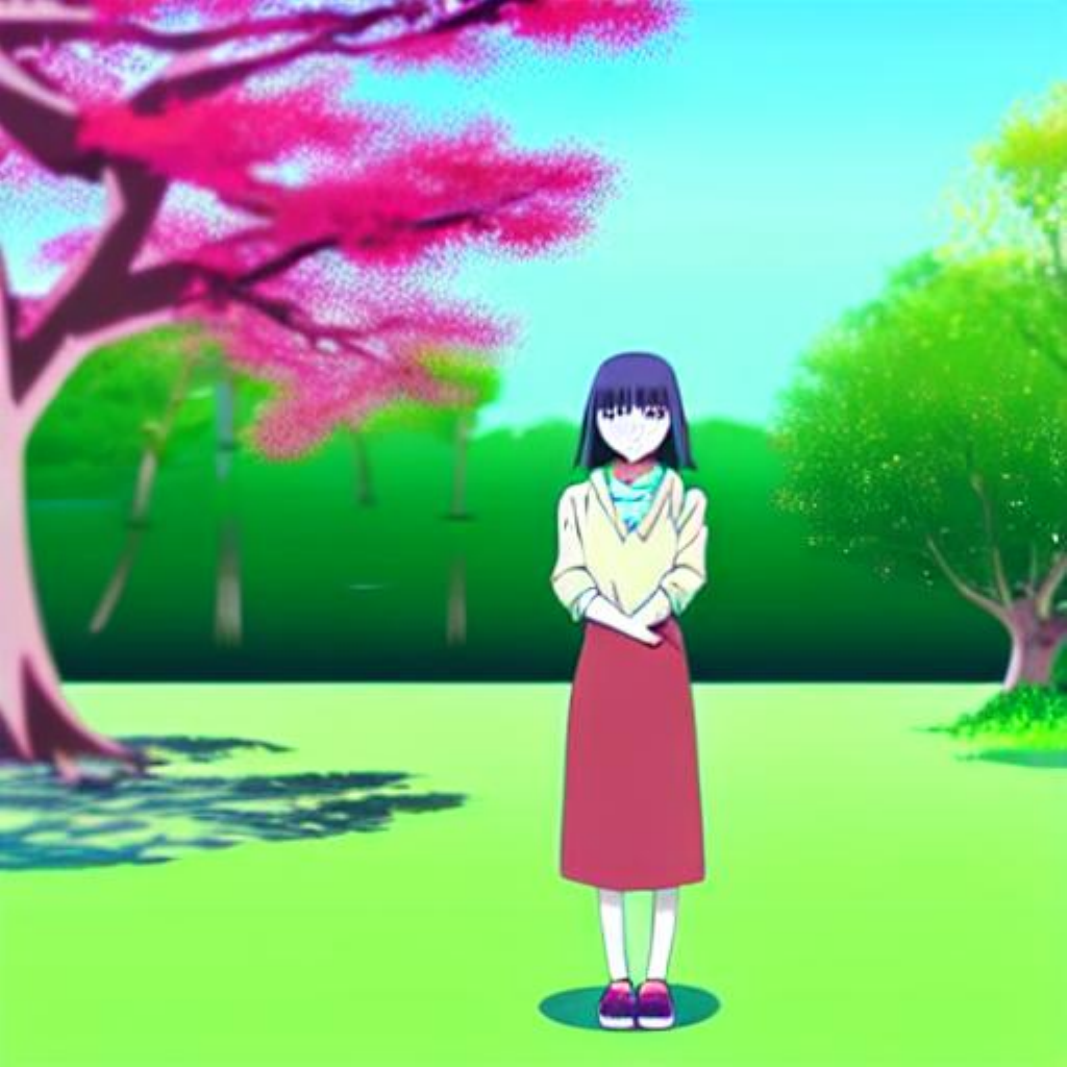}} \\
    \end{tabu}
    \caption{Comparison of samples generated from Stable Diffusion 1.5 \protect\footnotemark using PLMS4 \cite{liu2022pseudo} with different sampling steps and guidance scale.}
    \label{fig:scale_step_sd15}
\end{figure}

\footnotetext{\url{https://huggingface.co/runwayml/stable-diffusion-v1-5}}


\tabulinesep=1pt
\begin{figure}
    \centering
    \begin{tabu} to \textwidth {@{}l@{\hspace{5pt}}c@{\hspace{2pt}}c@{\hspace{2pt}}c@{\hspace{4pt}}c@{\hspace{2pt}}c@{\hspace{2pt}}c@{}}
        & \multicolumn{3}{c}{\shortstack{\scriptsize "A post-apocalyptic world with ruined \\ \scriptsize buildings, overgrown vegetation, and a red sky"}}
        & \multicolumn{3}{c}{\shortstack{\scriptsize "A girl standing in a park in \\ \scriptsize Japanese animation style"}} \\


        & \multicolumn{1}{c}{\shortstack{\scriptsize $s = 7.5$}}
        & \multicolumn{1}{c}{\shortstack{\scriptsize $s = 15$}}
        & \multicolumn{1}{c}{\shortstack{\scriptsize $s = 22.5$}}
        & \multicolumn{1}{c}{\shortstack{\scriptsize $s = 7.5$}}
        & \multicolumn{1}{c}{\shortstack{\scriptsize $s = 15$}}
        & \multicolumn{1}{c}{\shortstack{\scriptsize $s = 22.5$}}
        \\
        
        \shortstack[l]{\tiny 10 steps} &
        \noindent\parbox[c]{0.14\columnwidth}{\includegraphics[width=0.14\columnwidth]{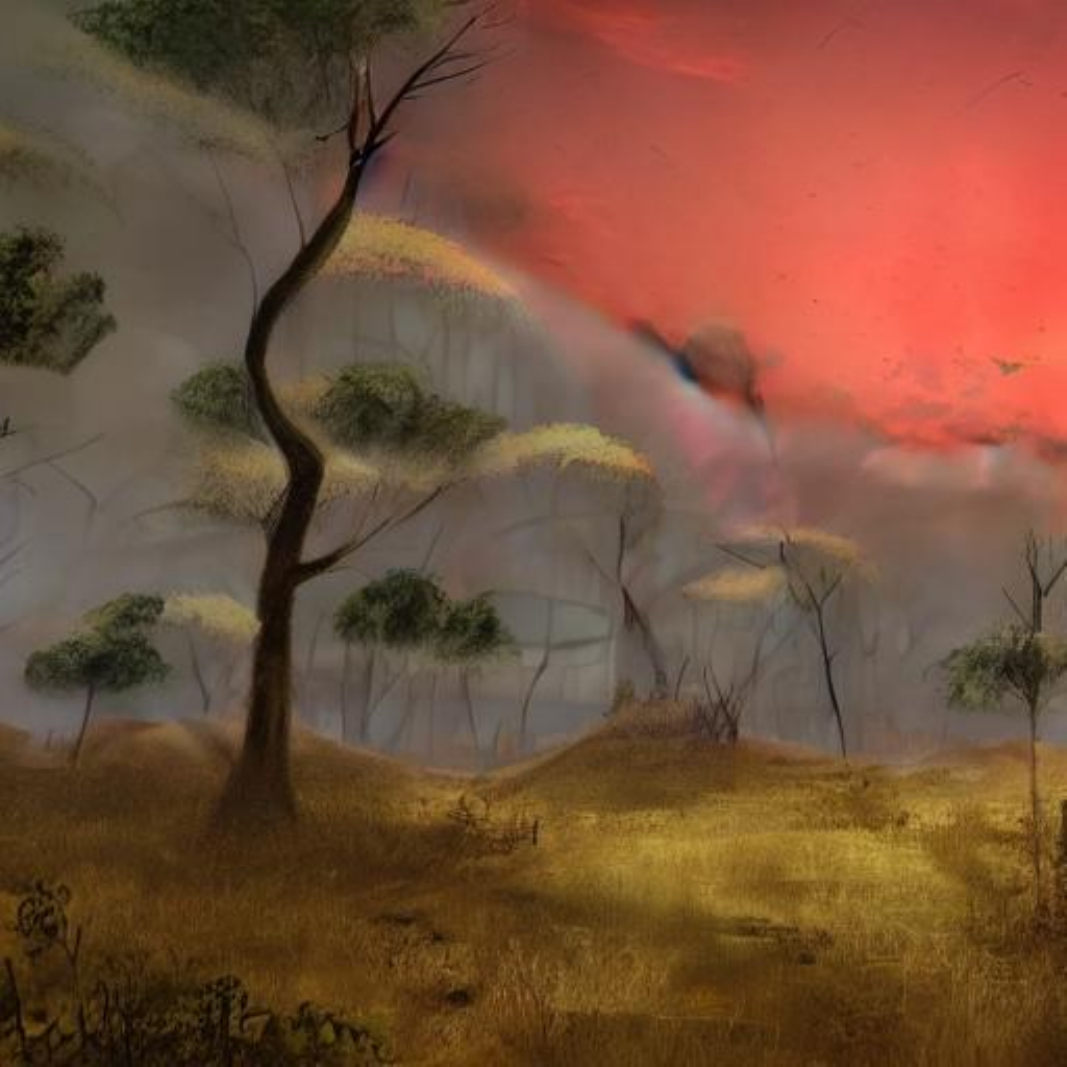}} & 
        \noindent\parbox[c]{0.14\columnwidth}{\includegraphics[width=0.14\columnwidth]{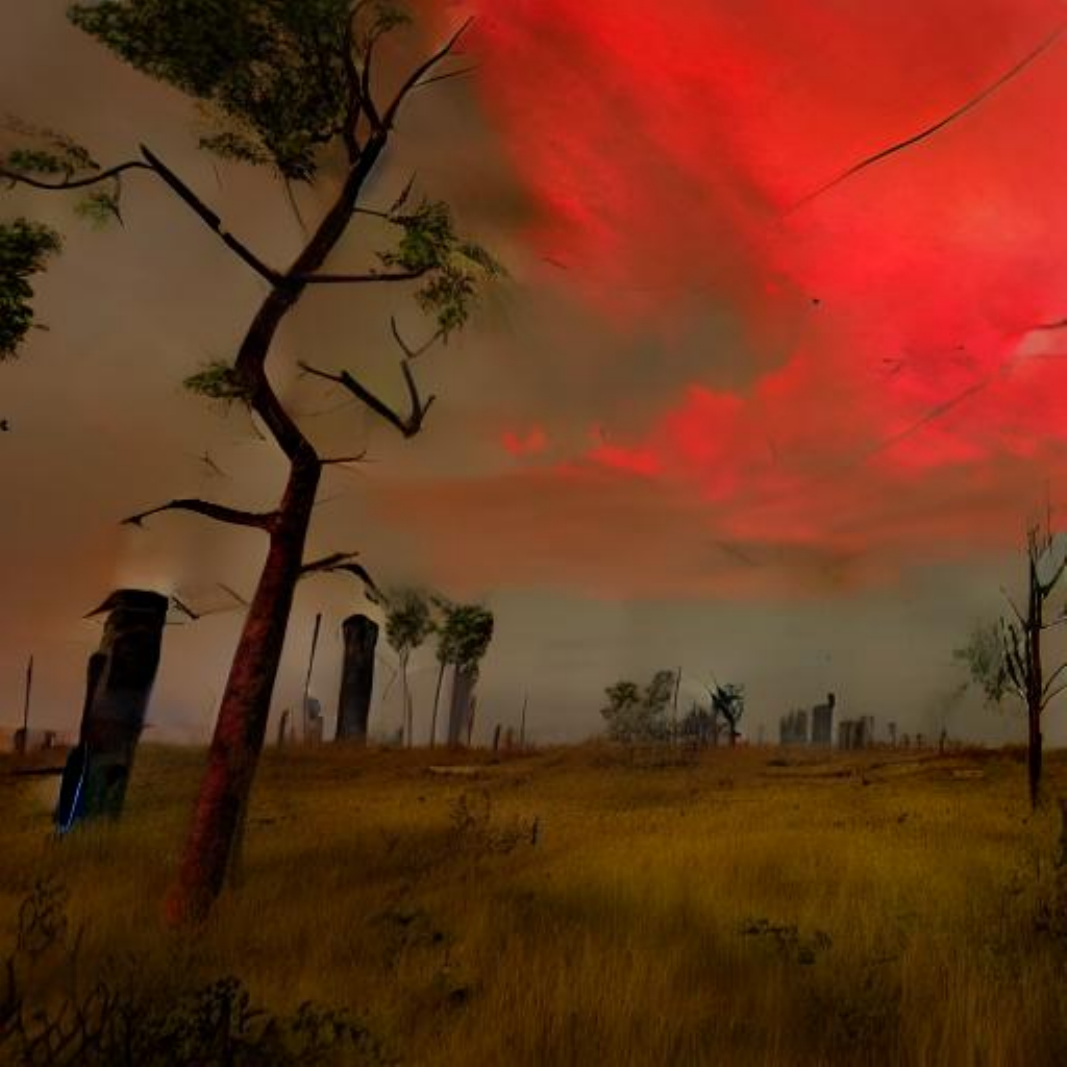}} & 
        \noindent\parbox[c]{0.14\columnwidth}{\includegraphics[width=0.14\columnwidth]{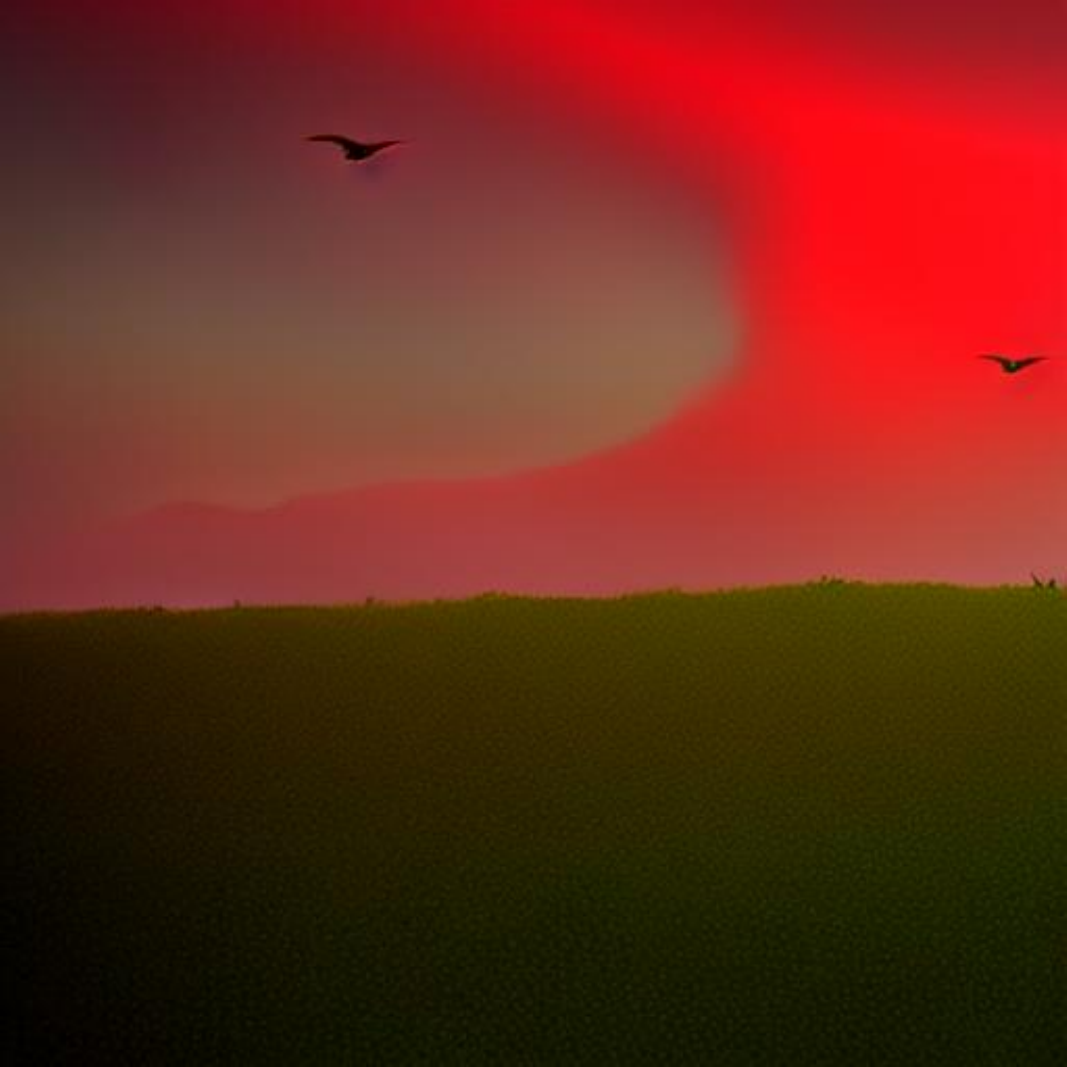}} & 
        \noindent\parbox[c]{0.14\columnwidth}{\includegraphics[width=0.14\columnwidth]{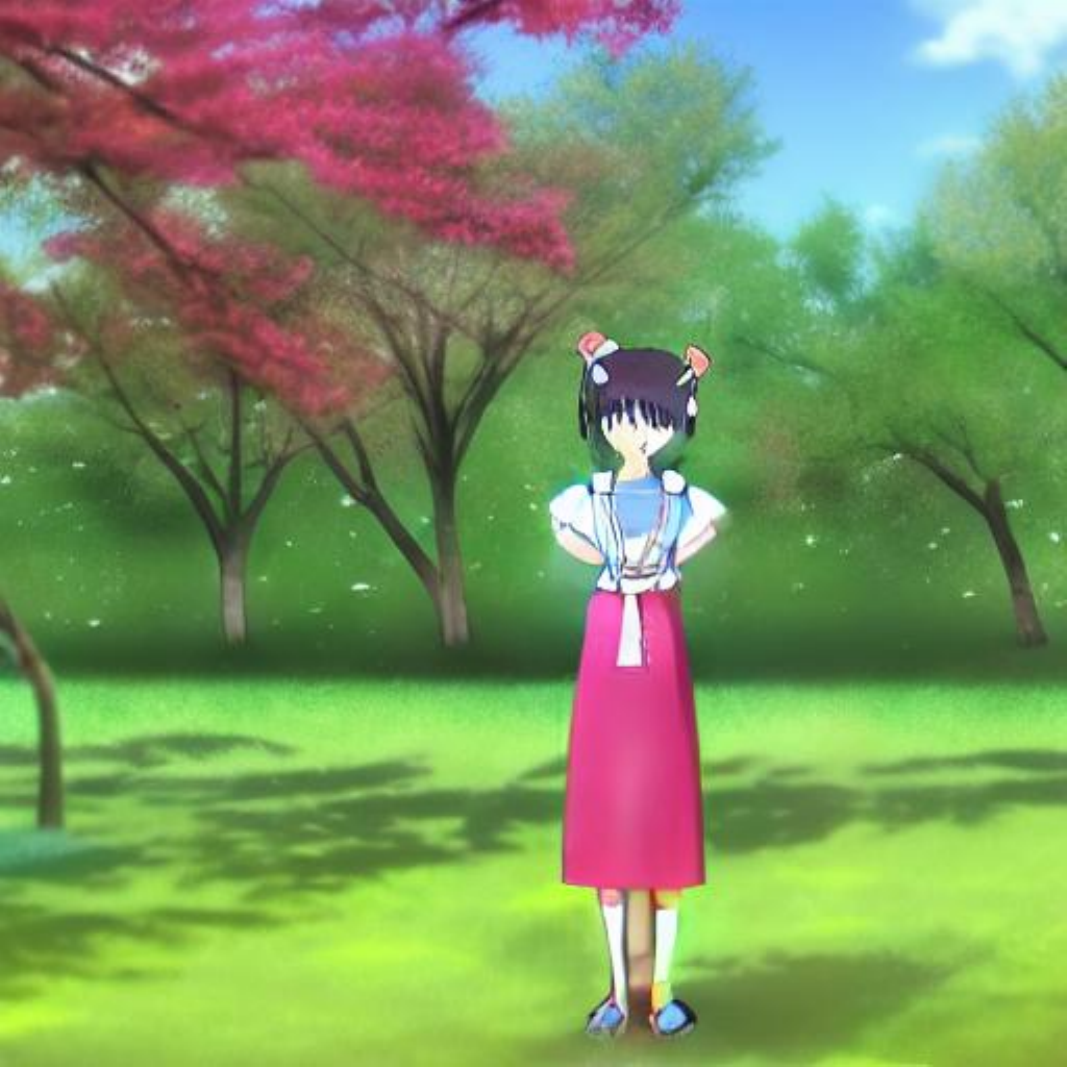}} & 
        \noindent\parbox[c]{0.14\columnwidth}{\includegraphics[width=0.14\columnwidth]{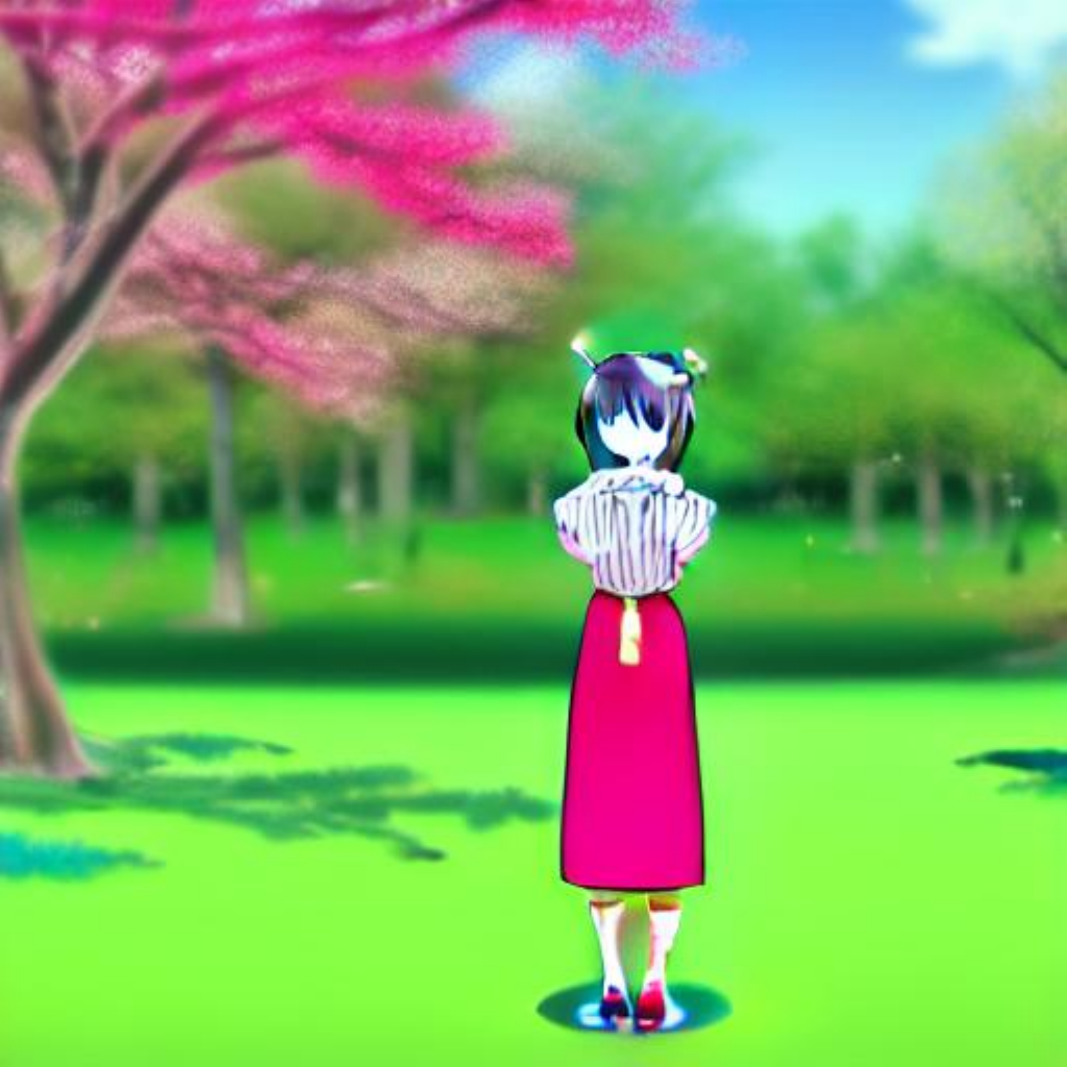}} & 
        \noindent\parbox[c]{0.14\columnwidth}{\includegraphics[width=0.14\columnwidth]{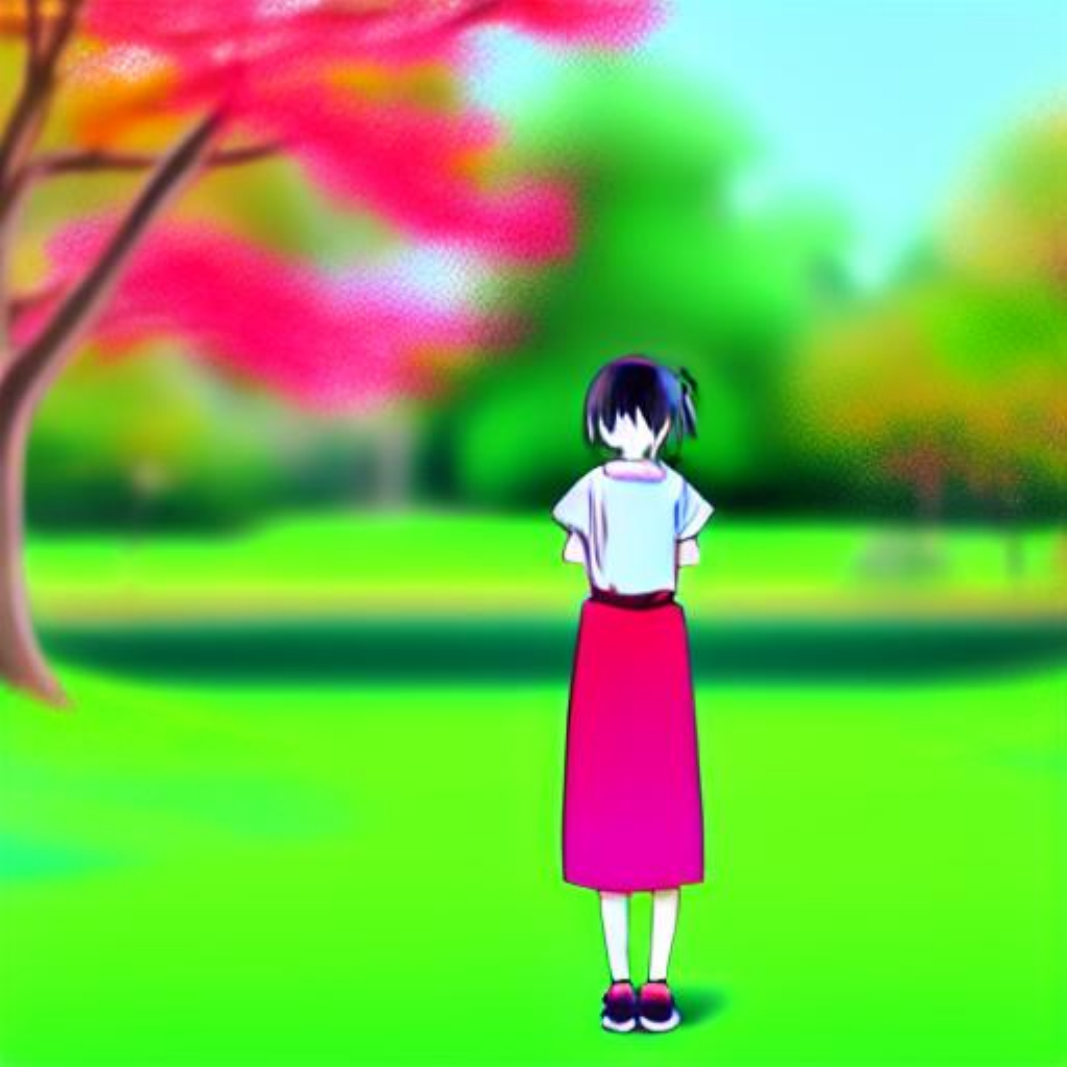}} \\

        \shortstack[l]{\tiny 15 steps} &
        \noindent\parbox[c]{0.14\columnwidth}{\includegraphics[width=0.14\columnwidth]{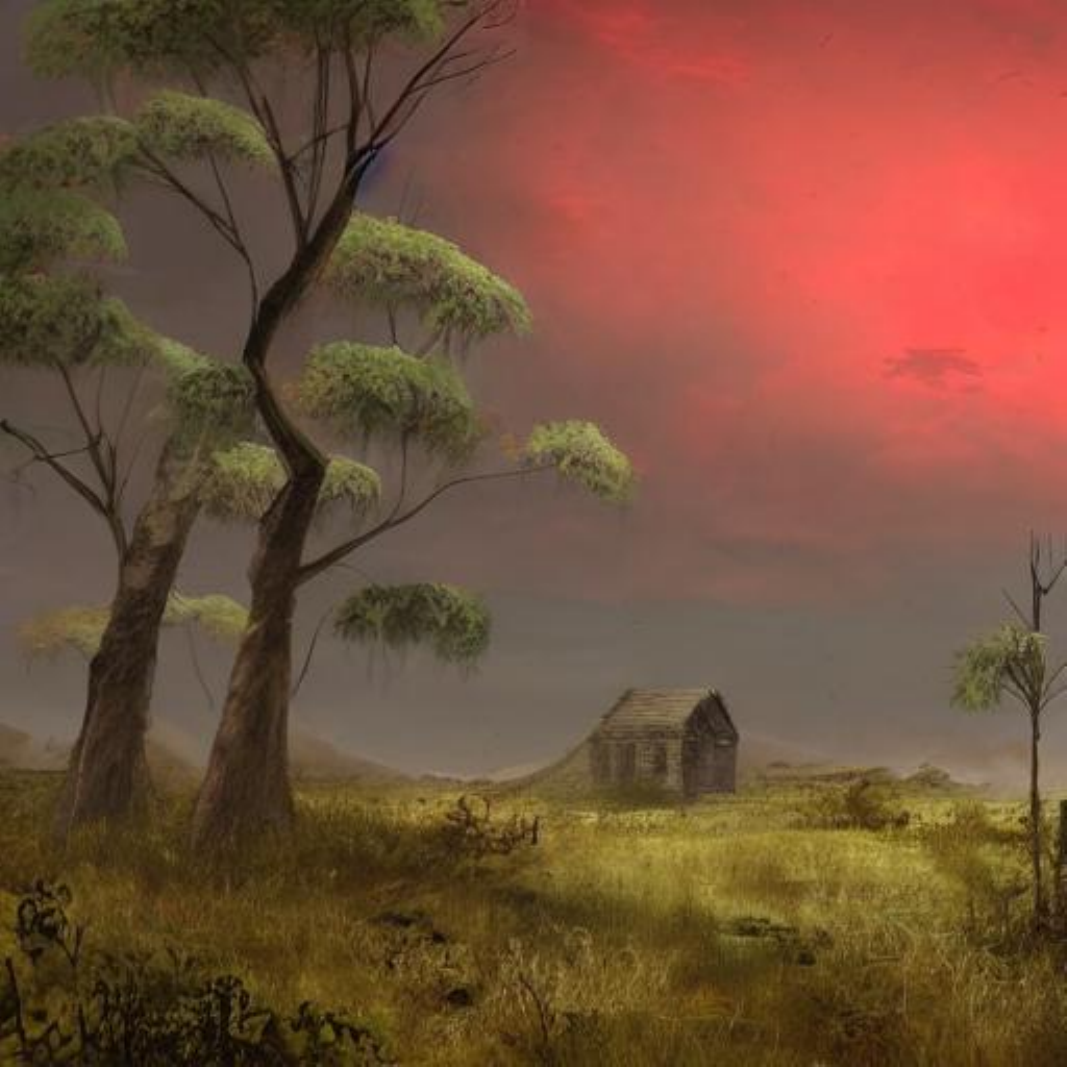}} & 
        \noindent\parbox[c]{0.14\columnwidth}{\includegraphics[width=0.14\columnwidth]{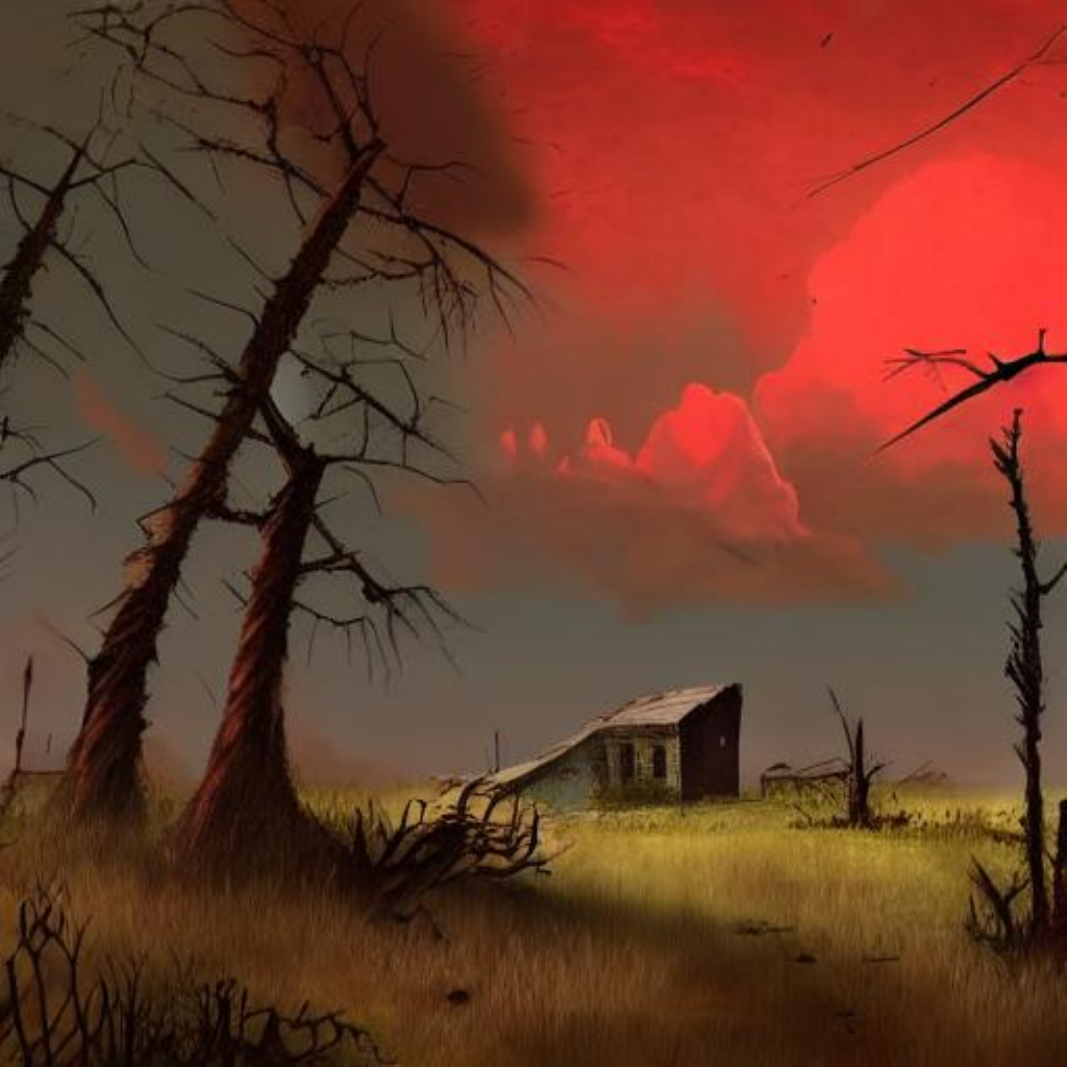}} & 
        \noindent\parbox[c]{0.14\columnwidth}{\includegraphics[width=0.14\columnwidth]{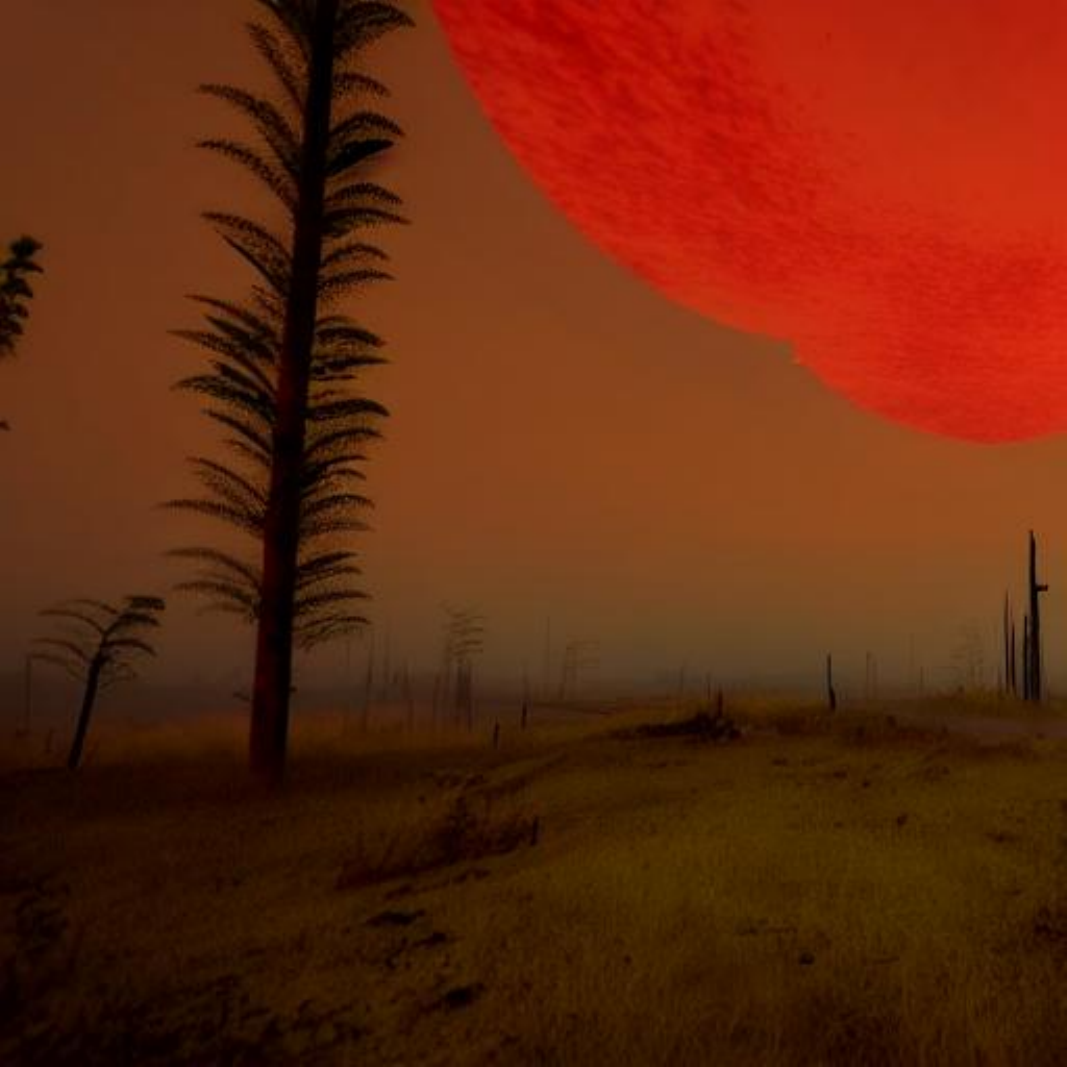}} & 
        \noindent\parbox[c]{0.14\columnwidth}{\includegraphics[width=0.14\columnwidth]{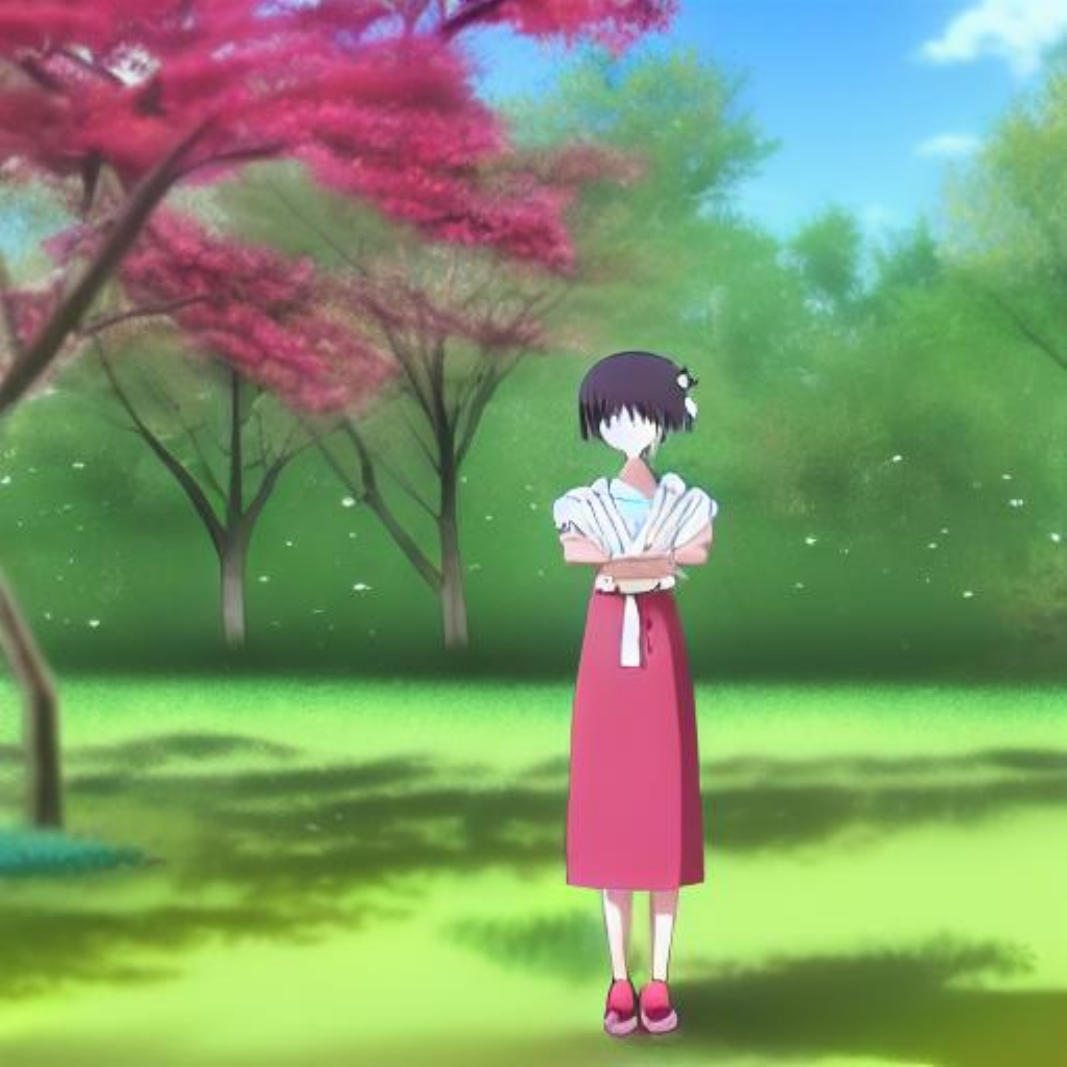}} & 
        \noindent\parbox[c]{0.14\columnwidth}{\includegraphics[width=0.14\columnwidth]{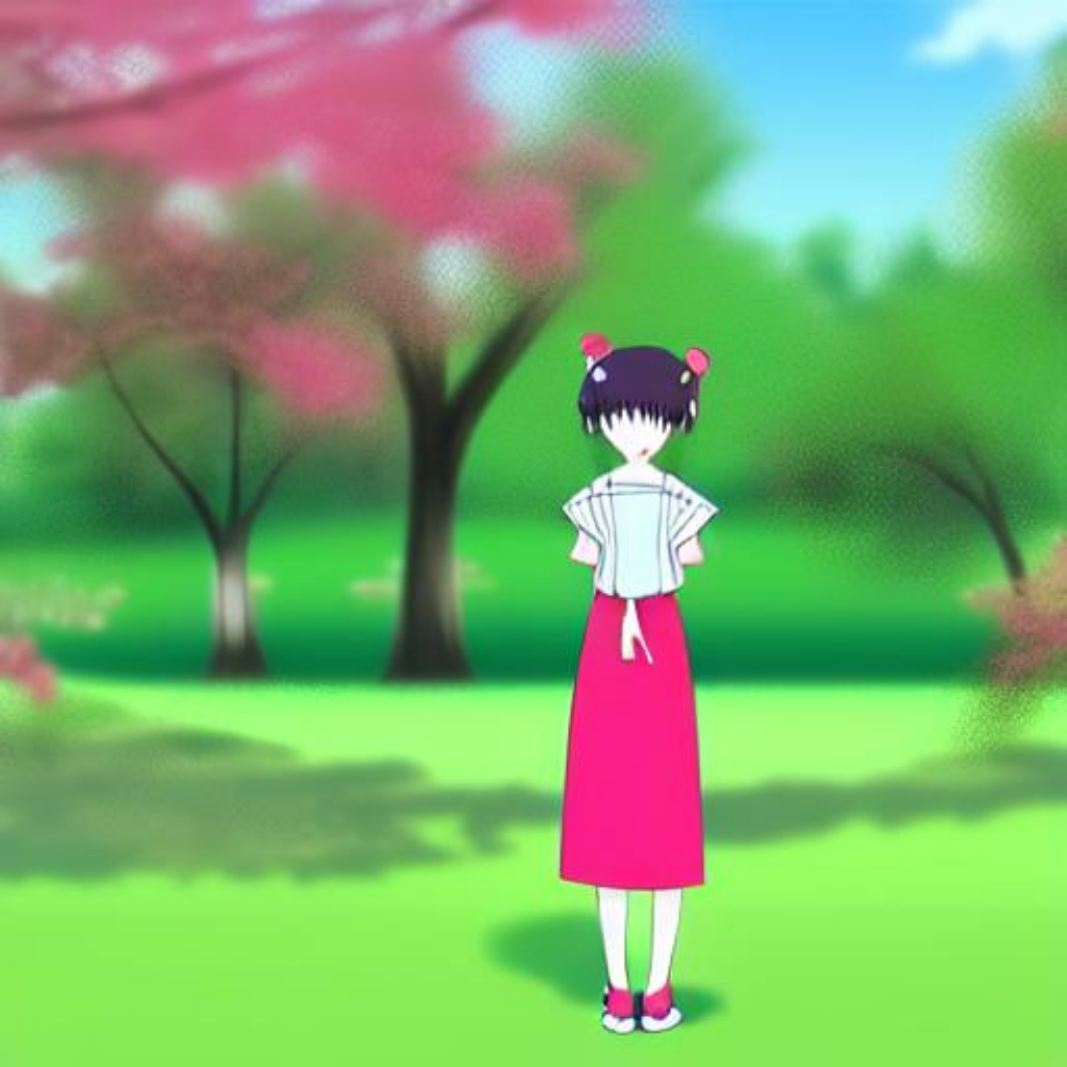}} & 
        \noindent\parbox[c]{0.14\columnwidth}{\includegraphics[width=0.14\columnwidth]{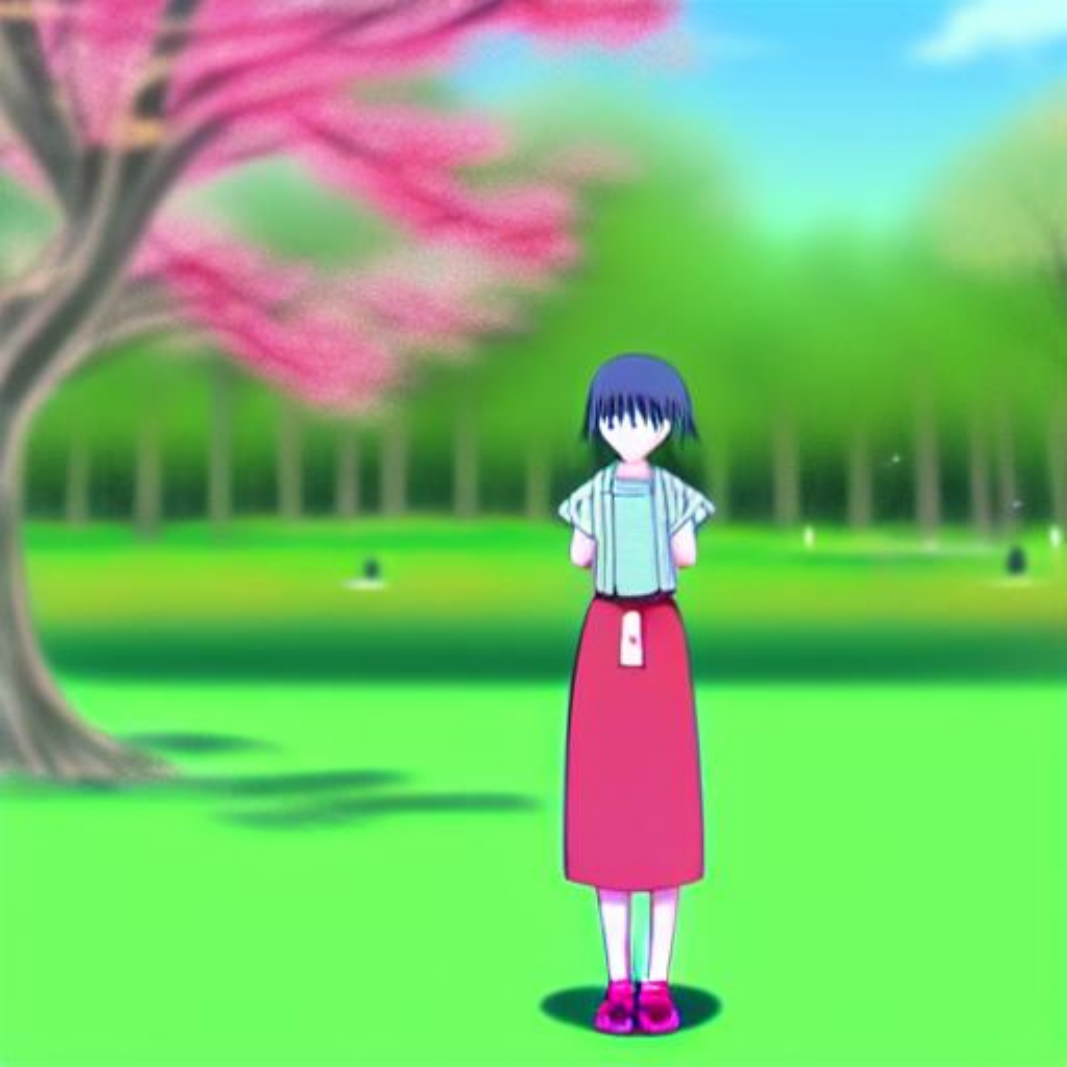}} \\

        \shortstack[l]{\tiny 20 steps} &
        \noindent\parbox[c]{0.14\columnwidth}{\includegraphics[width=0.14\columnwidth]{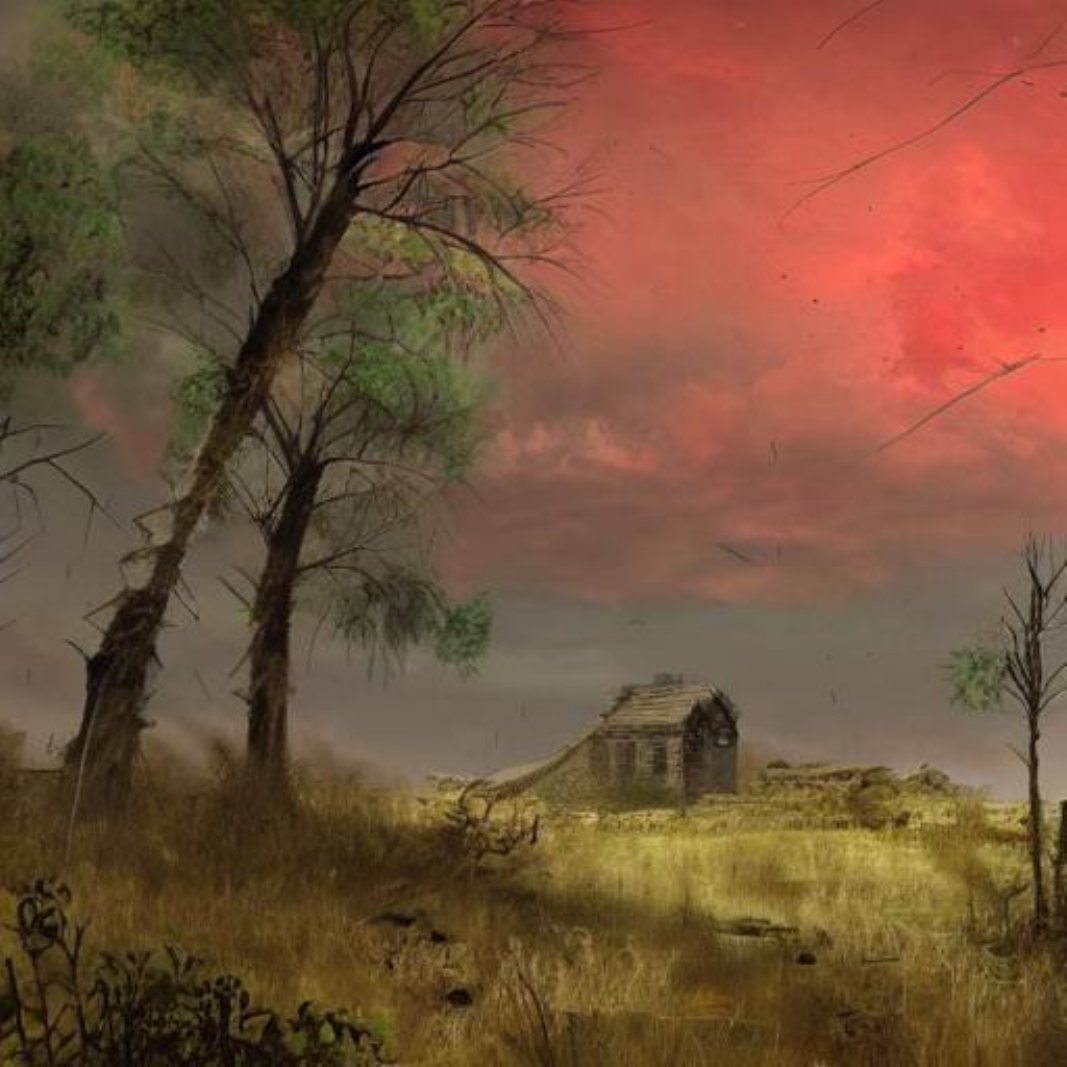}} & 
        \noindent\parbox[c]{0.14\columnwidth}{\includegraphics[width=0.14\columnwidth]{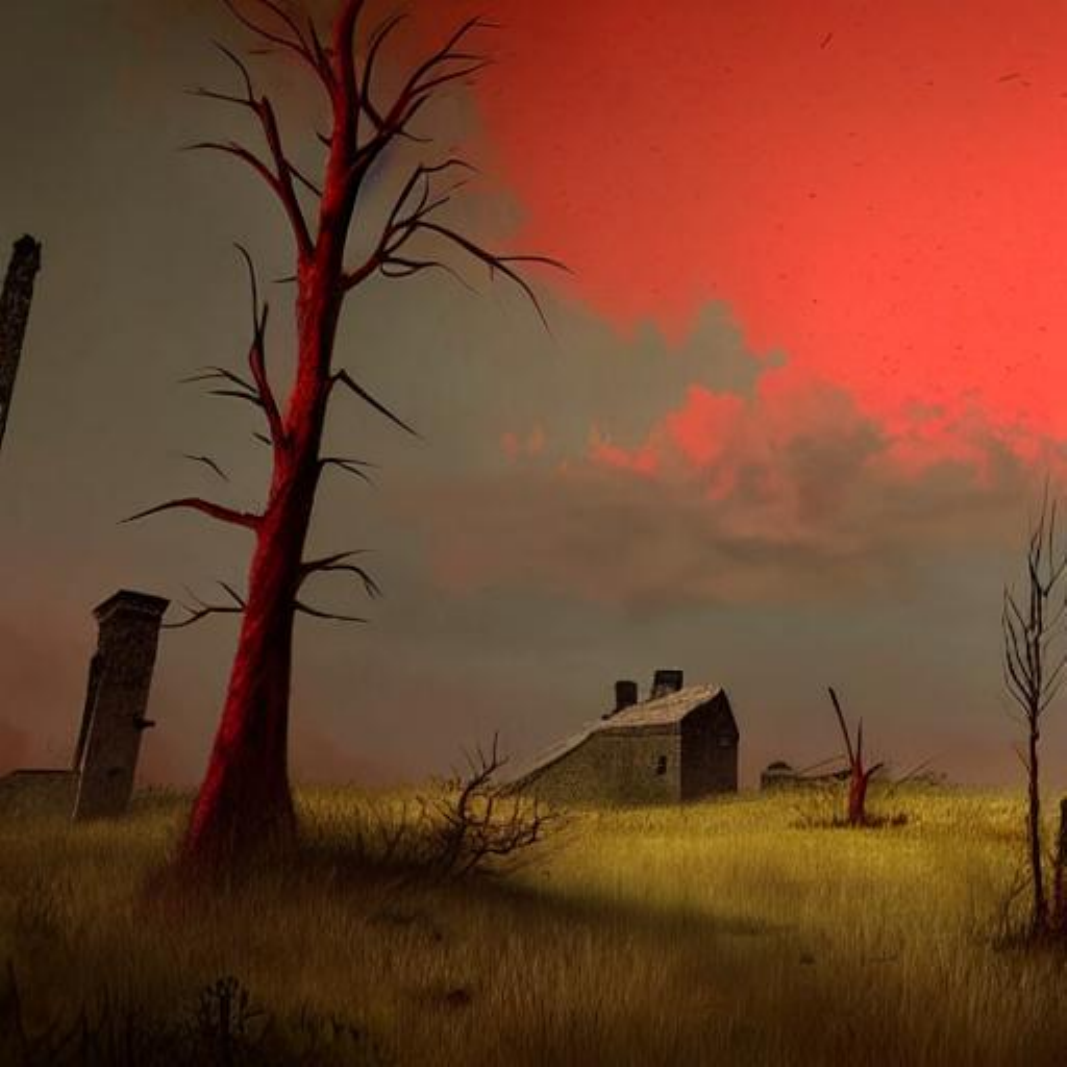}} & 
        \noindent\parbox[c]{0.14\columnwidth}{\includegraphics[width=0.14\columnwidth]{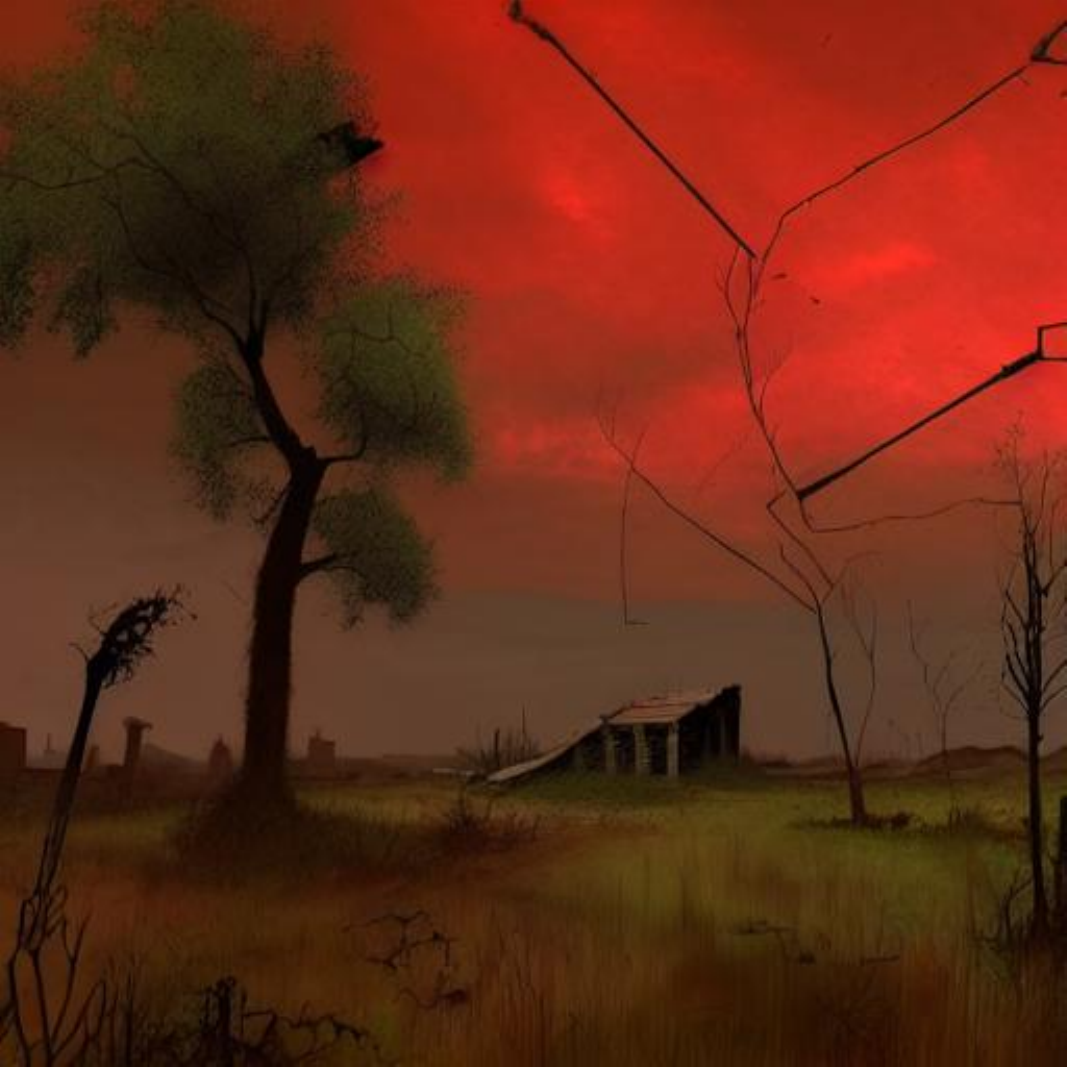}} & 
        \noindent\parbox[c]{0.14\columnwidth}{\includegraphics[width=0.14\columnwidth]{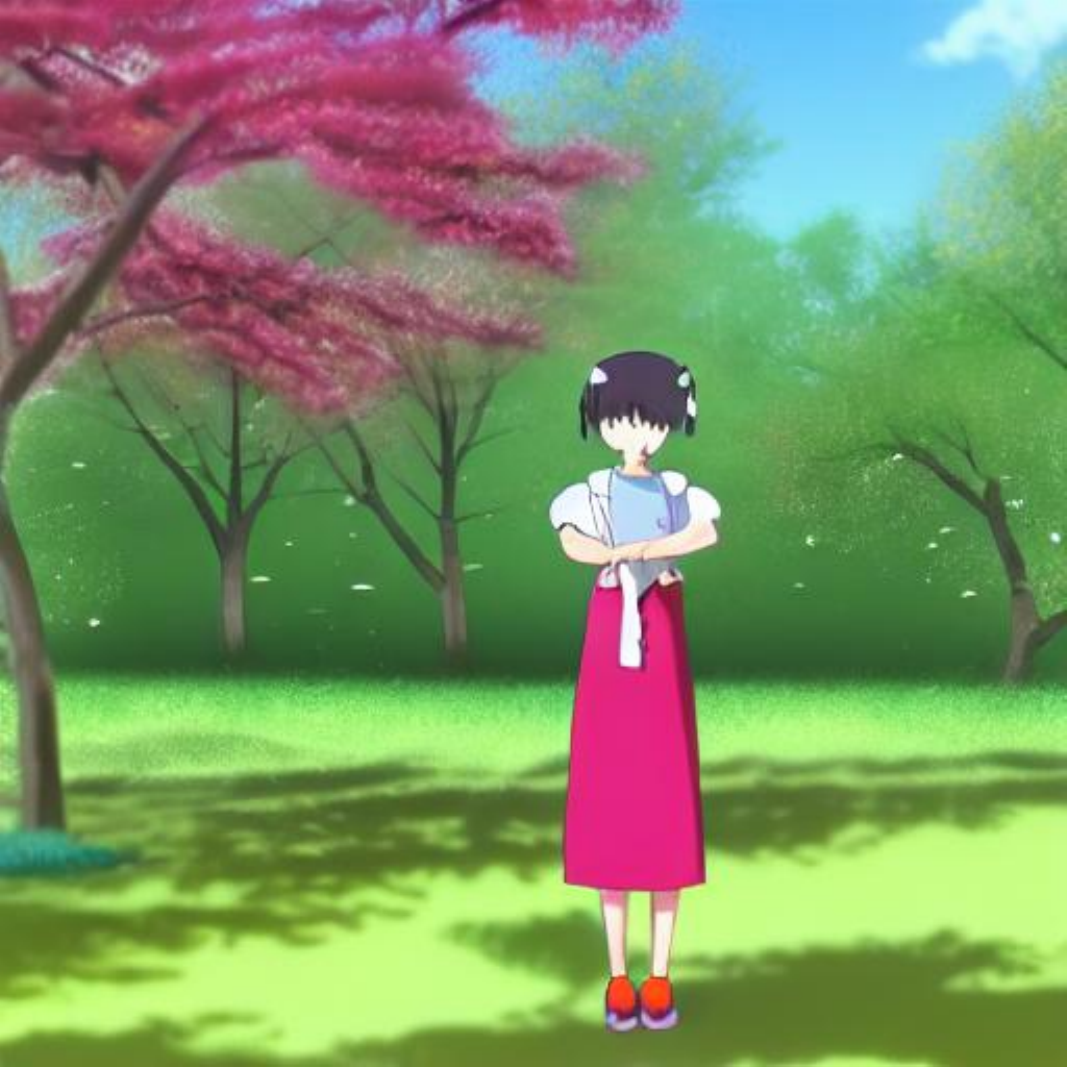}} & 
        \noindent\parbox[c]{0.14\columnwidth}{\includegraphics[width=0.14\columnwidth]{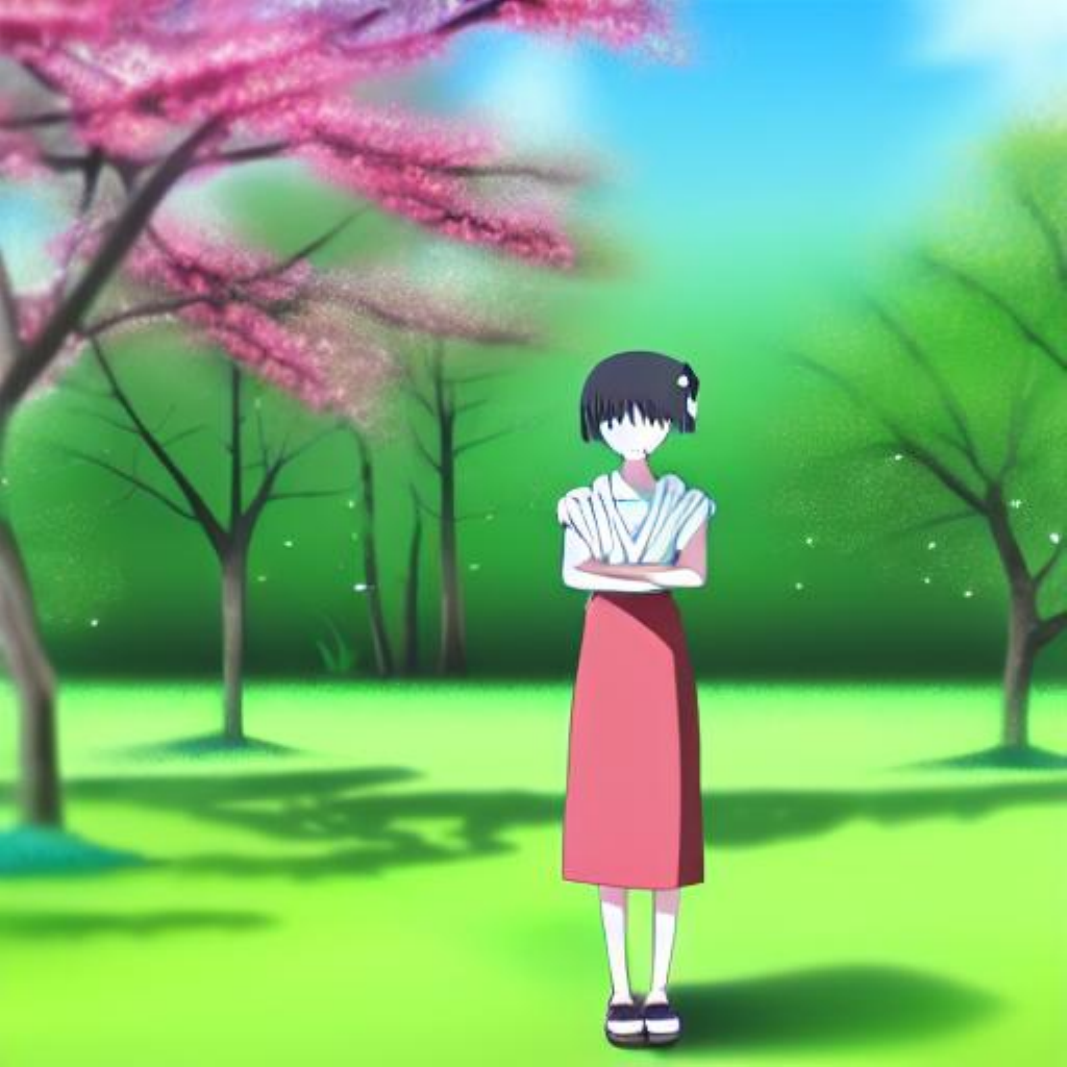}} & 
        \noindent\parbox[c]{0.14\columnwidth}{\includegraphics[width=0.14\columnwidth]{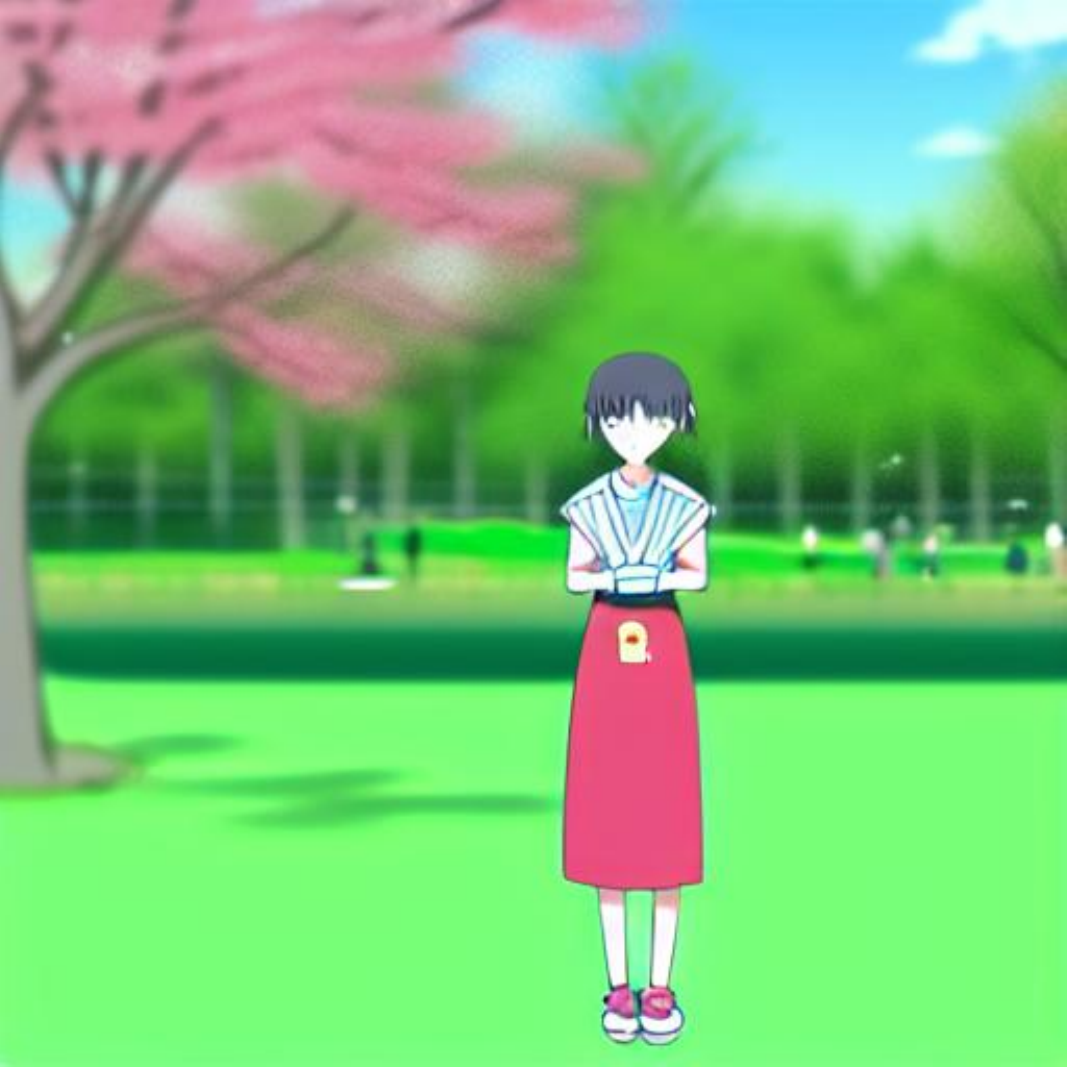}} \\

        \shortstack[l]{\tiny 40 steps} &
        \noindent\parbox[c]{0.14\columnwidth}{\includegraphics[width=0.14\columnwidth]{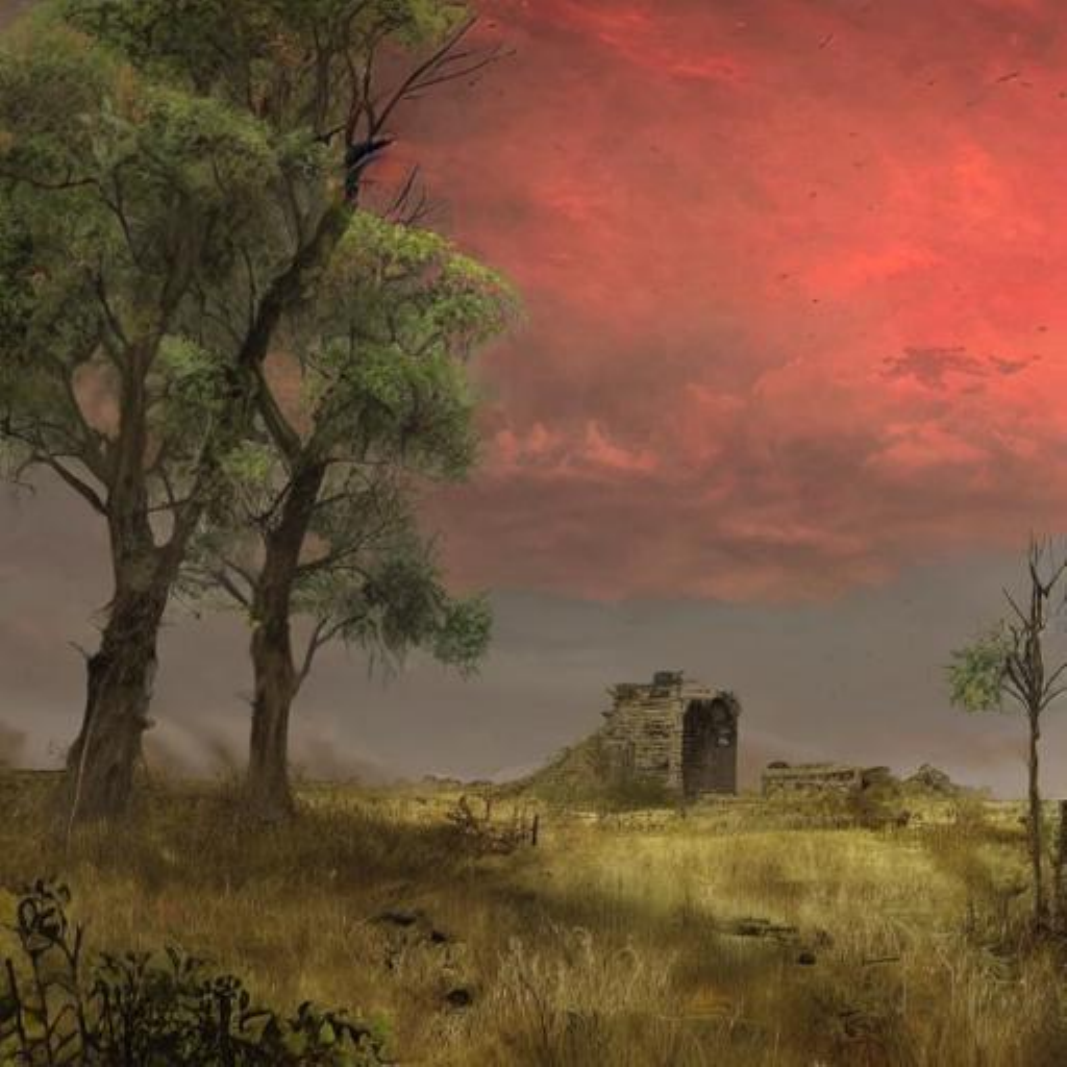}} & 
        \noindent\parbox[c]{0.14\columnwidth}{\includegraphics[width=0.14\columnwidth]{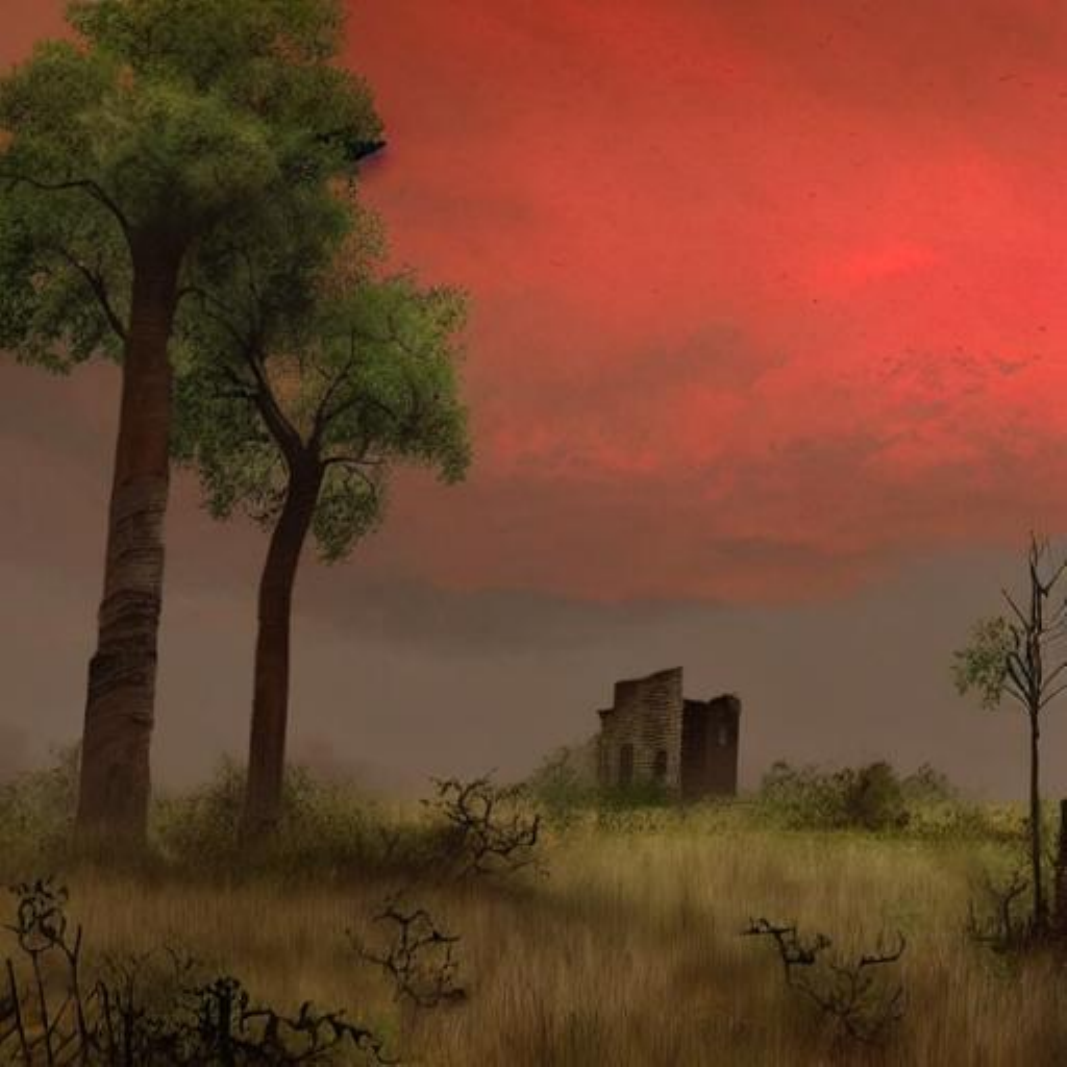}} & 
        \noindent\parbox[c]{0.14\columnwidth}{\includegraphics[width=0.14\columnwidth]{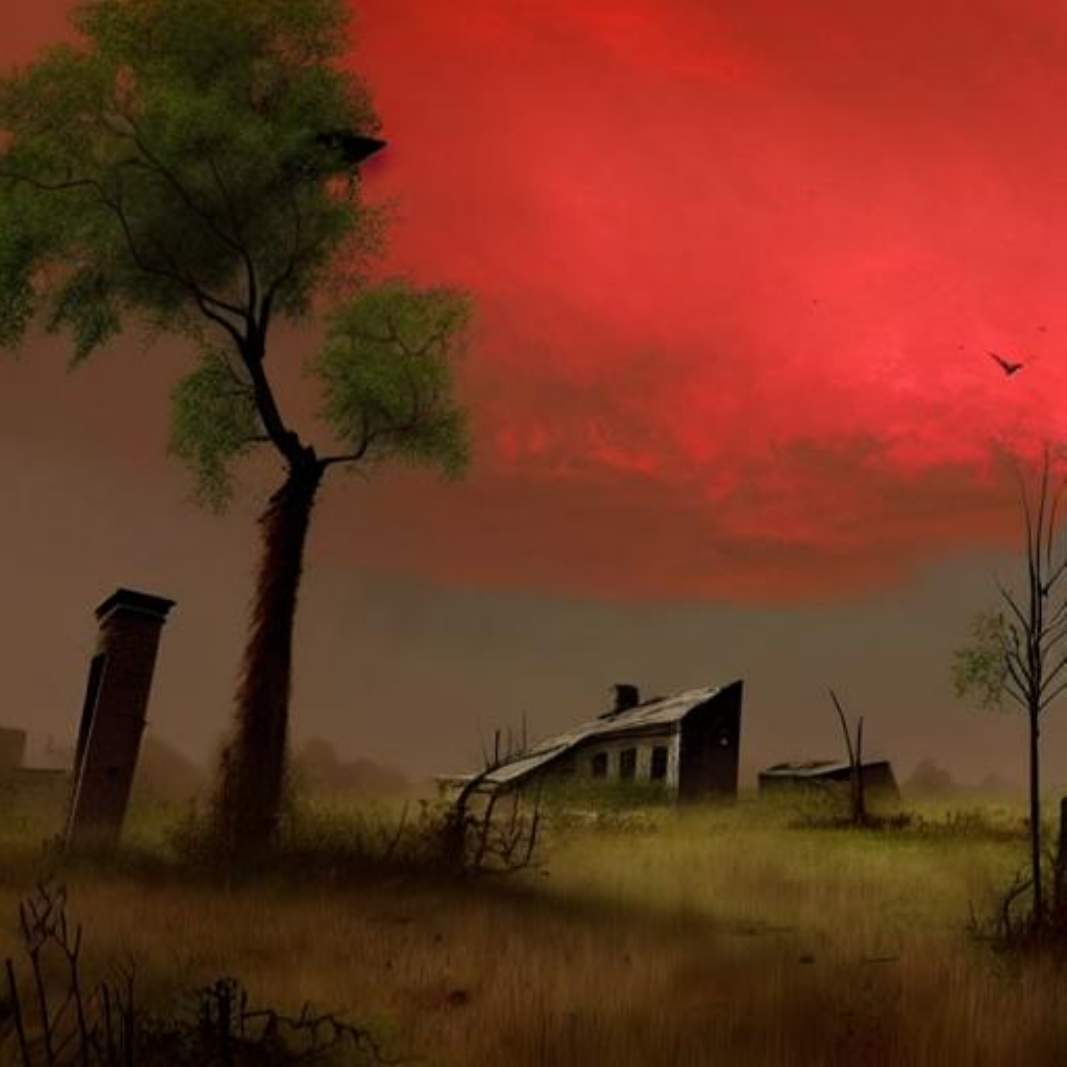}} & 
        \noindent\parbox[c]{0.14\columnwidth}{\includegraphics[width=0.14\columnwidth]{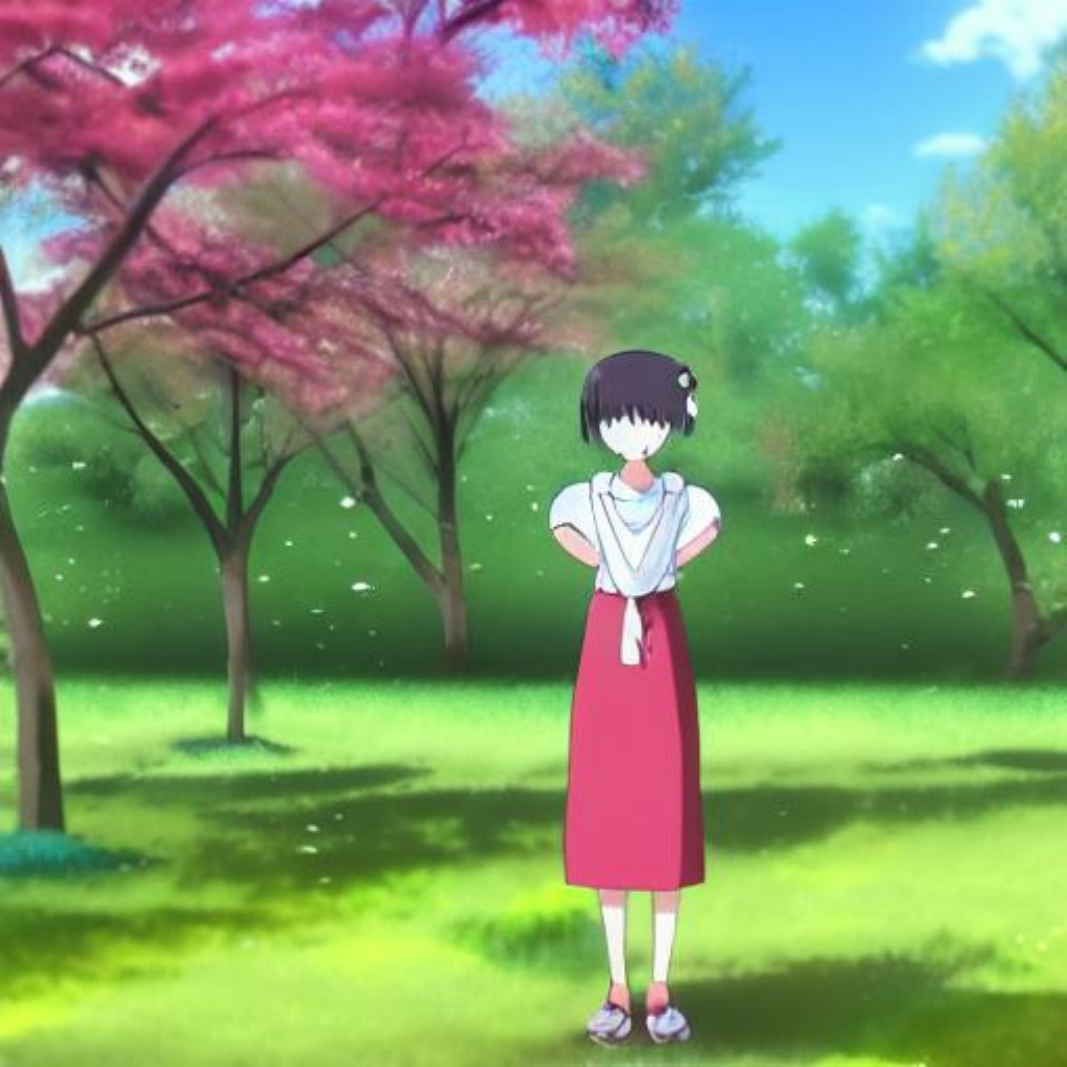}} & 
        \noindent\parbox[c]{0.14\columnwidth}{\includegraphics[width=0.14\columnwidth]{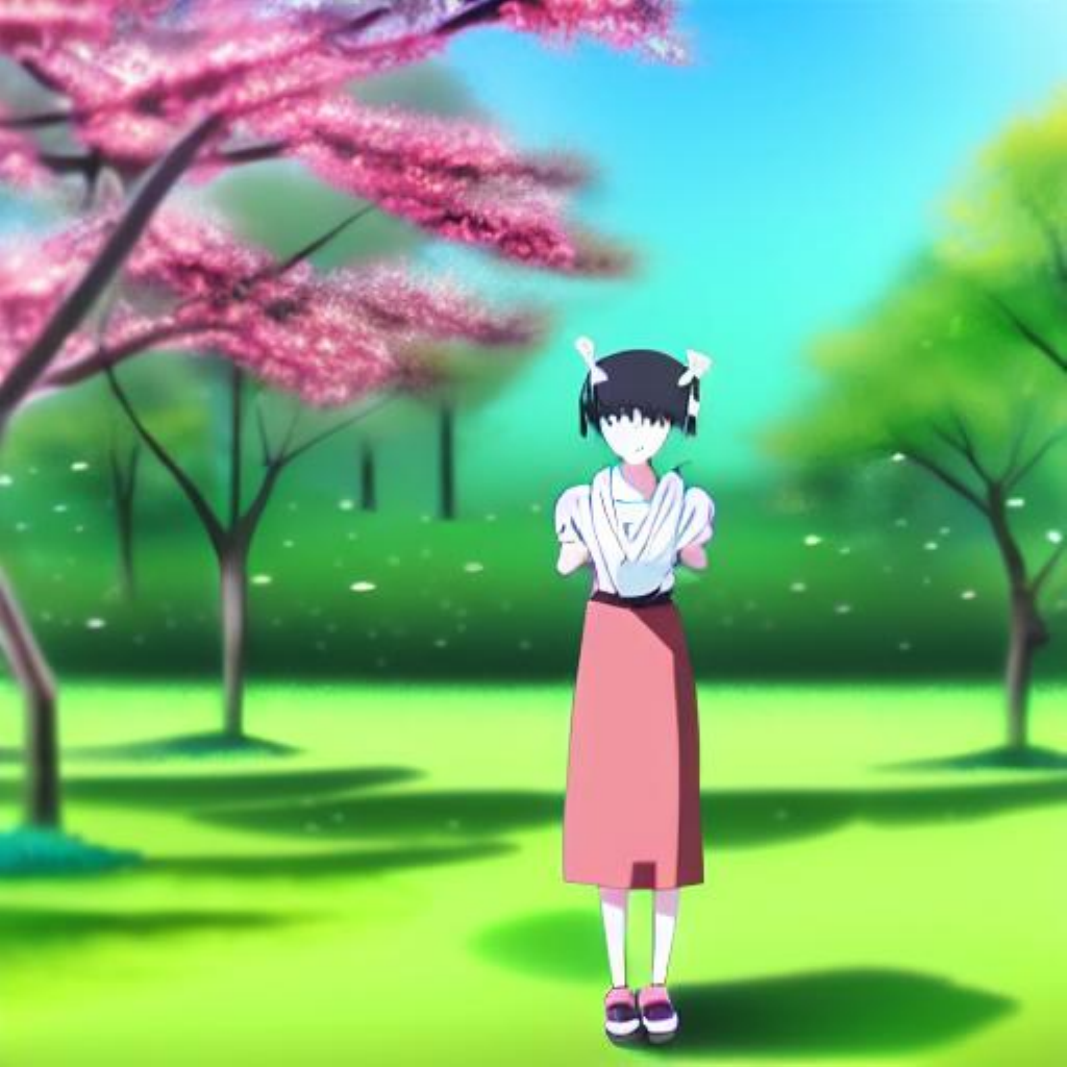}} & 
        \noindent\parbox[c]{0.14\columnwidth}{\includegraphics[width=0.14\columnwidth]{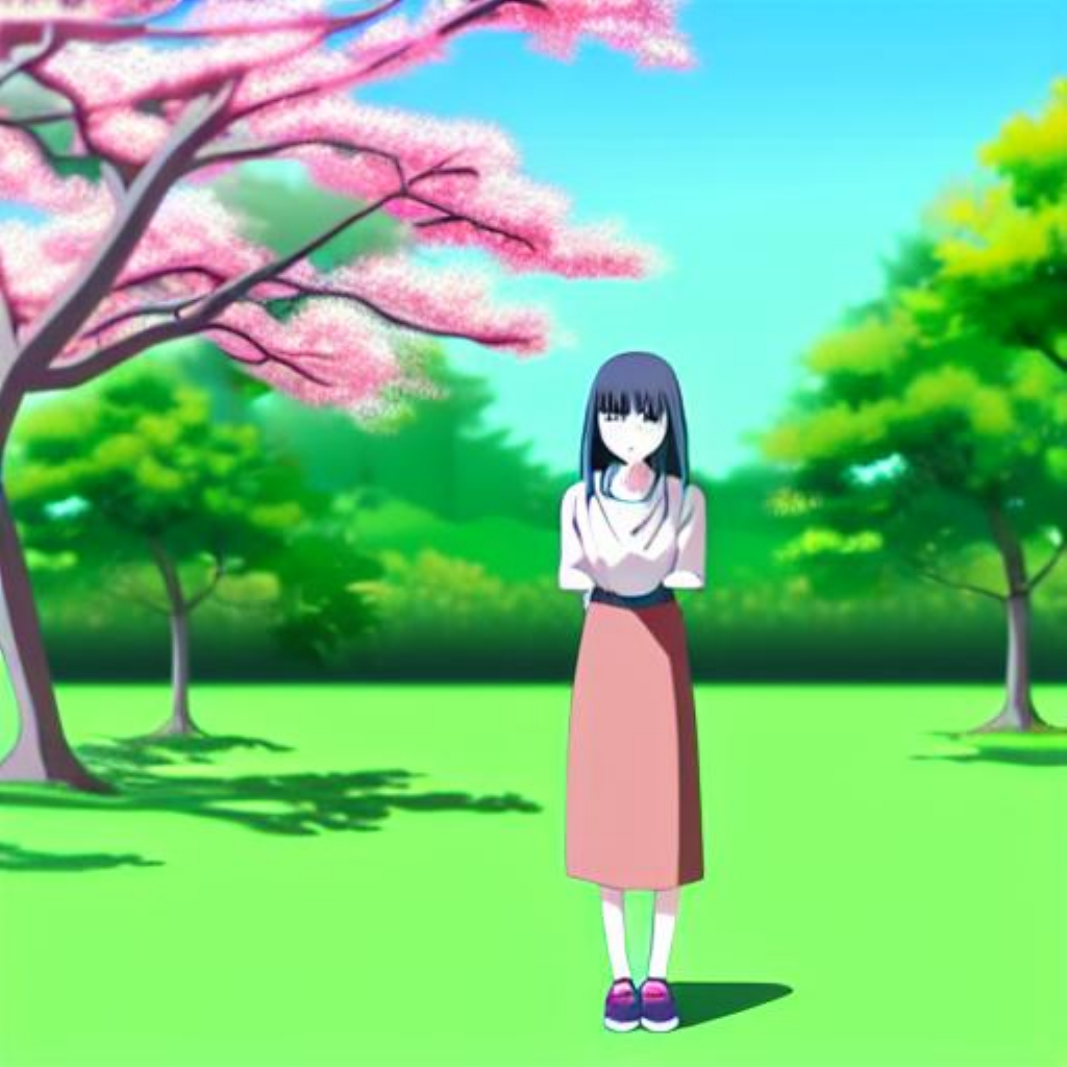}} \\
    \end{tabu}
    \caption{Comparison of samples generated from Stable Diffusion 1.5 using PLMS4 with HB $\beta$ under various sampling steps and guidance scale $s$. Specifically, we employ $\beta = 0.9$ for $s = 7.5$, $\beta = 0.8$ for $s = 15$, and $\beta = 0.6$ for $s = 22.5$ to account for the varying degrees of artifact manifestation associated with each guidance scale.}
    \label{fig:scale_step_sd15_hb}
\end{figure}


\pagebreak

\tabulinesep=1pt
\begin{figure}
    \centering
    \begin{tabu} to \textwidth {@{}l@{\hspace{5pt}}c@{\hspace{2pt}}c@{\hspace{2pt}}c@{\hspace{4pt}}c@{\hspace{2pt}}c@{\hspace{2pt}}c@{}}
        & \multicolumn{3}{c}{\shortstack{\scriptsize "A post-apocalyptic world with ruined \\ \scriptsize buildings, overgrown vegetation, and a red sky"}}
        & \multicolumn{3}{c}{\shortstack{\scriptsize "A girl standing in a park in \\ \scriptsize Japanese animation style"}} \\

        & \multicolumn{1}{c}{\shortstack{\scriptsize $s = 7.5$}}
        & \multicolumn{1}{c}{\shortstack{\scriptsize $s = 15$}}
        & \multicolumn{1}{c}{\shortstack{\scriptsize $s = 22.5$}}
        & \multicolumn{1}{c}{\shortstack{\scriptsize $s = 7.5$}}
        & \multicolumn{1}{c}{\shortstack{\scriptsize $s = 15$}}
        & \multicolumn{1}{c}{\shortstack{\scriptsize $s = 22.5$}}
        \\
        
        \shortstack[l]{\tiny 10 steps} &
        \noindent\parbox[c]{0.14\columnwidth}{\includegraphics[width=0.14\columnwidth]{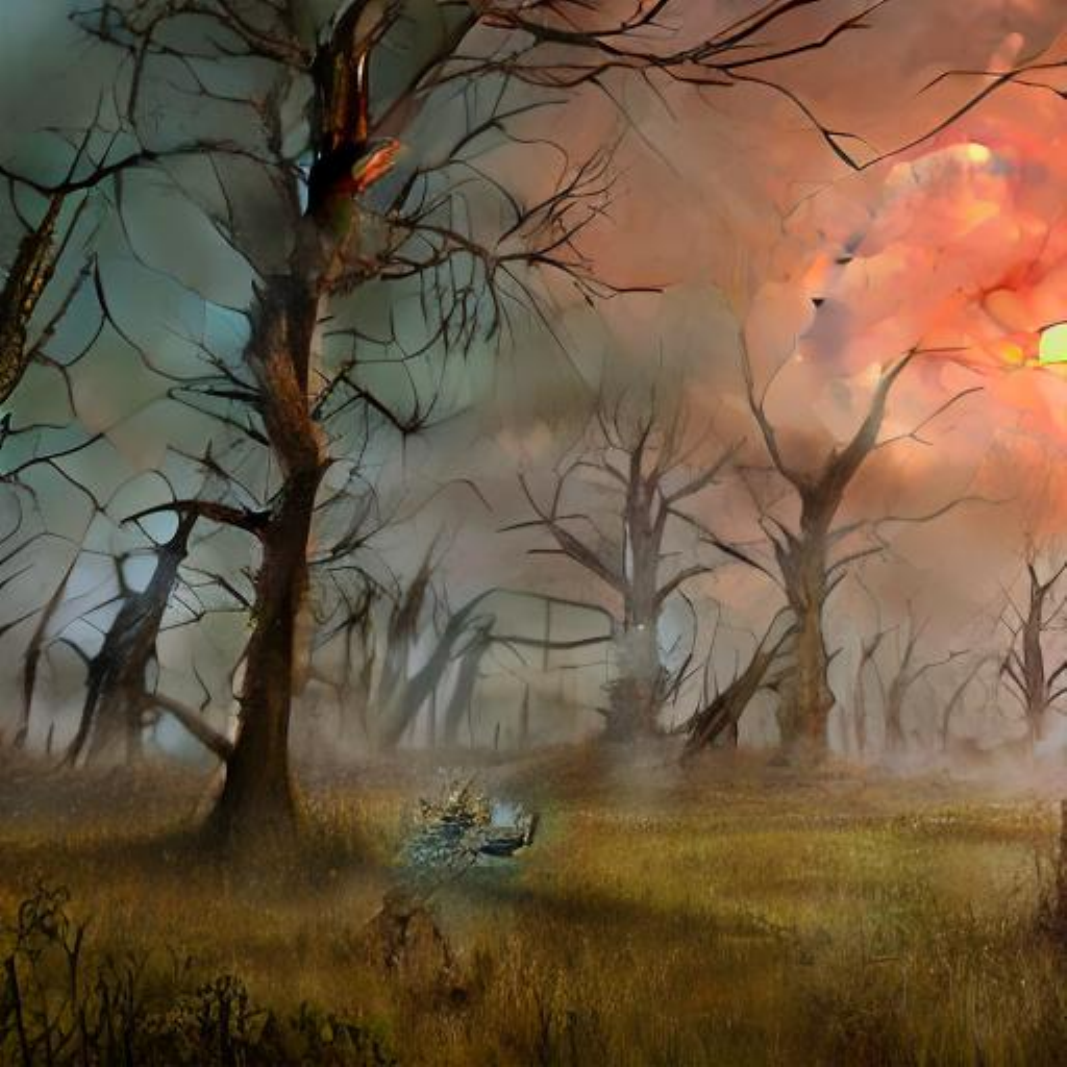}} & 
        \noindent\parbox[c]{0.14\columnwidth}{\includegraphics[width=0.14\columnwidth]{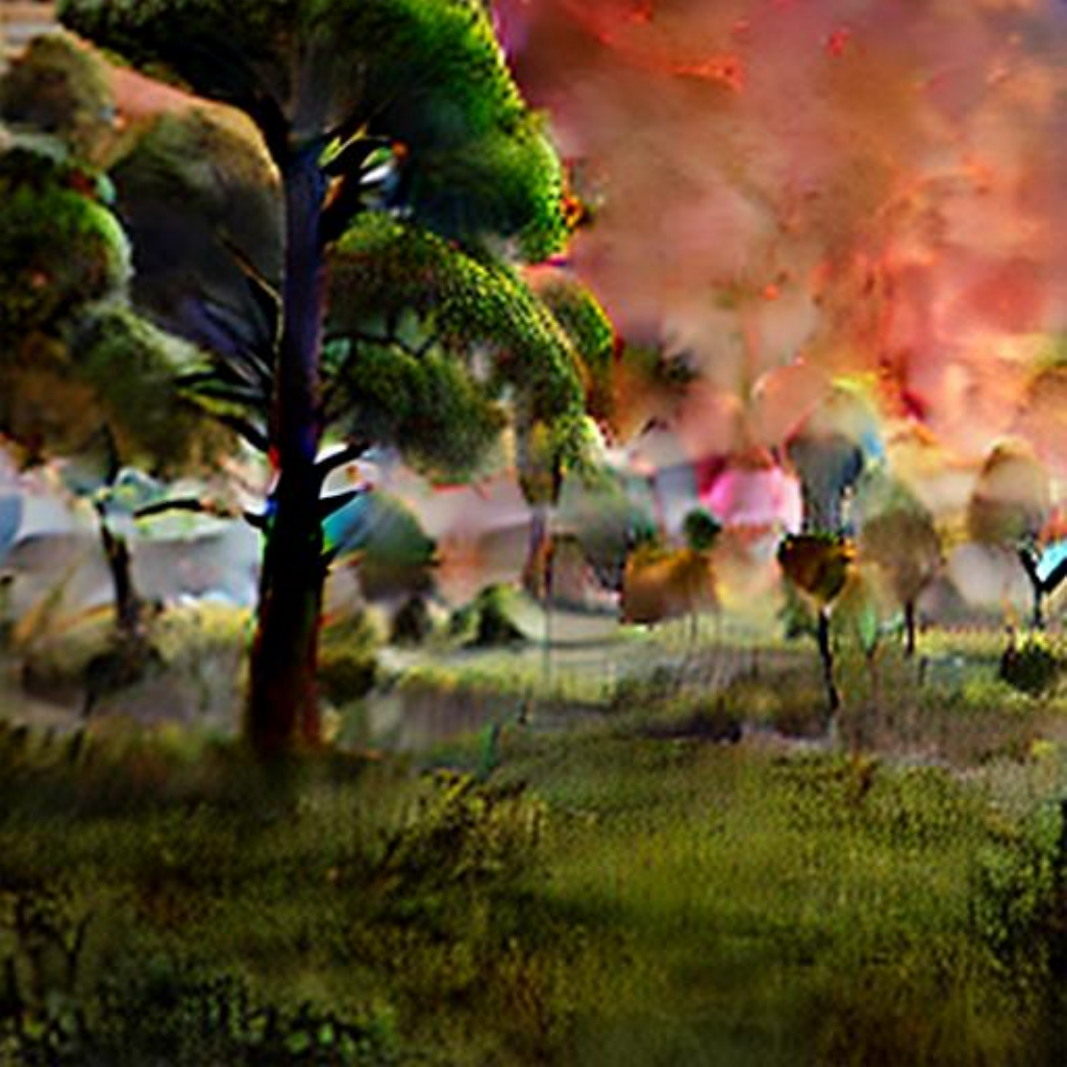}} & 
        \noindent\parbox[c]{0.14\columnwidth}{\includegraphics[width=0.14\columnwidth]{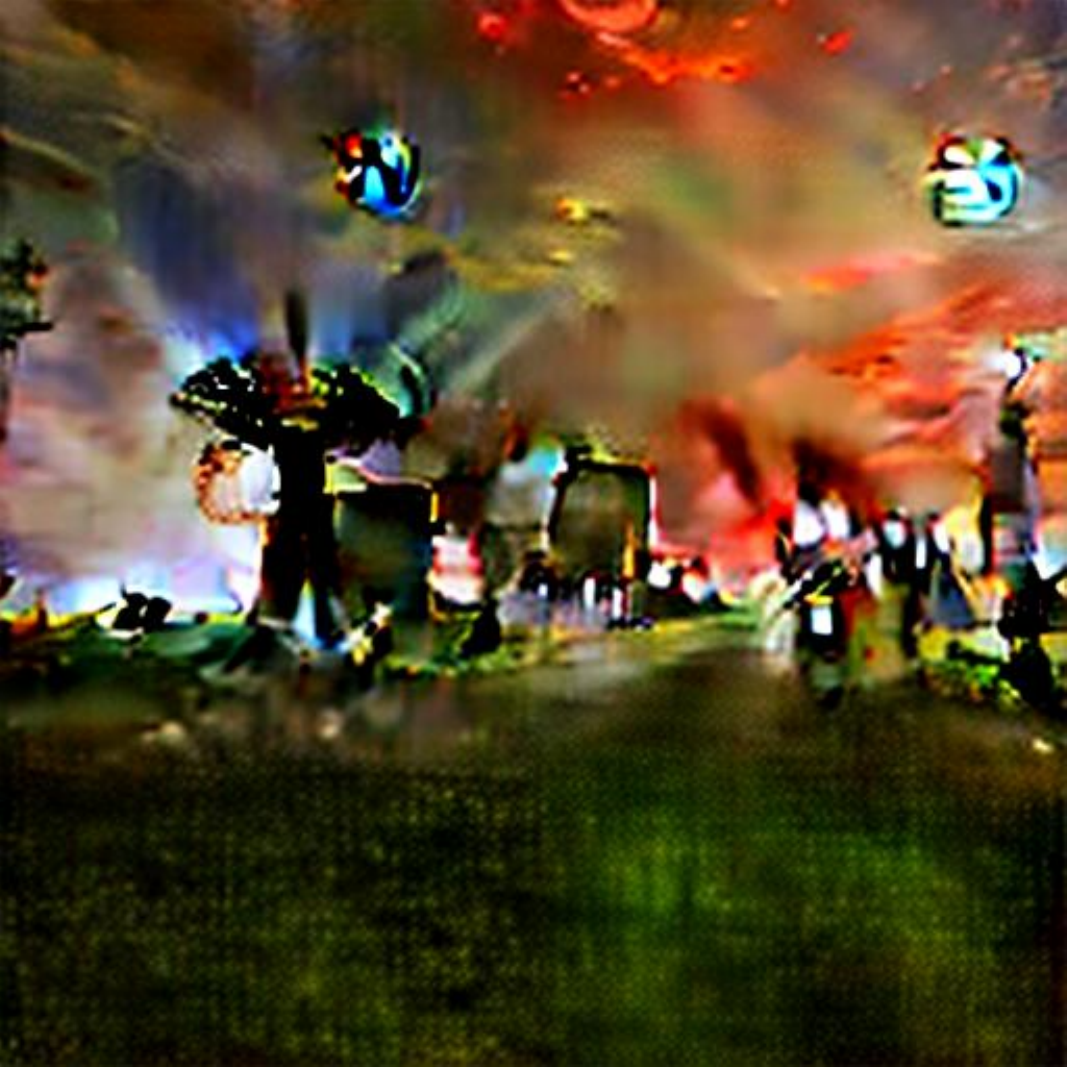}} & 
        \noindent\parbox[c]{0.14\columnwidth}{\includegraphics[width=0.14\columnwidth]{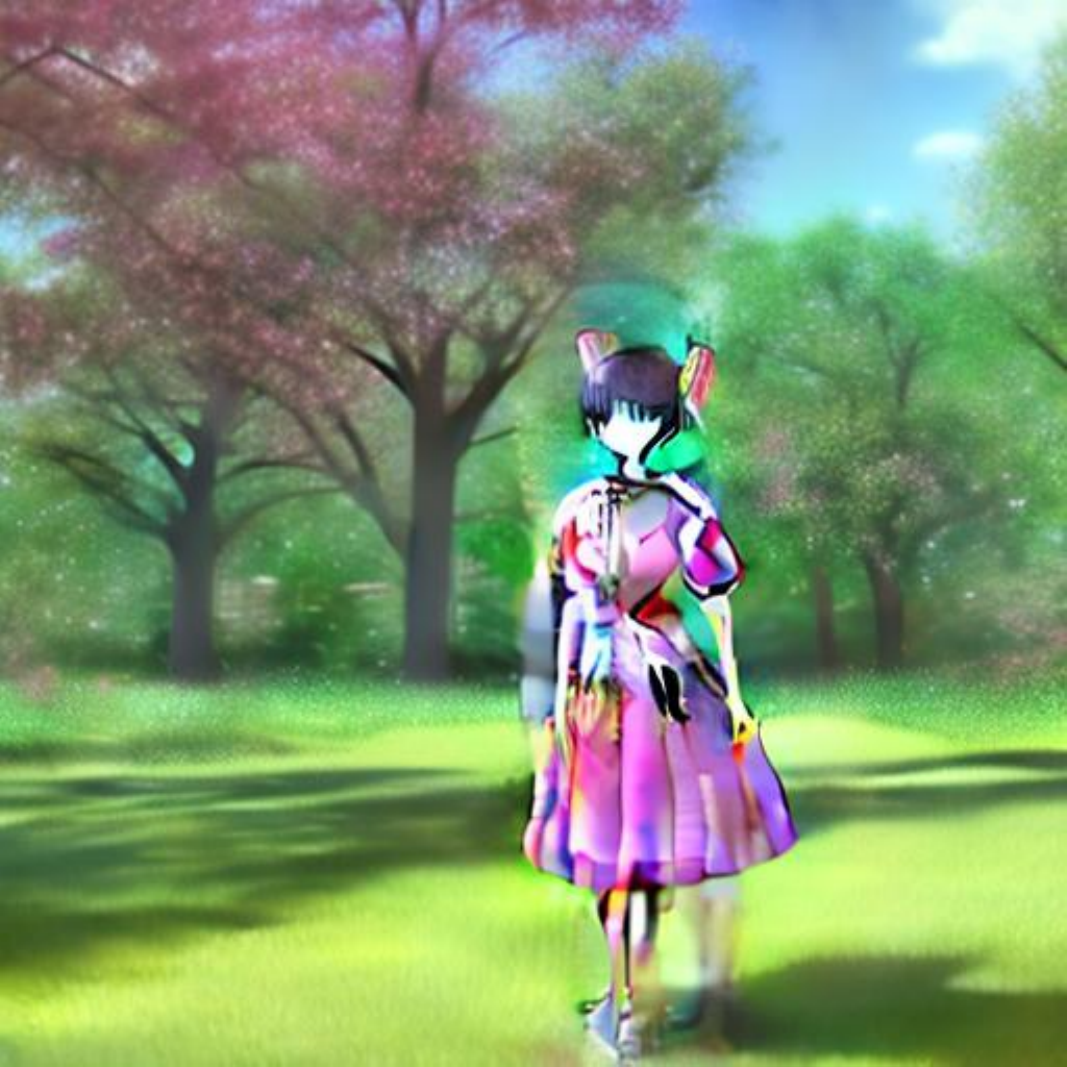}} & 
        \noindent\parbox[c]{0.14\columnwidth}{\includegraphics[width=0.14\columnwidth]{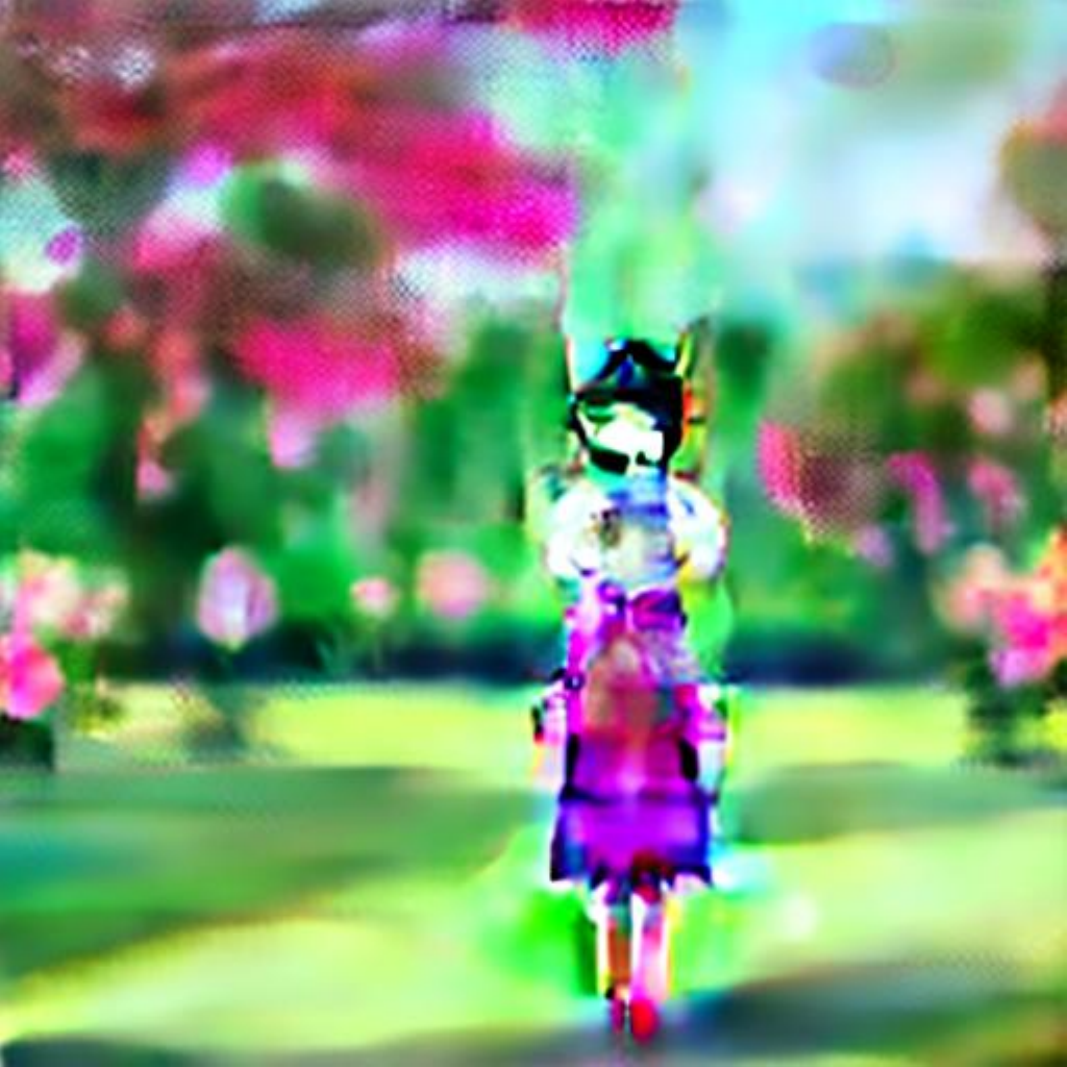}} & 
        \noindent\parbox[c]{0.14\columnwidth}{\includegraphics[width=0.14\columnwidth]{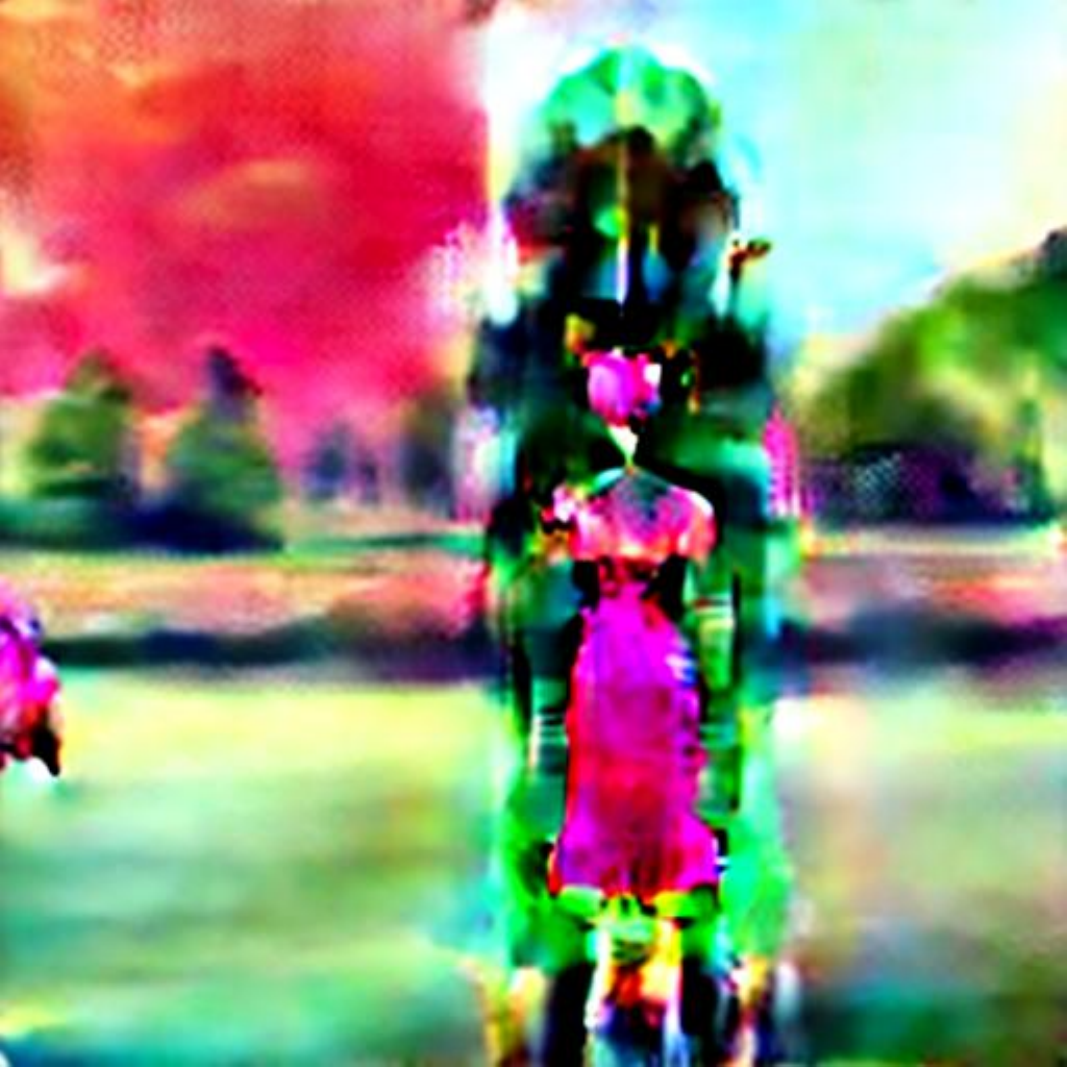}} \\

        \shortstack[l]{\tiny 15 steps} &
        \noindent\parbox[c]{0.14\columnwidth}{\includegraphics[width=0.14\columnwidth]{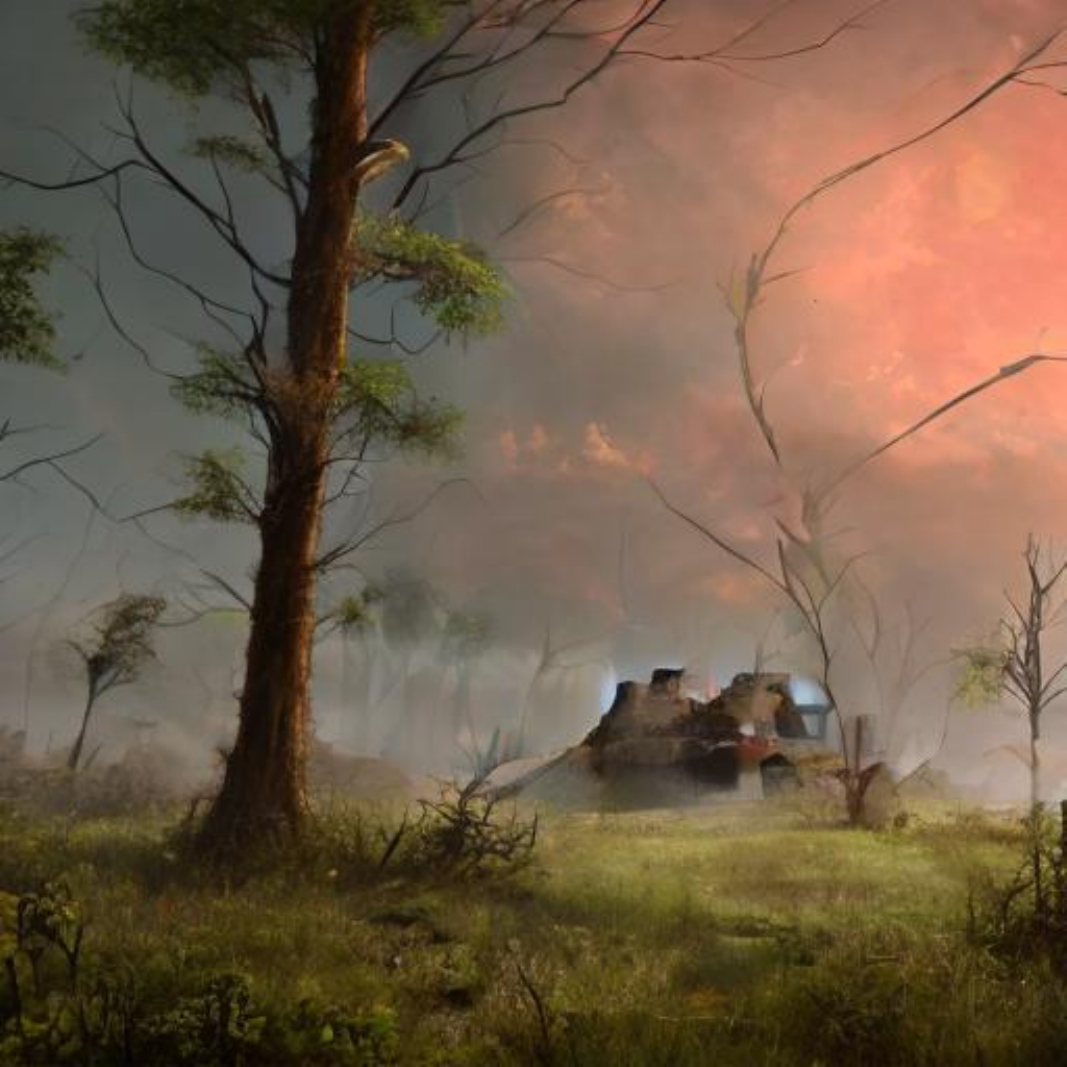}} & 
        \noindent\parbox[c]{0.14\columnwidth}{\includegraphics[width=0.14\columnwidth]{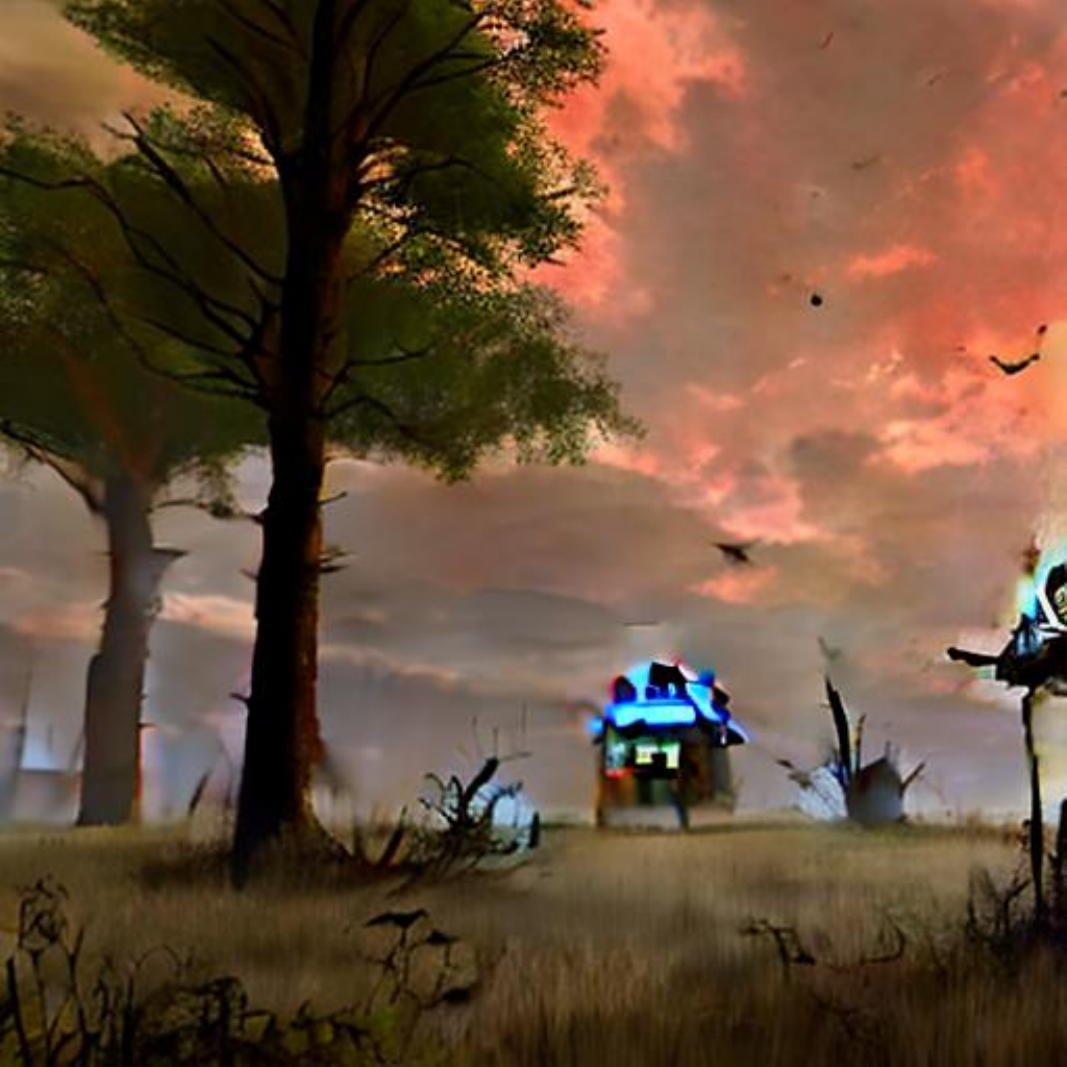}} & 
        \noindent\parbox[c]{0.14\columnwidth}{\includegraphics[width=0.14\columnwidth]{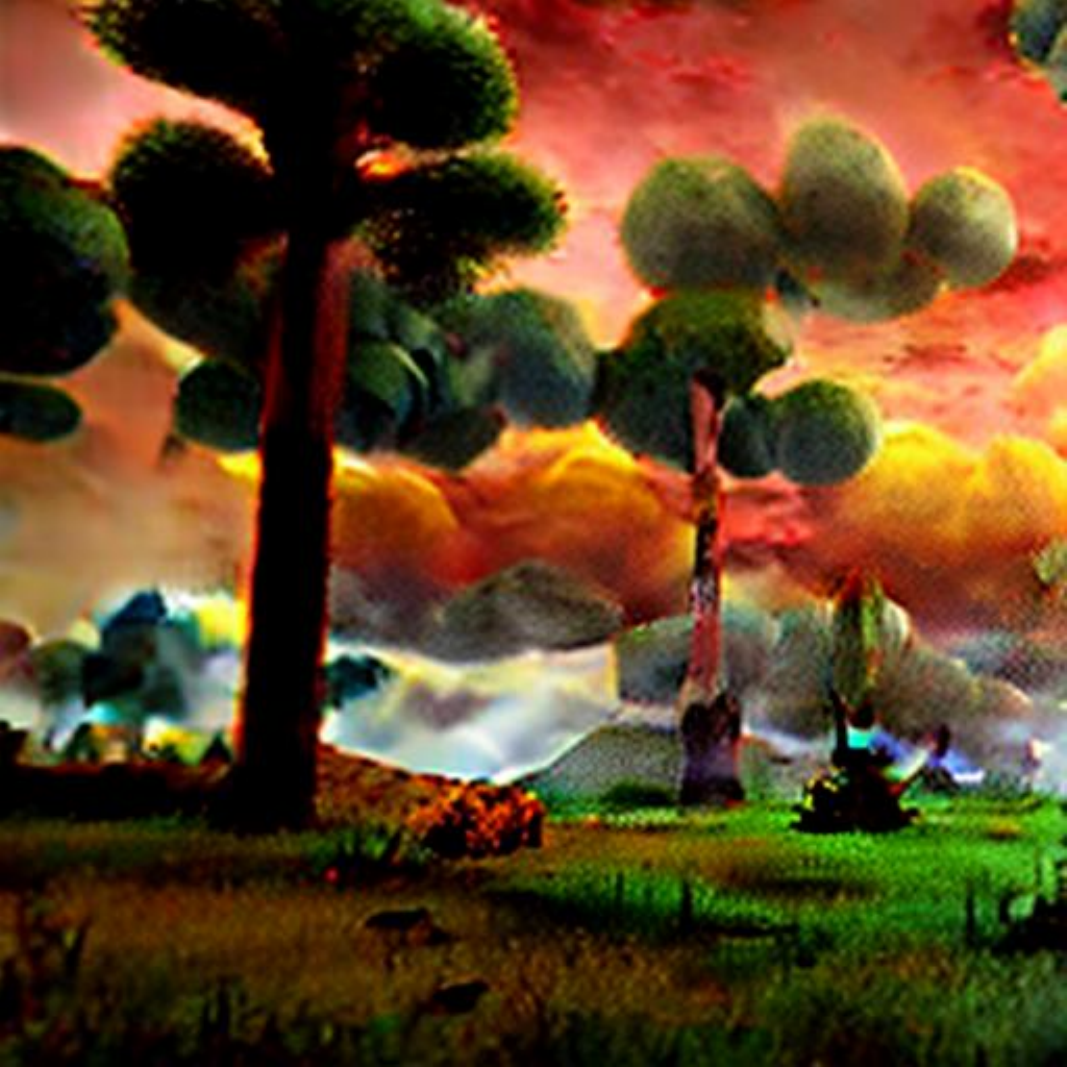}} & 
        \noindent\parbox[c]{0.14\columnwidth}{\includegraphics[width=0.14\columnwidth]{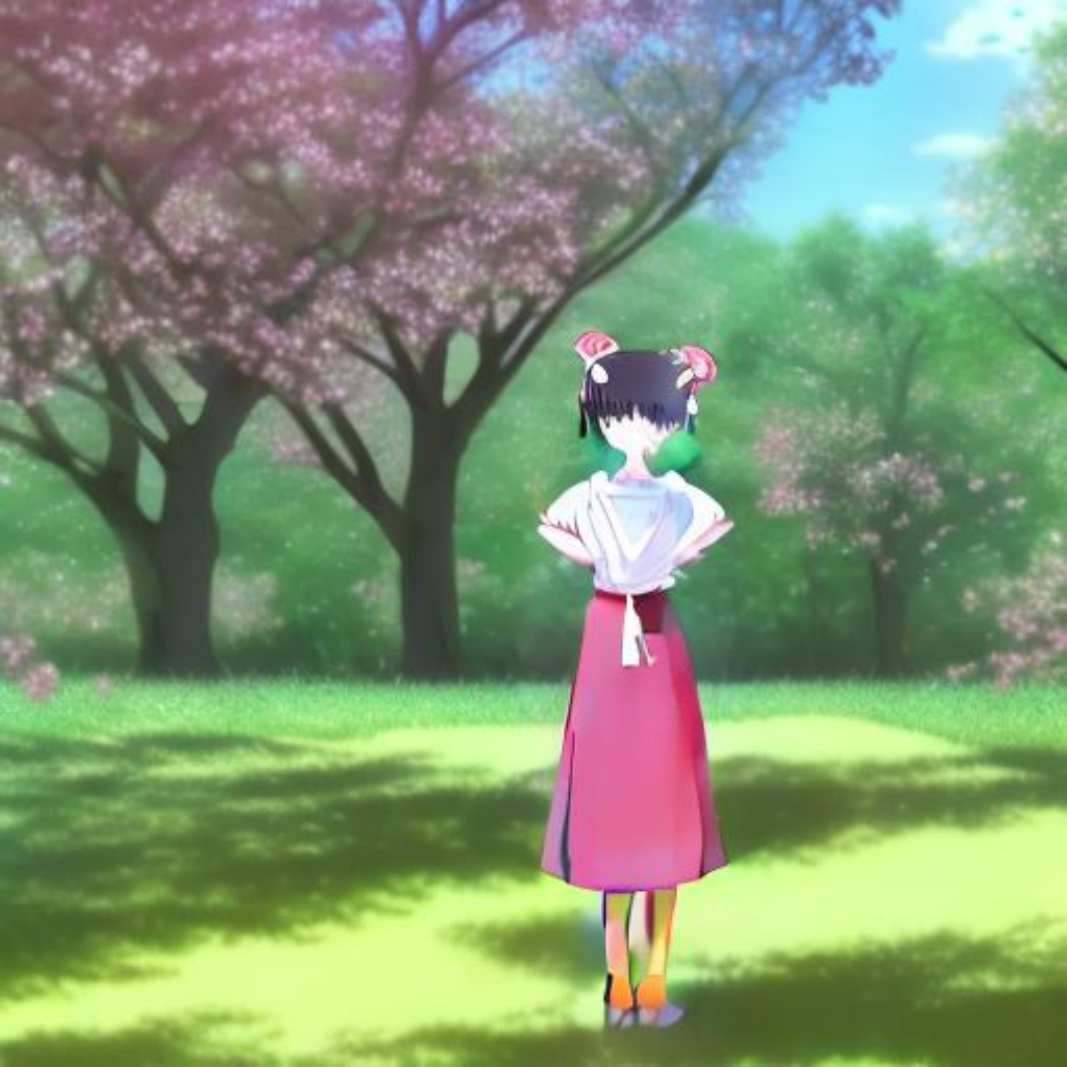}} & 
        \noindent\parbox[c]{0.14\columnwidth}{\includegraphics[width=0.14\columnwidth]{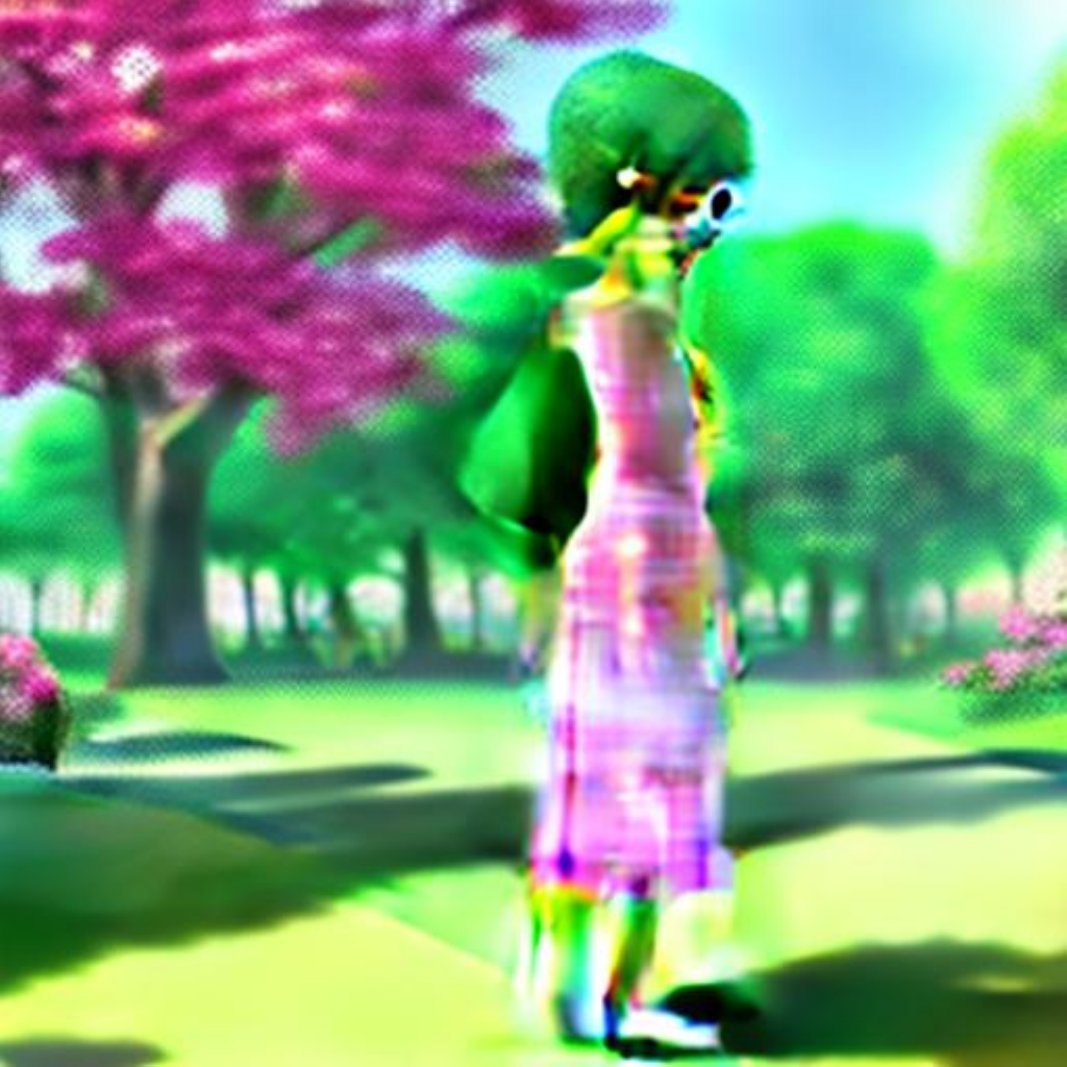}} & 
        \noindent\parbox[c]{0.14\columnwidth}{\includegraphics[width=0.14\columnwidth]{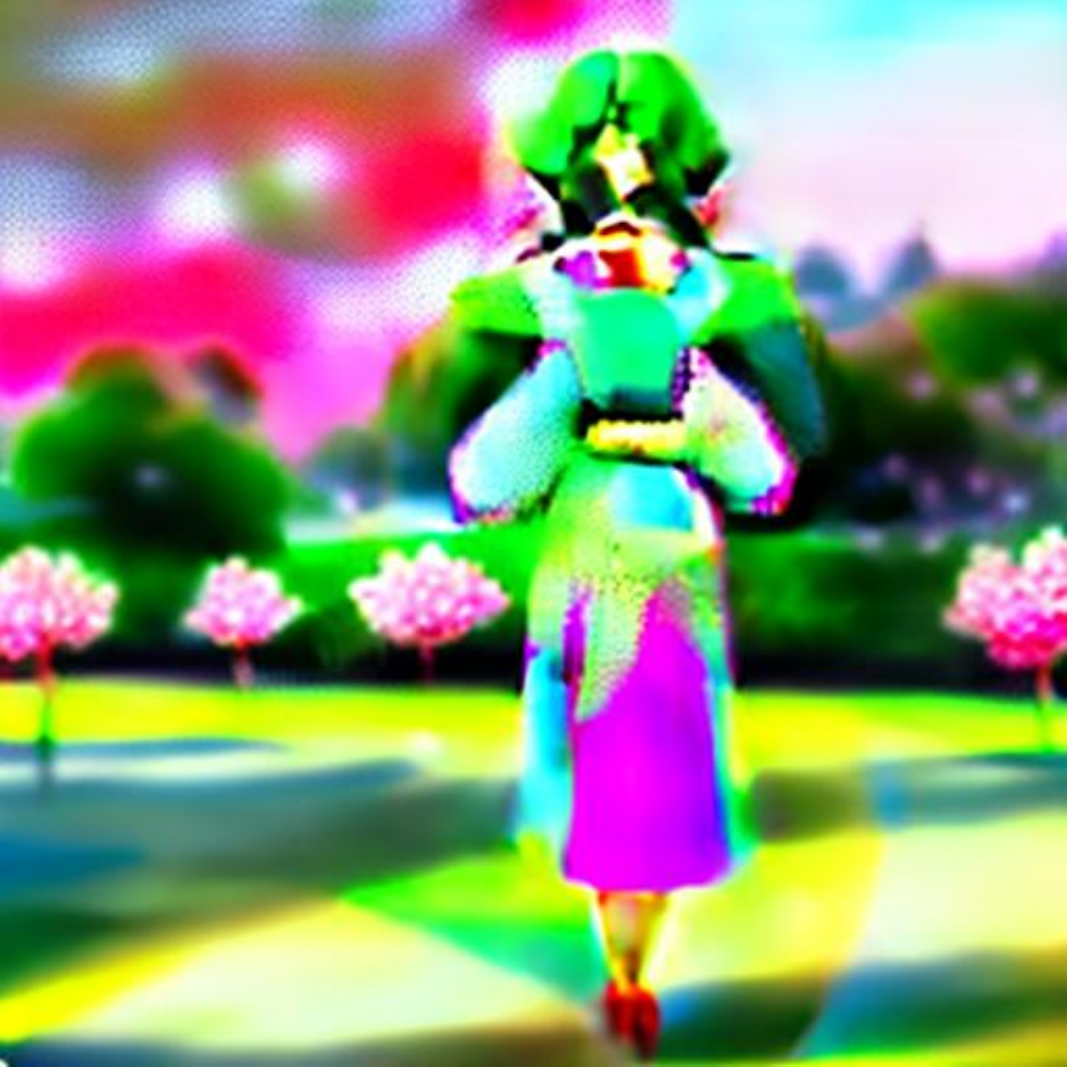}} \\

        \shortstack[l]{\tiny 20 steps} &
        \noindent\parbox[c]{0.14\columnwidth}{\includegraphics[width=0.14\columnwidth]{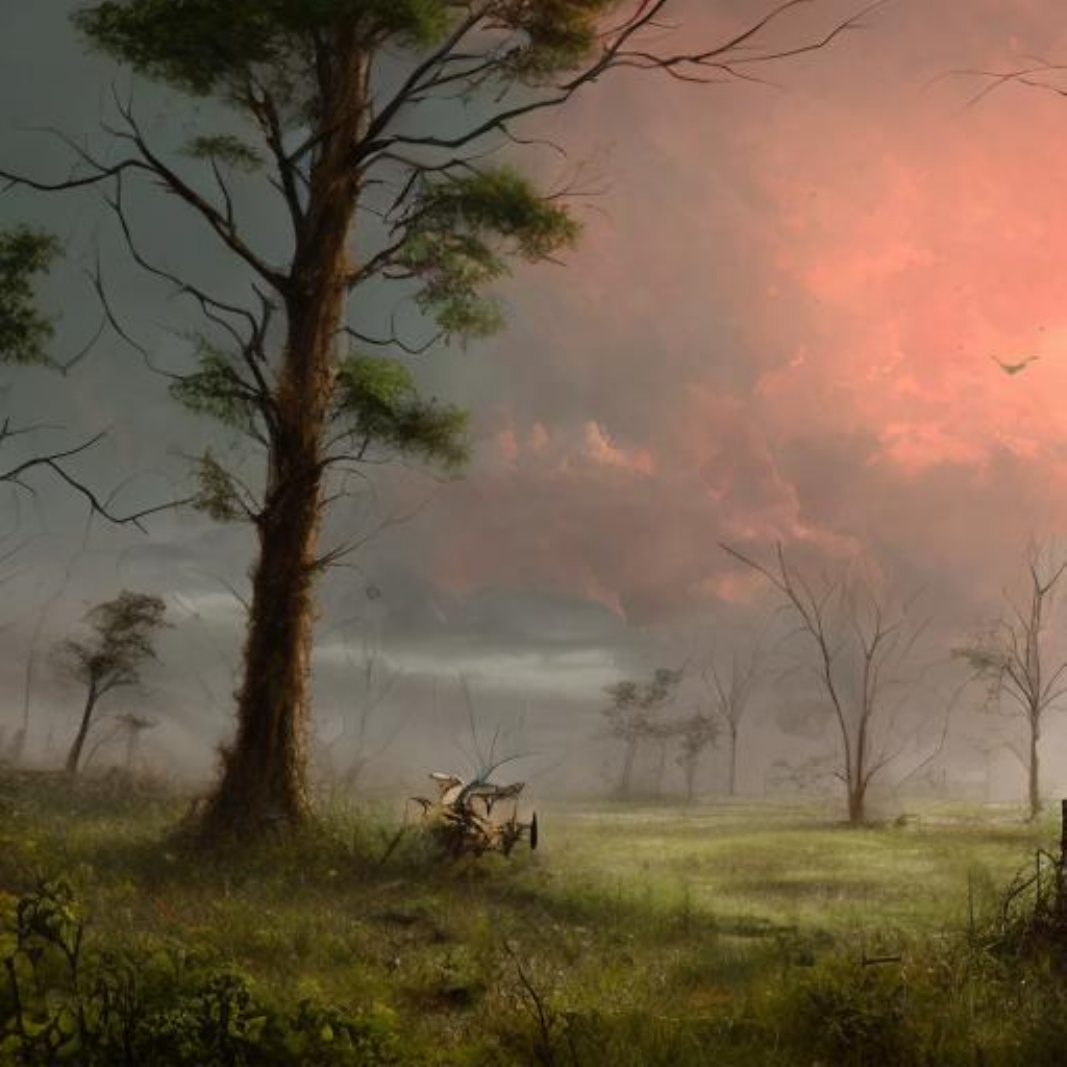}} & 
        \noindent\parbox[c]{0.14\columnwidth}{\includegraphics[width=0.14\columnwidth]{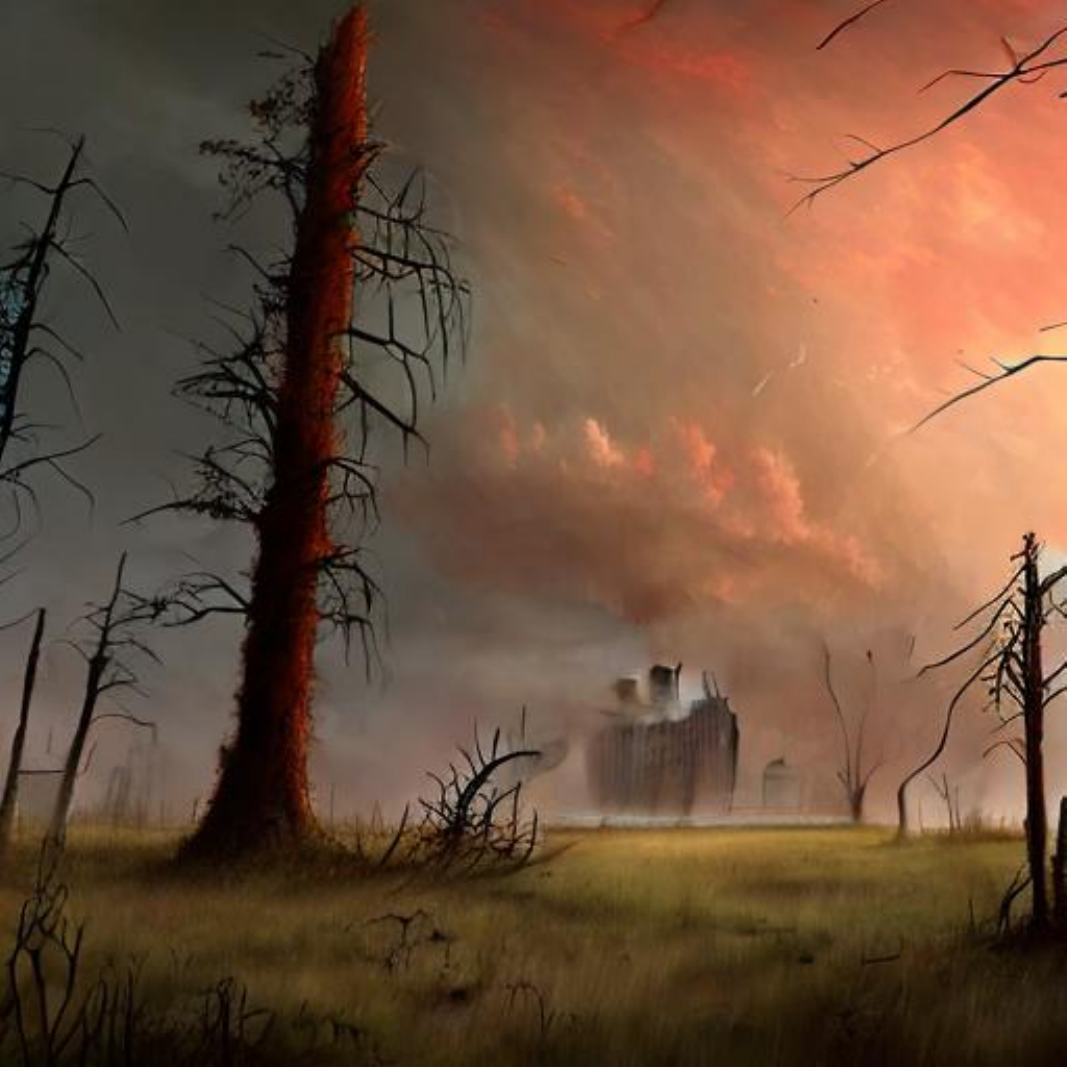}} & 
        \noindent\parbox[c]{0.14\columnwidth}{\includegraphics[width=0.14\columnwidth]{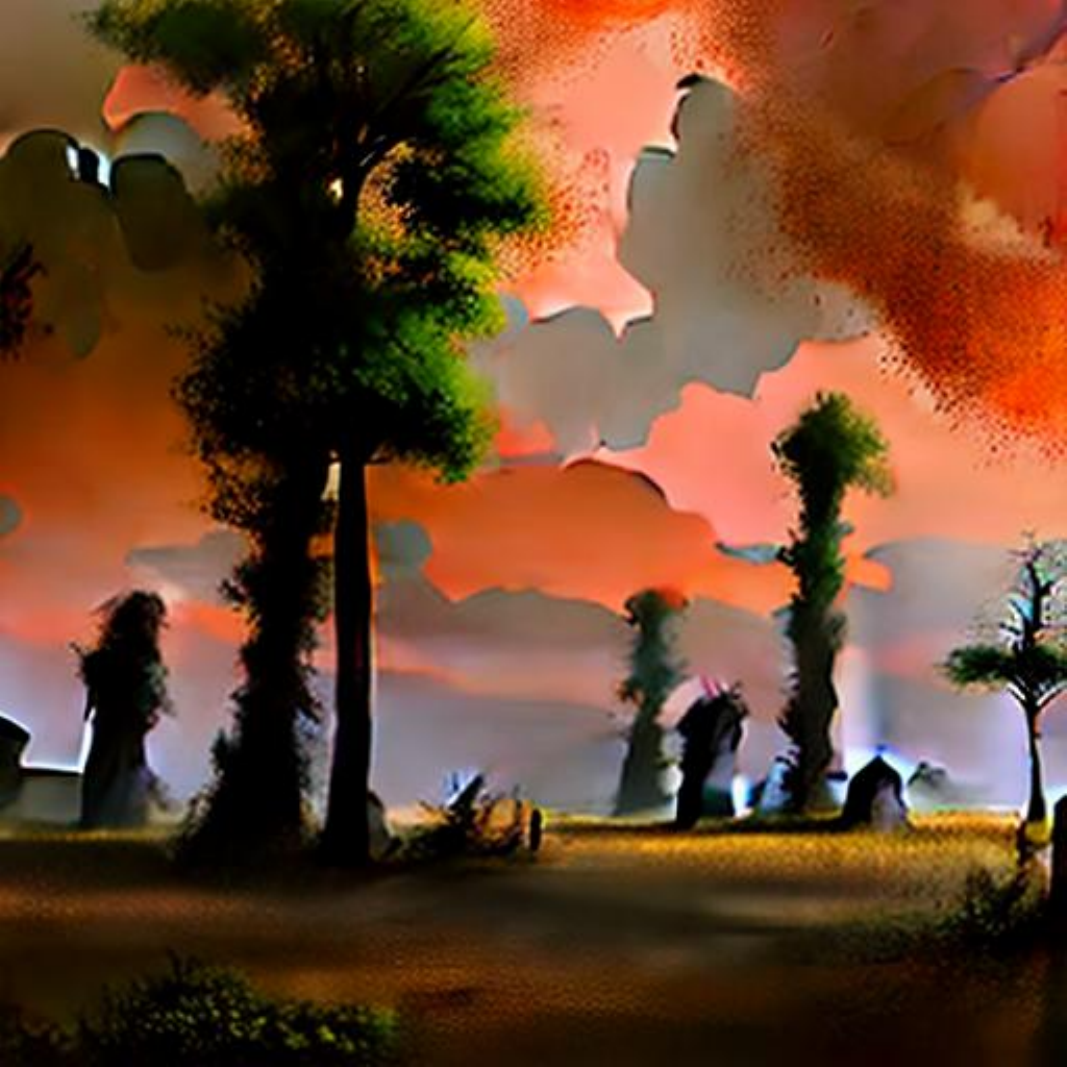}} & 
        \noindent\parbox[c]{0.14\columnwidth}{\includegraphics[width=0.14\columnwidth]{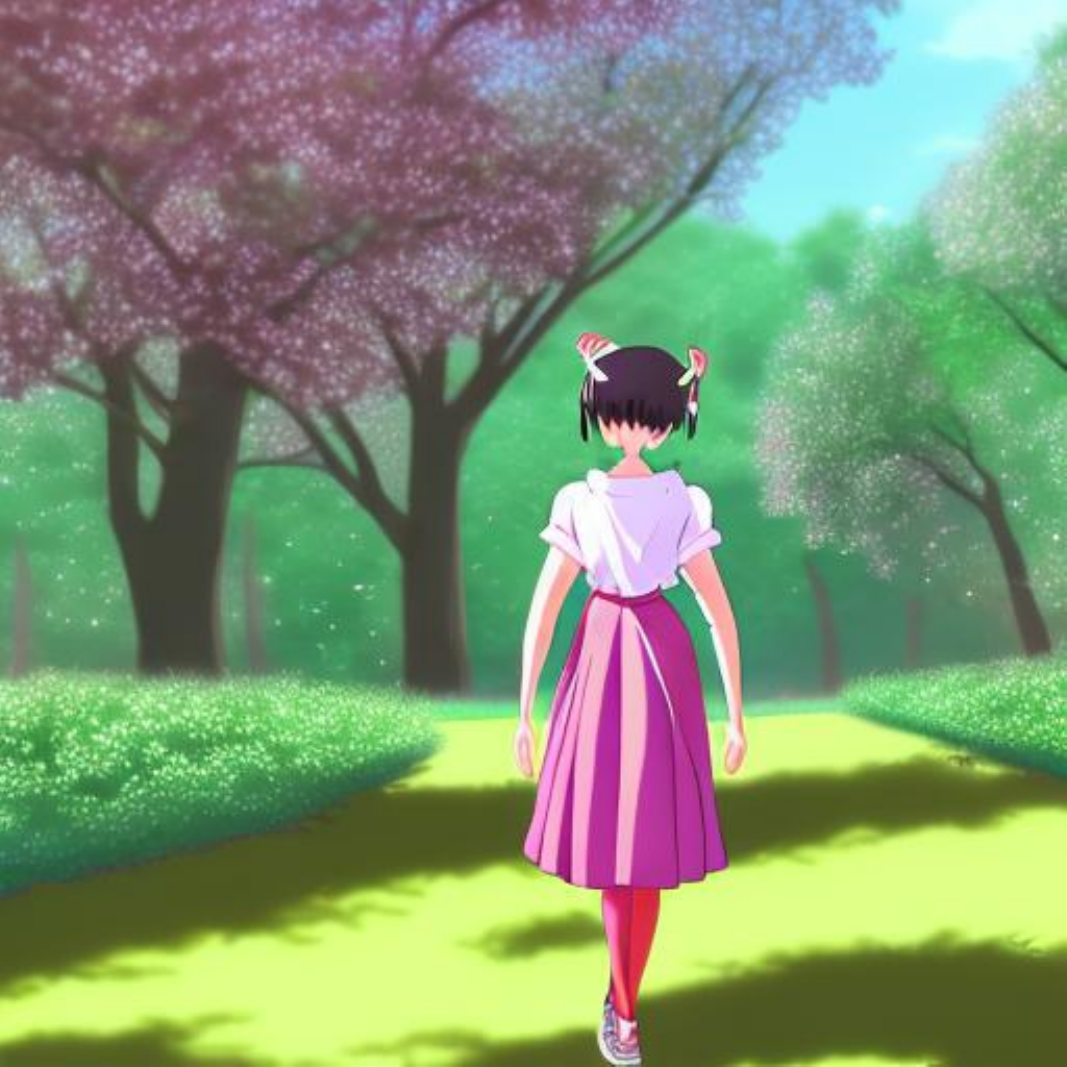}} & 
        \noindent\parbox[c]{0.14\columnwidth}{\includegraphics[width=0.14\columnwidth]{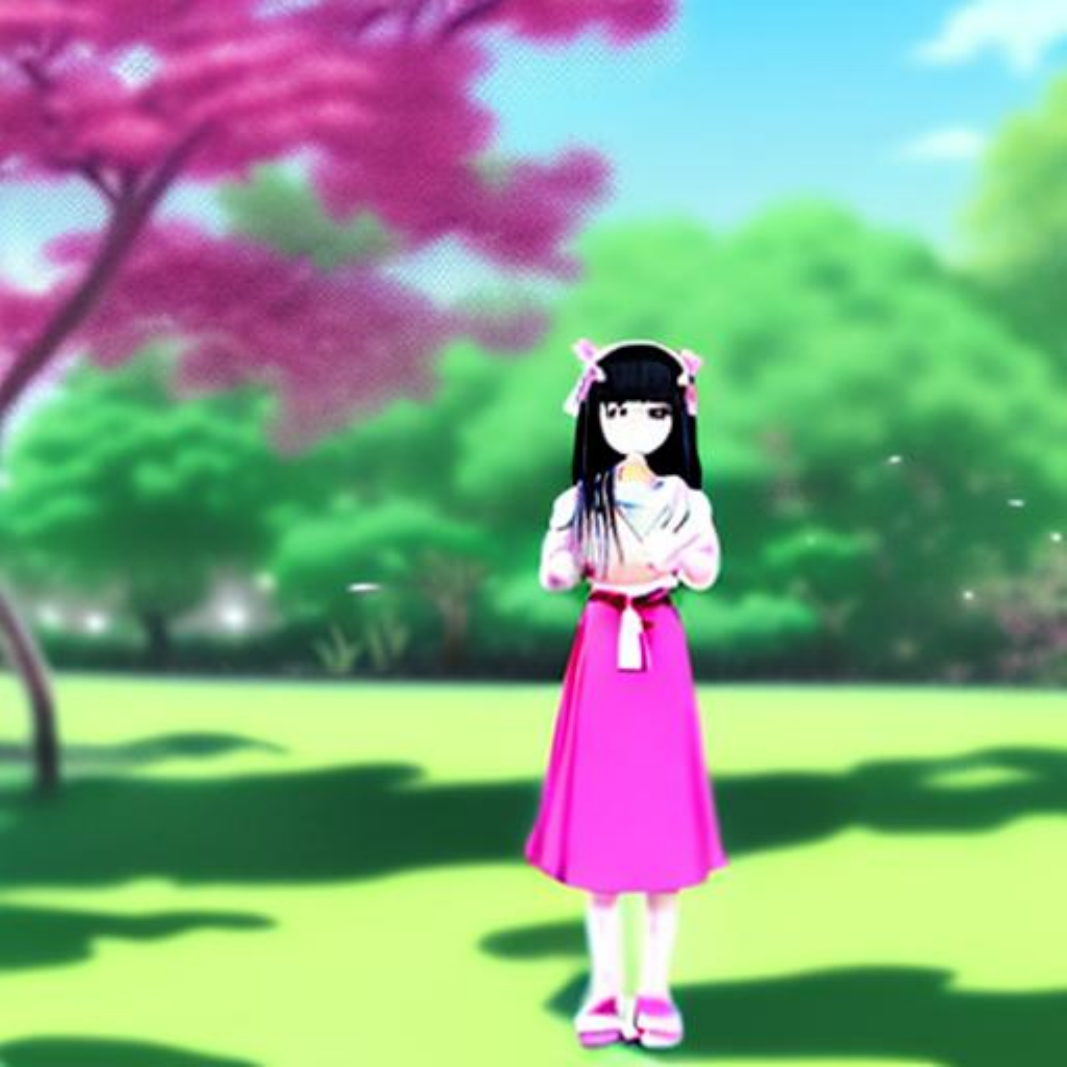}} & 
        \noindent\parbox[c]{0.14\columnwidth}{\includegraphics[width=0.14\columnwidth]{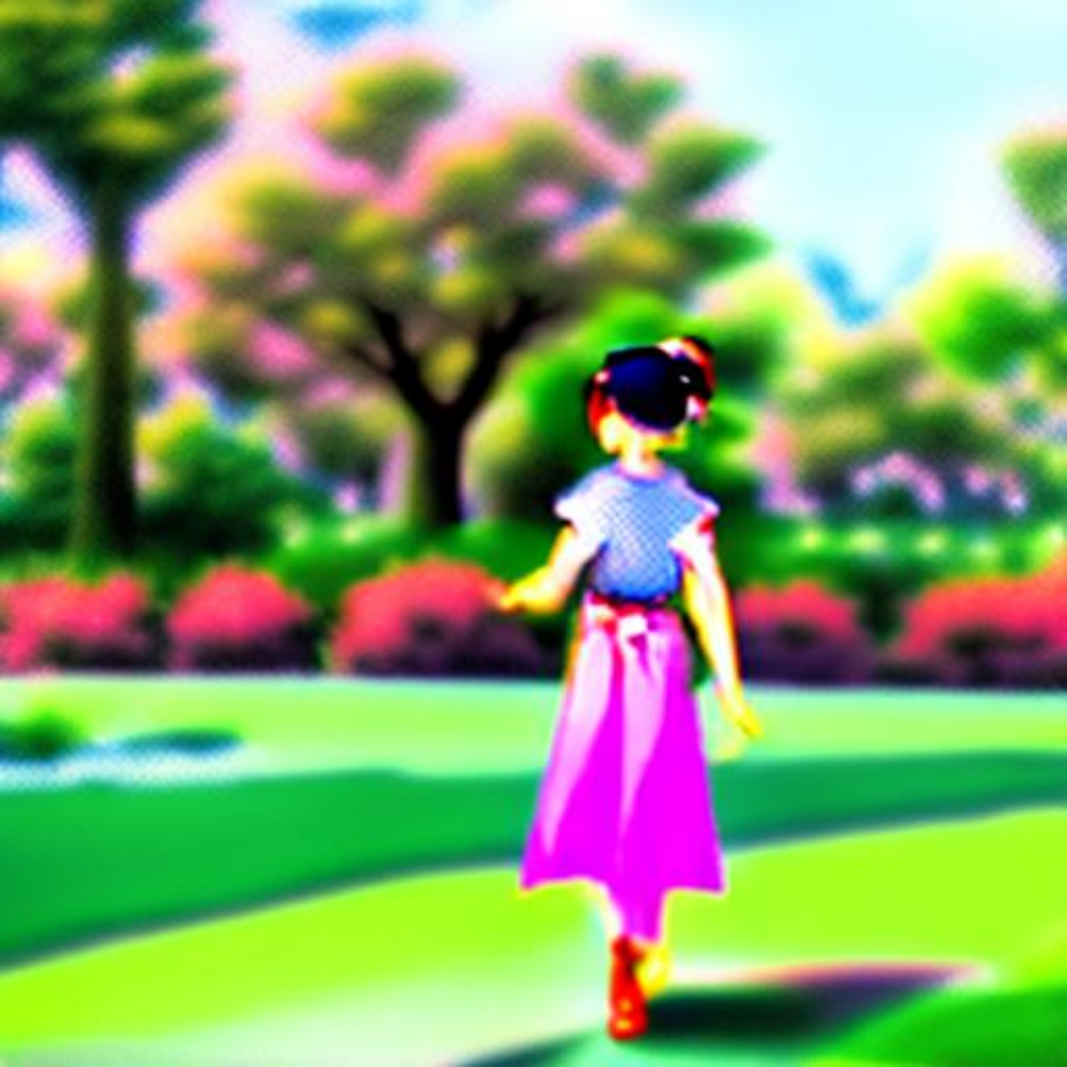}} \\

        \shortstack[l]{\tiny 40 steps} &
        \noindent\parbox[c]{0.14\columnwidth}{\includegraphics[width=0.14\columnwidth]{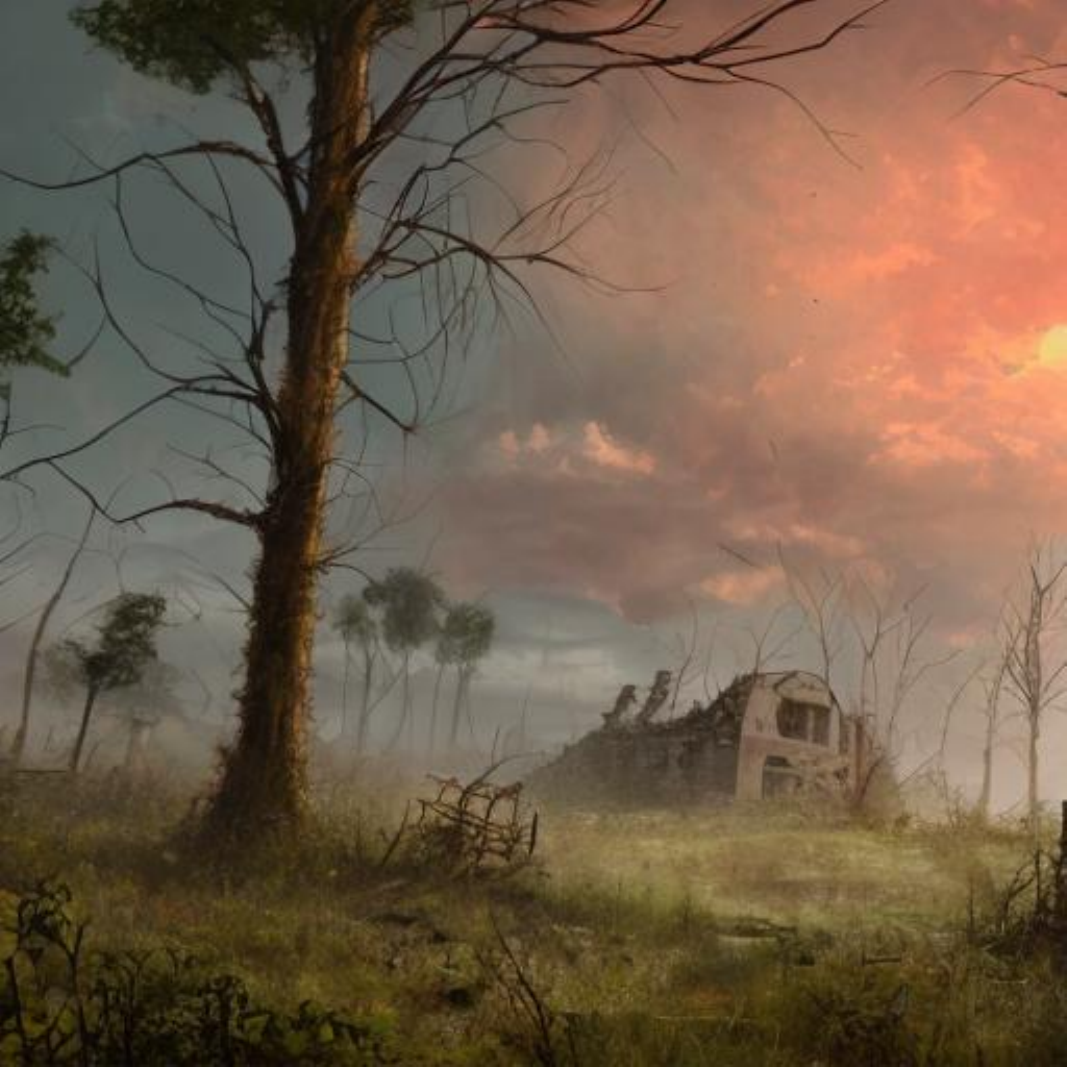}} & 
        \noindent\parbox[c]{0.14\columnwidth}{\includegraphics[width=0.14\columnwidth]{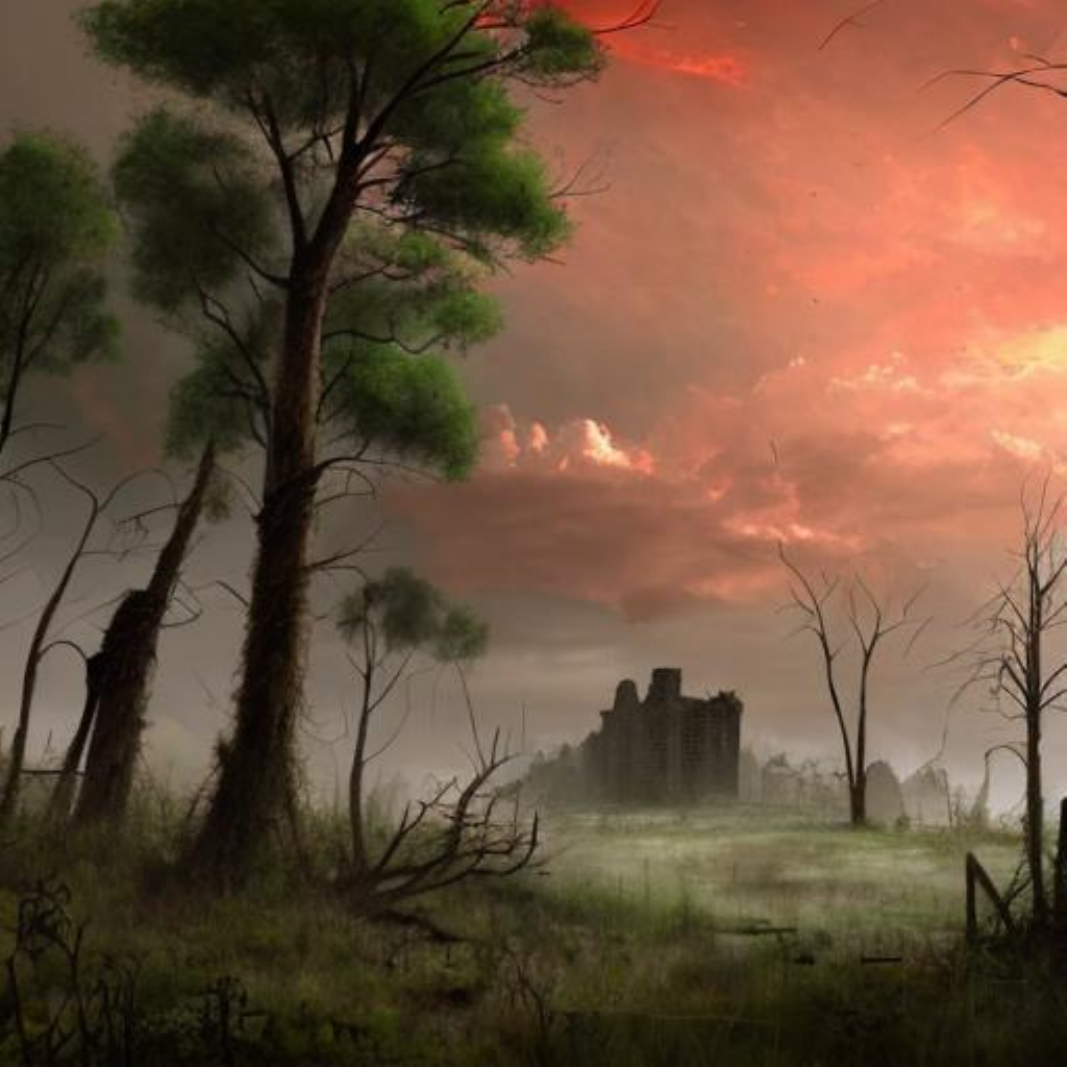}} & 
        \noindent\parbox[c]{0.14\columnwidth}{\includegraphics[width=0.14\columnwidth]{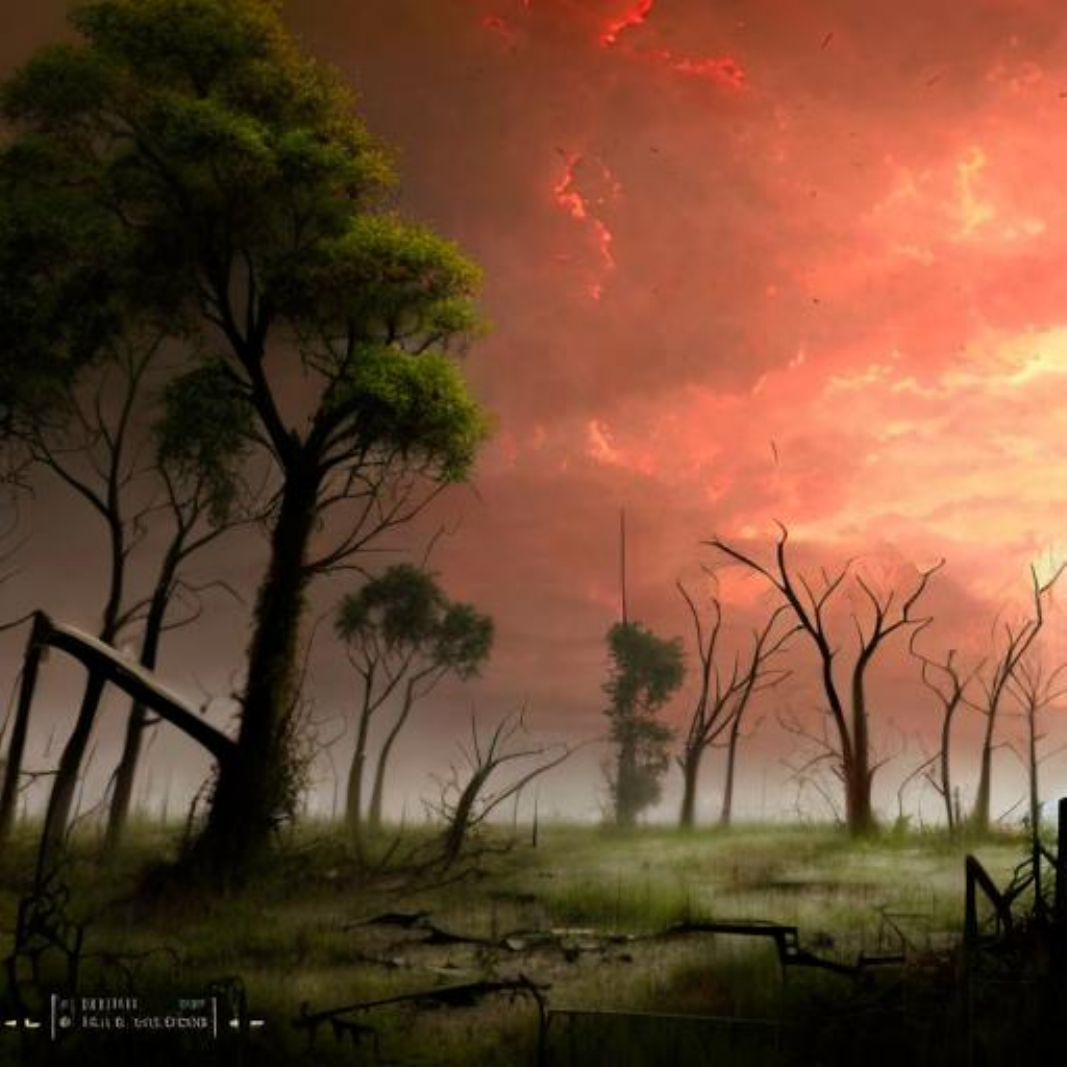}} & 
        \noindent\parbox[c]{0.14\columnwidth}{\includegraphics[width=0.14\columnwidth]{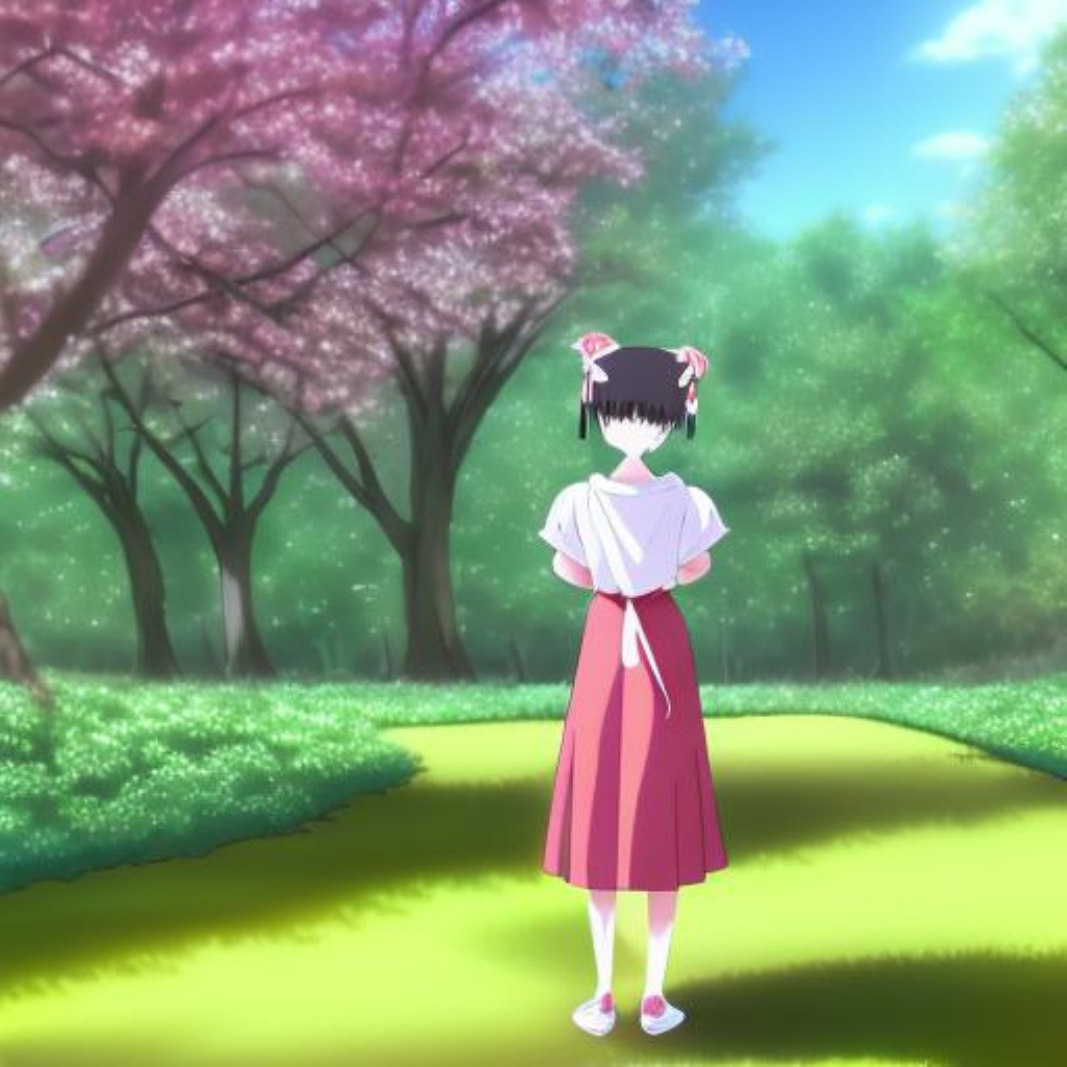}} & 
        \noindent\parbox[c]{0.14\columnwidth}{\includegraphics[width=0.14\columnwidth]{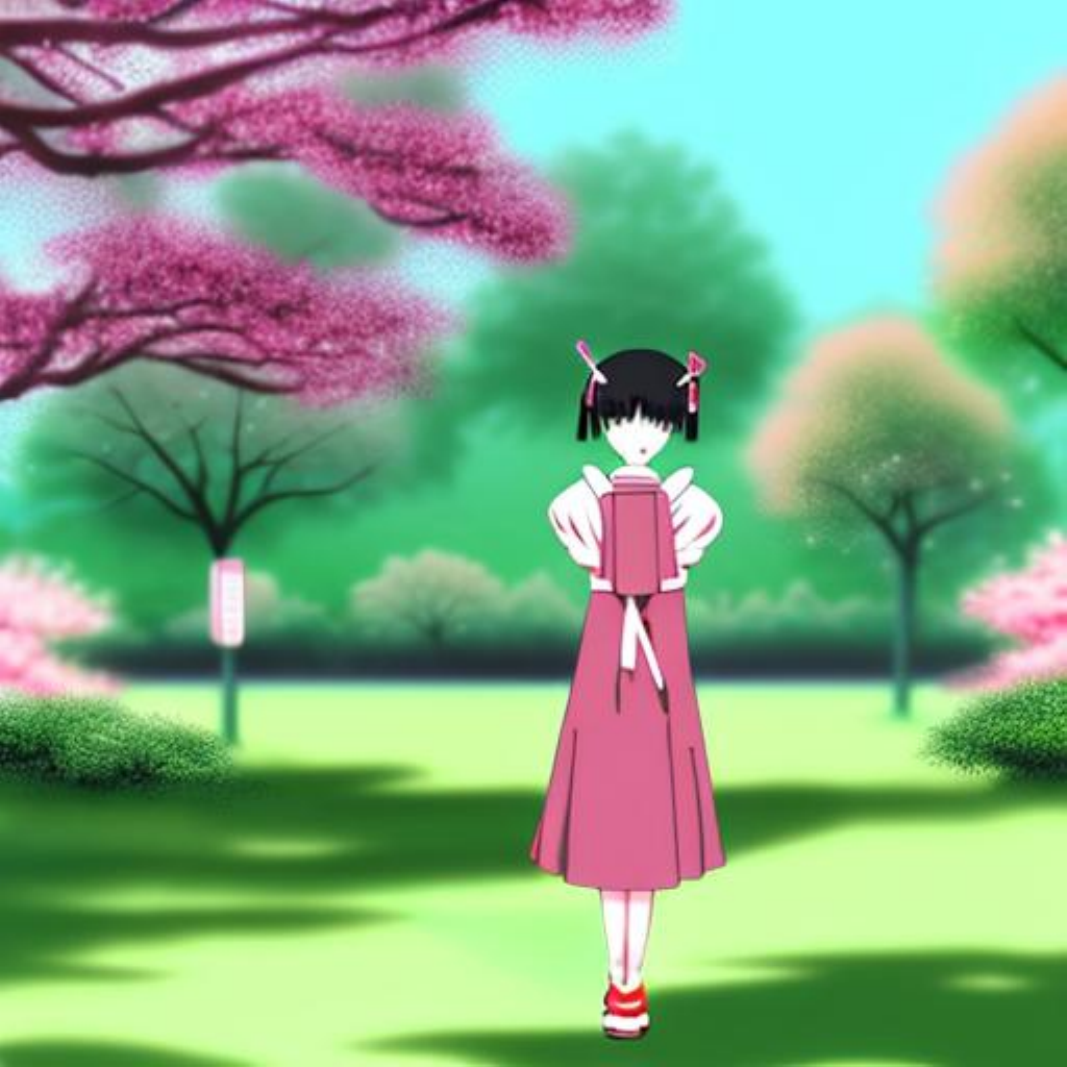}} & 
        \noindent\parbox[c]{0.14\columnwidth}{\includegraphics[width=0.14\columnwidth]{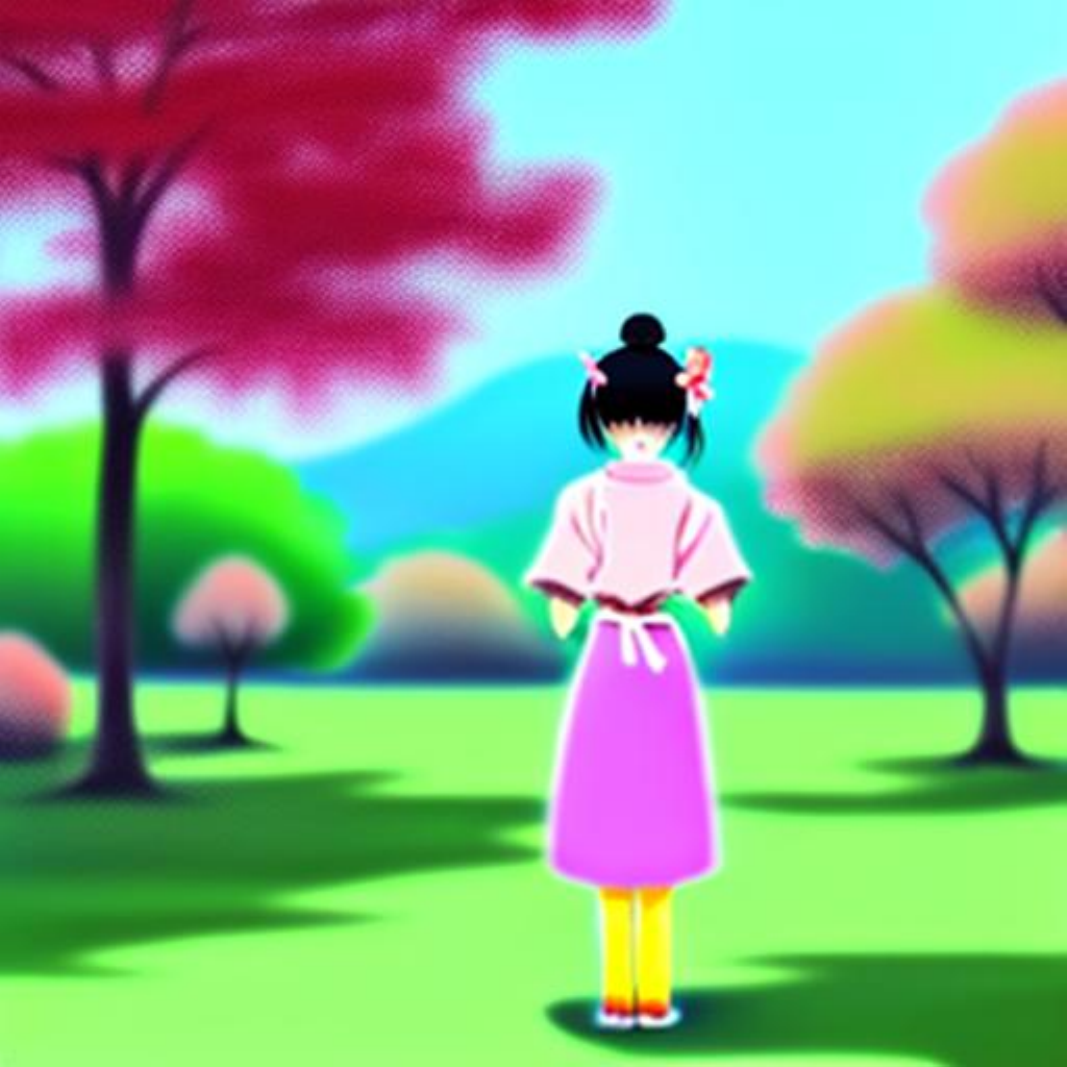}} \\
    \end{tabu}
    \caption{Comparison of samples generated from Openjourney \protect\footnotemark using PLMS4 \cite{liu2022pseudo} with different sampling steps and guidance scale.}
    \label{fig:scale_step_openjourney}
\end{figure}

\footnotetext{\url{https://huggingface.co/prompthero/openjourney}}


\tabulinesep=1pt
\begin{figure}
    \centering
    \begin{tabu} to \textwidth {@{}l@{\hspace{5pt}}c@{\hspace{2pt}}c@{\hspace{2pt}}c@{\hspace{4pt}}c@{\hspace{2pt}}c@{\hspace{2pt}}c@{}}
        & \multicolumn{3}{c}{\shortstack{\scriptsize "A post-apocalyptic world with ruined \\ \scriptsize buildings, overgrown vegetation, and a red sky"}}
        & \multicolumn{3}{c}{\shortstack{\scriptsize "A girl standing in a park in \\ \scriptsize Japanese animation style"}} \\

        & \multicolumn{1}{c}{\shortstack{\scriptsize $s = 7.5$}}
        & \multicolumn{1}{c}{\shortstack{\scriptsize $s = 15$}}
        & \multicolumn{1}{c}{\shortstack{\scriptsize $s = 22.5$}}
        & \multicolumn{1}{c}{\shortstack{\scriptsize $s = 7.5$}}
        & \multicolumn{1}{c}{\shortstack{\scriptsize $s = 15$}}
        & \multicolumn{1}{c}{\shortstack{\scriptsize $s = 22.5$}}
        \\
        
        \shortstack[l]{\tiny 10 steps} &
        \noindent\parbox[c]{0.14\columnwidth}{\includegraphics[width=0.14\columnwidth]{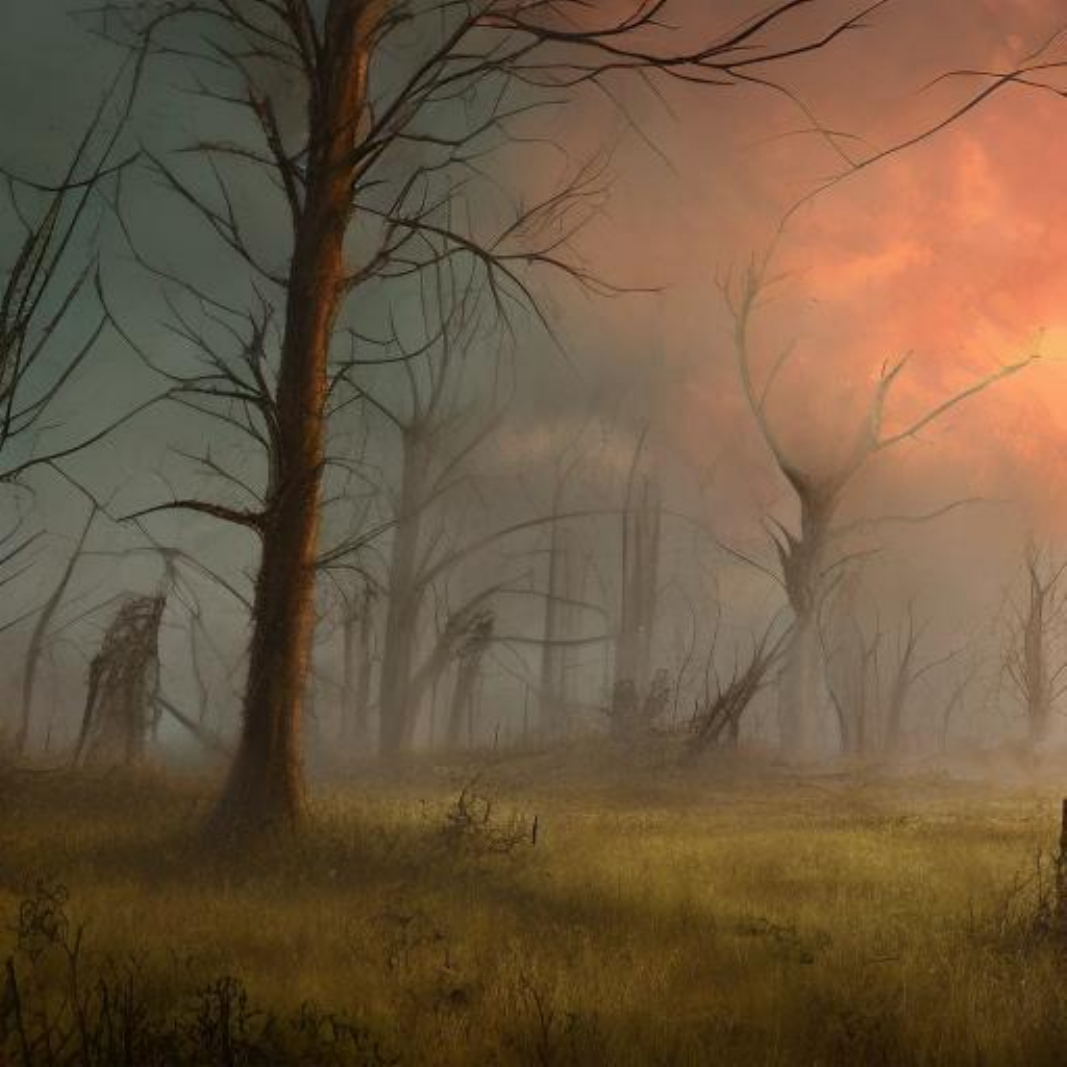}} & 
        \noindent\parbox[c]{0.14\columnwidth}{\includegraphics[width=0.14\columnwidth]{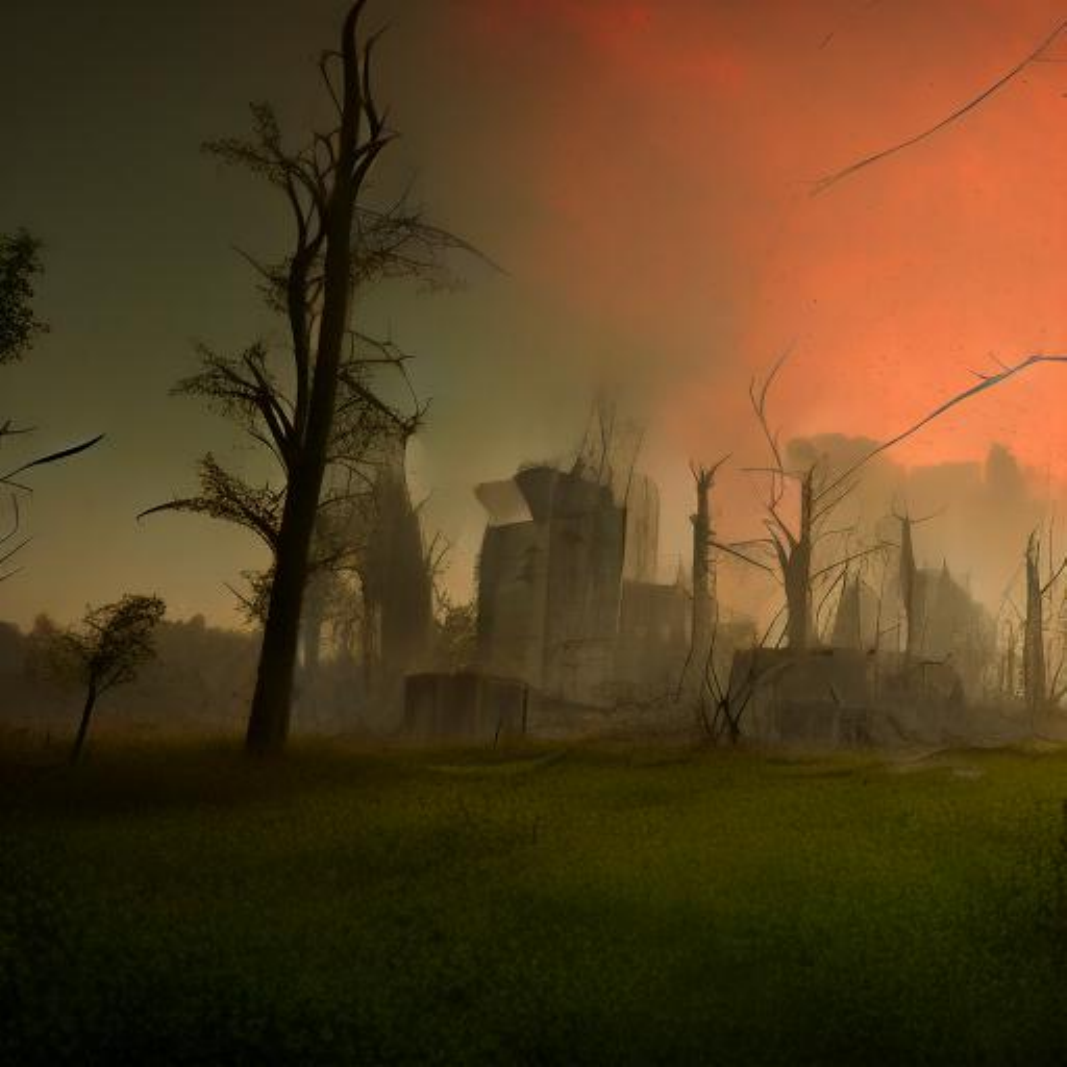}} & 
        \noindent\parbox[c]{0.14\columnwidth}{\includegraphics[width=0.14\columnwidth]{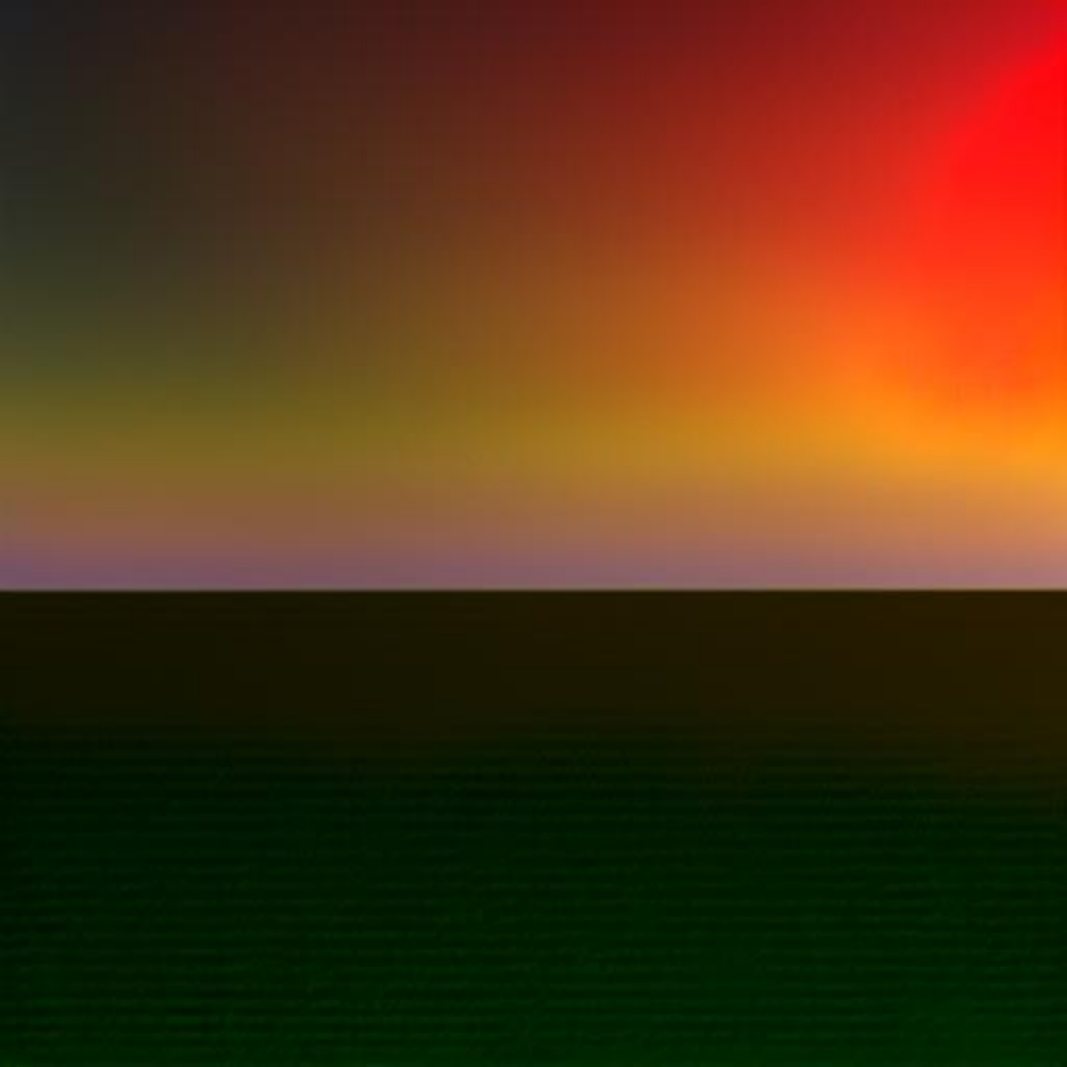}} & 
        \noindent\parbox[c]{0.14\columnwidth}{\includegraphics[width=0.14\columnwidth]{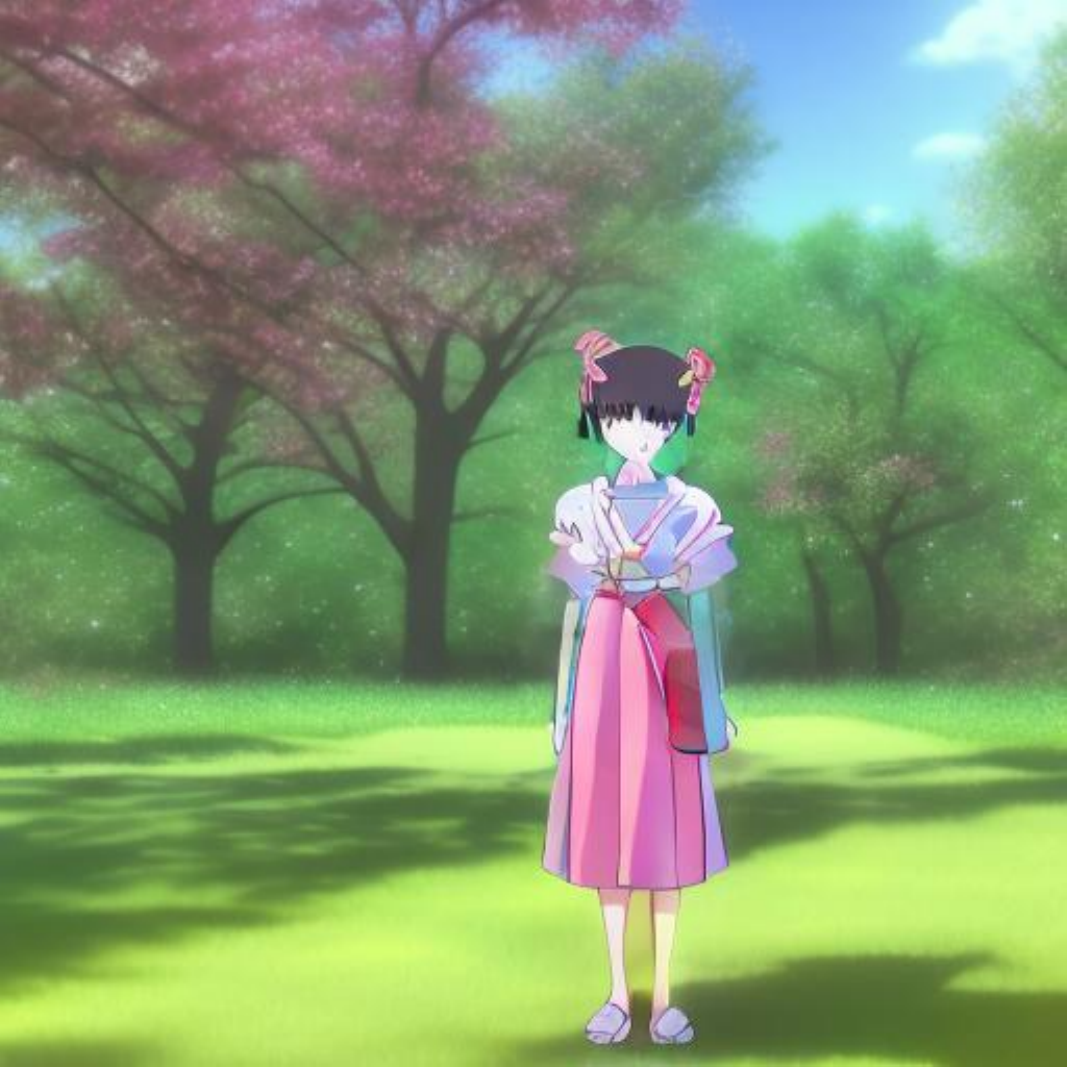}} & 
        \noindent\parbox[c]{0.14\columnwidth}{\includegraphics[width=0.14\columnwidth]{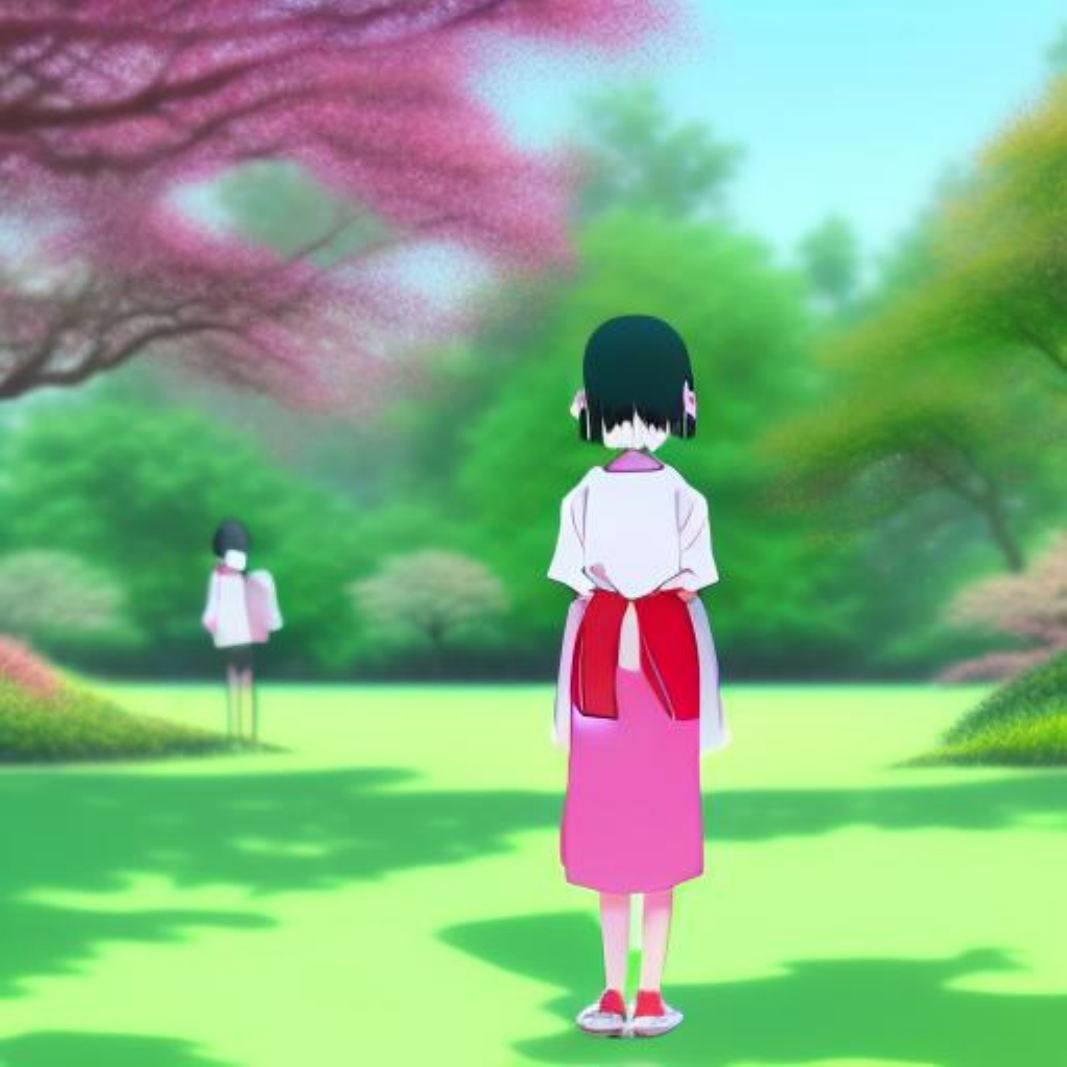}} & 
        \noindent\parbox[c]{0.14\columnwidth}{\includegraphics[width=0.14\columnwidth]{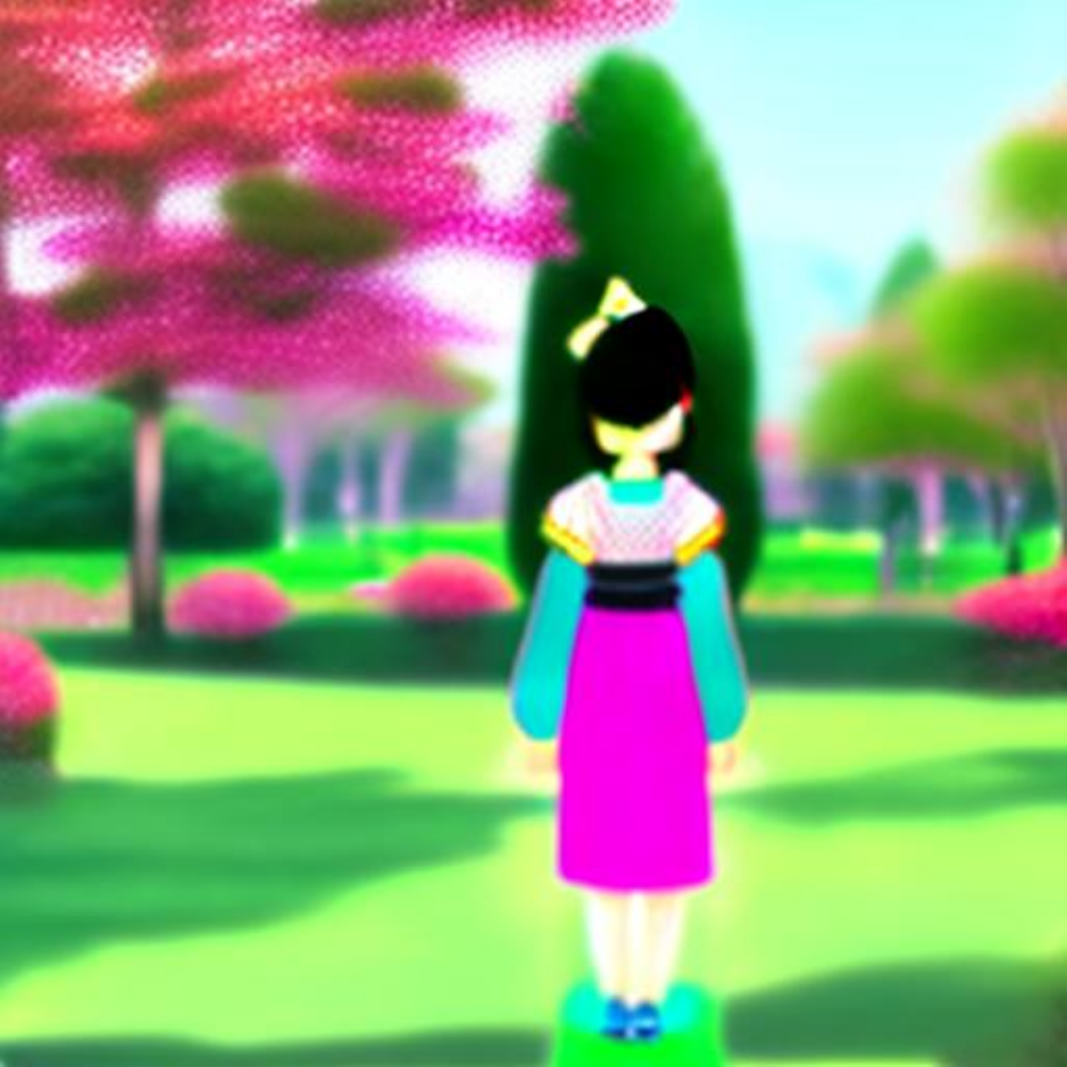}} \\

        \shortstack[l]{\tiny 15 steps} &
        \noindent\parbox[c]{0.14\columnwidth}{\includegraphics[width=0.14\columnwidth]{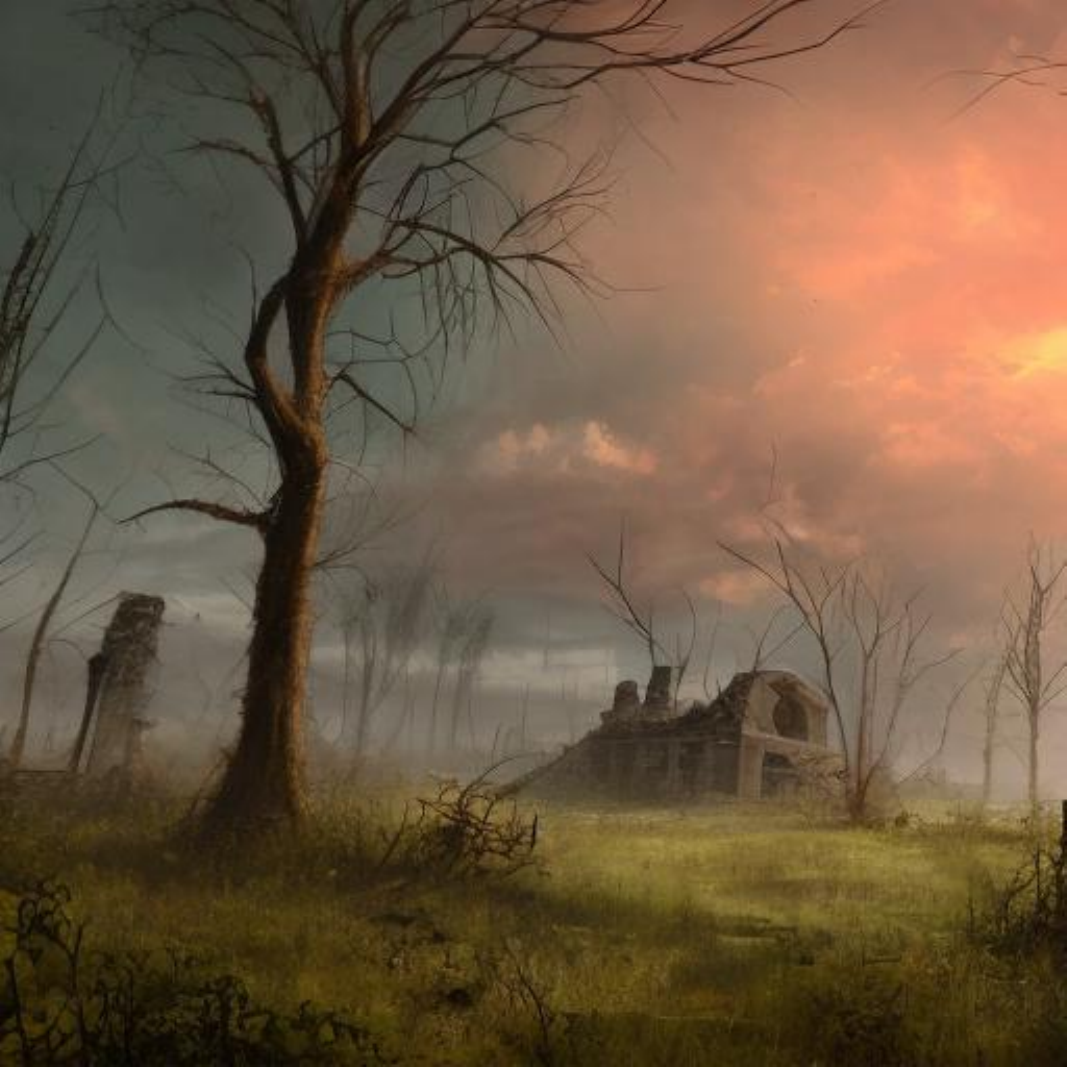}} & 
        \noindent\parbox[c]{0.14\columnwidth}{\includegraphics[width=0.14\columnwidth]{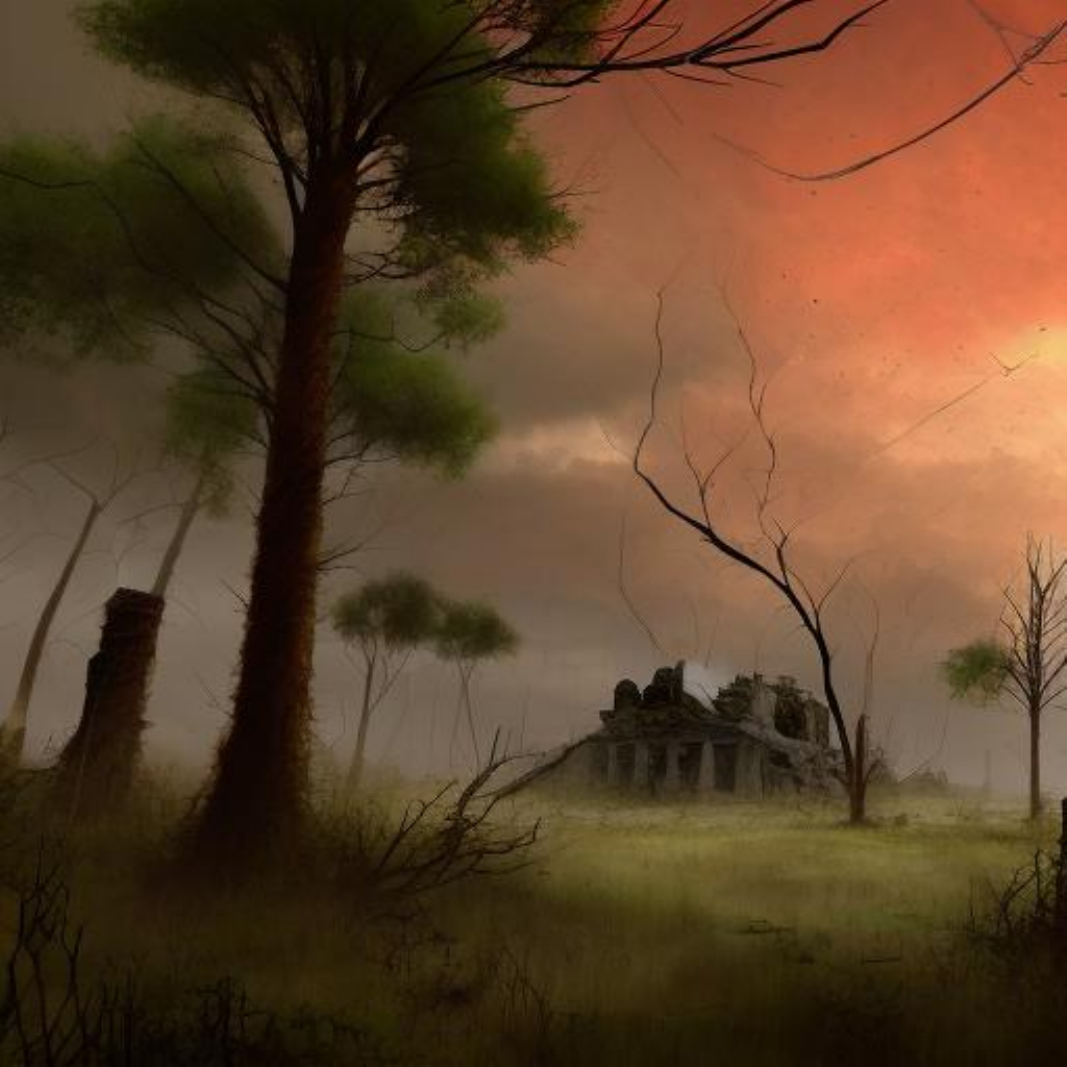}} & 
        \noindent\parbox[c]{0.14\columnwidth}{\includegraphics[width=0.14\columnwidth]{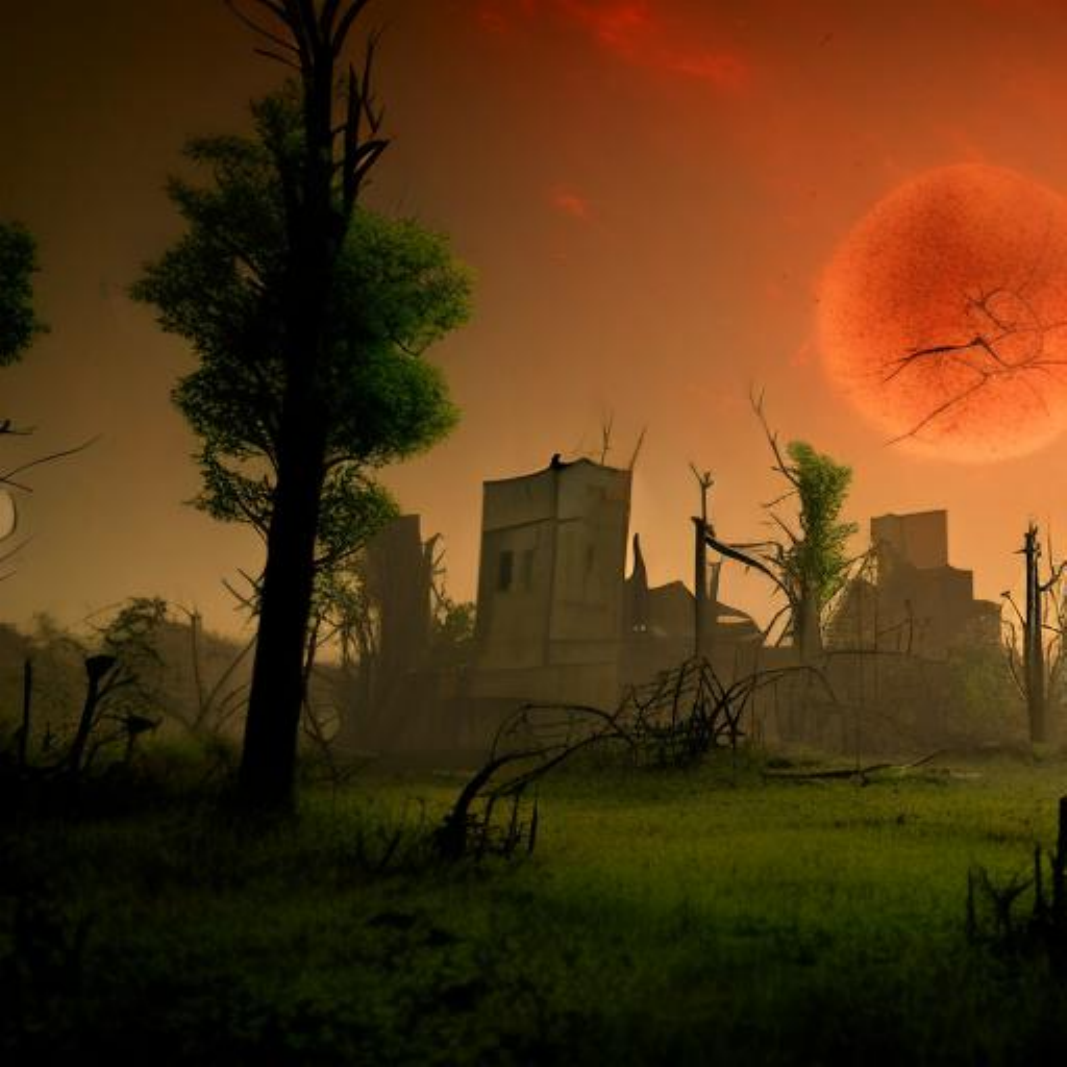}} & 
        \noindent\parbox[c]{0.14\columnwidth}{\includegraphics[width=0.14\columnwidth]{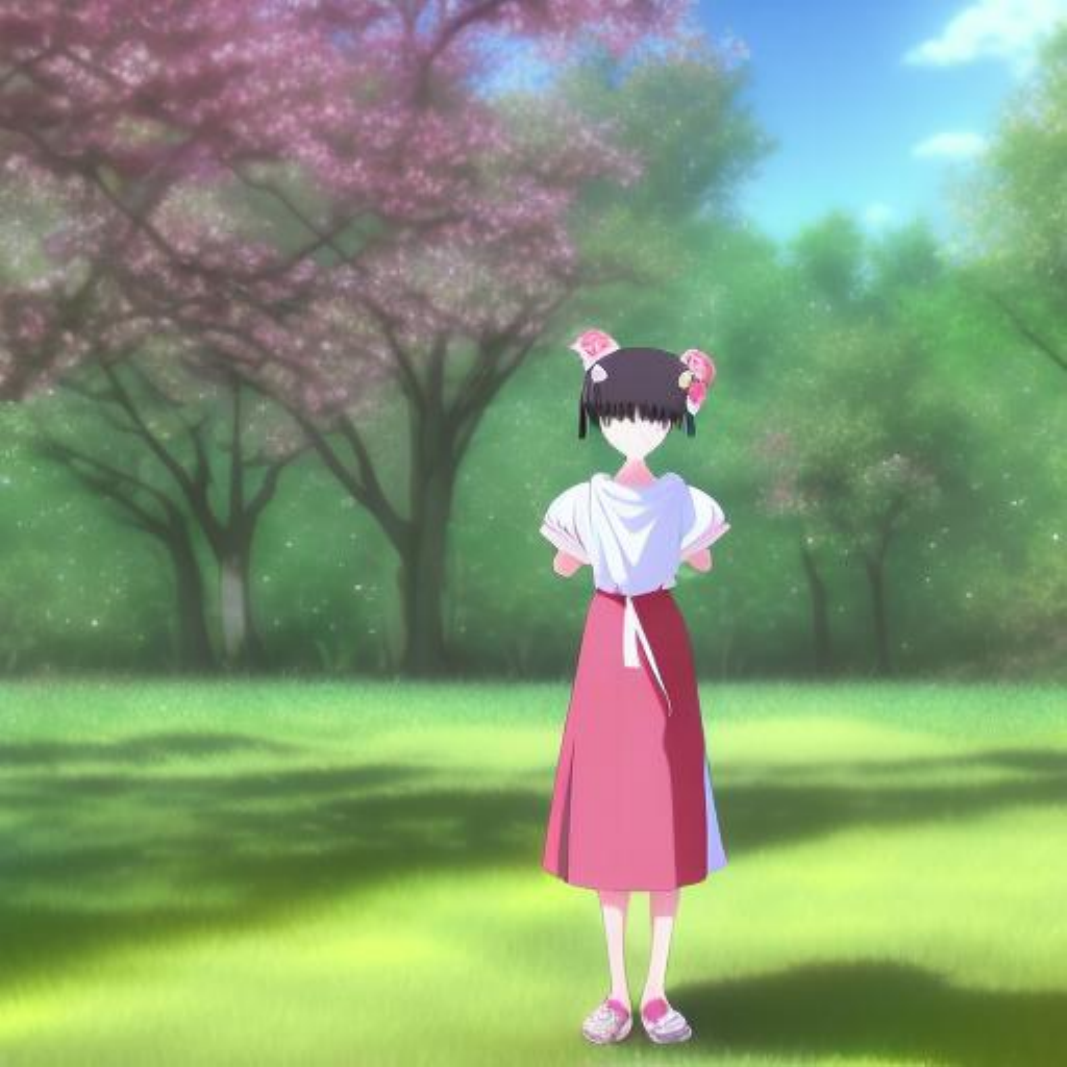}} & 
        \noindent\parbox[c]{0.14\columnwidth}{\includegraphics[width=0.14\columnwidth]{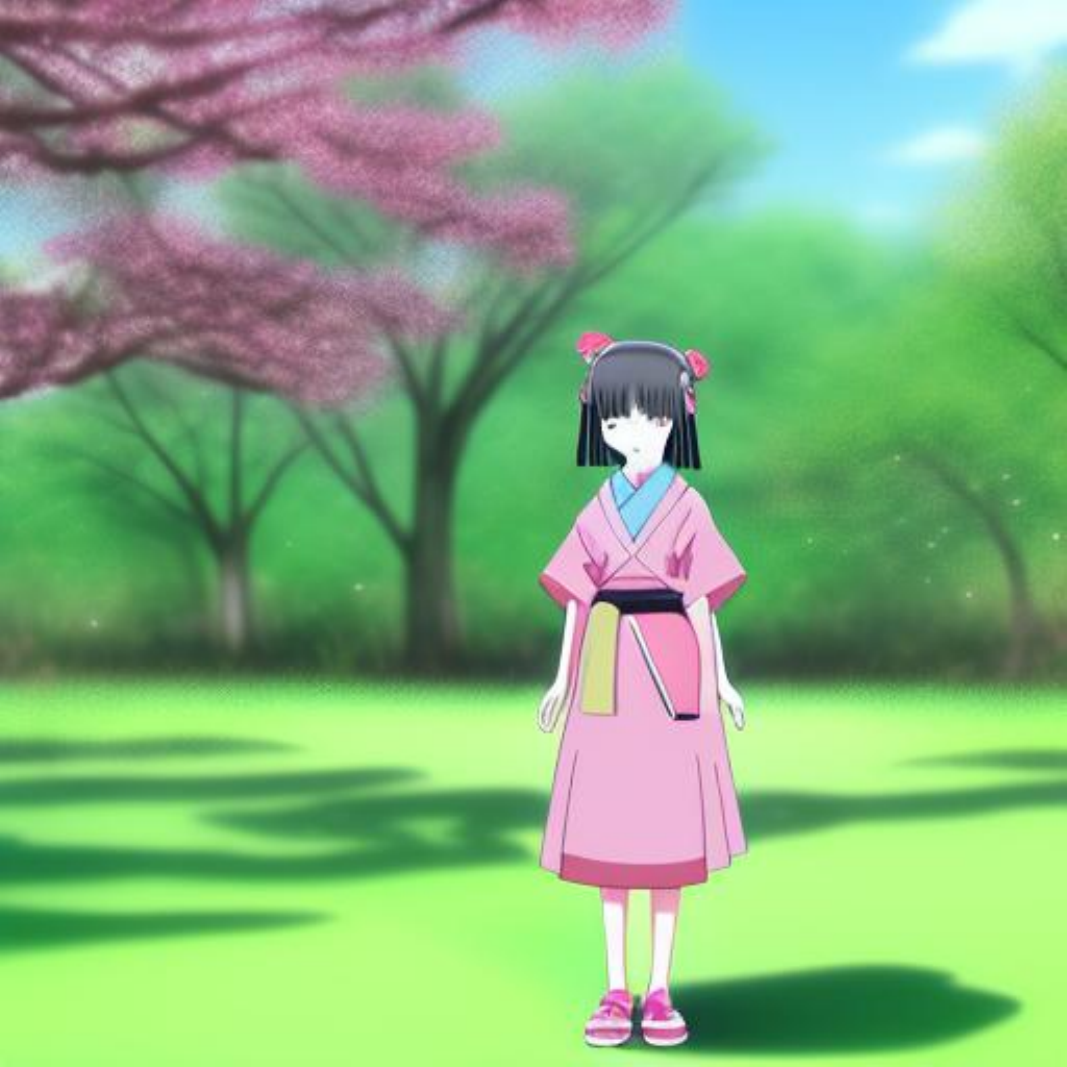}} & 
        \noindent\parbox[c]{0.14\columnwidth}{\includegraphics[width=0.14\columnwidth]{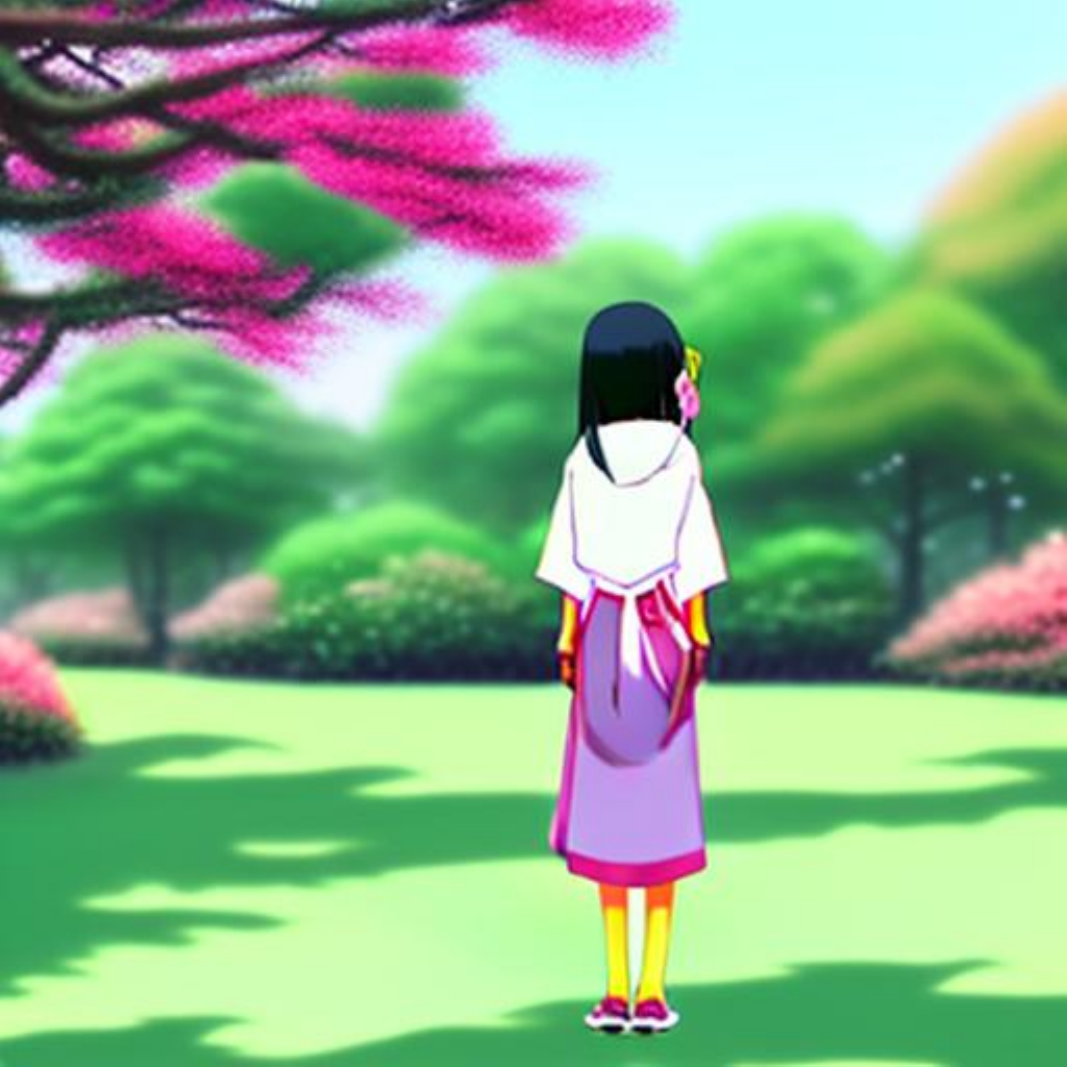}} \\

        \shortstack[l]{\tiny 20 steps} &
        \noindent\parbox[c]{0.14\columnwidth}{\includegraphics[width=0.14\columnwidth]{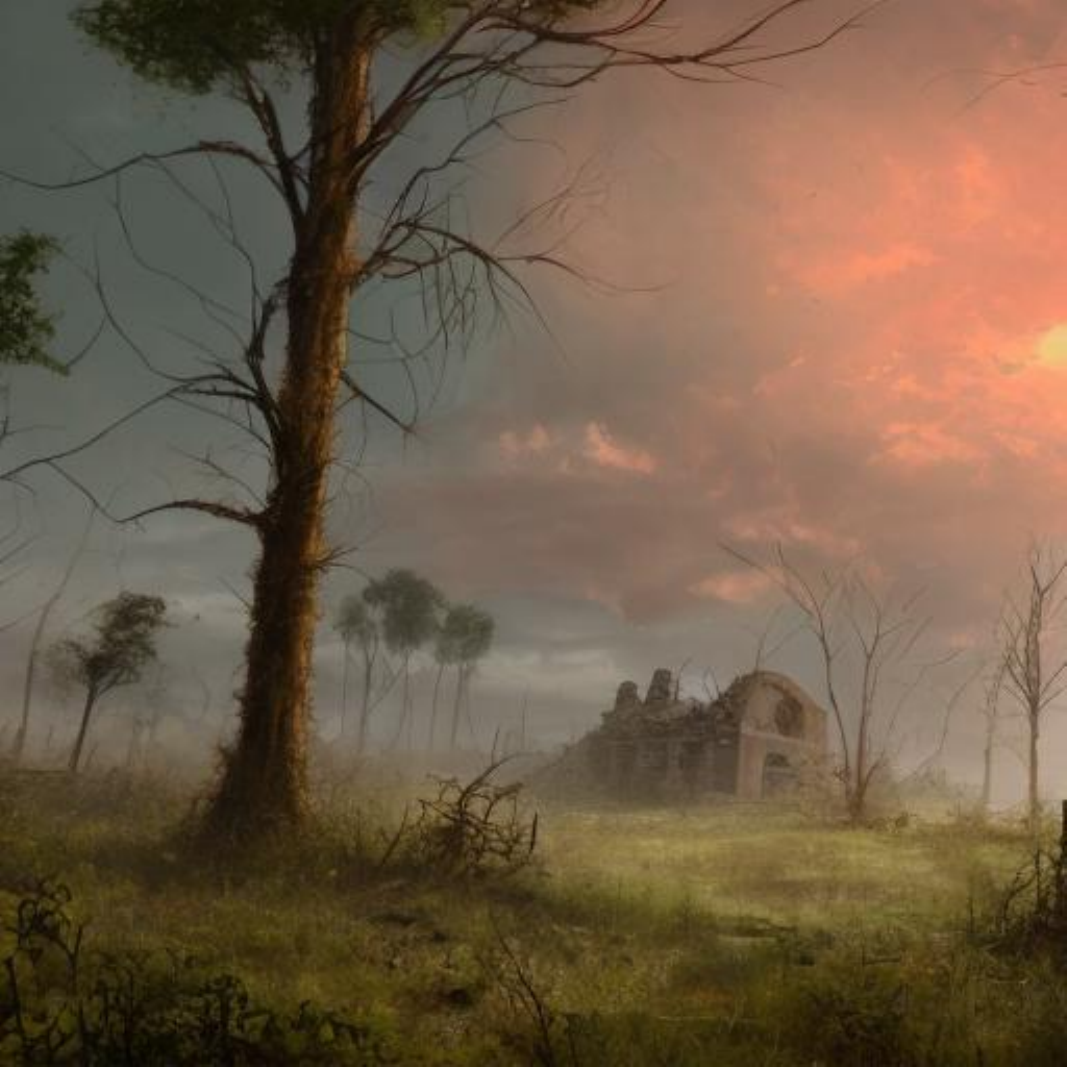}} & 
        \noindent\parbox[c]{0.14\columnwidth}{\includegraphics[width=0.14\columnwidth]{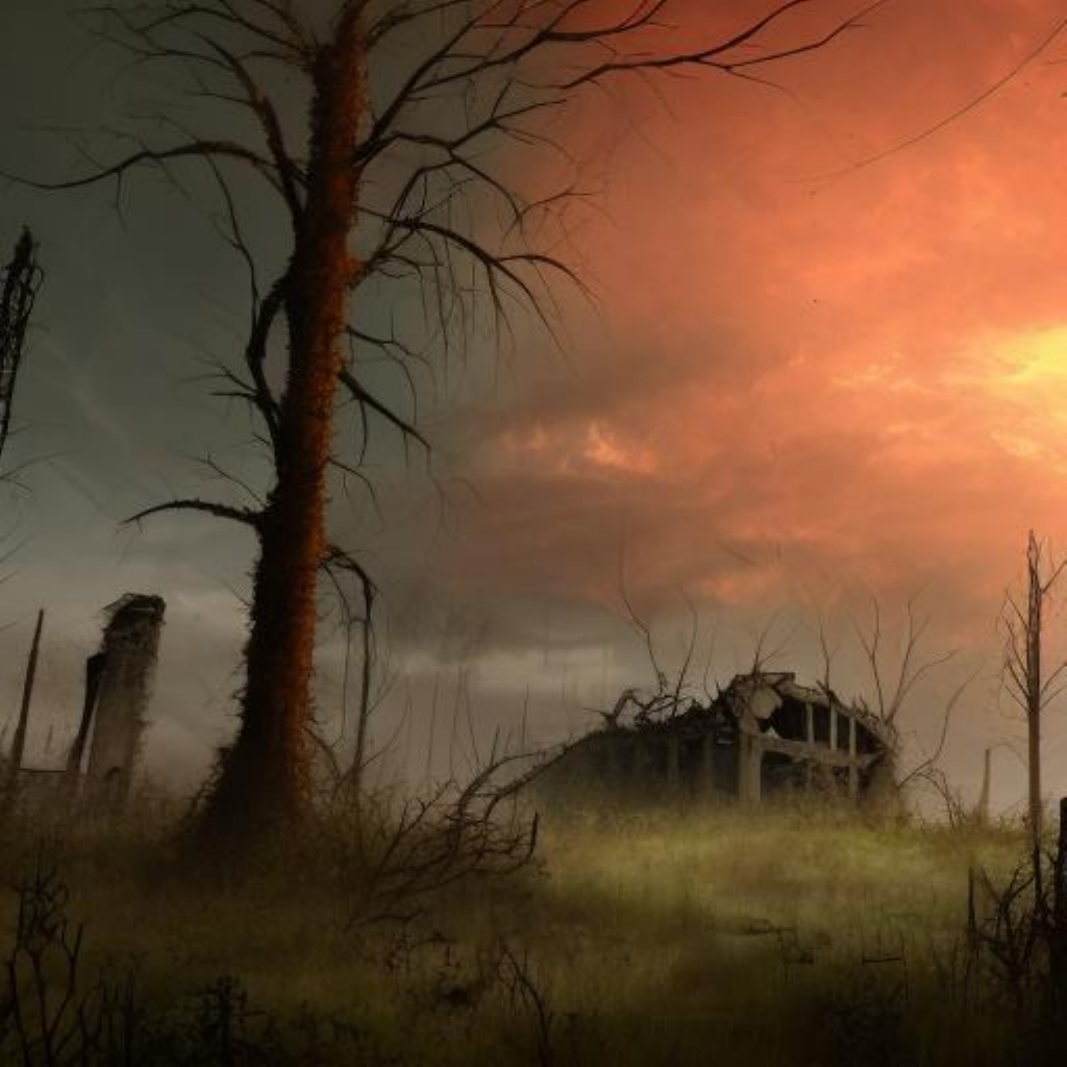}} & 
        \noindent\parbox[c]{0.14\columnwidth}{\includegraphics[width=0.14\columnwidth]{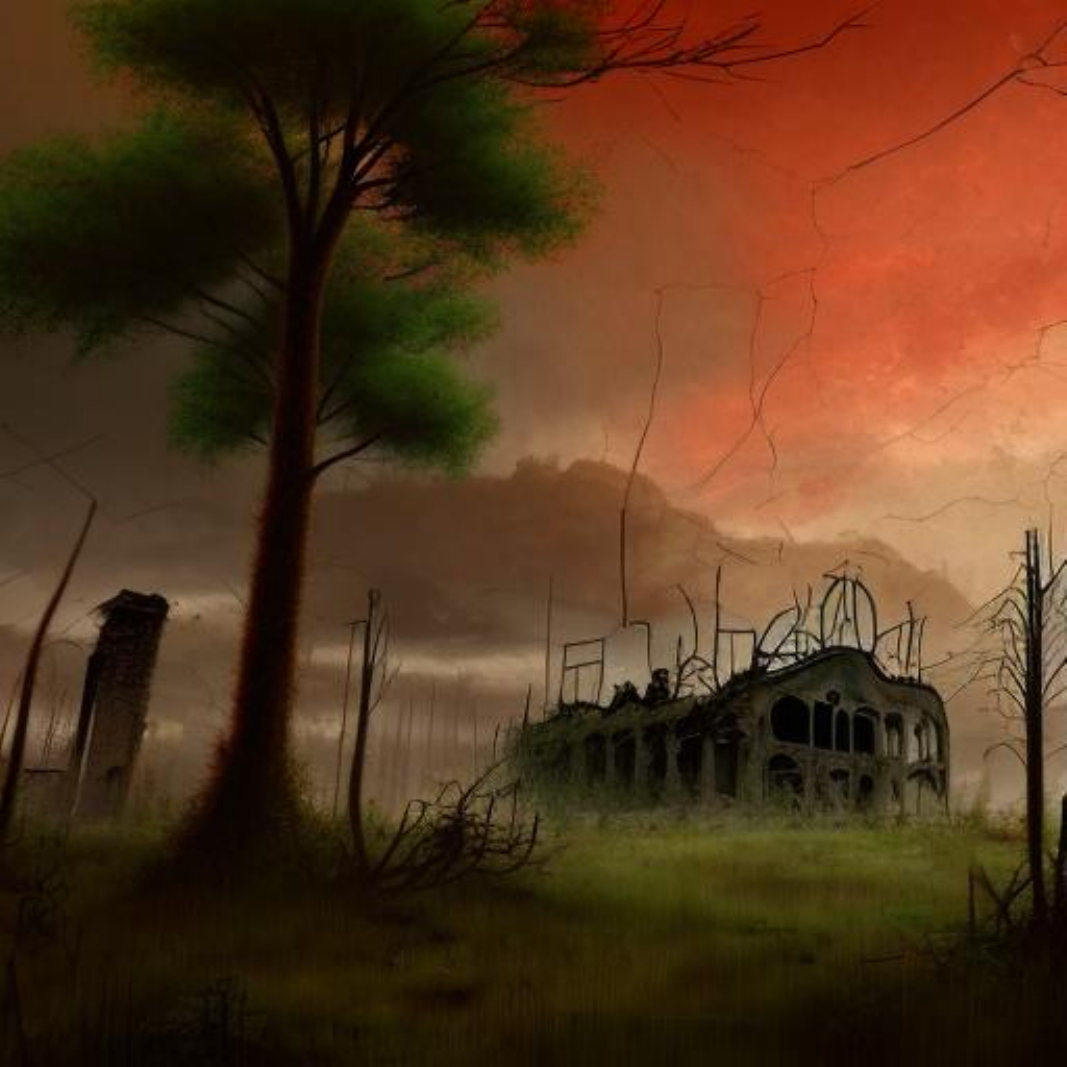}} & 
        \noindent\parbox[c]{0.14\columnwidth}{\includegraphics[width=0.14\columnwidth]{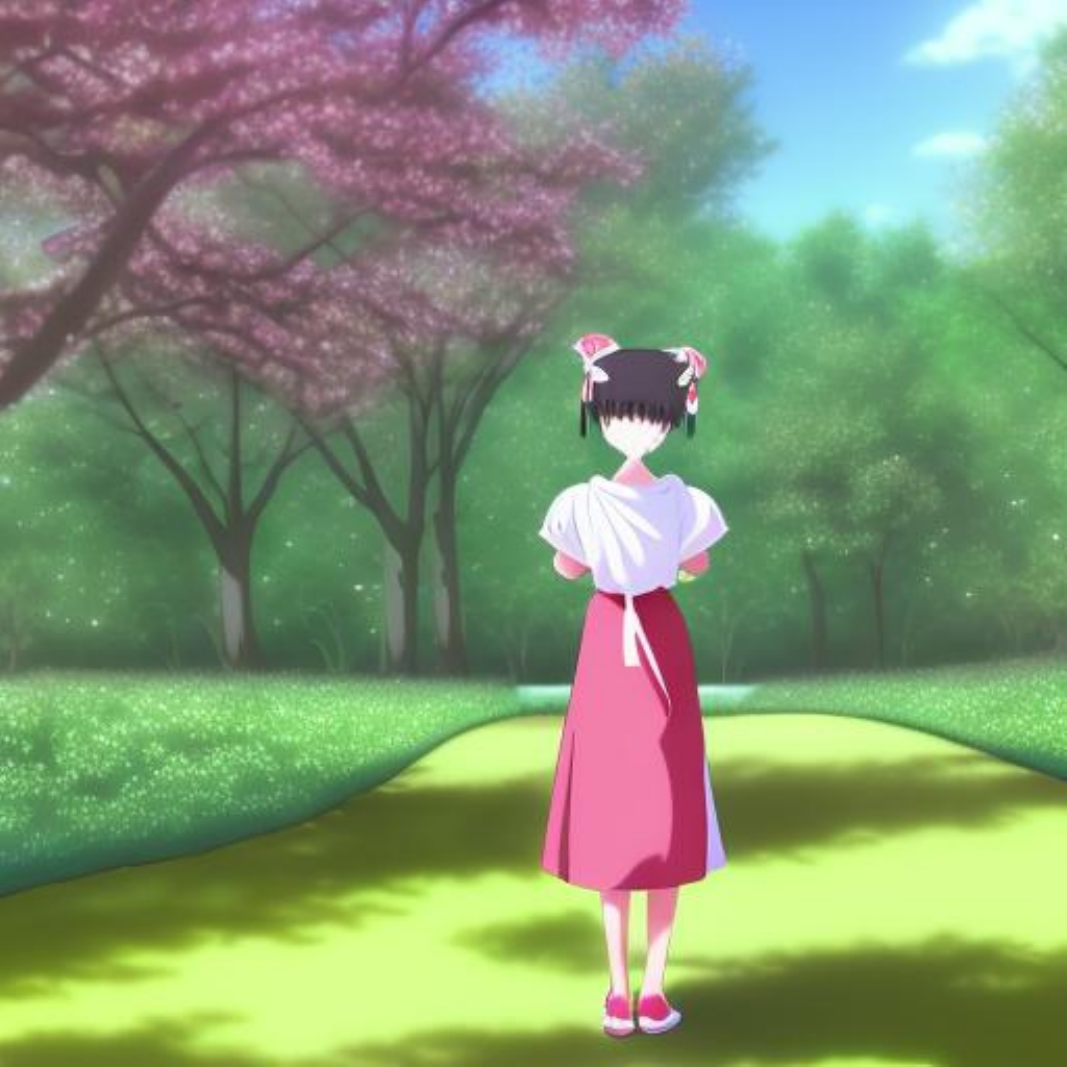}} & 
        \noindent\parbox[c]{0.14\columnwidth}{\includegraphics[width=0.14\columnwidth]{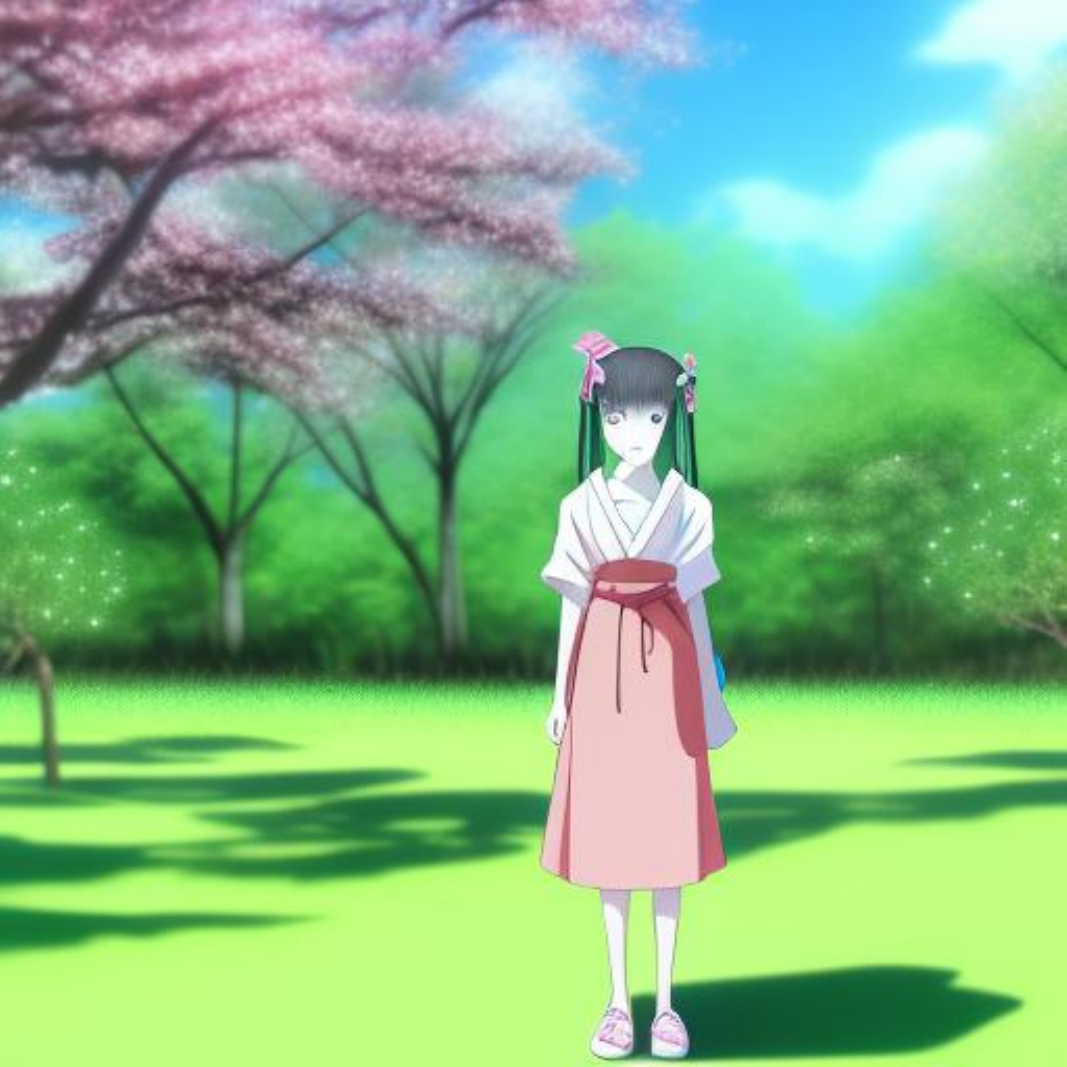}} & 
        \noindent\parbox[c]{0.14\columnwidth}{\includegraphics[width=0.14\columnwidth]{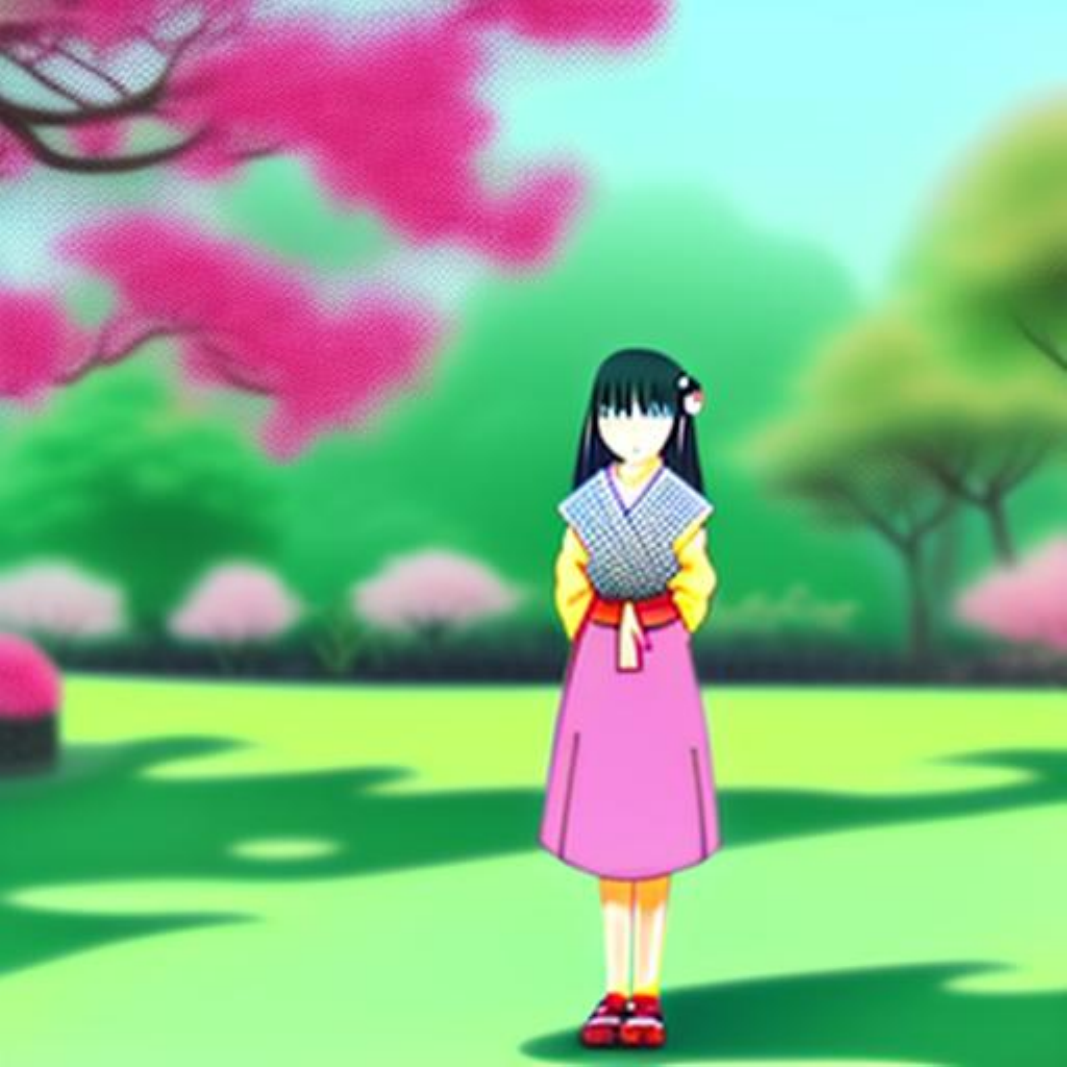}} \\

        \shortstack[l]{\tiny 40 steps} &
        \noindent\parbox[c]{0.14\columnwidth}{\includegraphics[width=0.14\columnwidth]{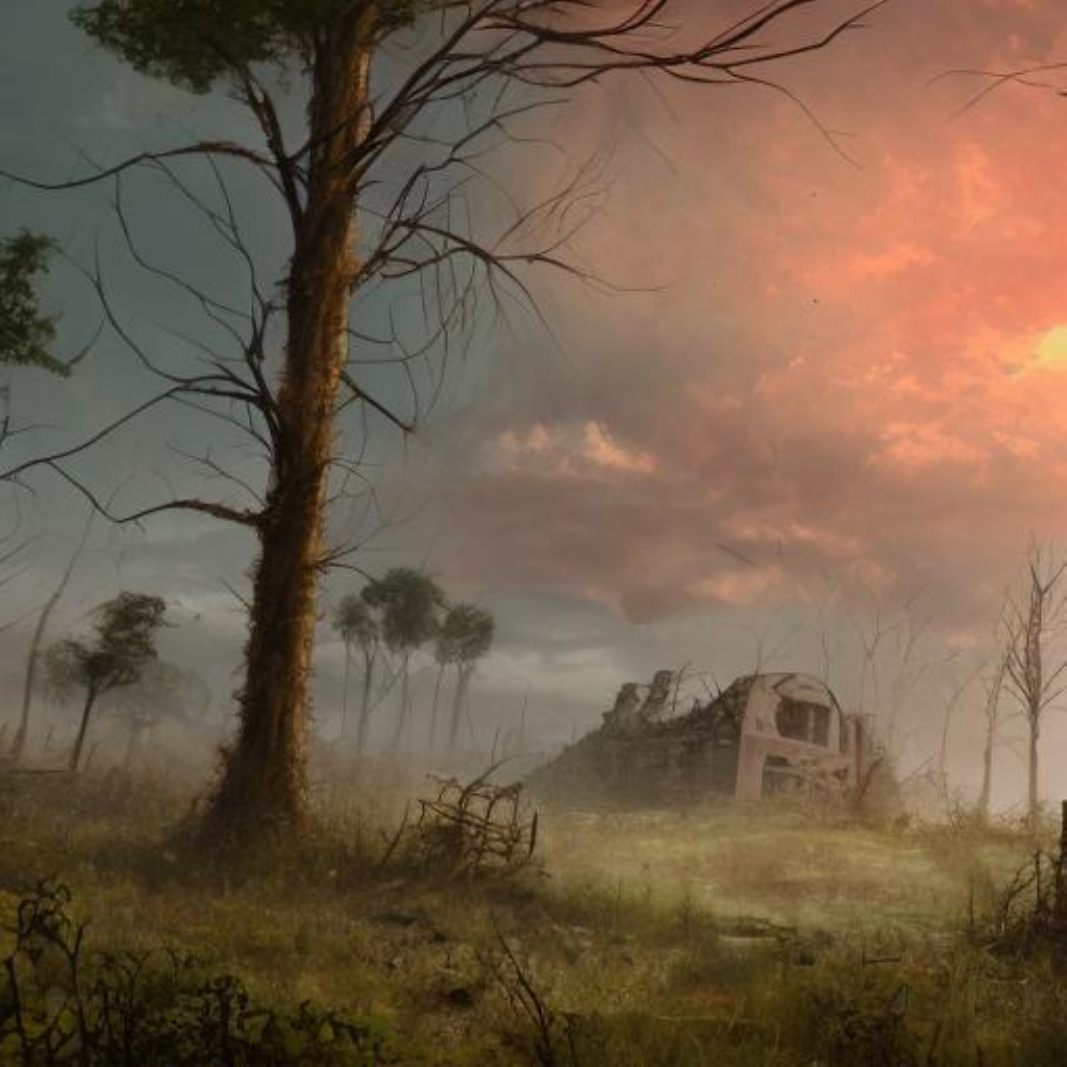}} & 
        \noindent\parbox[c]{0.14\columnwidth}{\includegraphics[width=0.14\columnwidth]{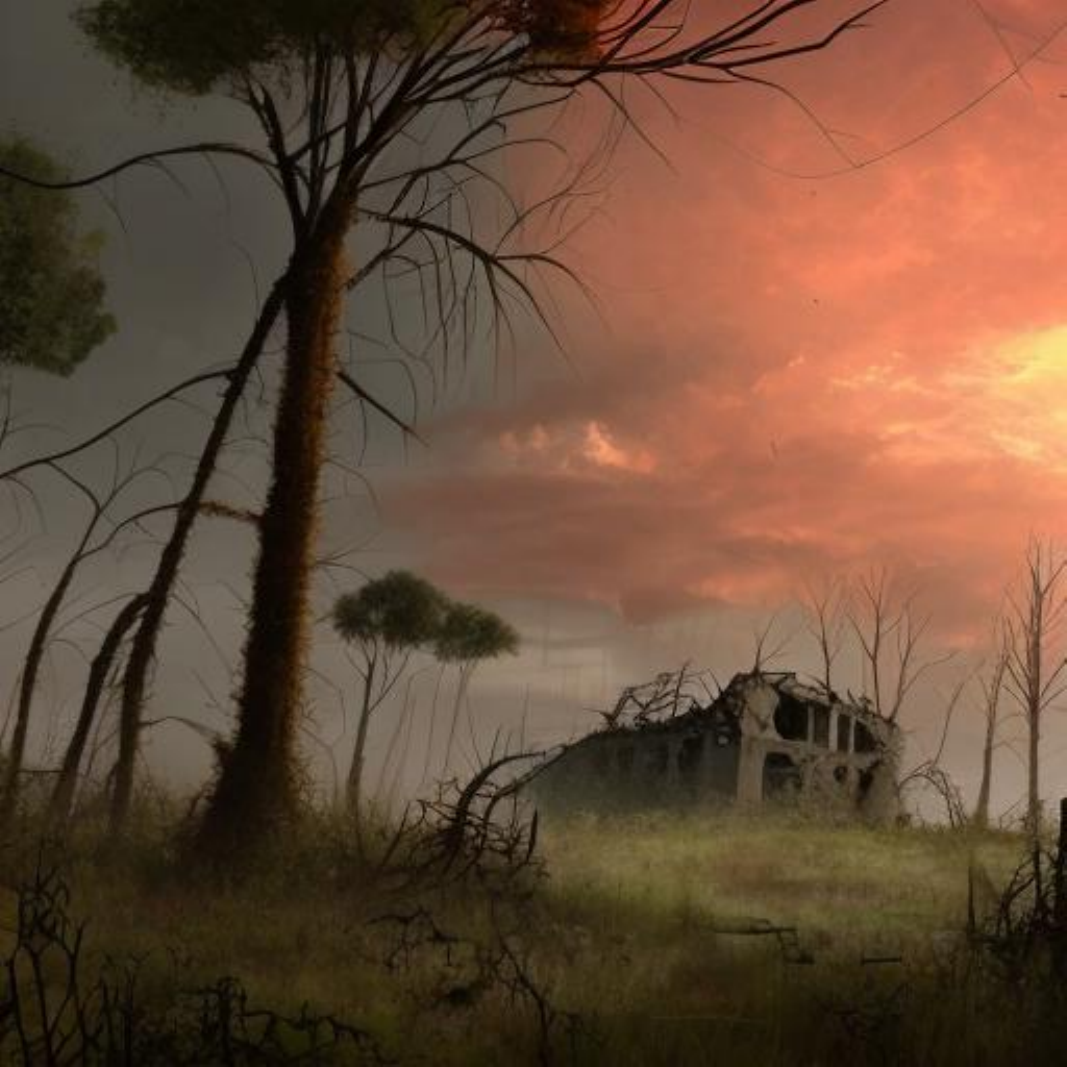}} & 
        \noindent\parbox[c]{0.14\columnwidth}{\includegraphics[width=0.14\columnwidth]{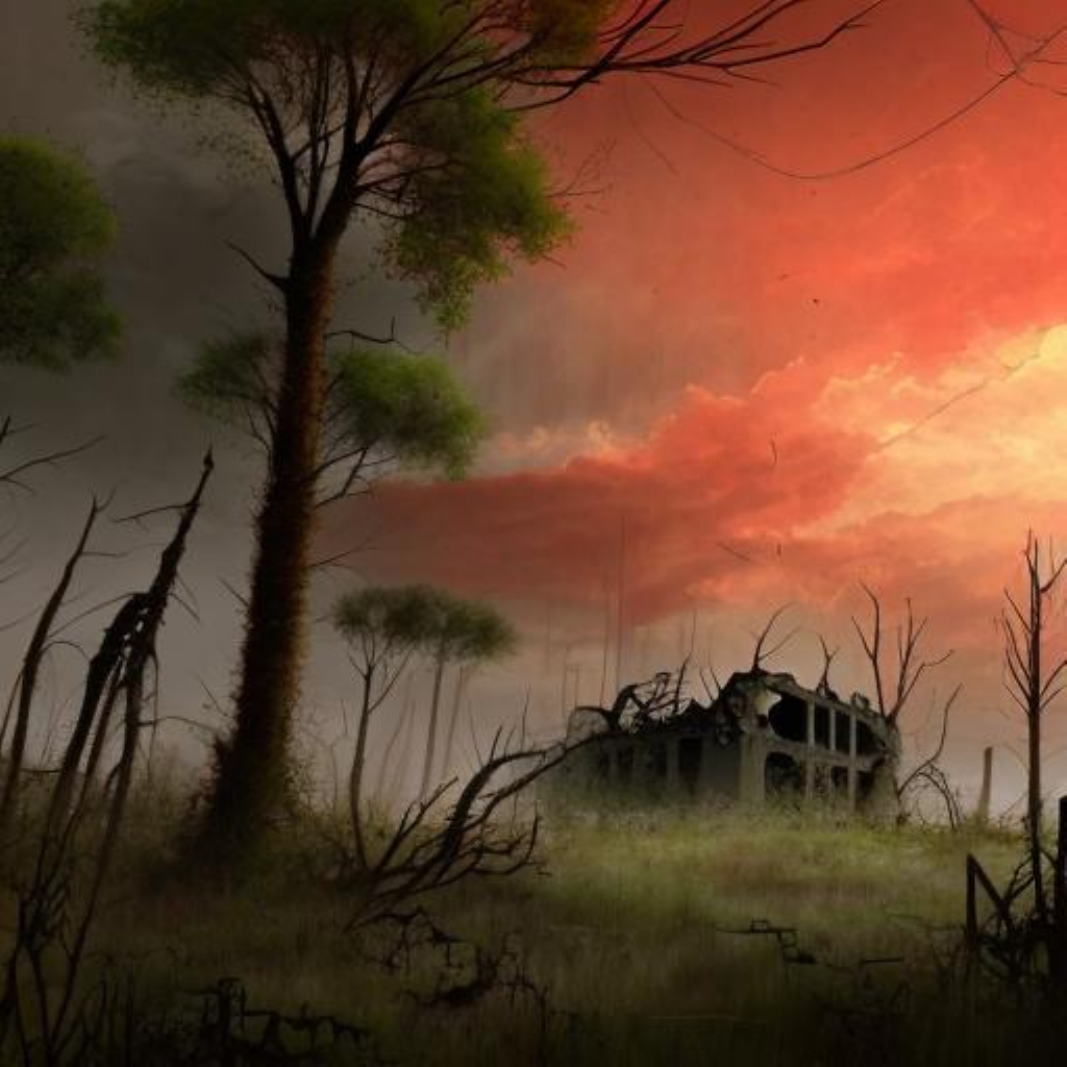}} & 
        \noindent\parbox[c]{0.14\columnwidth}{\includegraphics[width=0.14\columnwidth]{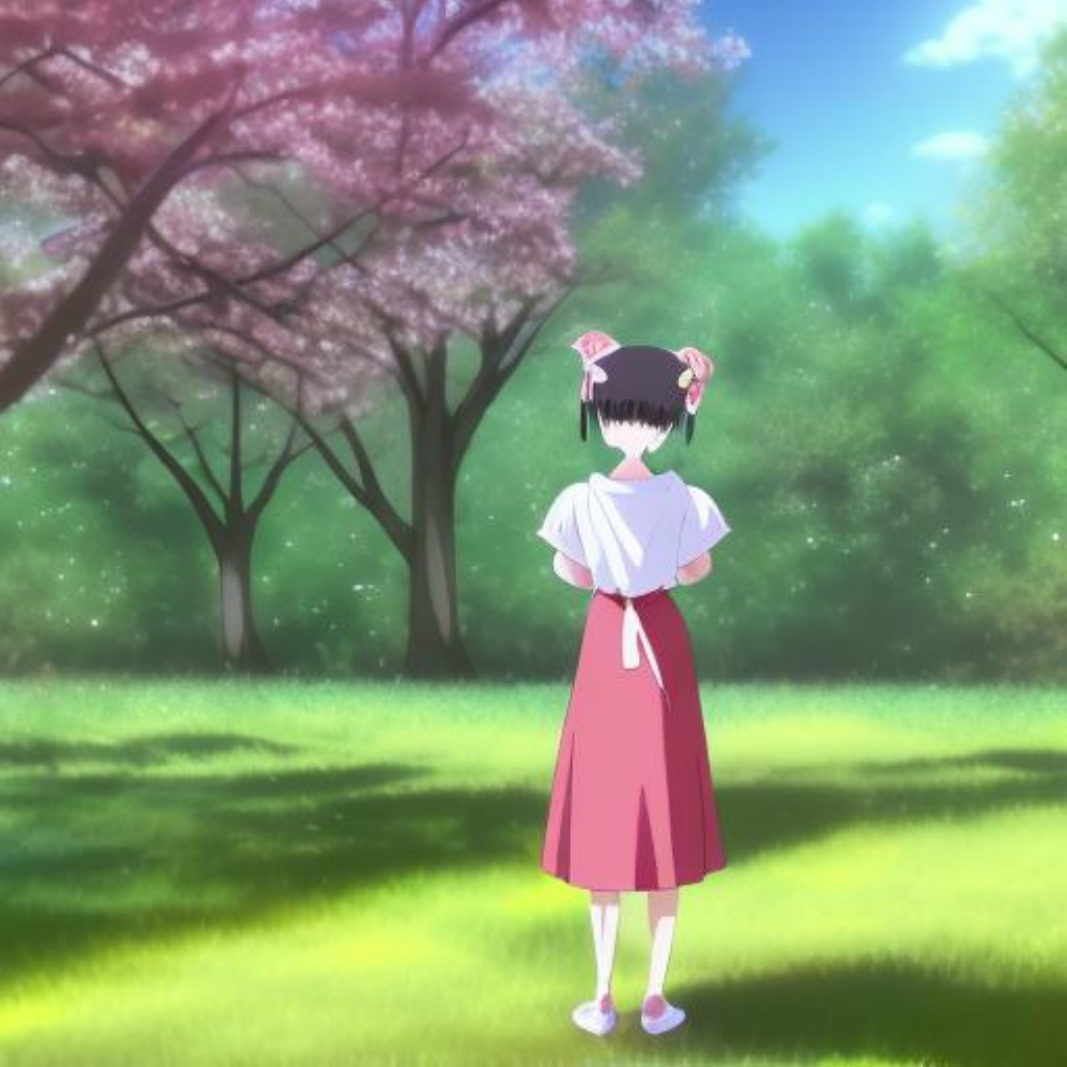}} & 
        \noindent\parbox[c]{0.14\columnwidth}{\includegraphics[width=0.14\columnwidth]{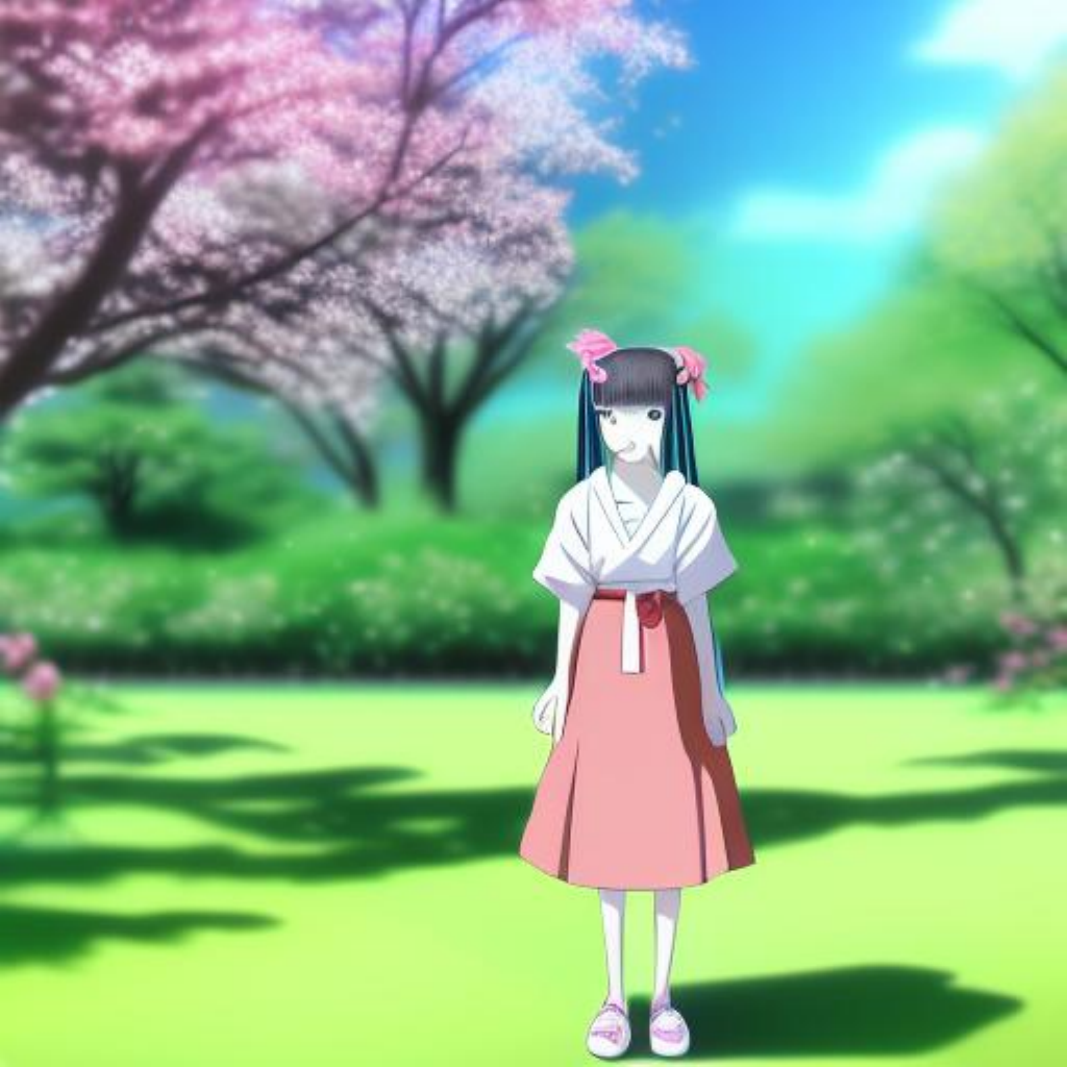}} & 
        \noindent\parbox[c]{0.14\columnwidth}{\includegraphics[width=0.14\columnwidth]{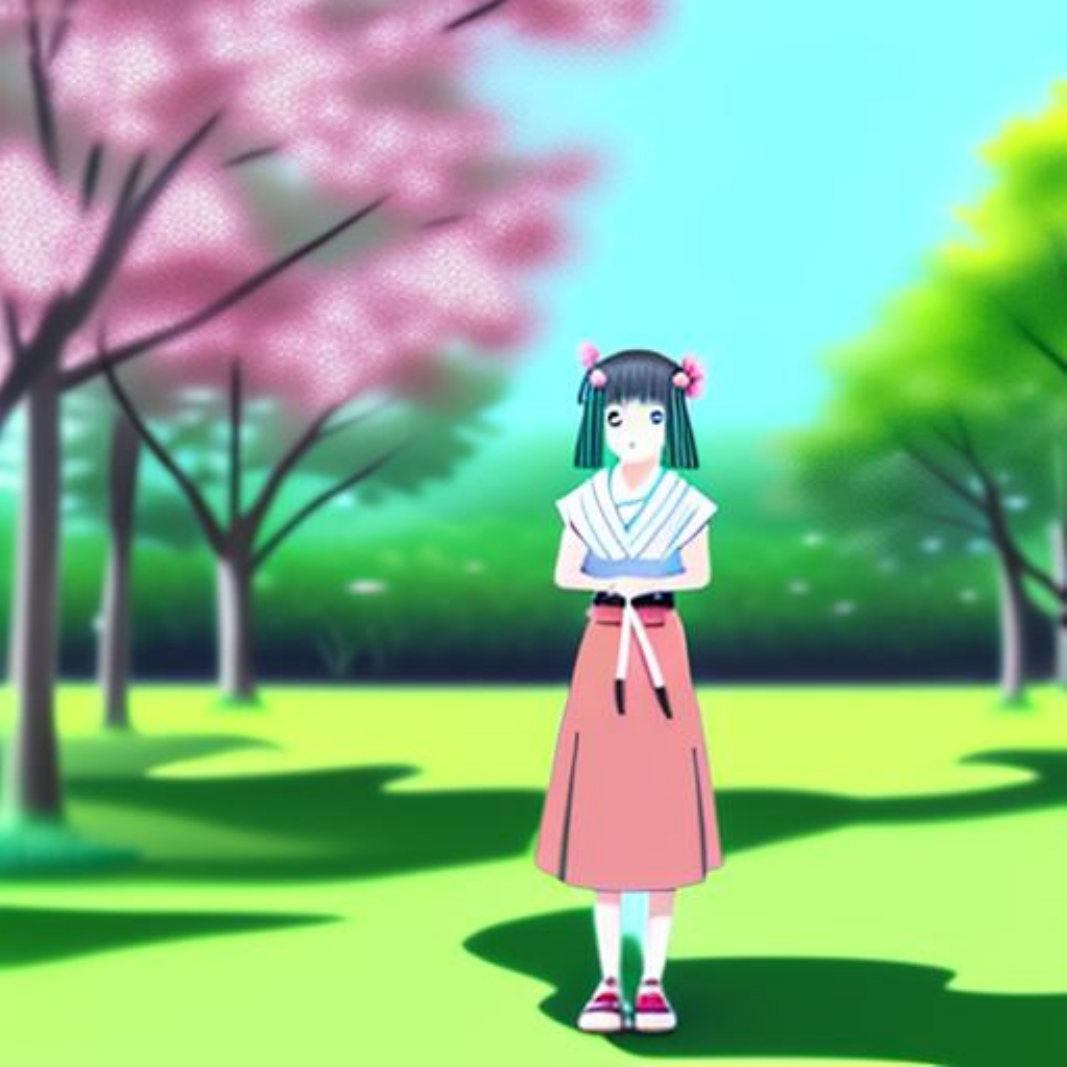}} \\
    \end{tabu}
    \caption{Comparison of samples generated from Openjourney using PLMS4 with HB $\beta$ under various sampling steps and guidance scale $s$. Specifically, we employ $\beta = 0.8$ for $s = 7.5$, $\beta = 0.6$ for $s = 15$, and $\beta = 0.6$ for $s = 22.5$ to account for the varying degrees of artifact manifestation associated with each guidance scale.}
    \label{fig:scale_step_openjourney_hb}
\end{figure}


\pagebreak

\tabulinesep=1pt
\begin{figure}
    \centering
    \begin{tabu} to \textwidth {@{}l@{\hspace{5pt}}c@{\hspace{2pt}}c@{\hspace{2pt}}c@{\hspace{4pt}}c@{\hspace{2pt}}c@{\hspace{2pt}}c@{}}
        & \multicolumn{3}{c}{\shortstack{\scriptsize "A post-apocalyptic world with ruined \\ \scriptsize buildings, overgrown vegetation, and a red sky"}}
        & \multicolumn{3}{c}{\shortstack{\scriptsize "A girl standing in a park in \\ \scriptsize Japanese animation style"}} \\

        & \multicolumn{1}{c}{\shortstack{\scriptsize $s = 7.5$}}
        & \multicolumn{1}{c}{\shortstack{\scriptsize $s = 15$}}
        & \multicolumn{1}{c}{\shortstack{\scriptsize $s = 22.5$}}
        & \multicolumn{1}{c}{\shortstack{\scriptsize $s = 7.5$}}
        & \multicolumn{1}{c}{\shortstack{\scriptsize $s = 15$}}
        & \multicolumn{1}{c}{\shortstack{\scriptsize $s = 22.5$}}
        \\
        
        \shortstack[l]{\tiny 10 steps} &
        \noindent\parbox[c]{0.14\columnwidth}{\includegraphics[width=0.14\columnwidth]{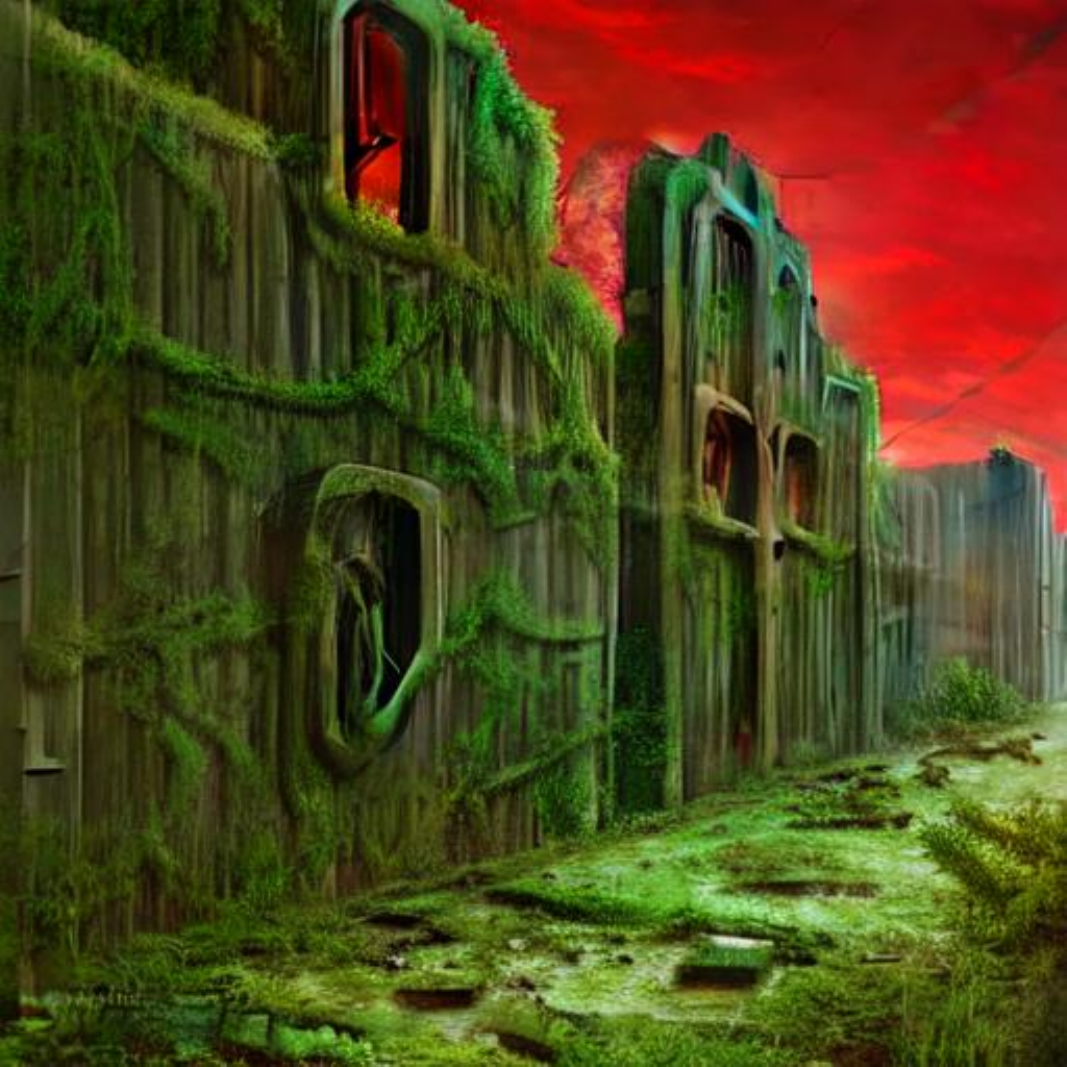}} & 
        \noindent\parbox[c]{0.14\columnwidth}{\includegraphics[width=0.14\columnwidth]{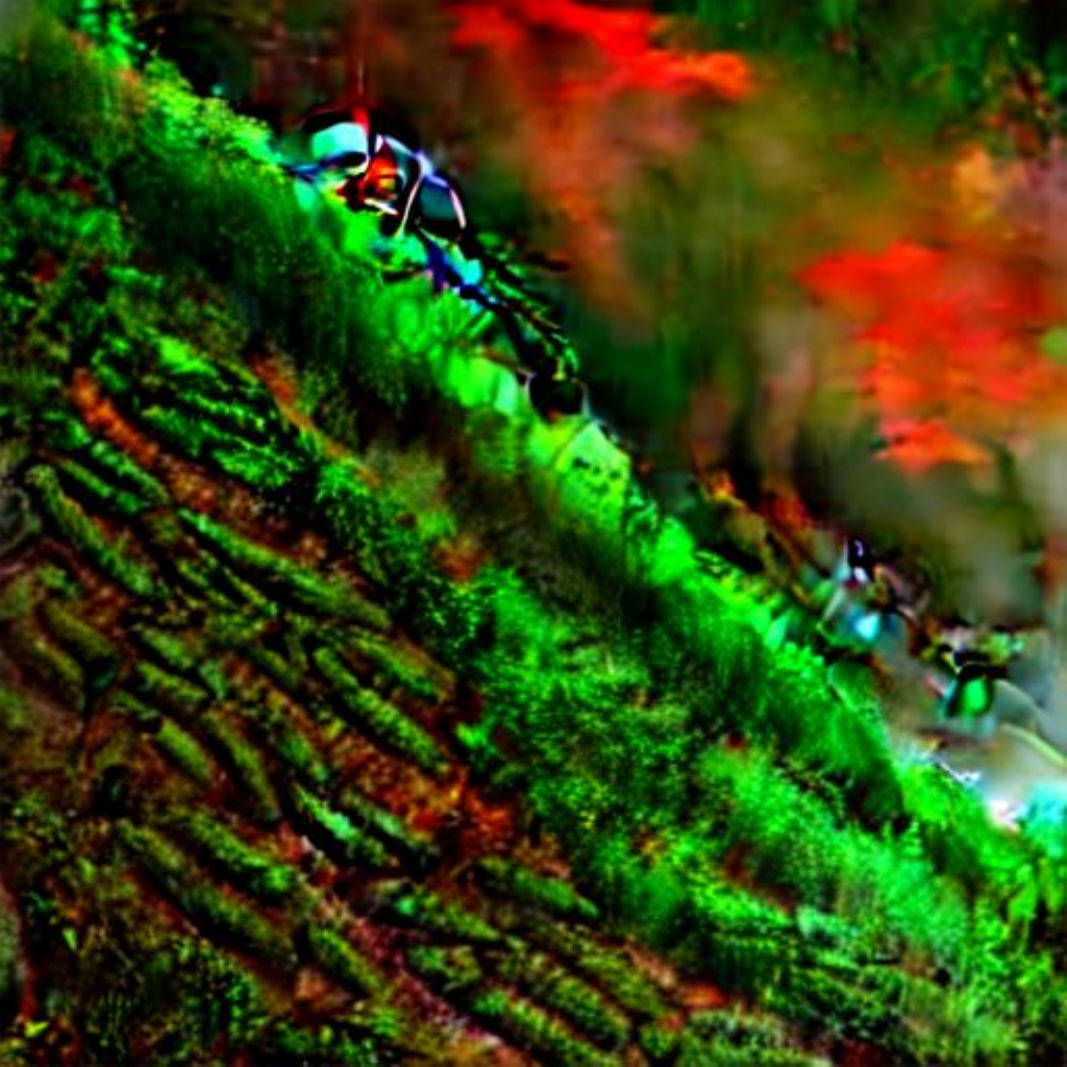}} & 
        \noindent\parbox[c]{0.14\columnwidth}{\includegraphics[width=0.14\columnwidth]{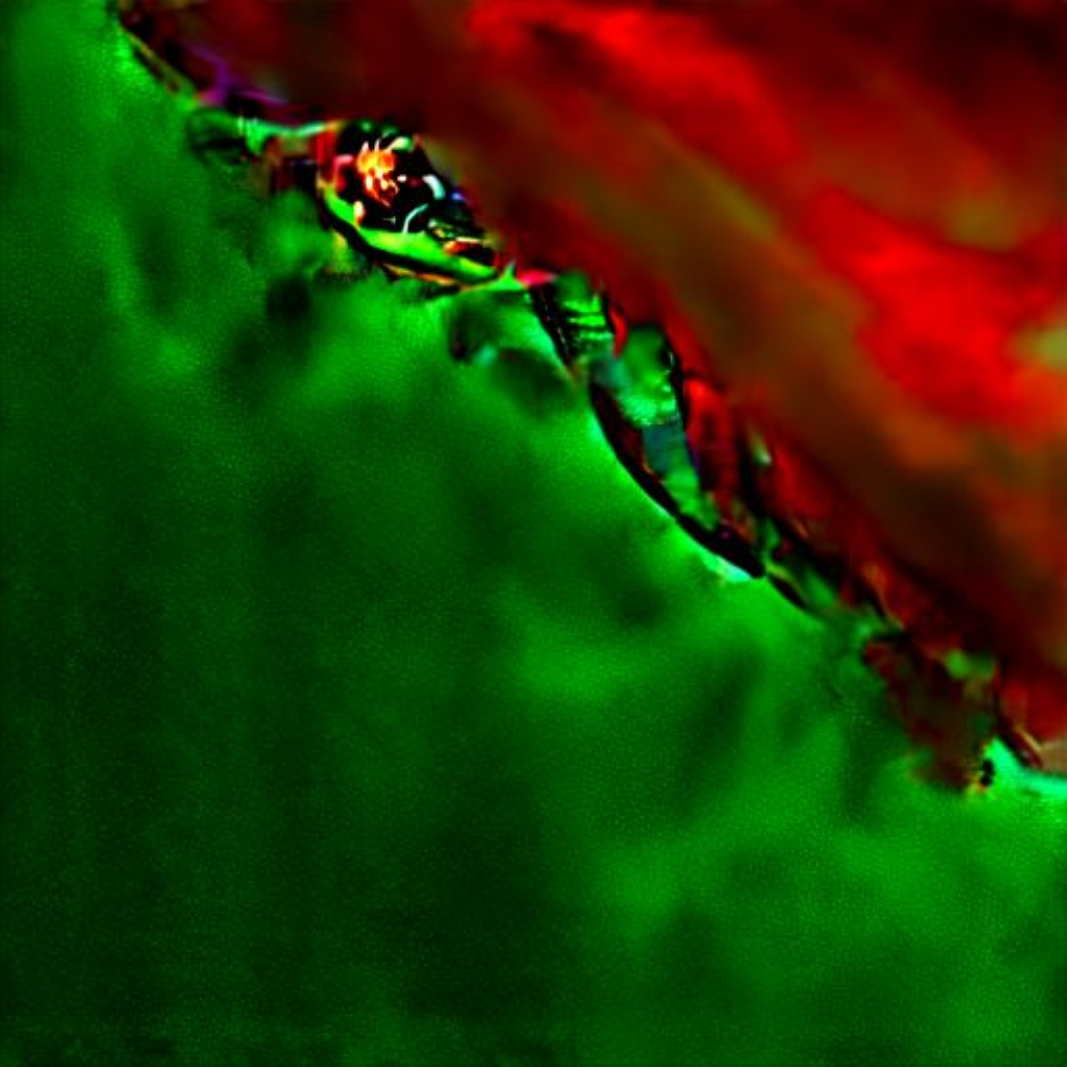}} & 
        \noindent\parbox[c]{0.14\columnwidth}{\includegraphics[width=0.14\columnwidth]{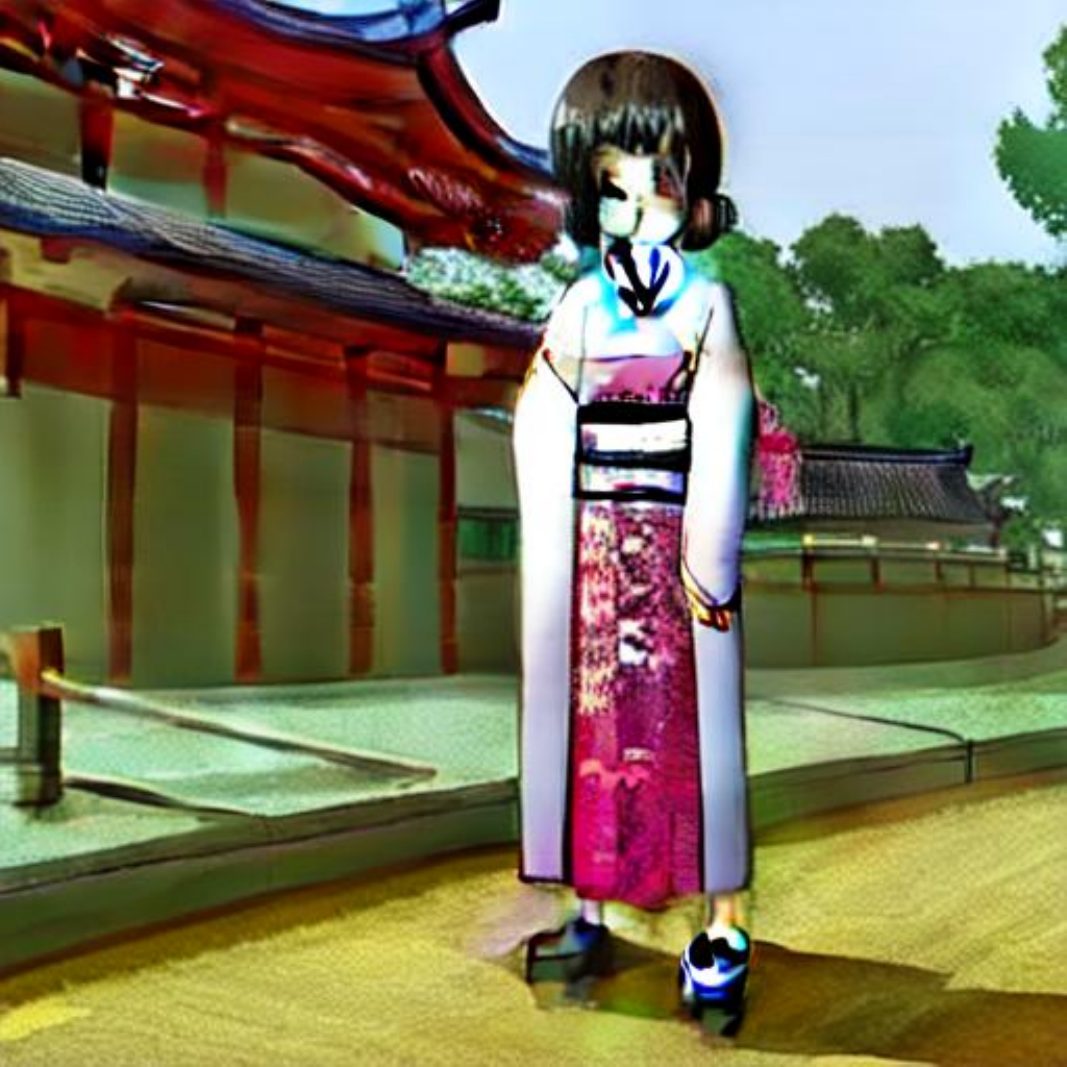}} & 
        \noindent\parbox[c]{0.14\columnwidth}{\includegraphics[width=0.14\columnwidth]{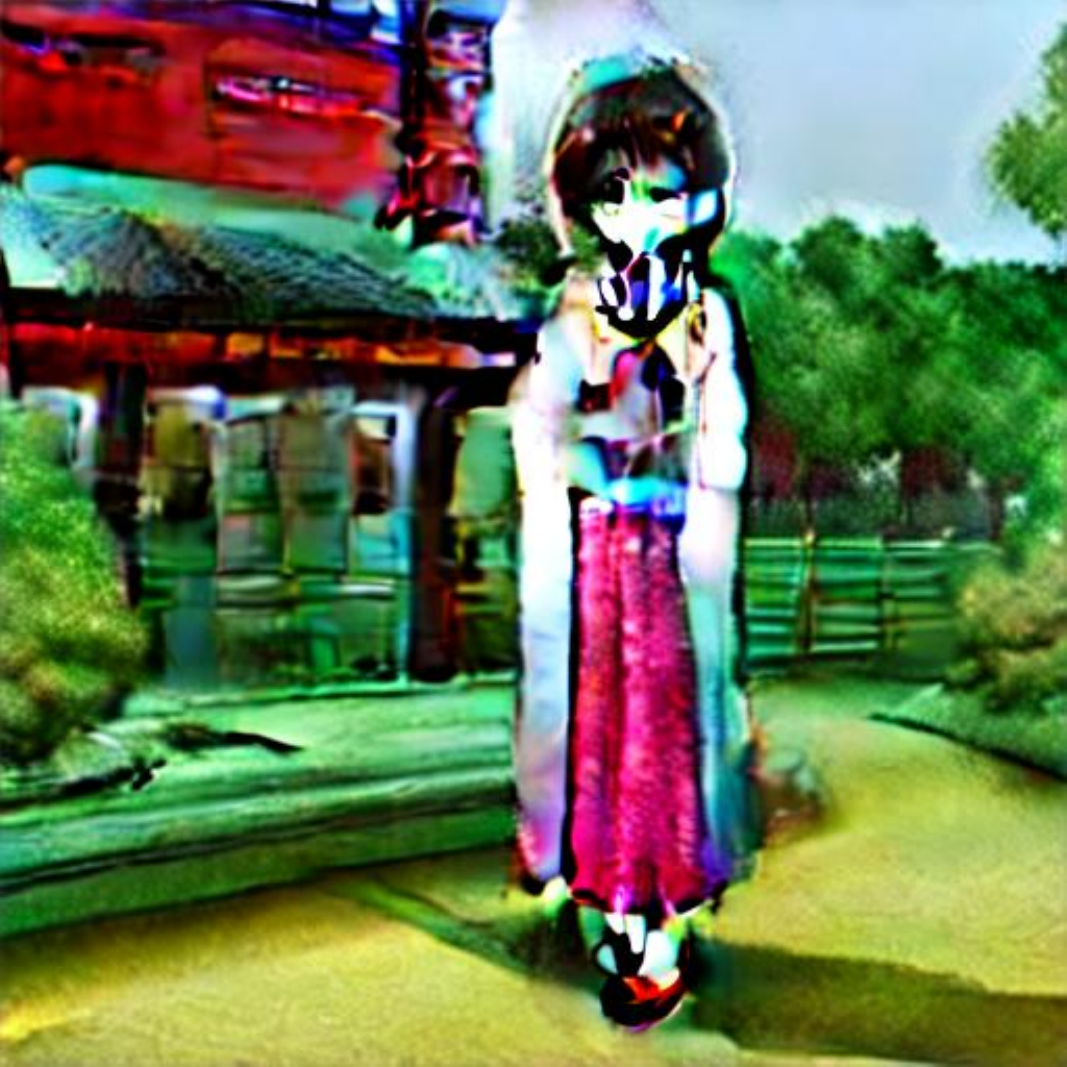}} & 
        \noindent\parbox[c]{0.14\columnwidth}{\includegraphics[width=0.14\columnwidth]{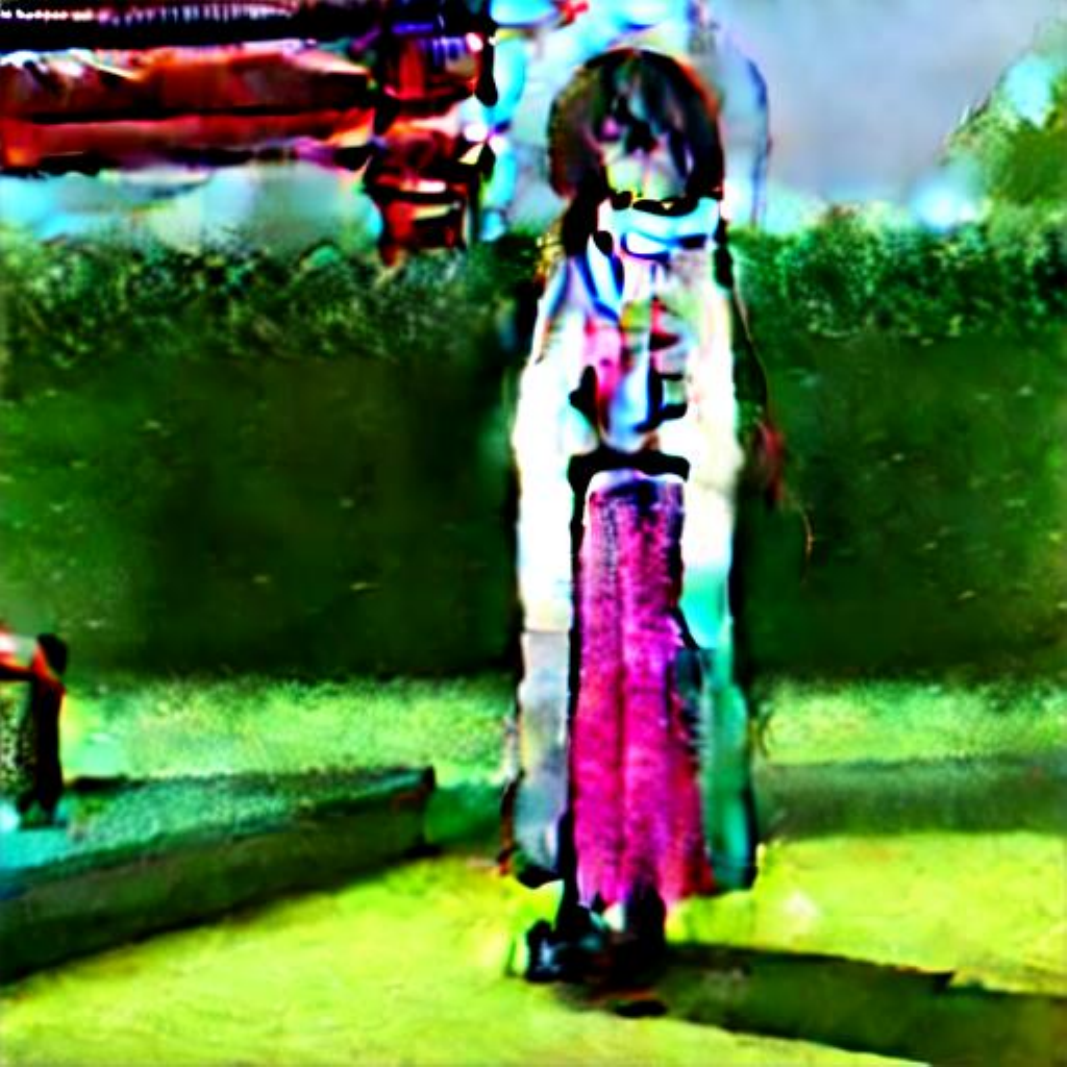}} \\

        \shortstack[l]{\tiny 15 steps} &
        \noindent\parbox[c]{0.14\columnwidth}{\includegraphics[width=0.14\columnwidth]{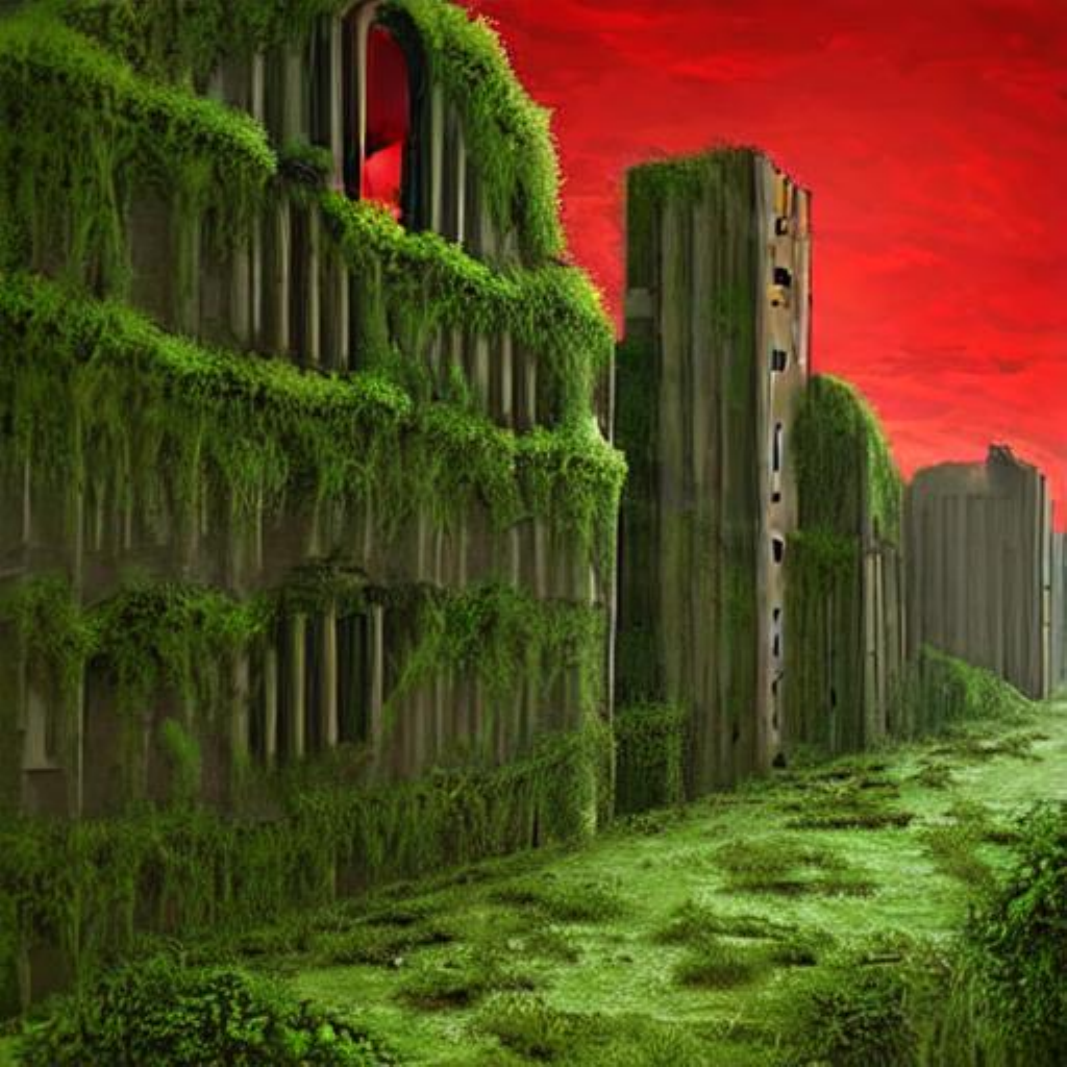}} & 
        \noindent\parbox[c]{0.14\columnwidth}{\includegraphics[width=0.14\columnwidth]{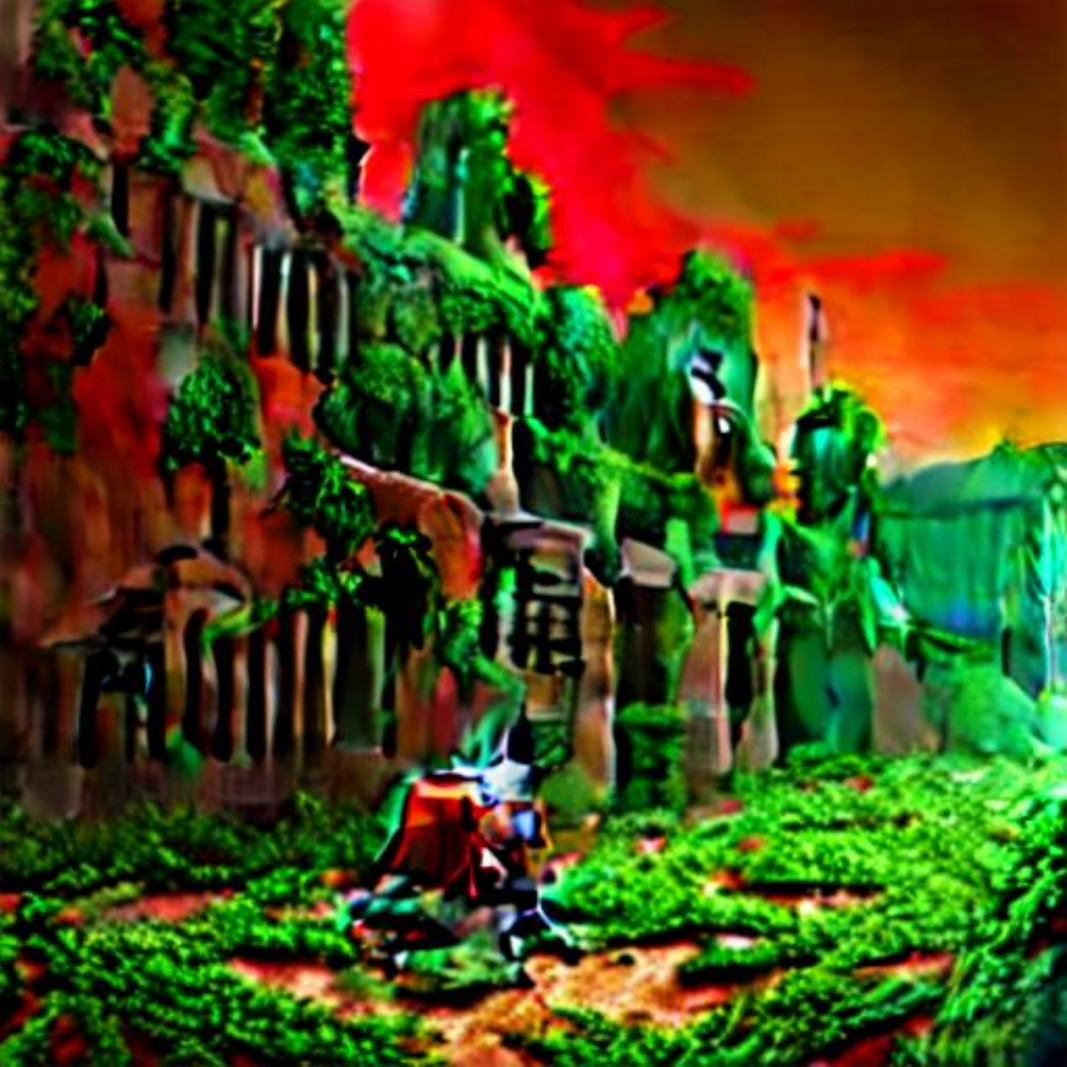}} & 
        \noindent\parbox[c]{0.14\columnwidth}{\includegraphics[width=0.14\columnwidth]{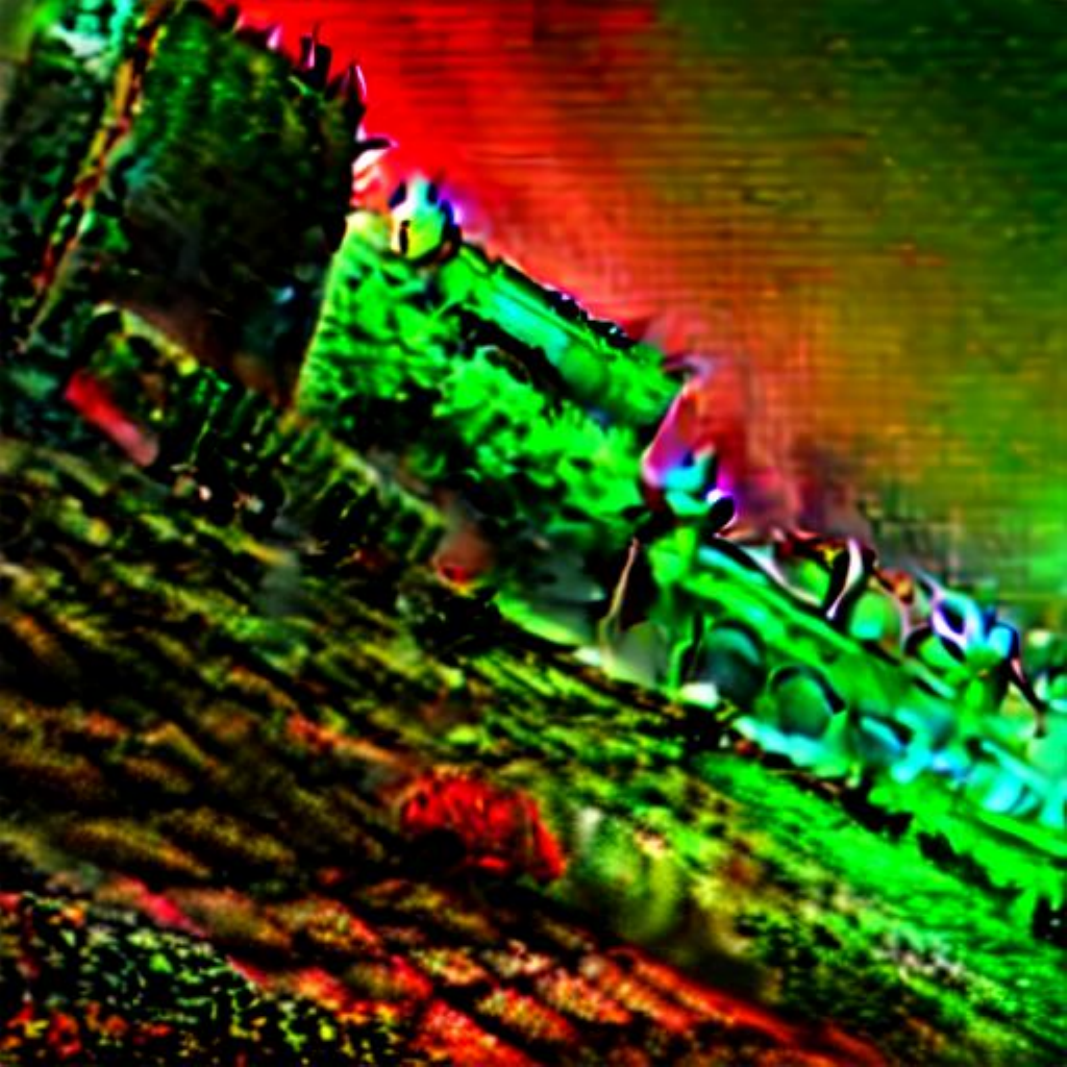}} & 
        \noindent\parbox[c]{0.14\columnwidth}{\includegraphics[width=0.14\columnwidth]{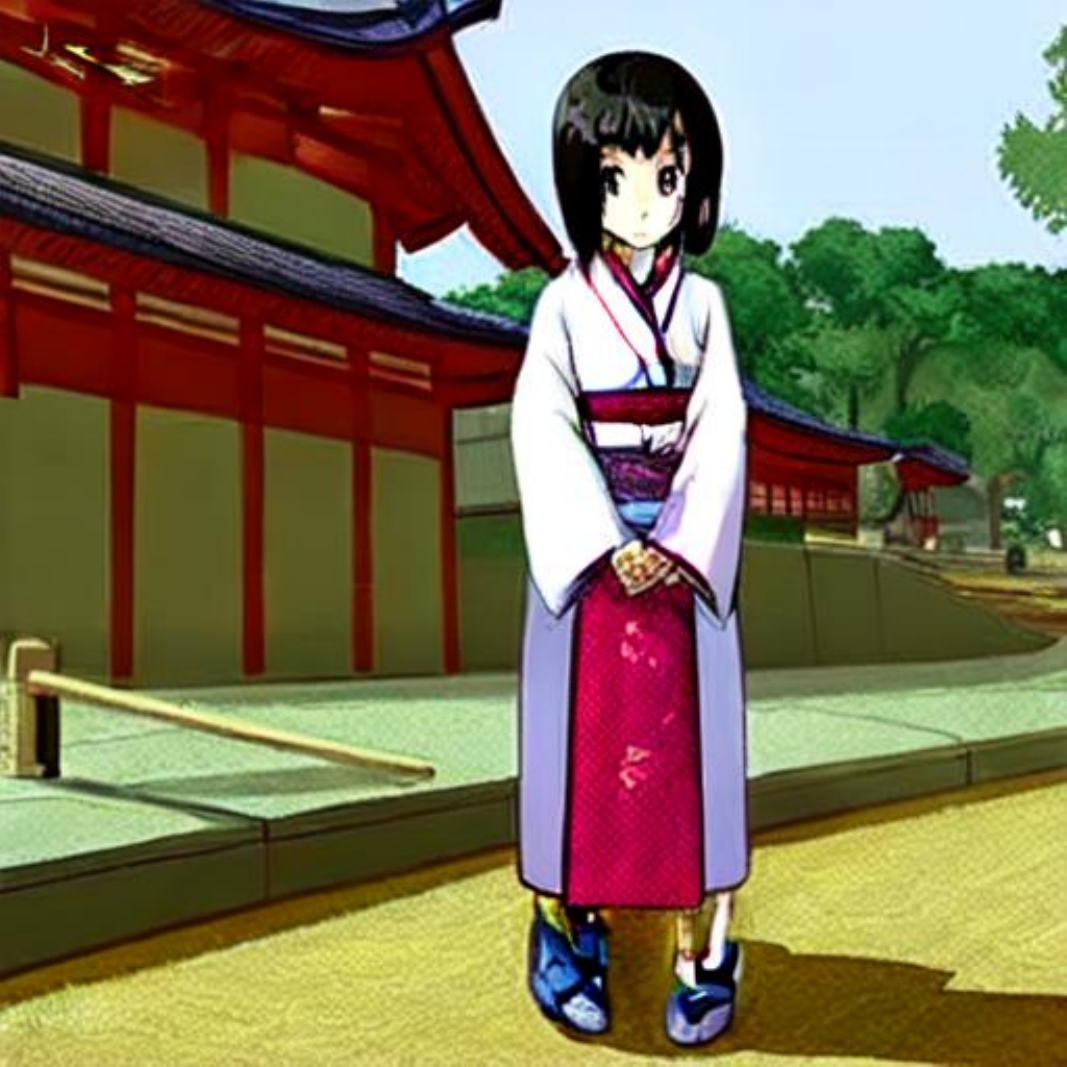}} & 
        \noindent\parbox[c]{0.14\columnwidth}{\includegraphics[width=0.14\columnwidth]{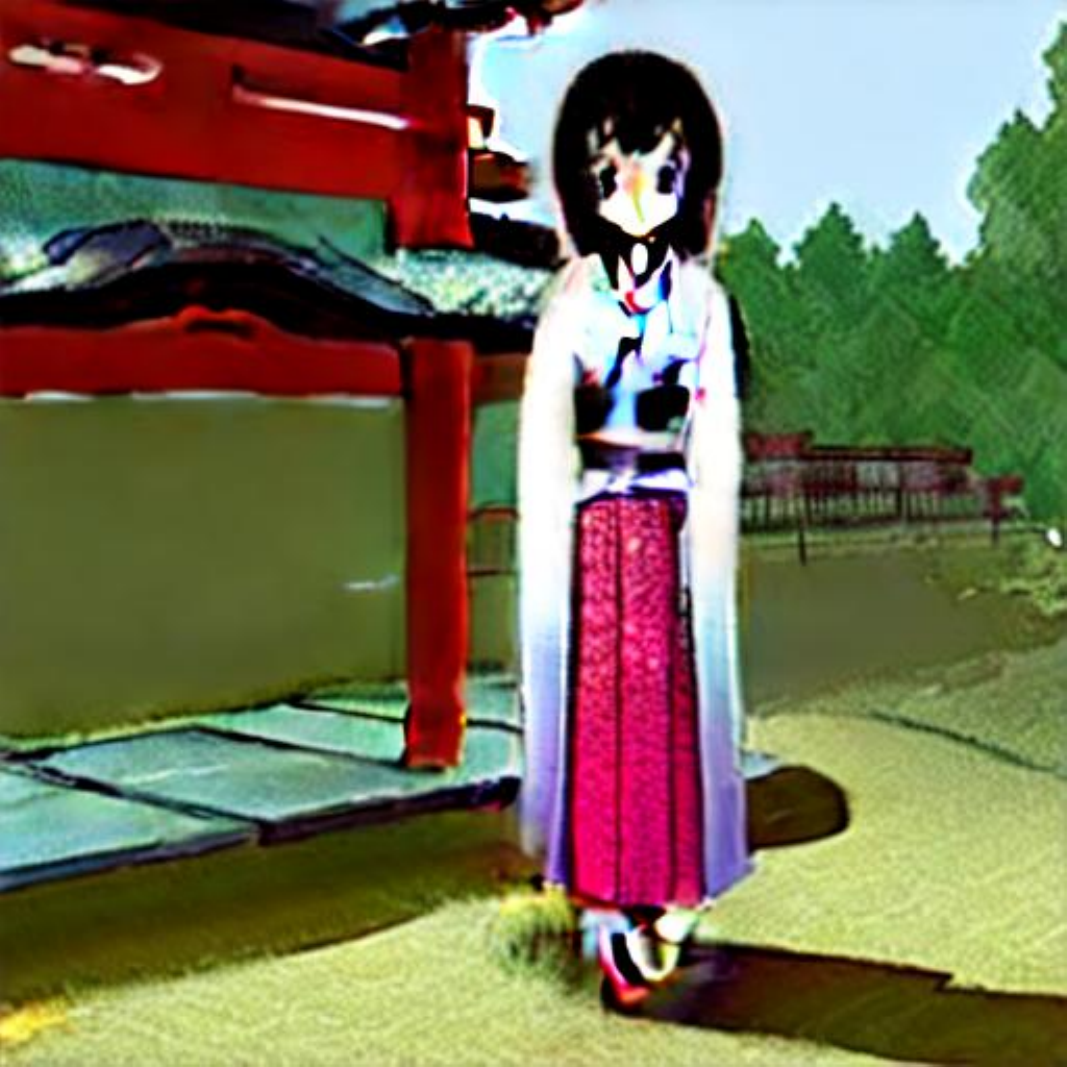}} & 
        \noindent\parbox[c]{0.14\columnwidth}{\includegraphics[width=0.14\columnwidth]{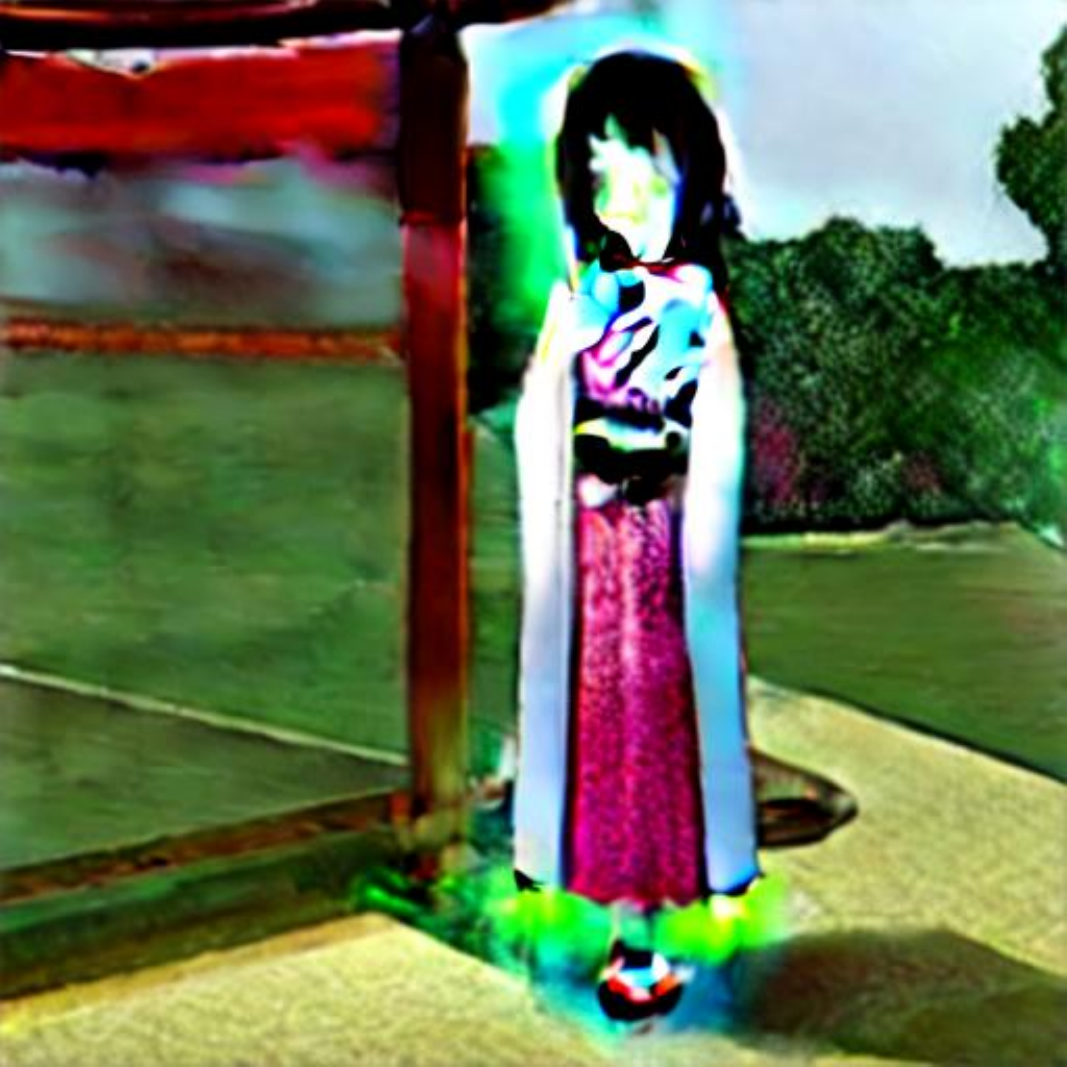}} \\

        \shortstack[l]{\tiny 20 steps} &
        \noindent\parbox[c]{0.14\columnwidth}{\includegraphics[width=0.14\columnwidth]{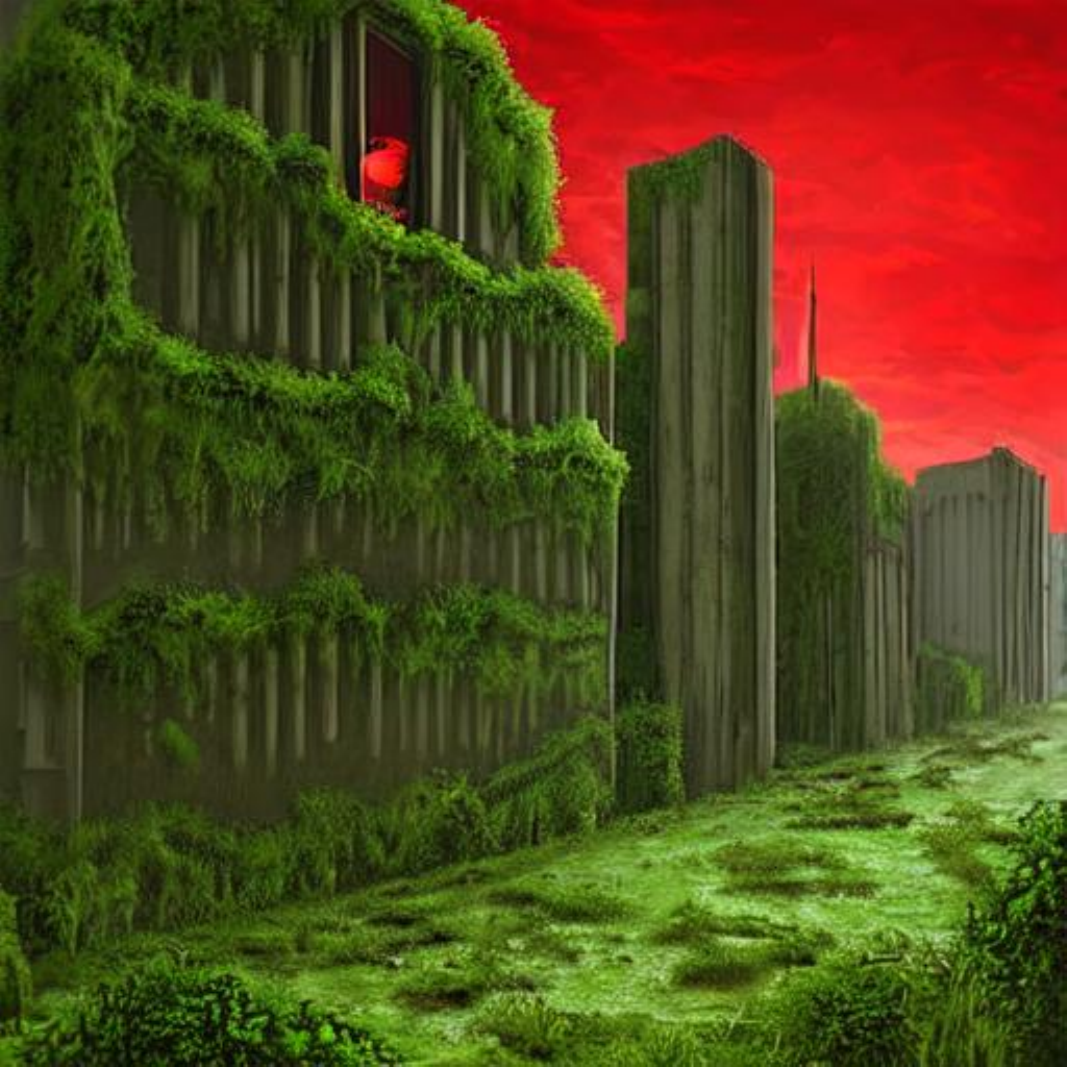}} & 
        \noindent\parbox[c]{0.14\columnwidth}{\includegraphics[width=0.14\columnwidth]{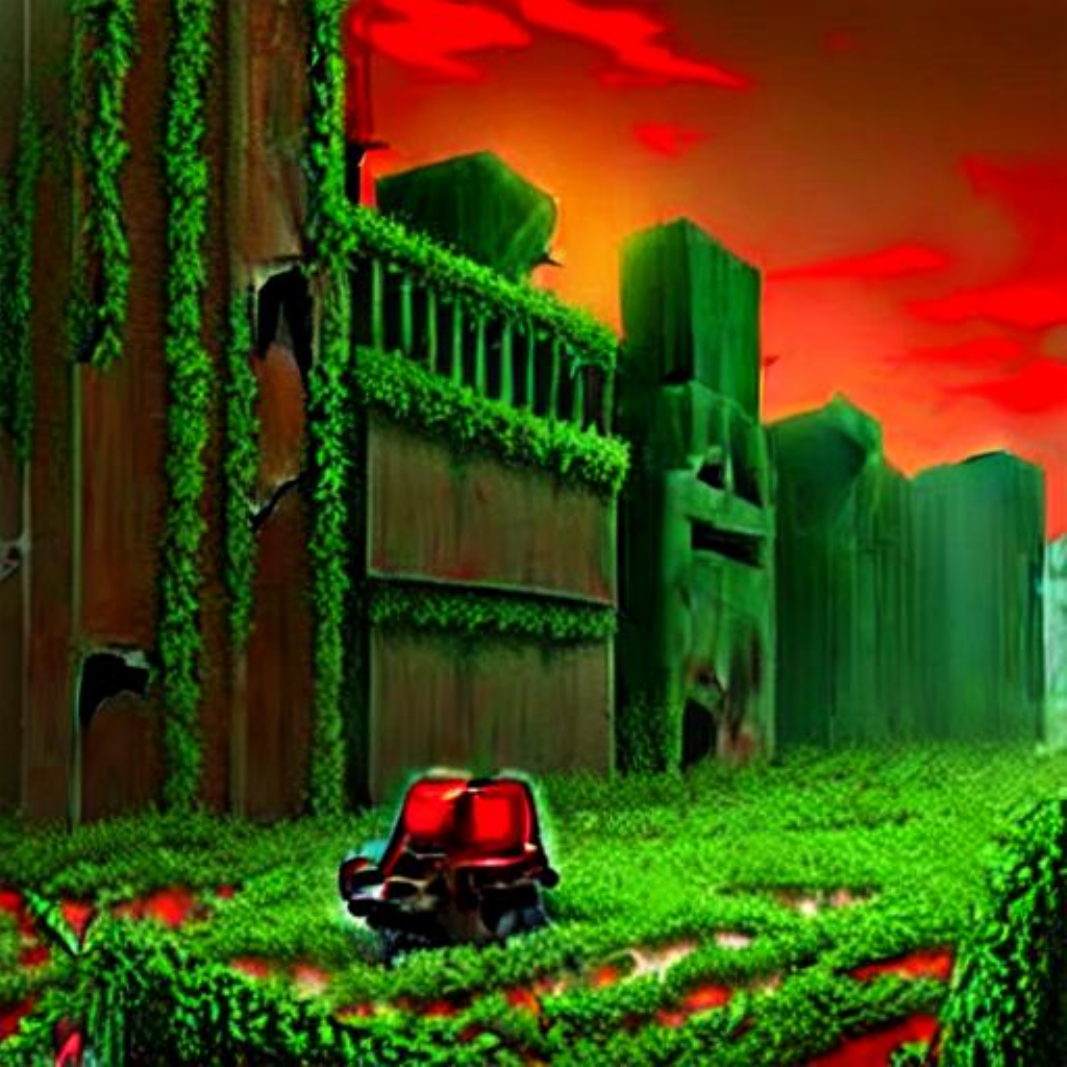}} & 
        \noindent\parbox[c]{0.14\columnwidth}{\includegraphics[width=0.14\columnwidth]{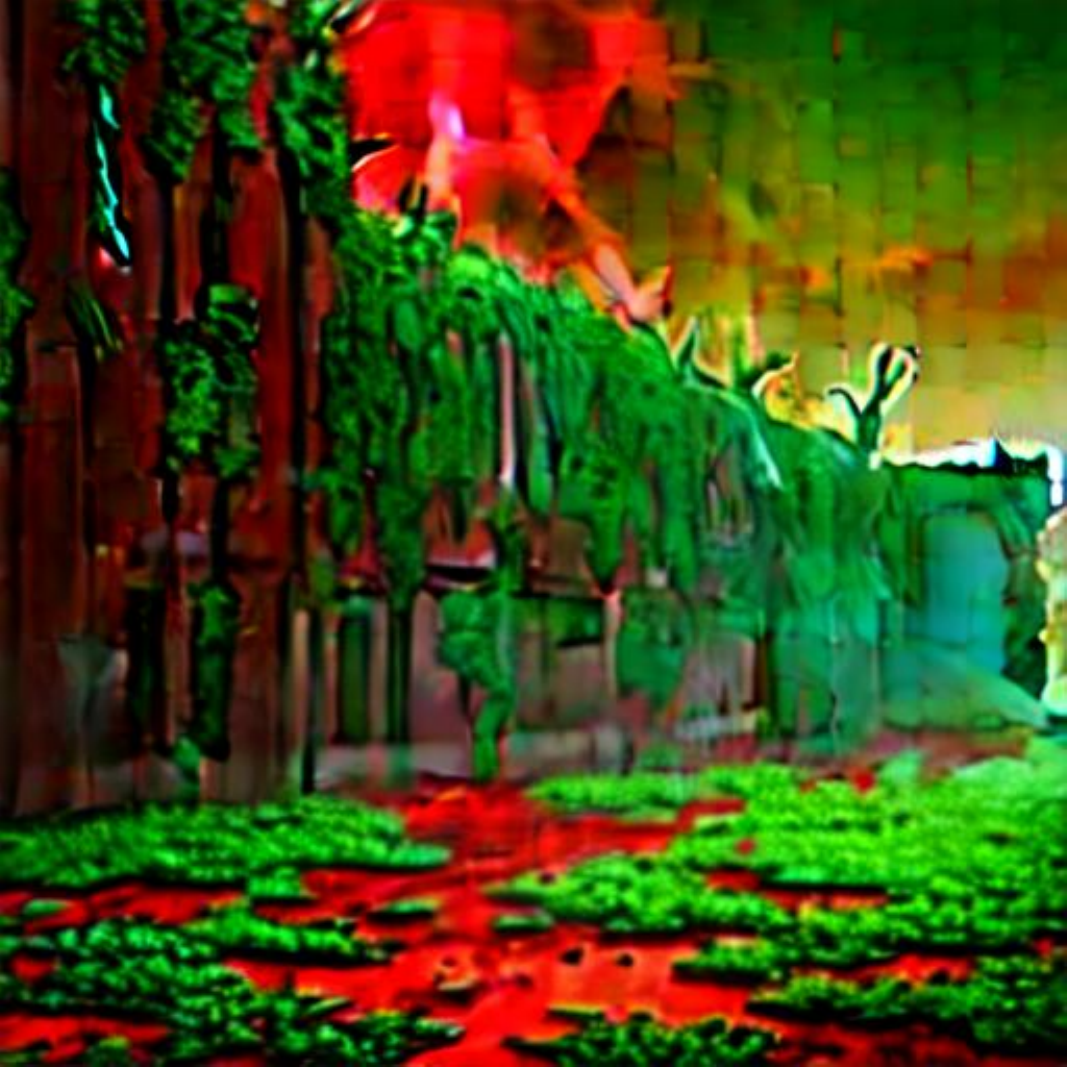}} & 
        \noindent\parbox[c]{0.14\columnwidth}{\includegraphics[width=0.14\columnwidth]{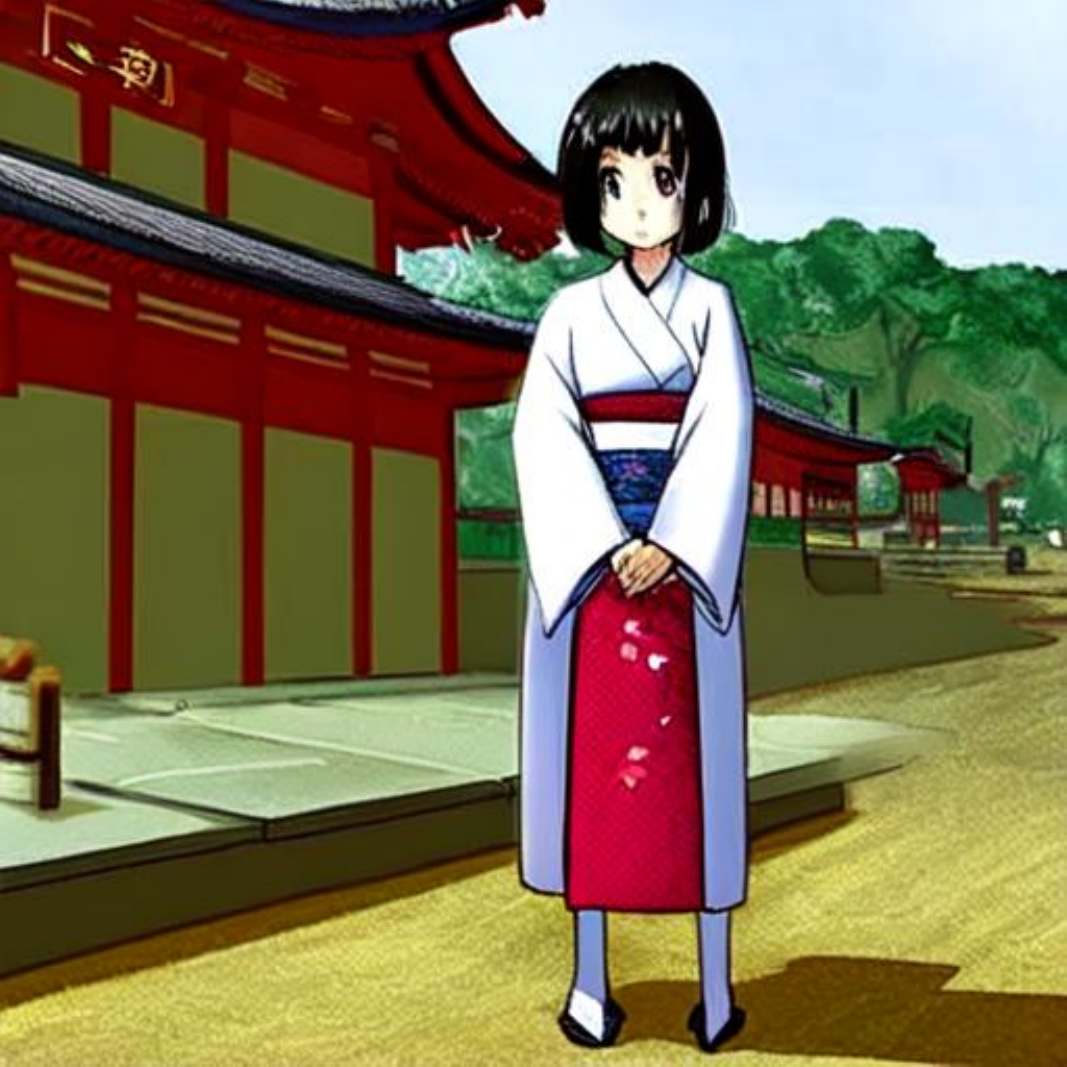}} & 
        \noindent\parbox[c]{0.14\columnwidth}{\includegraphics[width=0.14\columnwidth]{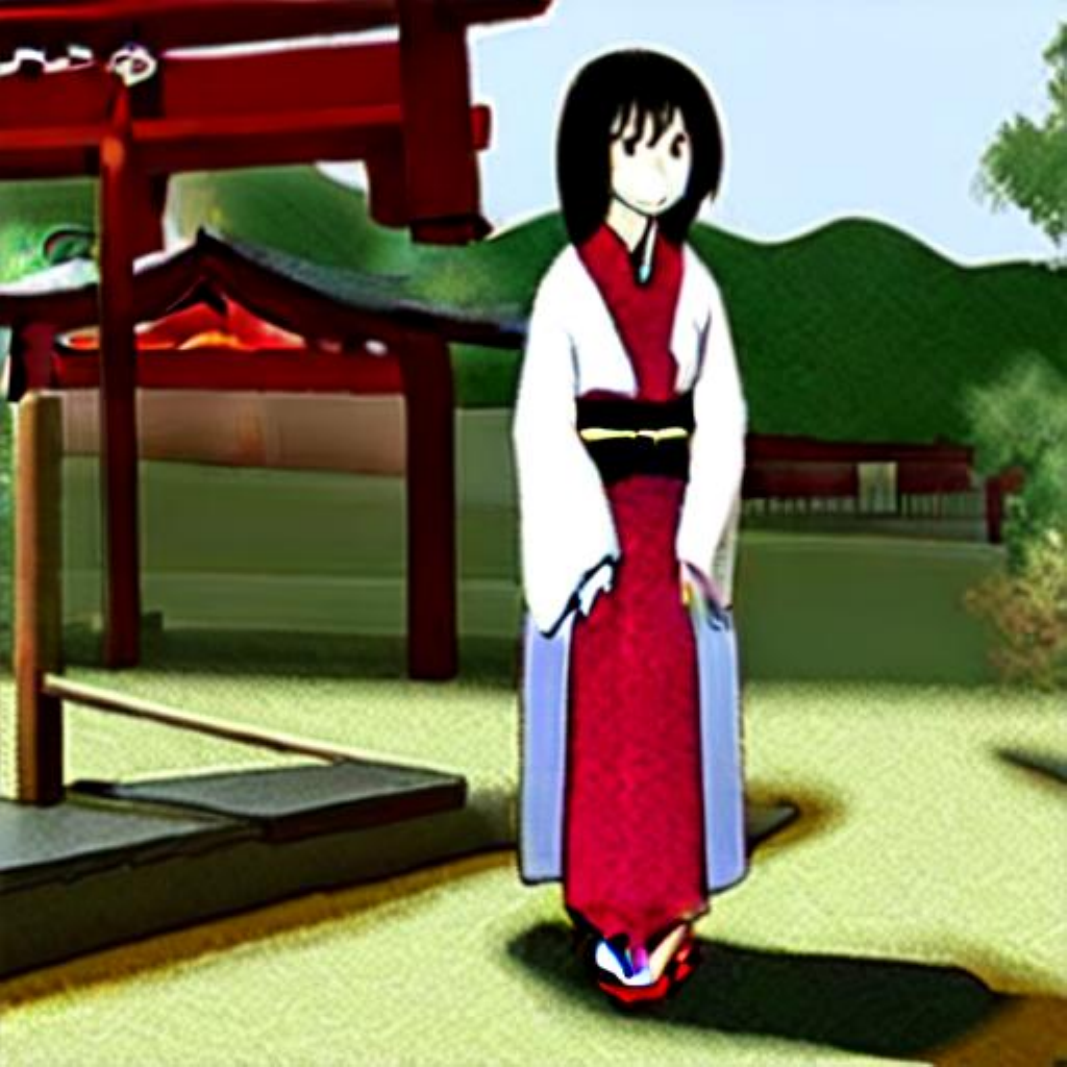}} & 
        \noindent\parbox[c]{0.14\columnwidth}{\includegraphics[width=0.14\columnwidth]{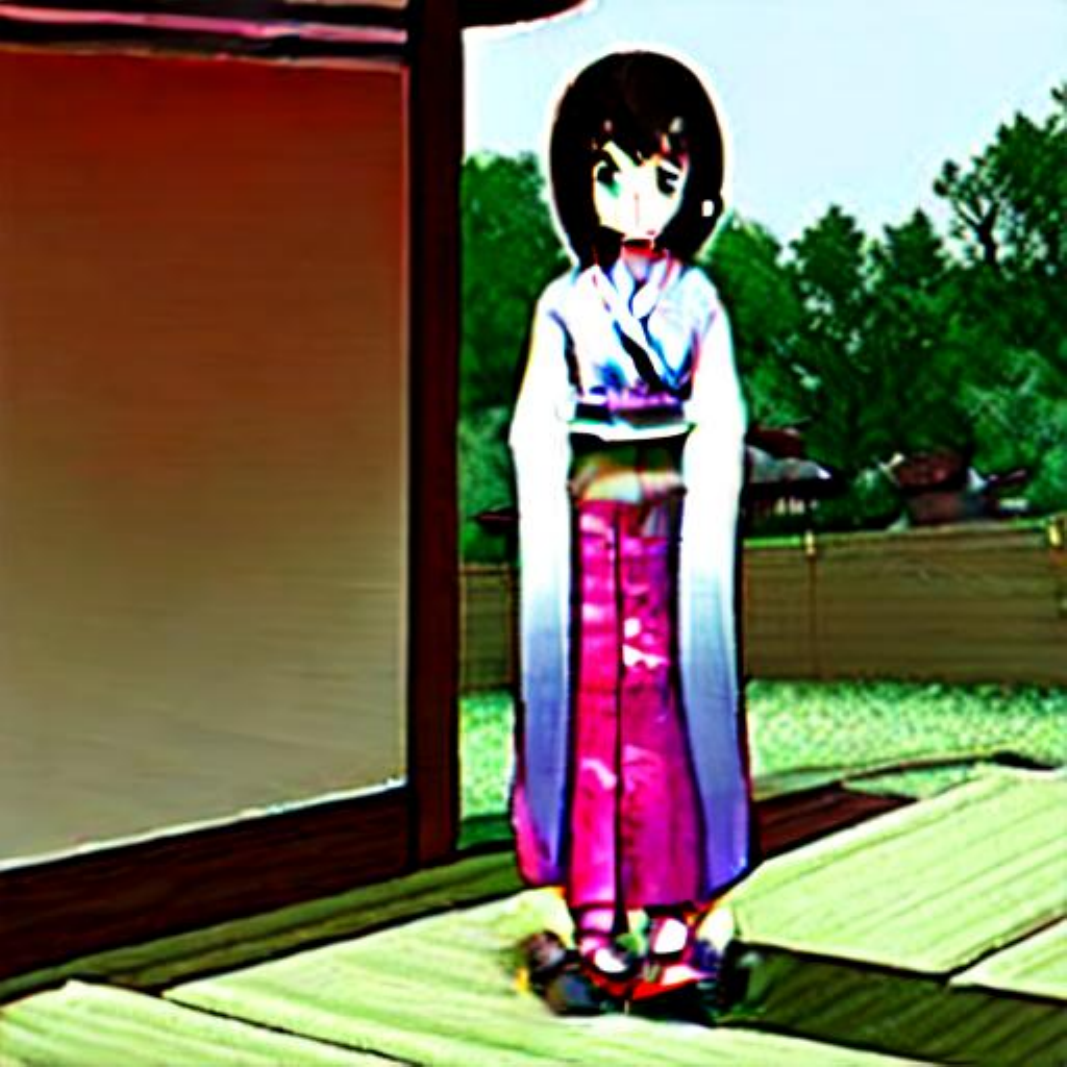}} \\

        \shortstack[l]{\tiny 40 steps} &
        \noindent\parbox[c]{0.14\columnwidth}{\includegraphics[width=0.14\columnwidth]{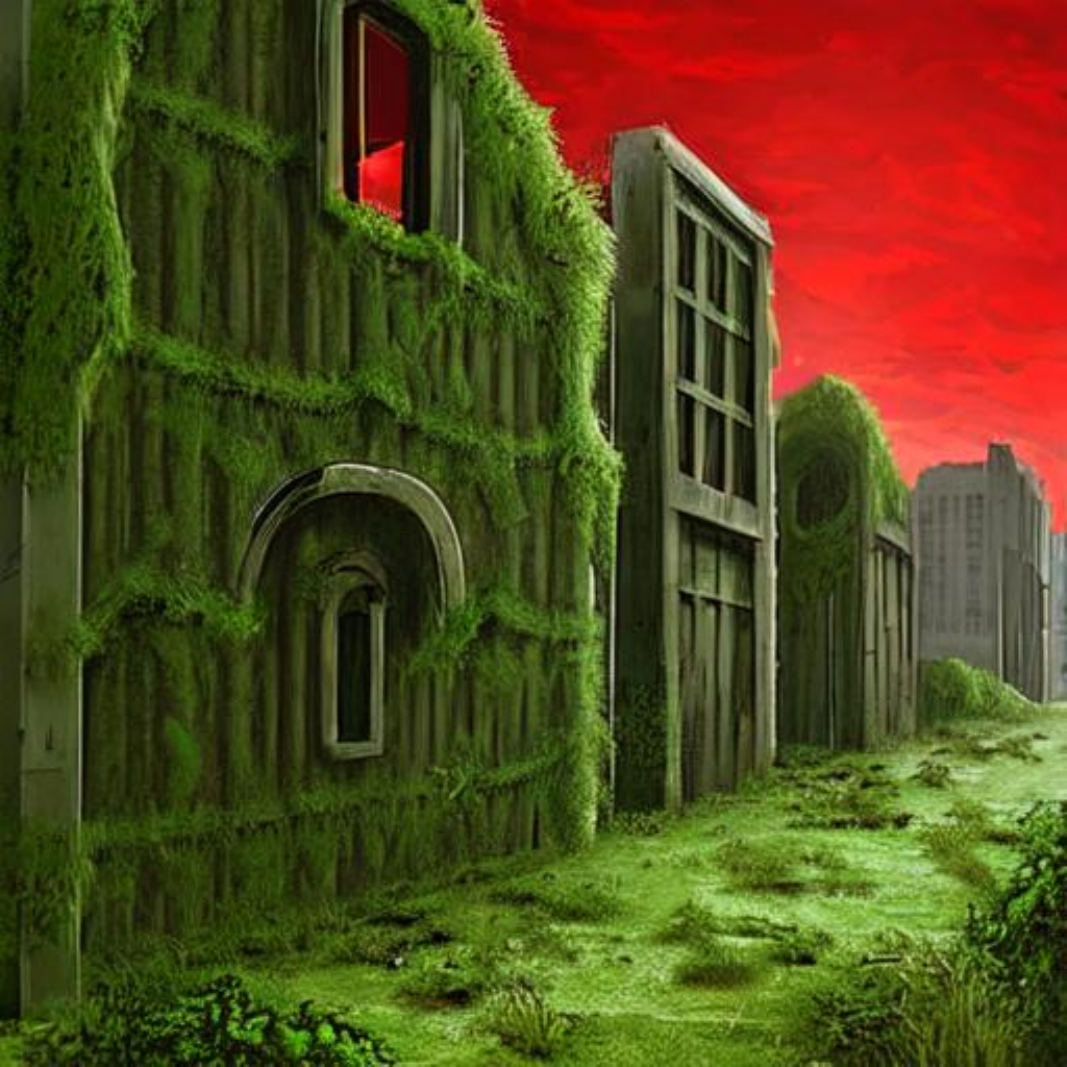}} & 
        \noindent\parbox[c]{0.14\columnwidth}{\includegraphics[width=0.14\columnwidth]{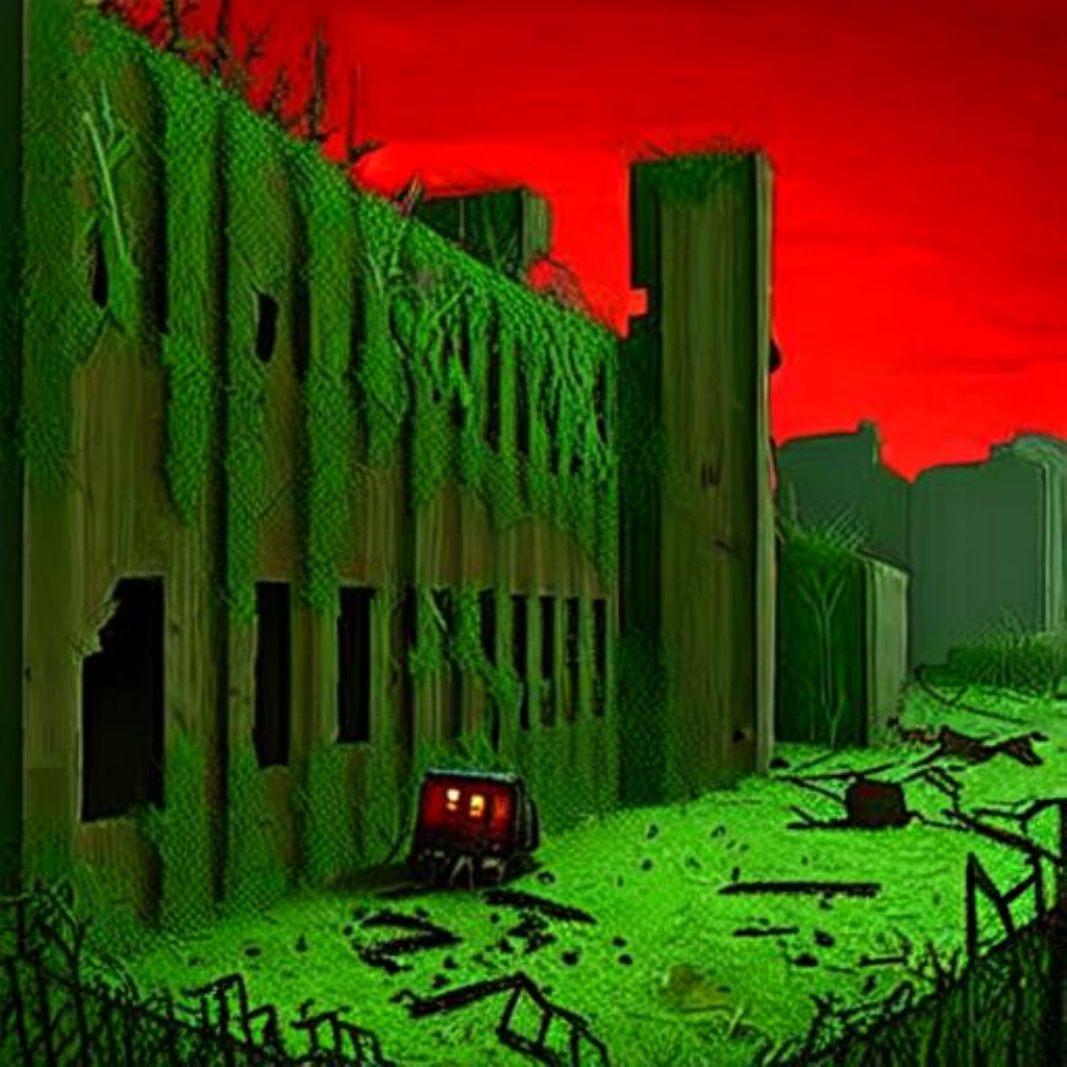}} & 
        \noindent\parbox[c]{0.14\columnwidth}{\includegraphics[width=0.14\columnwidth]{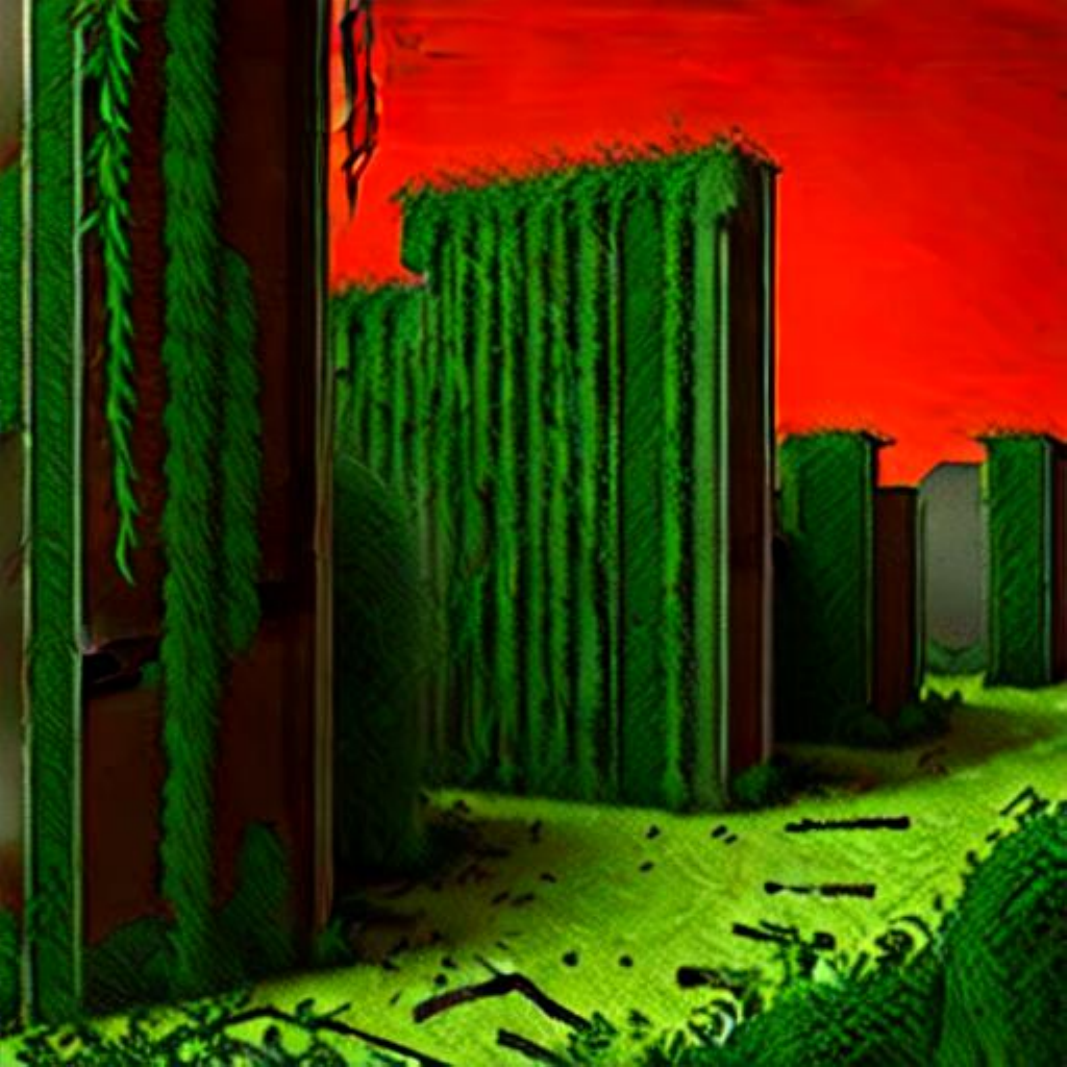}} & 
        \noindent\parbox[c]{0.14\columnwidth}{\includegraphics[width=0.14\columnwidth]{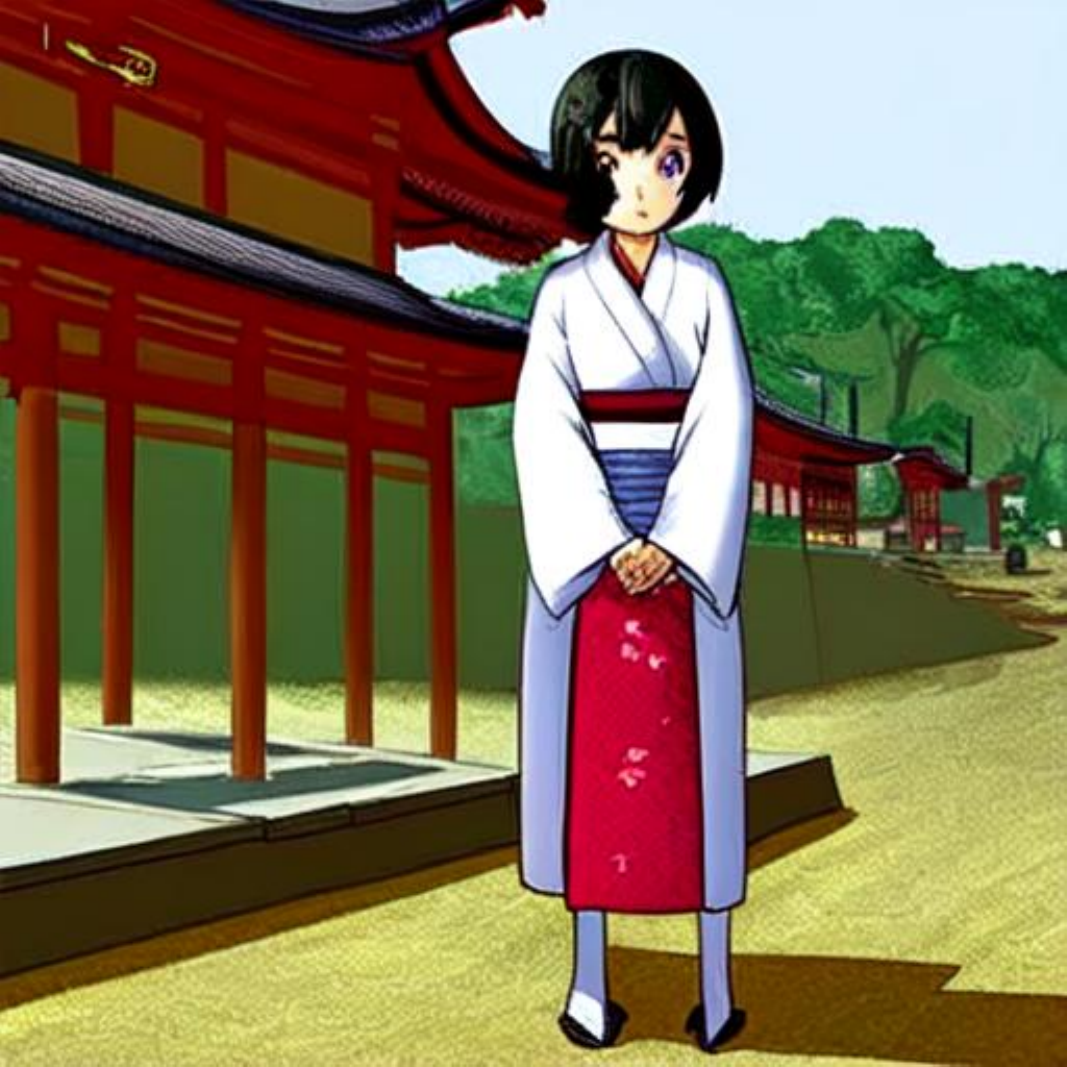}} & 
        \noindent\parbox[c]{0.14\columnwidth}{\includegraphics[width=0.14\columnwidth]{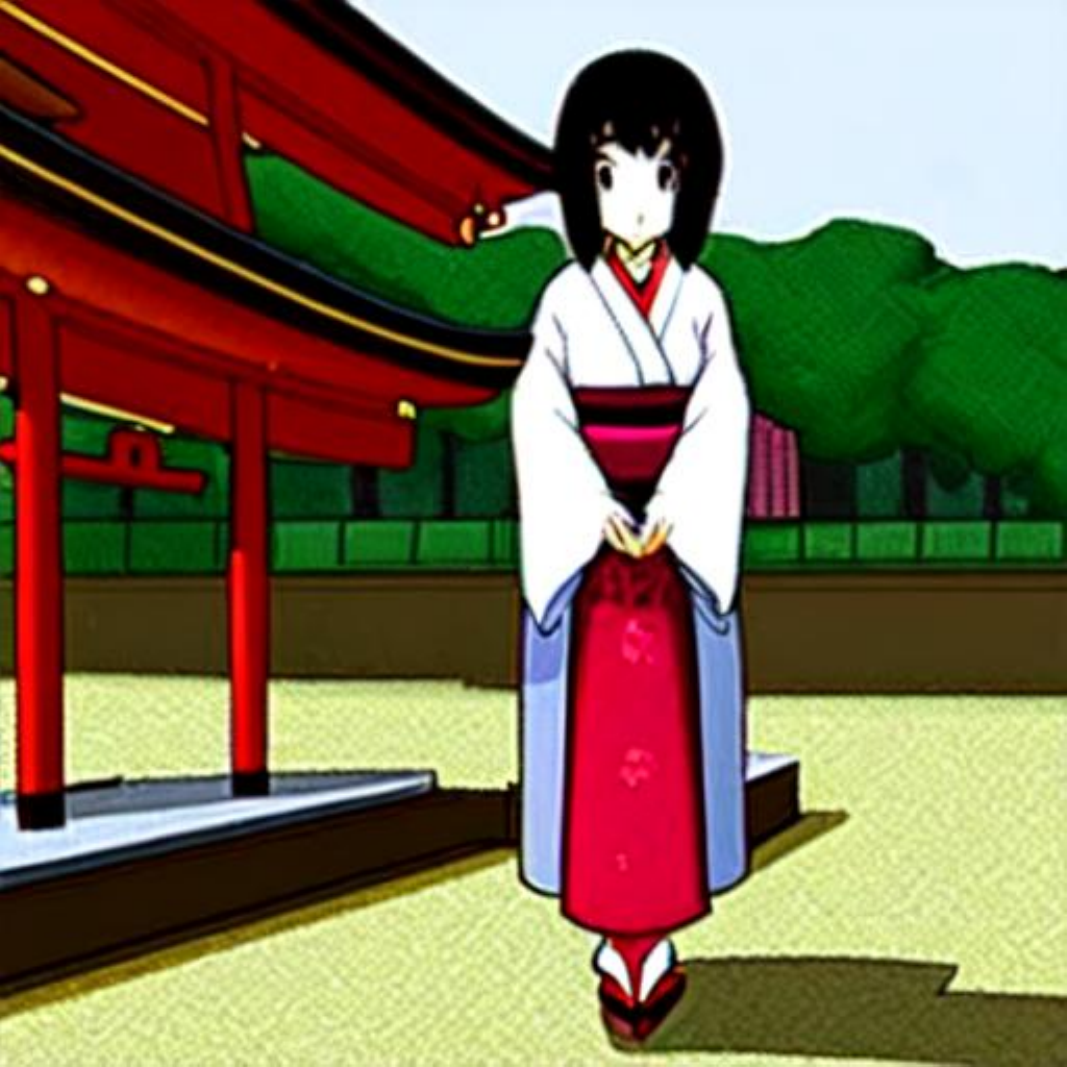}} & 
        \noindent\parbox[c]{0.14\columnwidth}{\includegraphics[width=0.14\columnwidth]{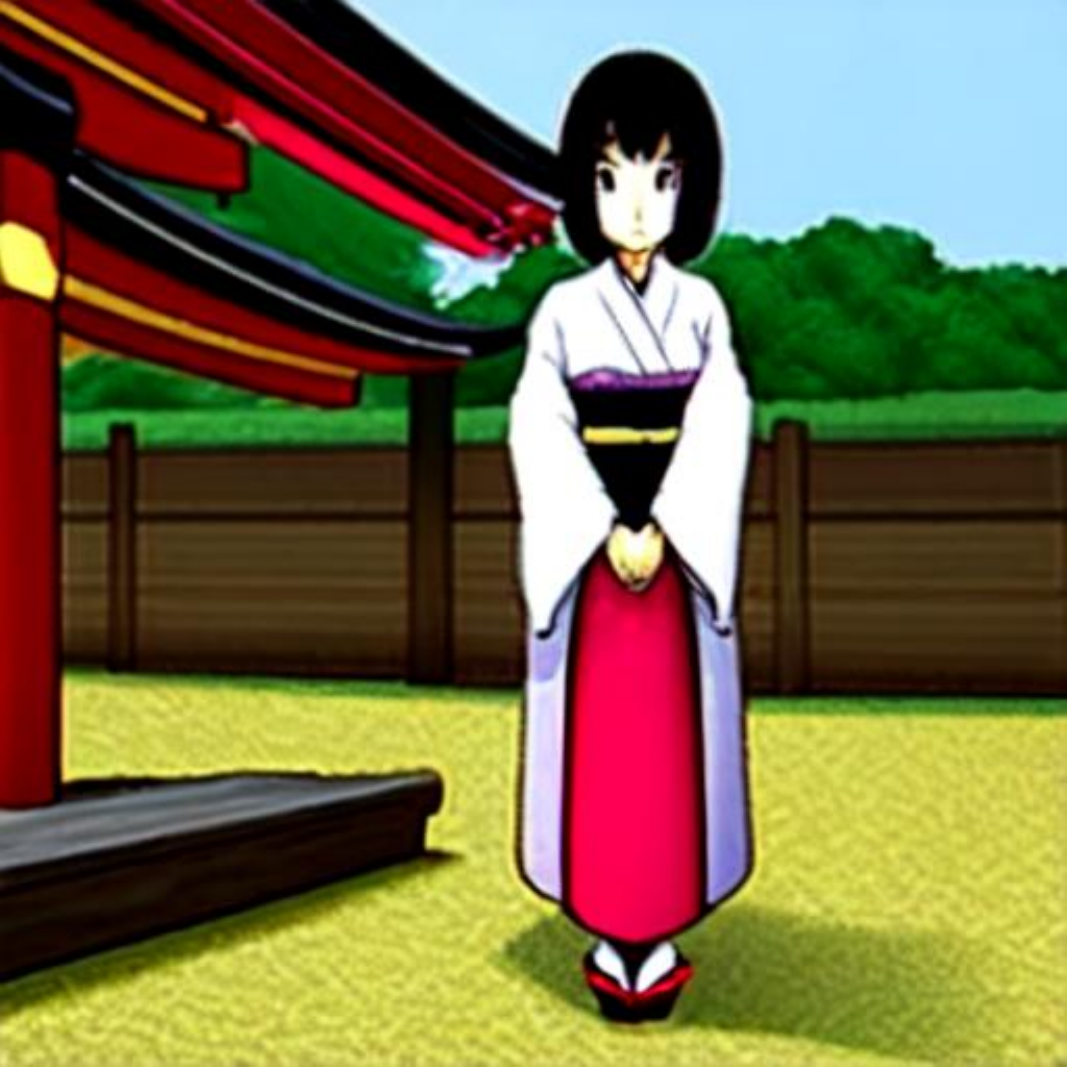}} \\
    \end{tabu}
    \caption{Comparison of samples generated from Waifu Diffusion V1.4 \protect\footnotemark using PLMS4 \cite{liu2022pseudo} with different sampling steps and guidance scale.}
    \label{fig:scale_step_waifu}
\end{figure}

\footnotetext{\url{https://huggingface.co/hakurei/waifu-diffusion}}


\tabulinesep=1pt
\begin{figure}
    \centering
    \begin{tabu} to \textwidth {@{}l@{\hspace{5pt}}c@{\hspace{2pt}}c@{\hspace{2pt}}c@{\hspace{4pt}}c@{\hspace{2pt}}c@{\hspace{2pt}}c@{}}
        & \multicolumn{3}{c}{\shortstack{\scriptsize "A post-apocalyptic world with ruined \\ \scriptsize buildings, overgrown vegetation, and a red sky"}}
        & \multicolumn{3}{c}{\shortstack{\scriptsize "A girl standing in a park in \\ \scriptsize Japanese animation style"}} \\

        & \multicolumn{1}{c}{\shortstack{\scriptsize $s = 7.5$}}
        & \multicolumn{1}{c}{\shortstack{\scriptsize $s = 15$}}
        & \multicolumn{1}{c}{\shortstack{\scriptsize $s = 22.5$}}
        & \multicolumn{1}{c}{\shortstack{\scriptsize $s = 7.5$}}
        & \multicolumn{1}{c}{\shortstack{\scriptsize $s = 15$}}
        & \multicolumn{1}{c}{\shortstack{\scriptsize $s = 22.5$}}
        \\
        
        \shortstack[l]{\tiny 10 steps} &
        \noindent\parbox[c]{0.14\columnwidth}{\includegraphics[width=0.14\columnwidth]{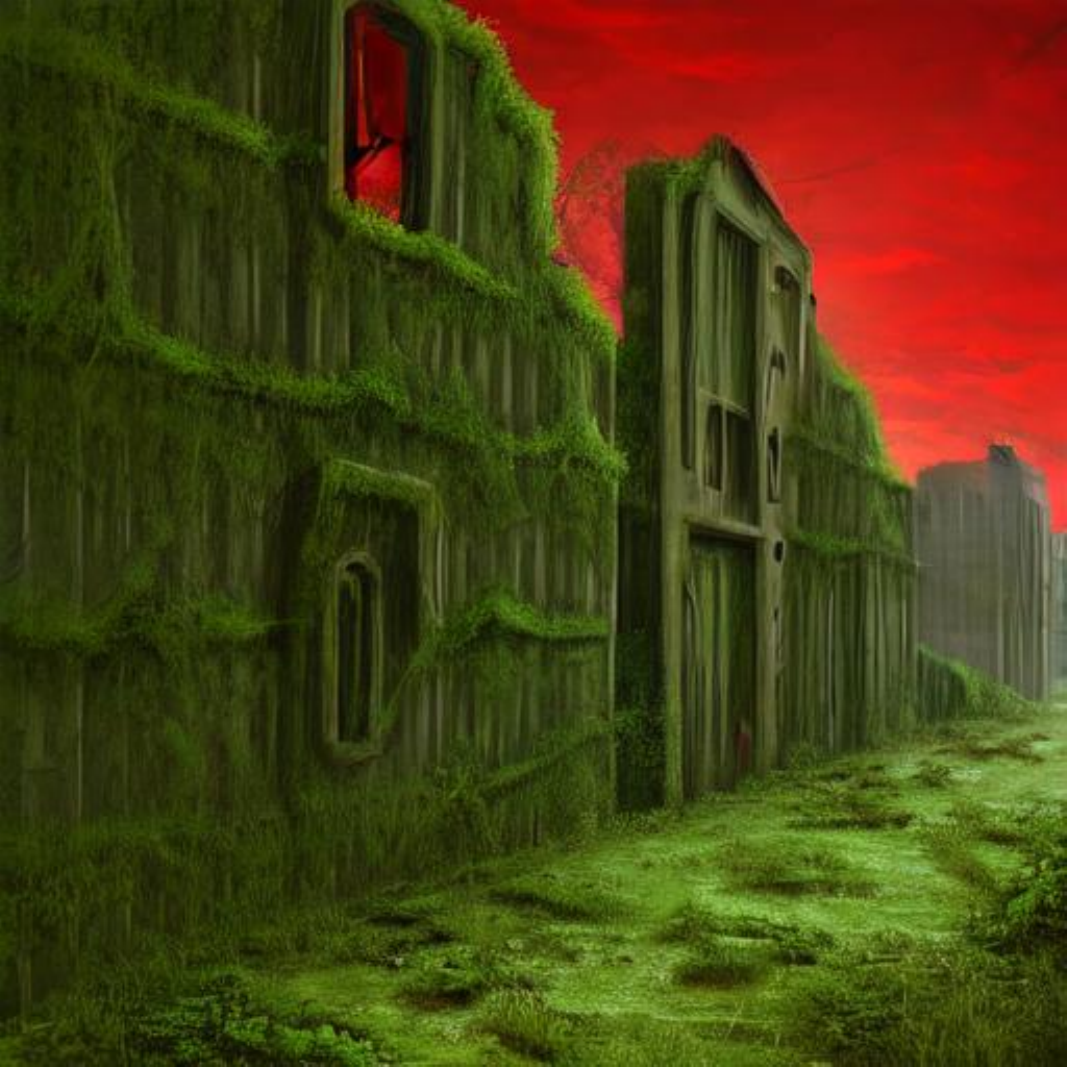}} & 
        \noindent\parbox[c]{0.14\columnwidth}{\includegraphics[width=0.14\columnwidth]{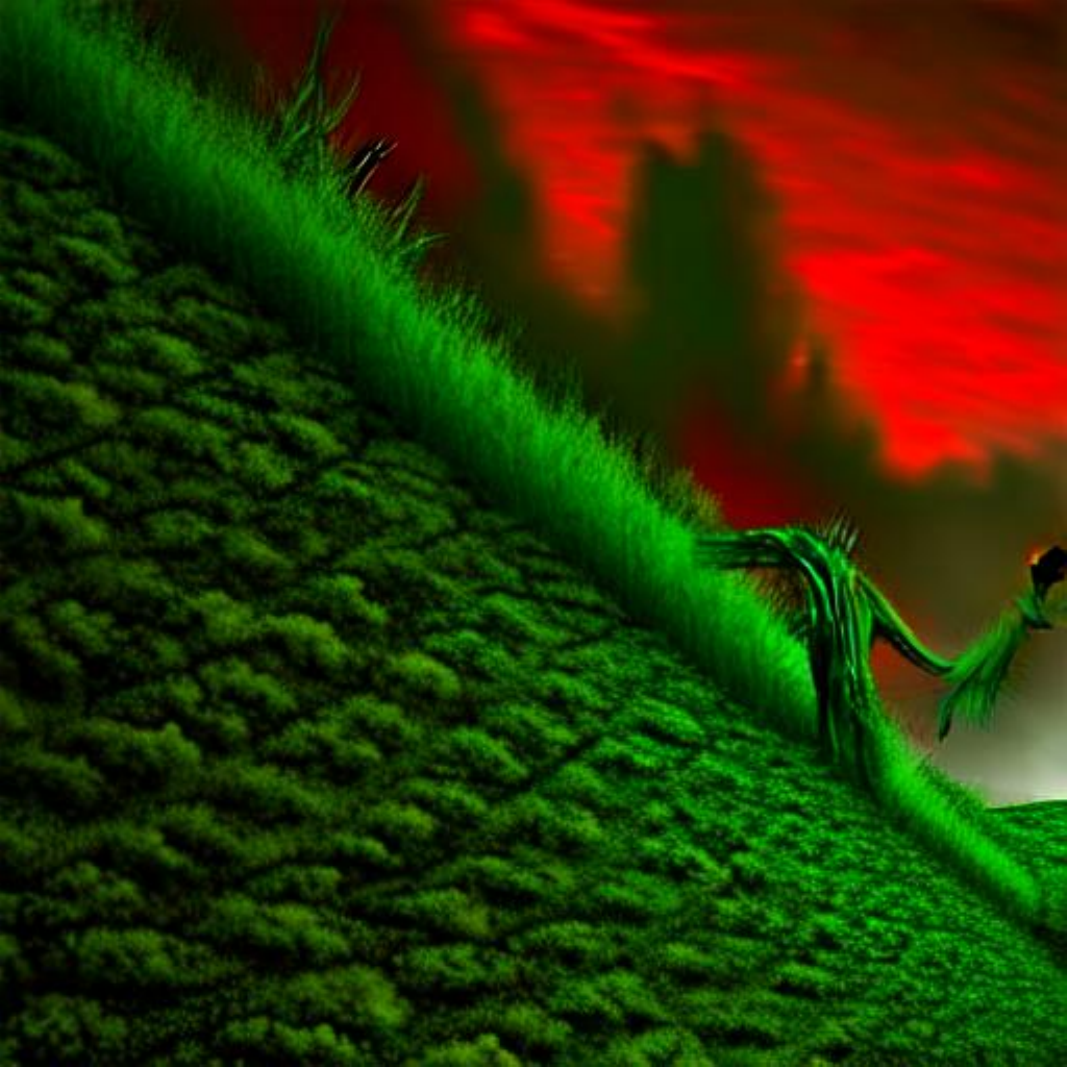}} & 
        \noindent\parbox[c]{0.14\columnwidth}{\includegraphics[width=0.14\columnwidth]{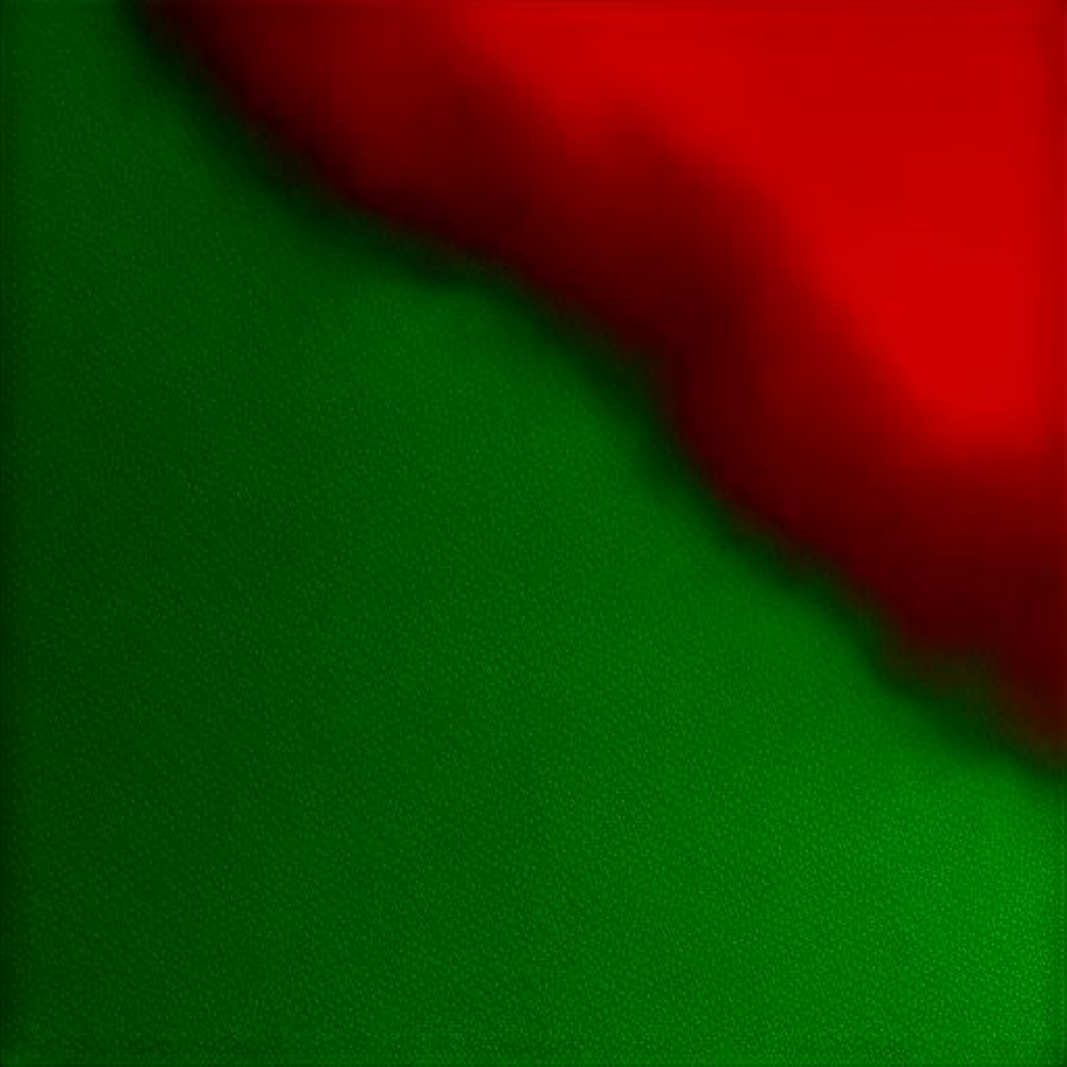}} & 
        \noindent\parbox[c]{0.14\columnwidth}{\includegraphics[width=0.14\columnwidth]{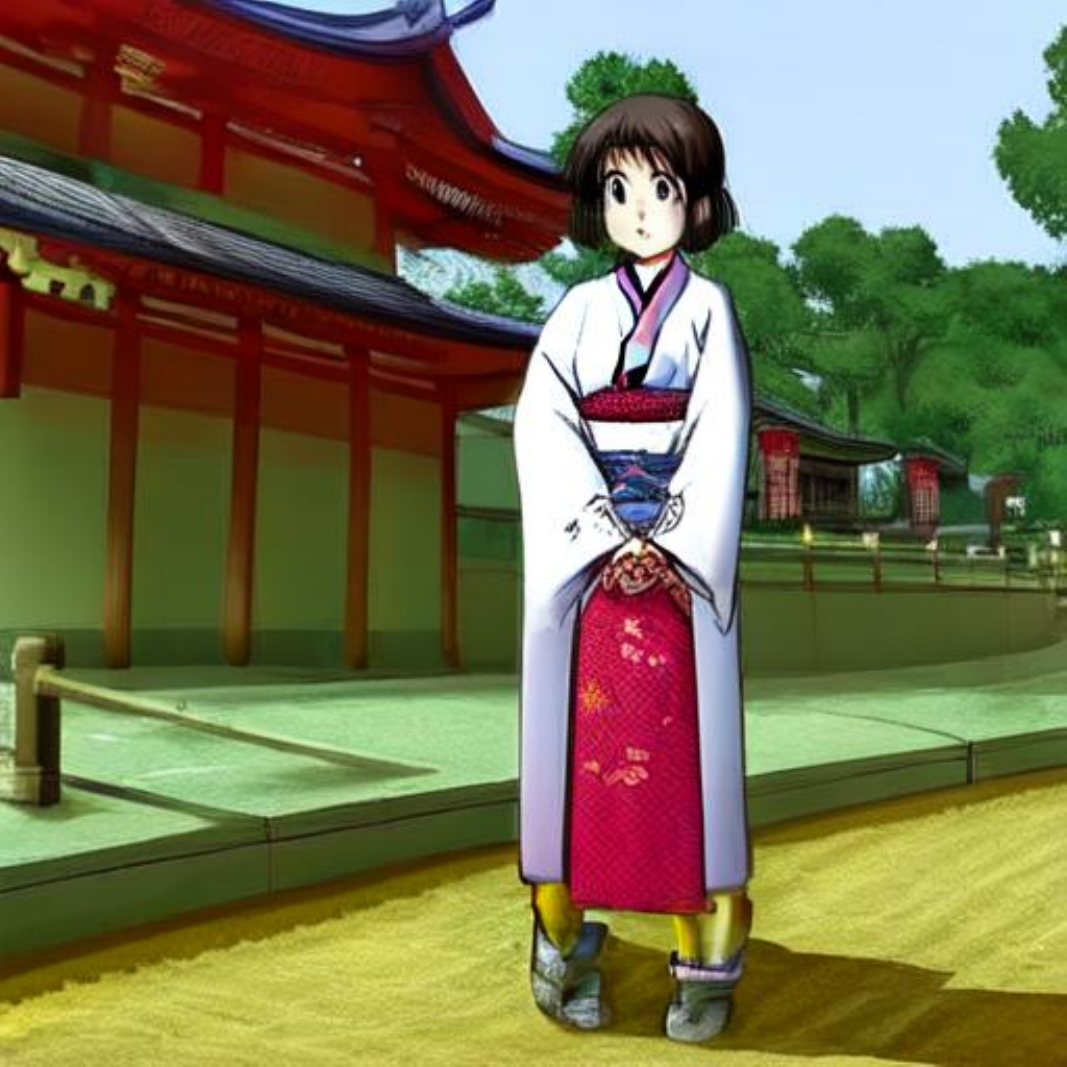}} & 
        \noindent\parbox[c]{0.14\columnwidth}{\includegraphics[width=0.14\columnwidth]{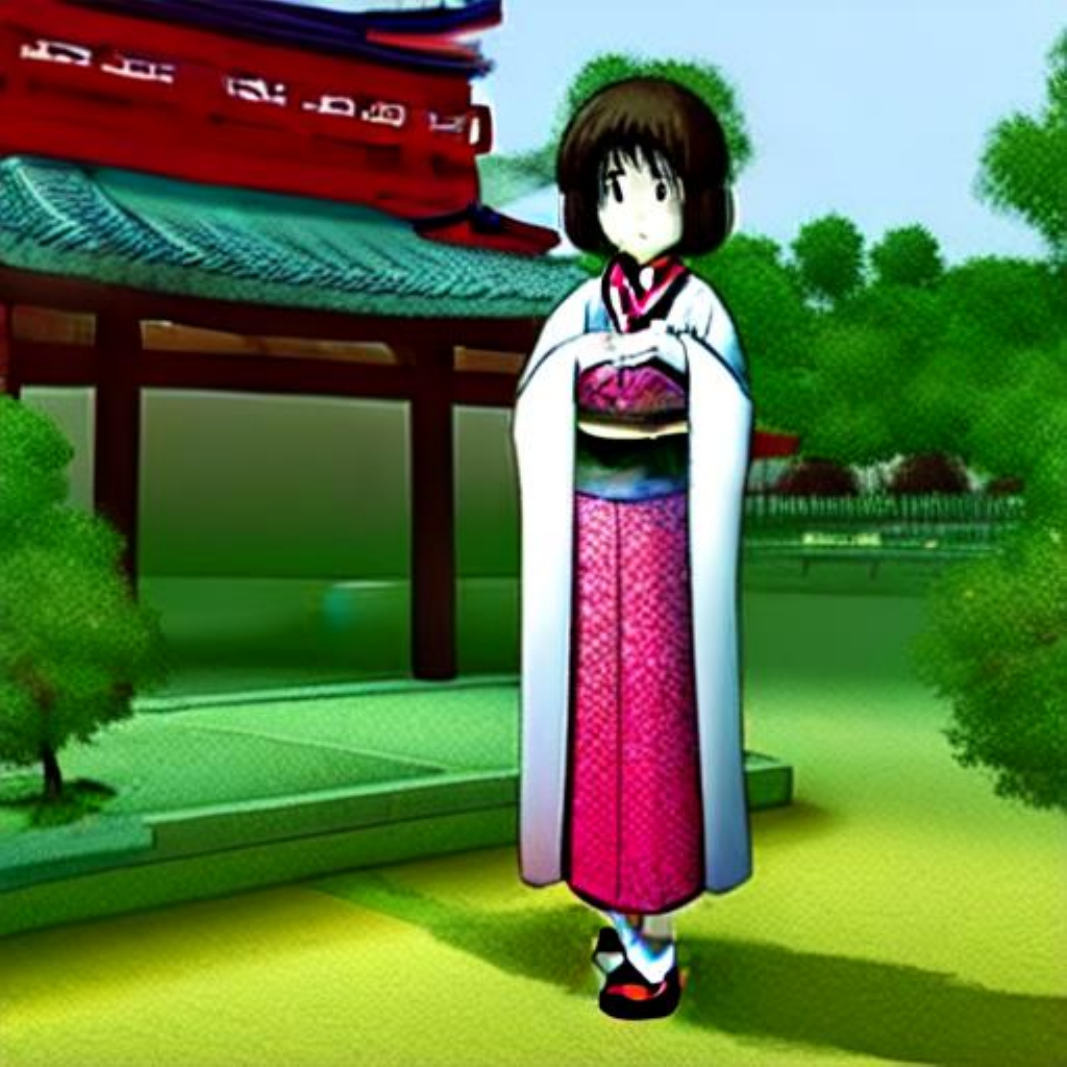}} & 
        \noindent\parbox[c]{0.14\columnwidth}{\includegraphics[width=0.14\columnwidth]{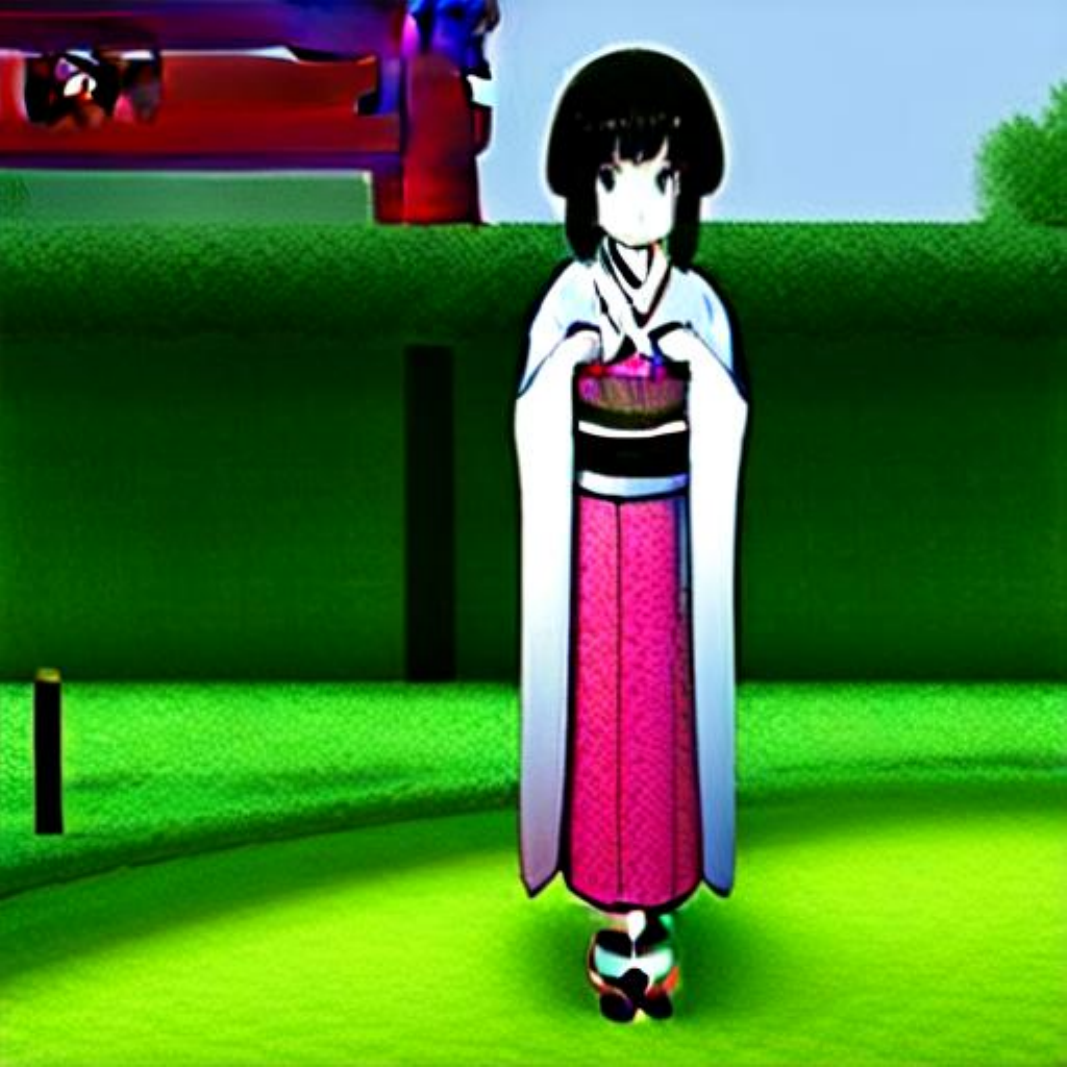}} \\

        \shortstack[l]{\tiny 15 steps} &
        \noindent\parbox[c]{0.14\columnwidth}{\includegraphics[width=0.14\columnwidth]{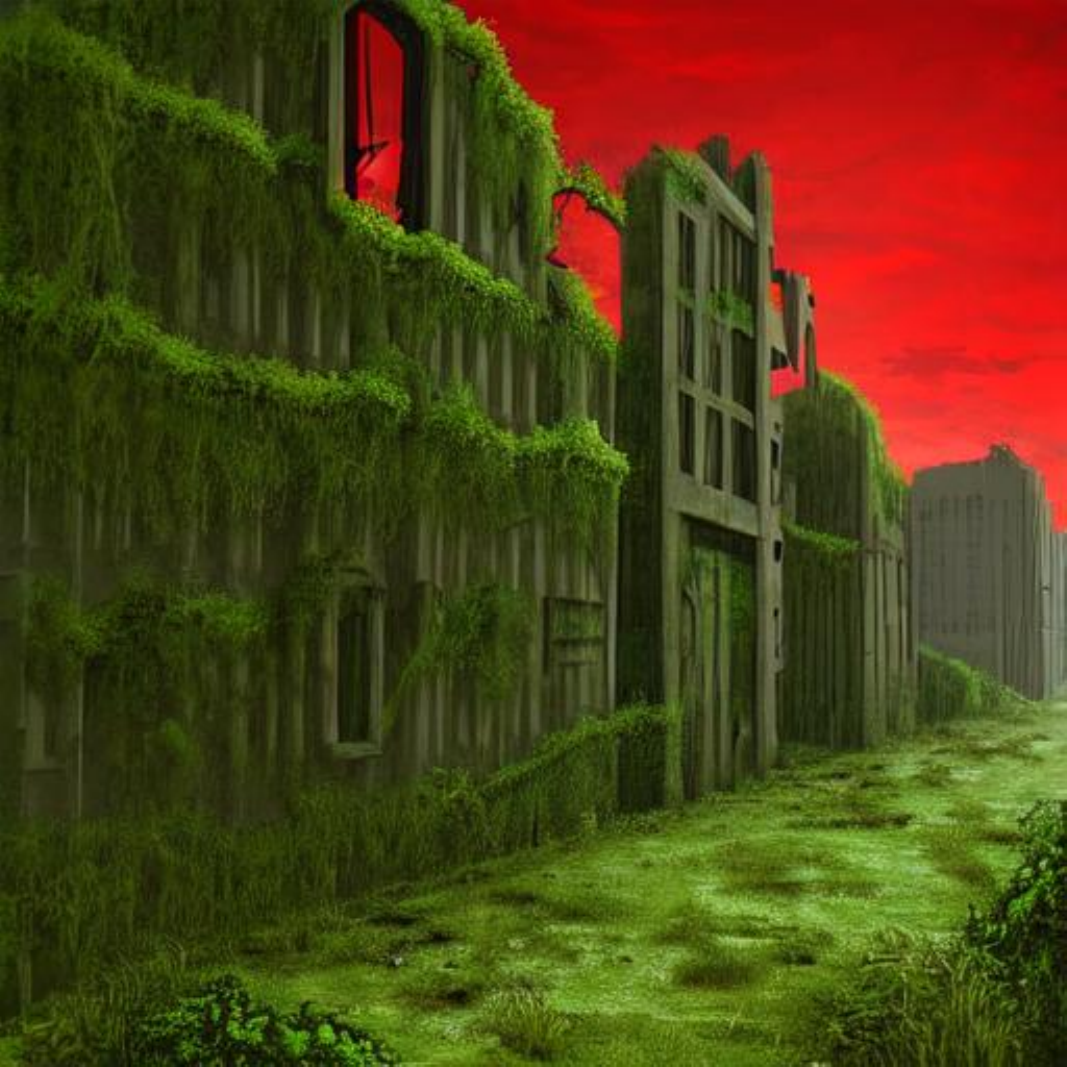}} & 
        \noindent\parbox[c]{0.14\columnwidth}{\includegraphics[width=0.14\columnwidth]{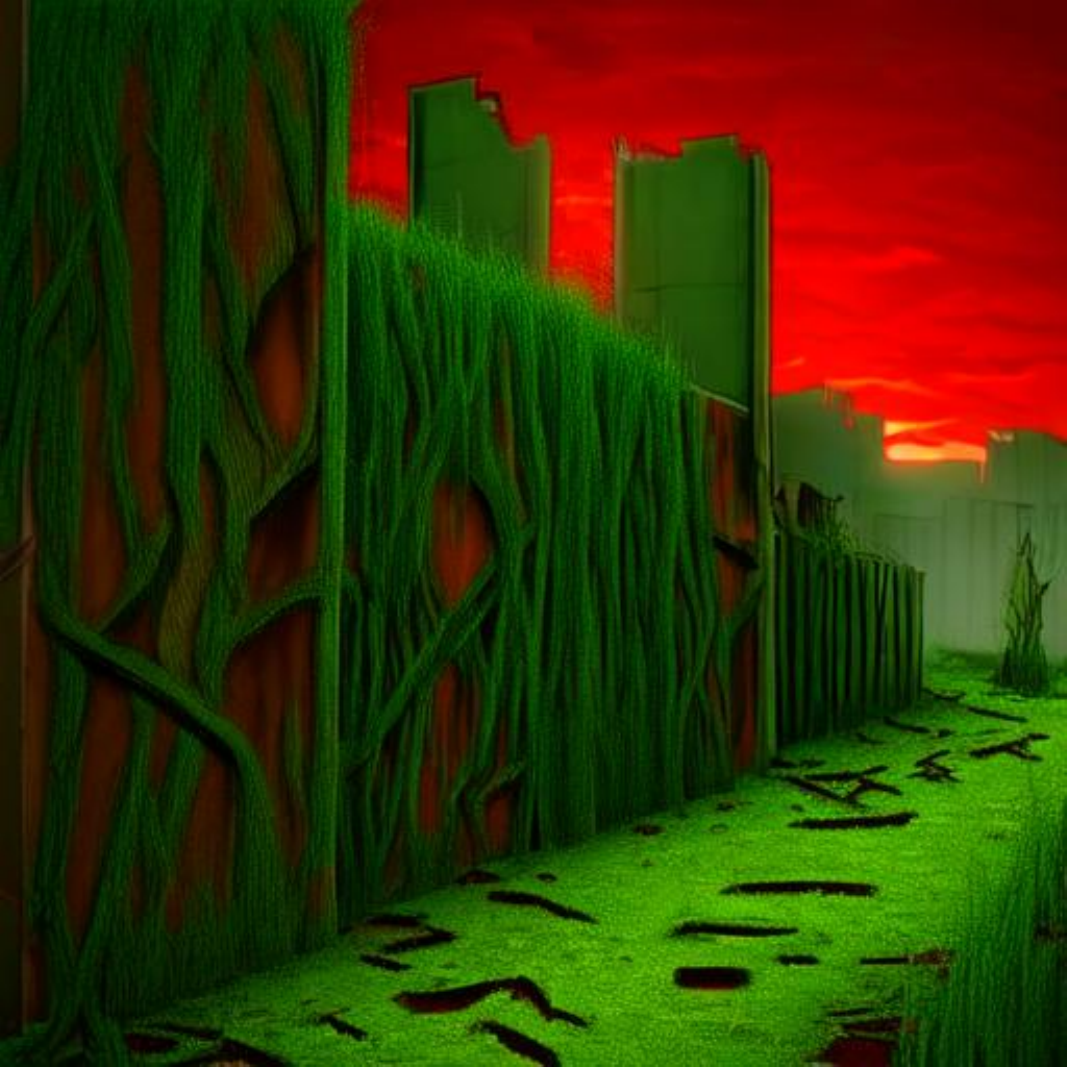}} & 
        \noindent\parbox[c]{0.14\columnwidth}{\includegraphics[width=0.14\columnwidth]{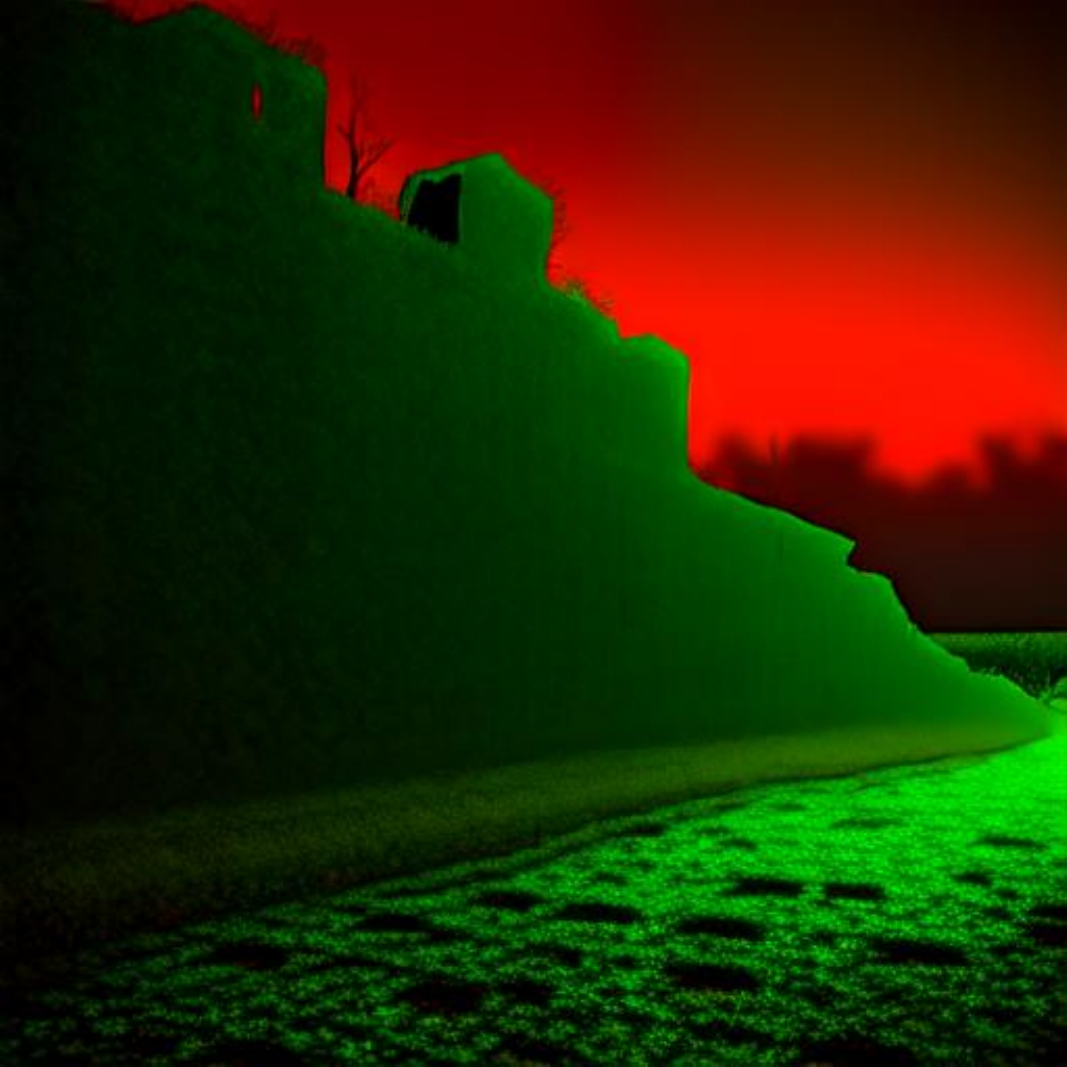}} & 
        \noindent\parbox[c]{0.14\columnwidth}{\includegraphics[width=0.14\columnwidth]{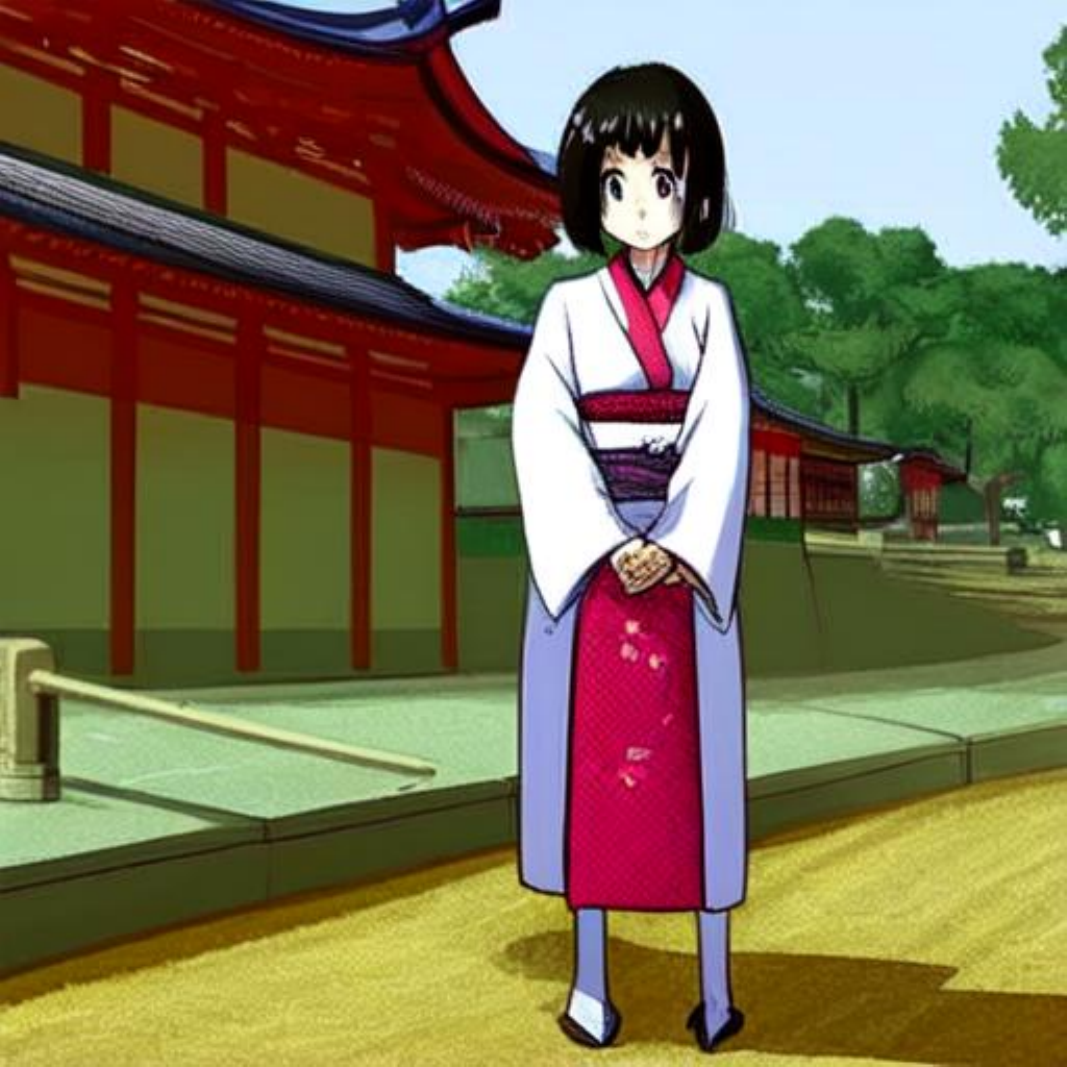}} & 
        \noindent\parbox[c]{0.14\columnwidth}{\includegraphics[width=0.14\columnwidth]{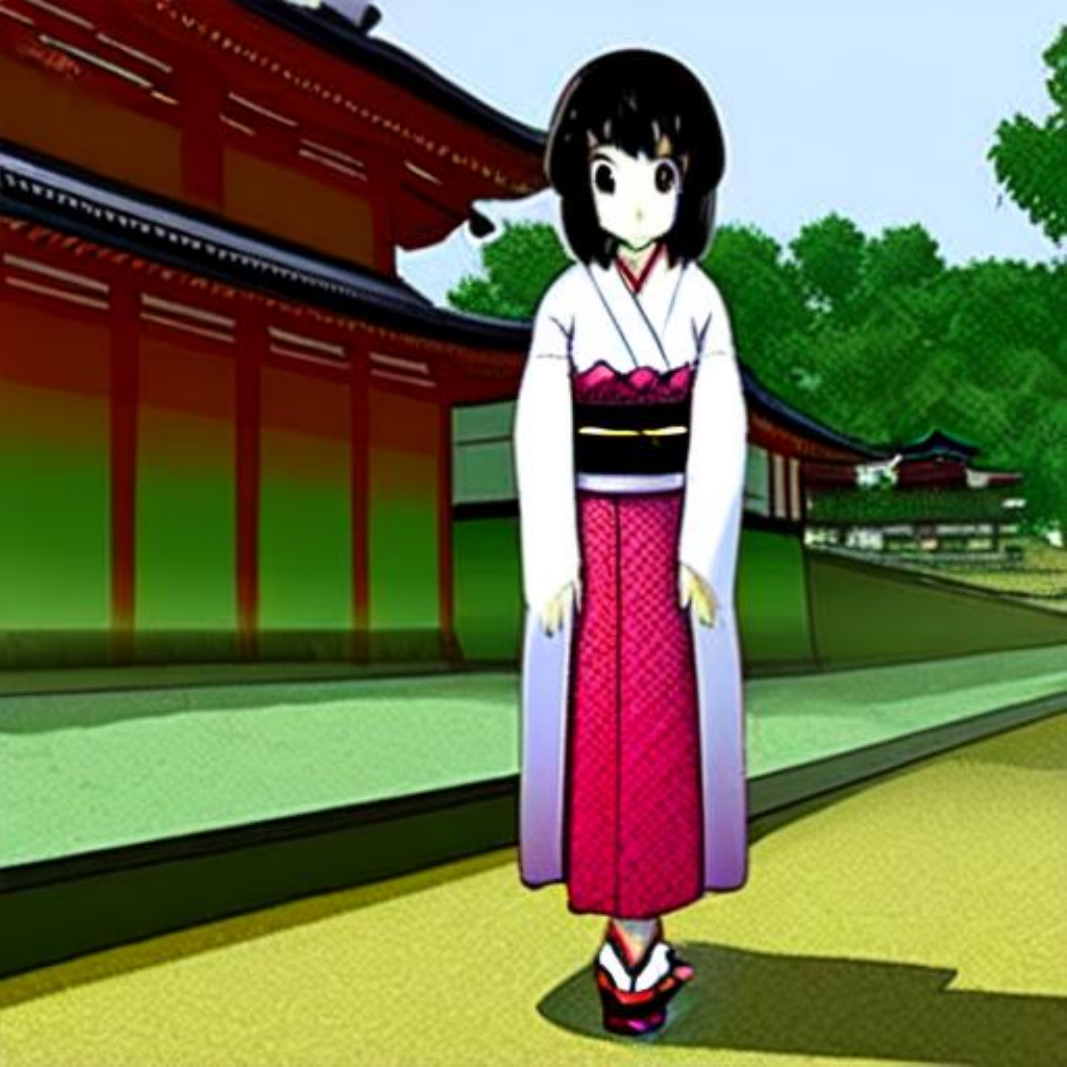}} & 
        \noindent\parbox[c]{0.14\columnwidth}{\includegraphics[width=0.14\columnwidth]{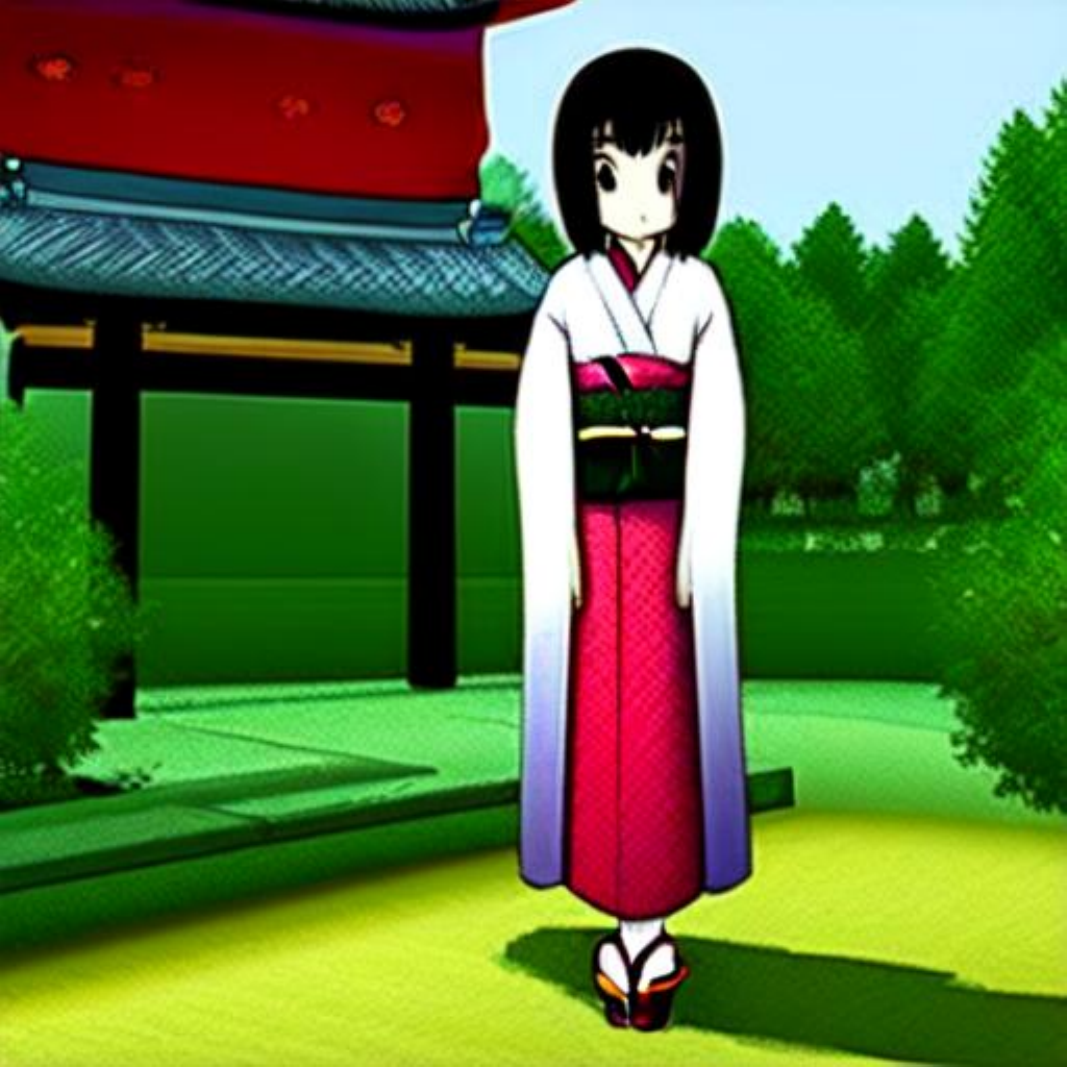}} \\

        \shortstack[l]{\tiny 20 steps} &
        \noindent\parbox[c]{0.14\columnwidth}{\includegraphics[width=0.14\columnwidth]{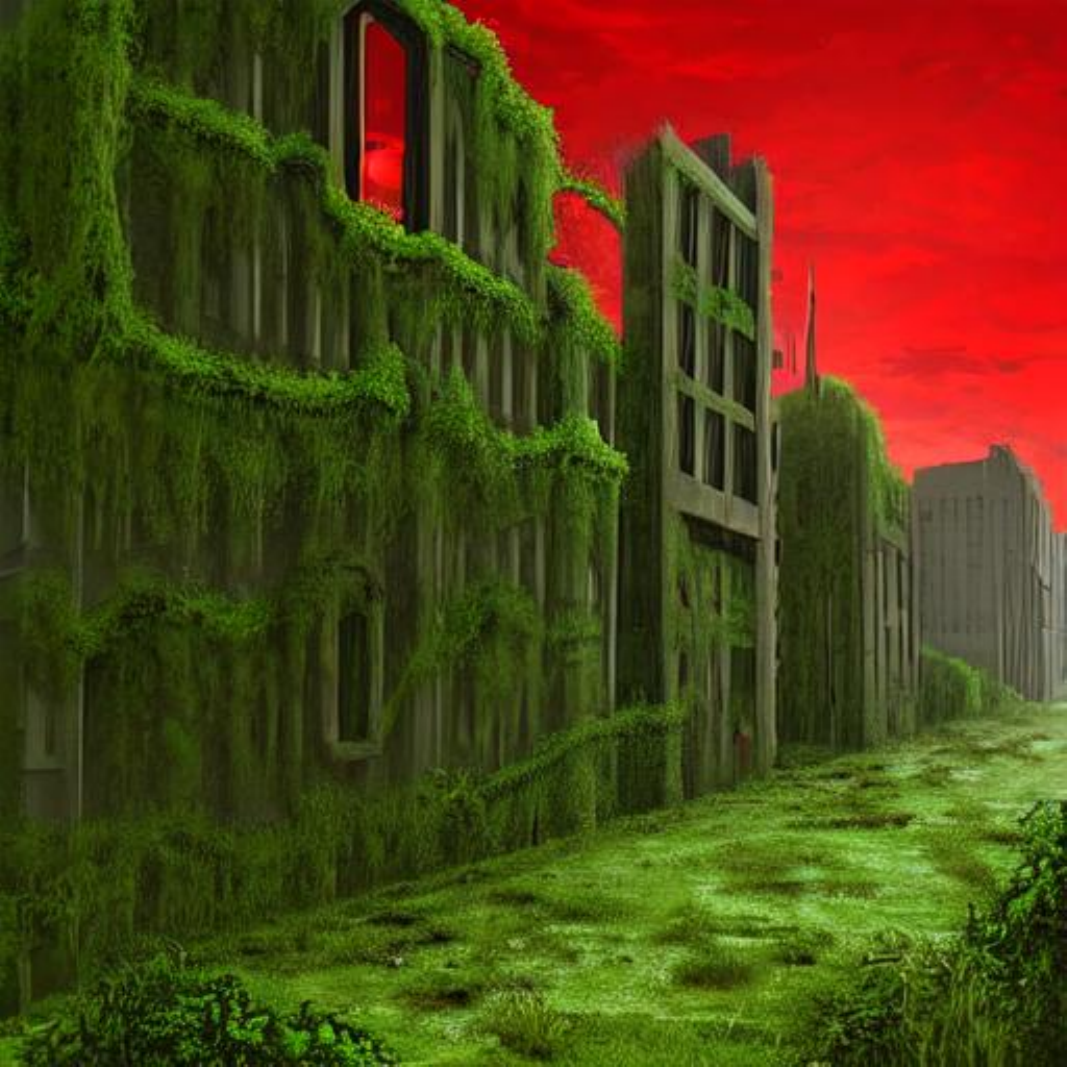}} & 
        \noindent\parbox[c]{0.14\columnwidth}{\includegraphics[width=0.14\columnwidth]{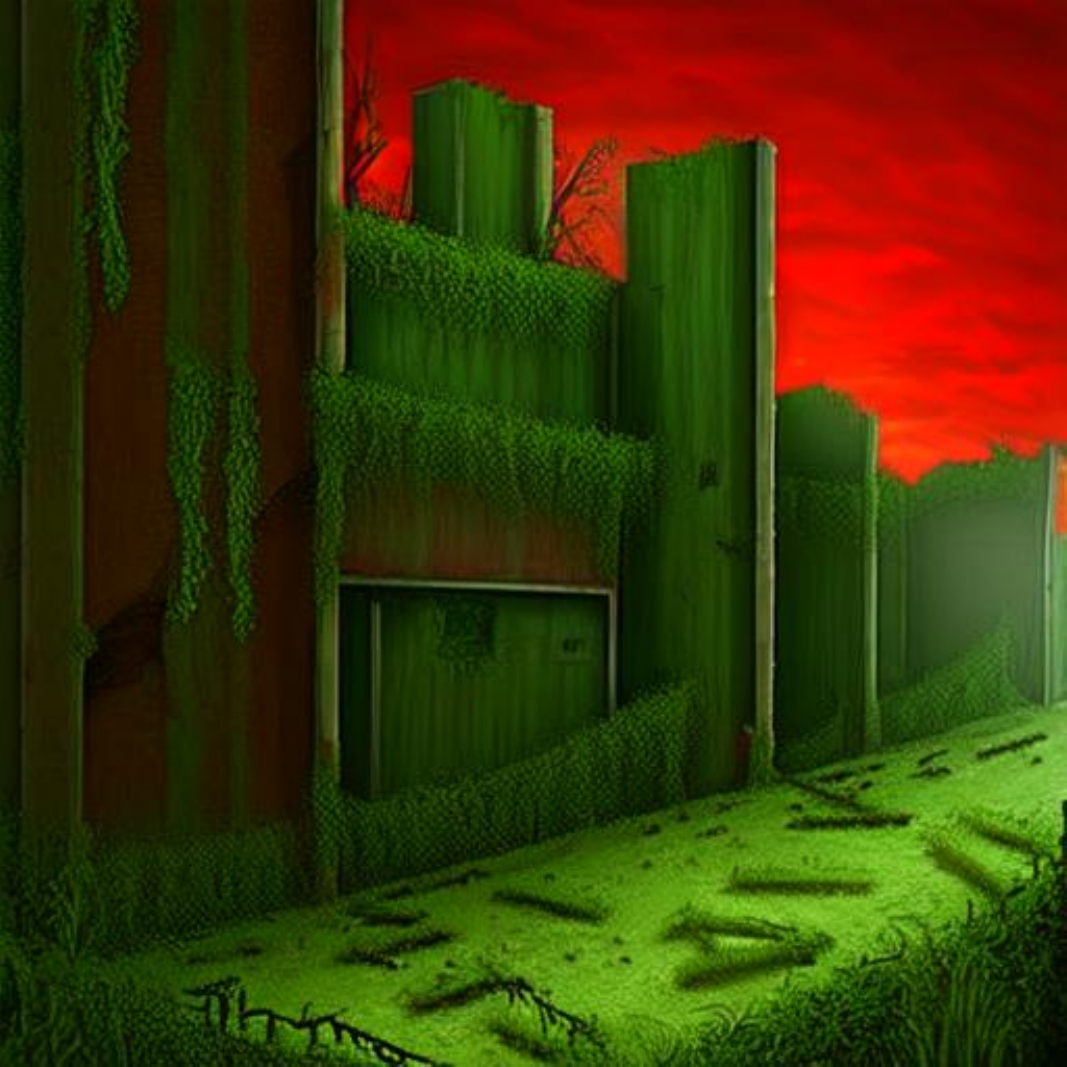}} & 
        \noindent\parbox[c]{0.14\columnwidth}{\includegraphics[width=0.14\columnwidth]{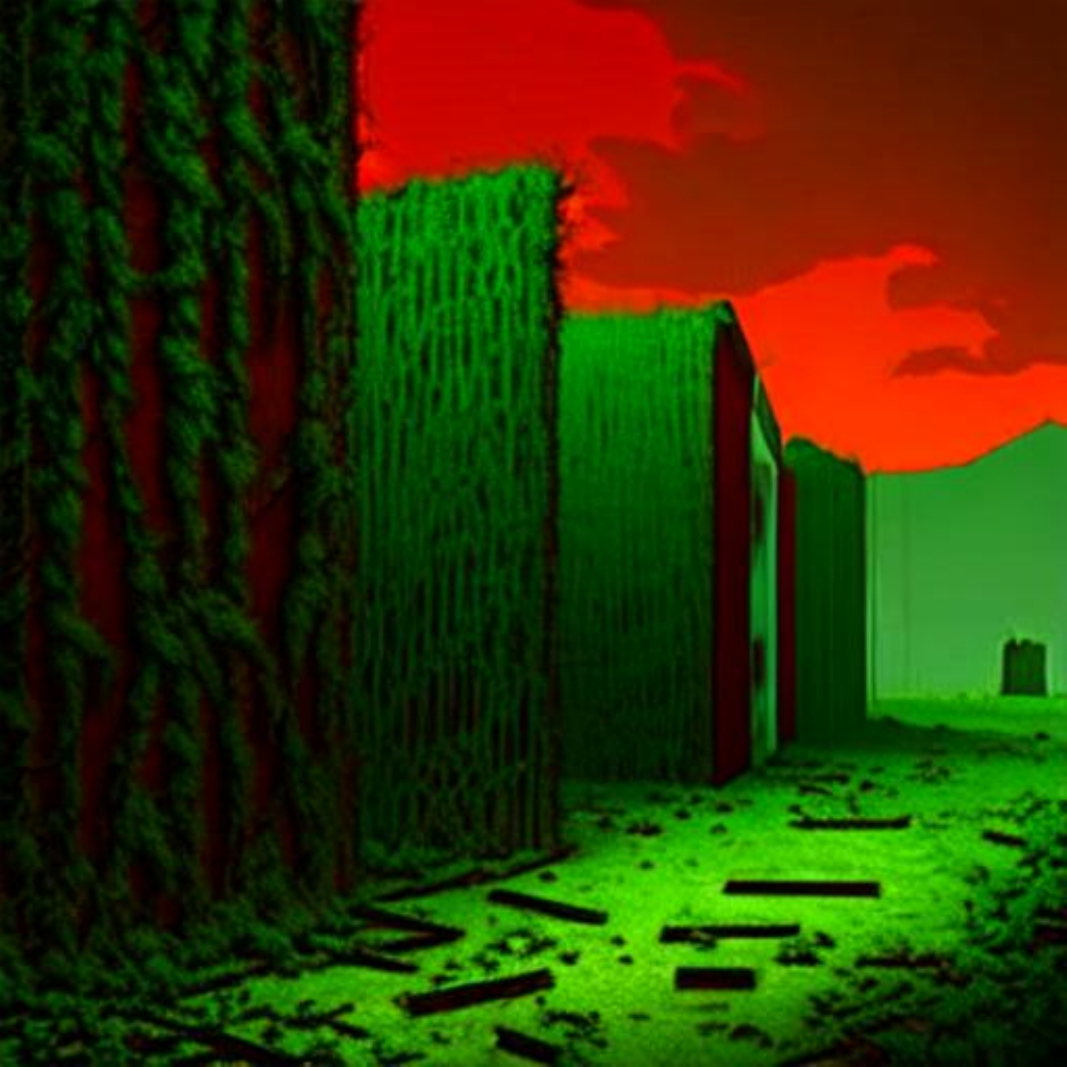}} & 
        \noindent\parbox[c]{0.14\columnwidth}{\includegraphics[width=0.14\columnwidth]{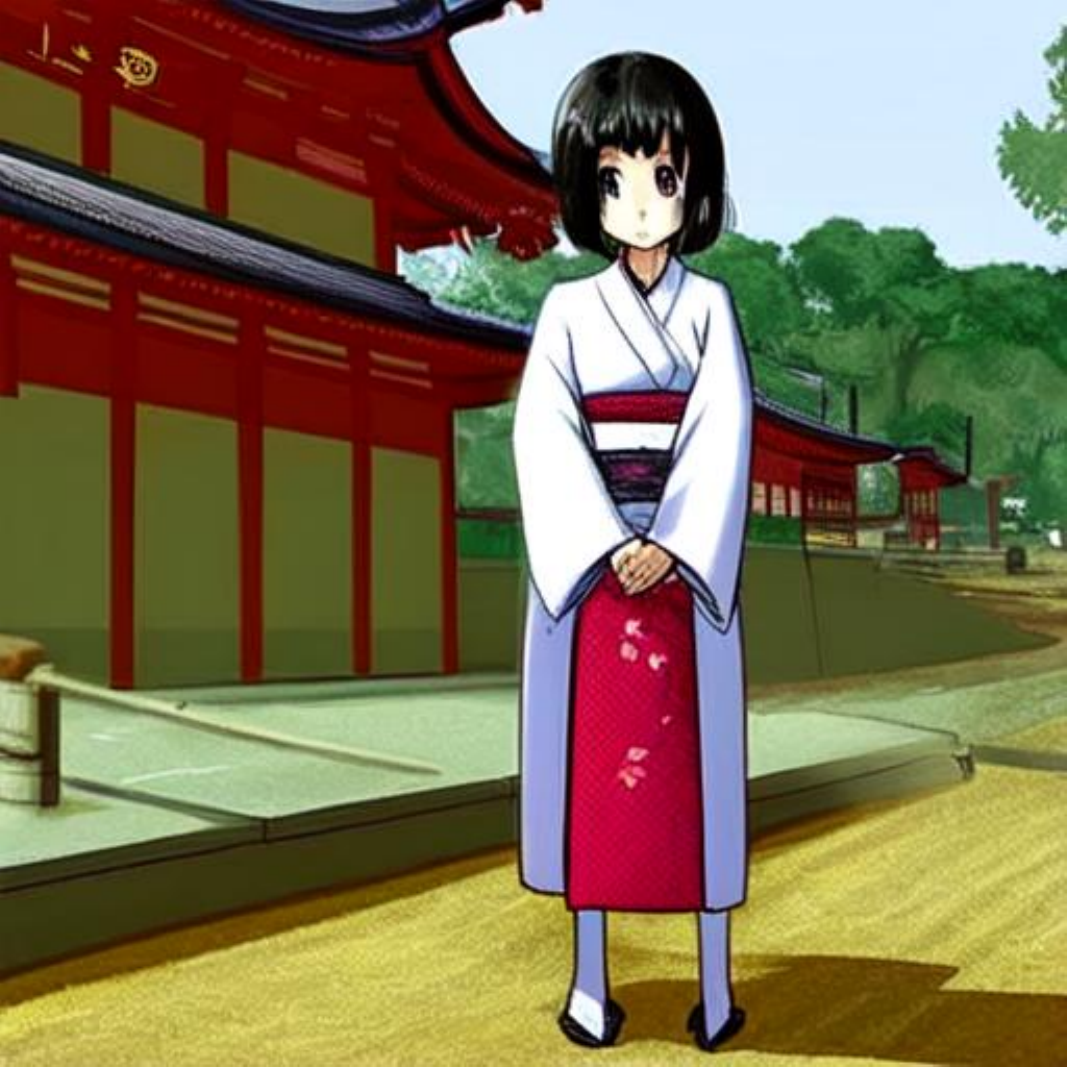}} & 
        \noindent\parbox[c]{0.14\columnwidth}{\includegraphics[width=0.14\columnwidth]{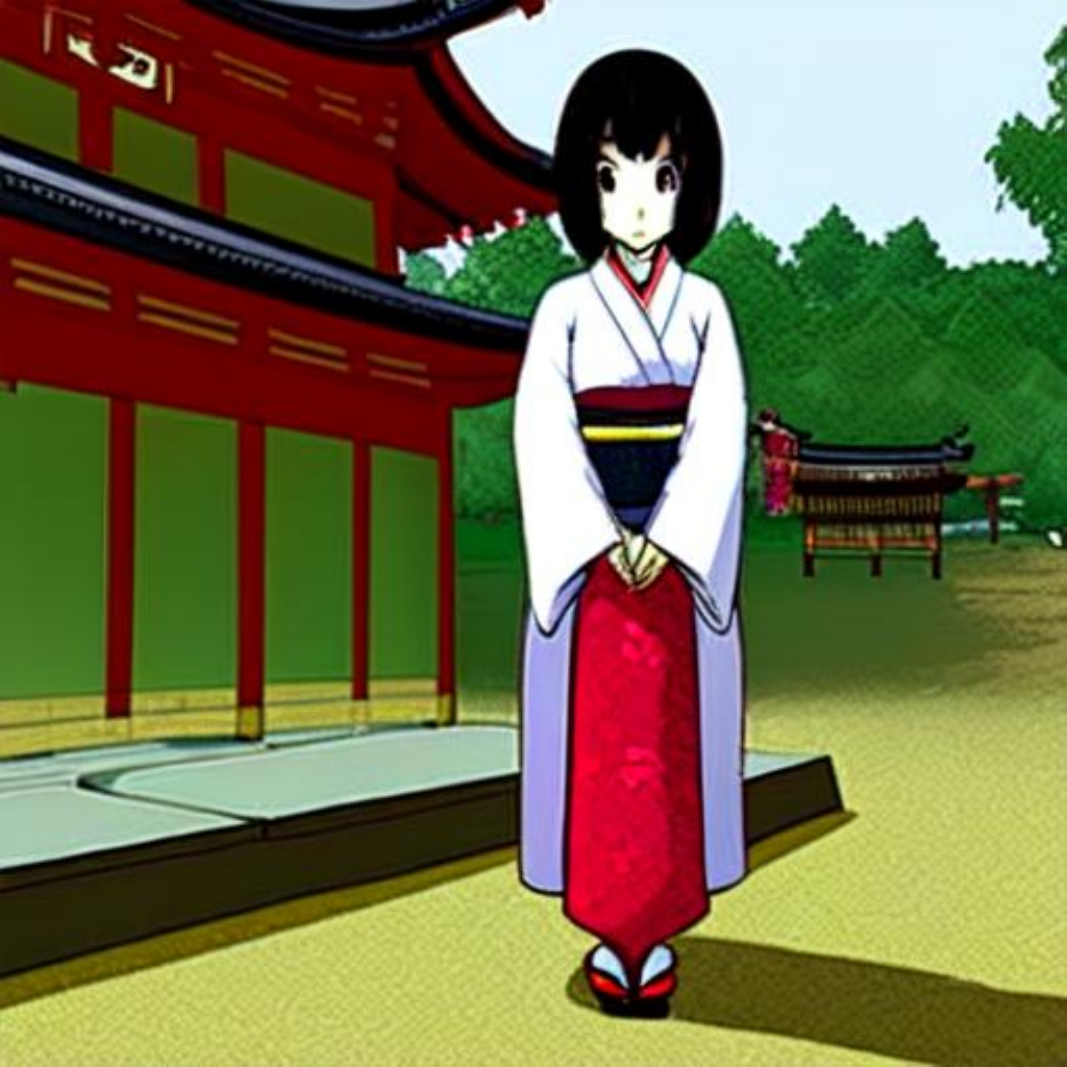}} & 
        \noindent\parbox[c]{0.14\columnwidth}{\includegraphics[width=0.14\columnwidth]{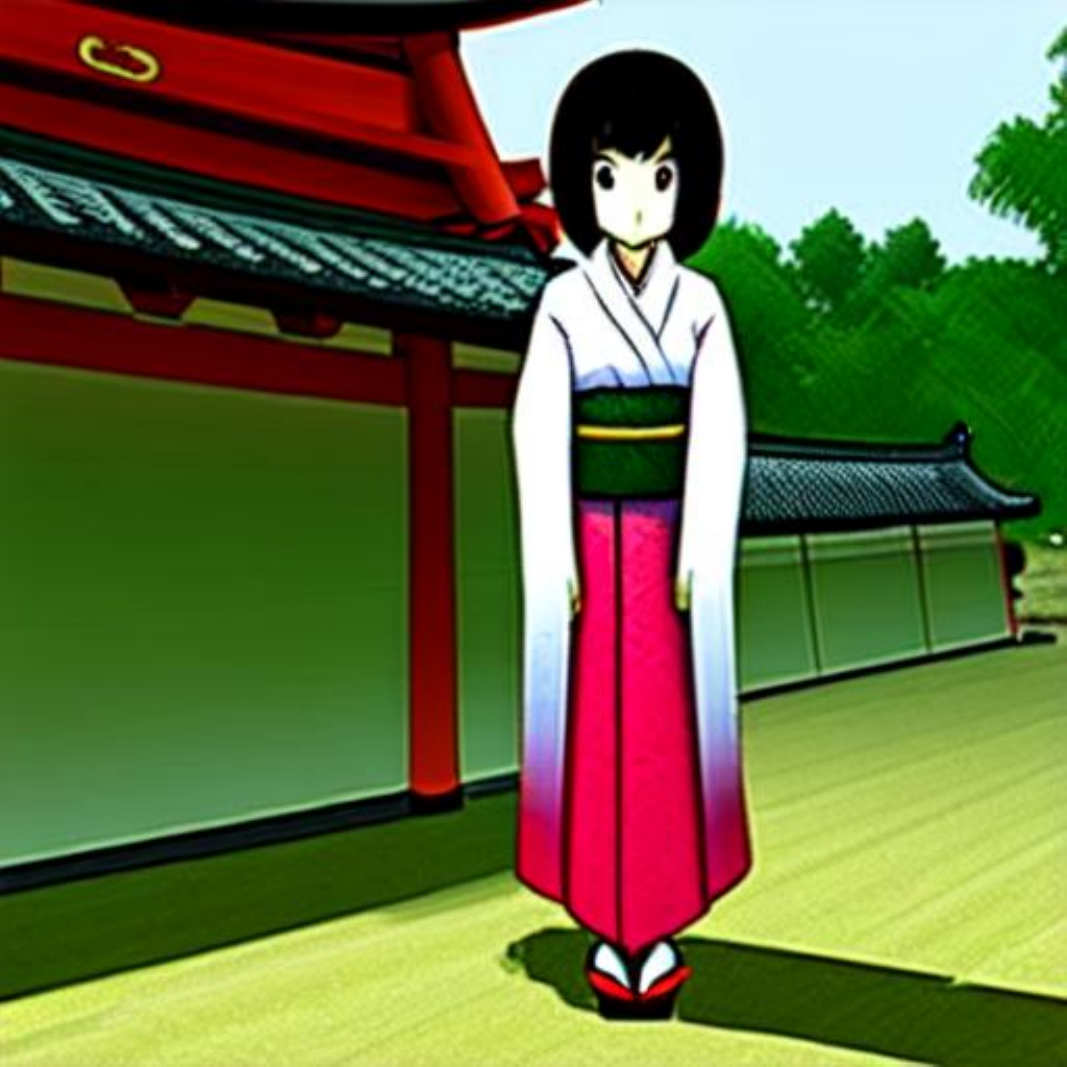}} \\

        \shortstack[l]{\tiny 40 steps} &
        \noindent\parbox[c]{0.14\columnwidth}{\includegraphics[width=0.14\columnwidth]{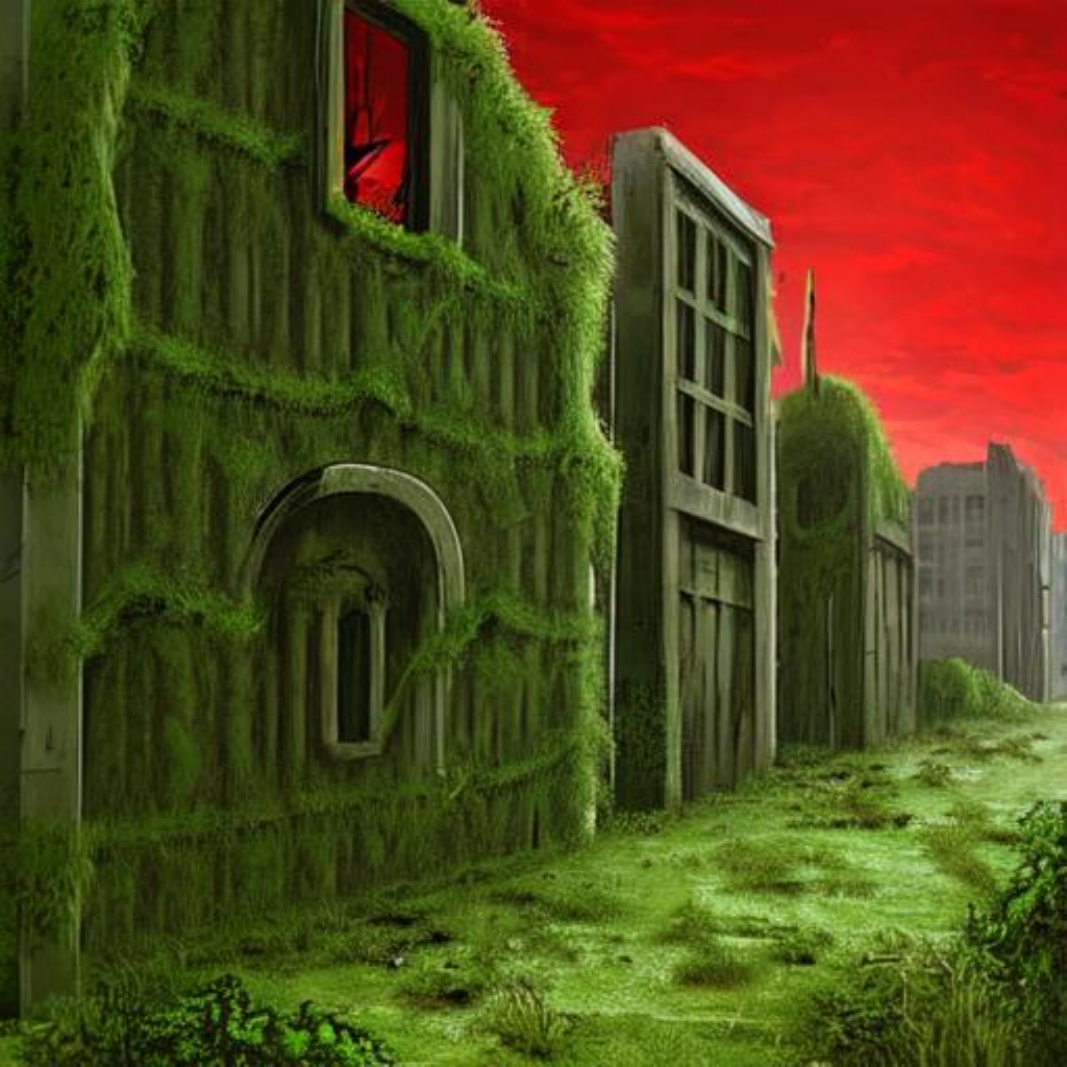}} & 
        \noindent\parbox[c]{0.14\columnwidth}{\includegraphics[width=0.14\columnwidth]{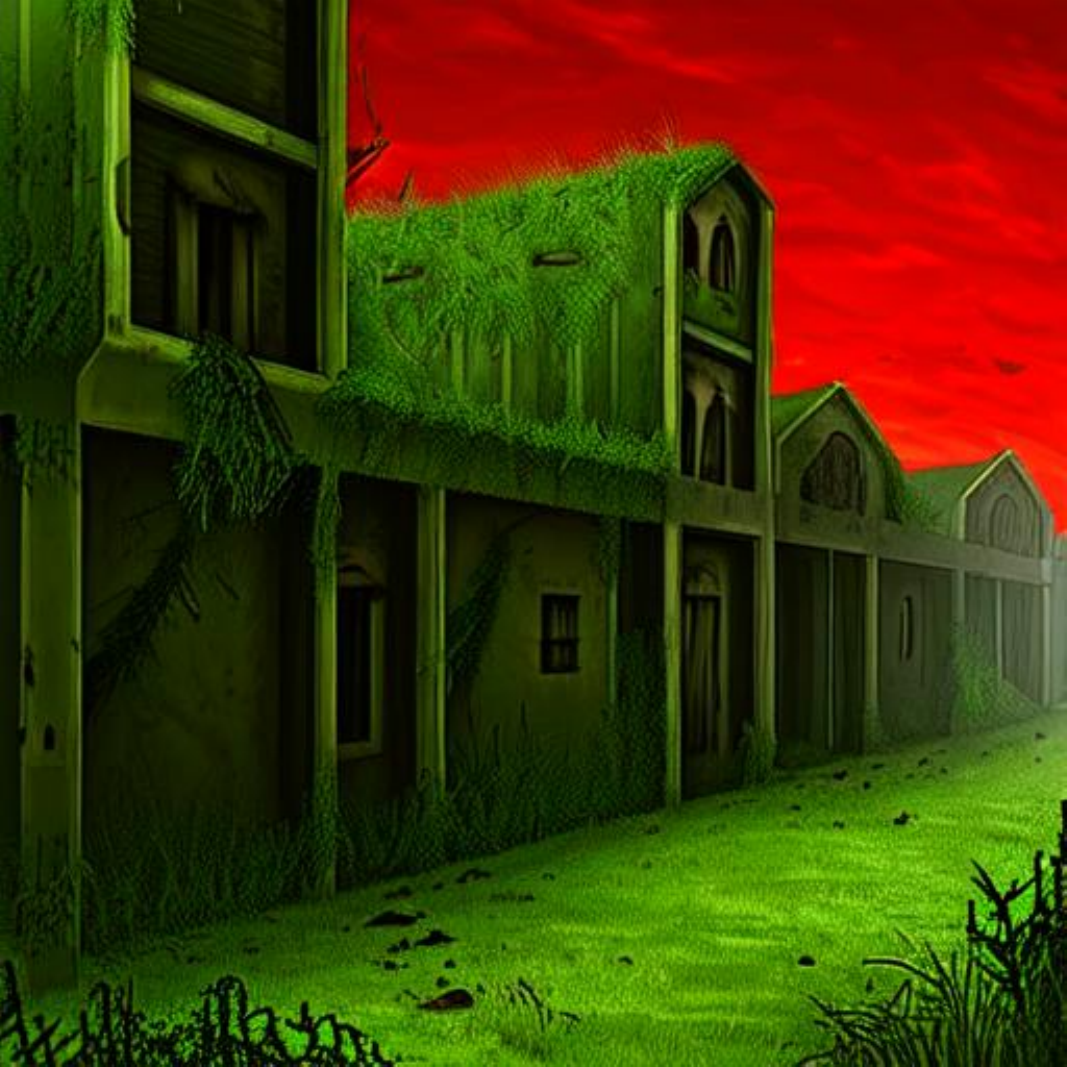}} & 
        \noindent\parbox[c]{0.14\columnwidth}{\includegraphics[width=0.14\columnwidth]{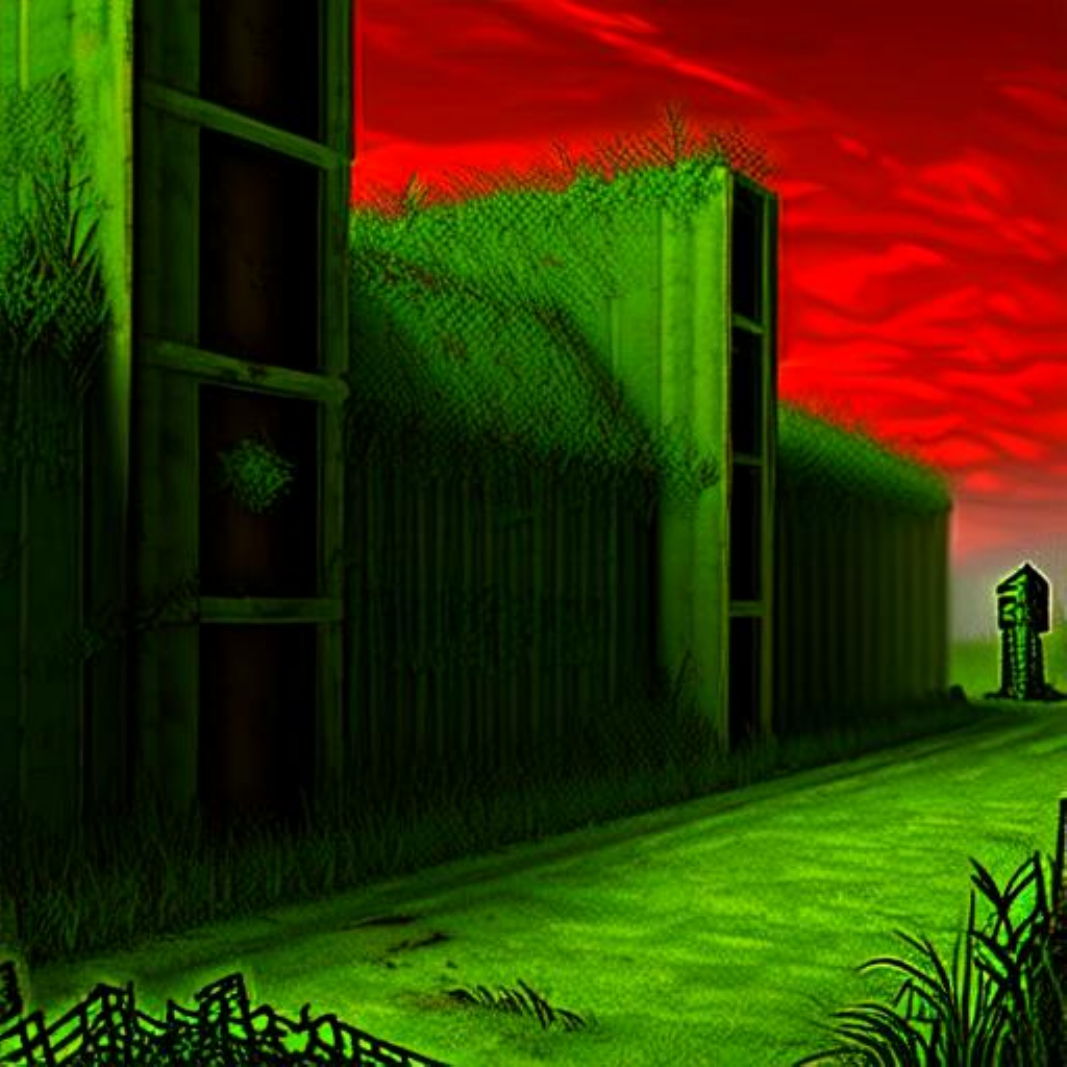}} & 
        \noindent\parbox[c]{0.14\columnwidth}{\includegraphics[width=0.14\columnwidth]{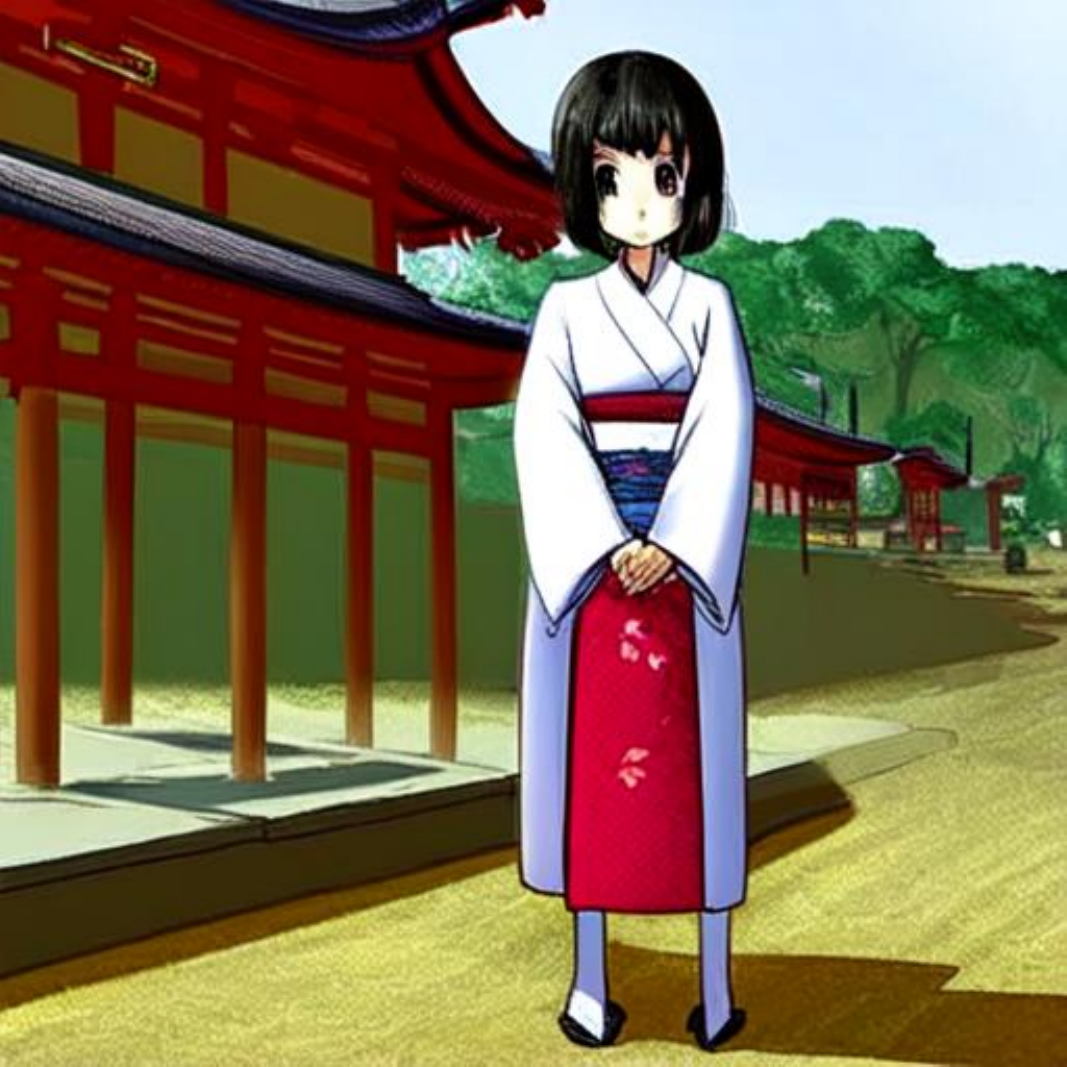}} & 
        \noindent\parbox[c]{0.14\columnwidth}{\includegraphics[width=0.14\columnwidth]{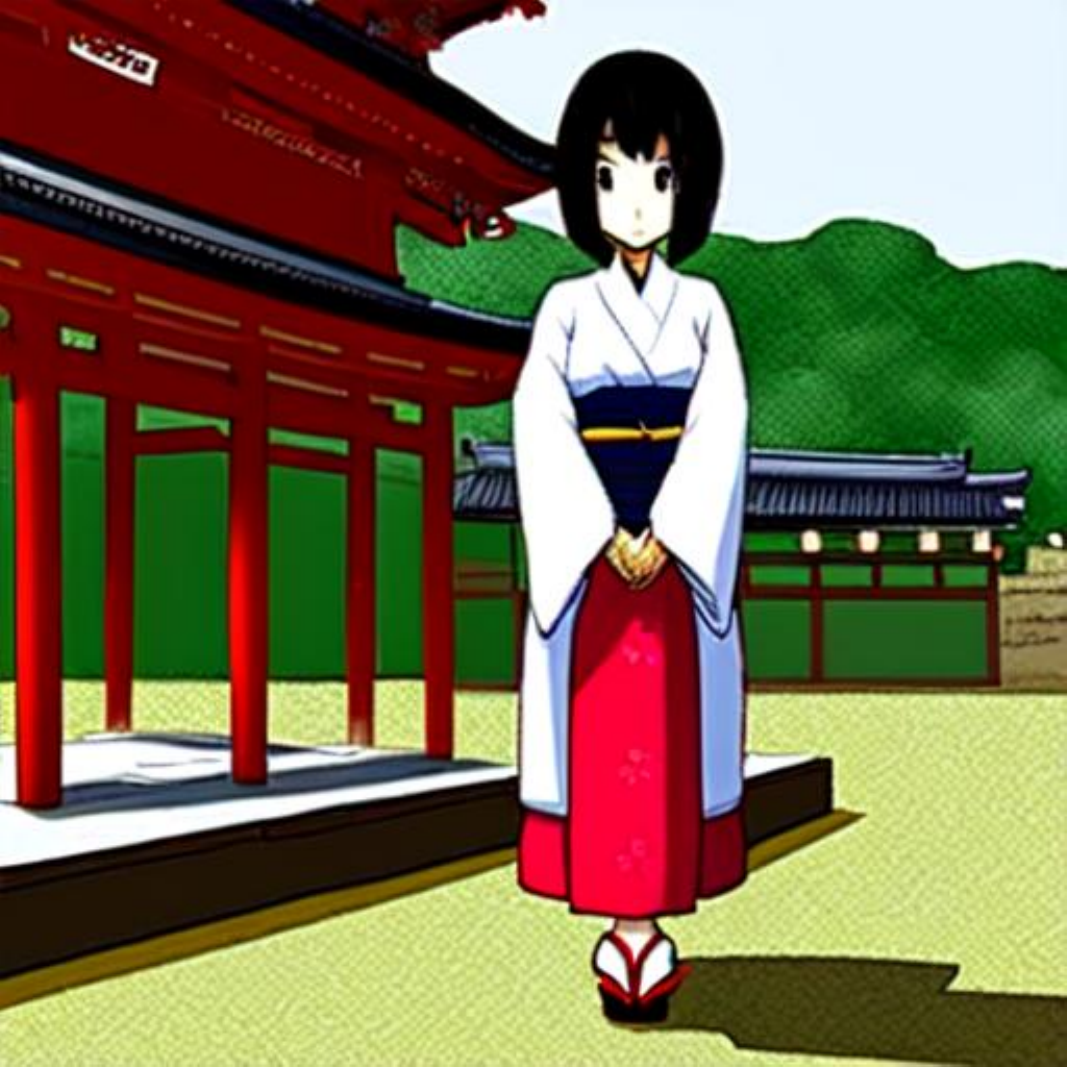}} & 
        \noindent\parbox[c]{0.14\columnwidth}{\includegraphics[width=0.14\columnwidth]{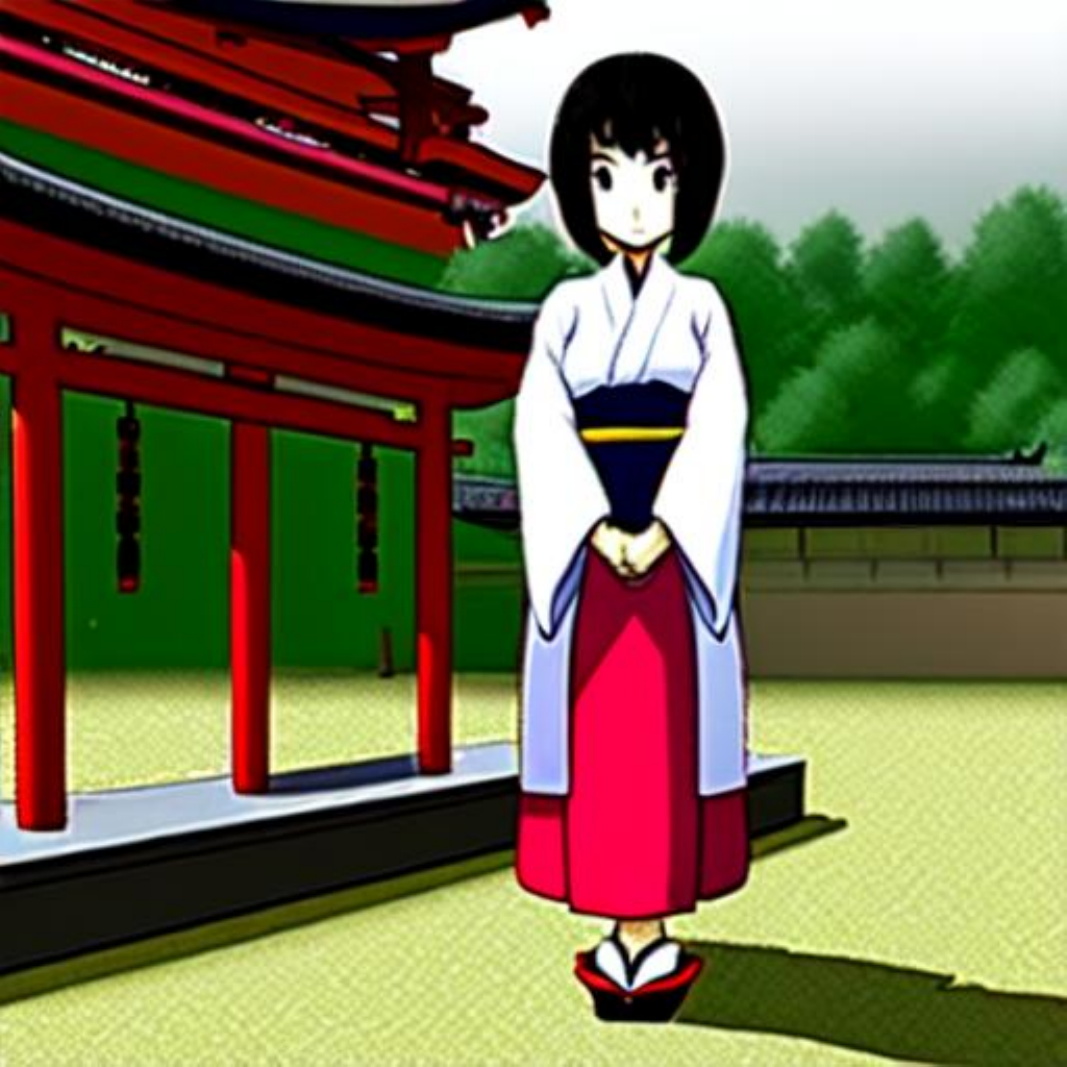}} \\
    \end{tabu}
    \caption{Comparison of samples generated from Waifu Diffusion V1.4 using PLMS4 with HB $\beta$ under various sampling steps and guidance scale $s$. Specifically, we employ $\beta = 0.8$ for $s = 7.5$, $\beta = 0.7$ for $s = 15$, and $\beta = 0.6$ for $s = 22.5$ to account for the varying degrees of artifact manifestation associated with each guidance scale.}
    \label{fig:scale_step_waifu_hb}
\end{figure}


\pagebreak

\tabulinesep=1pt
\begin{figure}
    \centering
    \begin{tabu} to \textwidth {@{}l@{\hspace{5pt}}c@{\hspace{2pt}}c@{\hspace{2pt}}c@{\hspace{4pt}}c@{\hspace{2pt}}c@{\hspace{2pt}}c@{}}
        & \multicolumn{3}{c}{\shortstack{\scriptsize "A post-apocalyptic world with ruined \\ \scriptsize buildings, overgrown vegetation, and a red sky"}}
        & \multicolumn{3}{c}{\shortstack{\scriptsize "A girl standing in a park in \\ \scriptsize Japanese animation style"}} \\

        & \multicolumn{1}{c}{\shortstack{\scriptsize $s = 7.5$}}
        & \multicolumn{1}{c}{\shortstack{\scriptsize $s = 15$}}
        & \multicolumn{1}{c}{\shortstack{\scriptsize $s = 22.5$}}
        & \multicolumn{1}{c}{\shortstack{\scriptsize $s = 7.5$}}
        & \multicolumn{1}{c}{\shortstack{\scriptsize $s = 15$}}
        & \multicolumn{1}{c}{\shortstack{\scriptsize $s = 22.5$}}
        \\
        
        \shortstack[l]{\tiny 10 steps} &
        \noindent\parbox[c]{0.14\columnwidth}{\includegraphics[width=0.14\columnwidth]{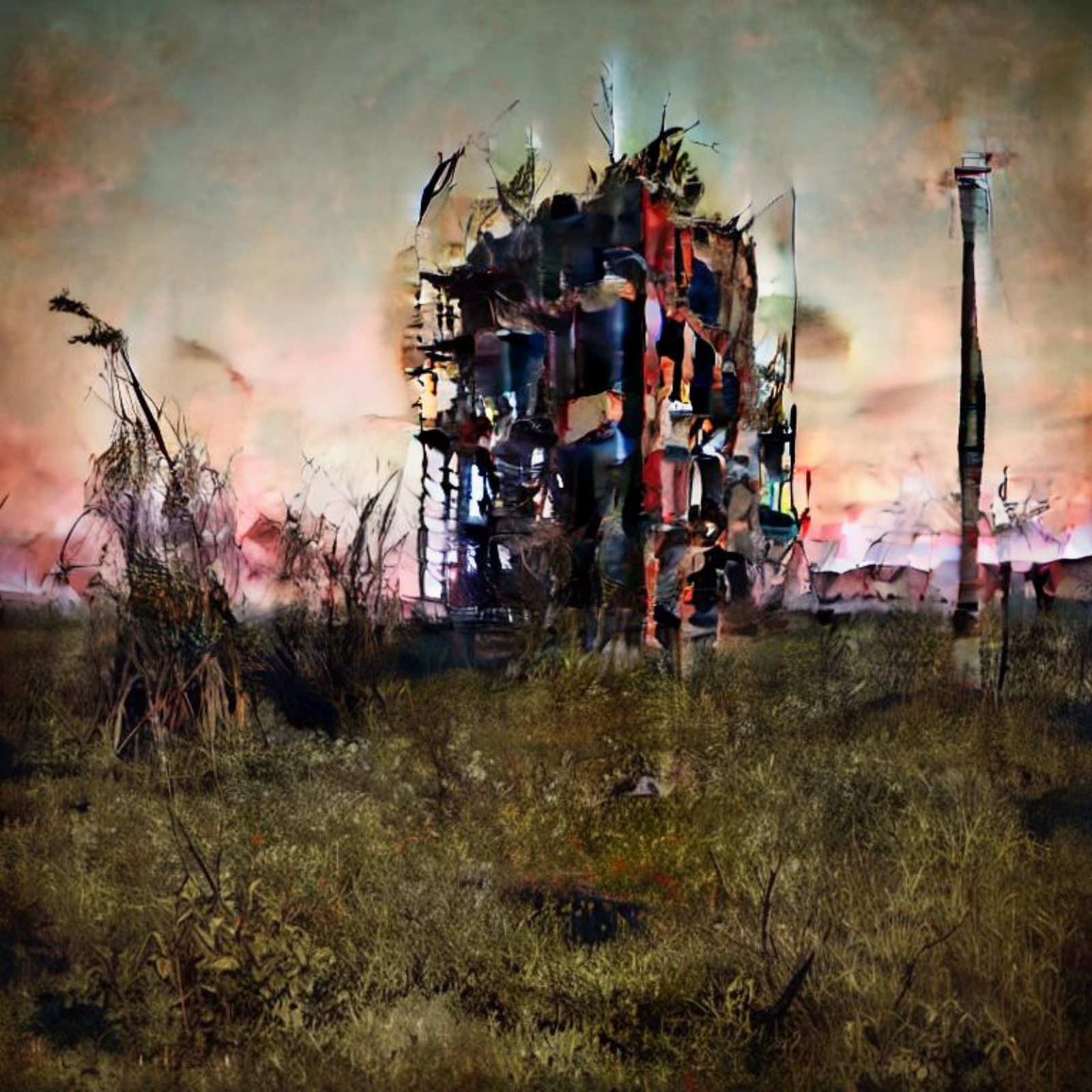}} & 
        \noindent\parbox[c]{0.14\columnwidth}{\includegraphics[width=0.14\columnwidth]{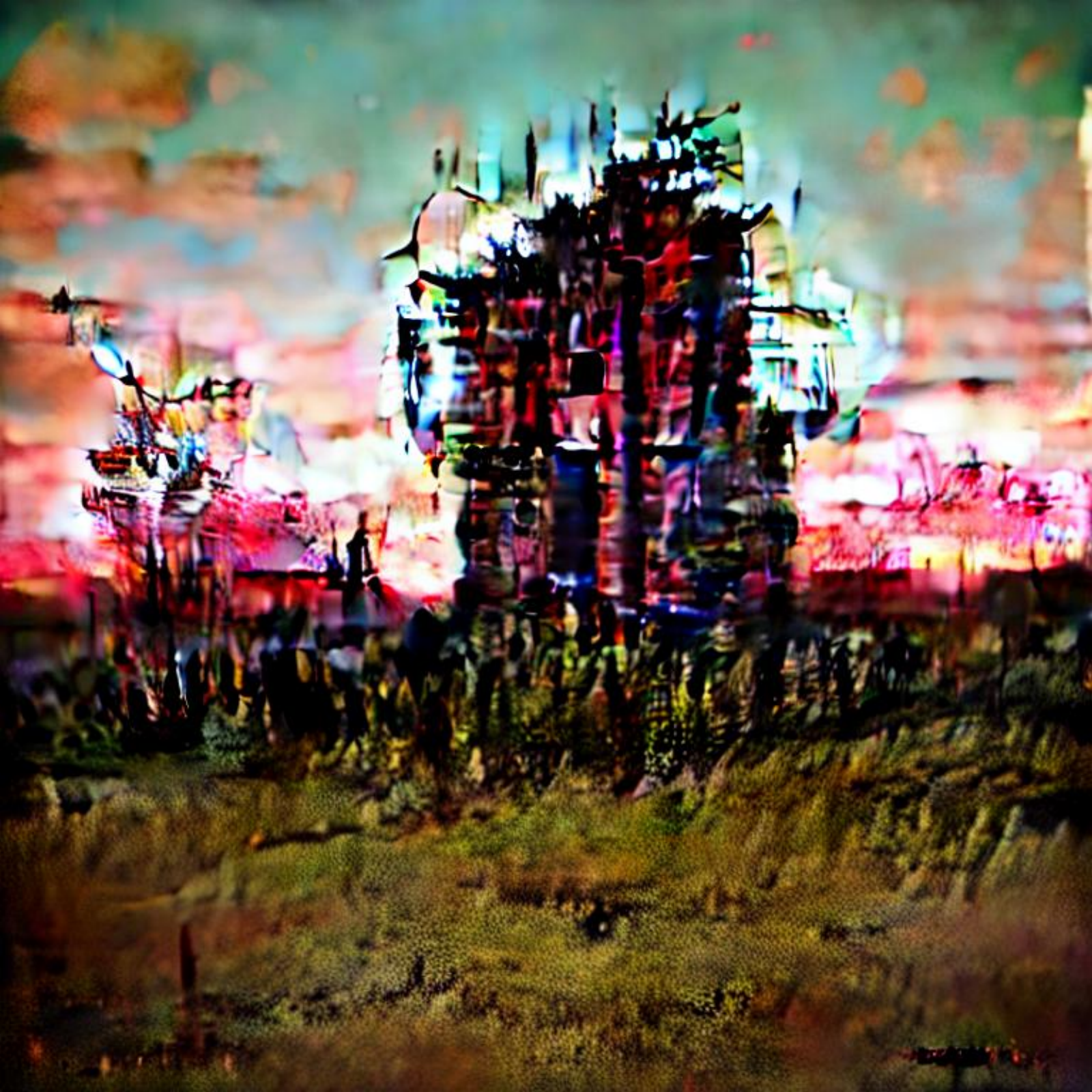}} & 
        \noindent\parbox[c]{0.14\columnwidth}{\includegraphics[width=0.14\columnwidth]{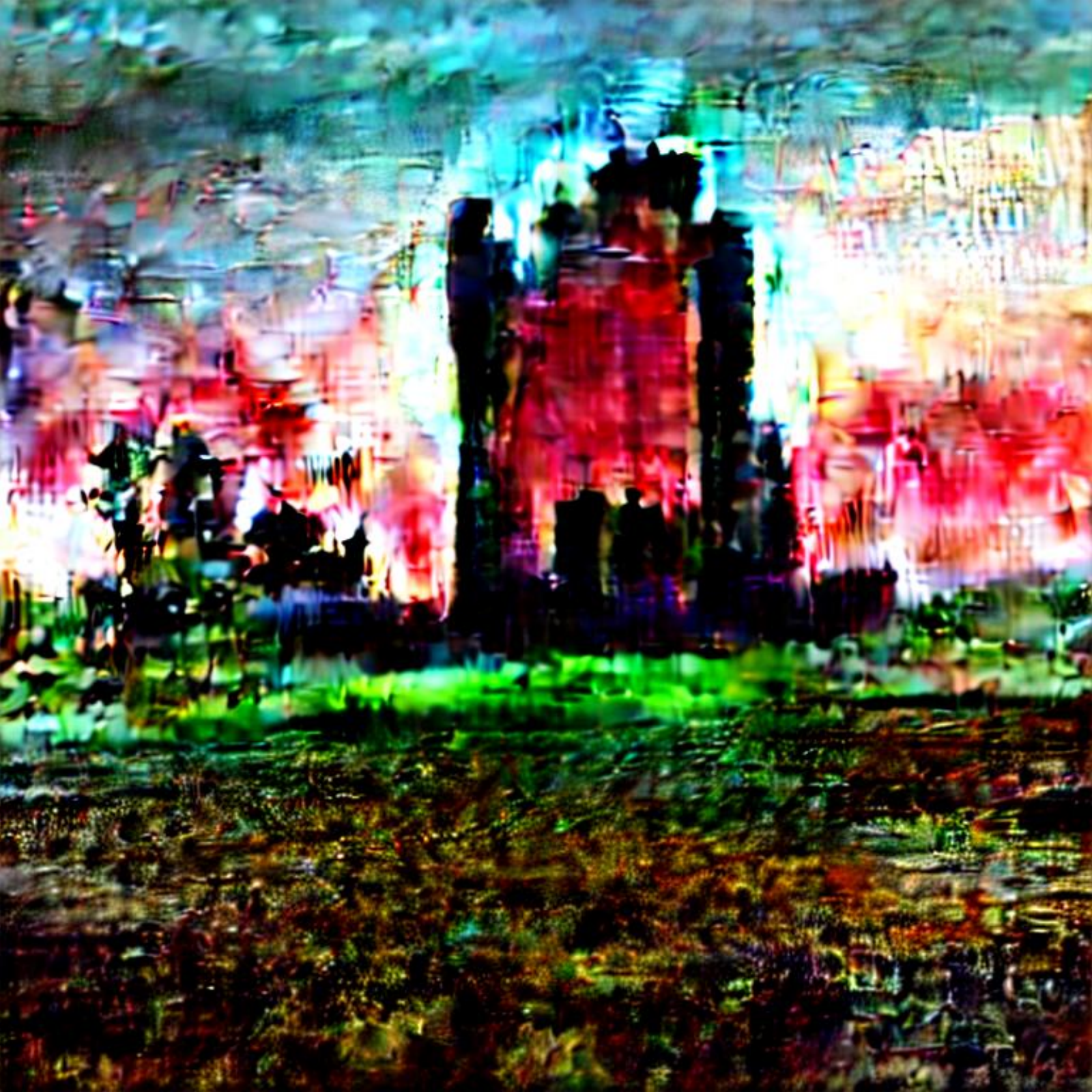}} & 
        \noindent\parbox[c]{0.14\columnwidth}{\includegraphics[width=0.14\columnwidth]{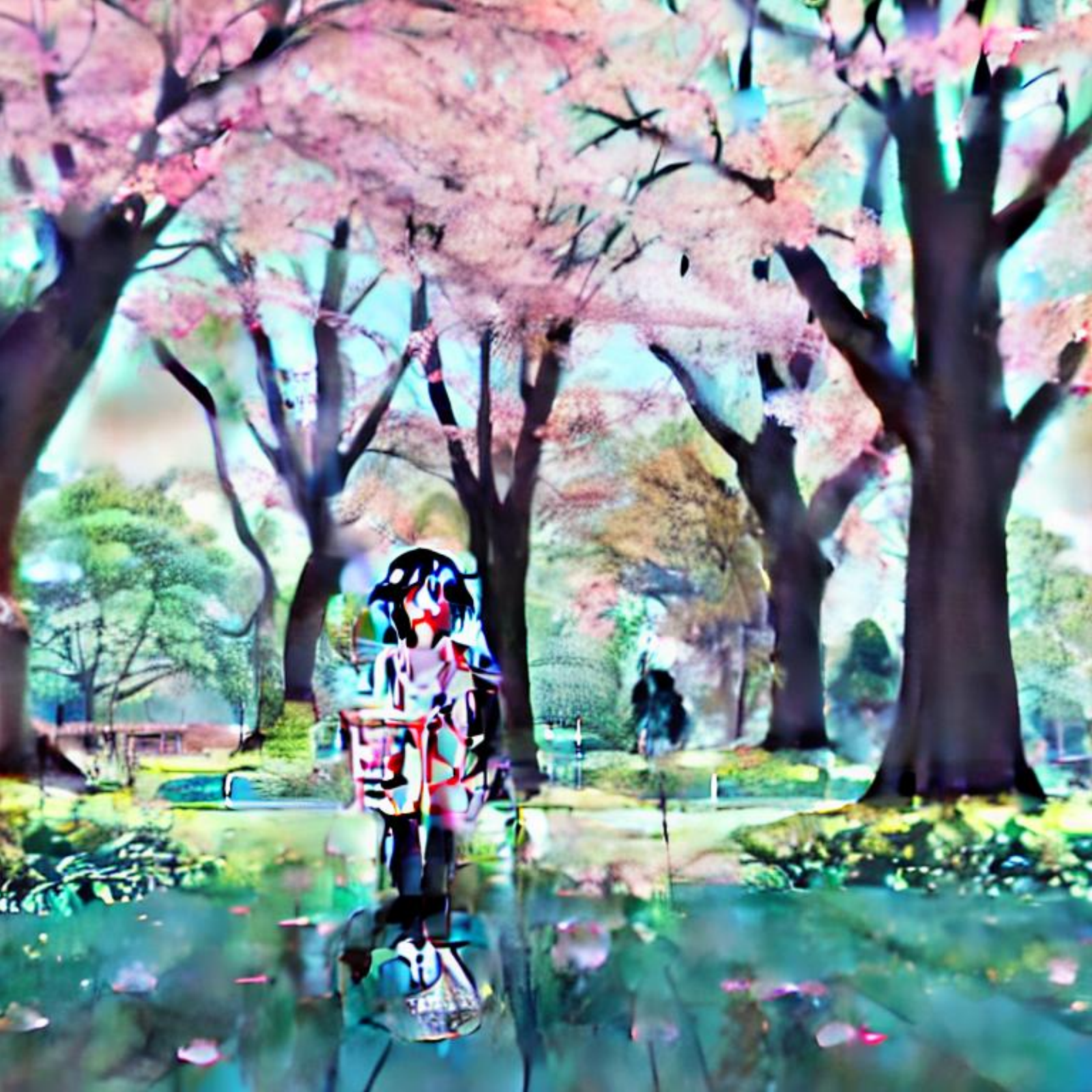}} & 
        \noindent\parbox[c]{0.14\columnwidth}{\includegraphics[width=0.14\columnwidth]{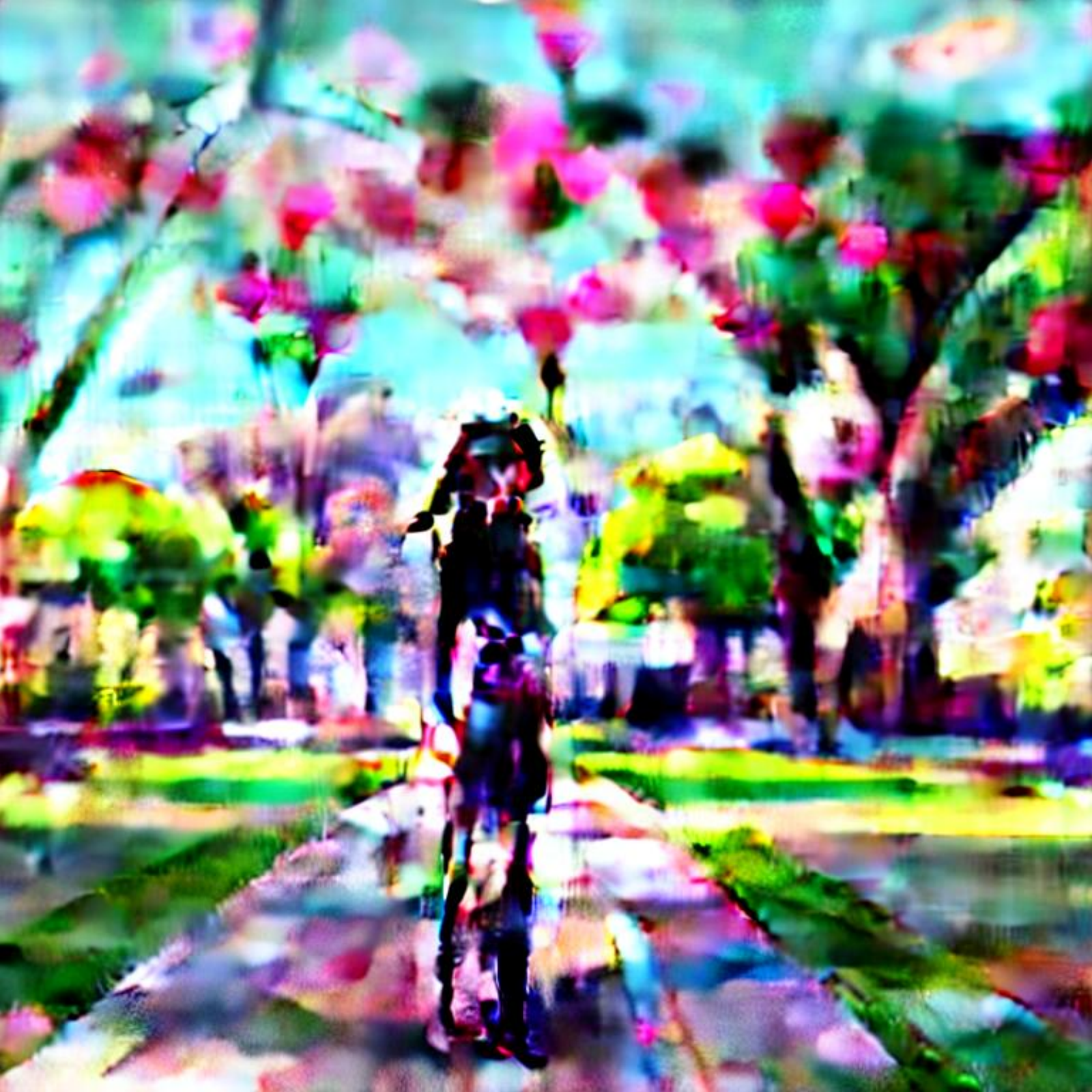}} & 
        \noindent\parbox[c]{0.14\columnwidth}{\includegraphics[width=0.14\columnwidth]{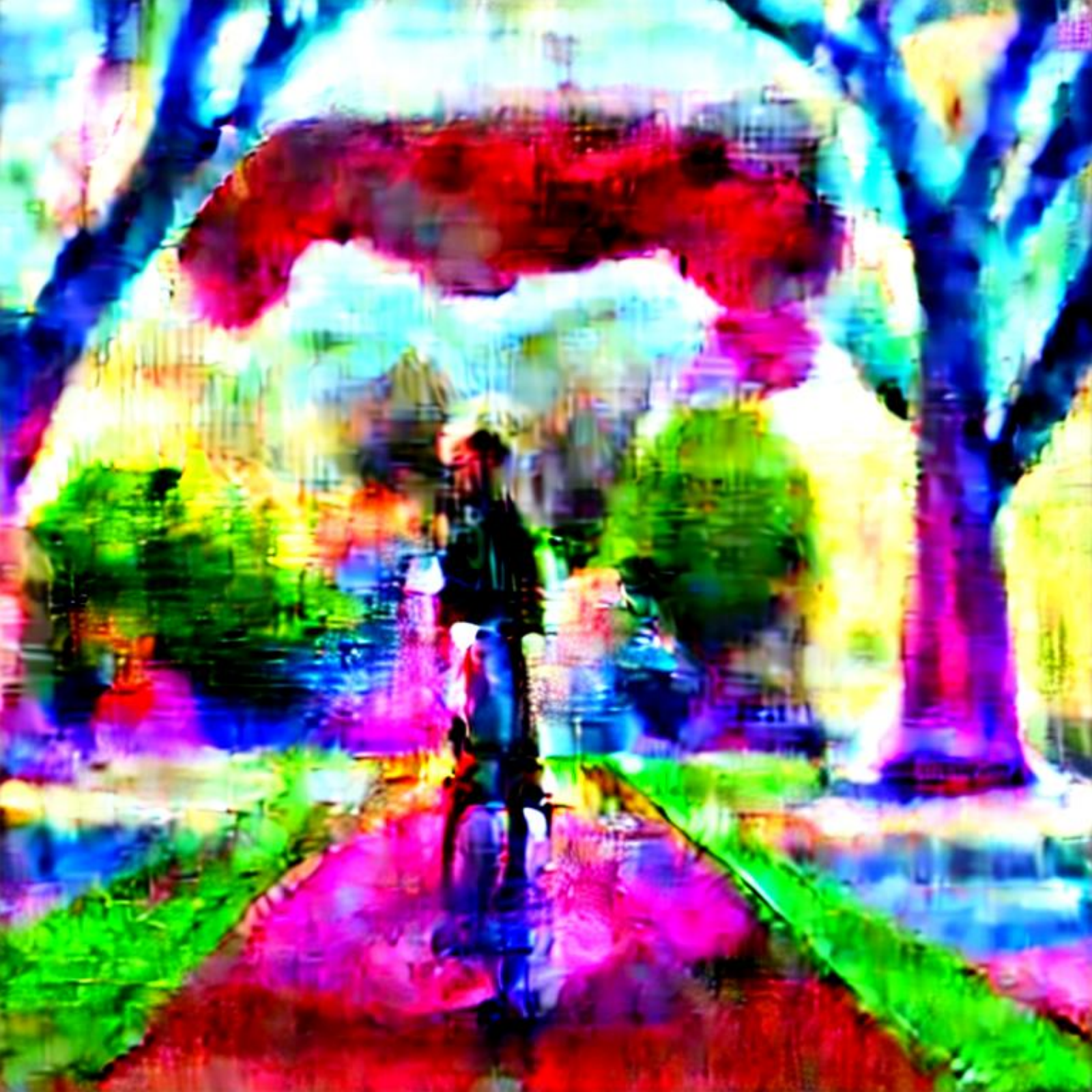}} \\

        \shortstack[l]{\tiny 15 steps} &
        \noindent\parbox[c]{0.14\columnwidth}{\includegraphics[width=0.14\columnwidth]{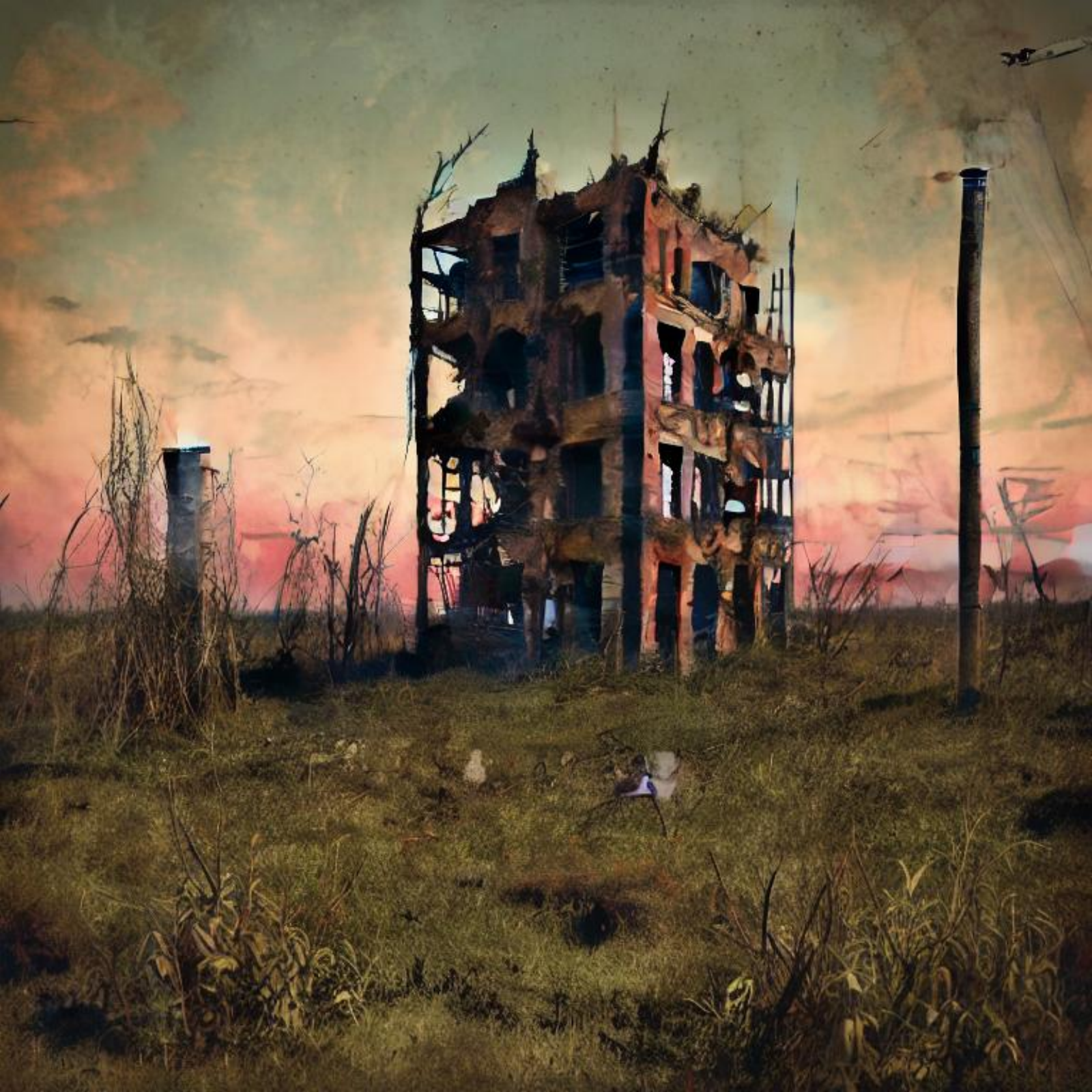}} & 
        \noindent\parbox[c]{0.14\columnwidth}{\includegraphics[width=0.14\columnwidth]{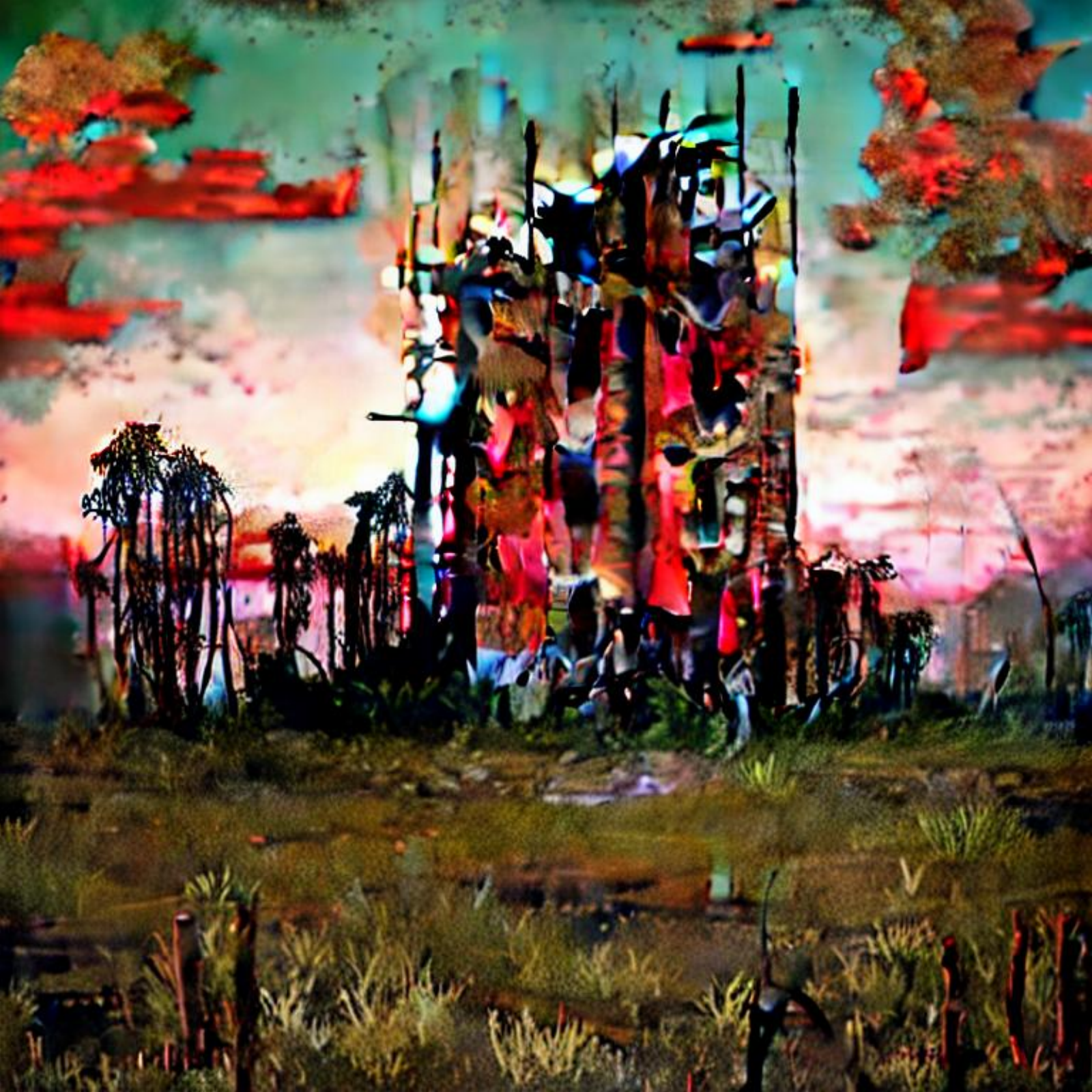}} & 
        \noindent\parbox[c]{0.14\columnwidth}{\includegraphics[width=0.14\columnwidth]{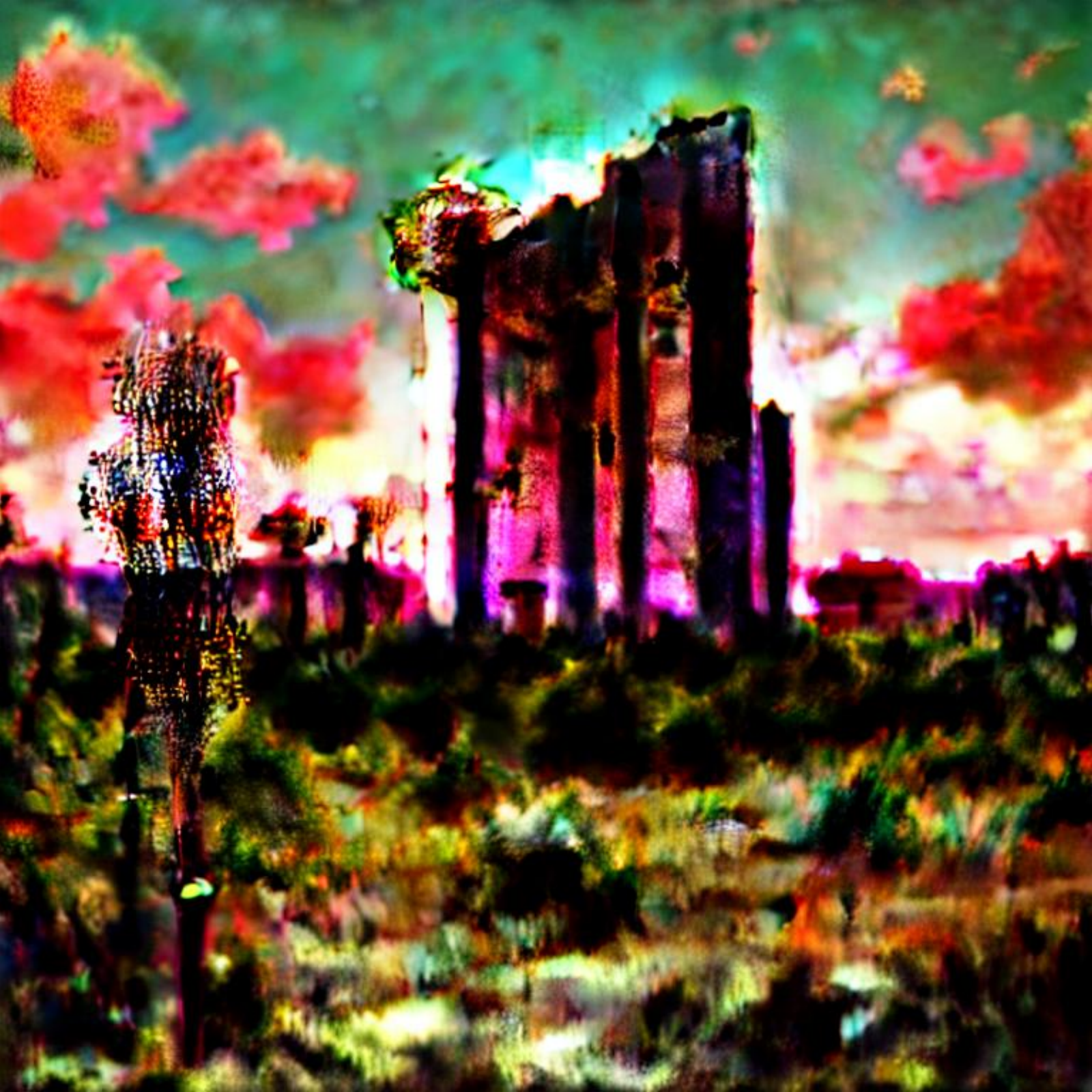}} & 
        \noindent\parbox[c]{0.14\columnwidth}{\includegraphics[width=0.14\columnwidth]{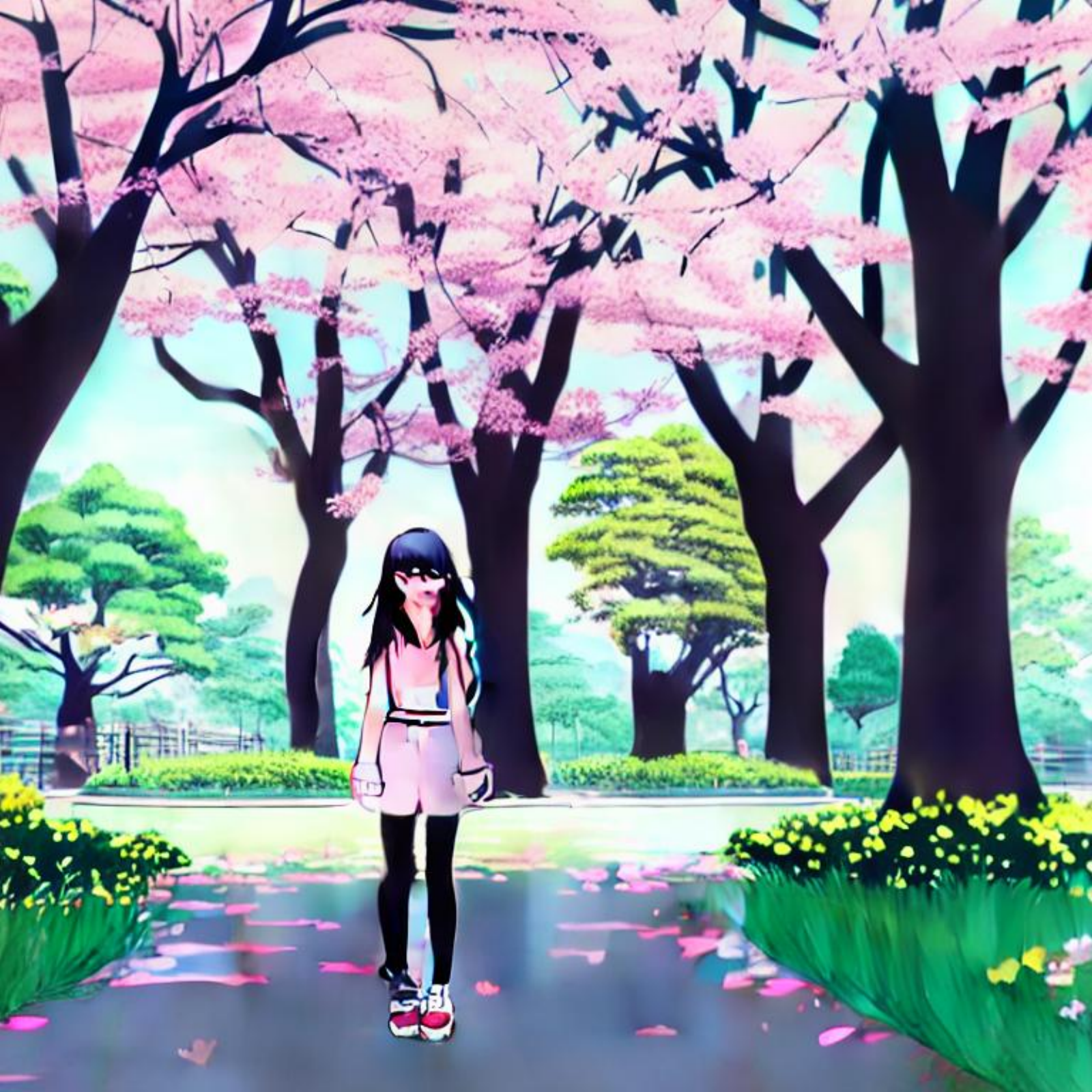}} & 
        \noindent\parbox[c]{0.14\columnwidth}{\includegraphics[width=0.14\columnwidth]{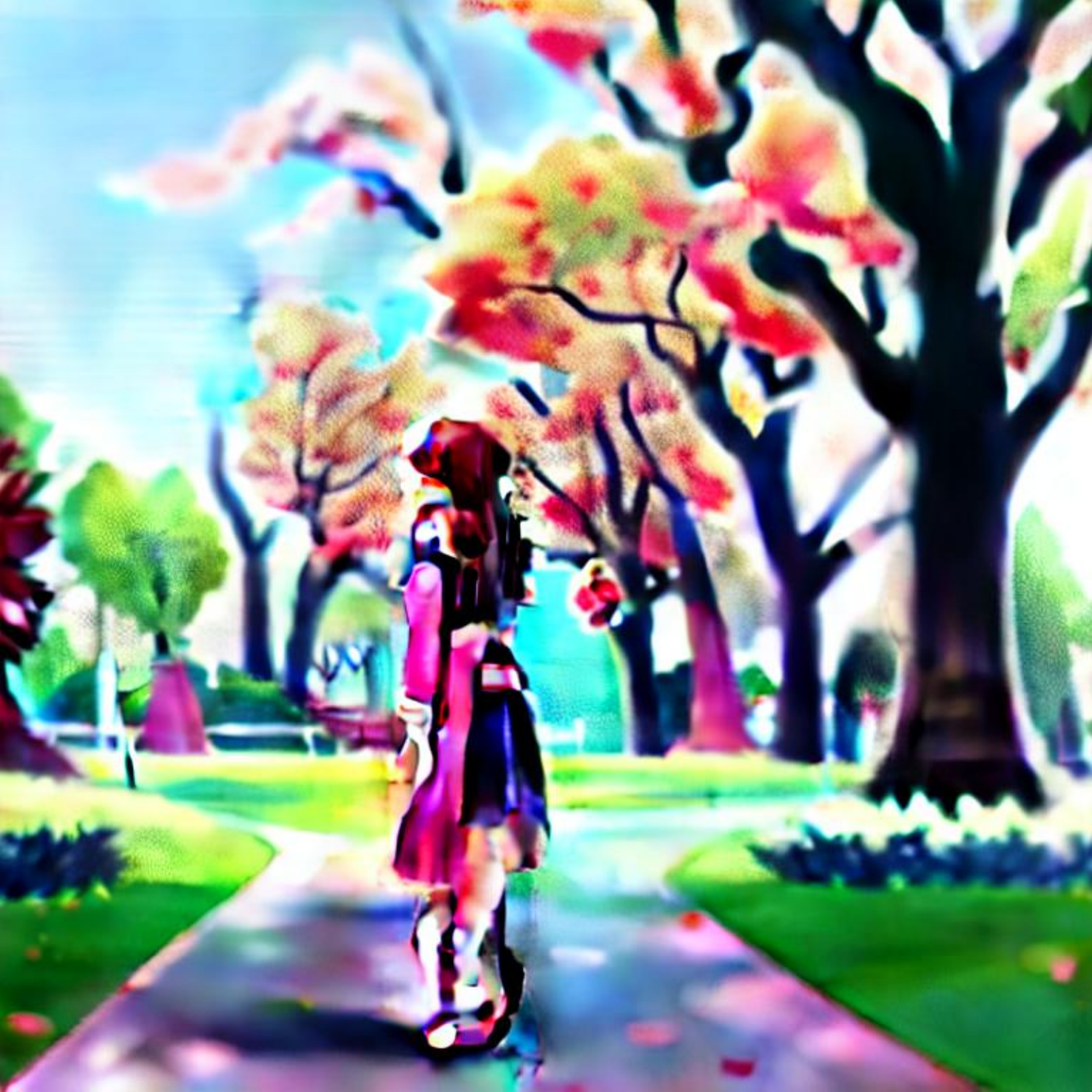}} & 
        \noindent\parbox[c]{0.14\columnwidth}{\includegraphics[width=0.14\columnwidth]{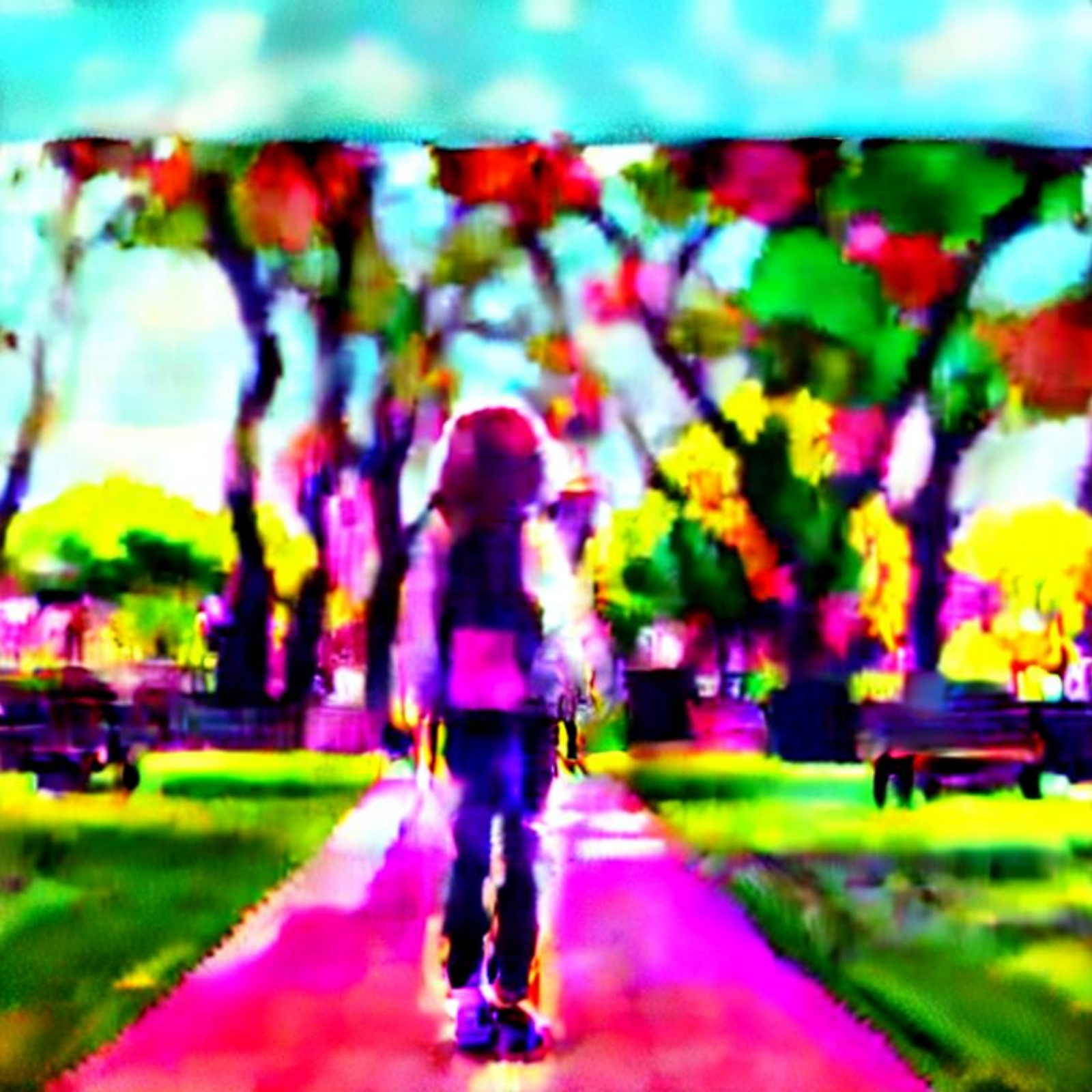}} \\

        \shortstack[l]{\tiny 20 steps} &
        \noindent\parbox[c]{0.14\columnwidth}{\includegraphics[width=0.14\columnwidth]{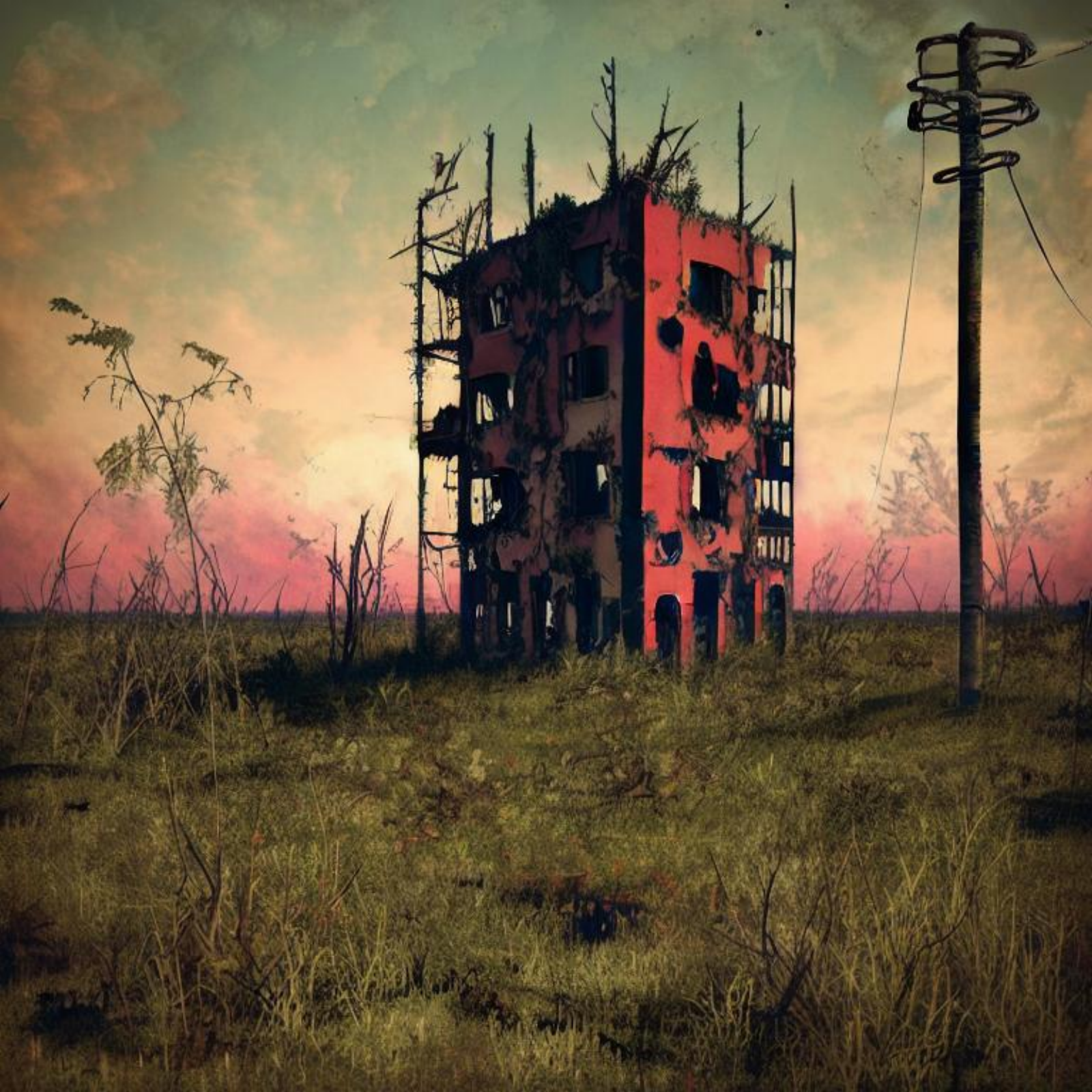}} & 
        \noindent\parbox[c]{0.14\columnwidth}{\includegraphics[width=0.14\columnwidth]{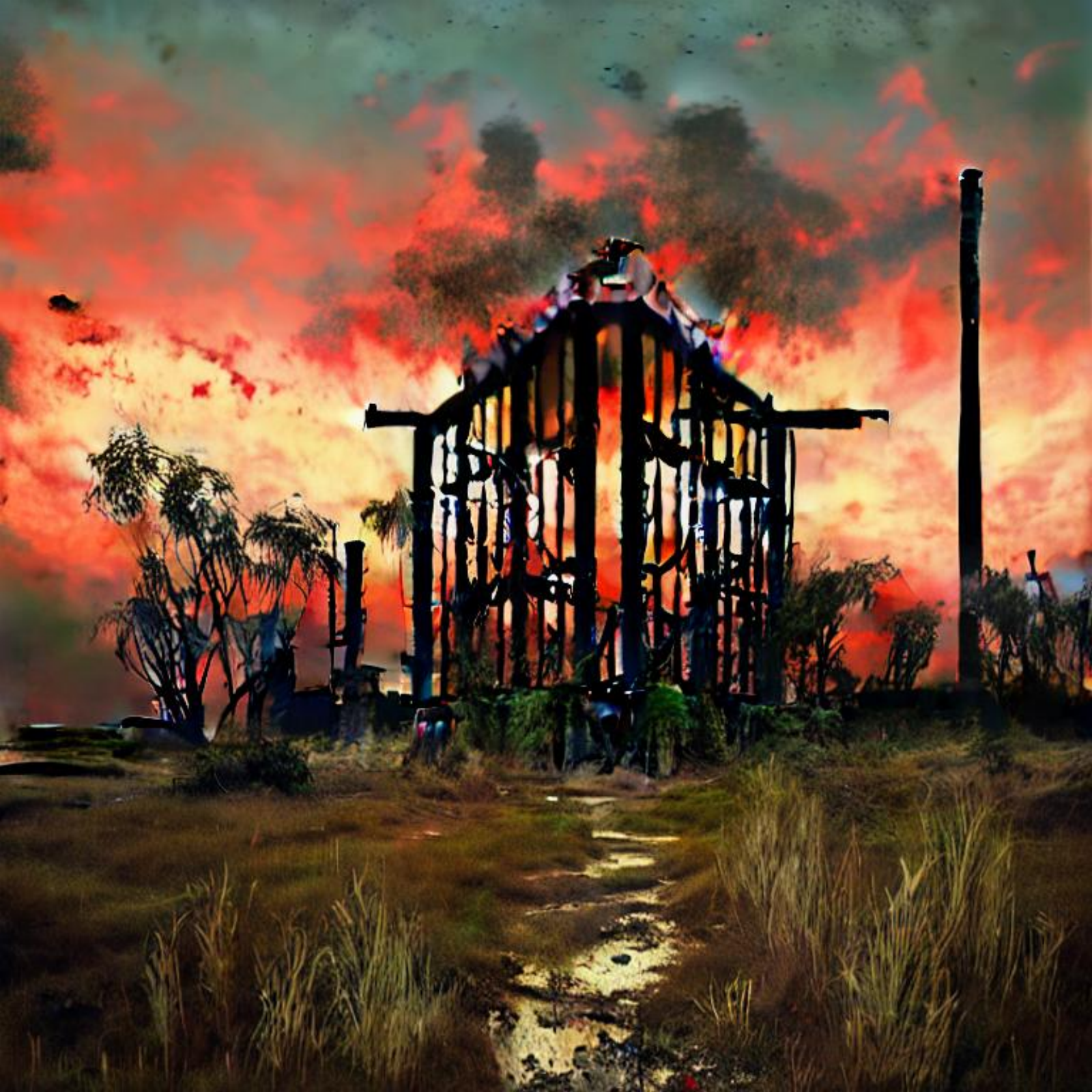}} & 
        \noindent\parbox[c]{0.14\columnwidth}{\includegraphics[width=0.14\columnwidth]{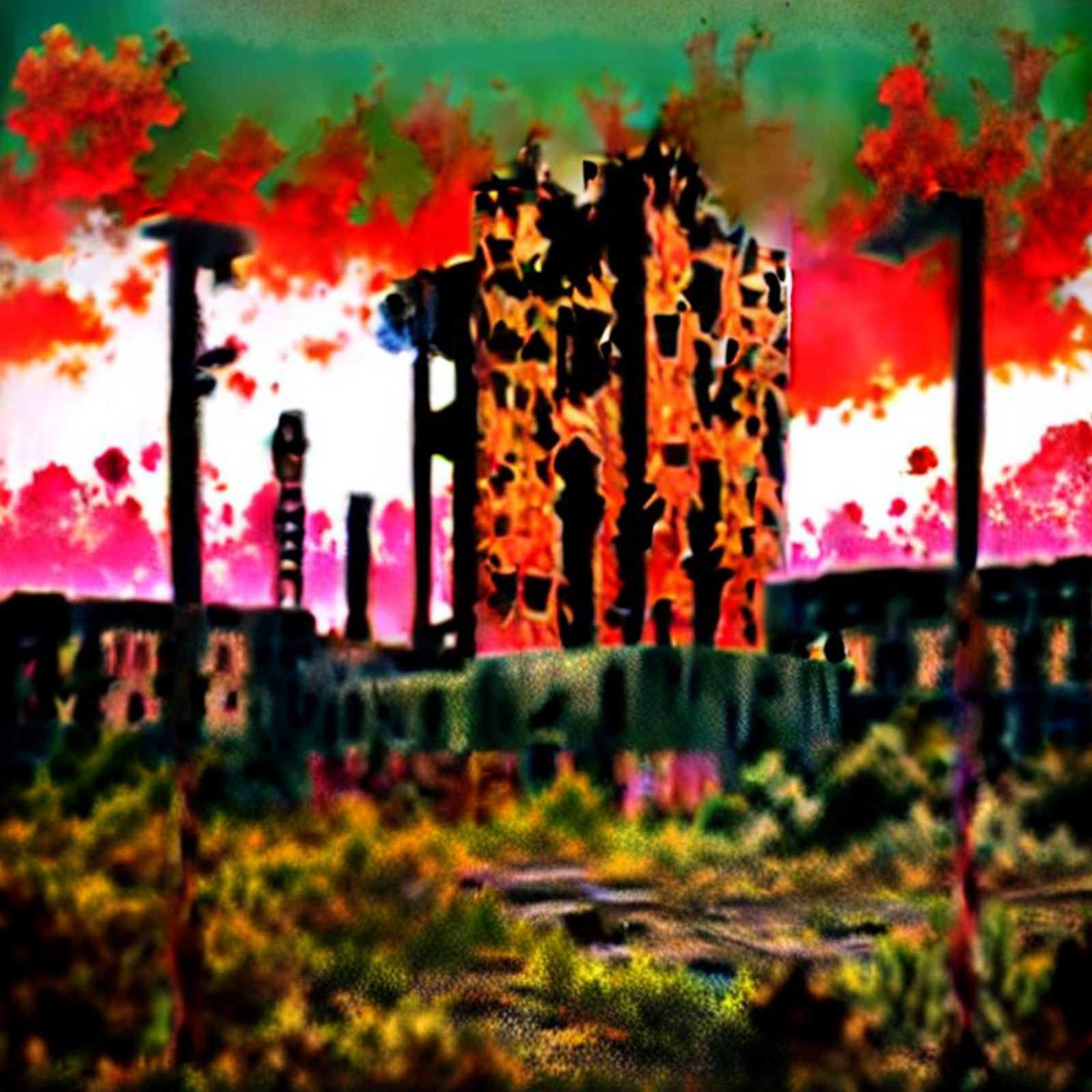}} & 
        \noindent\parbox[c]{0.14\columnwidth}{\includegraphics[width=0.14\columnwidth]{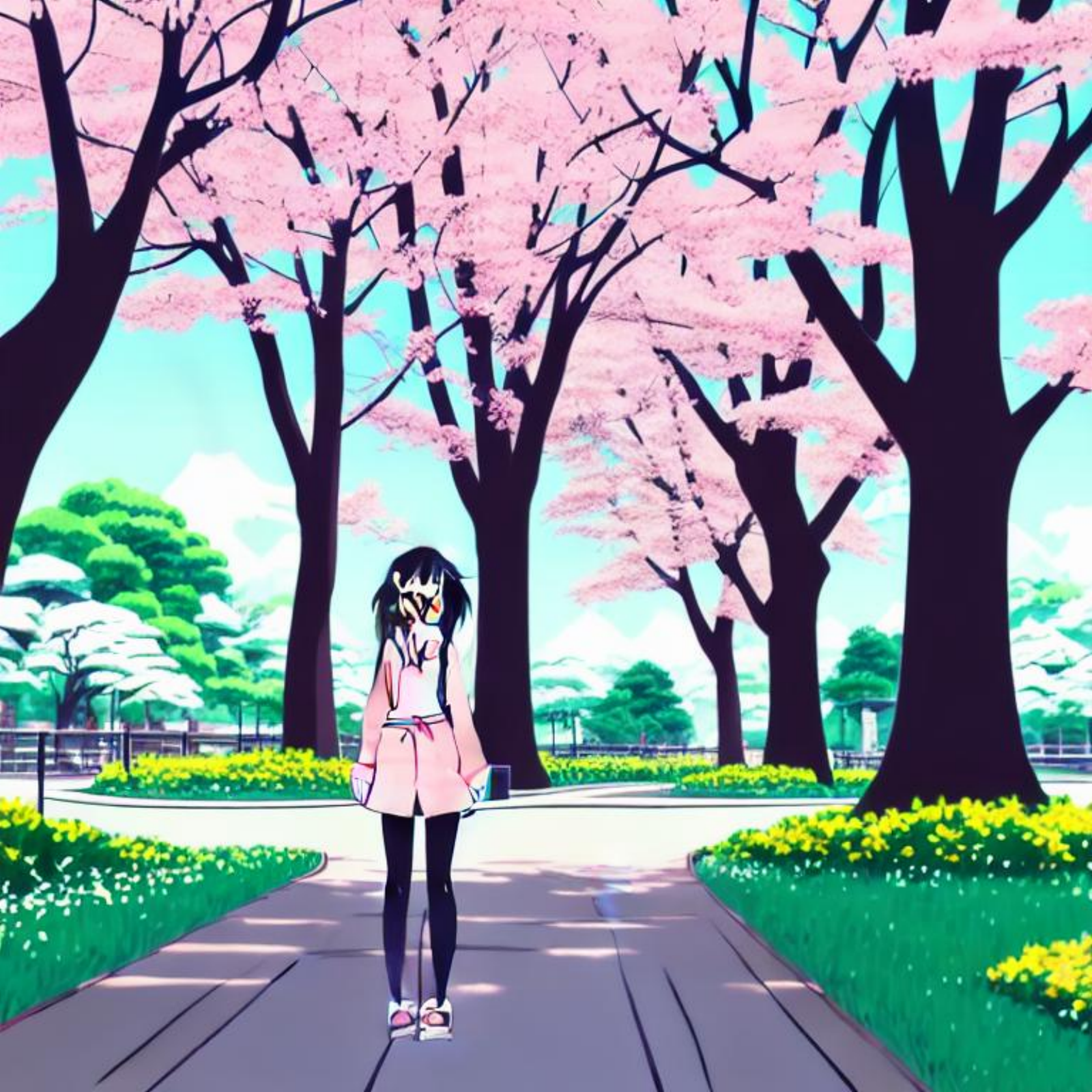}} & 
        \noindent\parbox[c]{0.14\columnwidth}{\includegraphics[width=0.14\columnwidth]{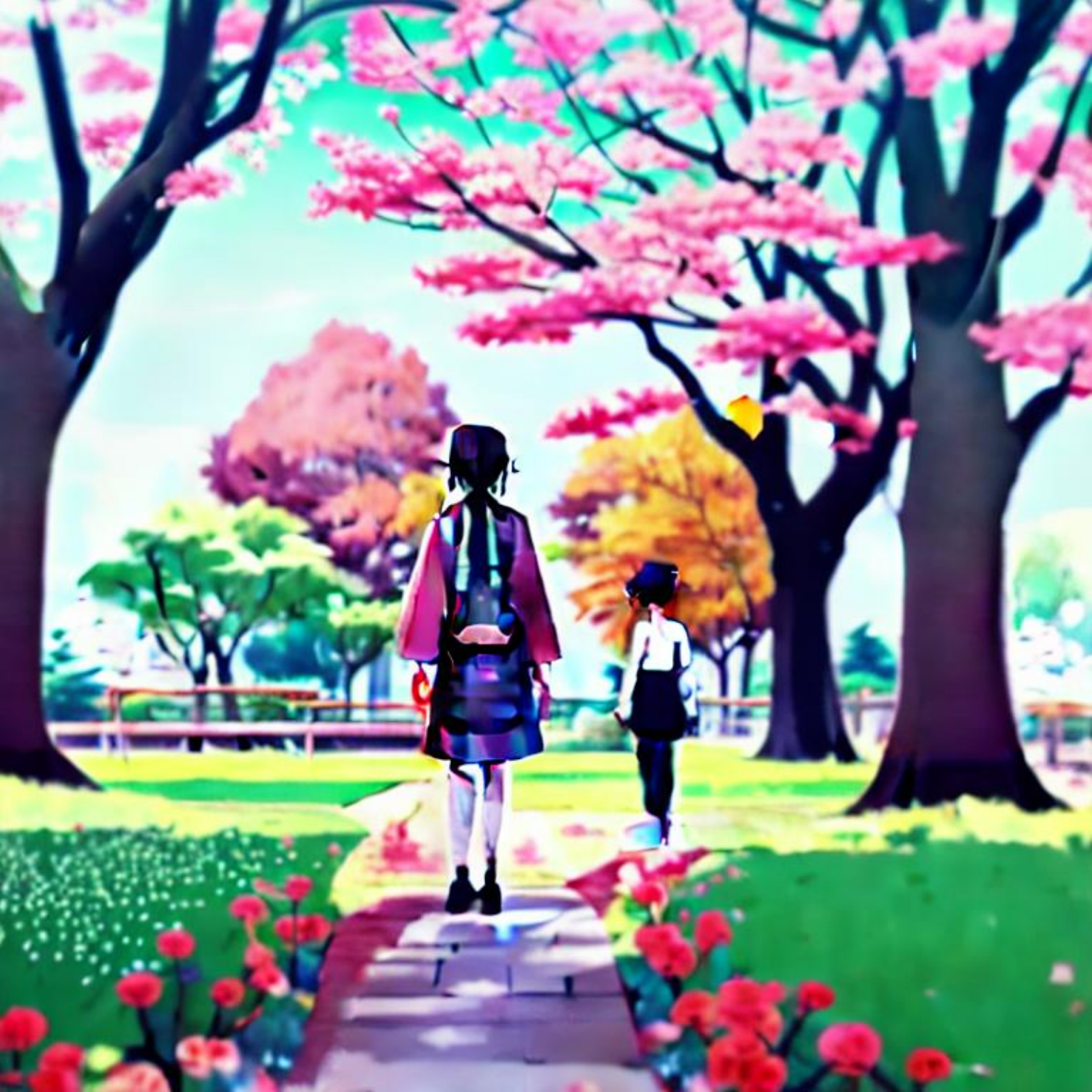}} & 
        \noindent\parbox[c]{0.14\columnwidth}{\includegraphics[width=0.14\columnwidth]{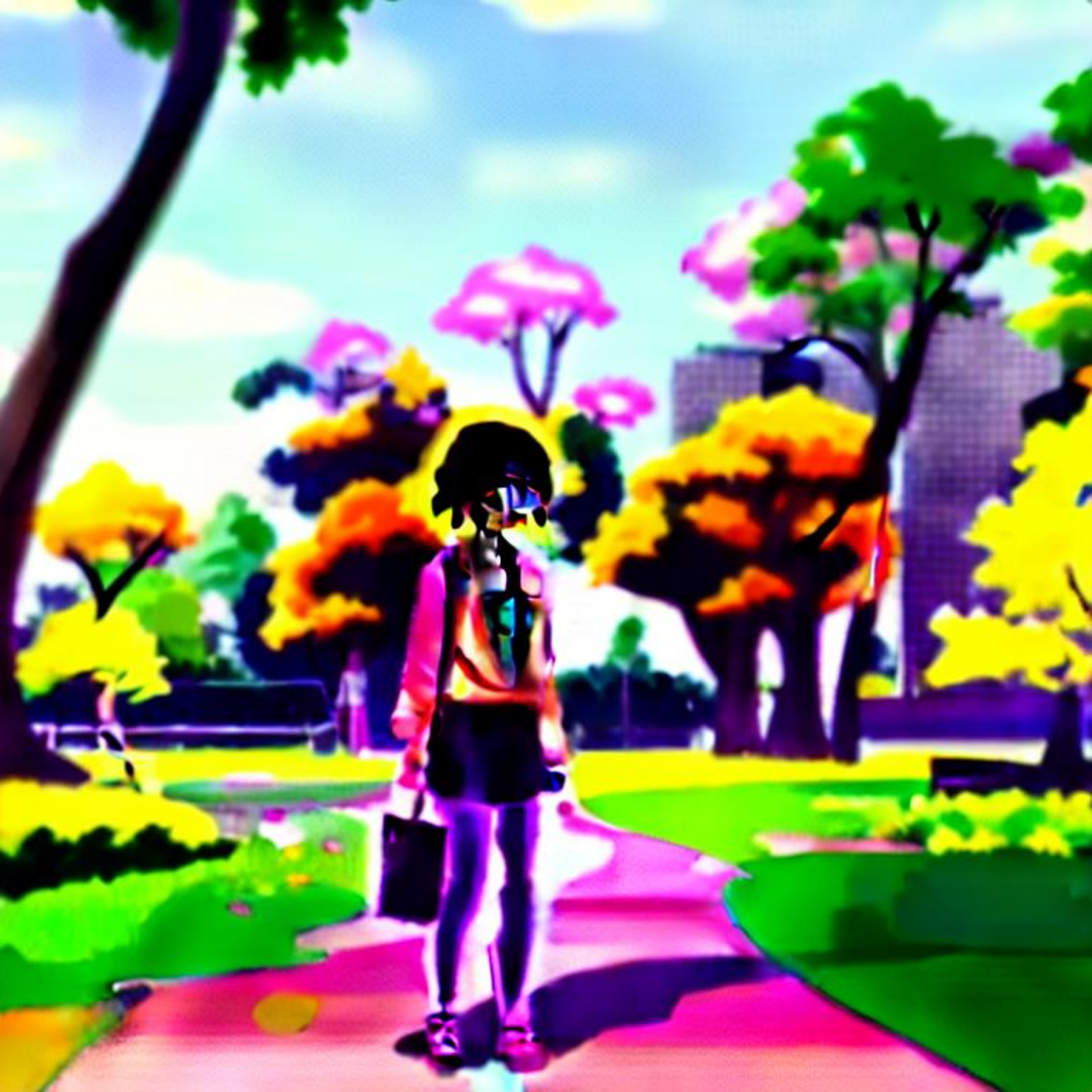}} \\

        \shortstack[l]{\tiny 40 steps} &
        \noindent\parbox[c]{0.14\columnwidth}{\includegraphics[width=0.14\columnwidth]{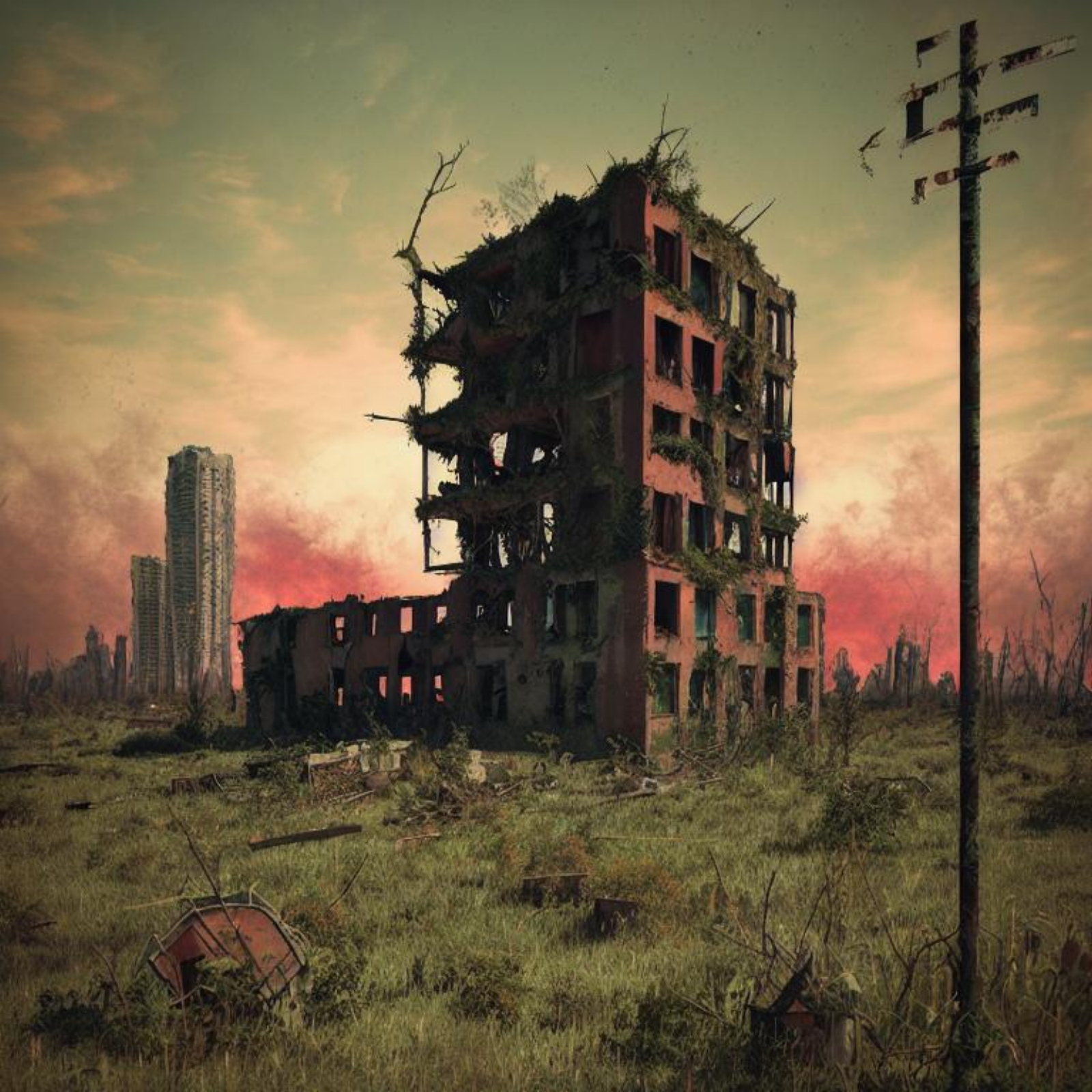}} & 
        \noindent\parbox[c]{0.14\columnwidth}{\includegraphics[width=0.14\columnwidth]{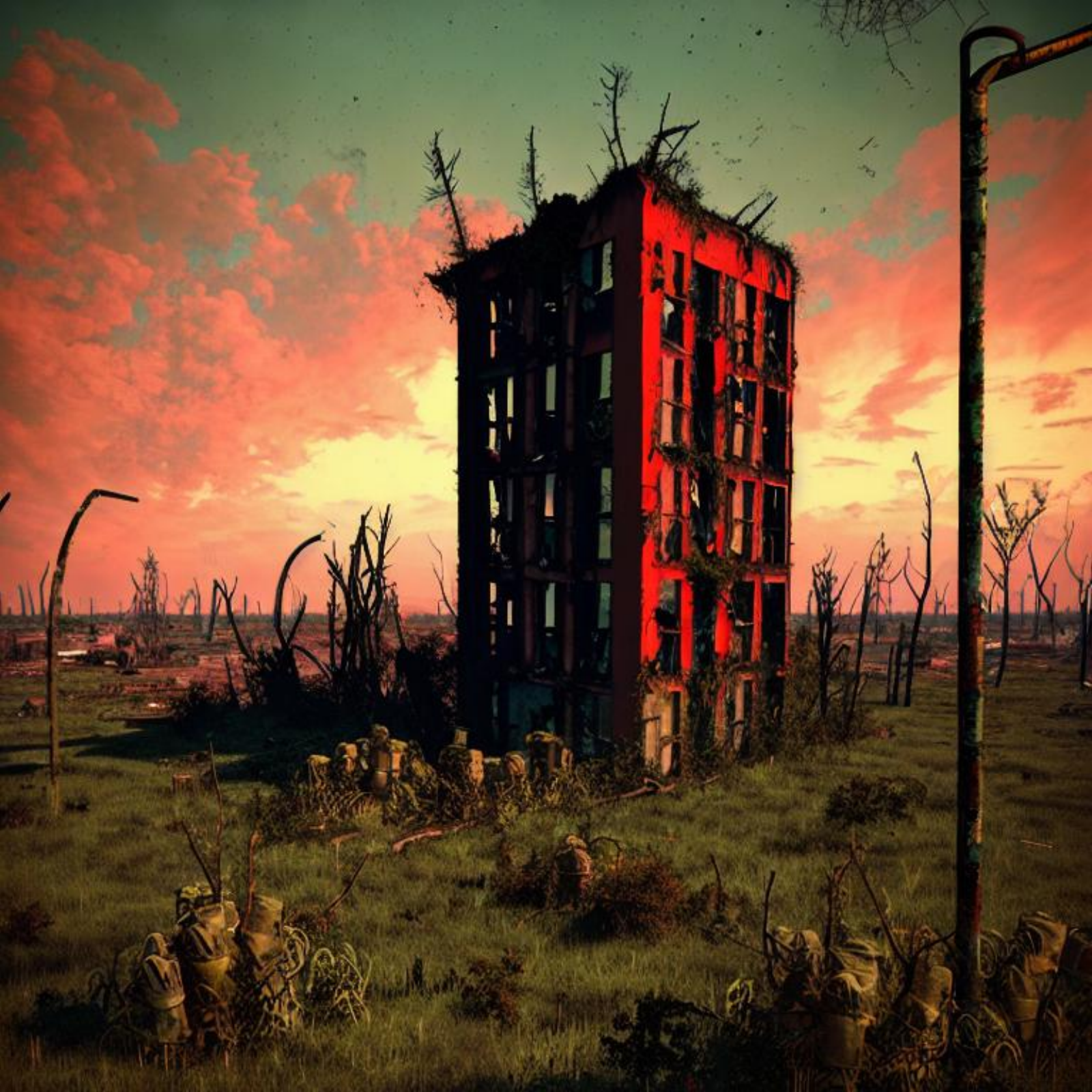}} & 
        \noindent\parbox[c]{0.14\columnwidth}{\includegraphics[width=0.14\columnwidth]{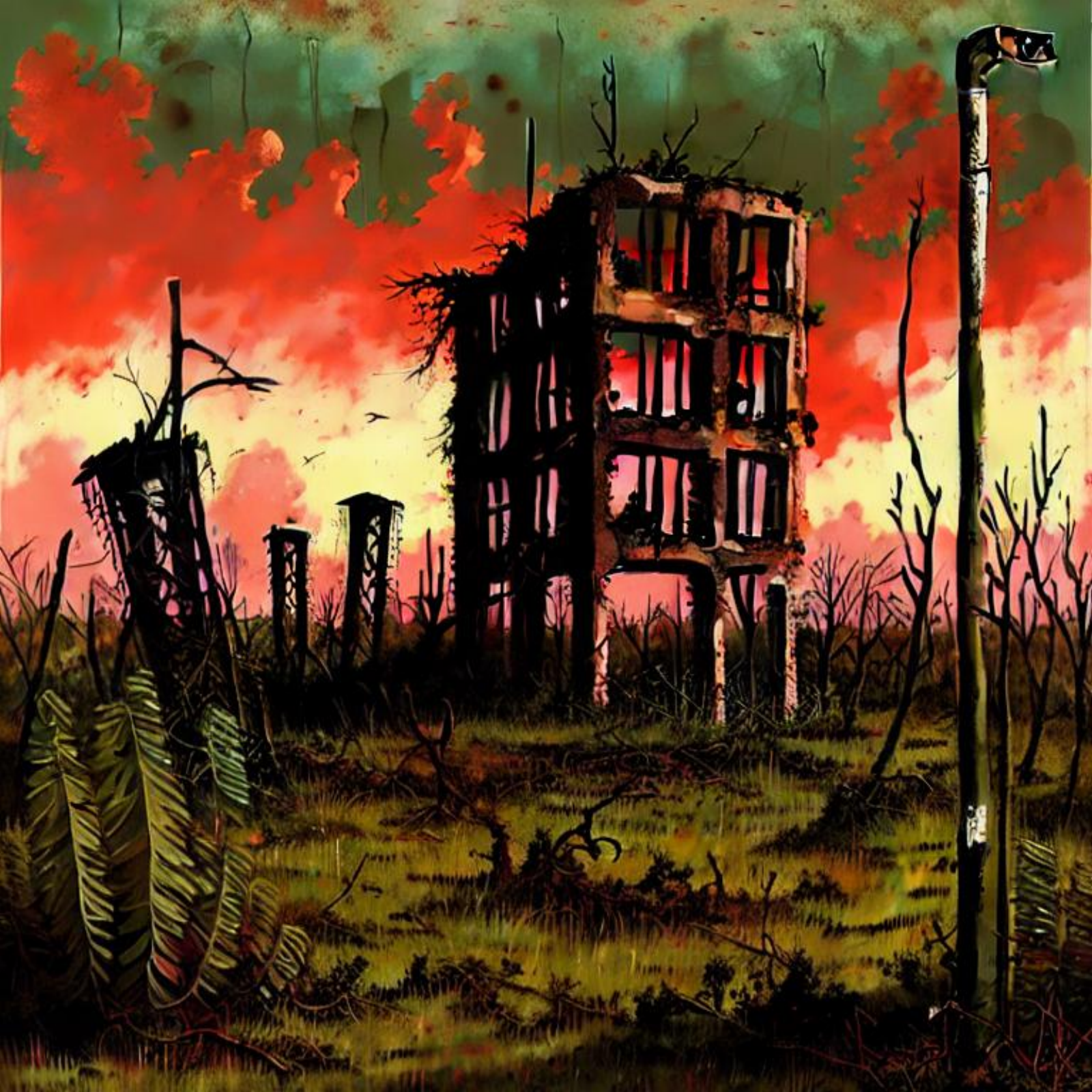}} & 
        \noindent\parbox[c]{0.14\columnwidth}{\includegraphics[width=0.14\columnwidth]{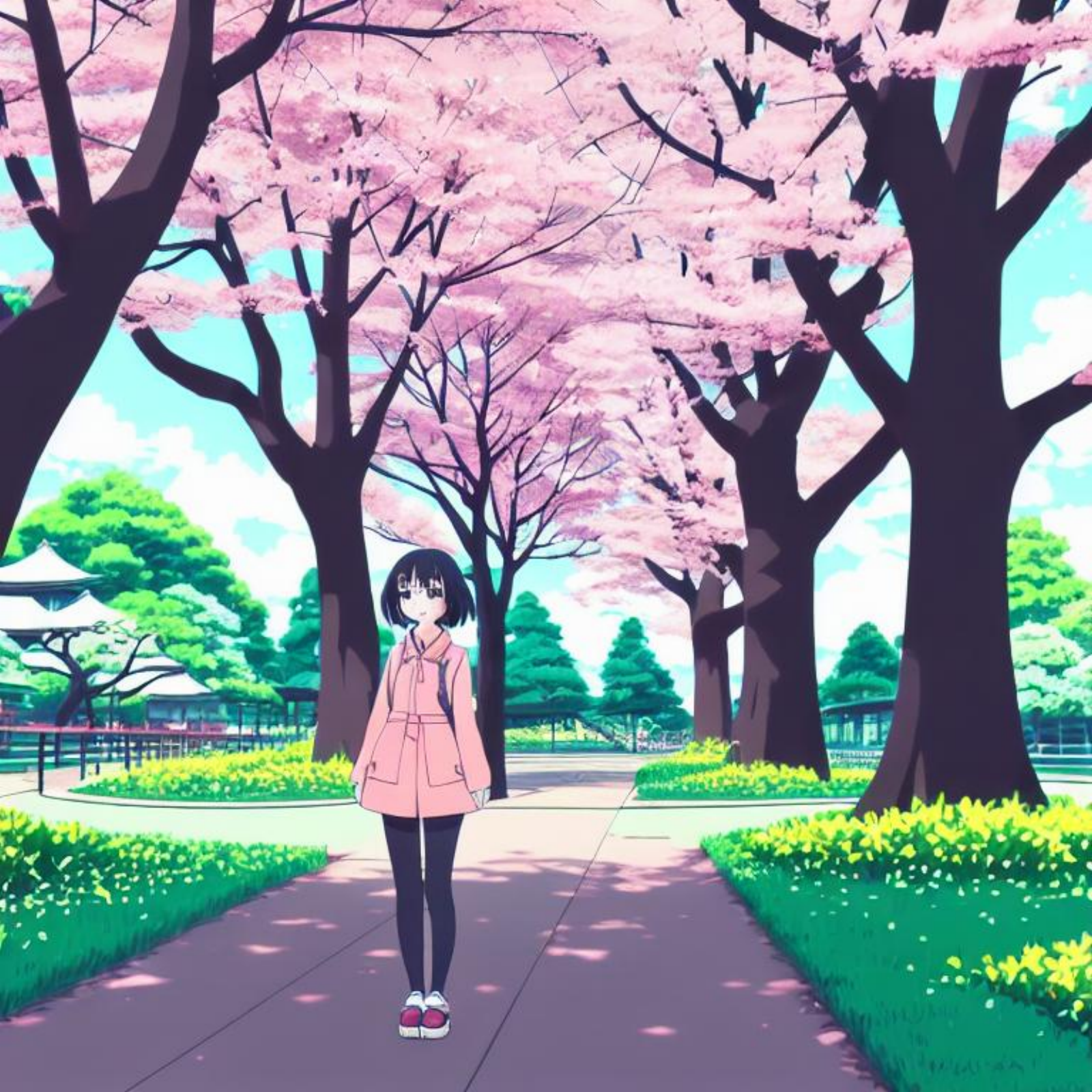}} & 
        \noindent\parbox[c]{0.14\columnwidth}{\includegraphics[width=0.14\columnwidth]{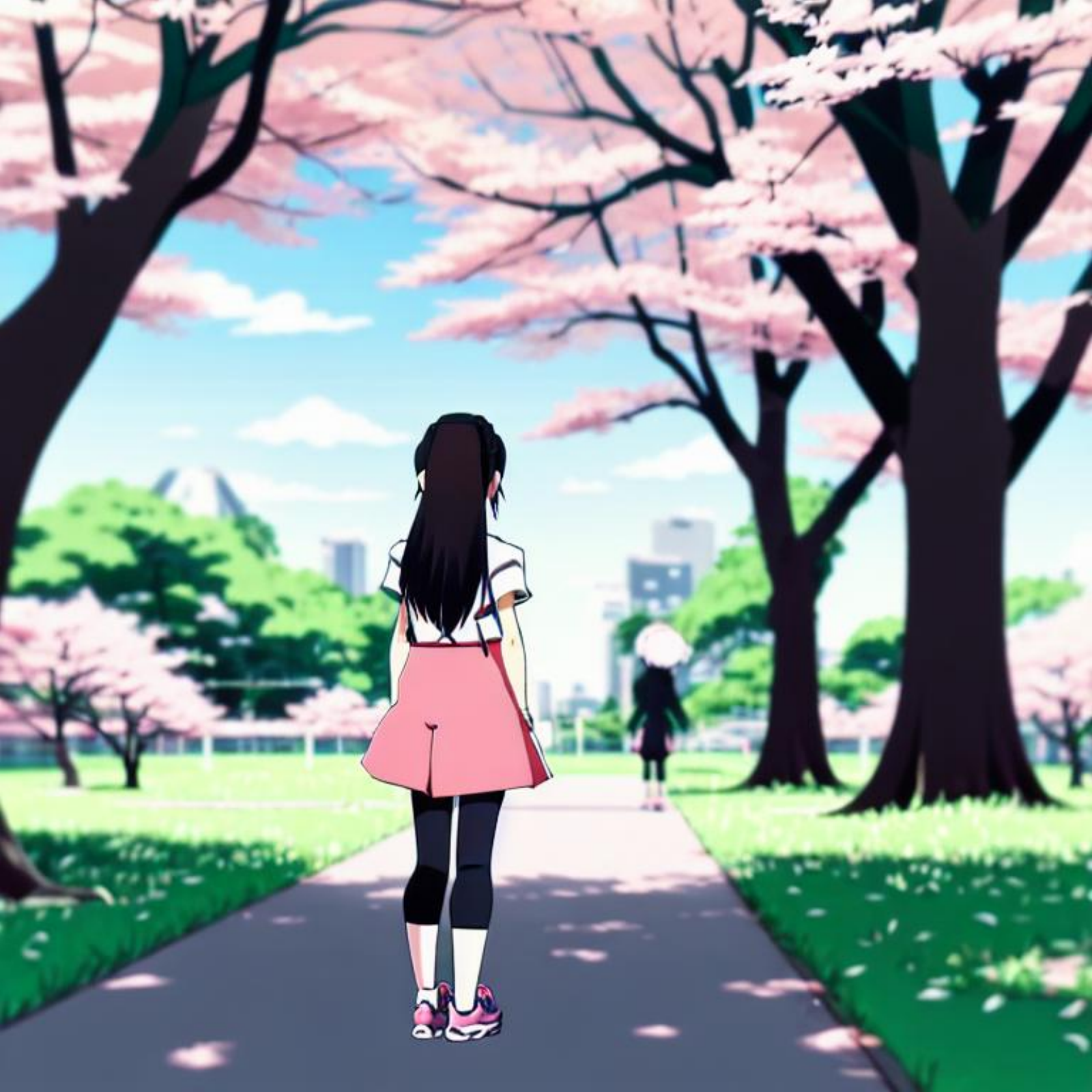}} & 
        \noindent\parbox[c]{0.14\columnwidth}{\includegraphics[width=0.14\columnwidth]{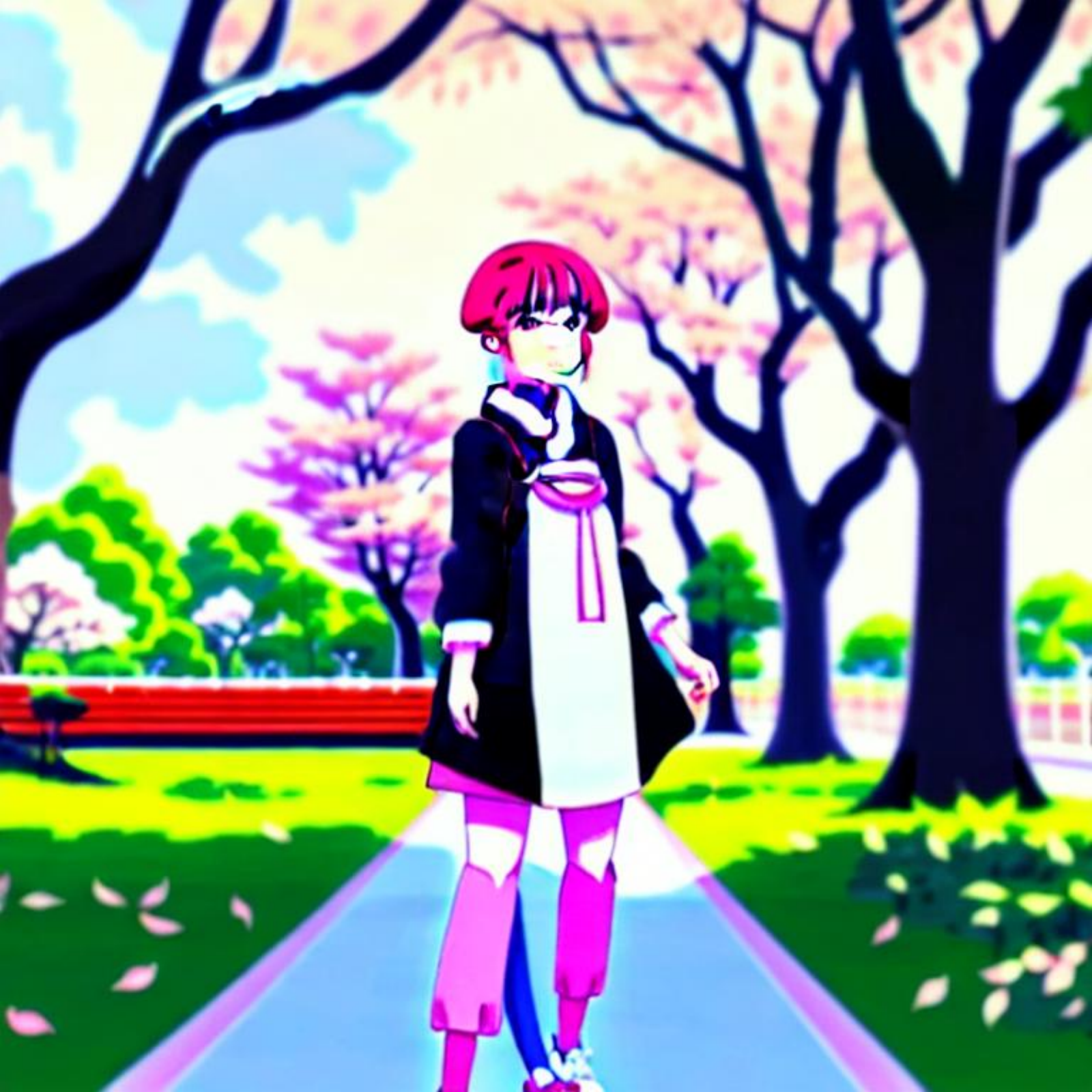}} \\
    \end{tabu}
    \caption{Comparison of samples generated from Dreamlike Photoreal 2.0 \protect\footnotemark using PLMS4 \cite{liu2022pseudo} with different sampling steps and guidance scale.}
    \label{fig:scale_step_dreamlike}
\end{figure}

\footnotetext{\url{https://huggingface.co/dreamlike-art/dreamlike-photoreal-2.0}}


\tabulinesep=1pt
\begin{figure}
    \centering
    \begin{tabu} to \textwidth {@{}l@{\hspace{5pt}}c@{\hspace{2pt}}c@{\hspace{2pt}}c@{\hspace{4pt}}c@{\hspace{2pt}}c@{\hspace{2pt}}c@{}}
        & \multicolumn{3}{c}{\shortstack{\scriptsize "A post-apocalyptic world with ruined \\ \scriptsize buildings, overgrown vegetation, and a red sky"}}
        & \multicolumn{3}{c}{\shortstack{\scriptsize "A girl standing in a park in \\ \scriptsize Japanese animation style"}} \\

        & \multicolumn{1}{c}{\shortstack{\scriptsize $s = 7.5$}}
        & \multicolumn{1}{c}{\shortstack{\scriptsize $s = 15$}}
        & \multicolumn{1}{c}{\shortstack{\scriptsize $s = 22.5$}}
        & \multicolumn{1}{c}{\shortstack{\scriptsize $s = 7.5$}}
        & \multicolumn{1}{c}{\shortstack{\scriptsize $s = 15$}}
        & \multicolumn{1}{c}{\shortstack{\scriptsize $s = 22.5$}}
        \\
        
        \shortstack[l]{\tiny 10 steps} &
        \noindent\parbox[c]{0.14\columnwidth}{\includegraphics[width=0.14\columnwidth]{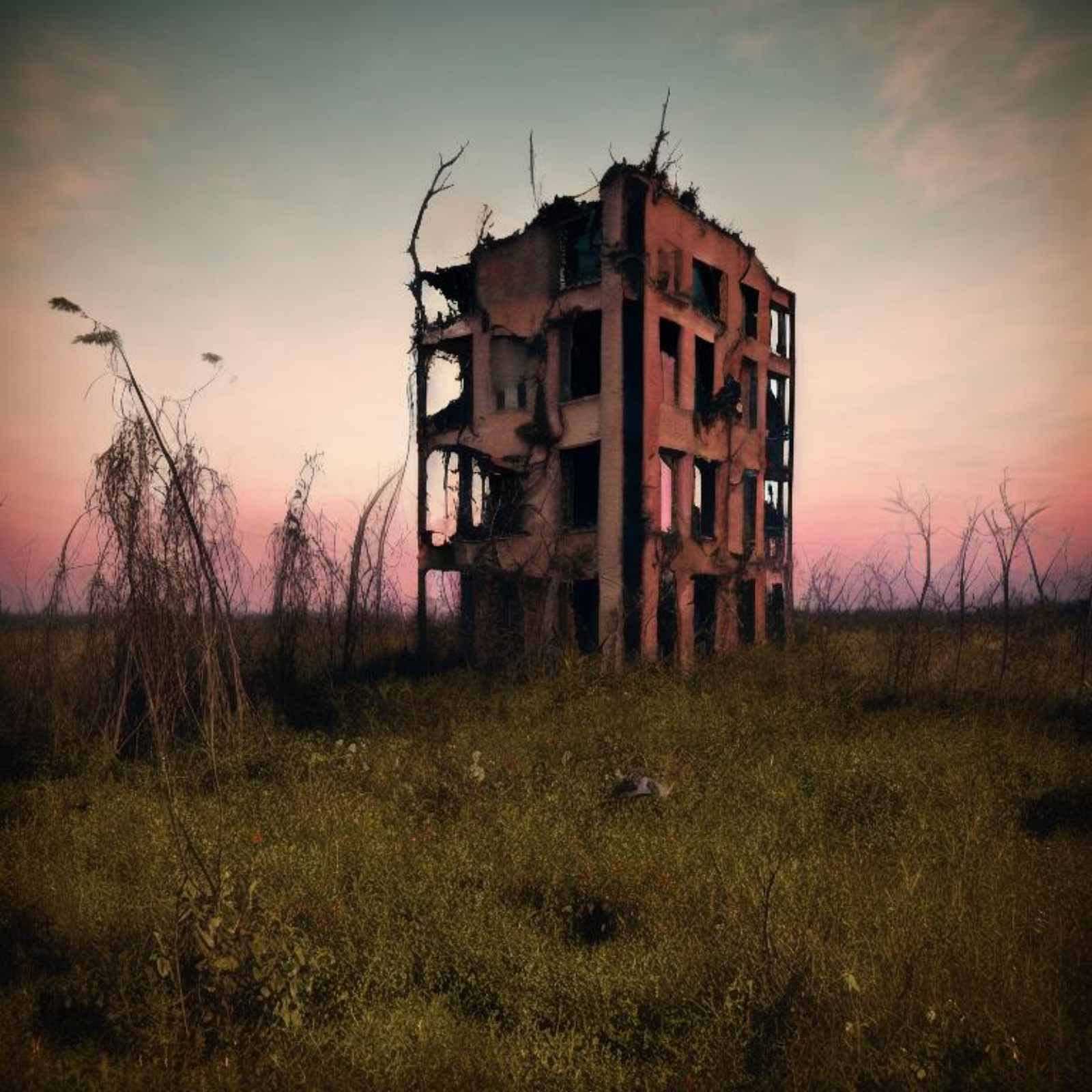}} & 
        \noindent\parbox[c]{0.14\columnwidth}{\includegraphics[width=0.14\columnwidth]{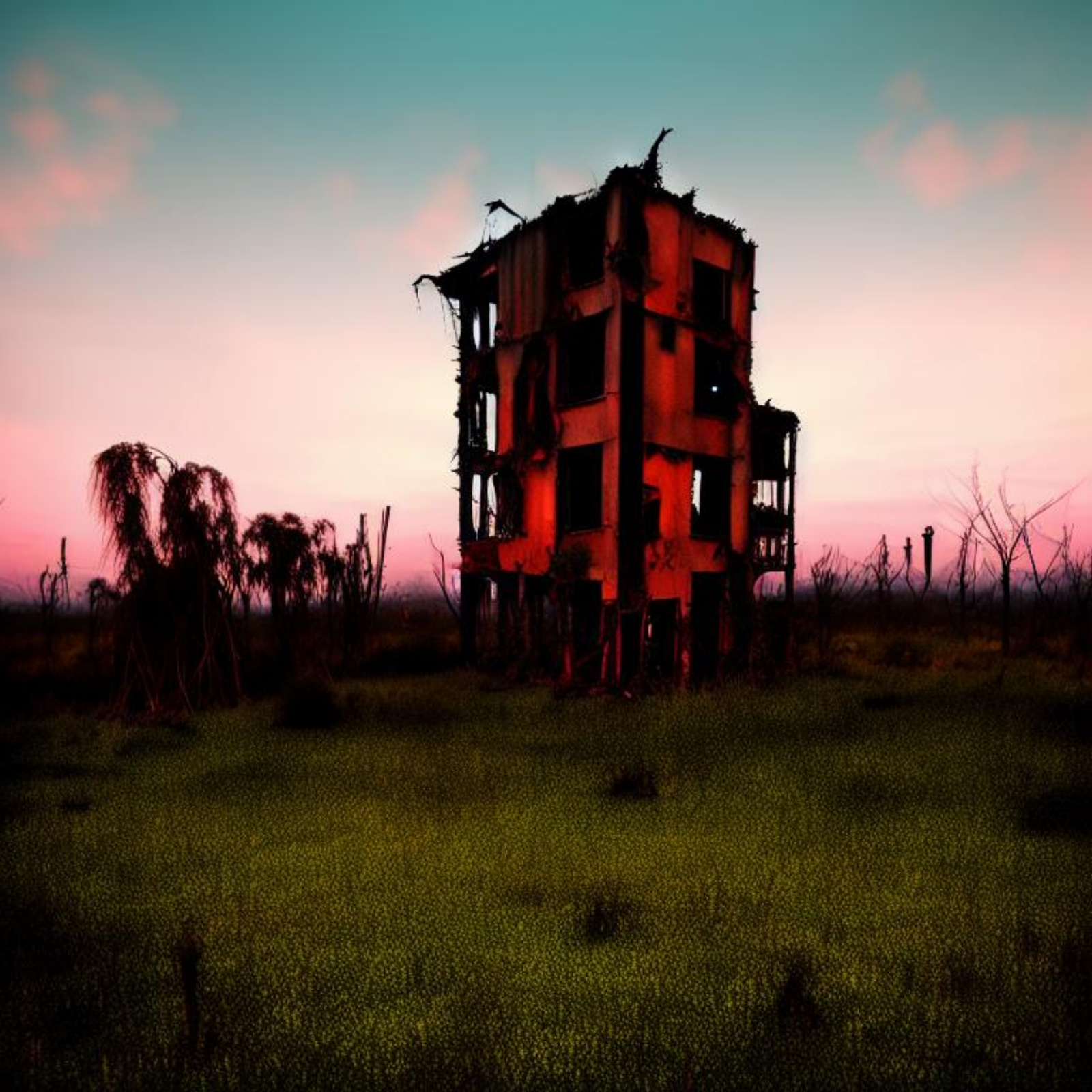}} & 
        \noindent\parbox[c]{0.14\columnwidth}{\includegraphics[width=0.14\columnwidth]{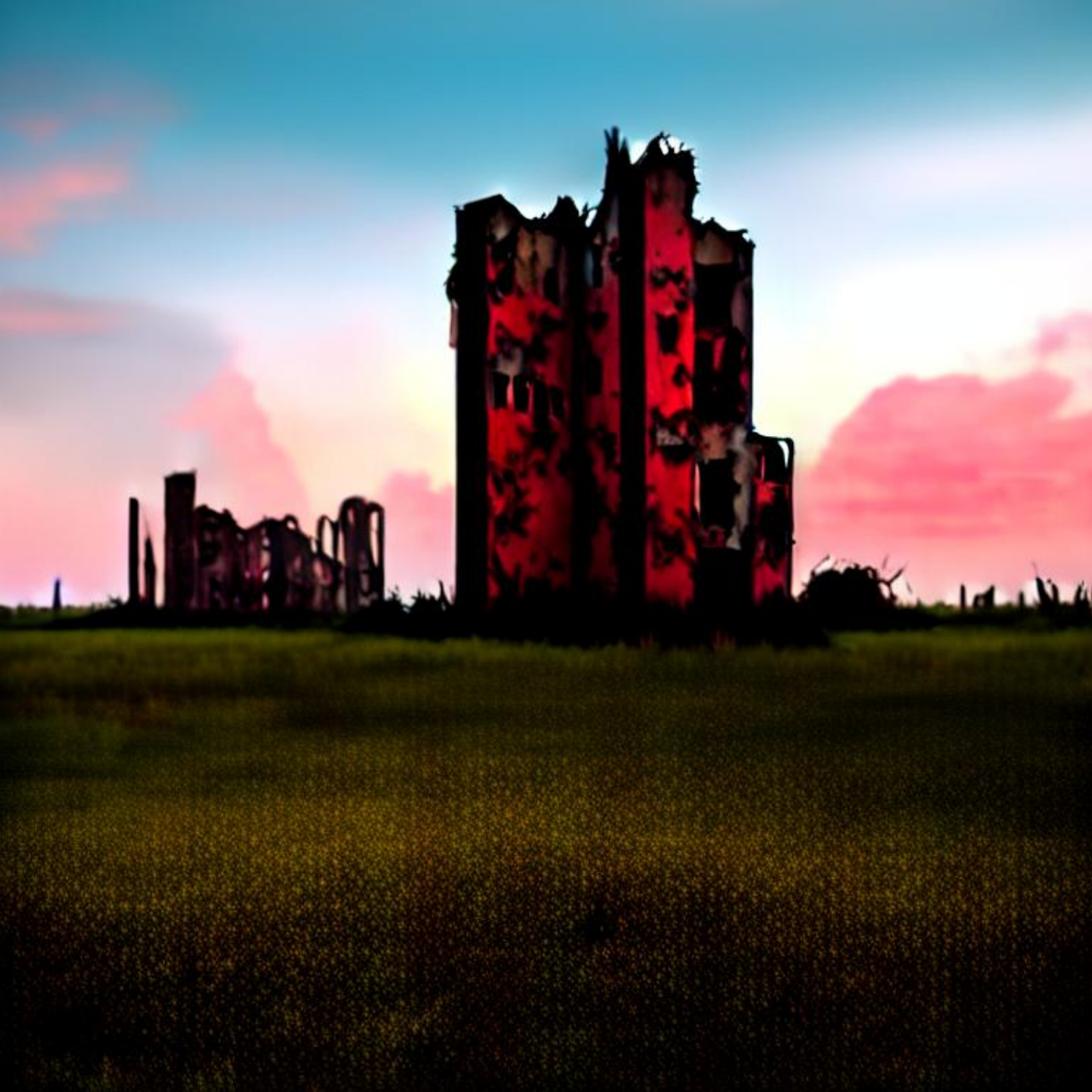}} & 
        \noindent\parbox[c]{0.14\columnwidth}{\includegraphics[width=0.14\columnwidth]{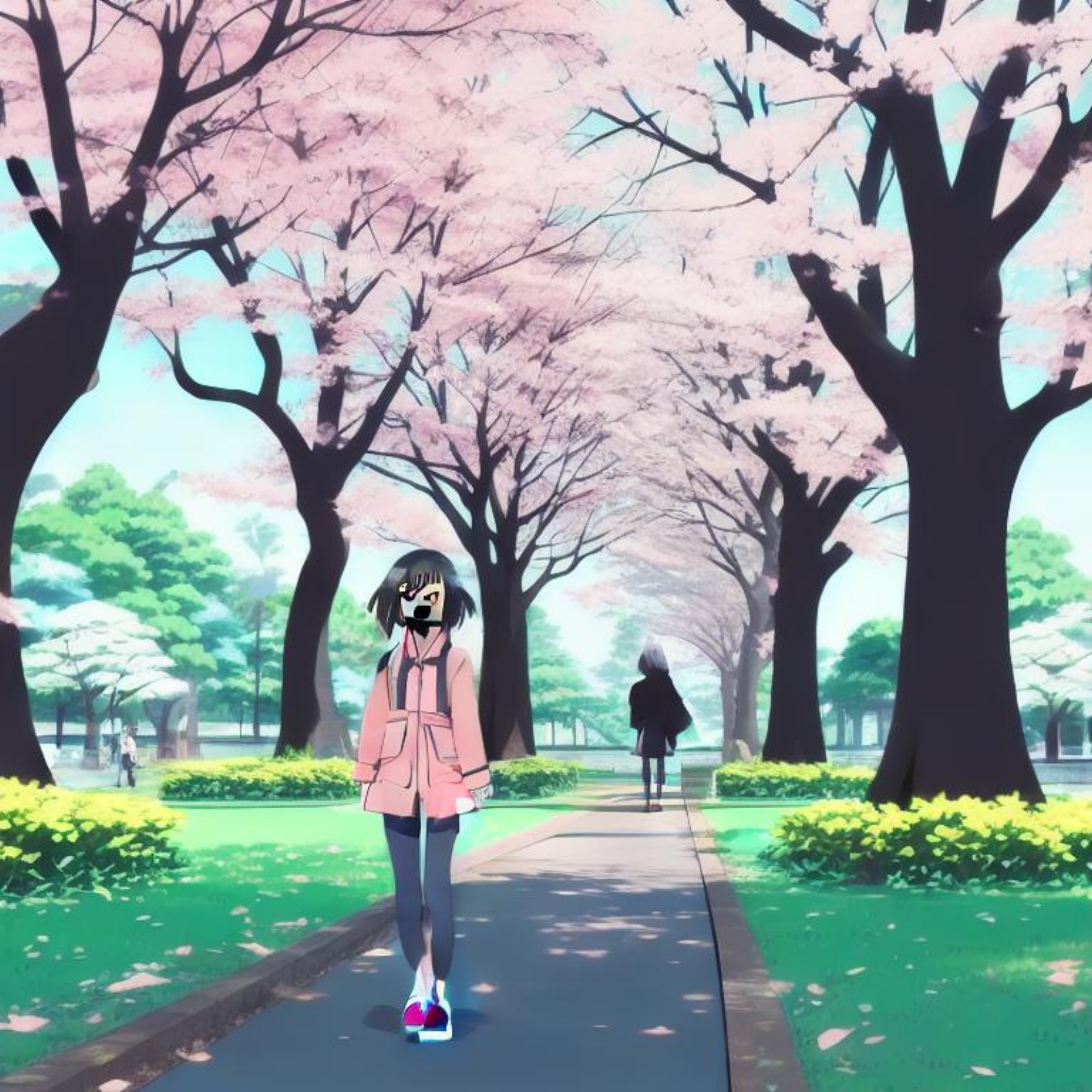}} & 
        \noindent\parbox[c]{0.14\columnwidth}{\includegraphics[width=0.14\columnwidth]{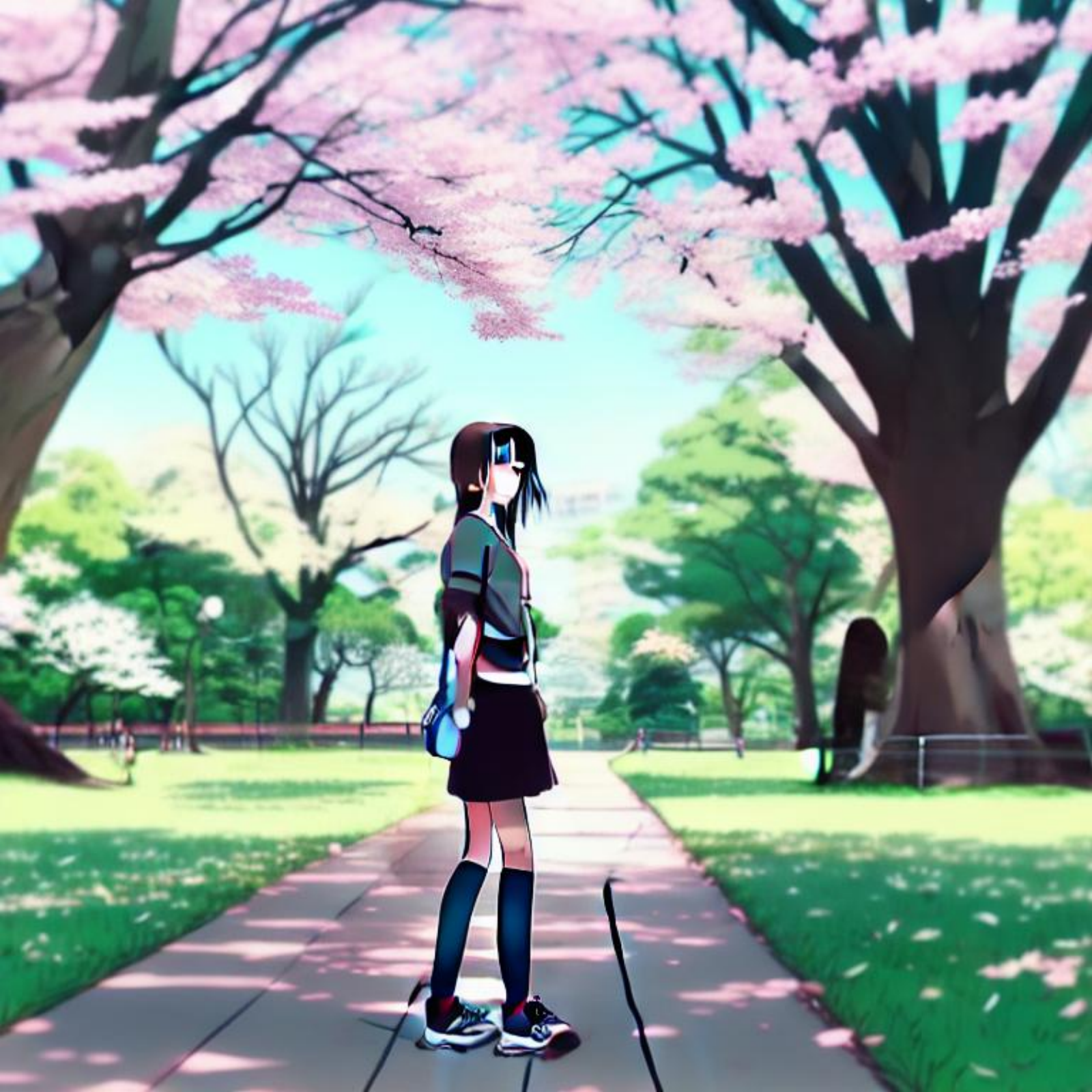}} & 
        \noindent\parbox[c]{0.14\columnwidth}{\includegraphics[width=0.14\columnwidth]{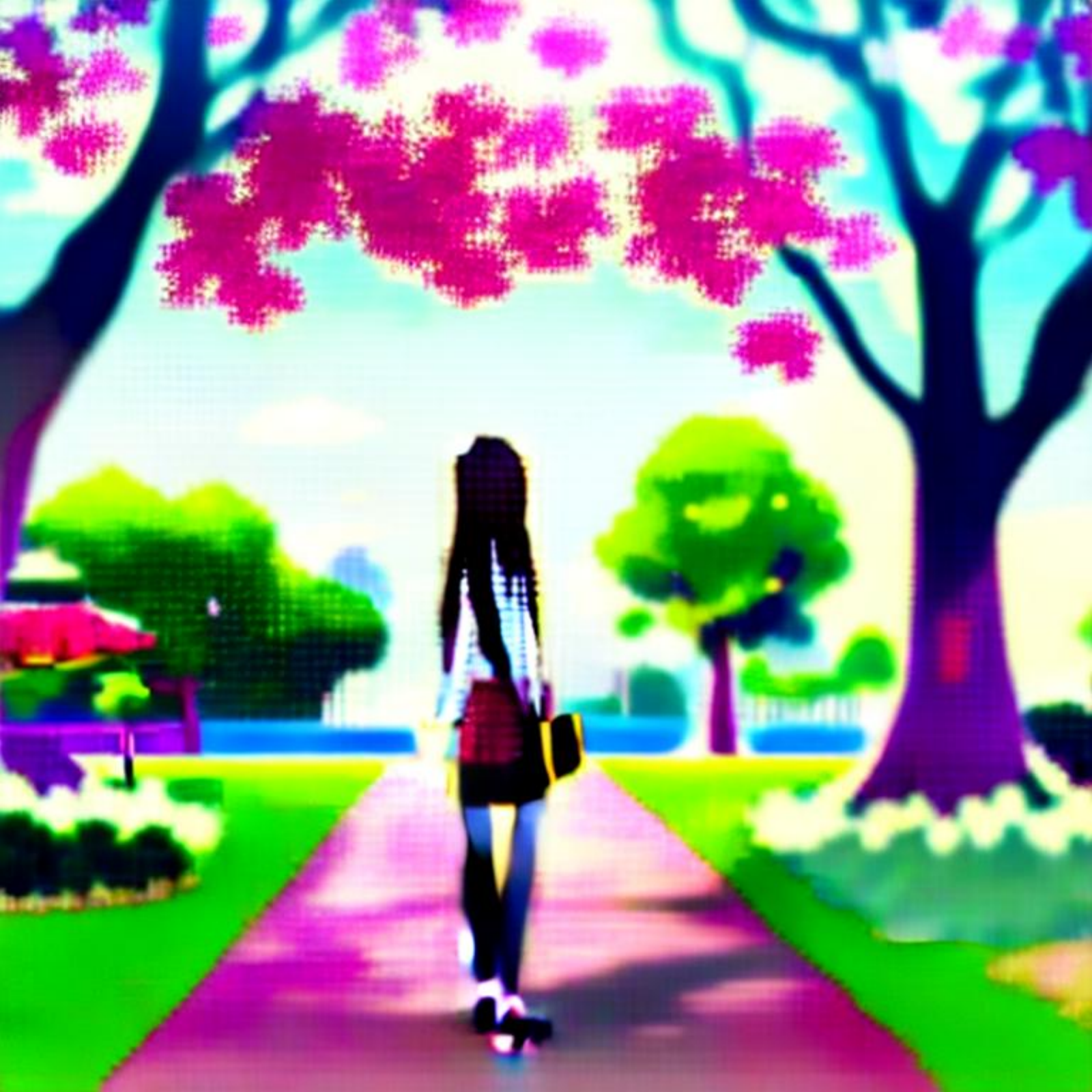}} \\

        \shortstack[l]{\tiny 15 steps} &
        \noindent\parbox[c]{0.14\columnwidth}{\includegraphics[width=0.14\columnwidth]{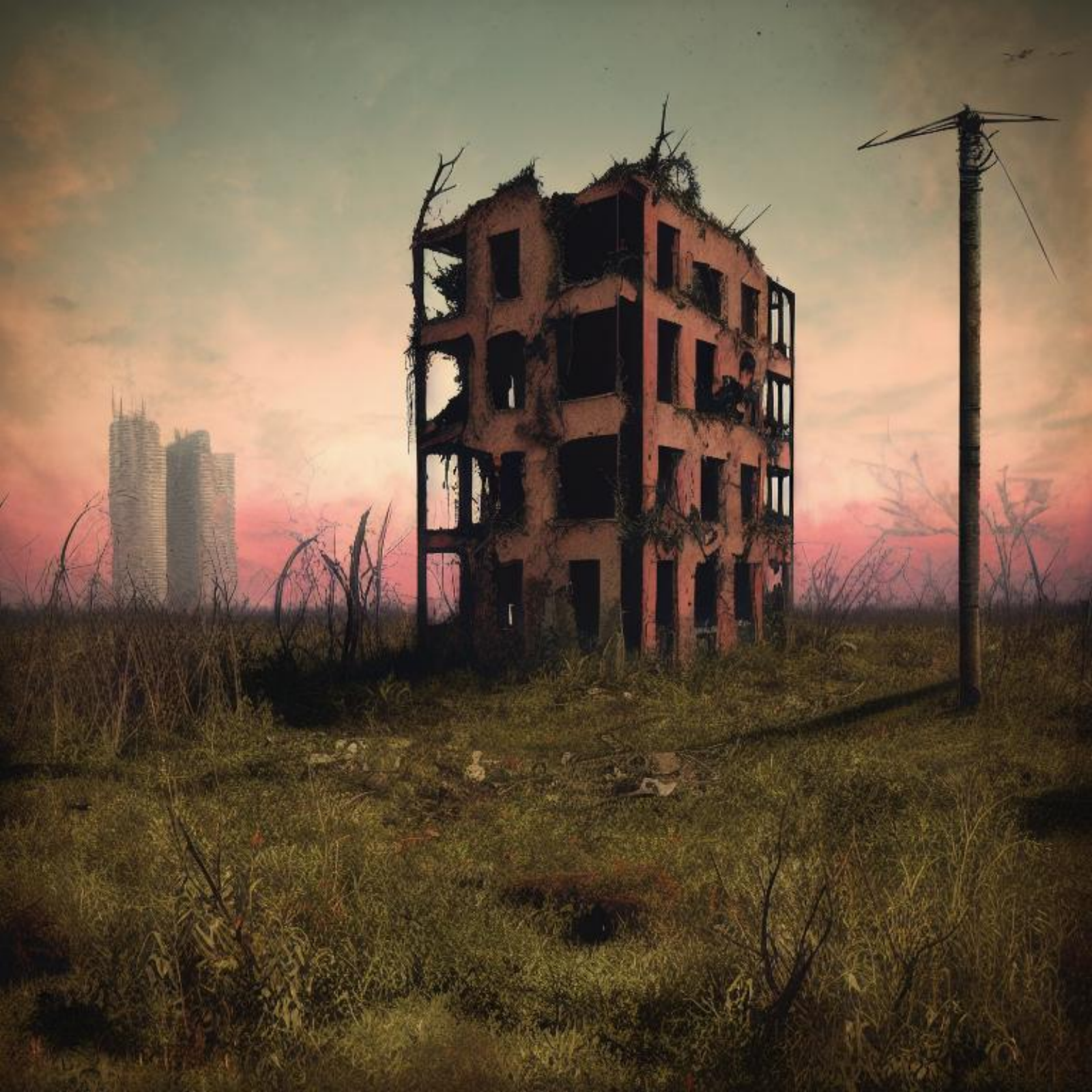}} & 
        \noindent\parbox[c]{0.14\columnwidth}{\includegraphics[width=0.14\columnwidth]{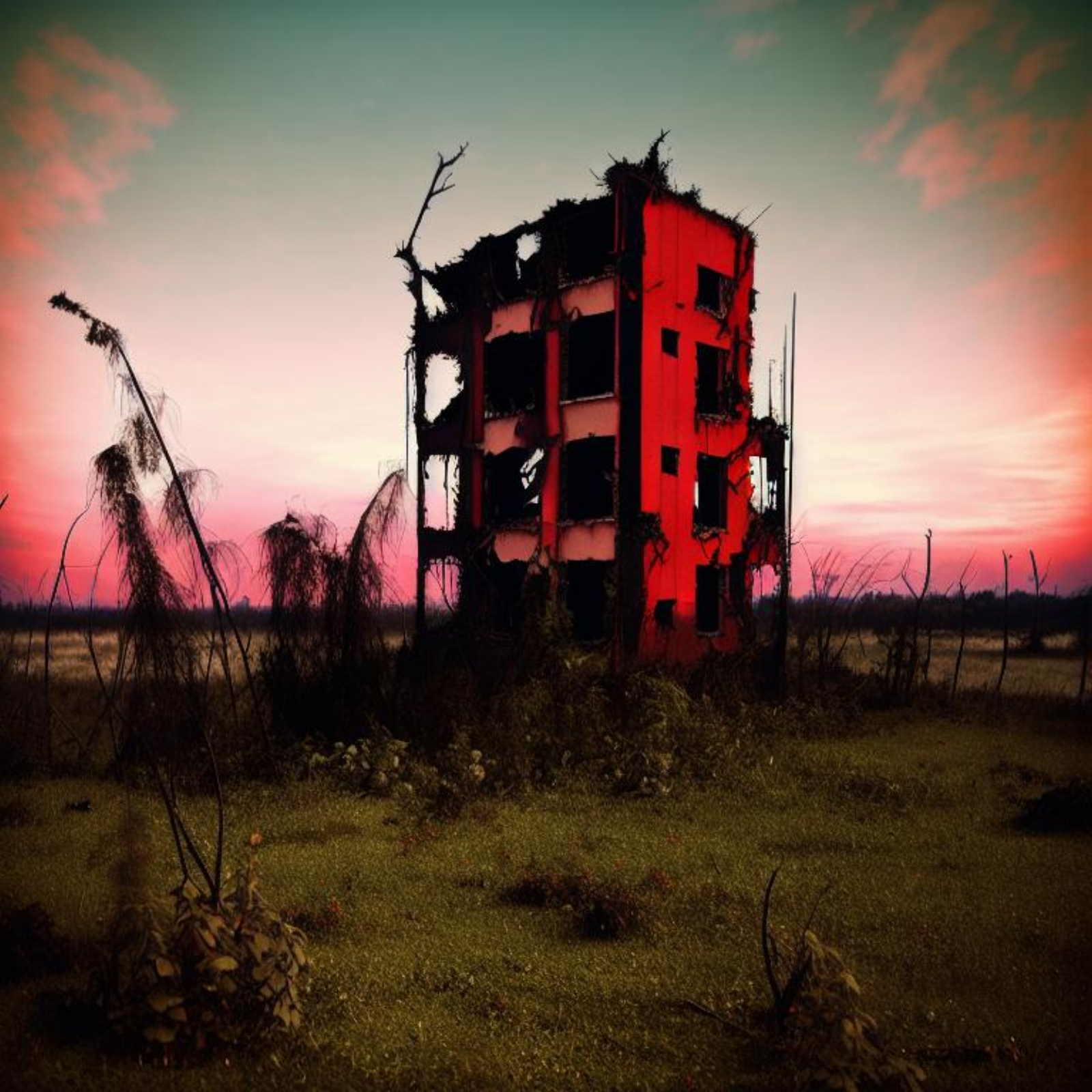}} & 
        \noindent\parbox[c]{0.14\columnwidth}{\includegraphics[width=0.14\columnwidth]{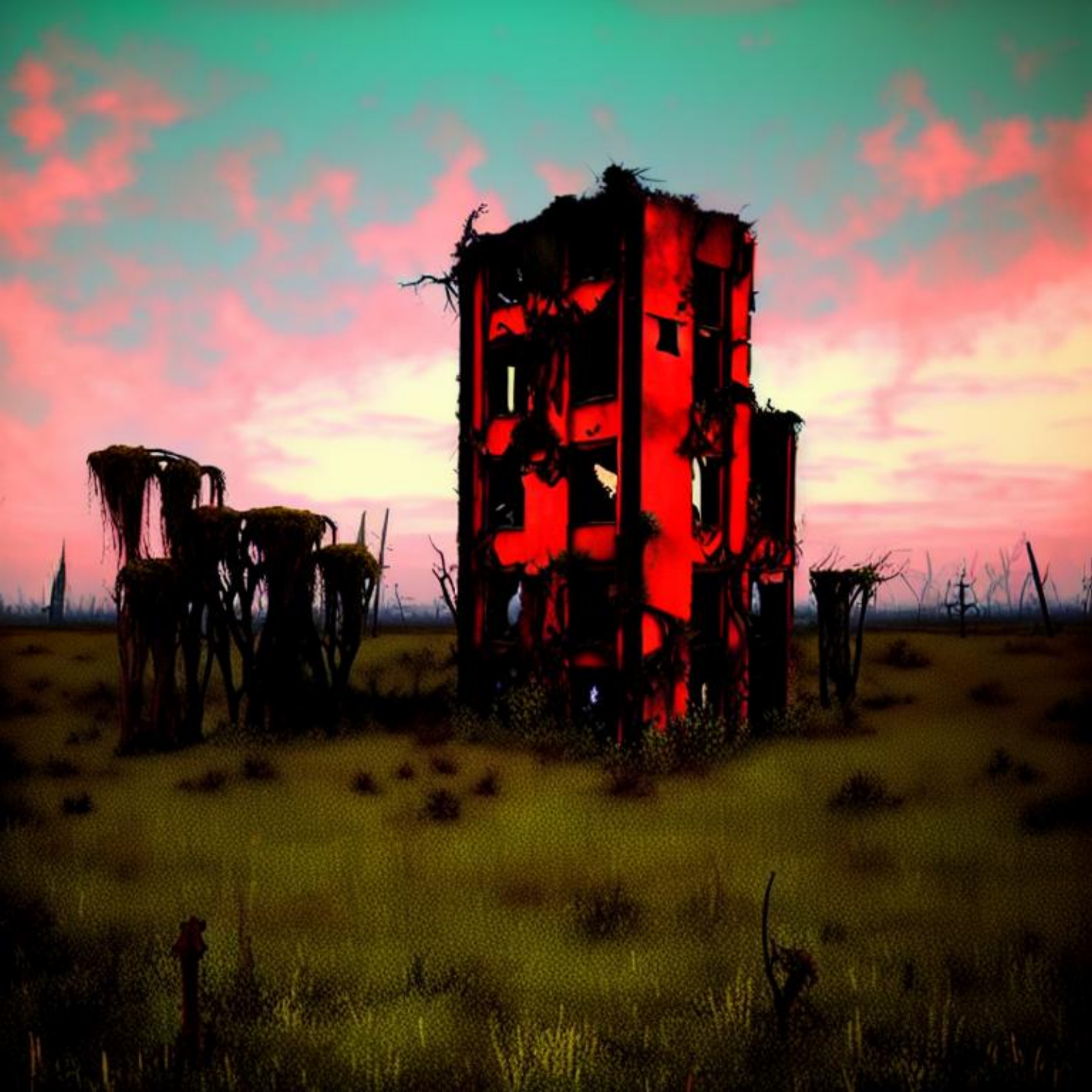}} & 
        \noindent\parbox[c]{0.14\columnwidth}{\includegraphics[width=0.14\columnwidth]{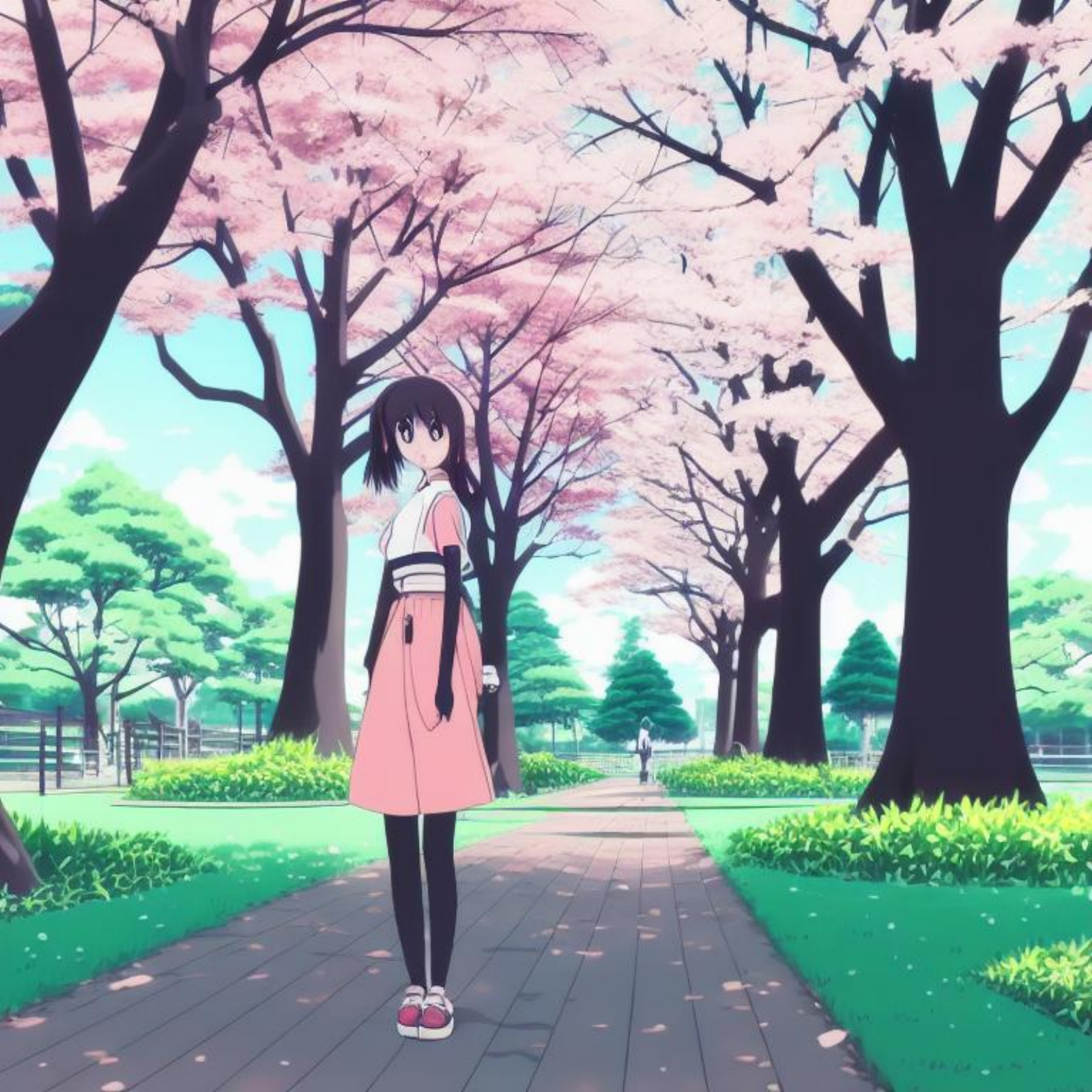}} & 
        \noindent\parbox[c]{0.14\columnwidth}{\includegraphics[width=0.14\columnwidth]{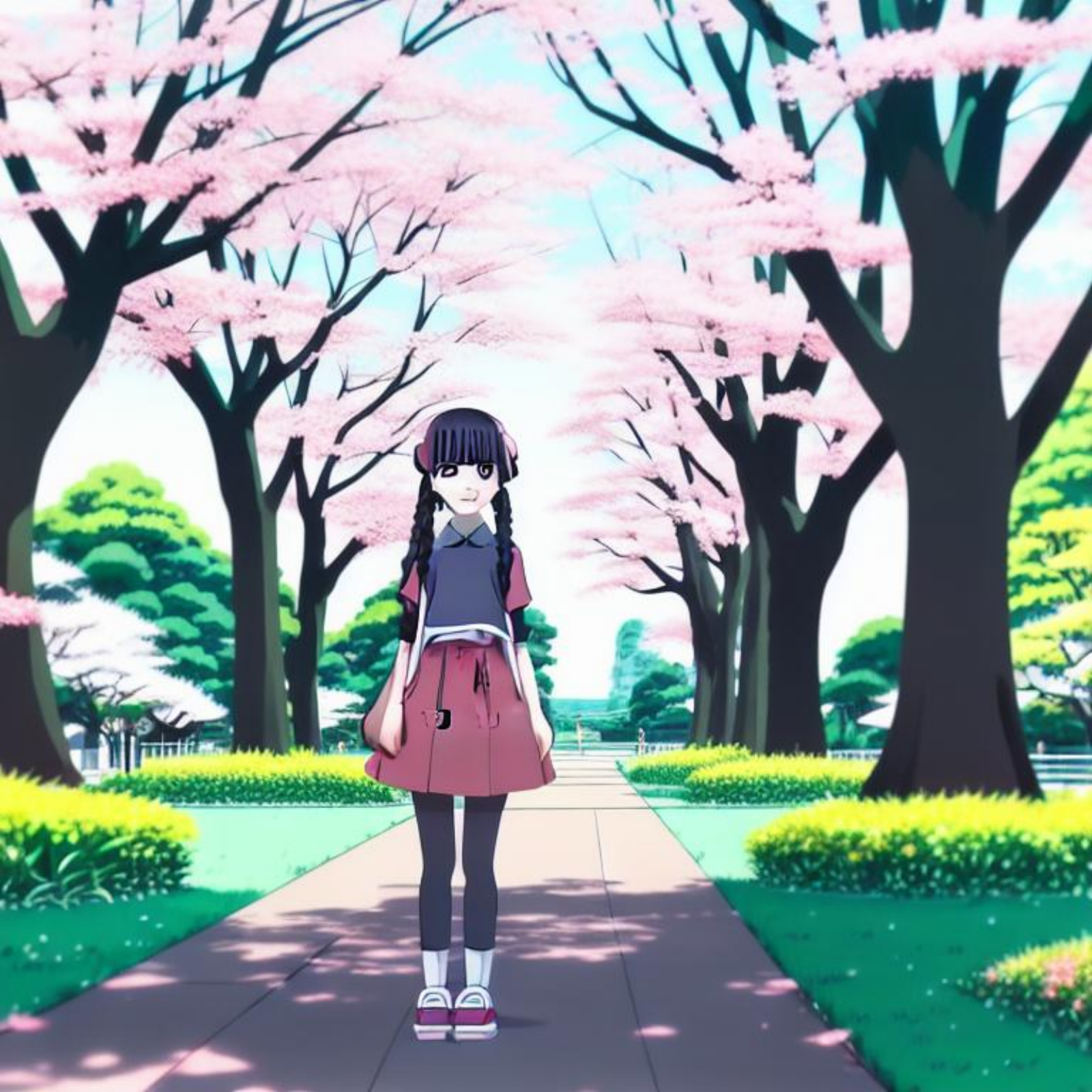}} & 
        \noindent\parbox[c]{0.14\columnwidth}{\includegraphics[width=0.14\columnwidth]{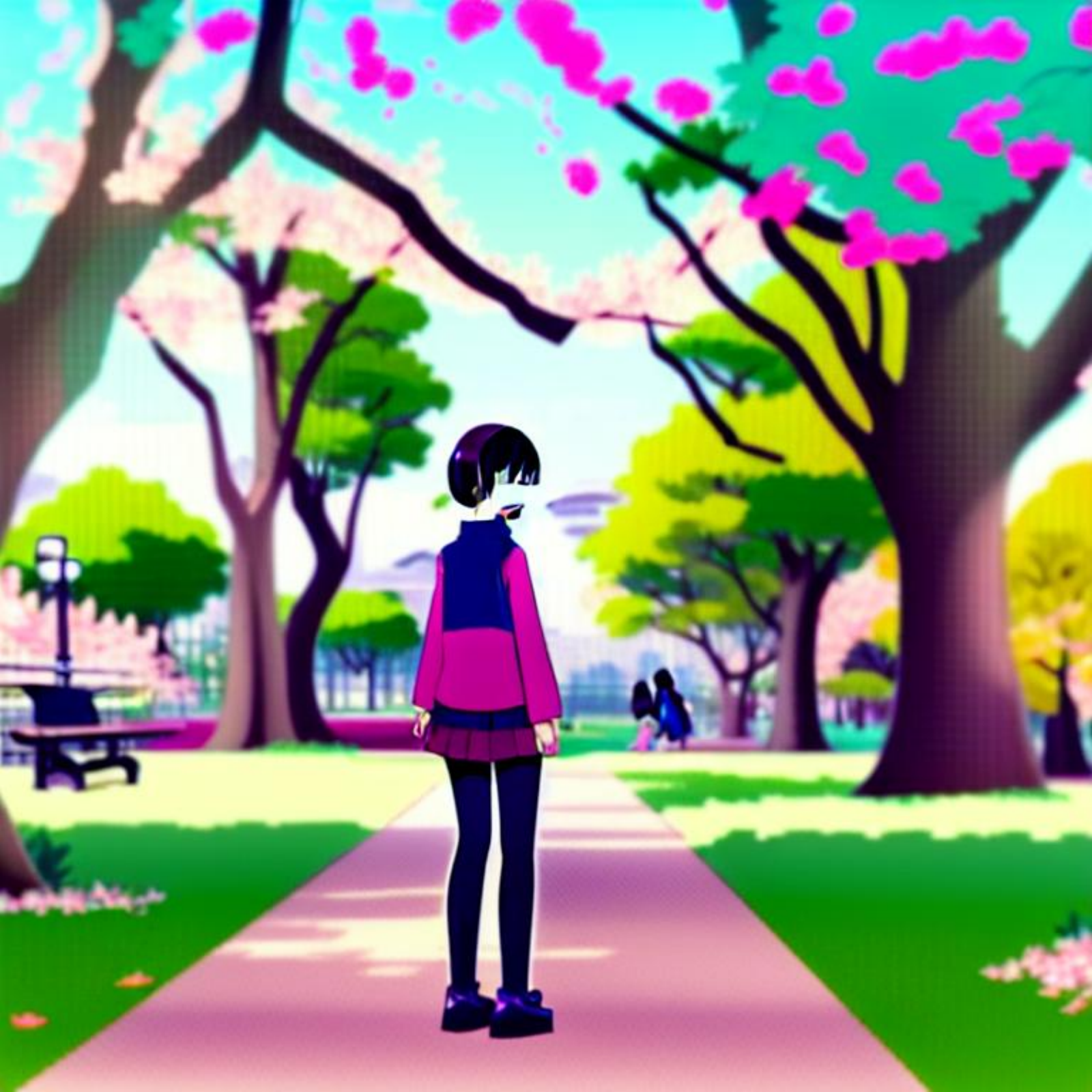}} \\

        \shortstack[l]{\tiny 20 steps} &
        \noindent\parbox[c]{0.14\columnwidth}{\includegraphics[width=0.14\columnwidth]{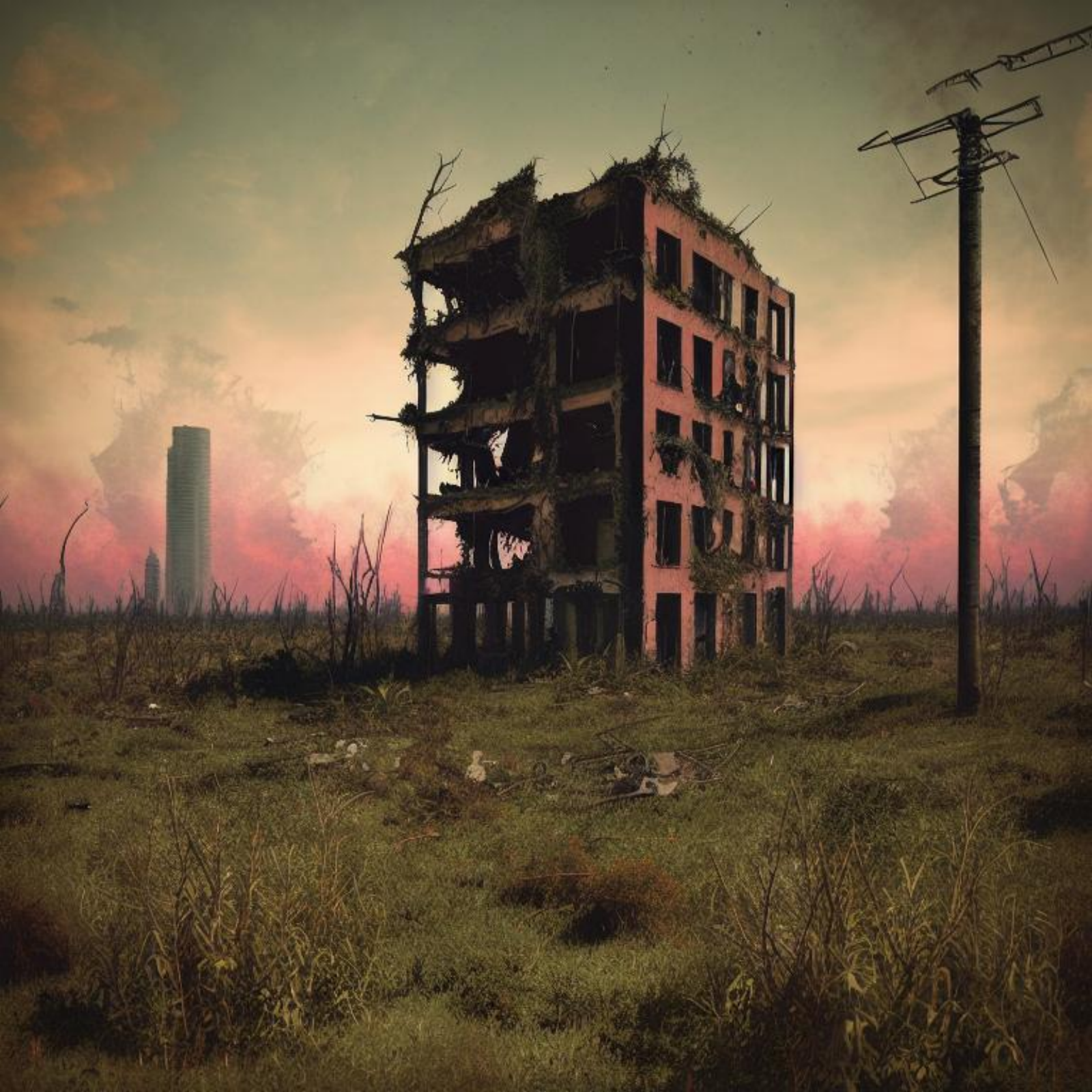}} & 
        \noindent\parbox[c]{0.14\columnwidth}{\includegraphics[width=0.14\columnwidth]{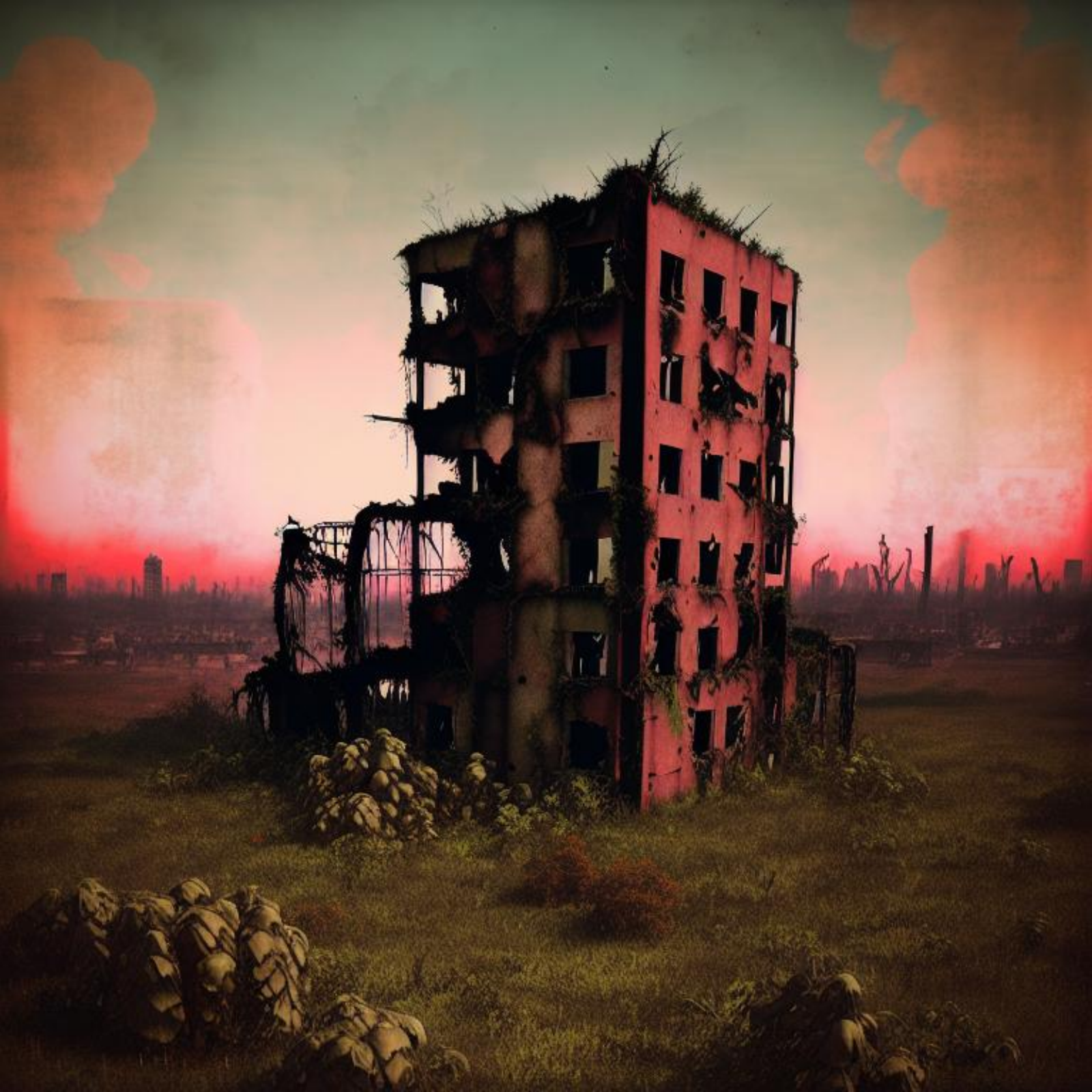}} & 
        \noindent\parbox[c]{0.14\columnwidth}{\includegraphics[width=0.14\columnwidth]{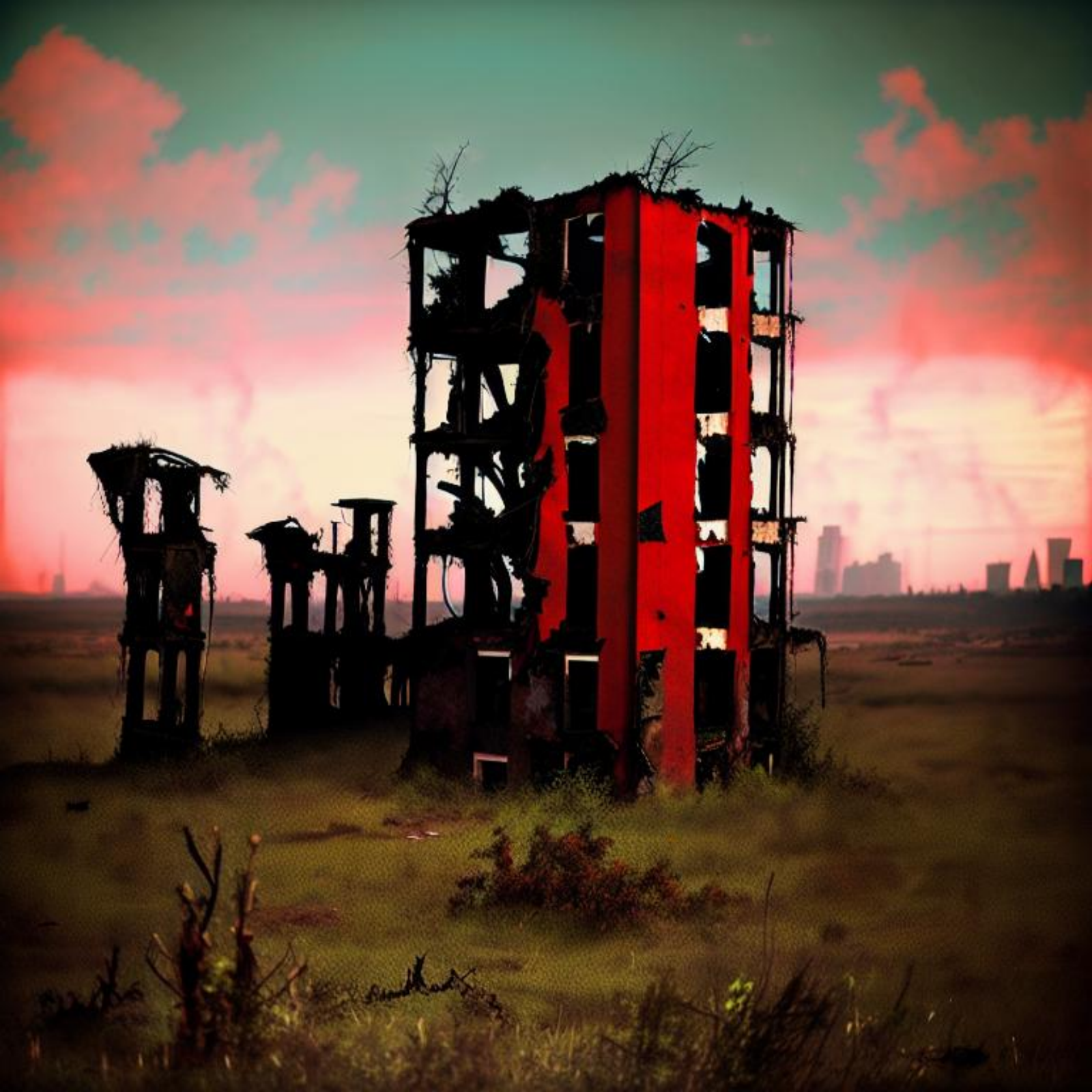}} & 
        \noindent\parbox[c]{0.14\columnwidth}{\includegraphics[width=0.14\columnwidth]{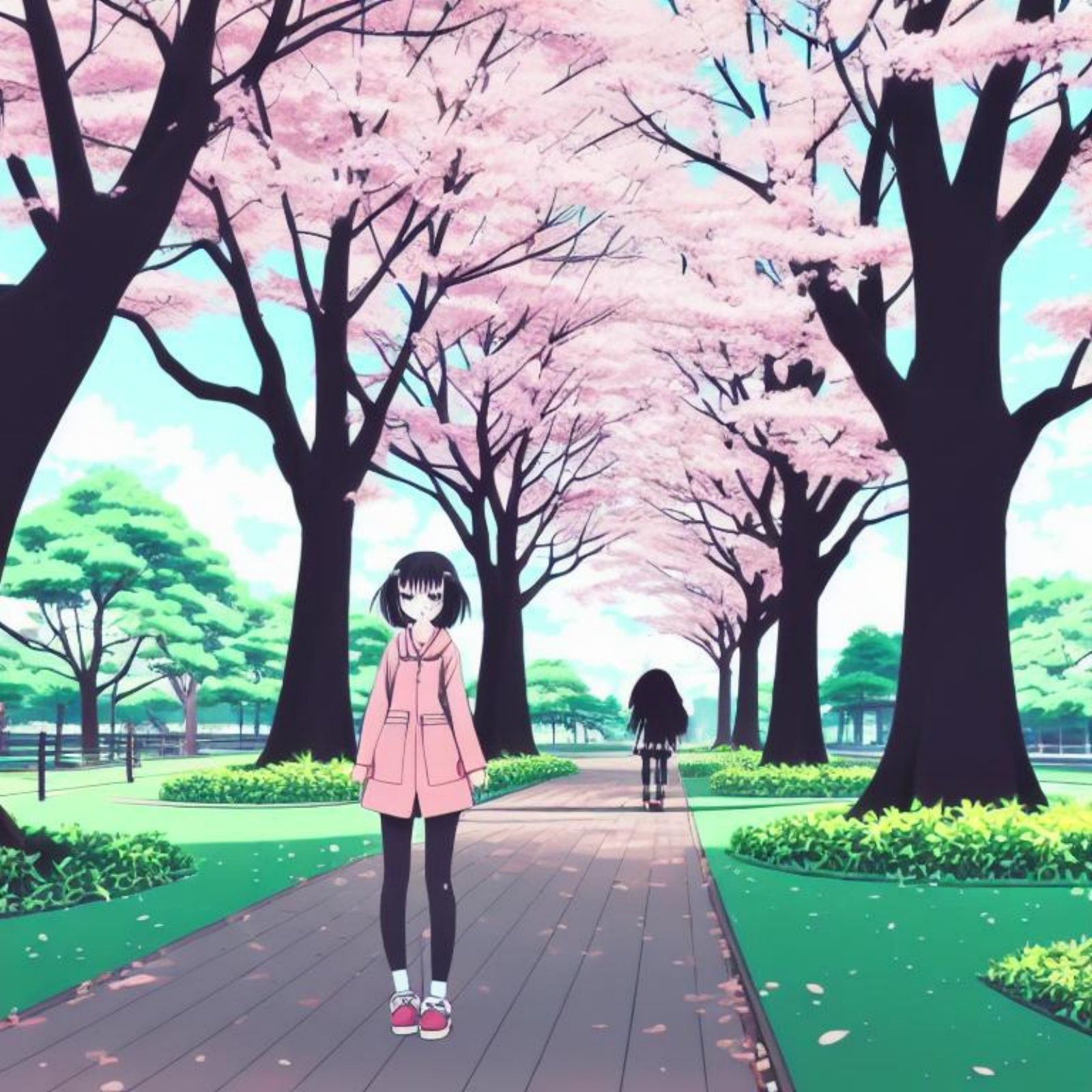}} & 
        \noindent\parbox[c]{0.14\columnwidth}{\includegraphics[width=0.14\columnwidth]{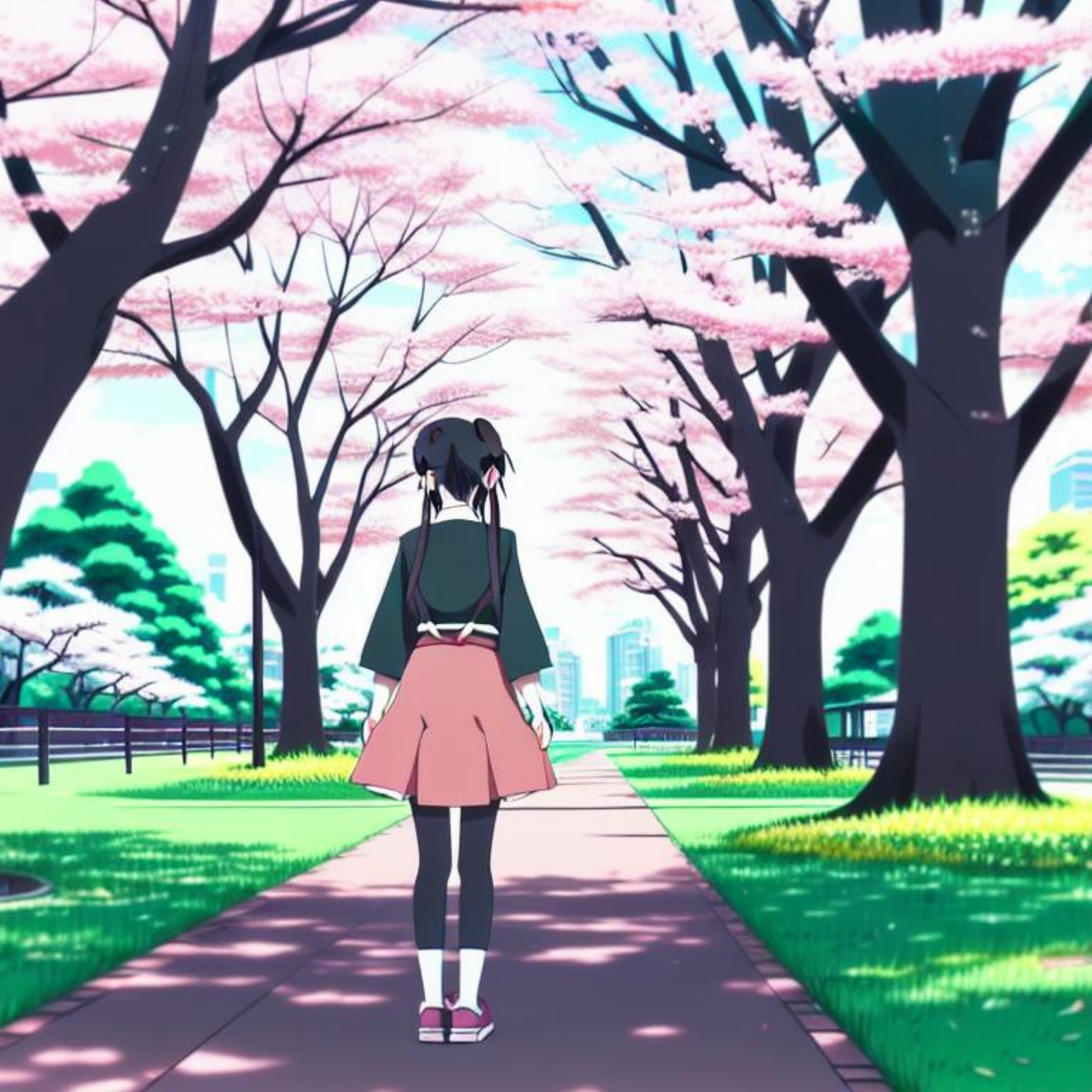}} & 
        \noindent\parbox[c]{0.14\columnwidth}{\includegraphics[width=0.14\columnwidth]{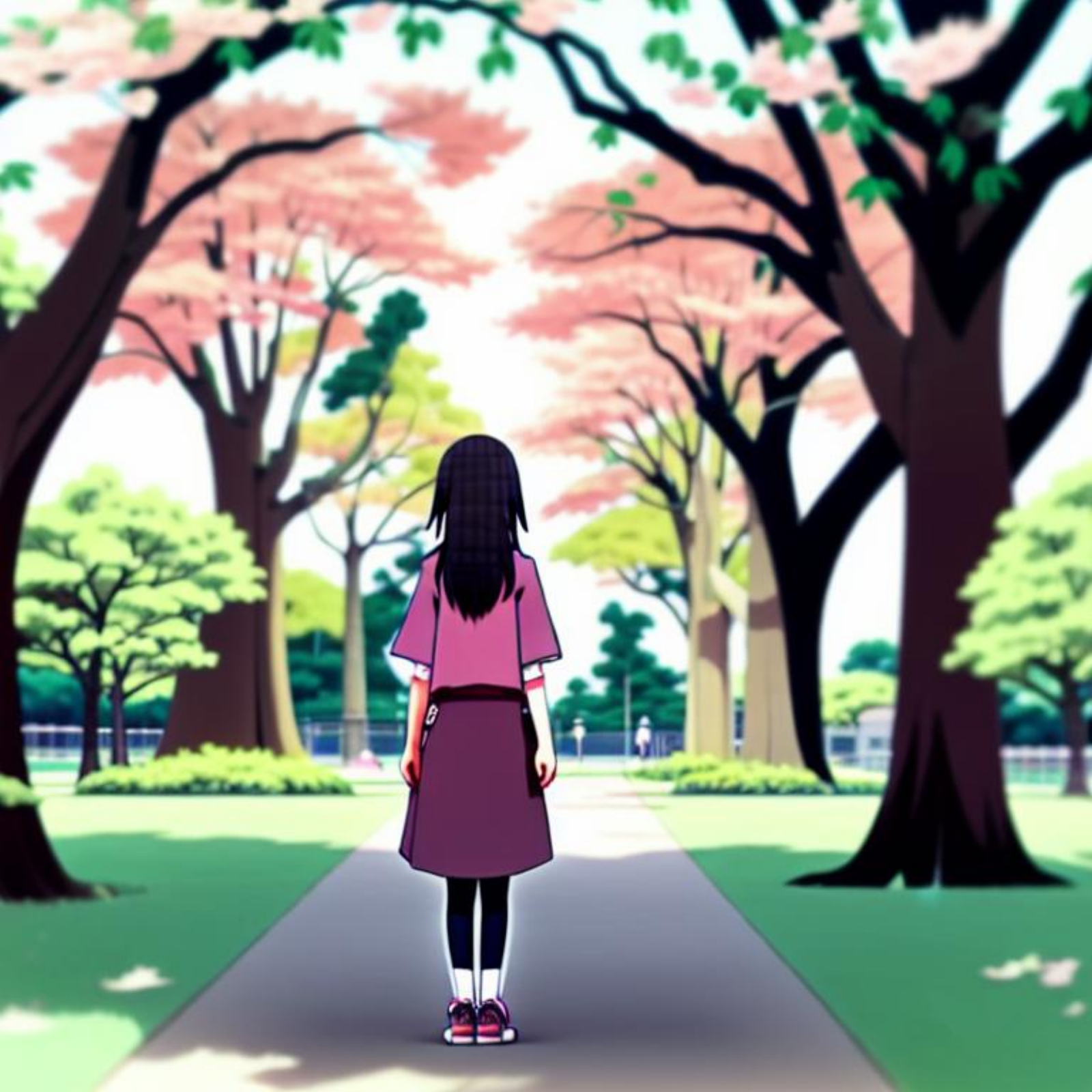}} \\

        \shortstack[l]{\tiny 40 steps} &
        \noindent\parbox[c]{0.14\columnwidth}{\includegraphics[width=0.14\columnwidth]{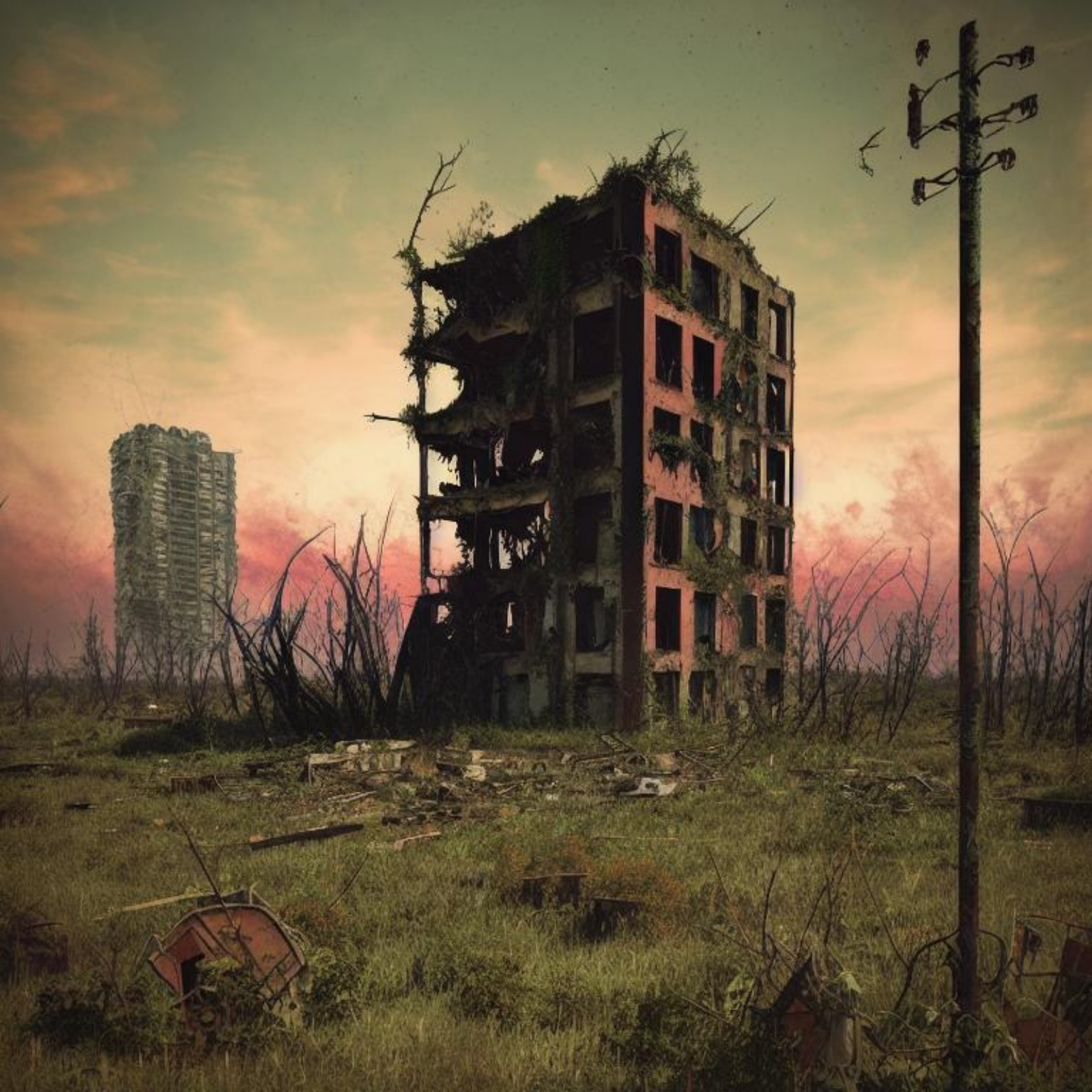}} & 
        \noindent\parbox[c]{0.14\columnwidth}{\includegraphics[width=0.14\columnwidth]{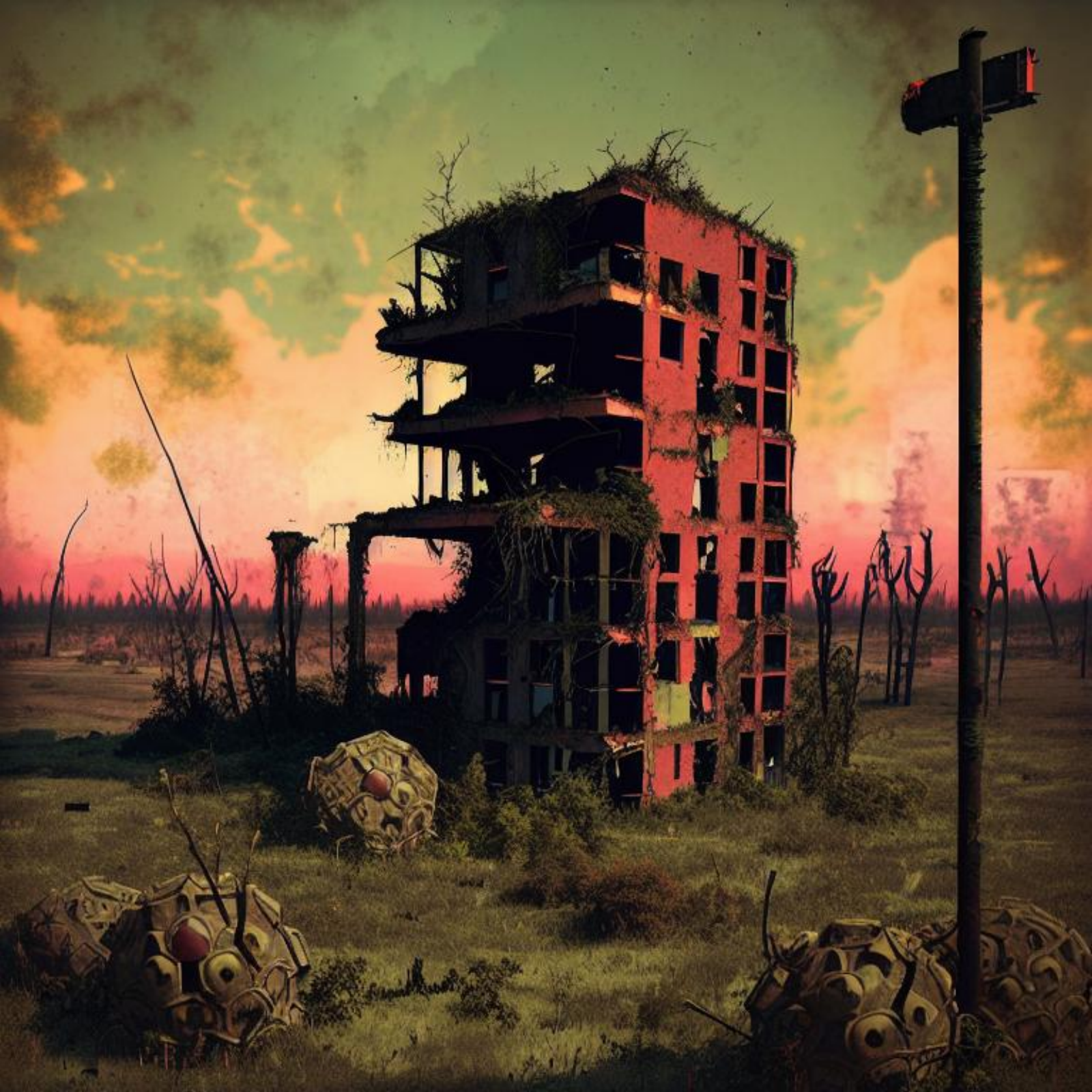}} & 
        \noindent\parbox[c]{0.14\columnwidth}{\includegraphics[width=0.14\columnwidth]{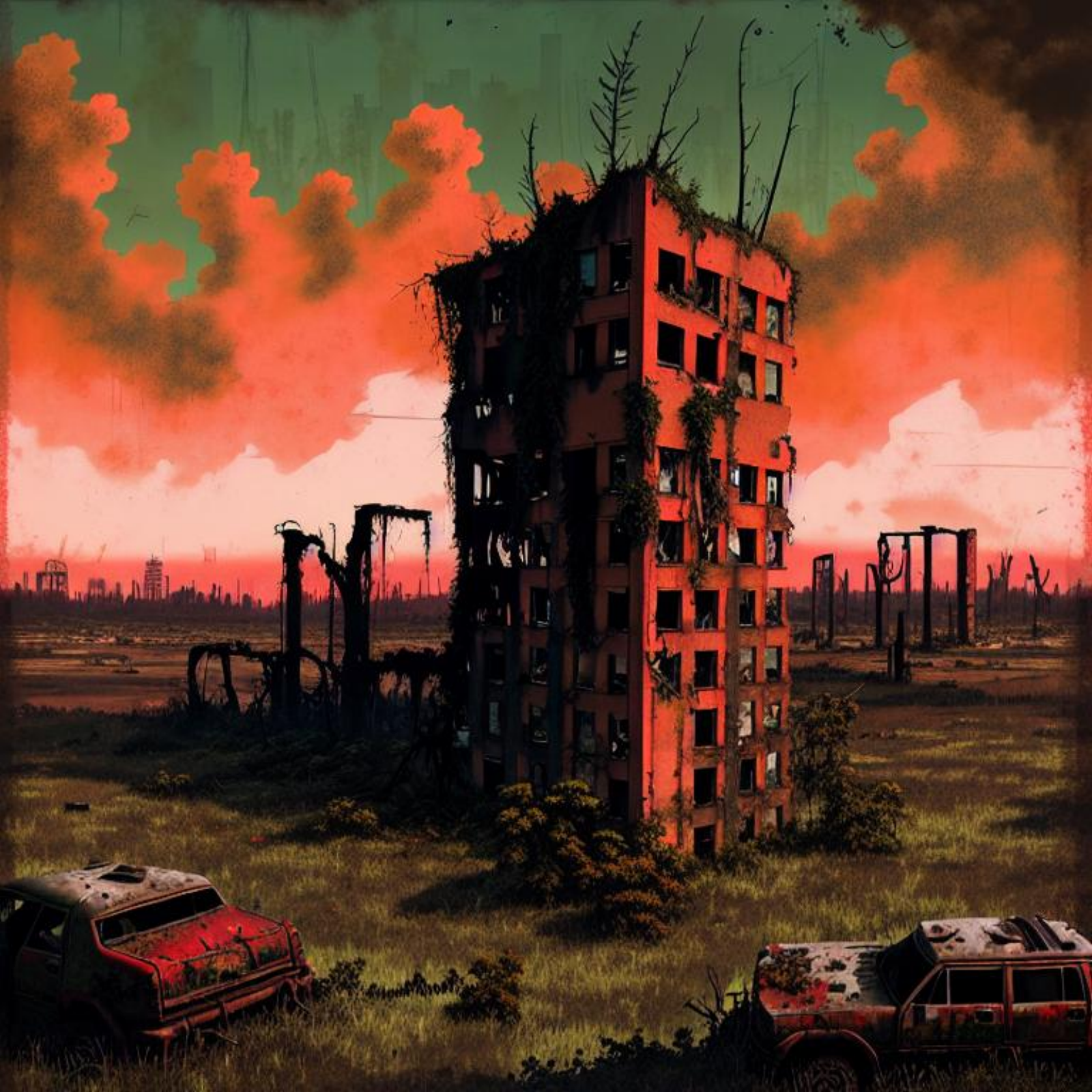}} & 
        \noindent\parbox[c]{0.14\columnwidth}{\includegraphics[width=0.14\columnwidth]{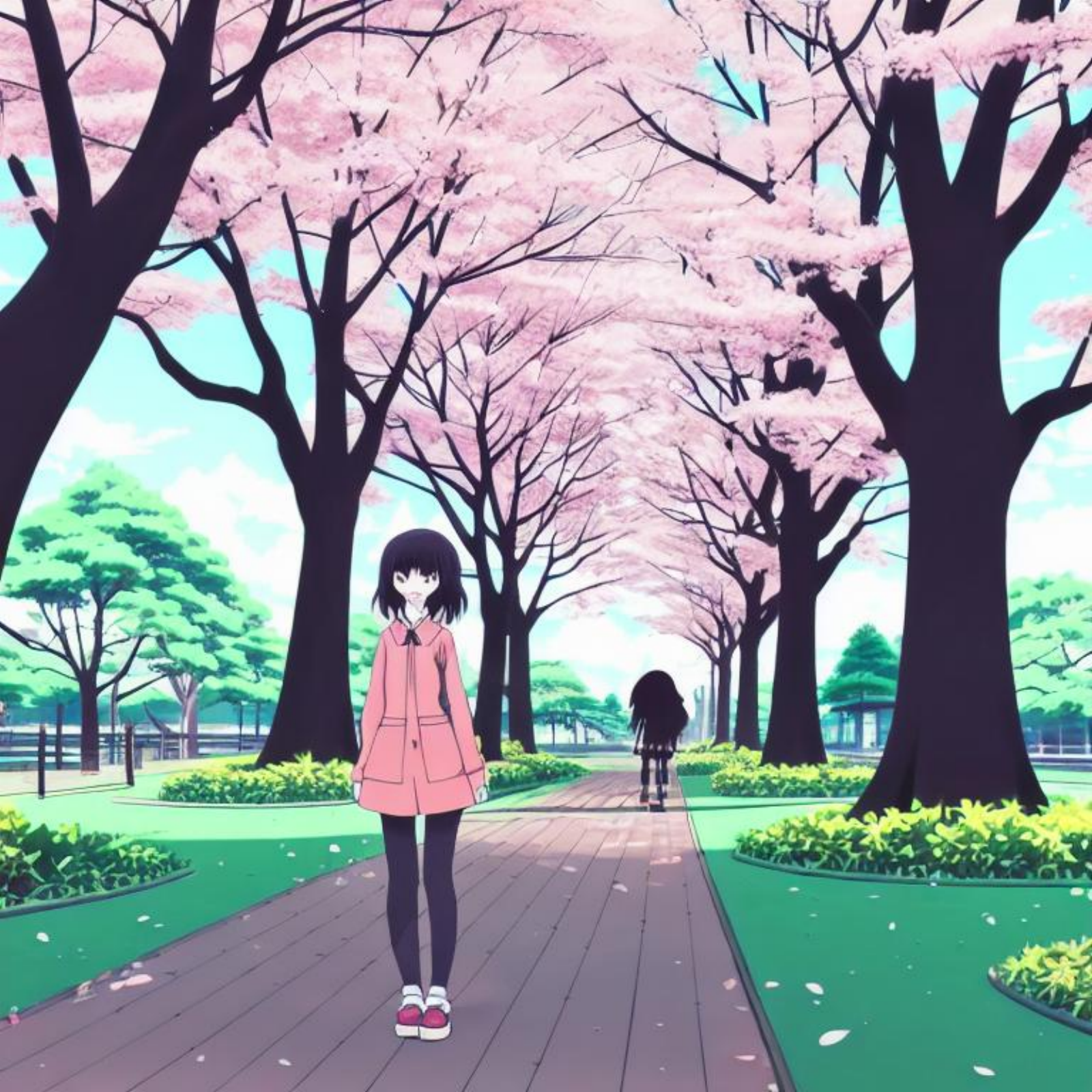}} & 
        \noindent\parbox[c]{0.14\columnwidth}{\includegraphics[width=0.14\columnwidth]{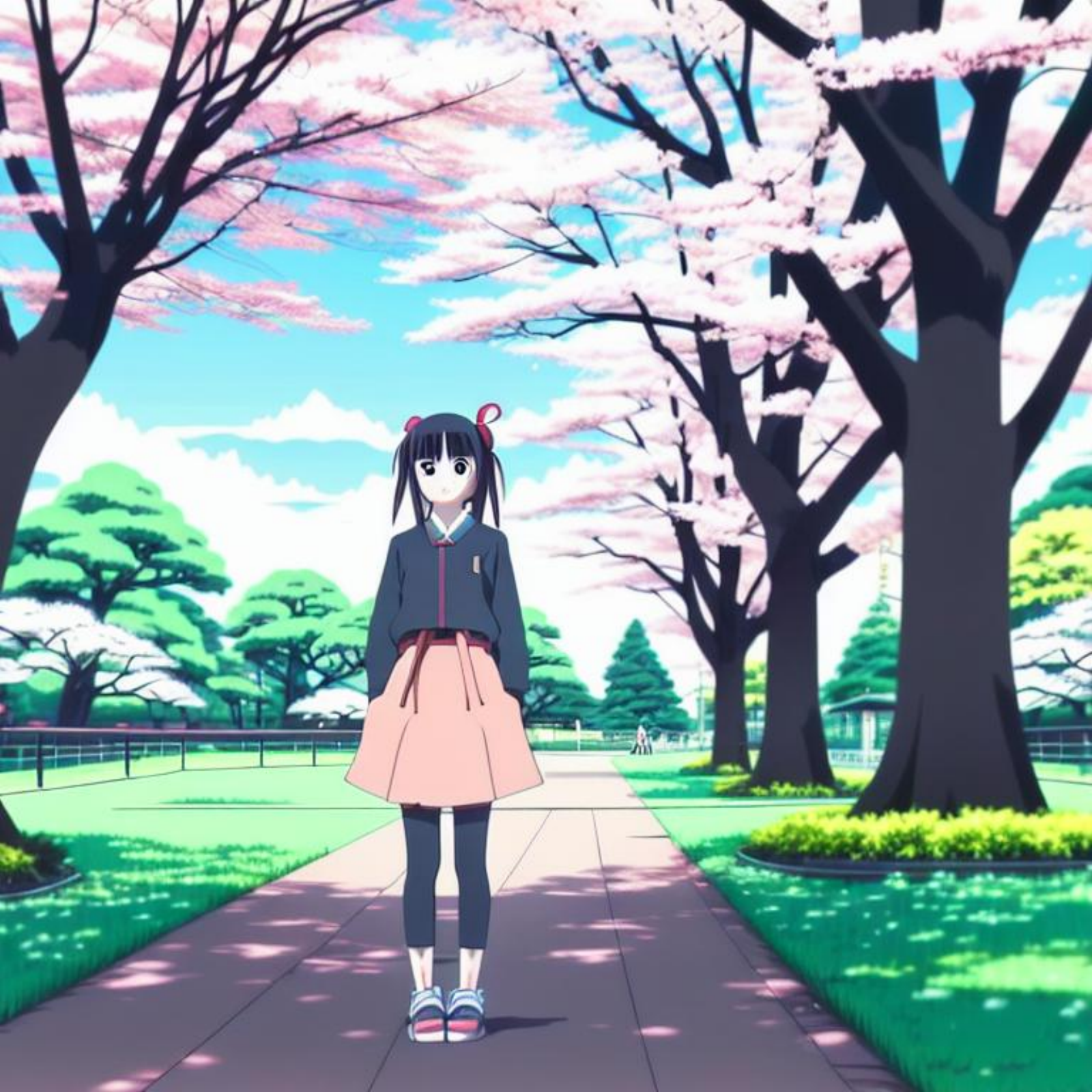}} & 
        \noindent\parbox[c]{0.14\columnwidth}{\includegraphics[width=0.14\columnwidth]{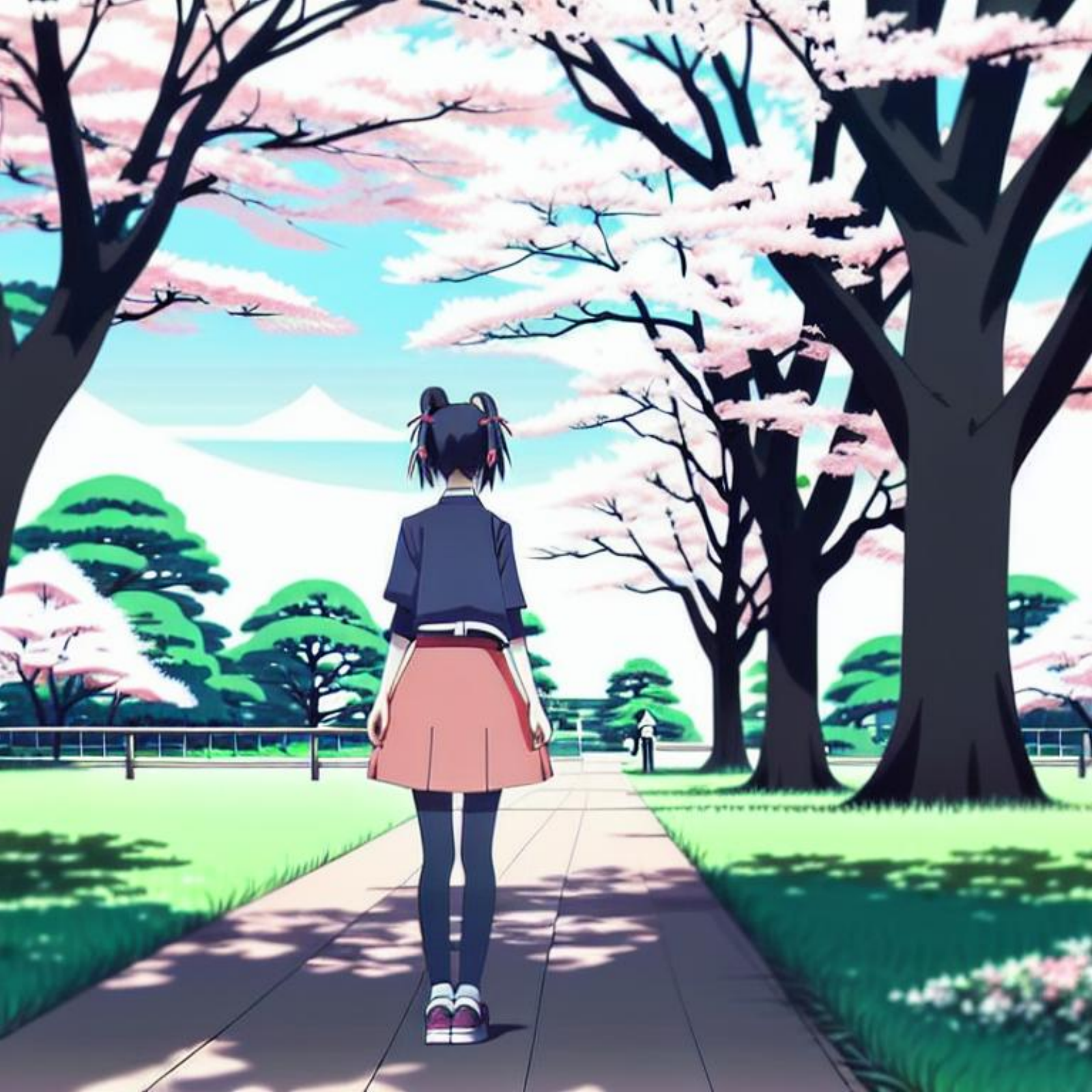}} \\
    \end{tabu}
    \caption{Comparison of samples generated from Dreamlike Photoreal V2.0 using PLMS4 with HB $\beta$ under various sampling steps and guidance scale $s$. Specifically, we employ $\beta = 0.7$ for $s = 7.5$, $\beta = 0.6$ for $s = 15$, and $\beta = 0.6$ for $s = 22.5$ to account for the varying degrees of artifact manifestation associated with each guidance scale.}
    \label{fig:scale_step_dreamlike_hb}
\end{figure}




\tabulinesep=1pt
\begin{figure}
    \centering
    \begin{tabu} to \textwidth {@{}l@{\hspace{5pt}}c@{\hspace{2pt}}c@{\hspace{2pt}}c@{\hspace{4pt}}c@{\hspace{2pt}}c@{\hspace{2pt}}c@{}}
        & \multicolumn{3}{c}{\shortstack{\scriptsize "A post-apocalyptic world with ruined \\ \scriptsize buildings, overgrown vegetation, and a red sky"}}
        & \multicolumn{3}{c}{\shortstack{\scriptsize "A girl standing in a park in \\ \scriptsize Japanese animation style"}} \\

        & \multicolumn{1}{c}{\shortstack{\scriptsize $s = 7.5$}}
        & \multicolumn{1}{c}{\shortstack{\scriptsize $s = 15$}}
        & \multicolumn{1}{c}{\shortstack{\scriptsize $s = 22.5$}}
        & \multicolumn{1}{c}{\shortstack{\scriptsize $s = 7.5$}}
        & \multicolumn{1}{c}{\shortstack{\scriptsize $s = 15$}}
        & \multicolumn{1}{c}{\shortstack{\scriptsize $s = 22.5$}}
        \\
        
        \shortstack[l]{\tiny 10 steps} &
        \noindent\parbox[c]{0.14\columnwidth}{\includegraphics[width=0.14\columnwidth]{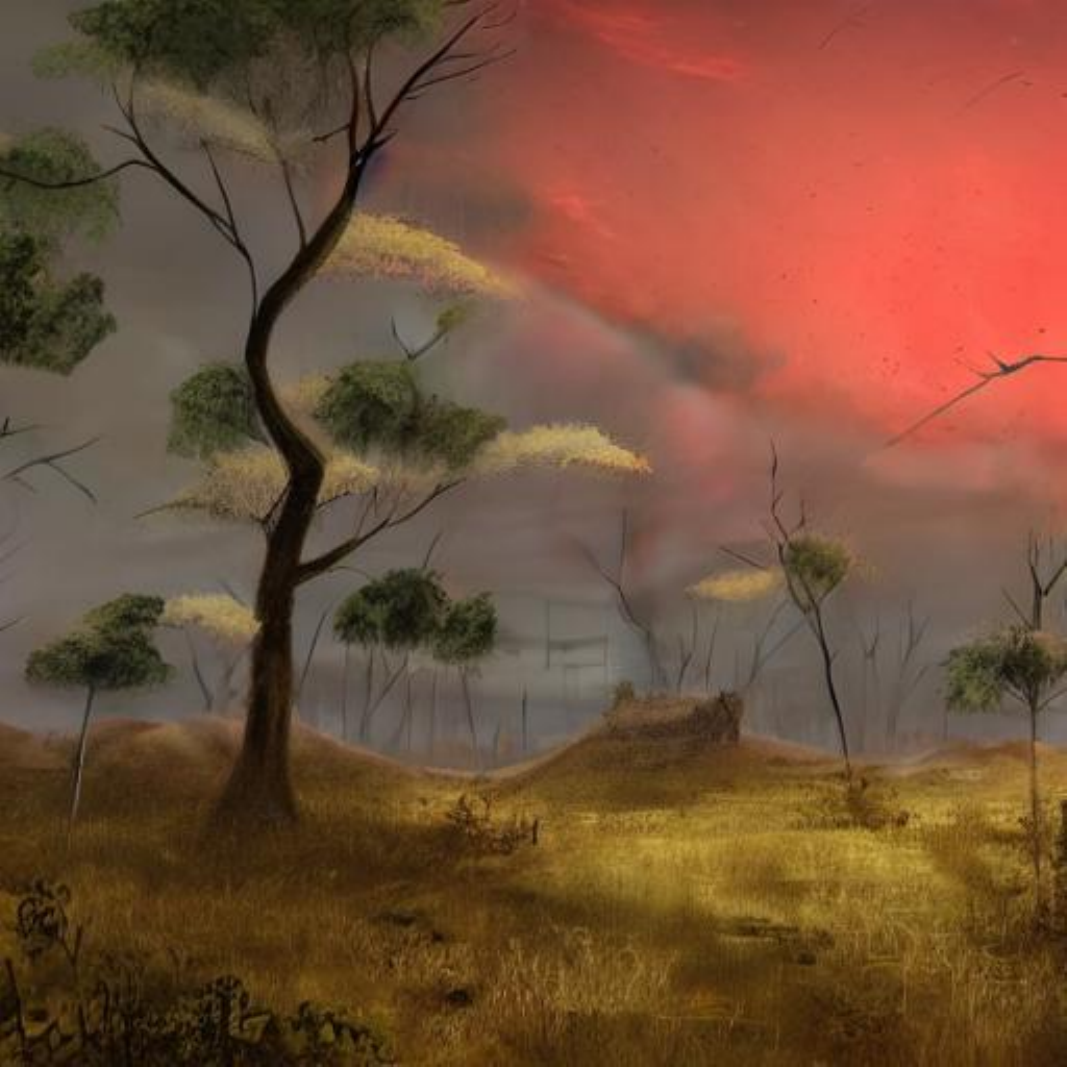}} & 
        \noindent\parbox[c]{0.14\columnwidth}{\includegraphics[width=0.14\columnwidth]{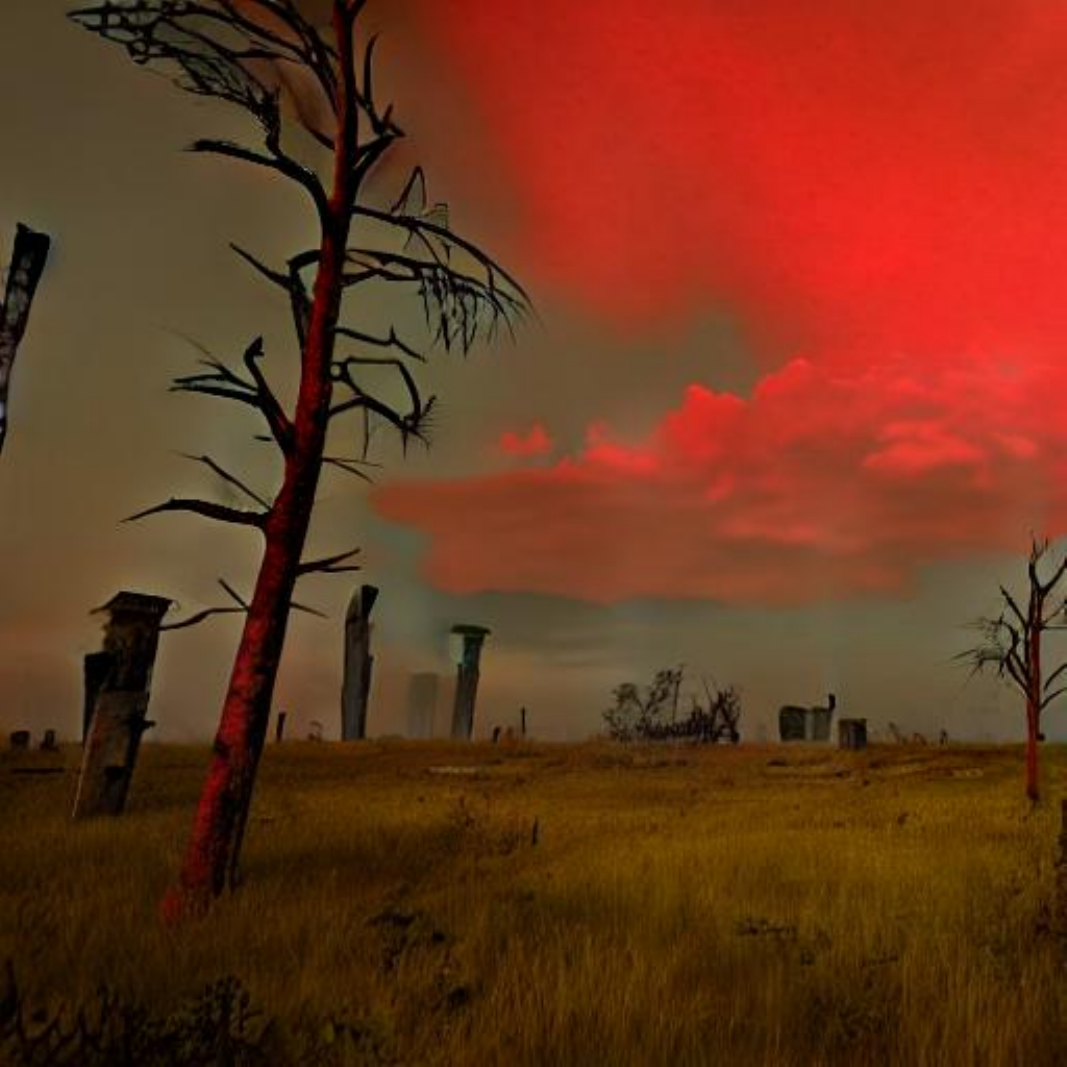}} & 
        \noindent\parbox[c]{0.14\columnwidth}{\includegraphics[width=0.14\columnwidth]{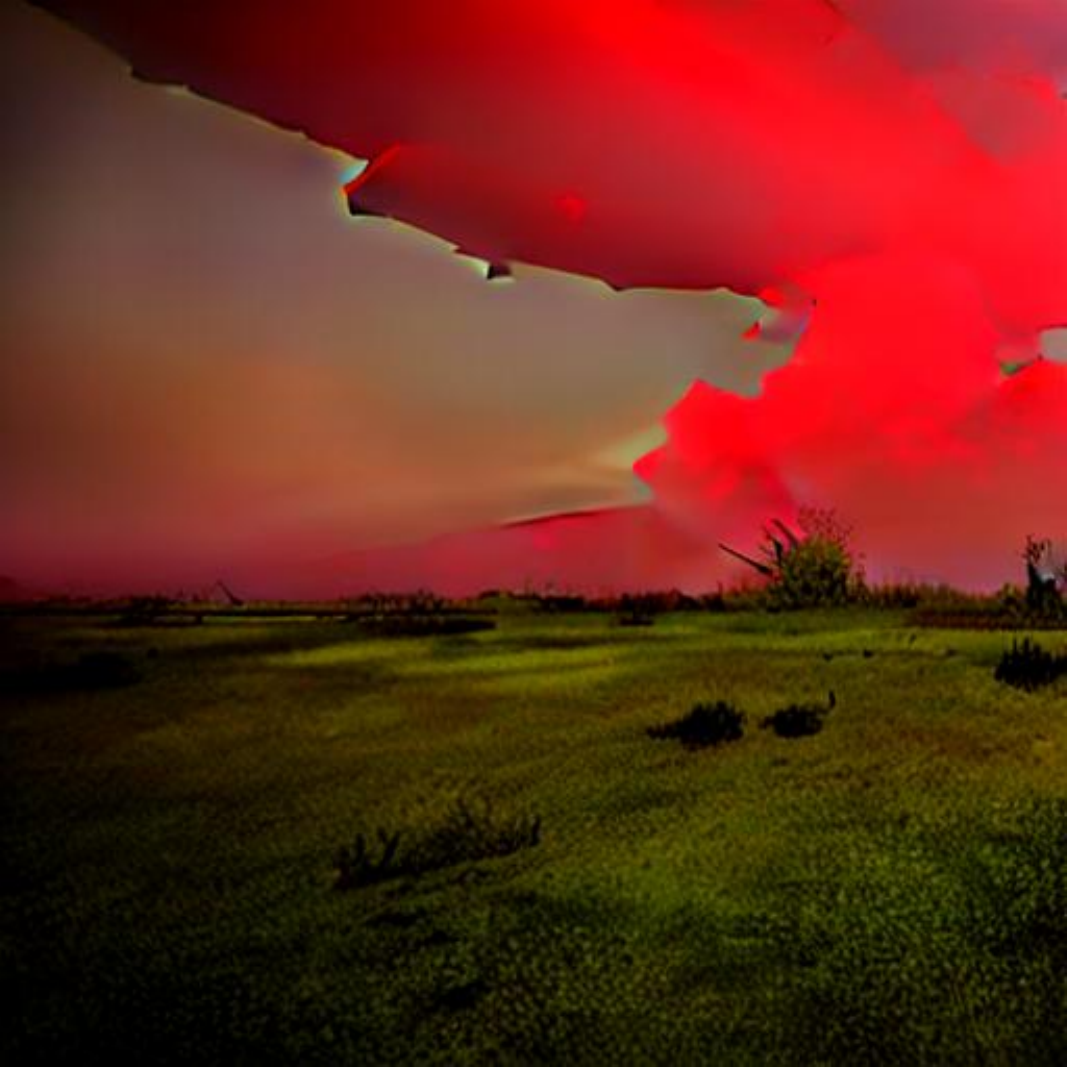}} & 
        \noindent\parbox[c]{0.14\columnwidth}{\includegraphics[width=0.14\columnwidth]{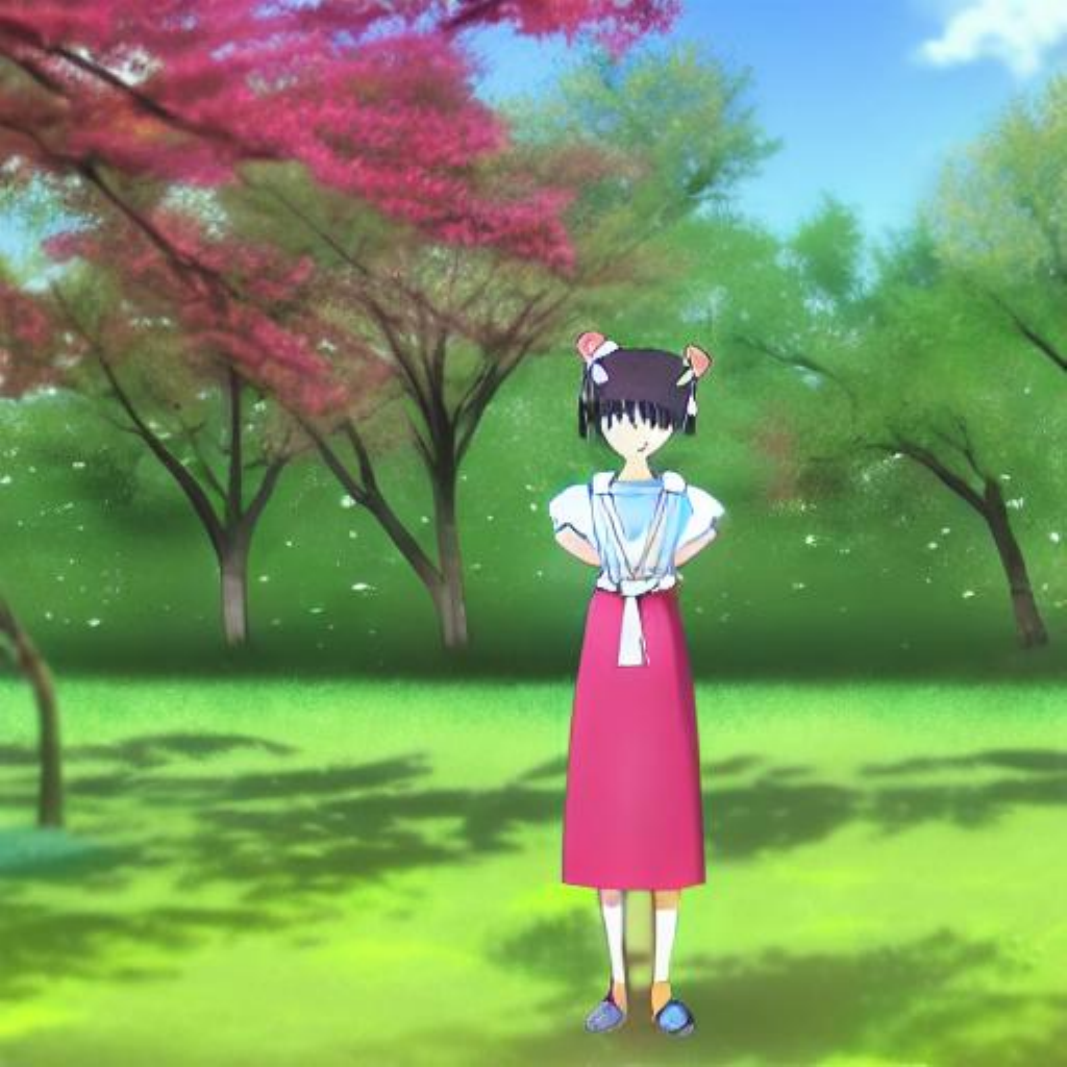}} & 
        \noindent\parbox[c]{0.14\columnwidth}{\includegraphics[width=0.14\columnwidth]{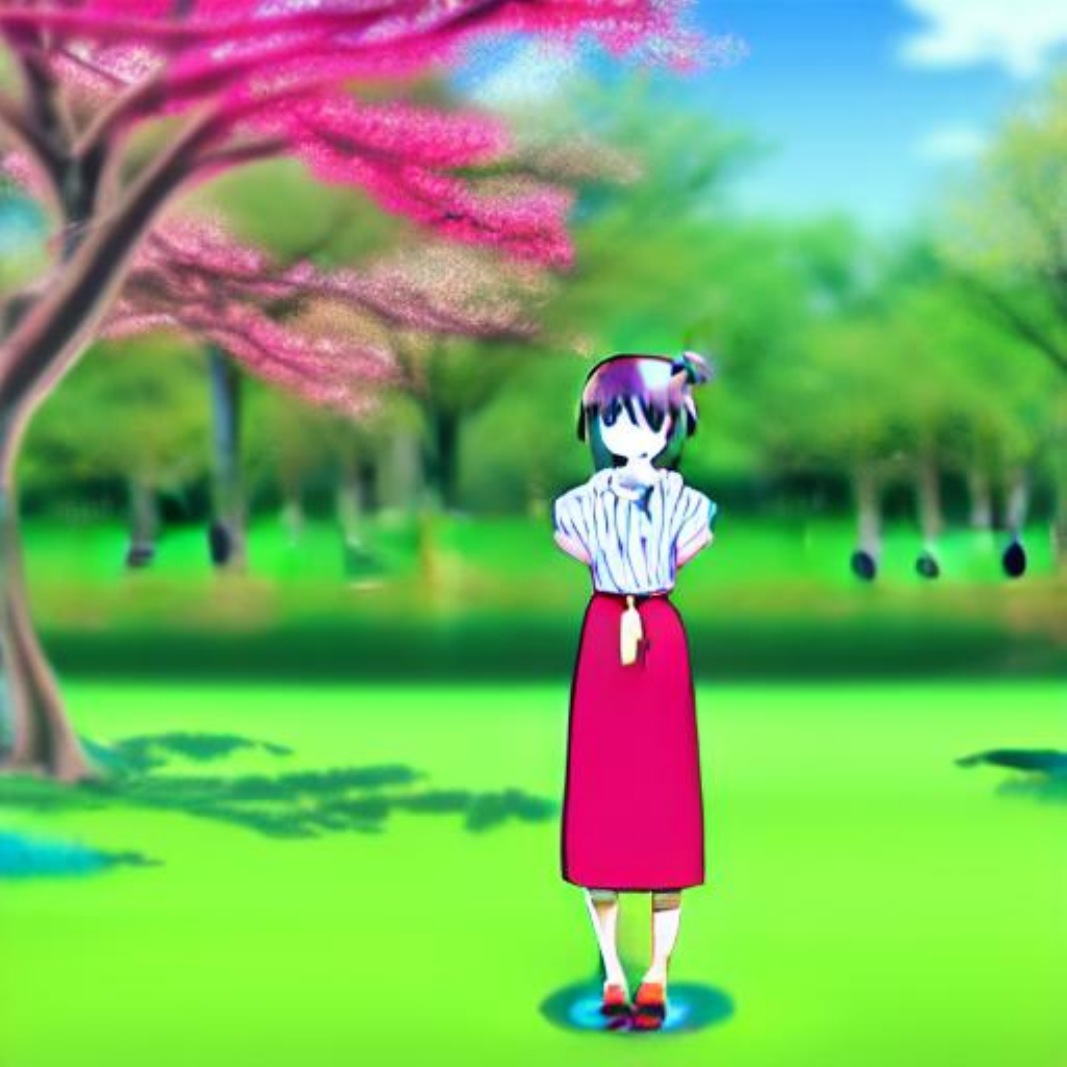}} & 
        \noindent\parbox[c]{0.14\columnwidth}{\includegraphics[width=0.14\columnwidth]{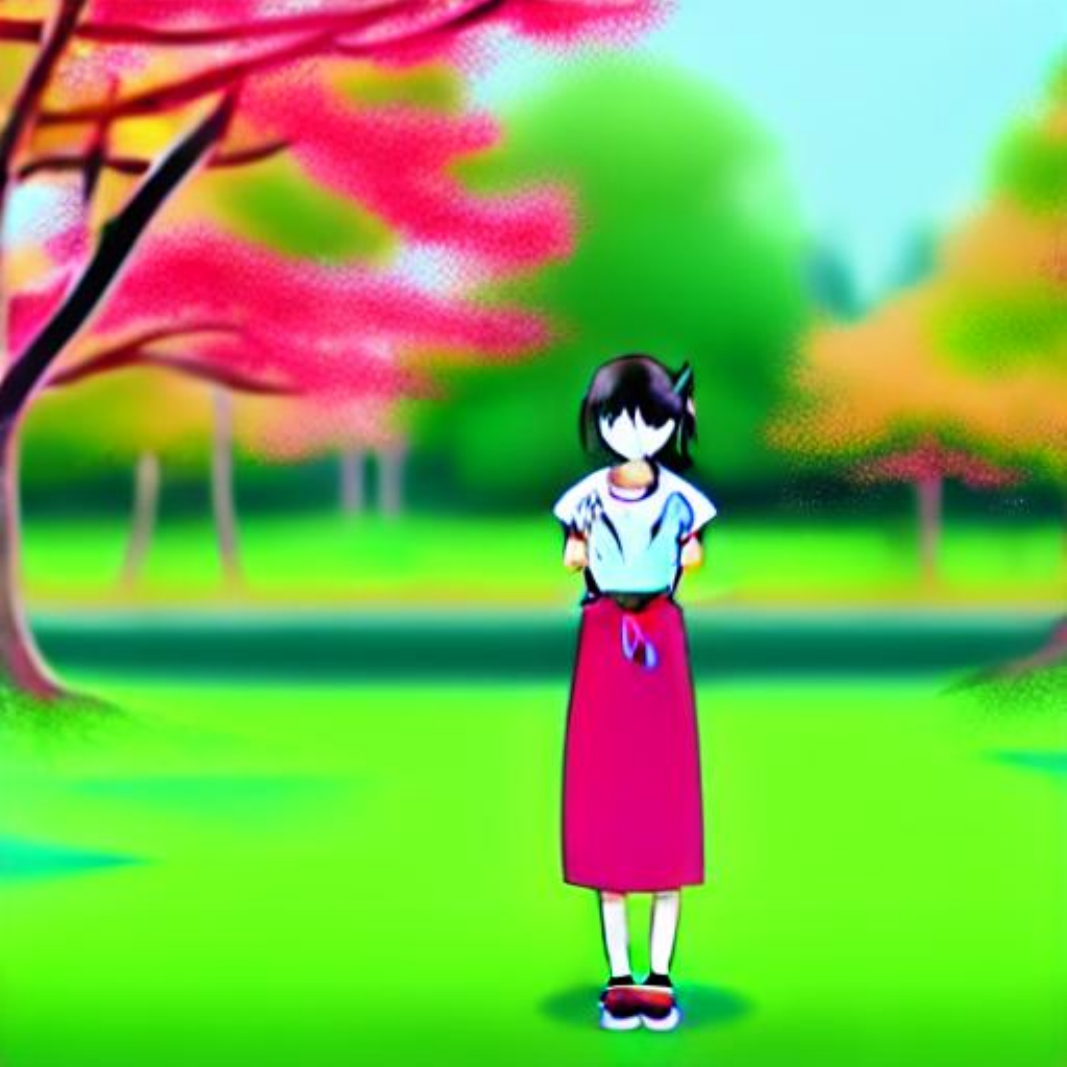}} \\

        \shortstack[l]{\tiny 15 steps} &
        \noindent\parbox[c]{0.14\columnwidth}{\includegraphics[width=0.14\columnwidth]{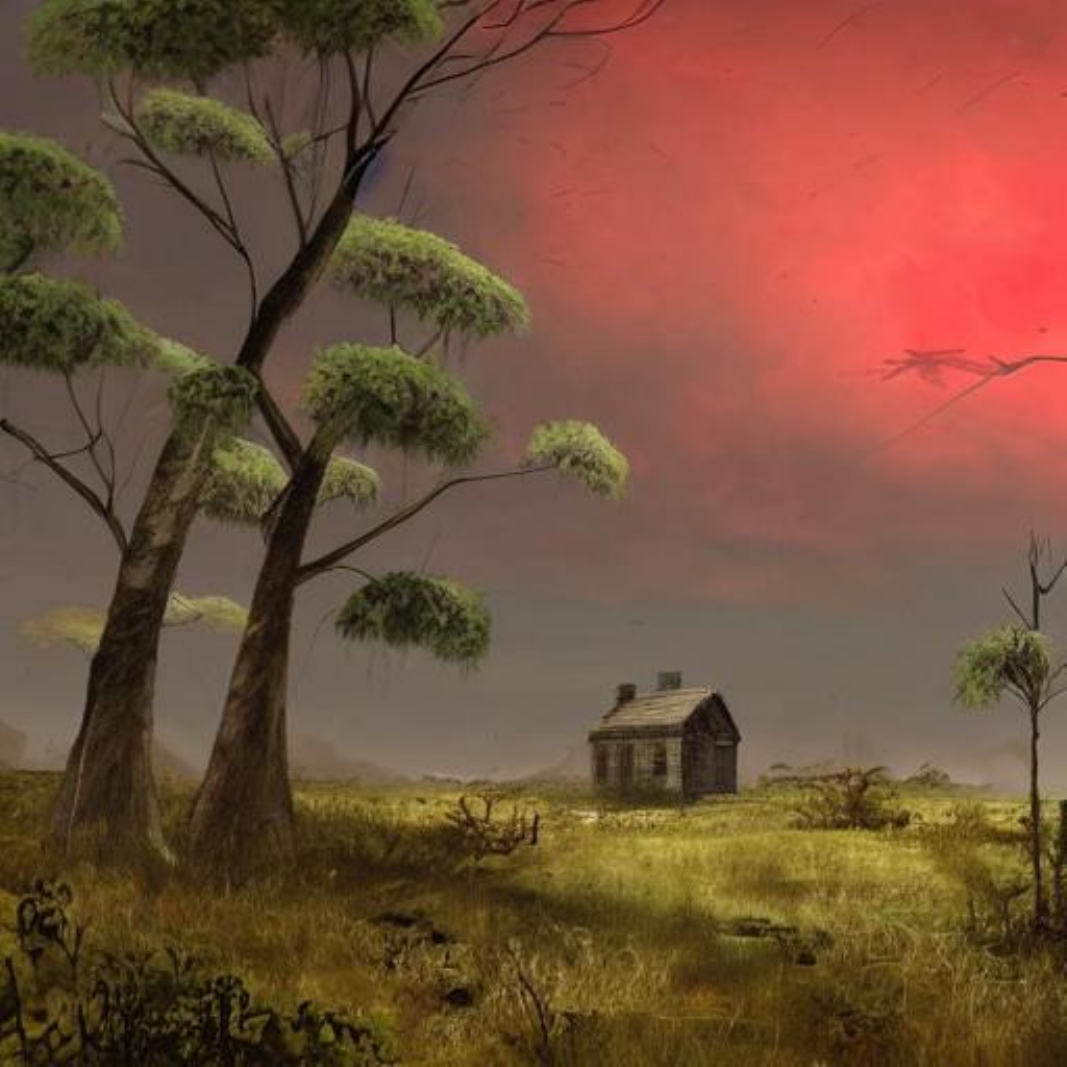}} & 
        \noindent\parbox[c]{0.14\columnwidth}{\includegraphics[width=0.14\columnwidth]{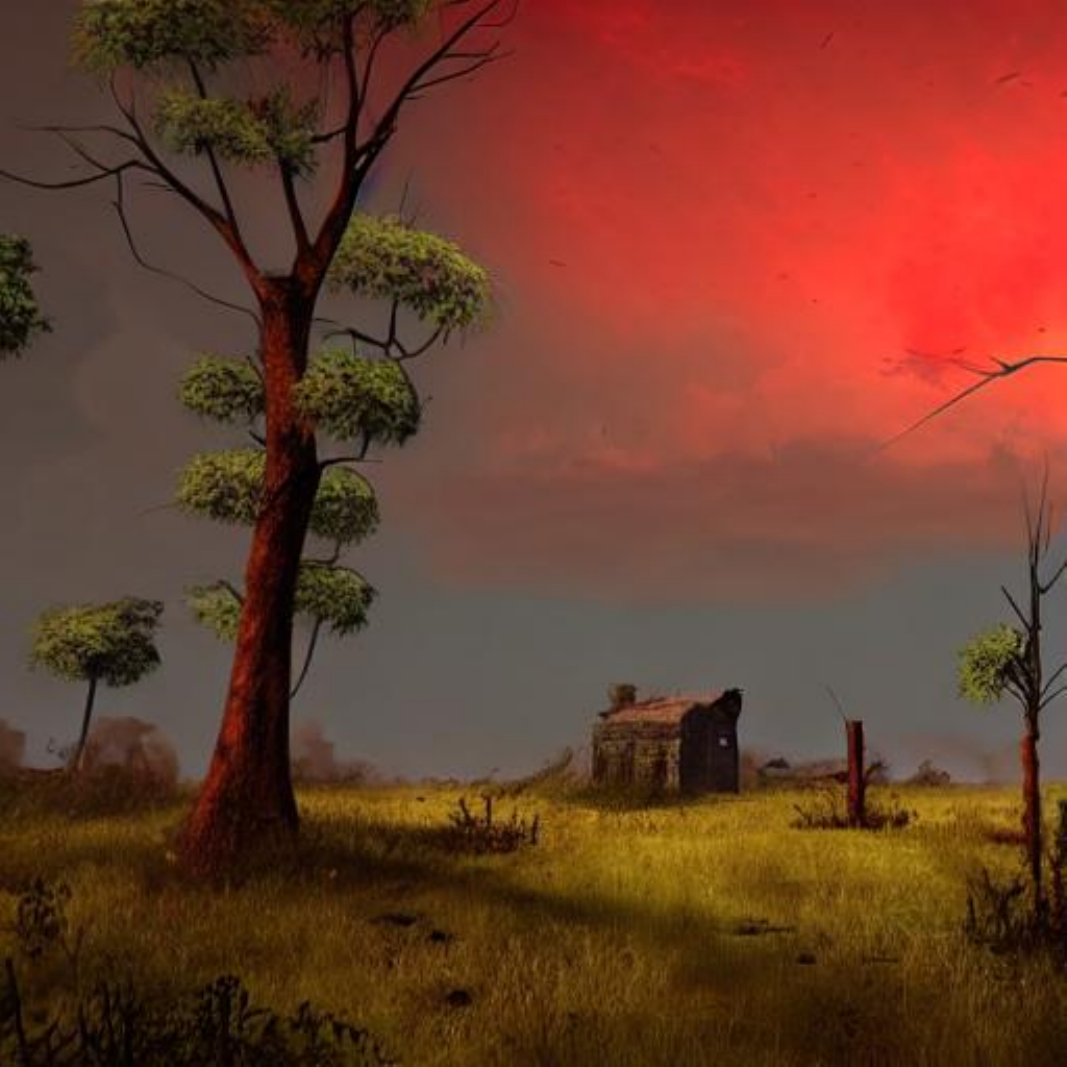}} & 
        \noindent\parbox[c]{0.14\columnwidth}{\includegraphics[width=0.14\columnwidth]{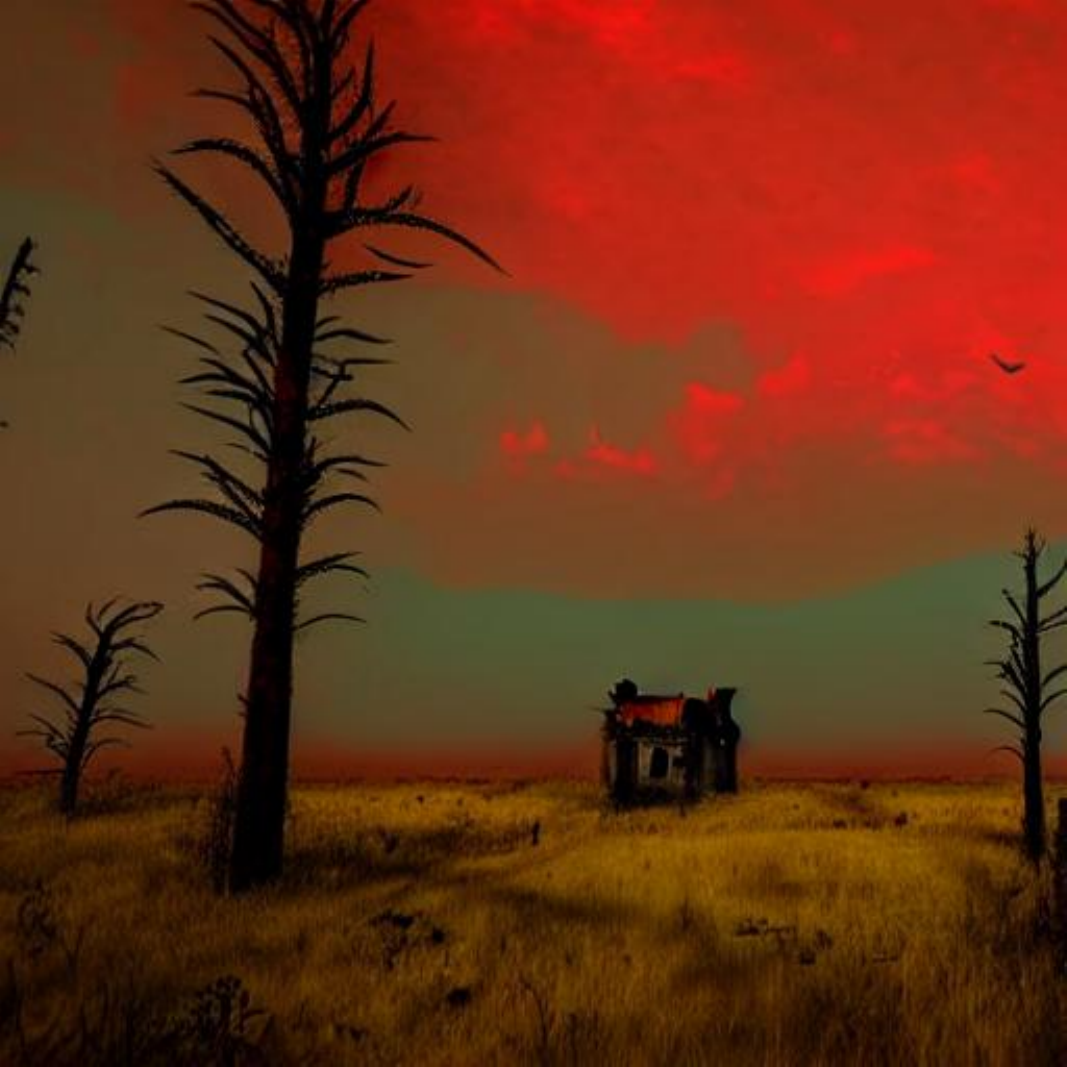}} & 
        \noindent\parbox[c]{0.14\columnwidth}{\includegraphics[width=0.14\columnwidth]{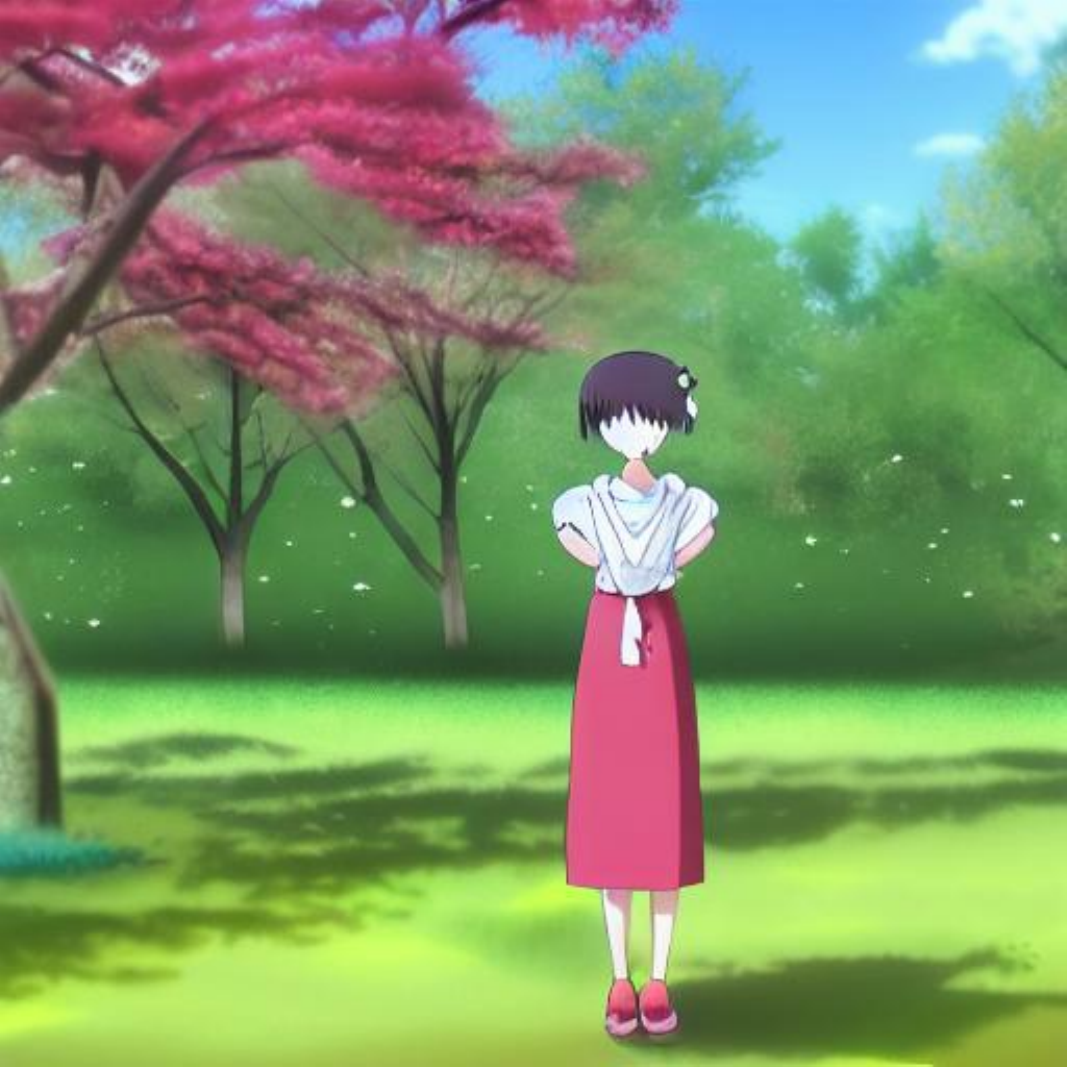}} & 
        \noindent\parbox[c]{0.14\columnwidth}{\includegraphics[width=0.14\columnwidth]{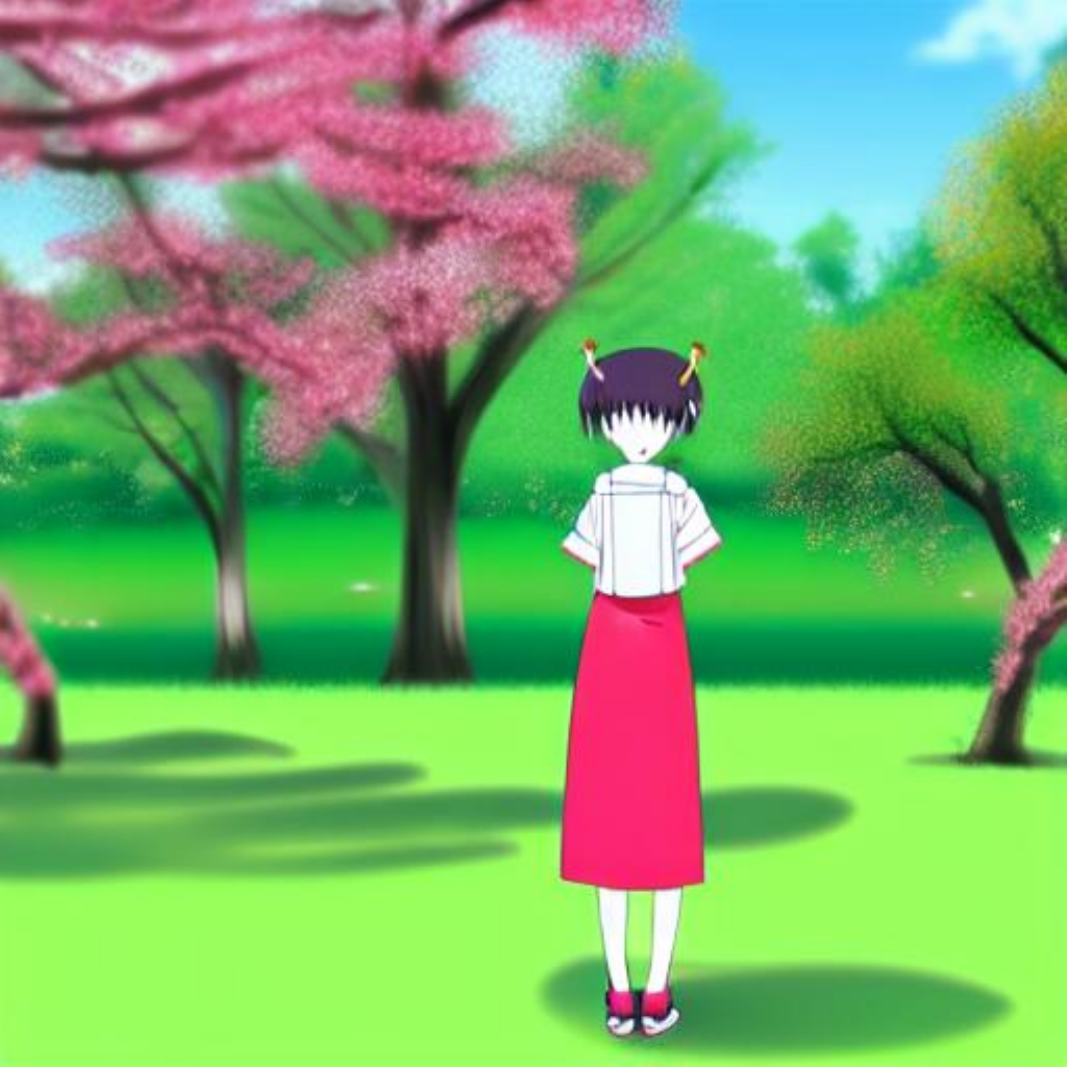}} & 
        \noindent\parbox[c]{0.14\columnwidth}{\includegraphics[width=0.14\columnwidth]{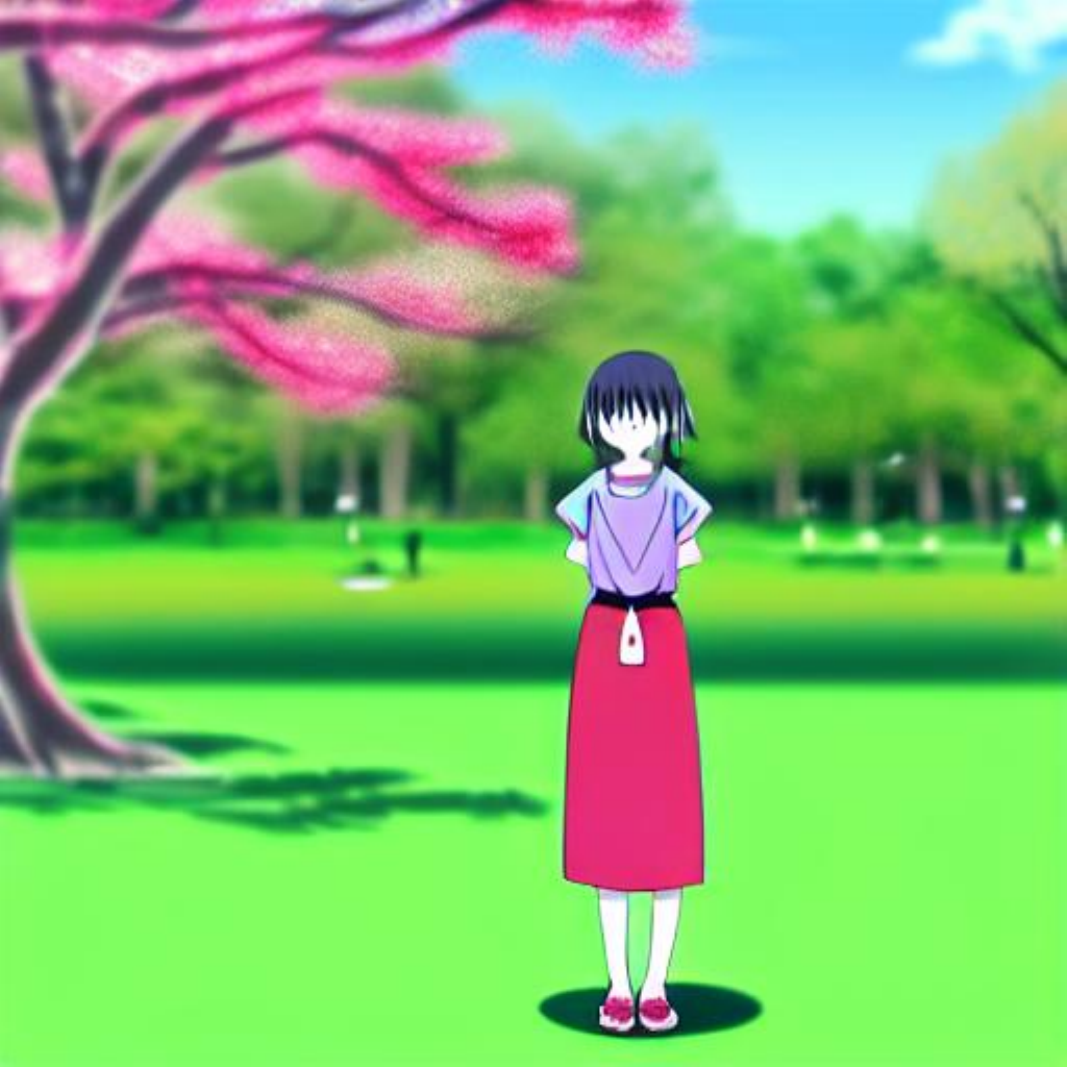}} \\

        \shortstack[l]{\tiny 20 steps} &
        \noindent\parbox[c]{0.14\columnwidth}{\includegraphics[width=0.14\columnwidth]{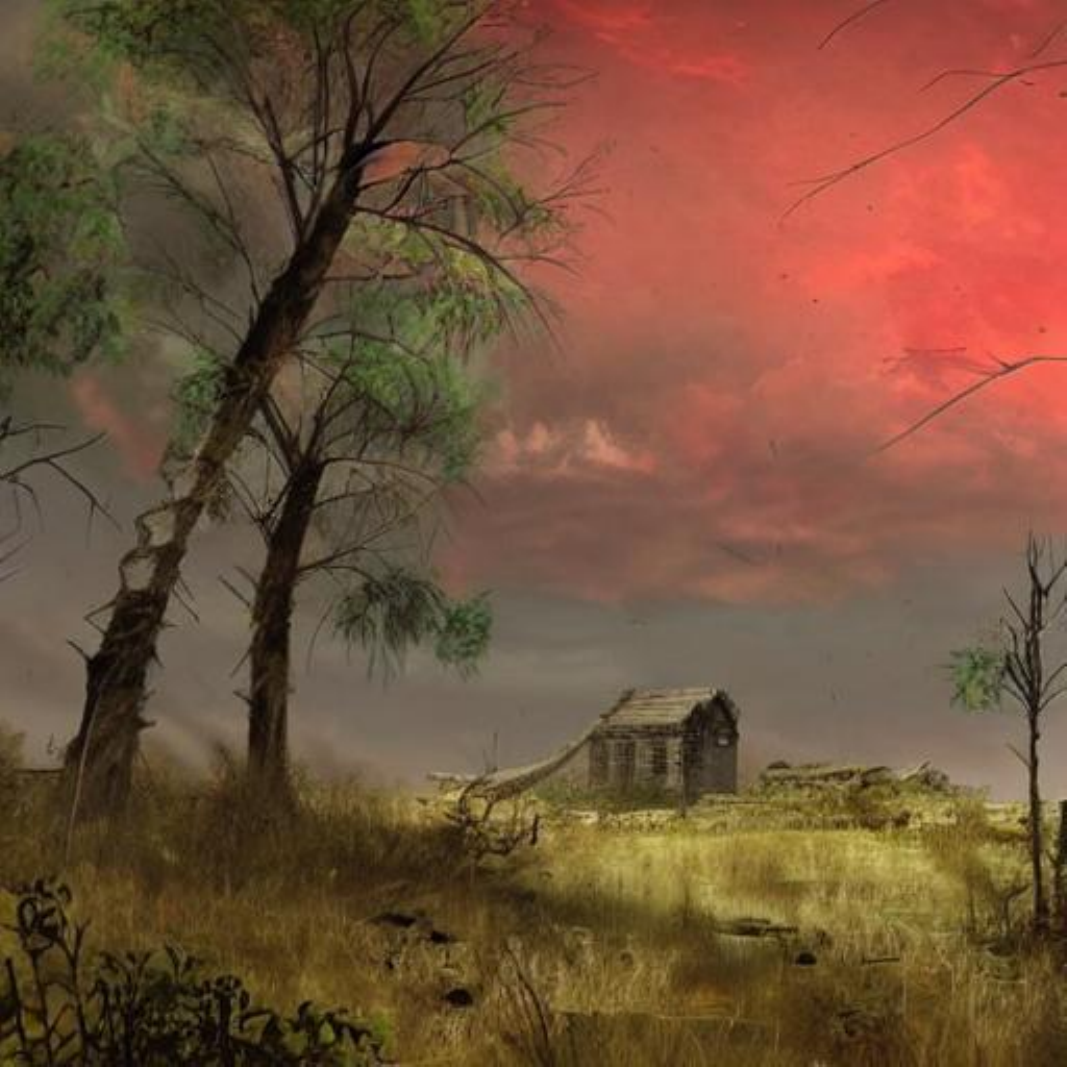}} & 
        \noindent\parbox[c]{0.14\columnwidth}{\includegraphics[width=0.14\columnwidth]{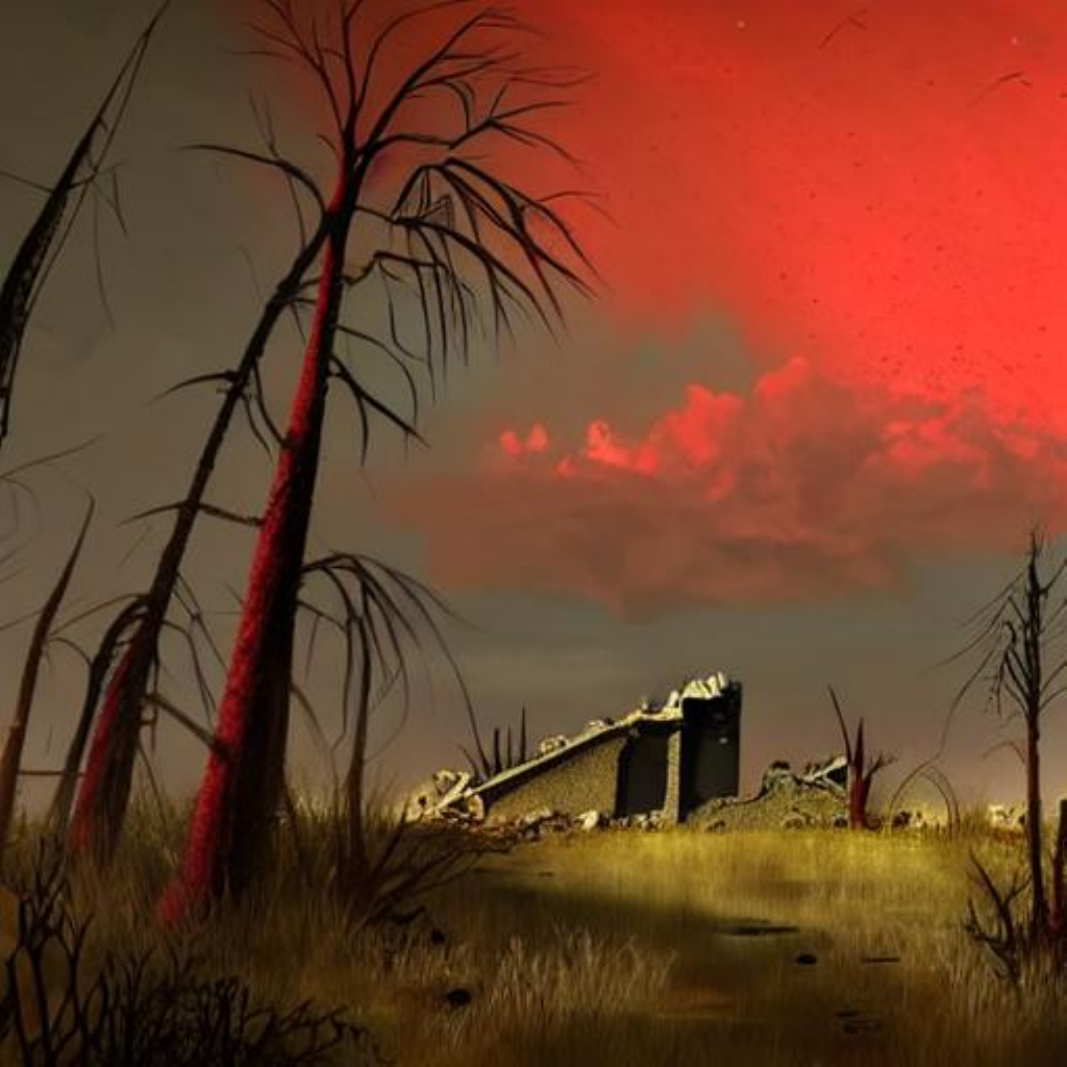}} & 
        \noindent\parbox[c]{0.14\columnwidth}{\includegraphics[width=0.14\columnwidth]{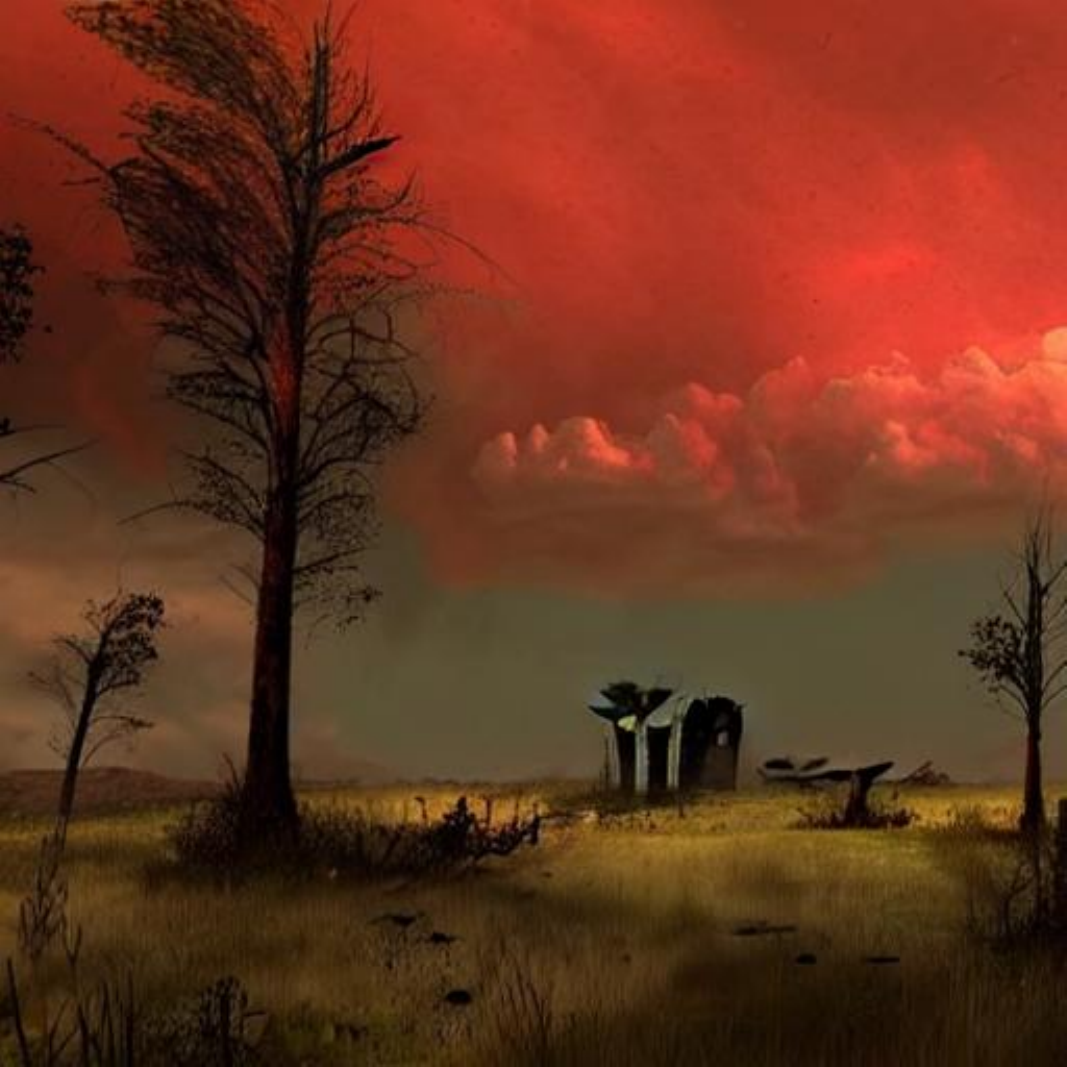}} & 
        \noindent\parbox[c]{0.14\columnwidth}{\includegraphics[width=0.14\columnwidth]{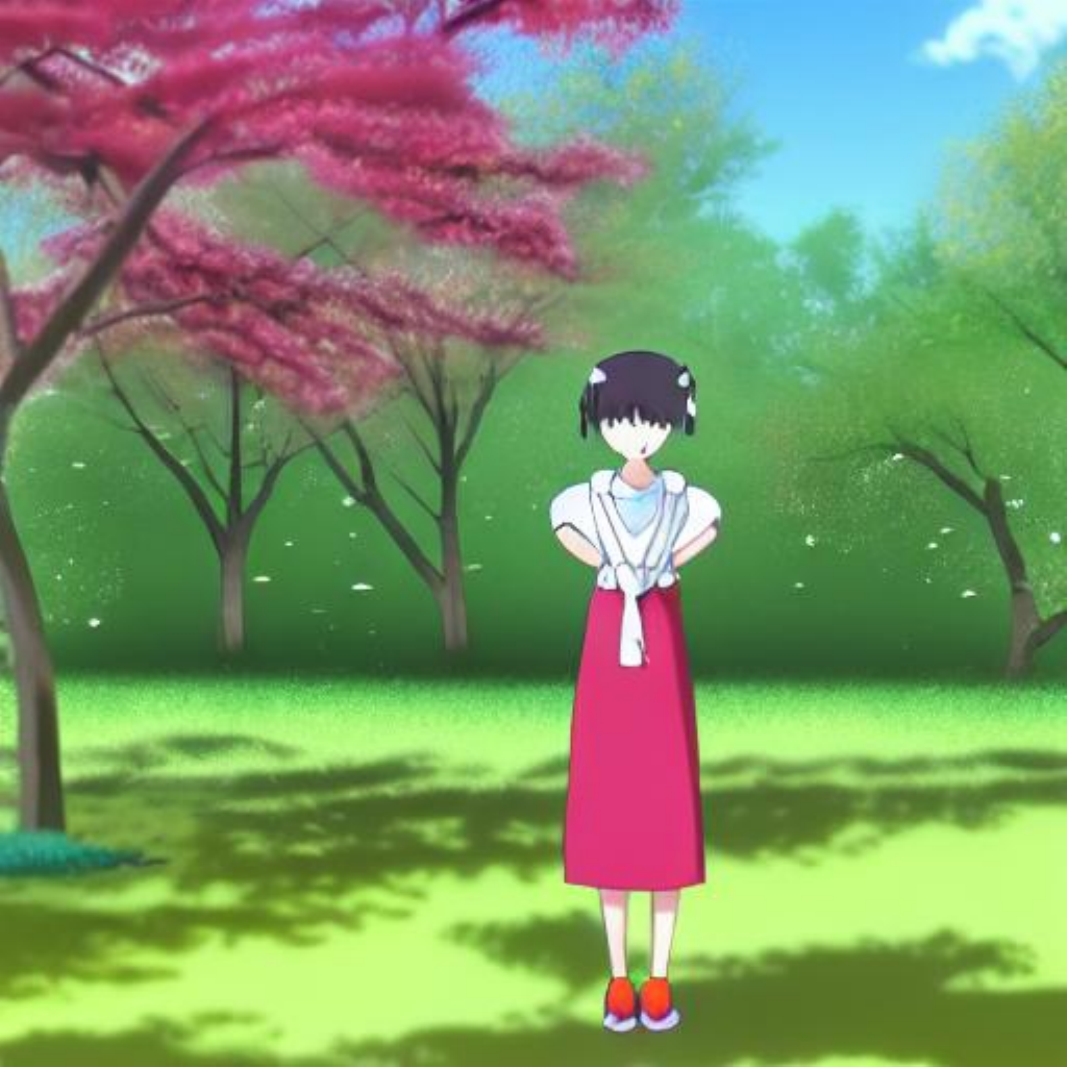}} & 
        \noindent\parbox[c]{0.14\columnwidth}{\includegraphics[width=0.14\columnwidth]{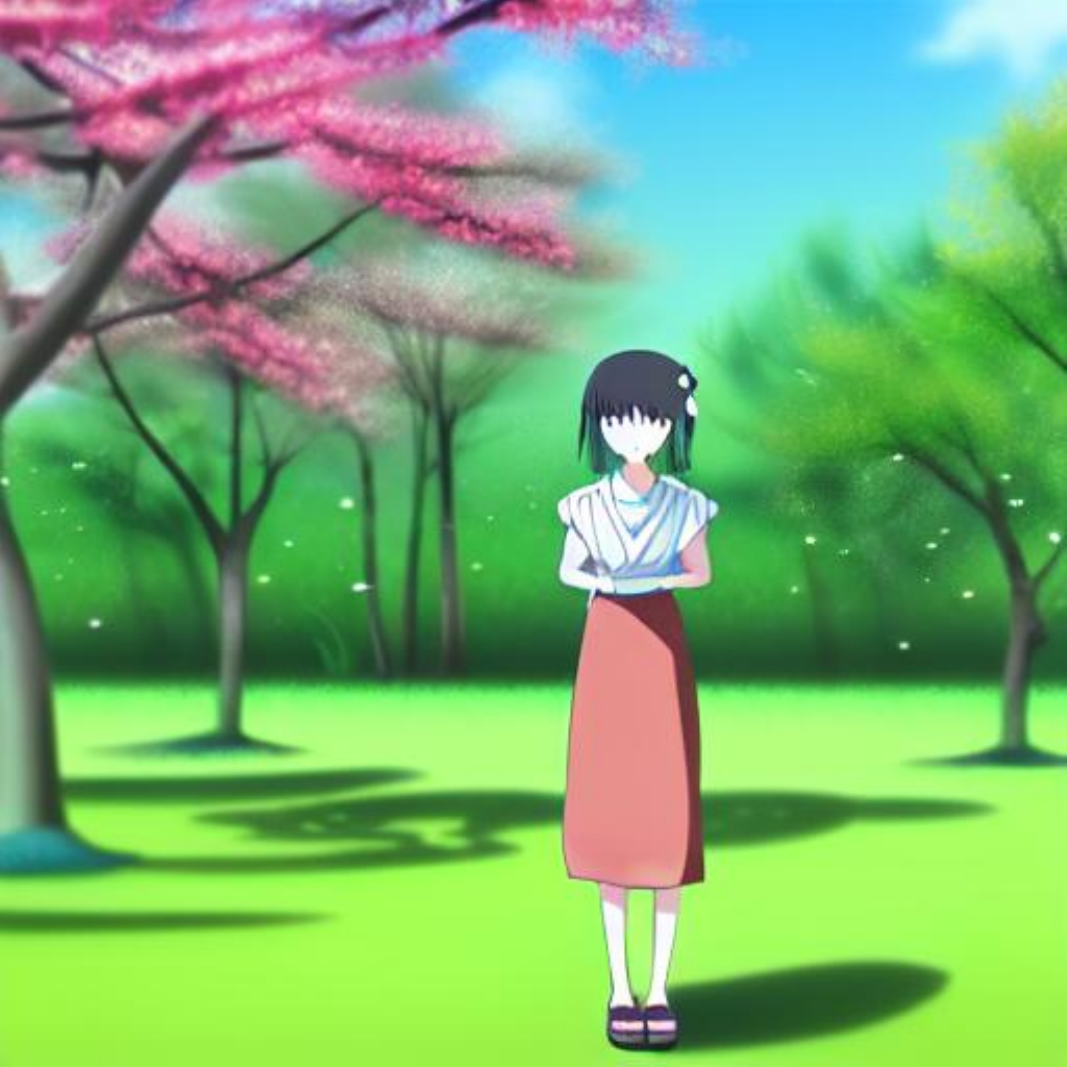}} & 
        \noindent\parbox[c]{0.14\columnwidth}{\includegraphics[width=0.14\columnwidth]{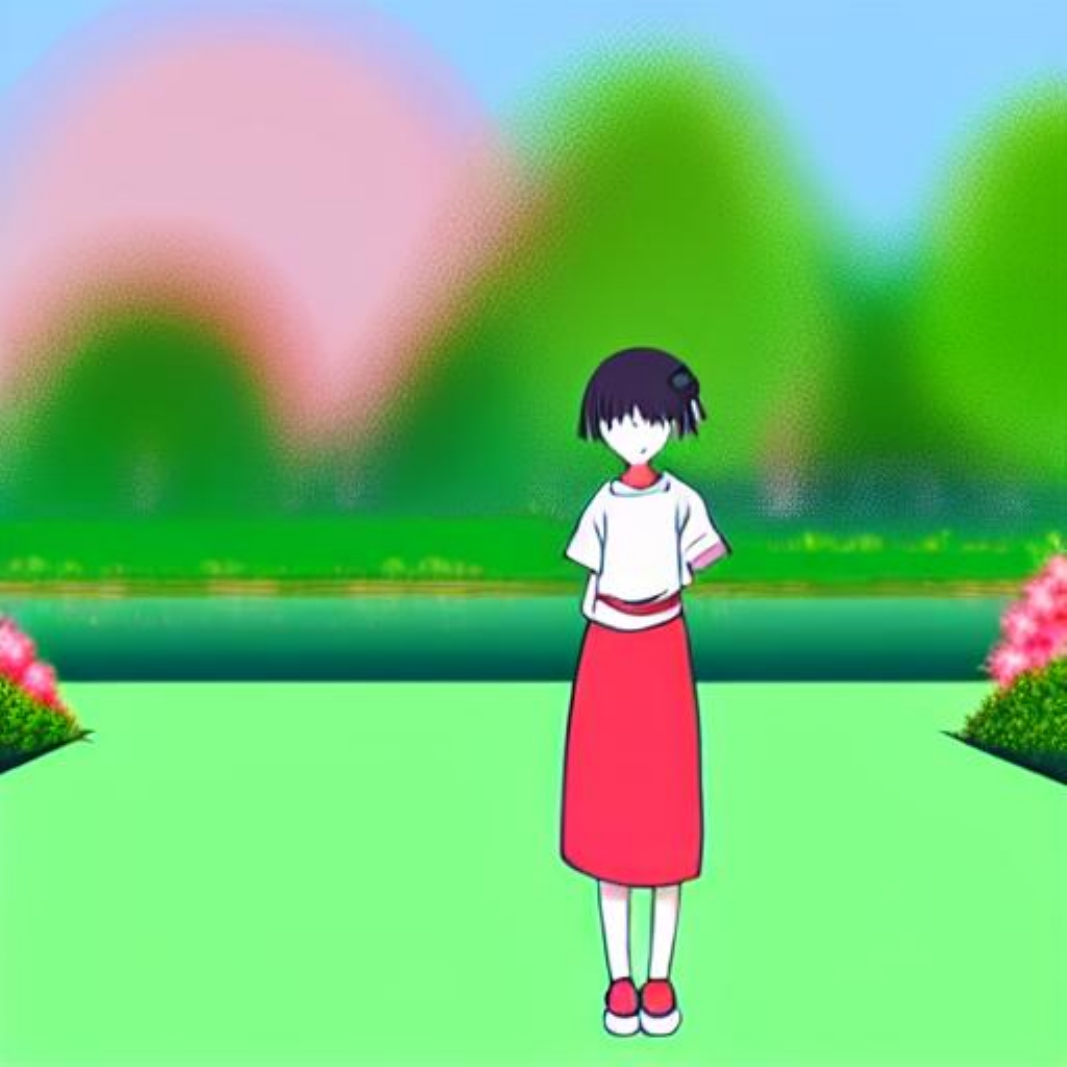}} \\

        \shortstack[l]{\tiny 40 steps} &
        \noindent\parbox[c]{0.14\columnwidth}{\includegraphics[width=0.14\columnwidth]{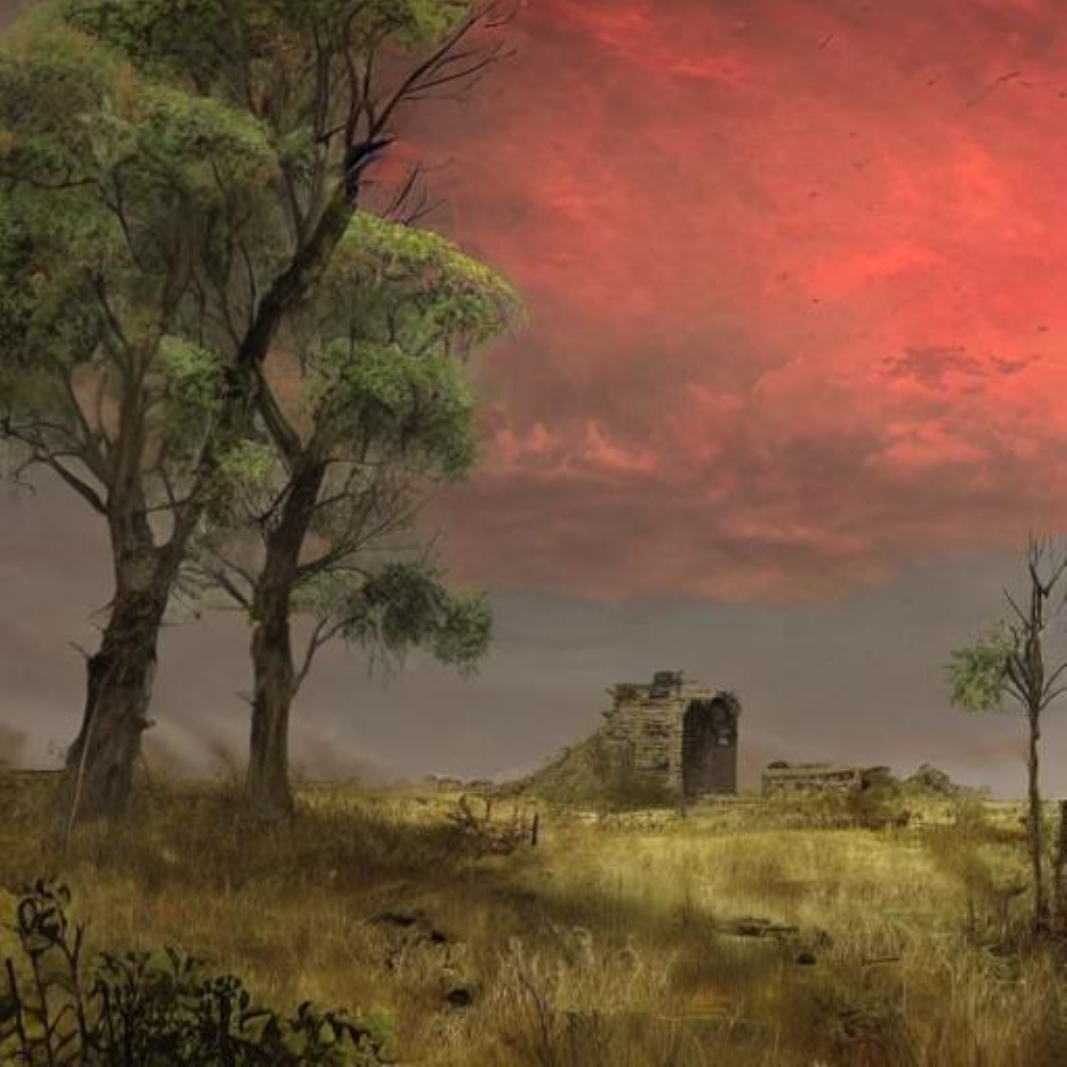}} & 
        \noindent\parbox[c]{0.14\columnwidth}{\includegraphics[width=0.14\columnwidth]{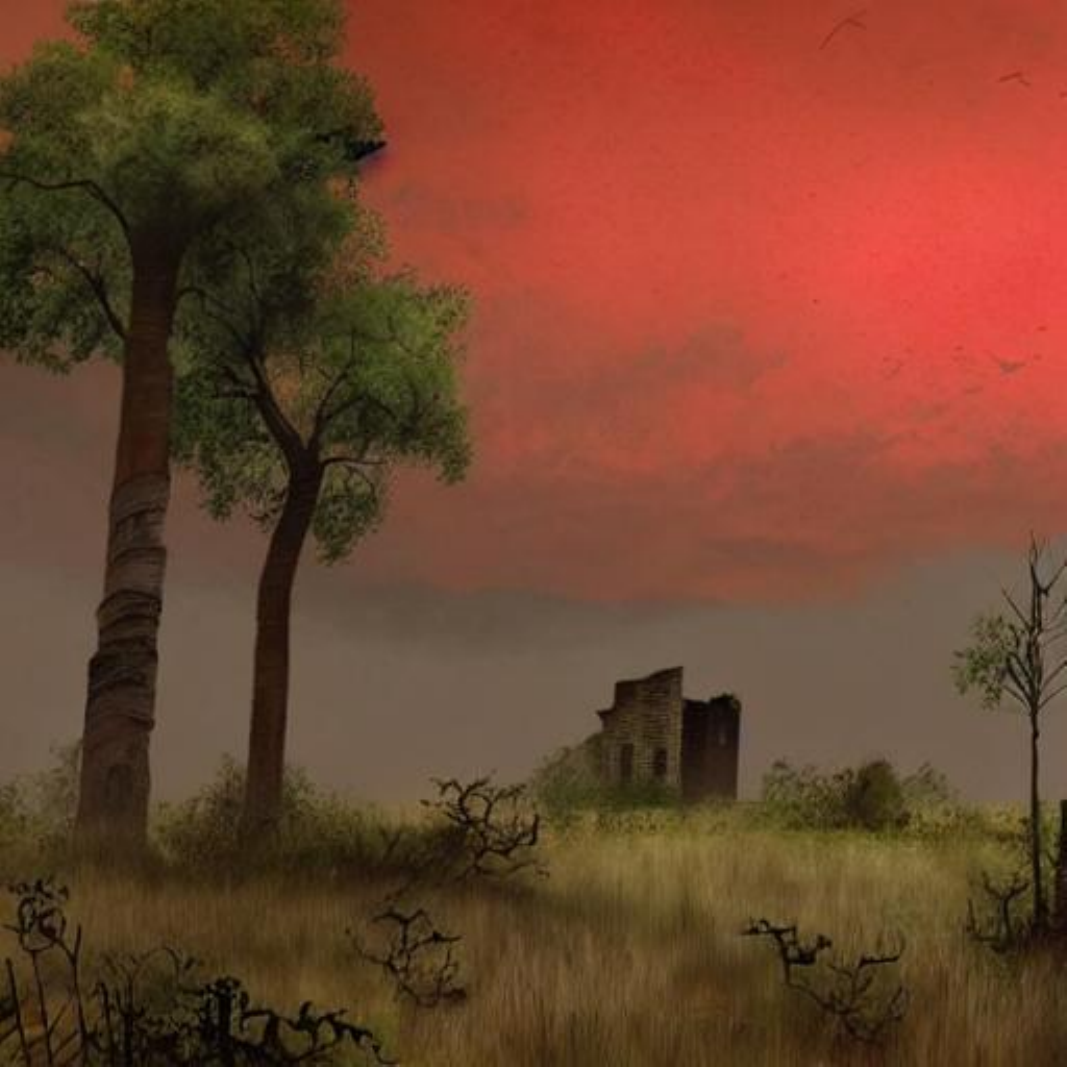}} & 
        \noindent\parbox[c]{0.14\columnwidth}{\includegraphics[width=0.14\columnwidth]{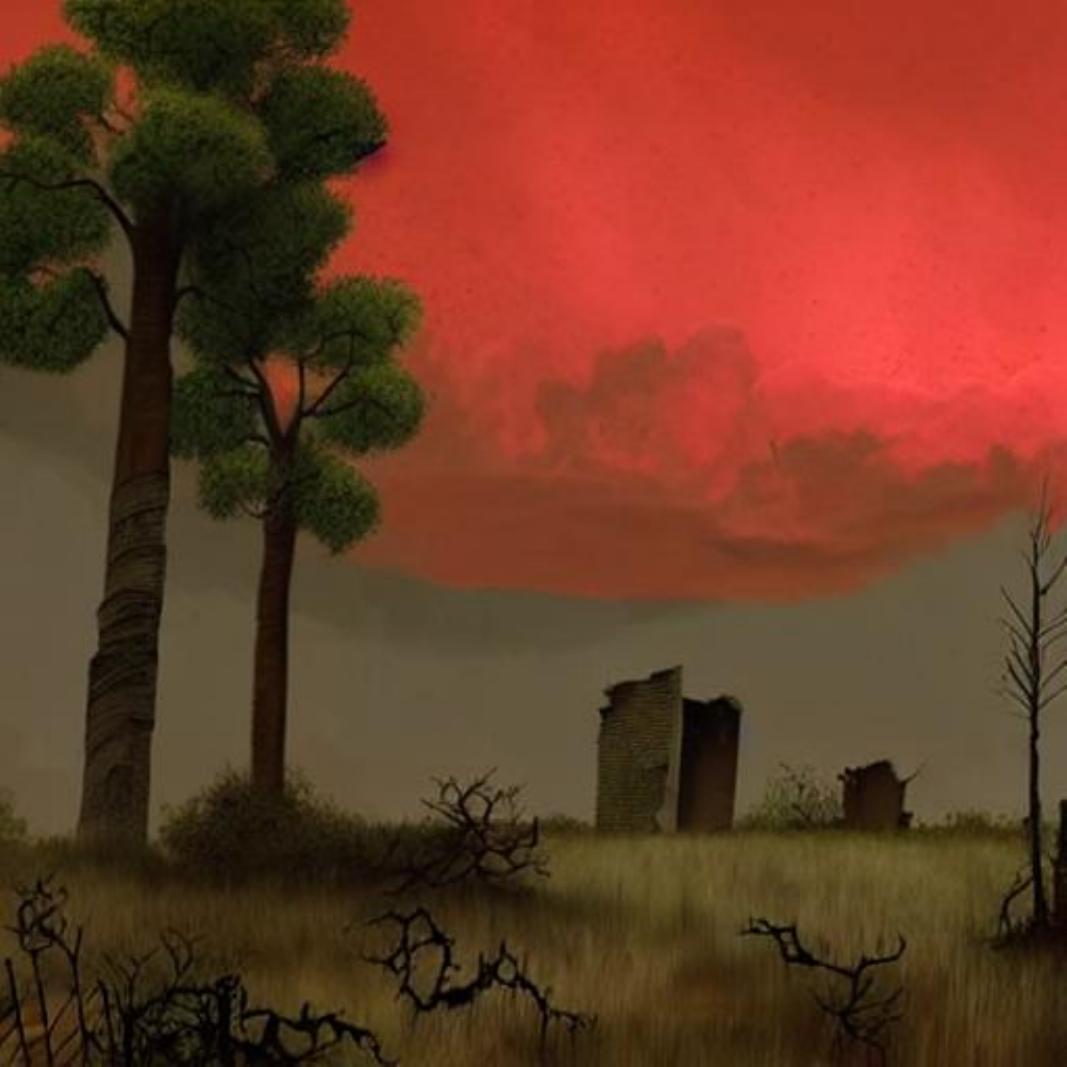}} & 
        \noindent\parbox[c]{0.14\columnwidth}{\includegraphics[width=0.14\columnwidth]{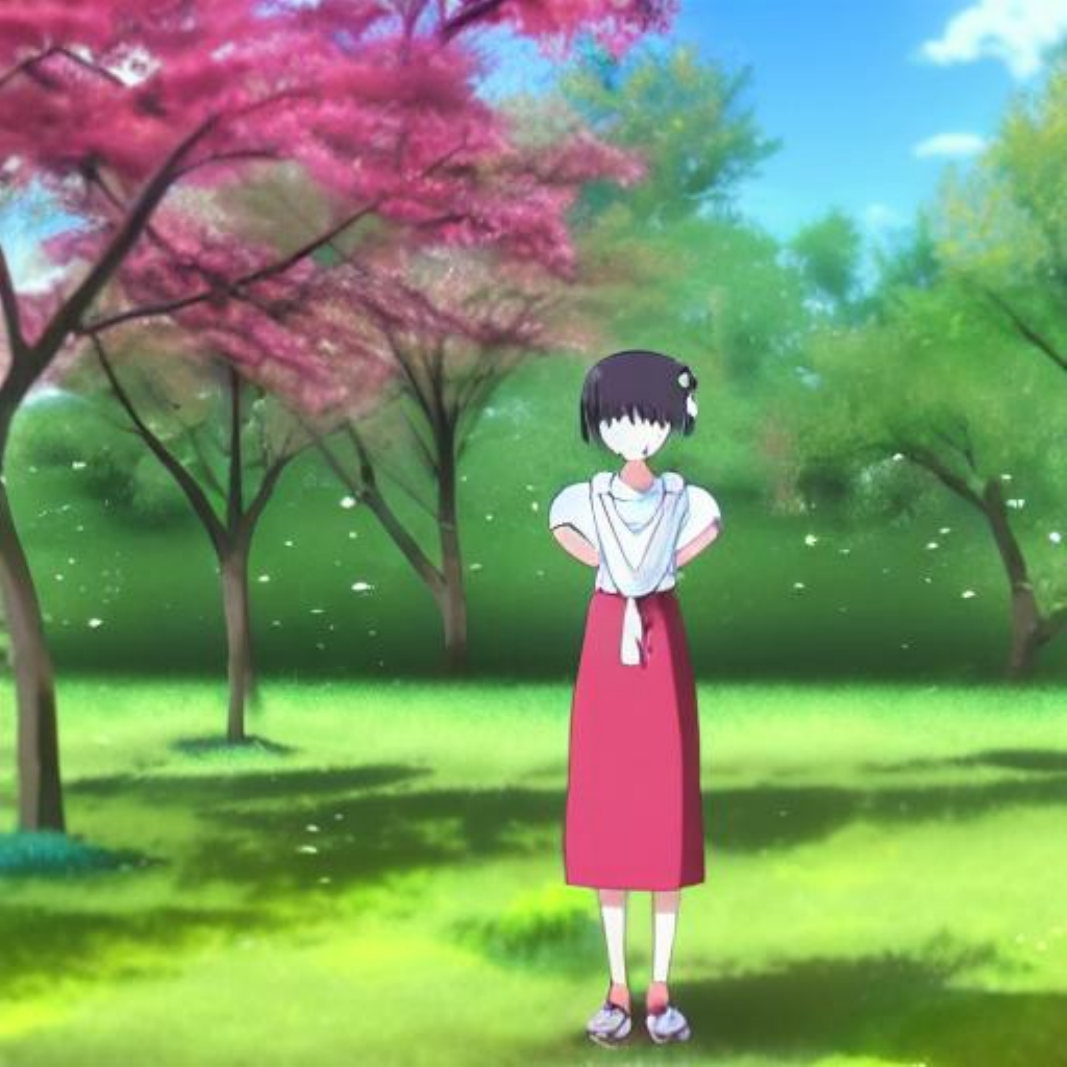}} & 
        \noindent\parbox[c]{0.14\columnwidth}{\includegraphics[width=0.14\columnwidth]{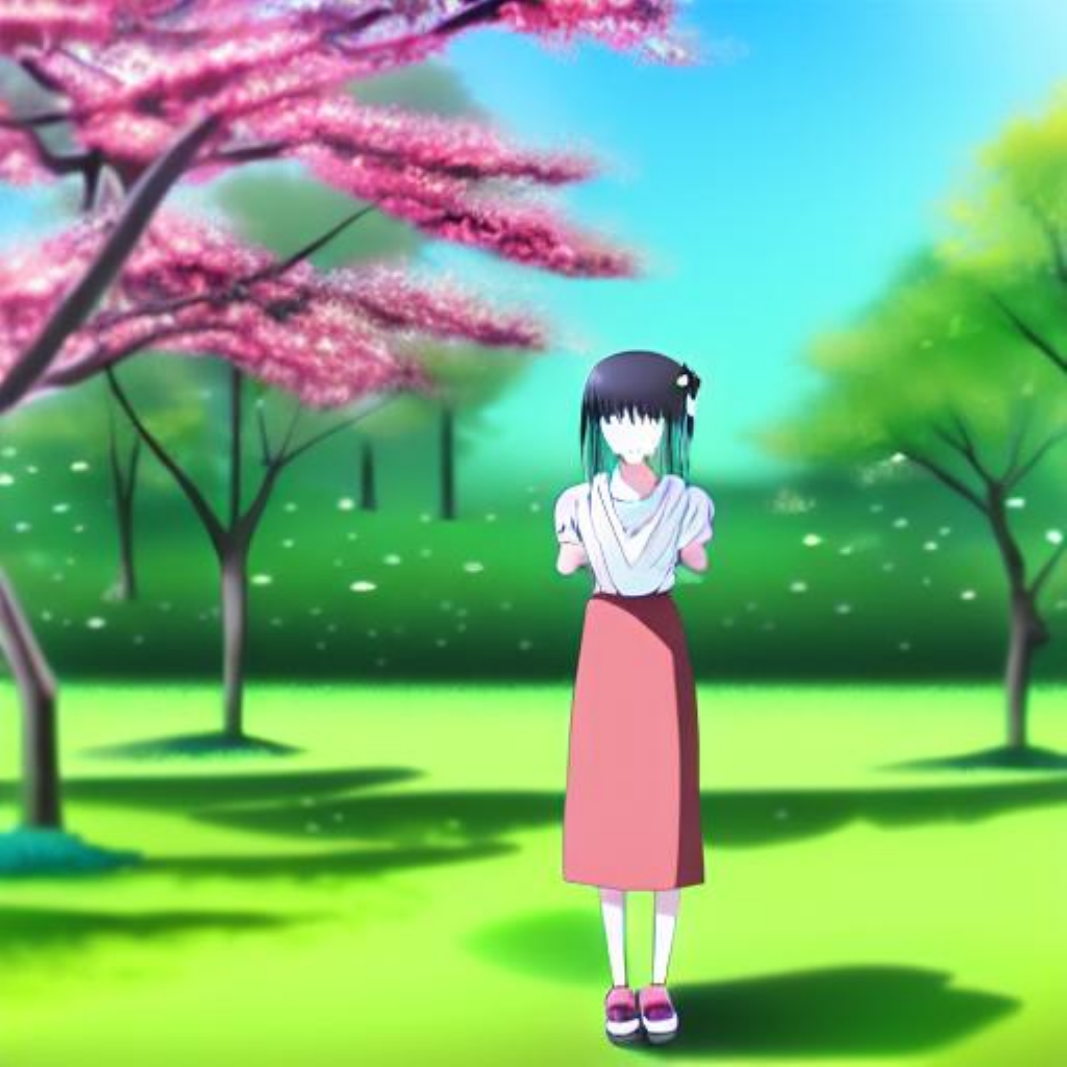}} & 
        \noindent\parbox[c]{0.14\columnwidth}{\includegraphics[width=0.14\columnwidth]{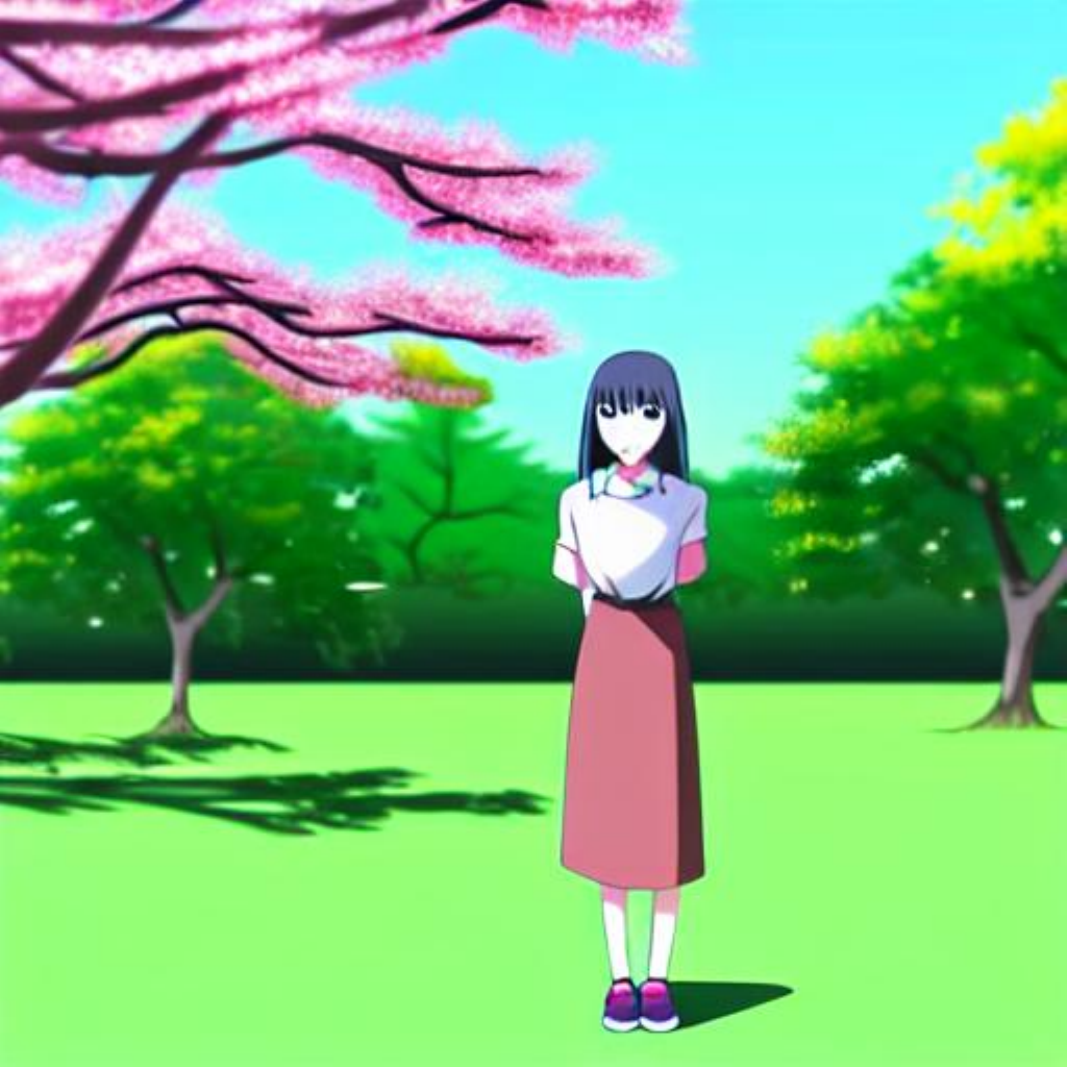}} \\
    \end{tabu}
    \caption{Comparison of samples generated from Stable Diffusion 1.5 using GHVB($3.0 + \beta$) under various sampling steps and guidance scale $s$. Specifically, we employ $\beta = 0.5$ for $s = 7.5$, $\beta = 0.2$ for $s = 15$, and $\beta = 0.1$ for $s = 22.5$ to account for the varying degrees of artifact manifestation associated with each guidance scale.}
    \label{fig:scale_step_sd15_ghvb}
\end{figure}


\section{Elaboration on the Order of Convergence Approximation} \label{apx:order}

In Appendix \ref{apx:conv}, we explored the theoretical aspects of the order of convergence for numerical methods. In this section, we will delve into the estimation of the order of convergence specifically for GHVB in Section \ref{sec:ghvb}.

To assess the order of convergence, we focus on the error $e$, referred to as the global truncation error. This error is quantified by measuring the absolute difference in the latent space between the numerical solution and an approximate exact solution obtained through 1,000-step PLMS4 sampling. The order of convergence for a numerical method is defined as $q$, where the error $e$ follows the relationship $e=\mathcal{O}(\delta^q)$, with $\delta$ representing the step size.

To estimate the order of convergence practically, we adopt a straightforward approach. It involves selecting two distinct step sizes, denoted as $\delta_{\text{new}}$ and $\delta_{\text{old}}$, and computing the corresponding errors $e_{\text{new}}$ and $e_{\text{old}}$. These errors can be approximated using the following formulas:

\begin{align}
e_{\text{new}} \approx C_{\text{new}} (\delta_{\text{new}})^q, \quad \quad e_{\text{old}} \approx C_{\text{old}} (\delta_{\text{old}})^q
\end{align}

Here, we make the assumption that $C_{\text{new}}$ is approximately equal to $C_{\text{old}}$. By taking the ratio of $e_{\text{new}}$ to $e_{\text{old}}$, we obtain:

\begin{align}
\frac{e_{\text{new}}}{e_{\text{old}}} \approx \left(\frac{\delta_{\text{new}}}{\delta_{\text{old}}}\right)^q
\end{align}

Consequently, we can estimate the order of convergence, denoted as $q$, by evaluating the logarithmic ratio of errors and step sizes:

\begin{align}
q \approx \frac{\log(e_{\text{new}}/e_{\text{old}})}{\log(\delta_{\text{new}}/\delta_{\text{old}})}
\end{align}

In our investigation of GHVB in Section \ref{sec:ghvb}, we conducted sampling experiments using 20, 40, 80, 160, 320, and 640 steps. This choice of an exponential sequence for the number of steps was intentional, as it allowed us to approximate $\delta_{\text{new}}/\delta_{\text{old}} \approx 1/2$. By doing so, we facilitated the estimation process. The results, representing the approximated order of convergence for GHVB, are visually depicted in Figure \ref{fig:ablation_on_ghvb}. 

\section{Ablation Study on HB Momentum} \label{apx:hb}

Incorporating Polyak's Heavy Ball (HB) momentum directly into existing diffusion sampling methods is a more straightforward approach to mitigating divergence artifacts than GHVB. This can be achieved by modifying a few lines of code. In this section, we conduct a comprehensive analysis of the convergence speed of this approach.

To evaluate its effectiveness, we generate target results using the 1,000-step PLMS4 method. 
We compare the target results with those obtained from several methods with and without HB momentum, using LPIPS in the image space and L2 in the latent space. We then estimate their orders of convergence, as explained in Appendix \ref{apx:order}. The results of this analysis are visually presented in Figure \ref{fig:abalation_hb}.

In contrast to the interpolation-like behavior observed in Figure \ref{fig:ablation_on_ghvb} for GHVB, we observe that the use of HB momentum leads to an increase in both the LPIPS score and the L2 distance when selecting values of $\beta$ that are less than 1. This is even worse than the 1\ts{st}-order method DDIM when $\beta$ is below 0.7. These findings indicate a deviation from the desired convergence behavior, highlighting a potential decrease in solution accuracy, even though HB momentum has been shown to successfully mitigate divergence artifacts.

Additionally, we find that the numerical orders of convergence also tend to approach the same value. These observations align with our analysis in Theorem \ref{thm:hb} of Appendix \ref{apx:conv}, indicating that when $\beta$ deviates from 1, the employed approach exhibits 1\ts{st}-order convergence and is unable to achieve high-order convergence. These conclusions emphasize the importance of carefully considering the choice of $\beta$ in order to strike a balance between convergence speed and solution quality. Further details and insights into the performance of the HB momentum approach can be obtained from Figure \ref{fig:abalation_hb}, enhancing our understanding of its behavior within the context of the studied problem.

\begin{figure}
    \centering
    \includegraphics[width=0.3\textwidth]{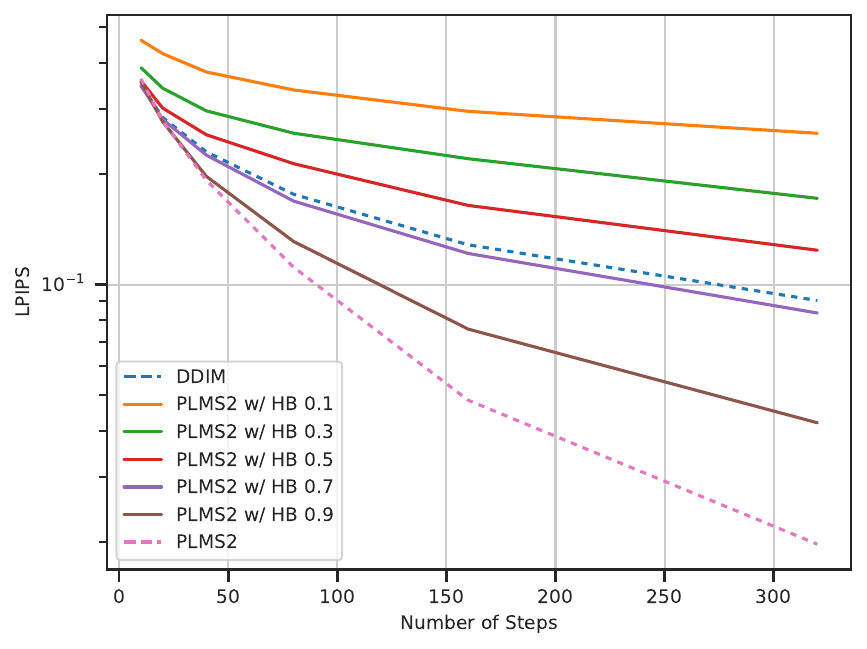}
    \includegraphics[width=0.3\textwidth]{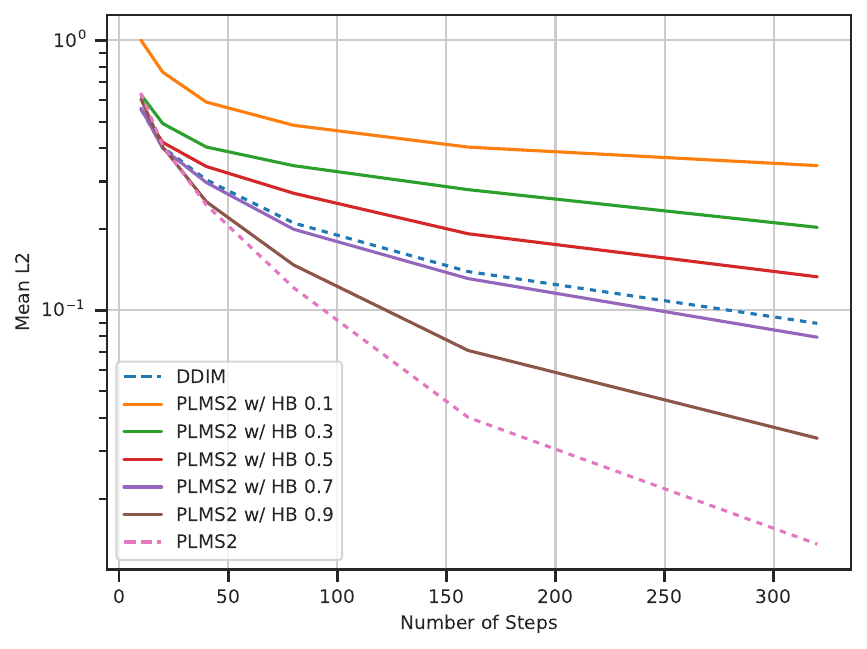}
    \includegraphics[width=0.3\textwidth]{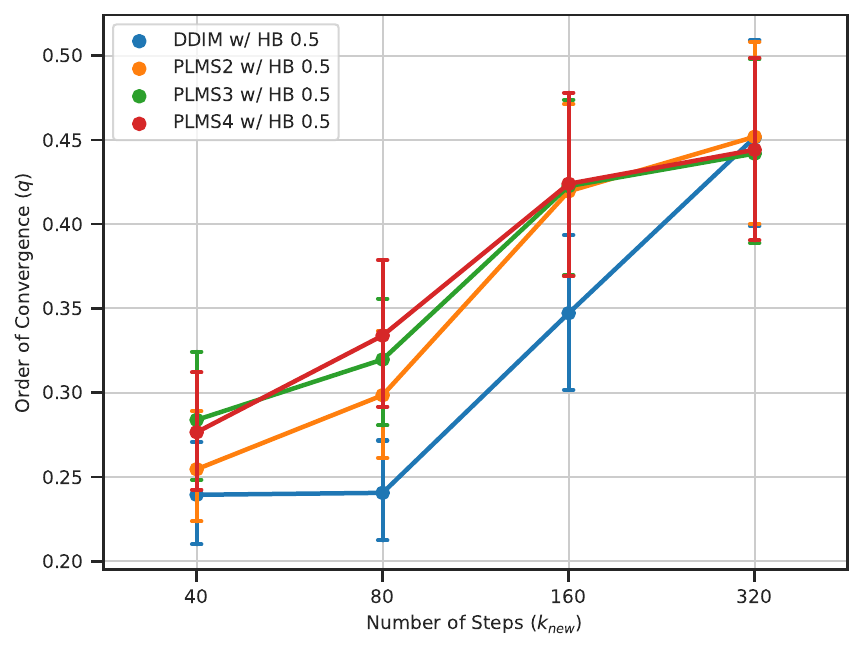}
    \caption{Comparison of LPIPS, mean L2 distance, and order of convergence of HB when using different damping coefficients. Statistical means are averaged from 160 initial latent codes.}
    \label{fig:abalation_hb}
\end{figure}

\section{Ablation Study on Nesterov Momentum}\label{apx:nest}

In Appendix \ref{apx:moment}, we investigated the potential of incorporating different types of momentum, such as Nesterov's momentum, into existing diffusion sampling methods to mitigate divergence artifacts. Similar to the analysis conducted in Section \ref{sec:exp_ghvb} and Appendix \ref{apx:hb}, the primary objective of this section is to explore the convergence speed of Nesterov's momentum by comparing two key metrics: LPIPS in the image space and L2 in the latent space.

Figure \ref{fig:abalation_nest} presents the results, which reveal intriguing parallels with the behavior of HB momentum observed in Figure \ref{fig:abalation_hb}. When Nesterov's momentum is applied to the PLMS2 method, the accuracy of the model progressively diminishes as the value of $\beta$ deviates from 1, as indicated by the corresponding increase in both LPIPS and L2 metrics. Notably, the model's accuracy drops below that of the DDIM when $\beta$ falls below 0.5.

Furthermore, our analysis of the order of convergence demonstrates that Nesterov's momentum does not achieve a high order of convergence, similar to HB momentum. These findings emphasize the importance of carefully considering the choice of momentum method, along with the specific values assigned to $\beta$, in order to strike an optimal balance between convergence speed and solution quality.

\begin{figure}
    \centering
    \includegraphics[width=0.3\textwidth]{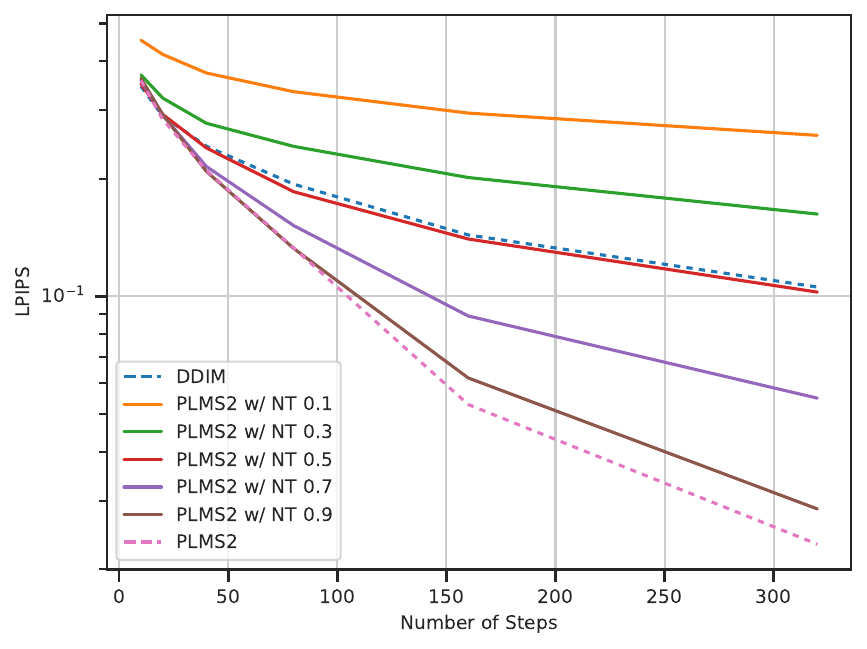}
    \includegraphics[width=0.3\textwidth]{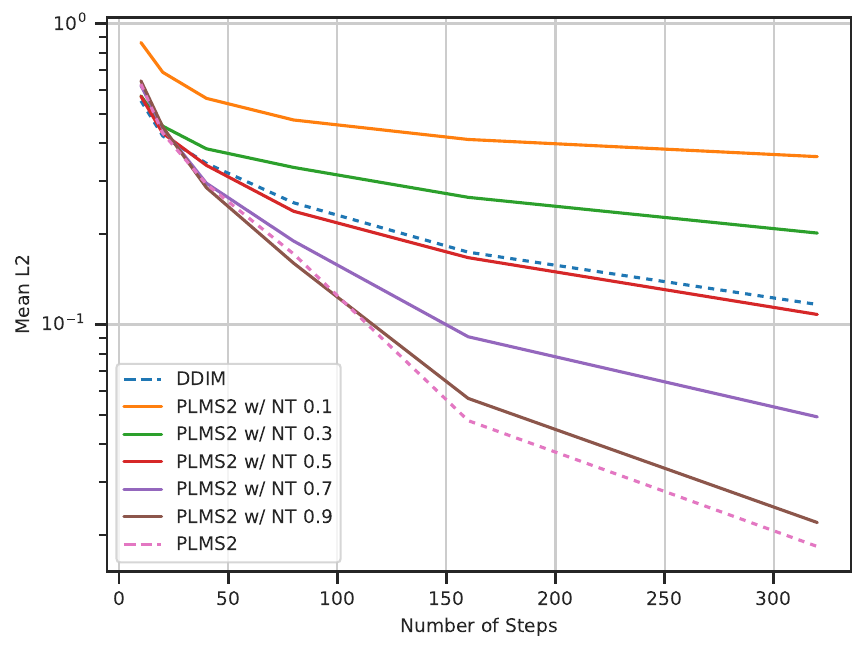}
    \includegraphics[width=0.3\textwidth]{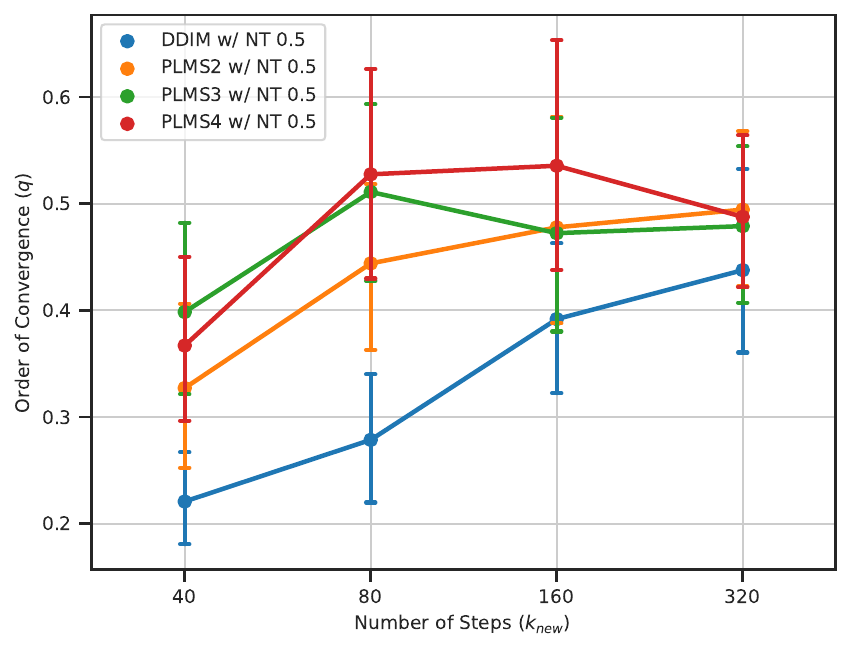}
    \caption{Comparison of LPIPS, mean L2 distance, and order of convergence of Nesterov's momentum when using different damping coefficients. Statistical means are averaged from 160 initial latent codes.}
    \label{fig:abalation_nest}
\end{figure}

\tabulinesep=1pt
\begin{figure}[ht]
    \centering
    \subcaptionbox{Prompt: "A beautiful illustration of a schoolgirl riding her bicycle to school in a small village"}{
        \begin{tabu} to \textwidth {
            @{}
            l@{\hspace{6pt}}
            c@{\hspace{2pt}}
            c@{\hspace{2pt}}
            c@{\hspace{2pt}}
            c@{}c@{\hspace{5pt}} | @{\hspace{5pt}}
            c@{}
        }
            & \multicolumn{1}{c}{\shortstack{\scriptsize $\beta = 0.2$}}
            & \multicolumn{1}{c}{\shortstack{\scriptsize $\beta = 0.4$}}
            & \multicolumn{1}{c}{\shortstack{\scriptsize $\beta = 0.6$}}
            & \multicolumn{1}{c}{\shortstack{\scriptsize $\beta = 0.8$}} &
            & \multicolumn{1}{c}{\shortstack{\scriptsize $\beta = 1.0$}} \\
    
            \shortstack[l]{\scriptsize (a) w/ HB} &
            \noindent\parbox[c]{0.17\columnwidth}{\includegraphics[width=0.17\columnwidth]{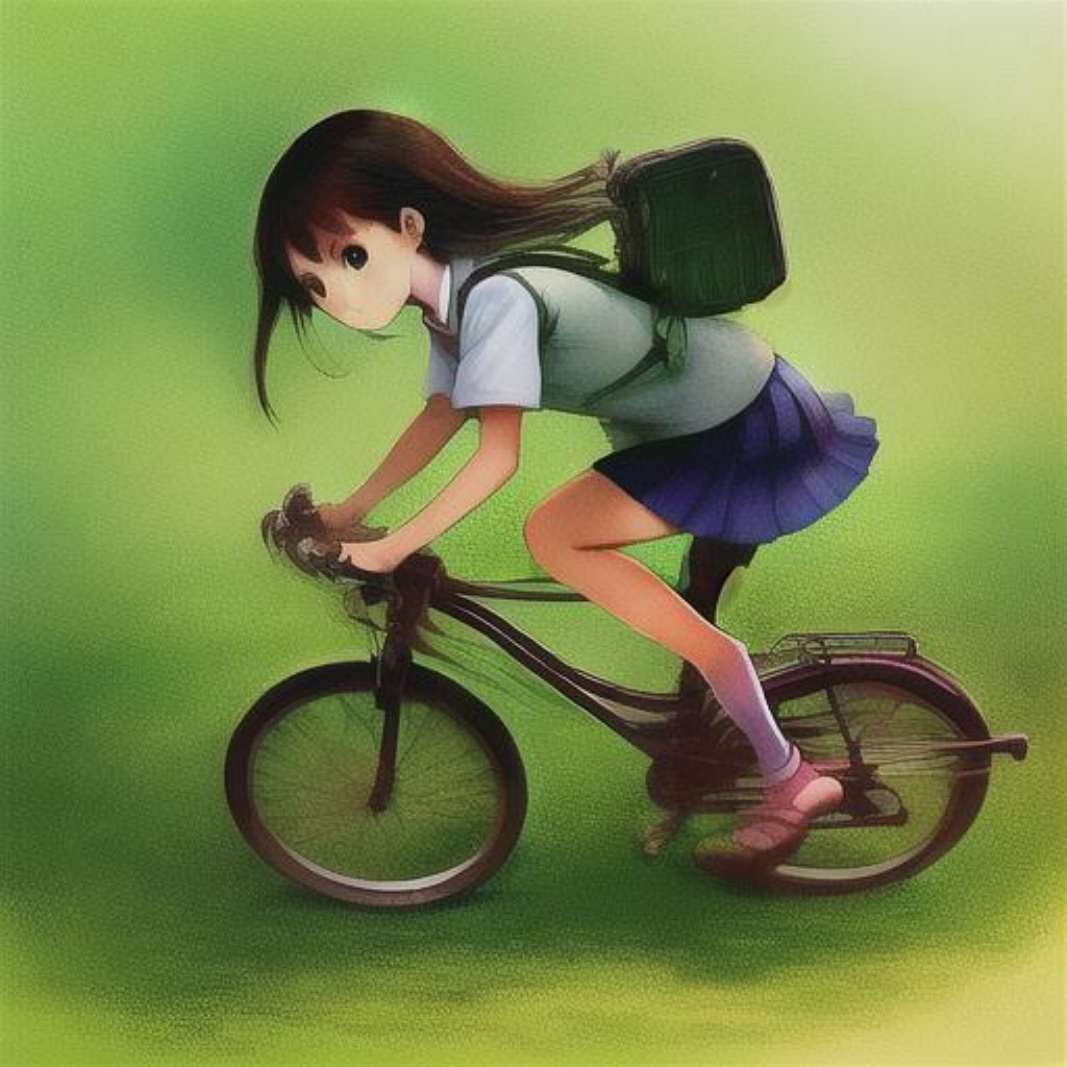}} & 
            \noindent\parbox[c]{0.17\columnwidth}{\includegraphics[width=0.17\columnwidth]{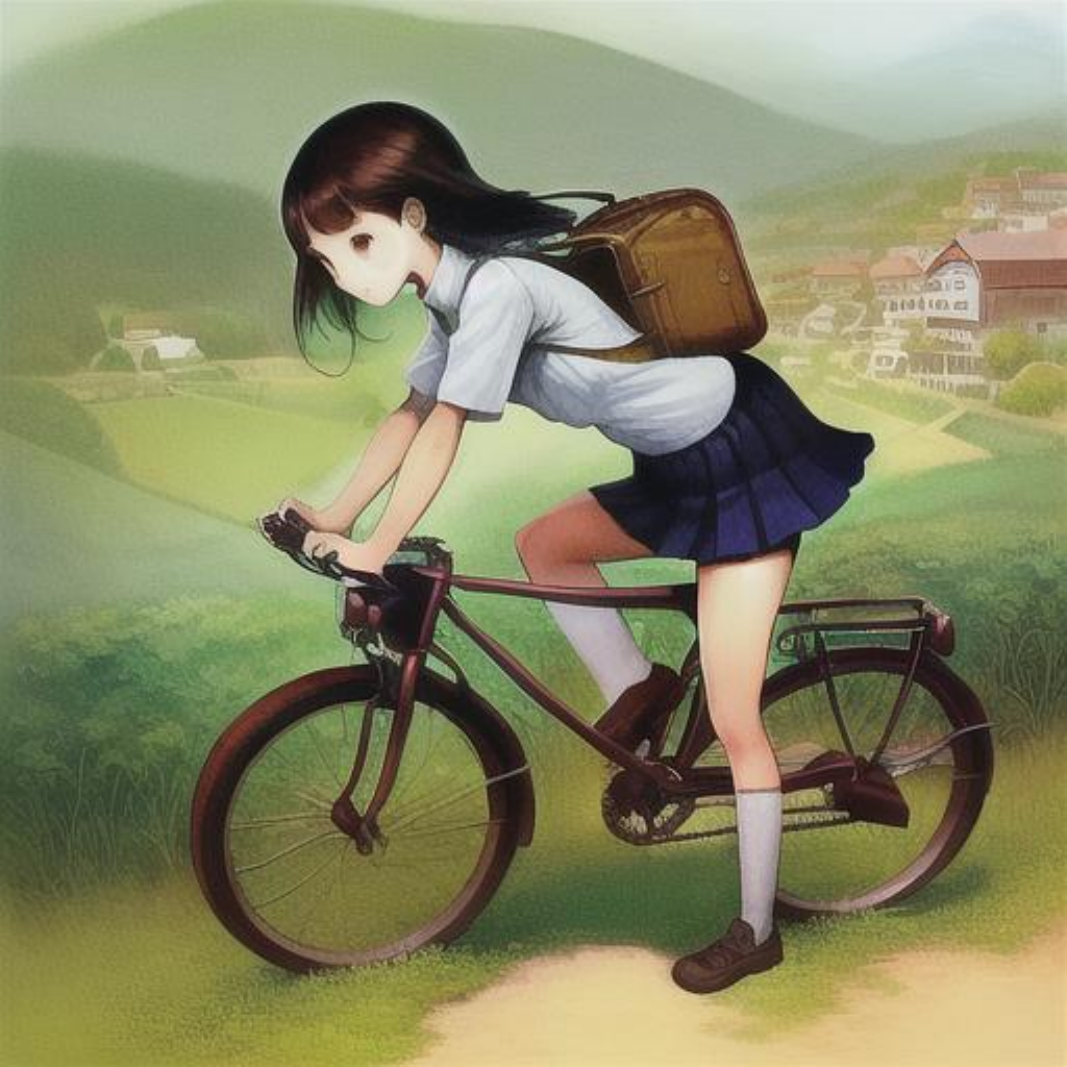}} & 
            \noindent\parbox[c]{0.17\columnwidth}{\includegraphics[width=0.17\columnwidth]{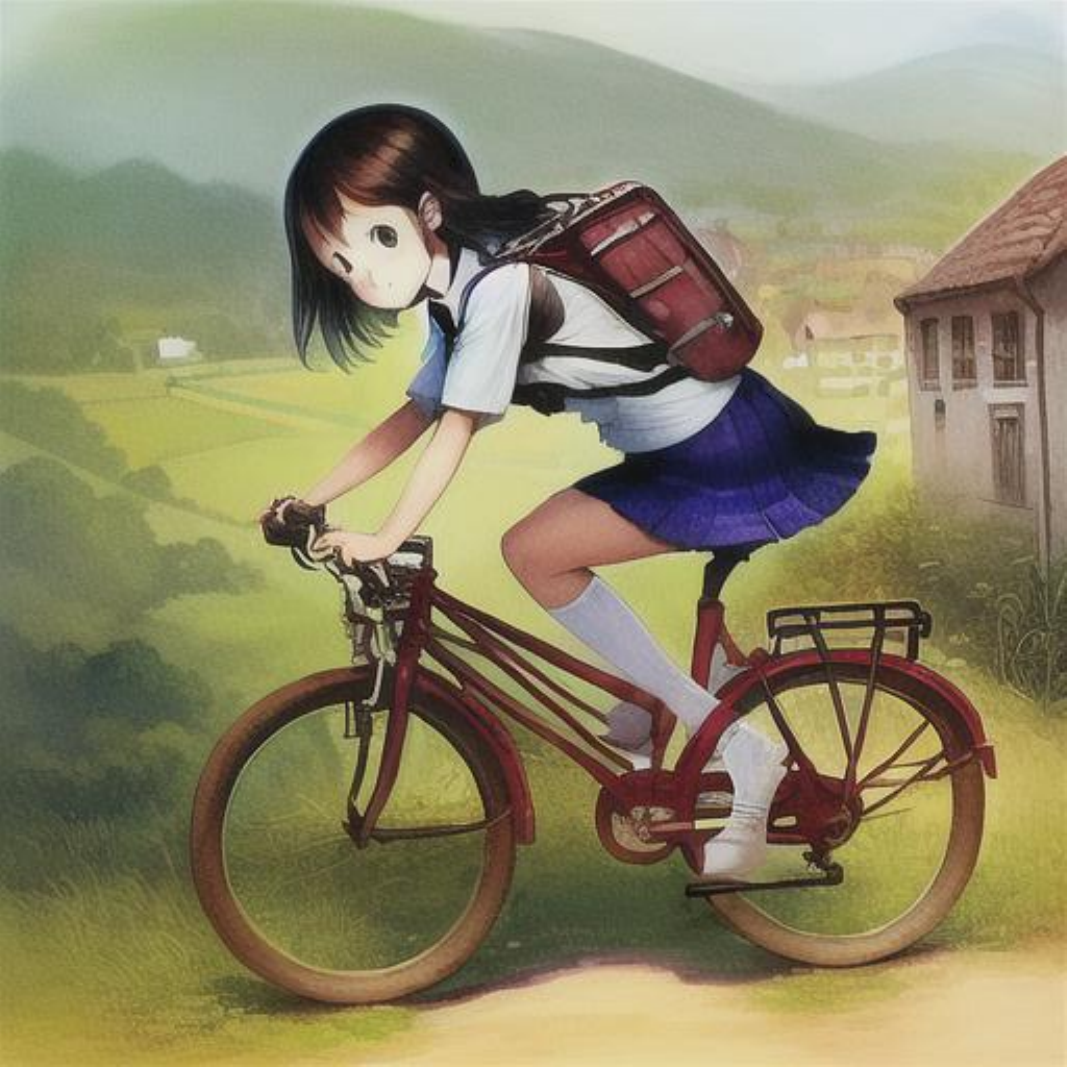}} & 
            \noindent\parbox[c]{0.17\columnwidth}{\includegraphics[width=0.17\columnwidth]{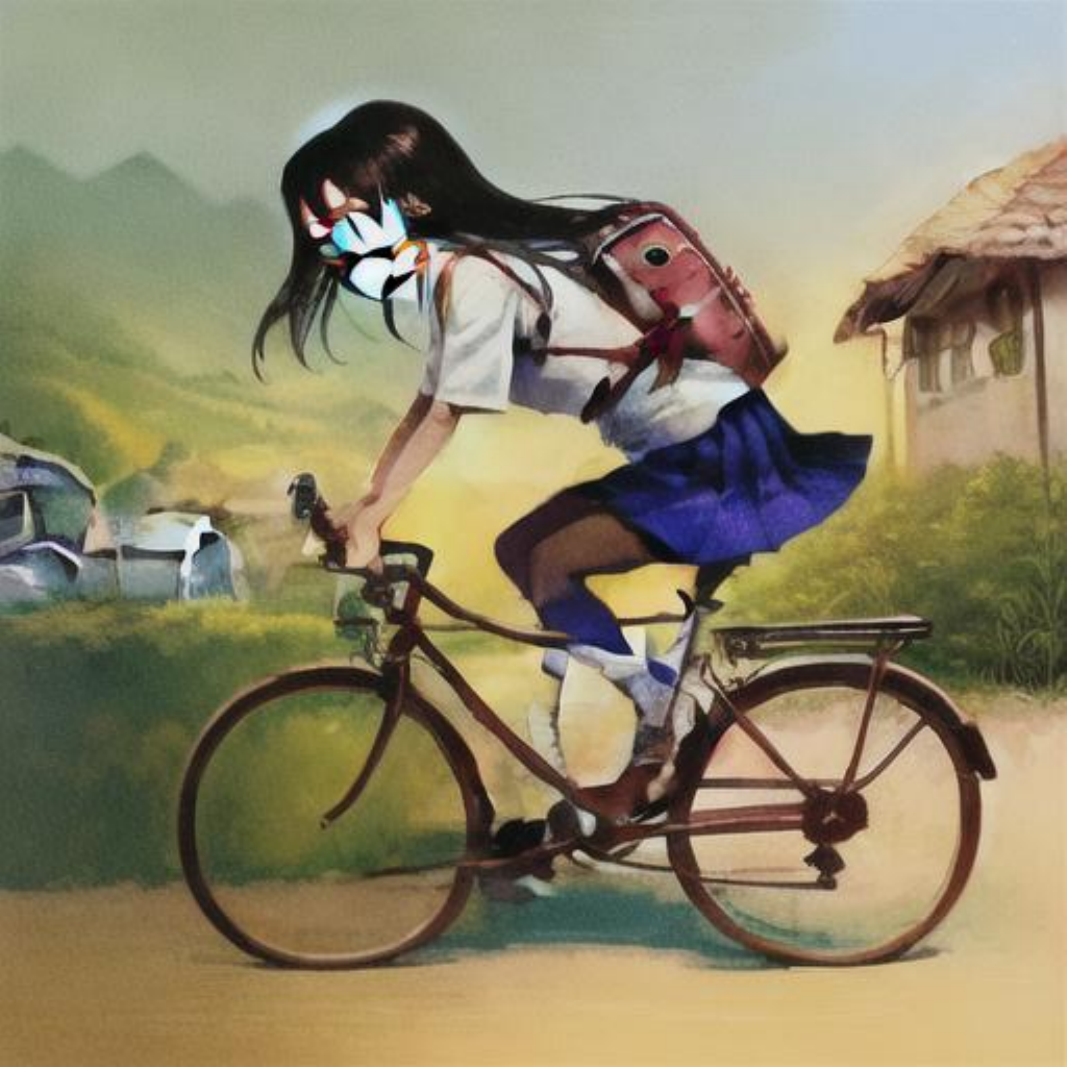}} & &
            \noindent\parbox[c]{0.17\columnwidth}{\includegraphics[width=0.17\columnwidth]{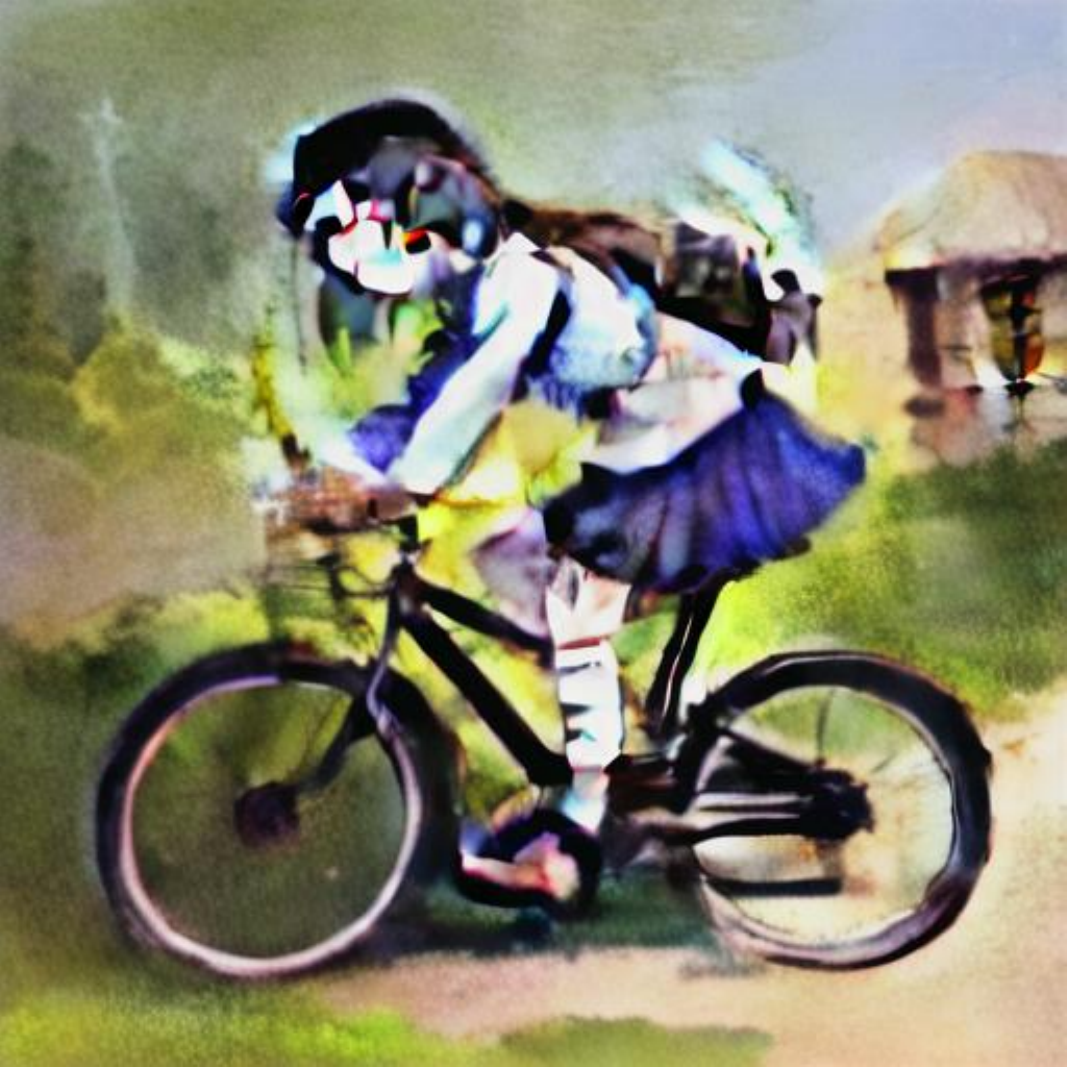}} \\
    
            & \multicolumn{1}{c}{\shortstack{\scriptsize $\beta = 0.2$}}
            & \multicolumn{1}{c}{\shortstack{\scriptsize $\beta = 0.4$}}
            & \multicolumn{1}{c}{\shortstack{\scriptsize $\beta = 0.6$}}
            & \multicolumn{1}{c}{\shortstack{\scriptsize $\beta = 0.8$}} &
            & \\
    
            \shortstack[l]{\scriptsize (b) w/ NT} &
            \noindent\parbox[c]{0.17\columnwidth}{\includegraphics[width=0.17\columnwidth]{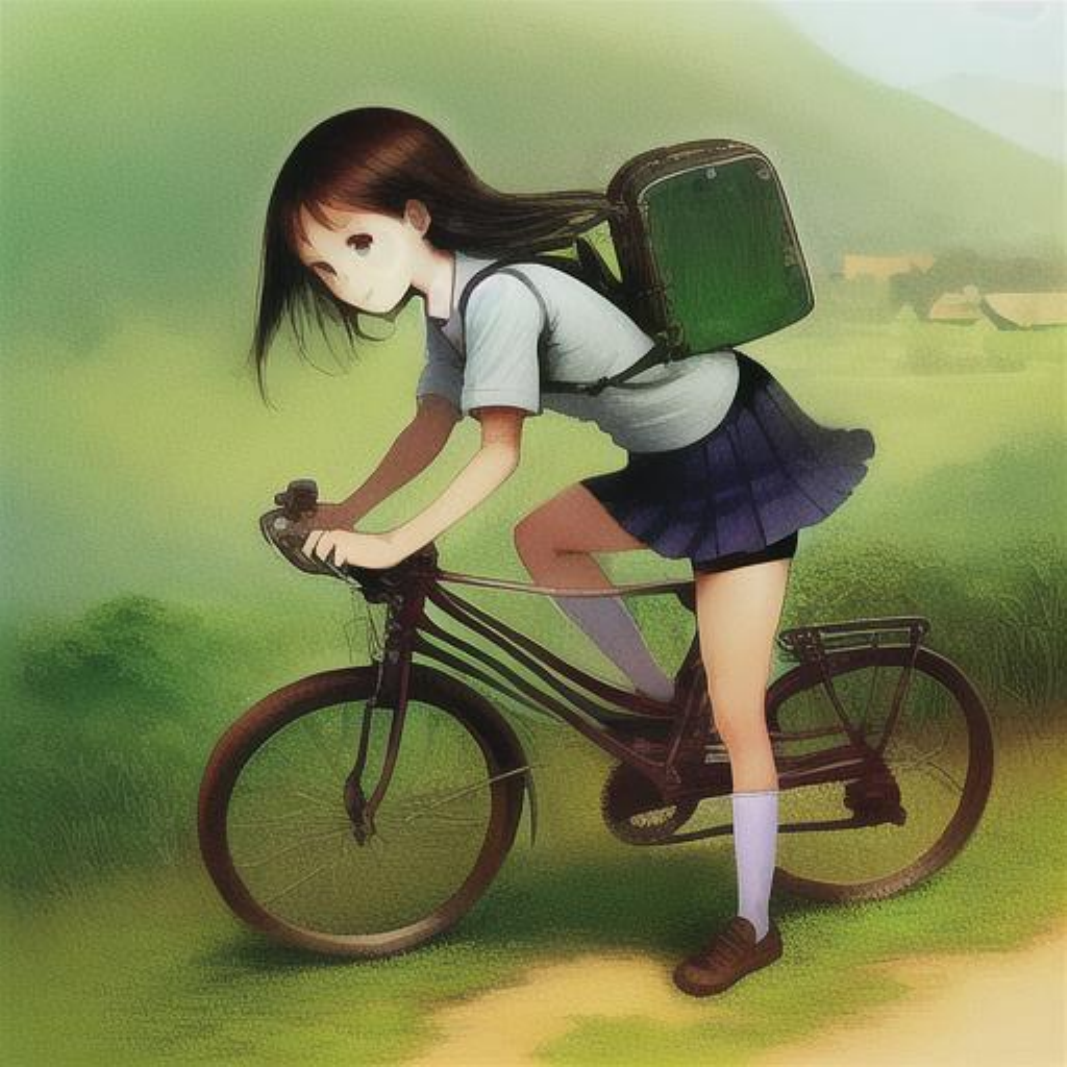}} & 
            \noindent\parbox[c]{0.17\columnwidth}{\includegraphics[width=0.17\columnwidth]{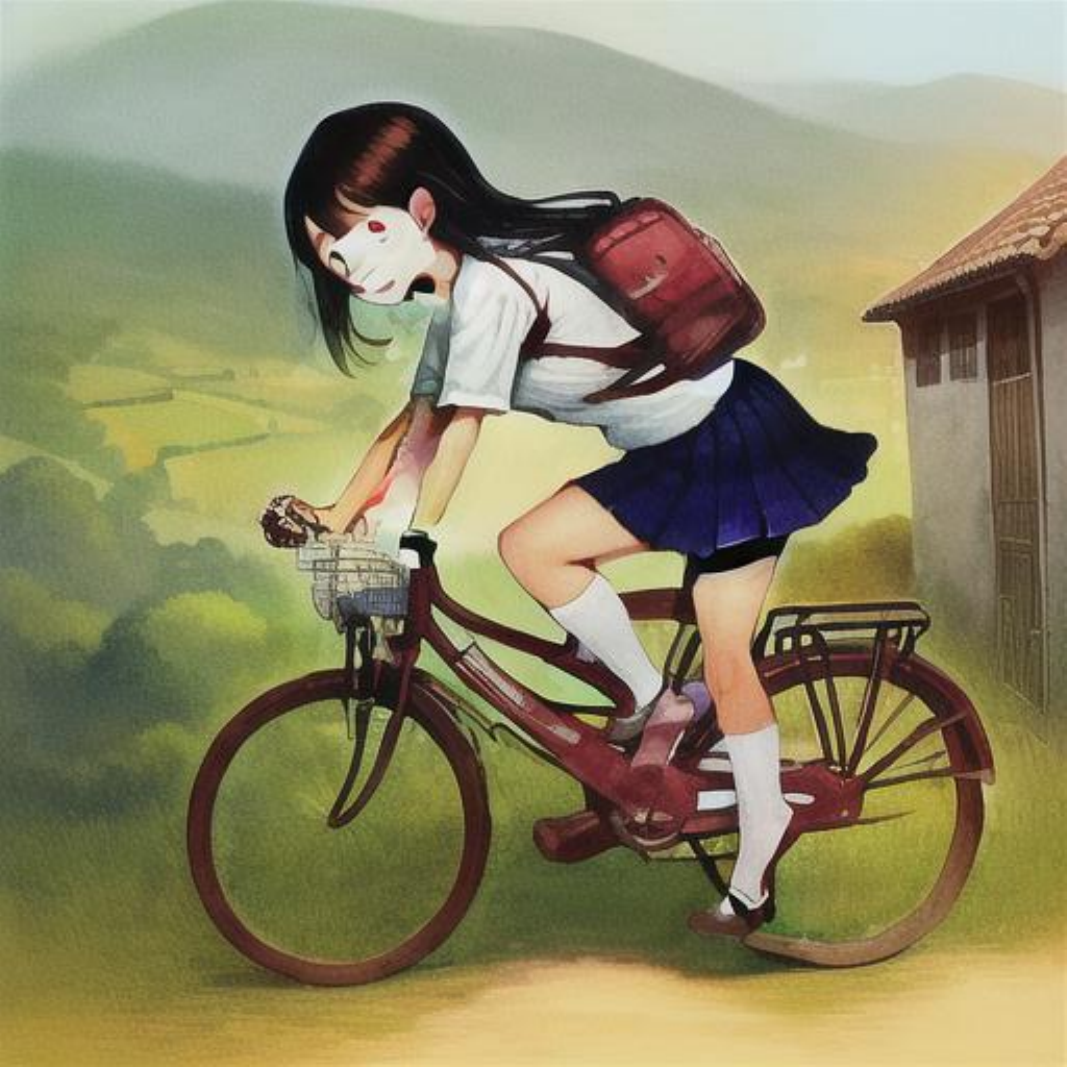}} & 
            \noindent\parbox[c]{0.17\columnwidth}{\includegraphics[width=0.17\columnwidth]{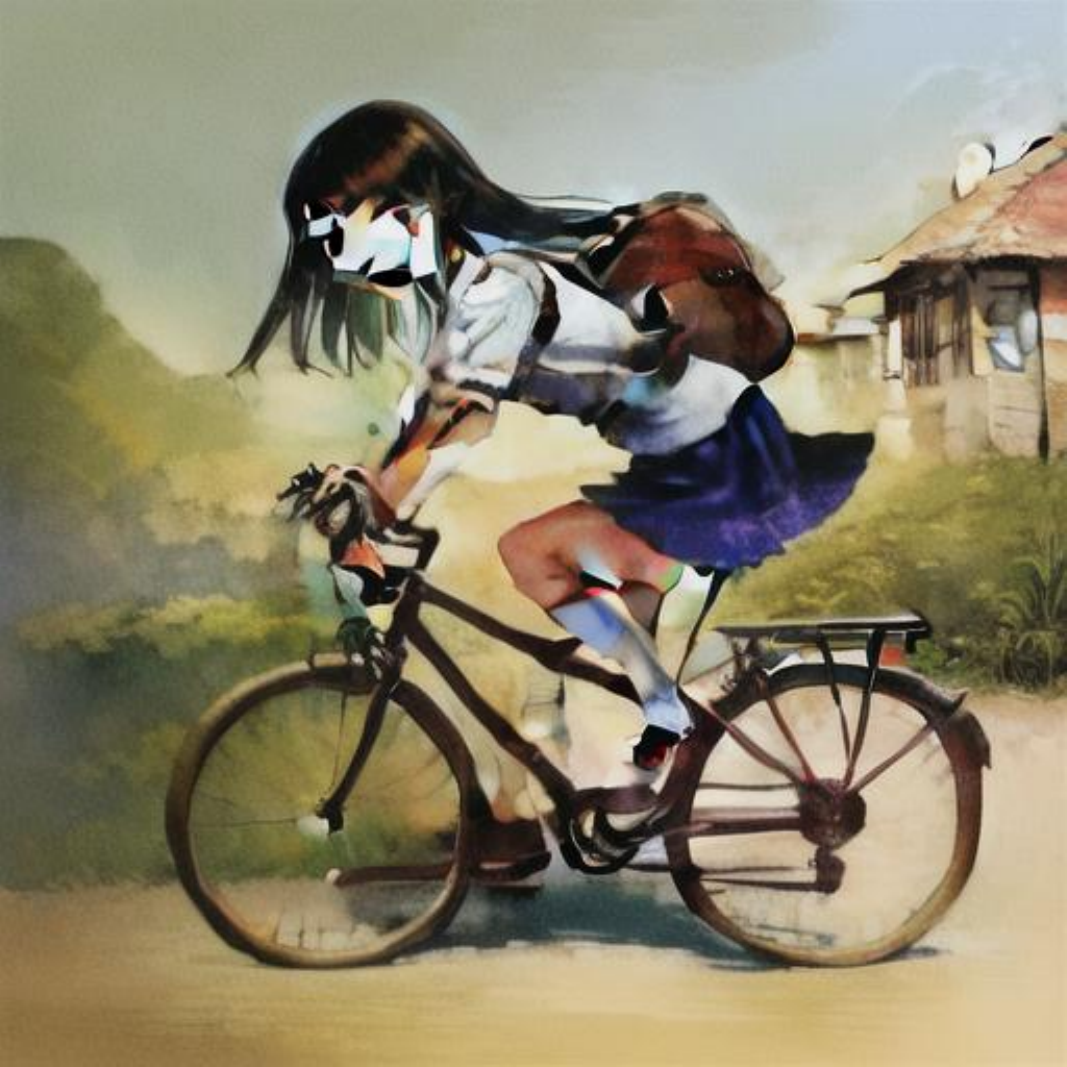}} & 
            \noindent\parbox[c]{0.17\columnwidth}{\includegraphics[width=0.17\columnwidth]{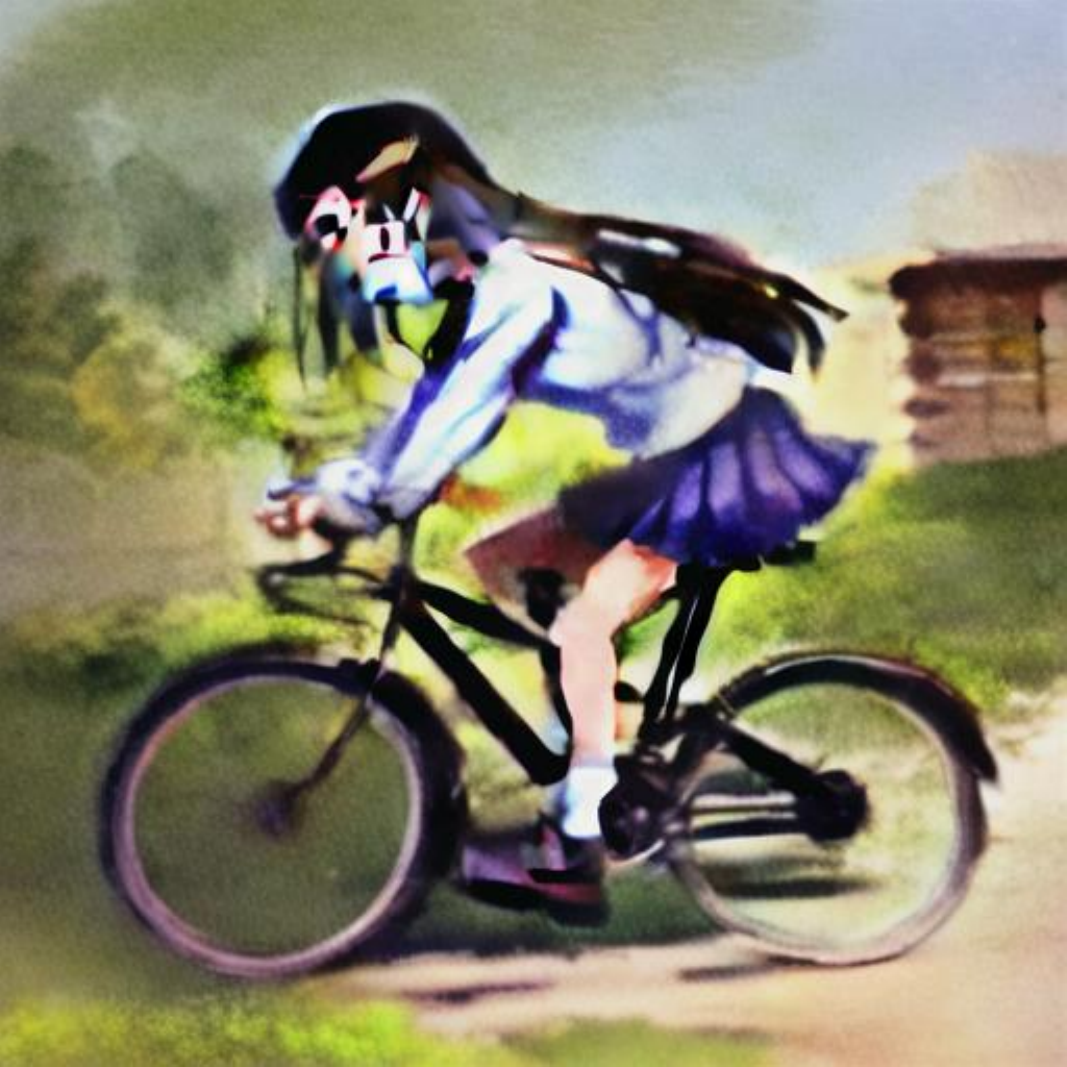}} & &
            \\
    
        \end{tabu}
    }

    \subcaptionbox{Prompt: "A beautiful illustration of a cozy cafe in a futuristic city"}{
        \begin{tabu} to \textwidth {
            @{}
            l@{\hspace{6pt}}
            c@{\hspace{2pt}}
            c@{\hspace{2pt}}
            c@{\hspace{2pt}}
            c@{}c@{\hspace{5pt}} | @{\hspace{5pt}}
            c@{}
        }
            & \multicolumn{1}{c}{\shortstack{\scriptsize $\beta = 0.2$}}
            & \multicolumn{1}{c}{\shortstack{\scriptsize $\beta = 0.4$}}
            & \multicolumn{1}{c}{\shortstack{\scriptsize $\beta = 0.6$}}
            & \multicolumn{1}{c}{\shortstack{\scriptsize $\beta = 0.8$}} &
            & \multicolumn{1}{c}{\shortstack{\scriptsize $\beta = 1.0$}} \\
    
            \shortstack[l]{\scriptsize (a) w/ HB} &
            \noindent\parbox[c]{0.17\columnwidth}{\includegraphics[width=0.17\columnwidth]{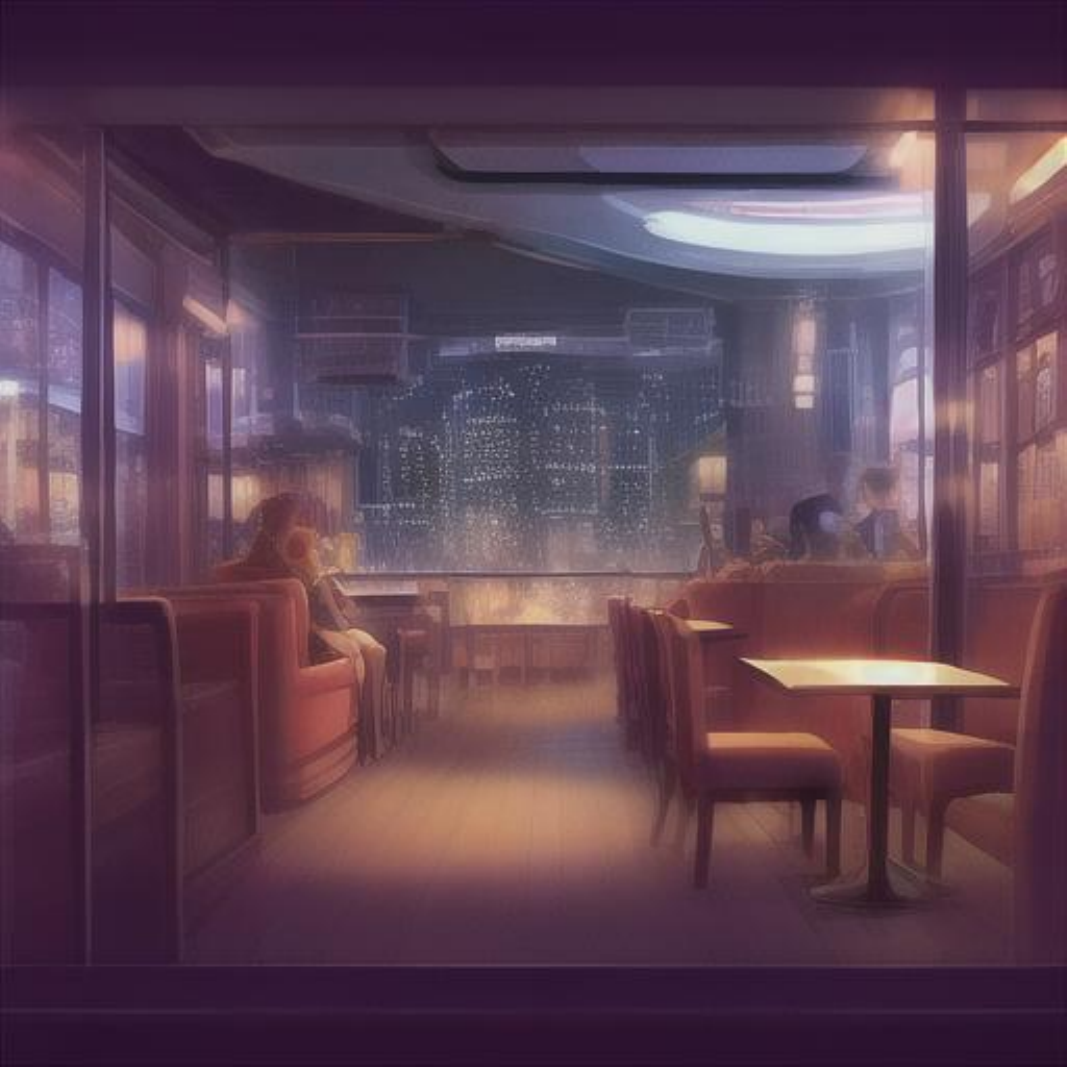}} & 
            \noindent\parbox[c]{0.17\columnwidth}{\includegraphics[width=0.17\columnwidth]{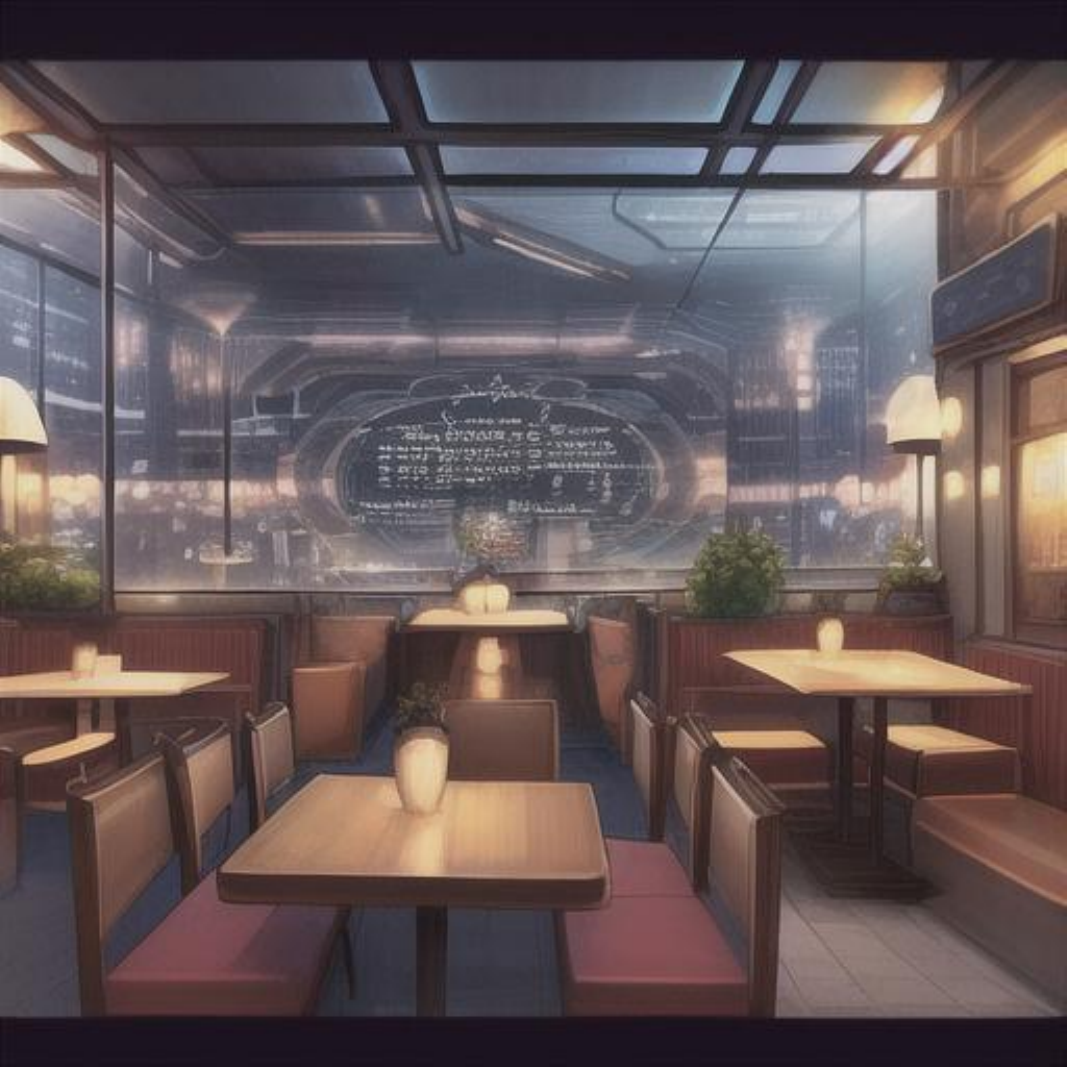}} & 
            \noindent\parbox[c]{0.17\columnwidth}{\includegraphics[width=0.17\columnwidth]{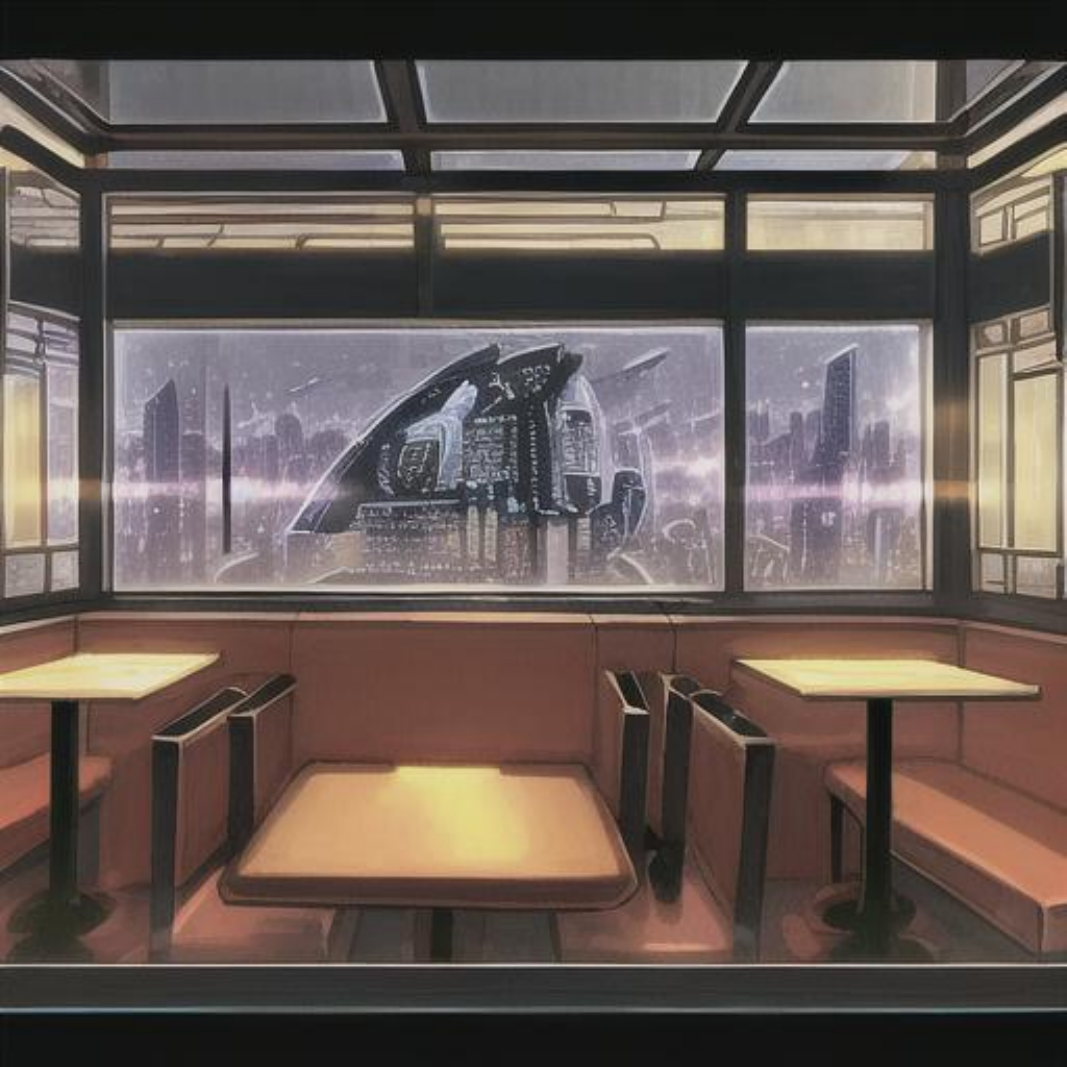}} & 
            \noindent\parbox[c]{0.17\columnwidth}{\includegraphics[width=0.17\columnwidth]{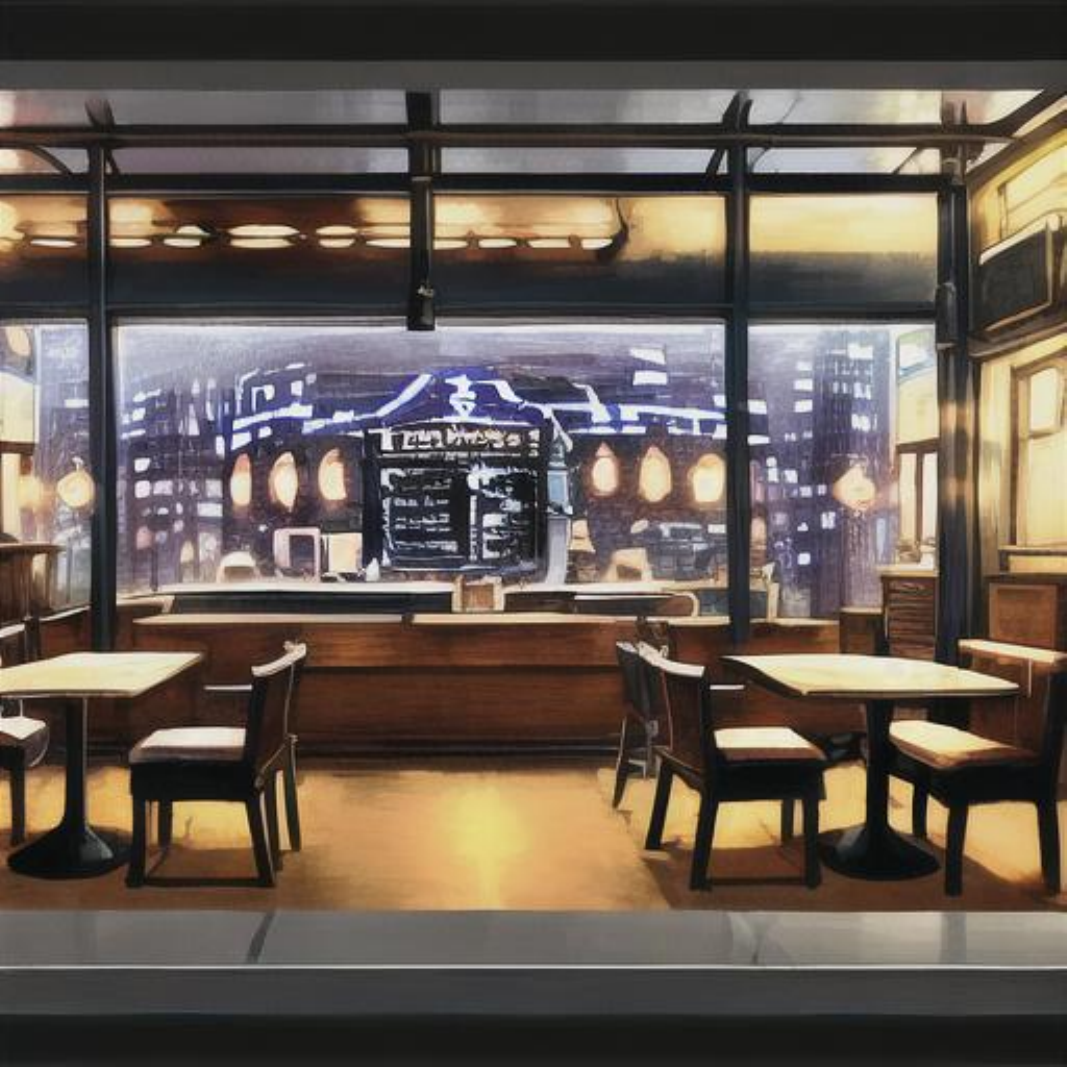}} & &
            \noindent\parbox[c]{0.17\columnwidth}{\includegraphics[width=0.17\columnwidth]{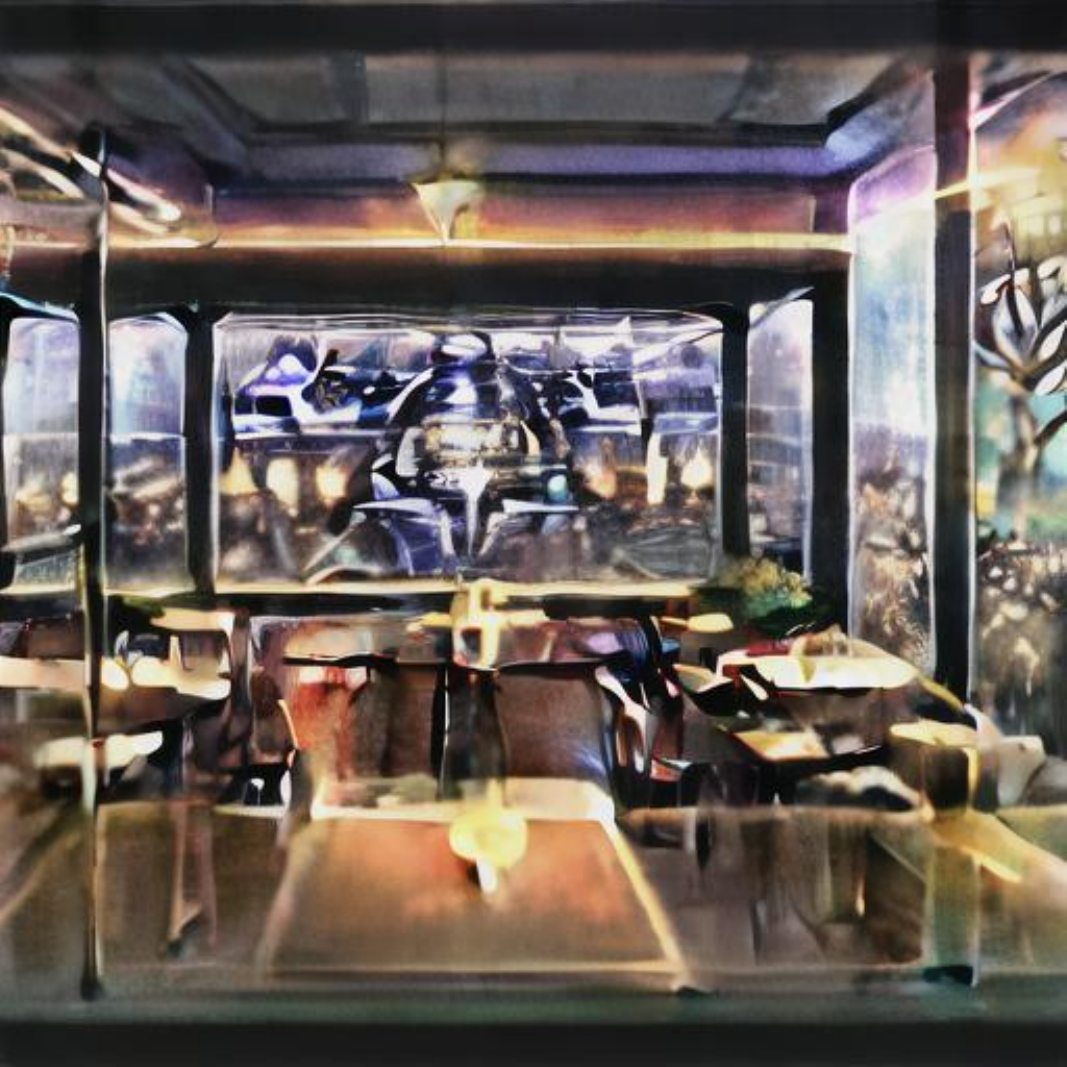}} \\
    
            & \multicolumn{1}{c}{\shortstack{\scriptsize $\beta = 0.2$}}
            & \multicolumn{1}{c}{\shortstack{\scriptsize $\beta = 0.4$}}
            & \multicolumn{1}{c}{\shortstack{\scriptsize $\beta = 0.6$}}
            & \multicolumn{1}{c}{\shortstack{\scriptsize $\beta = 0.8$}} &
            & \\
    
            \shortstack[l]{\scriptsize (b) w/ NT} &
            \noindent\parbox[c]{0.17\columnwidth}{\includegraphics[width=0.17\columnwidth]{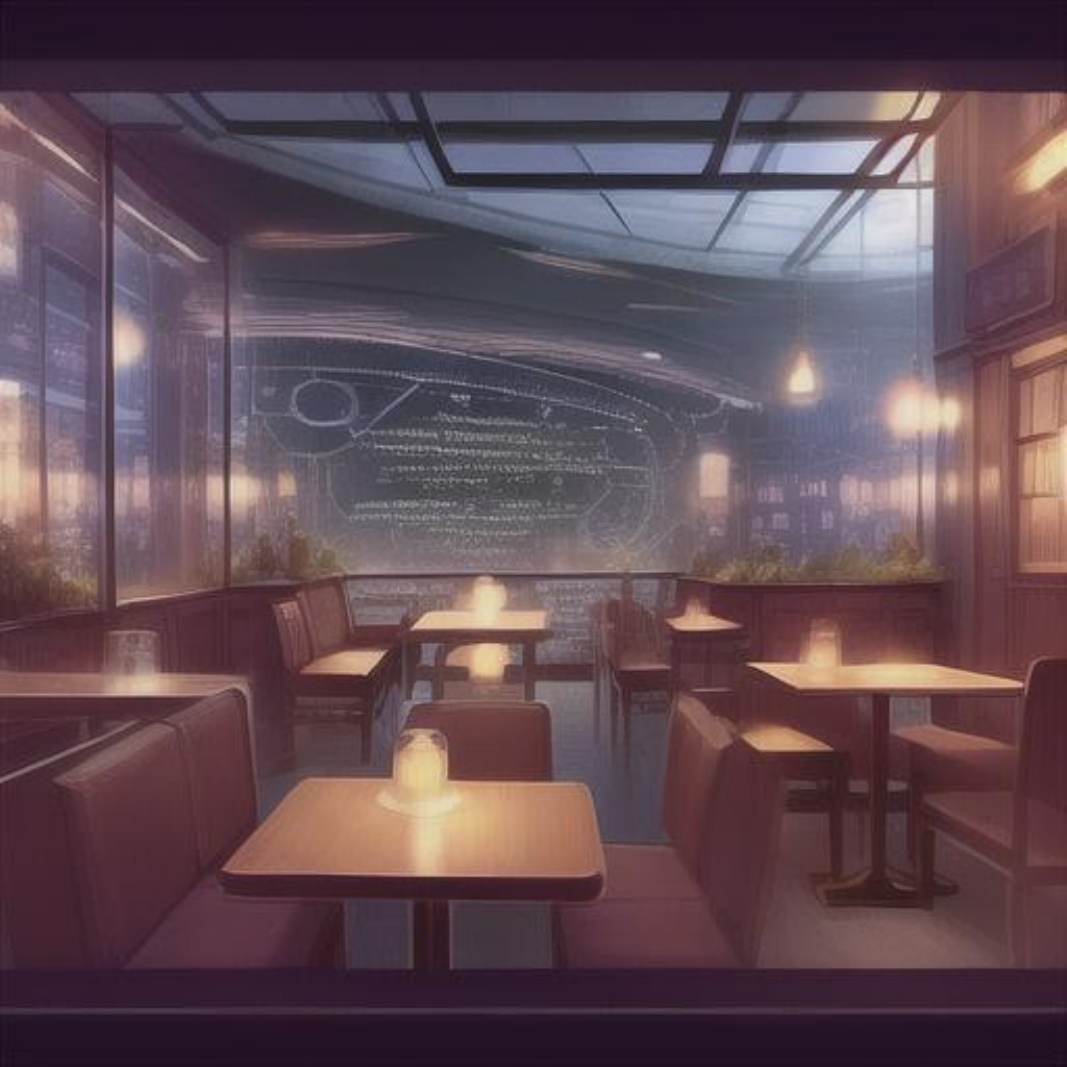}} & 
            \noindent\parbox[c]{0.17\columnwidth}{\includegraphics[width=0.17\columnwidth]{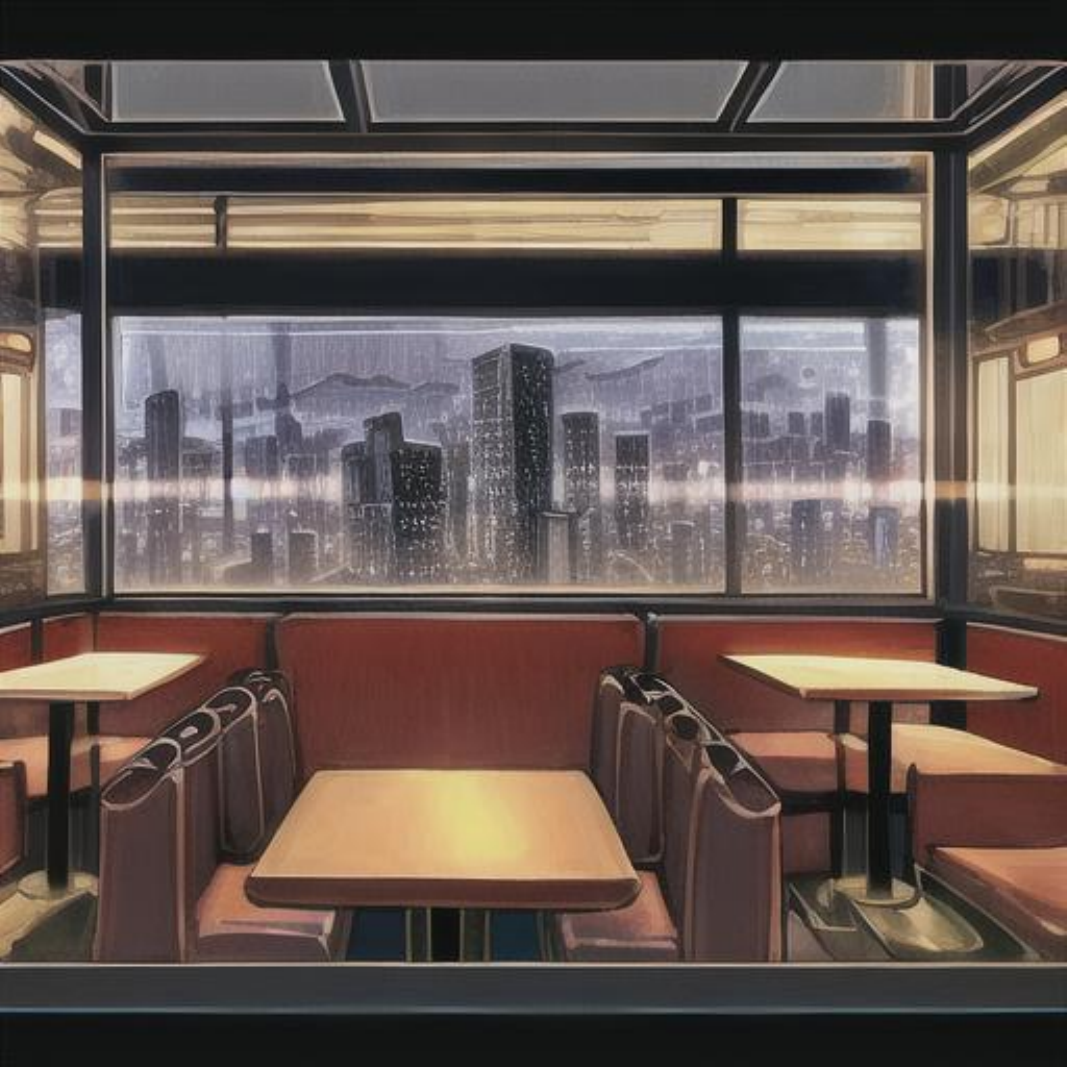}} & 
            \noindent\parbox[c]{0.17\columnwidth}{\includegraphics[width=0.17\columnwidth]{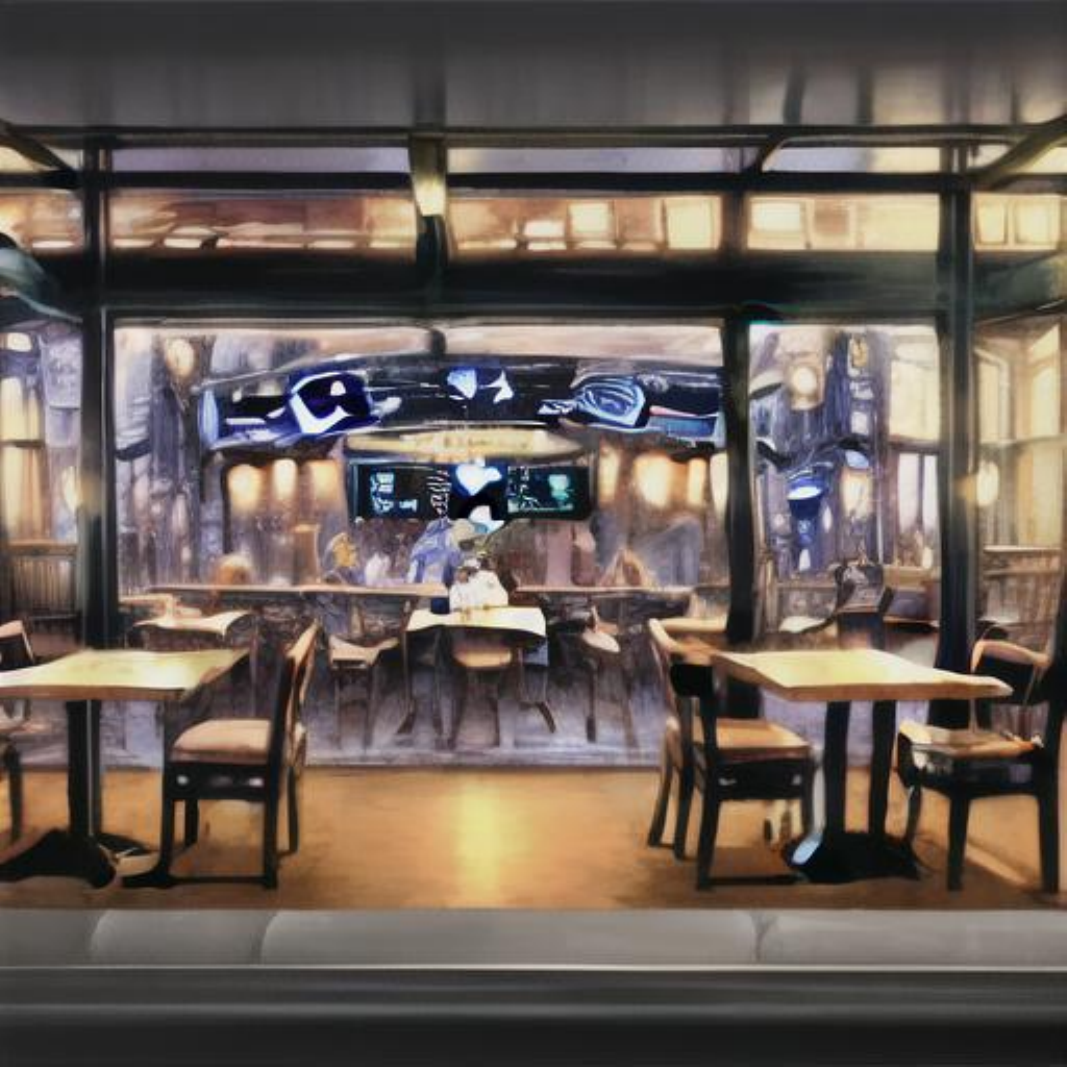}} & 
            \noindent\parbox[c]{0.17\columnwidth}{\includegraphics[width=0.17\columnwidth]{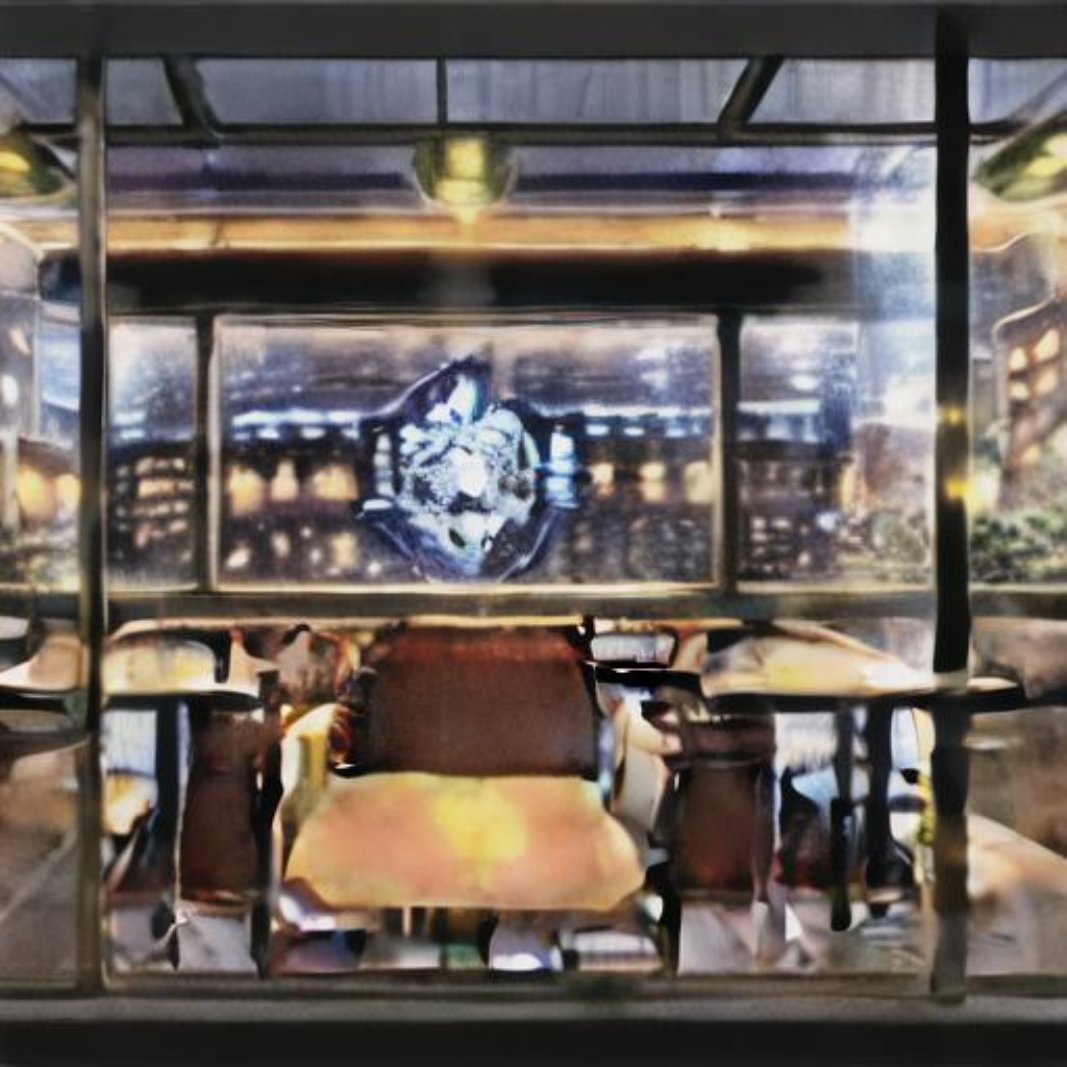}} & &
            \\
    
        \end{tabu}
    }

    \subcaptionbox{Prompt: "A painting of an old European castle in a deep forest with a Blood Moon in the background"}{
        \begin{tabu} to \textwidth {
            @{}
            l@{\hspace{6pt}}
            c@{\hspace{2pt}}
            c@{\hspace{2pt}}
            c@{\hspace{2pt}}
            c@{}c@{\hspace{5pt}} | @{\hspace{5pt}}
            c@{}
        }
            & \multicolumn{1}{c}{\shortstack{\scriptsize $\beta = 0.2$}}
            & \multicolumn{1}{c}{\shortstack{\scriptsize $\beta = 0.4$}}
            & \multicolumn{1}{c}{\shortstack{\scriptsize $\beta = 0.6$}}
            & \multicolumn{1}{c}{\shortstack{\scriptsize $\beta = 0.8$}} &
            & \multicolumn{1}{c}{\shortstack{\scriptsize $\beta = 1.0$}} \\
    
            \shortstack[l]{\scriptsize (a) w/ HB} &
            \noindent\parbox[c]{0.17\columnwidth}{\includegraphics[width=0.17\columnwidth]{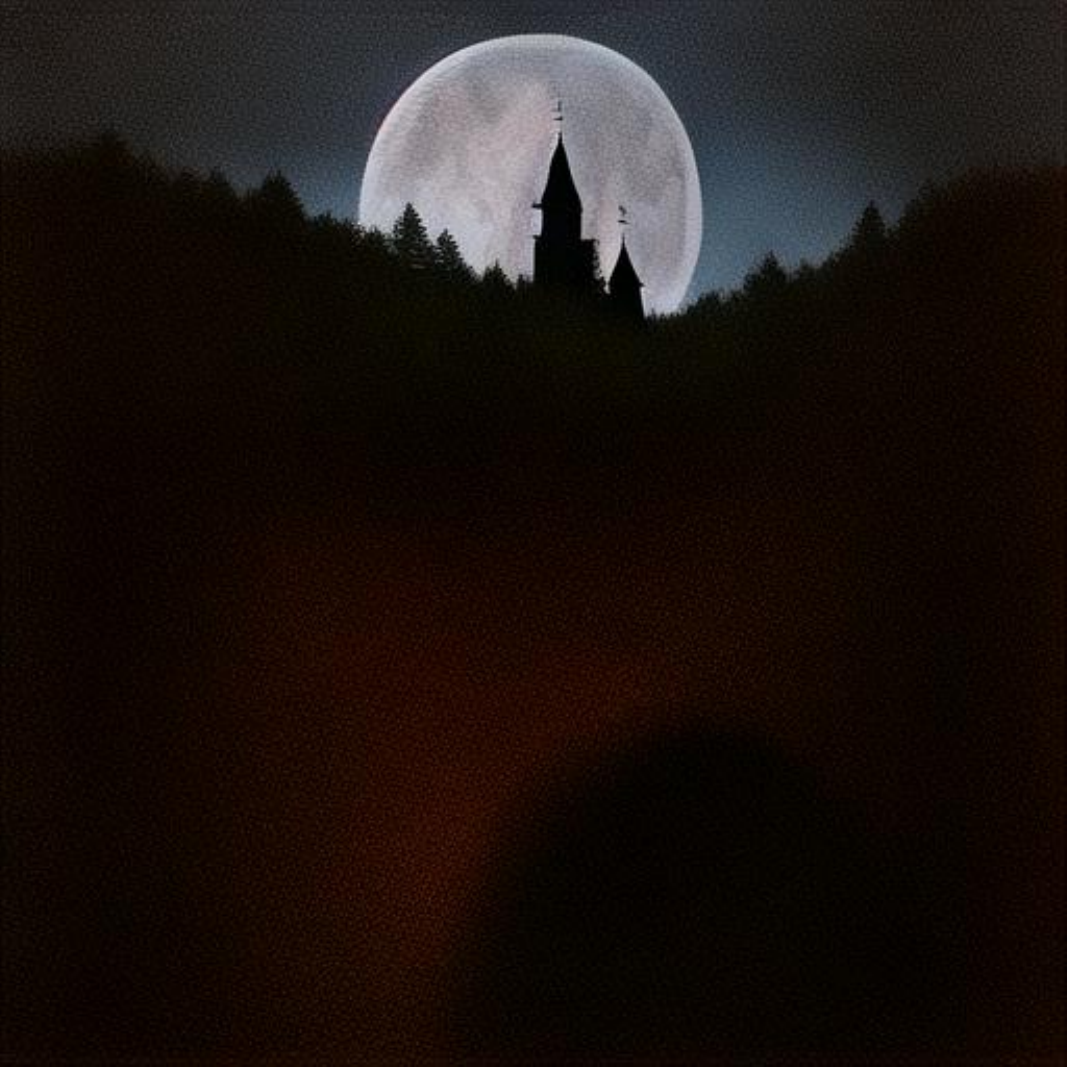}} & 
            \noindent\parbox[c]{0.17\columnwidth}{\includegraphics[width=0.17\columnwidth]{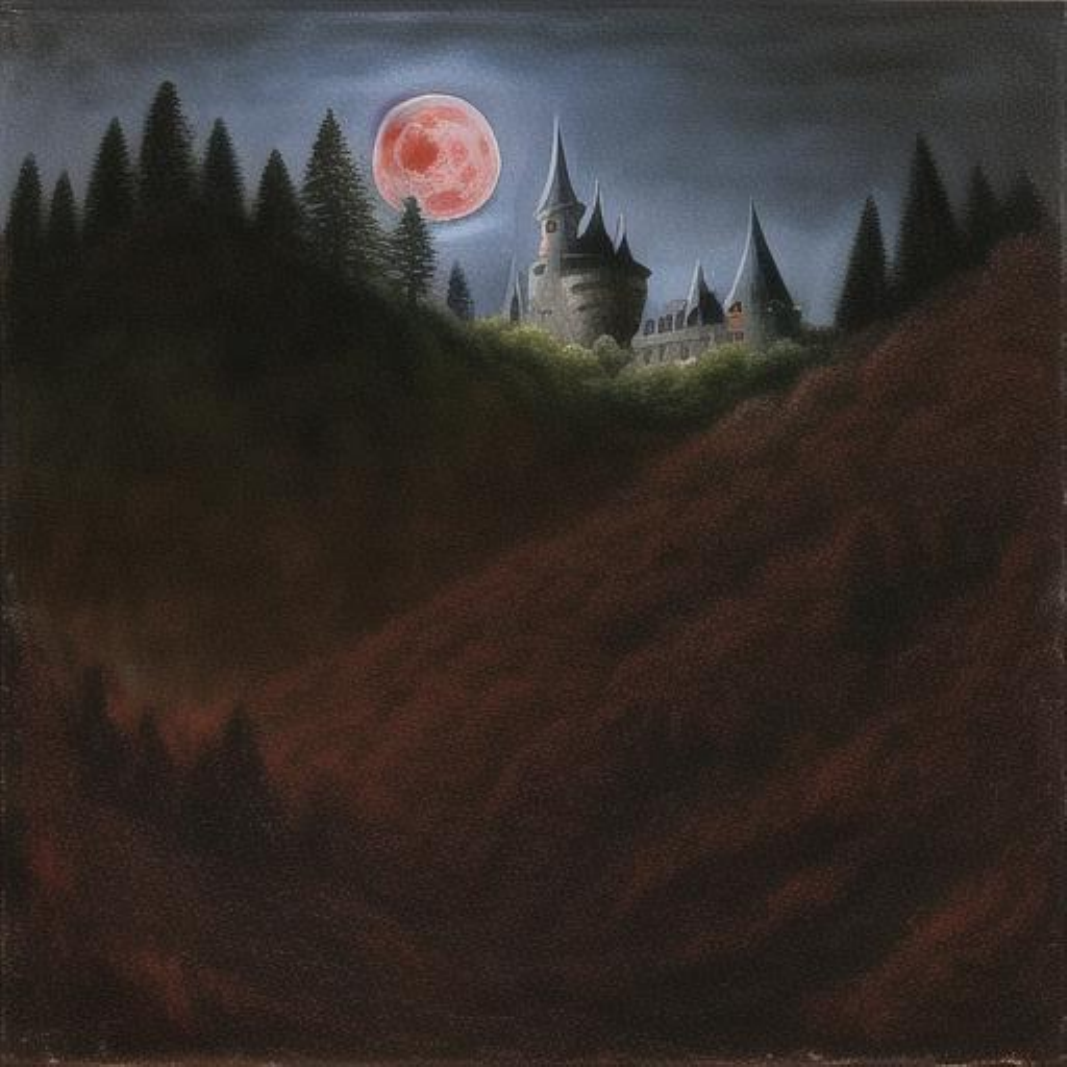}} & 
            \noindent\parbox[c]{0.17\columnwidth}{\includegraphics[width=0.17\columnwidth]{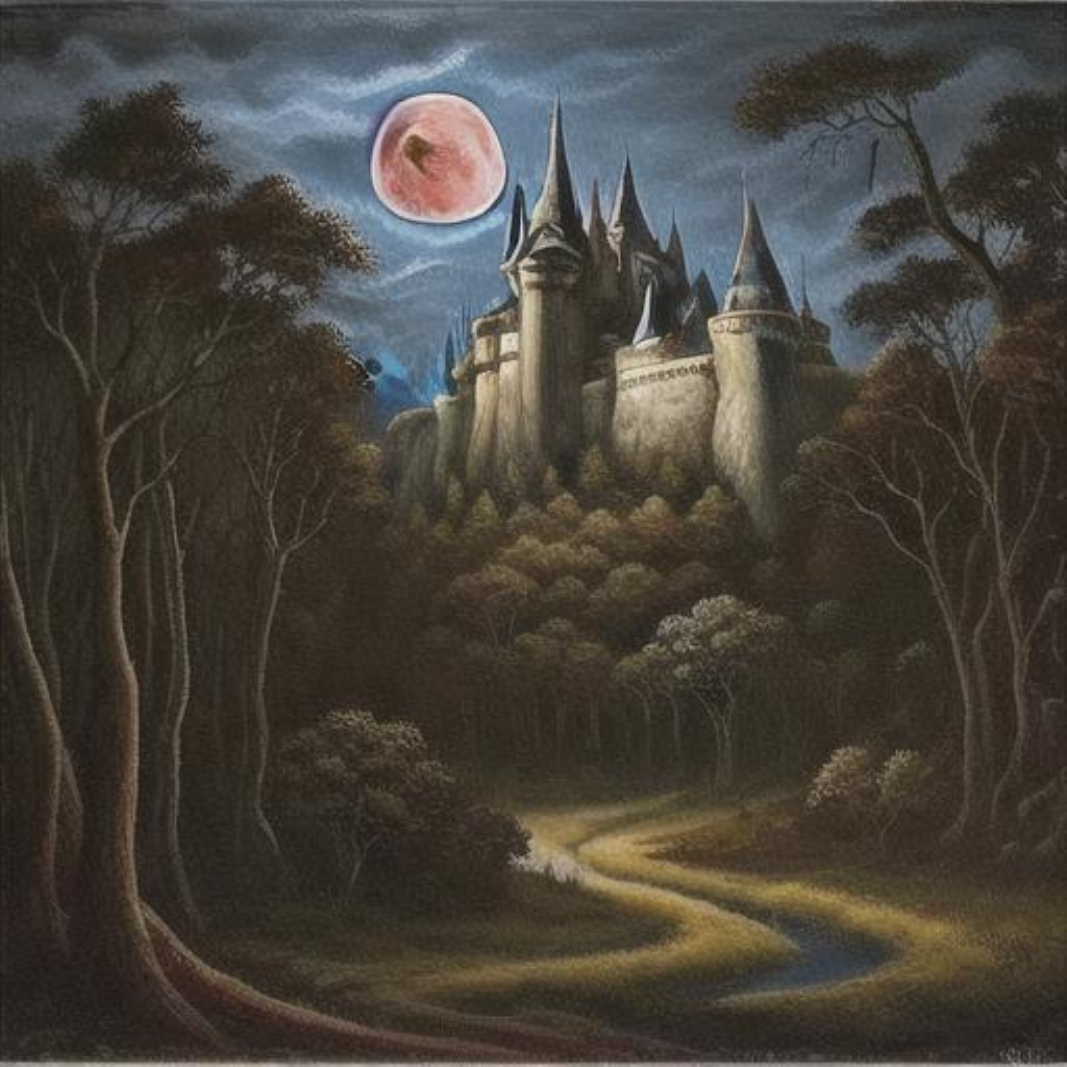}} & 
            \noindent\parbox[c]{0.17\columnwidth}{\includegraphics[width=0.17\columnwidth]{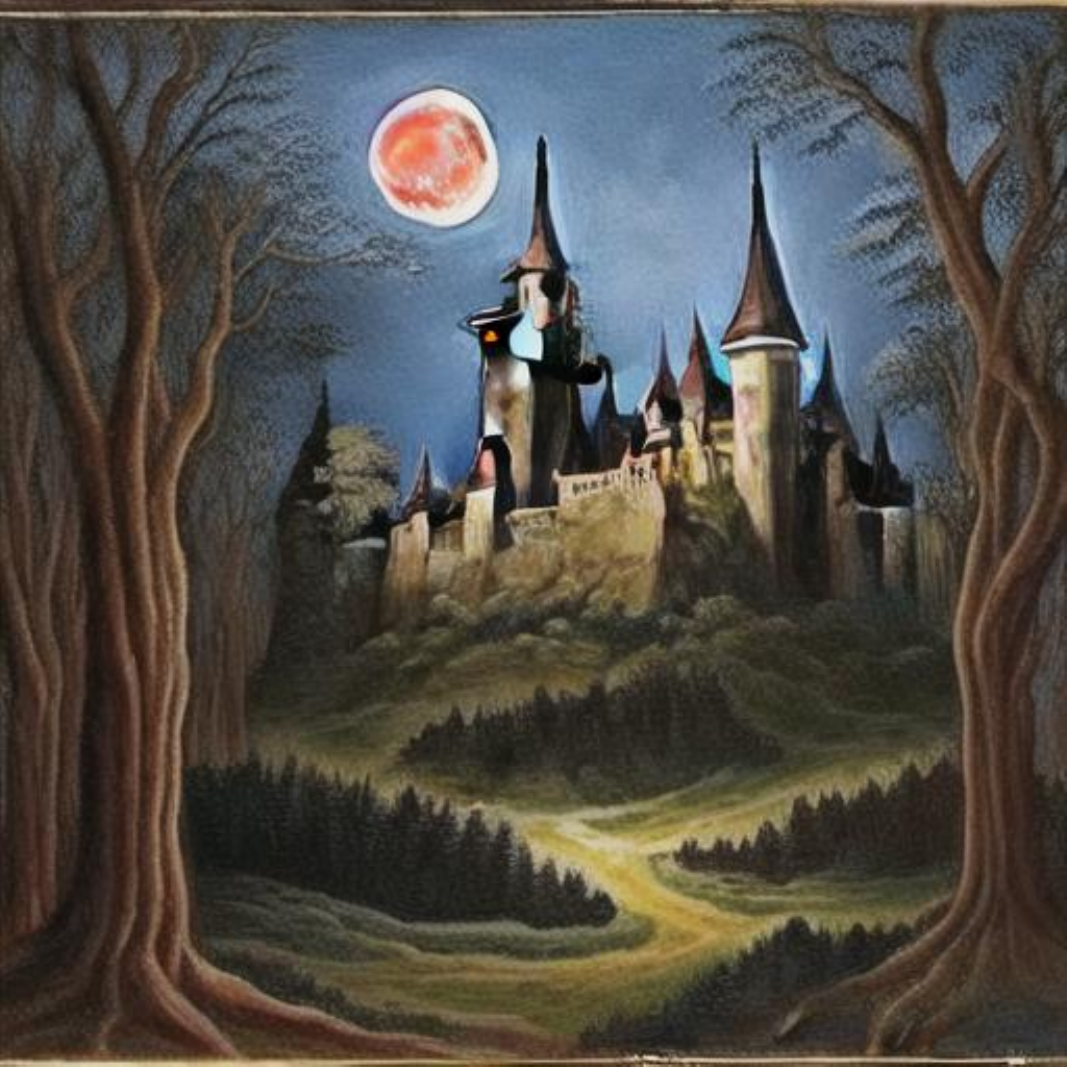}} & &
            \noindent\parbox[c]{0.17\columnwidth}{\includegraphics[width=0.17\columnwidth]{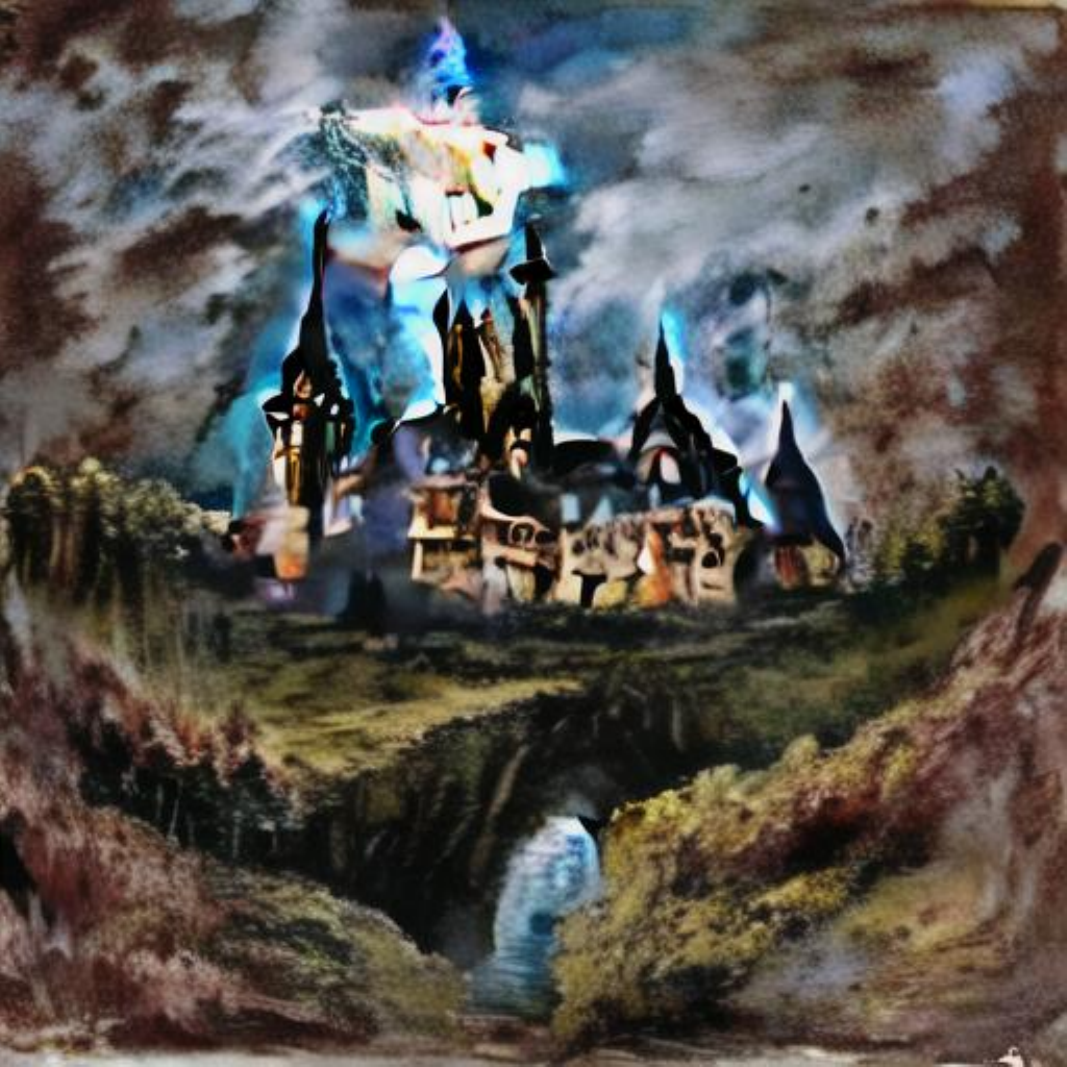}} \\
    
            & \multicolumn{1}{c}{\shortstack{\scriptsize $\beta = 0.2$}}
            & \multicolumn{1}{c}{\shortstack{\scriptsize $\beta = 0.4$}}
            & \multicolumn{1}{c}{\shortstack{\scriptsize $\beta = 0.6$}}
            & \multicolumn{1}{c}{\shortstack{\scriptsize $\beta = 0.8$}} &
            & \\
    
            \shortstack[l]{\scriptsize (b) w/ NT} &
            \noindent\parbox[c]{0.17\columnwidth}{\includegraphics[width=0.17\columnwidth]{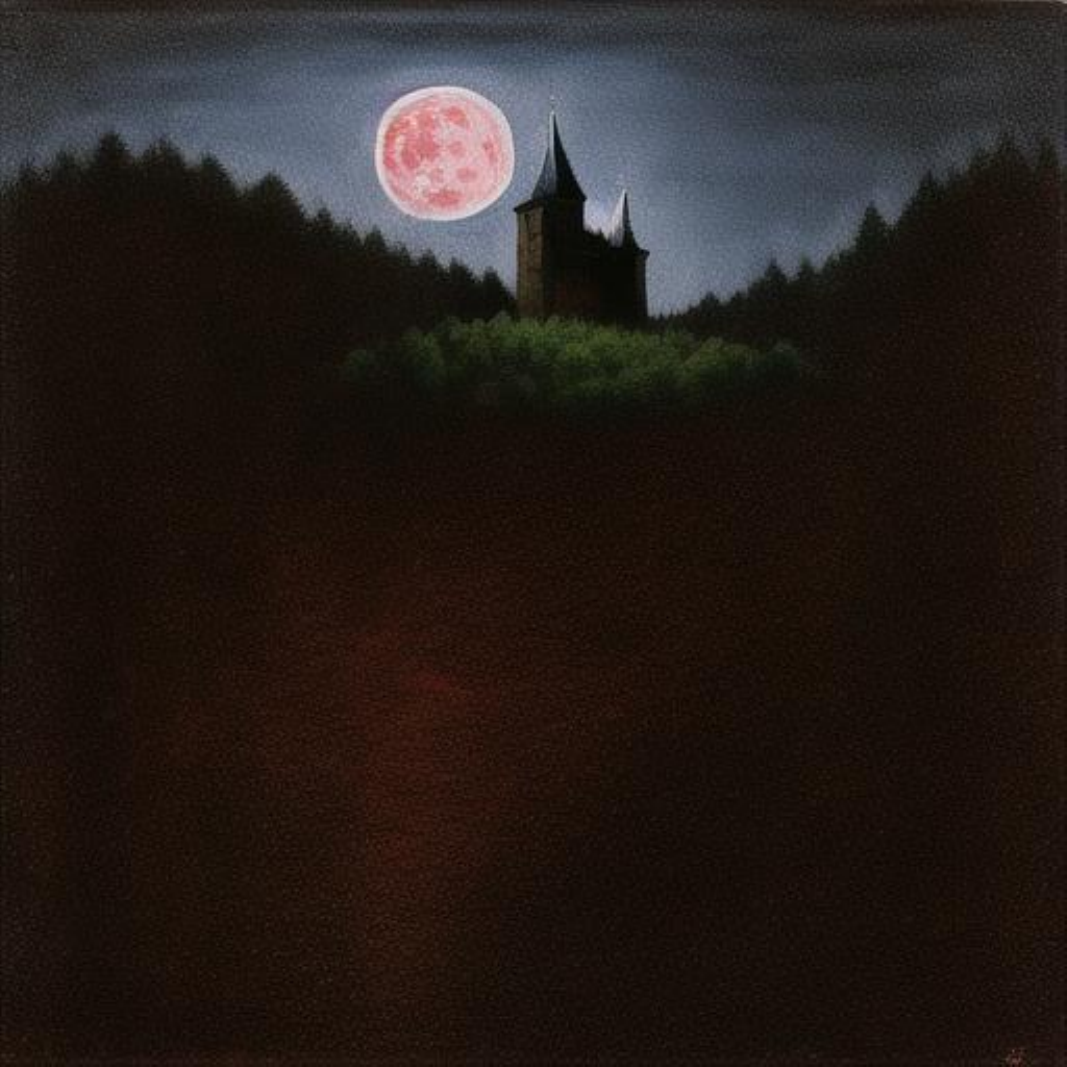}} & 
            \noindent\parbox[c]{0.17\columnwidth}{\includegraphics[width=0.17\columnwidth]{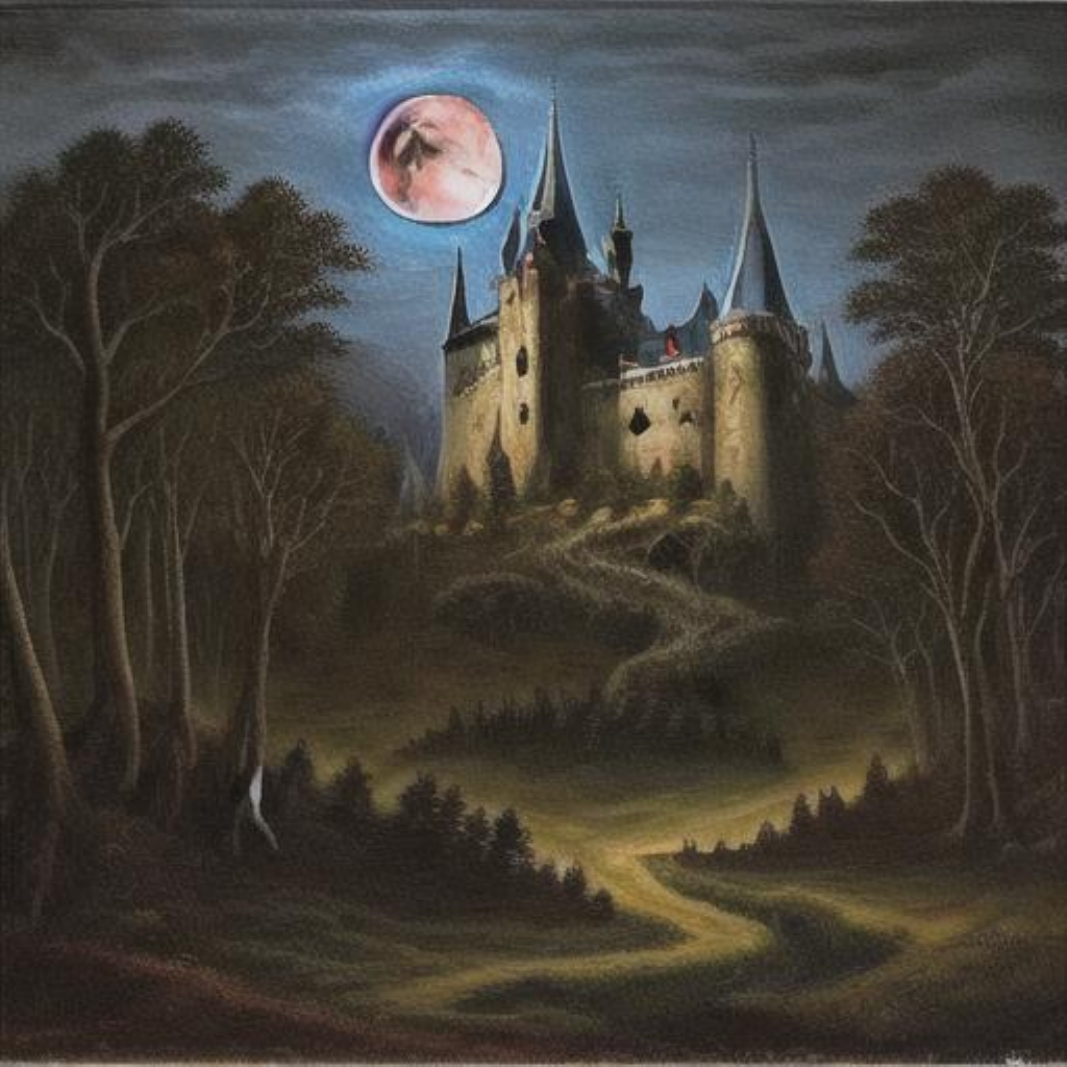}} & 
            \noindent\parbox[c]{0.17\columnwidth}{\includegraphics[width=0.17\columnwidth]{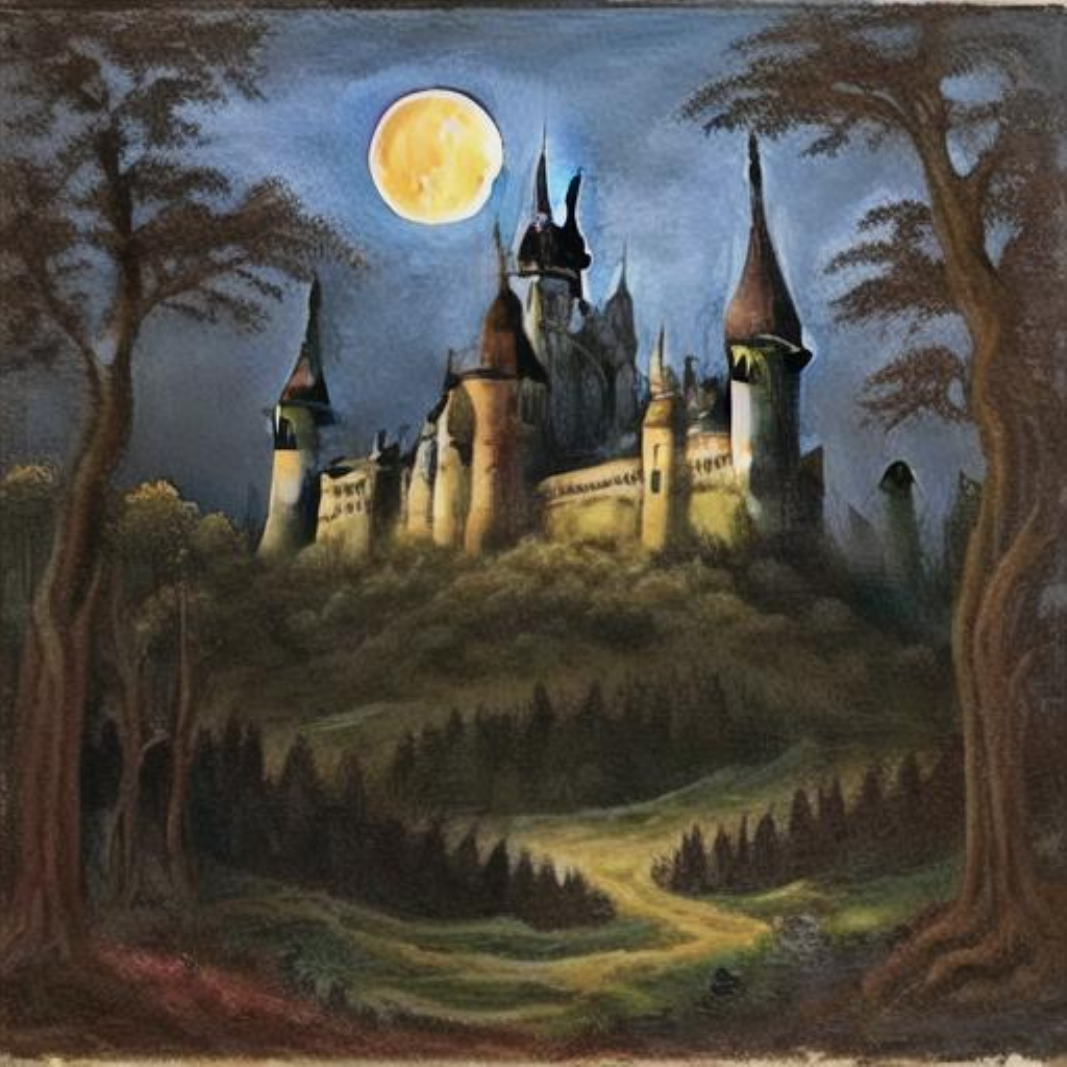}} & 
            \noindent\parbox[c]{0.17\columnwidth}{\includegraphics[width=0.17\columnwidth]{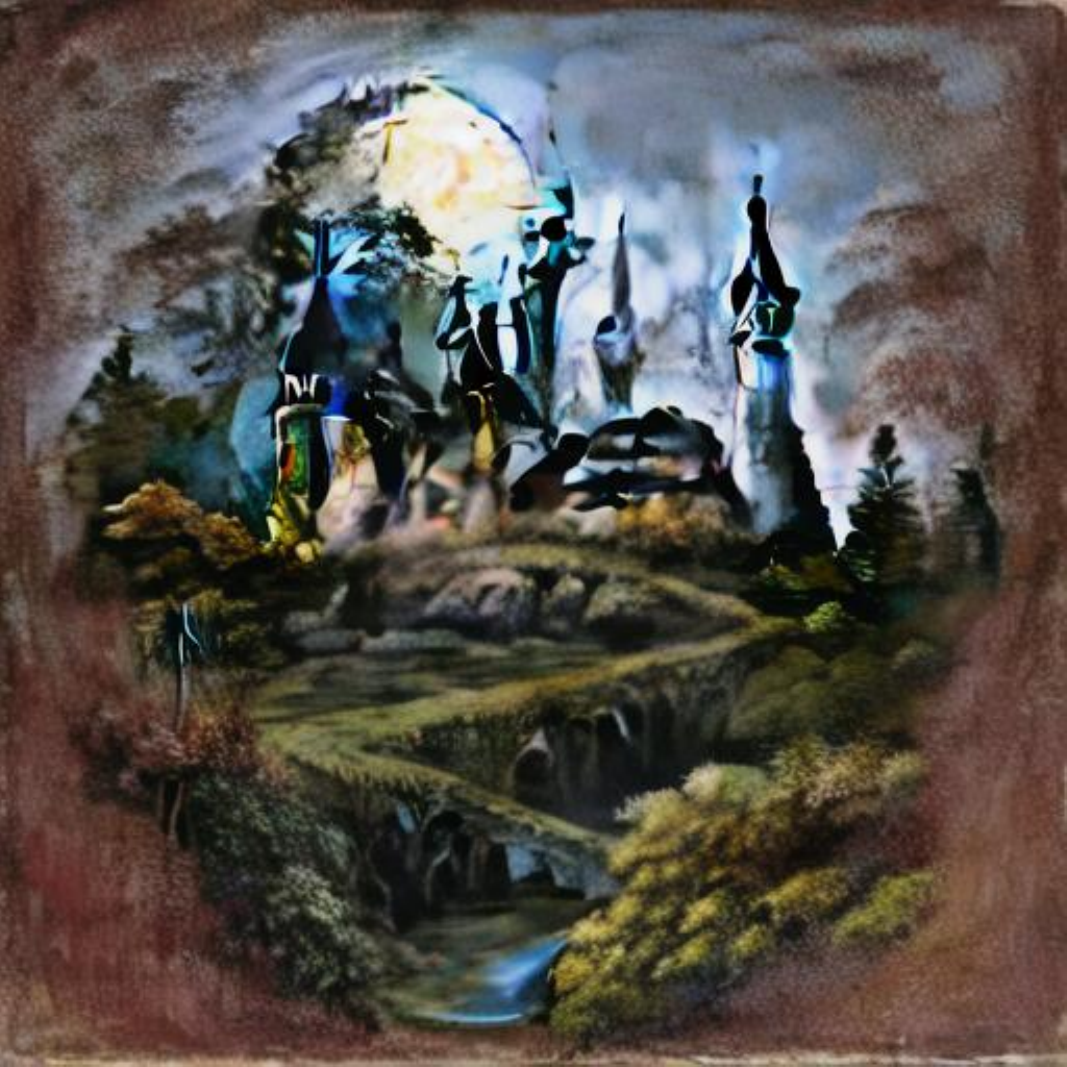}} & &
            \\
    
        \end{tabu}
    }
    \caption{Comparison between two variations of momentum: (a) Polyak's Heavy Ball (HB) and (b) Nesterov (NT). These momentum variations are applied to PLMS4 \cite{liu2022pseudo} on a fine-tuned Stable Diffusion model called Anything V4 \cite{Anything} with 15 sampling steps and a guidance scale of 15. Both variations effectively reduce artifacts. However, the choice of the effectiveness parameter $\beta$ might differ due to the distinct shapes of their respective stability regions.}
\end{figure}



\section{Statistical Reports}

In this section, we present detailed statistical reports for the experiments conducted in Section \ref{sec:exp_sd} and Section \ref{sec:ghvb}. These reports provide detailed information, including mean values and their corresponding 95\% confidence intervals, to offer a thorough understanding of the experimental results.

Firstly, we focus on the experiment related to mitigating the magnitude score in Section \ref{sec:exp_sd}. The results depicted in Figure \ref{fig:ablation_magnitude1} are presented in Table \ref{tab:ablation_magnitude1}. Additionally, the outcomes illustrated in Figure \ref{fig:ablation_magnitude2} are reported in Table \ref{tab:ablation_magnitude2}. For the ablation study of GHVB in Section \ref{sec:ghvb}, we provide the results shown in Figure \ref{fig:ablation_on_beta} in Table \ref{tab:ablation_on_beta}. Similarly, the findings presented in Figure \ref{fig:ablation_on_ghvb} are reported in Table \ref{tab:ablation_on_ghvb}.

Furthermore, we include a runtime comparison of each sampling method in Table \ref{tab:sampling_time}, detailing the wall clock time required for each method. The results indicate that all the methods exhibit similar sampling times, ensuring a fair comparison across the different approaches.

\begin{table}
\centering
\small
\begin{tabular}{lcccccc}
\toprule
\multicolumn{1}{c}{} & \multicolumn{6}{c}{Number of steps}                                                     \\
 Method & \multicolumn{1}{c}{10} & \multicolumn{1}{c}{15} & \multicolumn{1}{c}{20} & \multicolumn{1}{c}{25} & \multicolumn{1}{c}{30} & \multicolumn{1}{c}{60} \\ \midrule
DPM                  & 1.113 $\pm$ .090 & 1.369 $\pm$ .102 & 1.087 $\pm$ .083 & 0.972 $\pm$ .075 & 0.919 $\pm$ .068 & 0.869 $\pm$ .067 \\
DPM w/ HB 0.8        & 0.974 $\pm$ .078 & 1.057 $\pm$ .079 & 0.916 $\pm$ .072 & 0.857 $\pm$ .070 & 0.831 $\pm$ .067 & 0.834 $\pm$ .065 \\
DPM w/ HB 0.9        & 1.043 $\pm$ .082 & 1.186 $\pm$ .088 & 0.986 $\pm$ .075 & 0.921 $\pm$ .073 & 0.867 $\pm$ .068 & 0.844 $\pm$ .065 \\
PLMS4 w/ HB 0.8      & 1.958 $\pm$ .116 & 1.469 $\pm$ .105 & 1.213 $\pm$ .097 & 1.060 $\pm$ .087 & 0.963 $\pm$ .076 & 0.838 $\pm$ .063 \\
PLMS4 w/ HB 0.9      & 2.499 $\pm$ .118 & 1.888 $\pm$ .112 & 1.534 $\pm$ .112 & 1.270 $\pm$ .104 & 1.116 $\pm$ .091 & 0.887 $\pm$ .066 \\
PLMS4                & 3.149 $\pm$ .116 & 2.460 $\pm$ .116 & 1.911 $\pm$ .115 & 1.597 $\pm$ .116 & 1.372 $\pm$ .106 & 0.957 $\pm$ .075 \\ \bottomrule
\end{tabular}
\caption{95\% confidence intervals for the magnitude scores of HB (Figure \ref{fig:ablation_magnitude1})} \label{tab:ablation_magnitude1}
\end{table}
\begin{table}
\centering
\small
\begin{tabular}{lcccccc}
\toprule
\multicolumn{1}{c}{} & \multicolumn{6}{c}{Number of steps}                                                     \\
 Method & \multicolumn{1}{c}{10} & \multicolumn{1}{c}{15} & \multicolumn{1}{c}{20} & \multicolumn{1}{c}{25} & \multicolumn{1}{c}{30} & \multicolumn{1}{c}{60} \\ 
 \midrule
DDIM                 & 0.844 $\pm$ .076 & 0.765 $\pm$ .064 & 0.728 $\pm$ .060 & 0.744 $\pm$ .063 & 0.761 $\pm$ .062 & 0.778 $\pm$ .062 \\
GHVB2.1              & 1.238 $\pm$ .097 & 0.924 $\pm$ .072 & 0.832 $\pm$ .067 & 0.820 $\pm$ .066 & 0.825 $\pm$ .066 & 0.829 $\pm$ .064 \\
GHVB2.3              & 1.291 $\pm$ .102 & 0.952 $\pm$ .072 & 0.872 $\pm$ .070 & 0.836 $\pm$ .064 & 0.831 $\pm$ .066 & 0.828 $\pm$ .062 \\
GHVB2.5              & 1.392 $\pm$ .103 & 1.016 $\pm$ .081 & 0.907 $\pm$ .071 & 0.851 $\pm$ .069 & 0.842 $\pm$ .065 & 0.845 $\pm$ .063 \\
GHVB2.7              & 1.514 $\pm$ .105 & 1.095 $\pm$ .085 & 0.968 $\pm$ .077 & 0.877 $\pm$ .068 & 0.864 $\pm$ .067 & 0.826 $\pm$ .063 \\
GHVB2.9              & 1.673 $\pm$ .107 & 1.203 $\pm$ .090 & 1.023 $\pm$ .079 & 0.934 $\pm$ .075 & 0.901 $\pm$ .071 & 0.835 $\pm$ .063 \\
PLMS4                & 3.149 $\pm$ .116 & 2.460 $\pm$ .116 & 1.911 $\pm$ .115 & 1.597 $\pm$ .116 & 1.372 $\pm$ .106 & 0.957 $\pm$ .075 \\ 
\bottomrule
\end{tabular}
\caption{95\% confidence intervals for the magnitude scores of GHVB (Figure \ref{fig:ablation_magnitude2})} \label{tab:ablation_magnitude2}
\end{table}
\begin{table}
\centering
\small
\begin{tabular}{lccccccc}
\toprule
\multicolumn{1}{c}{} & \multicolumn{7}{c}{Number of steps}                                                                    \\
Method & 10           & 20           & 40           & 80           & 160          & 320          & 640          \\ 
\midrule
DDIM                 & 0.584 $\pm$ .034 & 0.409 $\pm$ .029 & 0.304 $\pm$ .029 & 0.210 $\pm$ .026 & 0.139 $\pm$ .022 & 0.085 $\pm$ .014 & 0.048 $\pm$ .009 \\
GHVB1.1              & 0.592 $\pm$ .034 & 0.406 $\pm$ .029 & 0.295 $\pm$ .030 & 0.189 $\pm$ .026 & 0.113 $\pm$ .020 & 0.054 $\pm$ .010 & 0.019 $\pm$ .005 \\
GHVB1.3              & 0.609 $\pm$ .035 & 0.410 $\pm$ .029 & 0.276 $\pm$ .029 & 0.158 $\pm$ .023 & 0.086 $\pm$ .017 & 0.030 $\pm$ .007 & 0.009 $\pm$ .003 \\
GHVB1.5              & 0.624 $\pm$ .036 & 0.409 $\pm$ .029 & 0.261 $\pm$ .029 & 0.145 $\pm$ .023 & 0.067 $\pm$ .014 & 0.021 $\pm$ .005 & 0.006 $\pm$ .002 \\
GHVB1.7              & 0.645 $\pm$ .037 & 0.411 $\pm$ .030 & 0.254 $\pm$ .028 & 0.133 $\pm$ .023 & 0.053 $\pm$ .011 & 0.016 $\pm$ .005 & 0.004 $\pm$ .002 \\
GHVB1.9              & 0.663 $\pm$ .037 & 0.414 $\pm$ .030 & 0.246 $\pm$ .028 & 0.123 $\pm$ .021 & 0.044 $\pm$ .009 & 0.013 $\pm$ .004 & 0.003 $\pm$ .001 \\
PLMS2                & 0.676 $\pm$ .038 & 0.418 $\pm$ .030 & 0.246 $\pm$ .028 & 0.119 $\pm$ .021 & 0.041 $\pm$ .009 & 0.011 $\pm$ .003 & 0.003 $\pm$ .001 \\ 
\bottomrule
\end{tabular}
\caption{95\% confidence intervals for L2 norm of GHVB (Figrue \ref{fig:ablation_on_beta})} \label{tab:ablation_on_beta}
\end{table}
\begin{table}
\centering
\small
\begin{tabular}{lccccc}
\toprule
\multicolumn{1}{c}{} & \multicolumn{5}{c}{Number of steps ($k_{\text{new}}$)}                            \\
 Method & 40           & 80           & 160          & 320          & 640          \\ 
 \midrule
GHVB0.5              & 0.247 $\pm$ .030 & 0.235 $\pm$ .029 & 0.351 $\pm$ .045 & 0.450 $\pm$ .057 & 0.474 $\pm$ .057 \\
GHVB1.5              & 0.550 $\pm$ .072 & 0.717 $\pm$ .086 & 0.922 $\pm$ .089 & 1.337 $\pm$ .102 & 1.519 $\pm$ .102 \\
GHVB2.5              & 0.624 $\pm$ .077 & 1.121 $\pm$ .115 & 1.546 $\pm$ .132 & 1.906 $\pm$ .153 & 1.846 $\pm$ .147 \\
GHVB3.5              & 0.459 $\pm$ .063 & 0.920 $\pm$ .107 & 1.877 $\pm$ .170 & 1.960 $\pm$ .147 & 1.779 $\pm$ .163 \\ 
\bottomrule
\end{tabular}
\caption{95\% confidence intervals for the numerical orders of convergence of GHVB (Figure \ref{fig:ablation_on_ghvb})} \label{tab:ablation_on_ghvb}
\end{table}

\begin{table}[ht]
\centering
\small
\begin{tabular}{lccc}
\toprule
\multicolumn{1}{c}{}   & \multicolumn{3}{c}{Number of steps}  \\
Method    & 15         & 30         & 60         \\ \midrule
DPM-Solver++           & 2.49       & 4.84      & 9.54 \\
DPM-Solver++ w/ HB 0.9 & 2.49       & 4.84      & 9.54 \\
PLMS4                  & 2.49       & 4.84      & 9.54 \\
PLMS4 w/ HB 0.9        & 2.46       & 4.79      & 9.43 \\
PLMS4 w/ NT 0.9        & 2.53       & 4.93      & 9.70 \\
GHVB3.9                & 2.50       & 4.84      & 9.54 \\ \bottomrule
\end{tabular}
\caption{Comparison of the average sampling time per image (in seconds) when using different numbers of steps in Stable Diffusion 1.5 on an NVIDIA GeForce RTX 3080. The time differences are marginal.} \label{tab:sampling_time}
\end{table}

\section{Ablation on Magnitude Score}

In this section, our objective is to provide further verification and justification of the experiment conducted in Section \ref{sec:exp_sd} by exploring various parameter settings for the magnitude score and assessing their effects on the selected model.

\subsection{Results with Alternative Parameter Settings}

To gain deeper insights into the integration of momentum into sampling methods, we analyze the results of the magnitude scores depicted in Figure \ref{fig:ablation_magnitude1} (Section \ref{sec:exp_sd}). This analysis involves varying the threshold $\tau$ and the kernel size $k$ for max-pooling in the calculation of the magnitude score. By investigating different parameter settings, we aim to validate the outcomes of the experiment and uncover the scaling impact of the magnitude score. The results, shown in Figure \ref{fig:magnitude_tau_kernelsize}, highlight the influence of threshold $\tau$ and kernel size $k$ on the magnitude score. It is important to note that while extreme values of $\tau$ or $k$ may introduce ambiguity in interpreting the outcomes, the overall observed trends remain consistent.

\tabulinesep=1pt
\begin{figure}
    \centering
    \begin{tabu} to \textwidth {
        @{}l
        @{\hspace{10pt}}c
        @{\hspace{5pt}}c
        @{\hspace{5pt}}c
        @{\hspace{5pt}}c
        @{}
    }

        \multicolumn{1}{c}{\shortstack[l]{\scriptsize Threshold}}
        & \multicolumn{1}{c}{\scriptsize $\tau = 0$}
        & \multicolumn{1}{c}{\scriptsize $\tau = 1.5$}
        & \multicolumn{1}{c}{\scriptsize $\tau = 3.0$}
        & \multicolumn{1}{c}{\scriptsize $\tau = 10.0$}
        \\

        \multicolumn{1}{c}{\shortstack[l]{\scriptsize kernel size \\ \scriptsize $k = 1$}} &
        \noindent\parbox[c]{0.21\columnwidth}{\includegraphics[width=0.21\columnwidth]{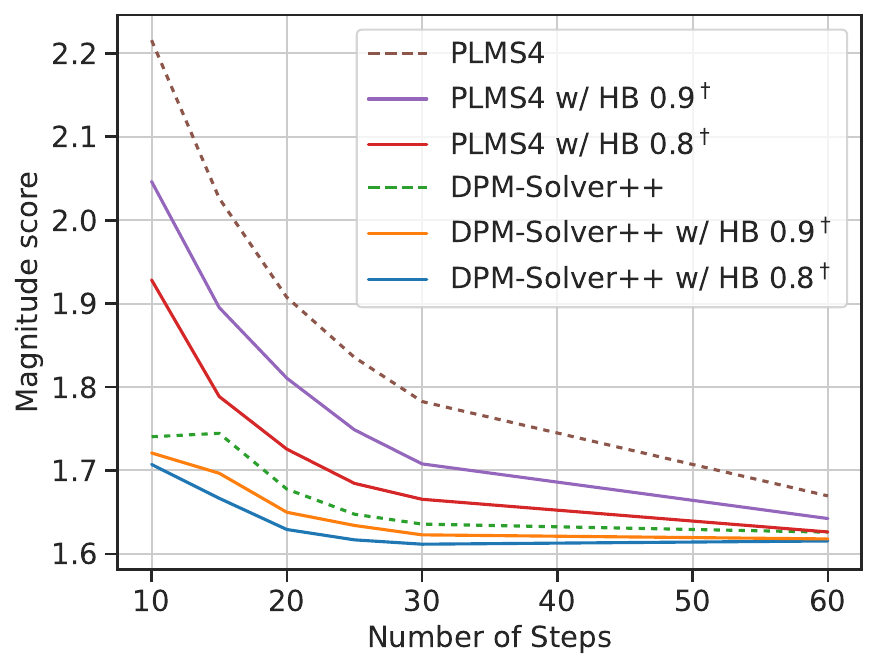}} & 
        \noindent\parbox[c]{0.21\columnwidth}{\includegraphics[width=0.21\columnwidth]{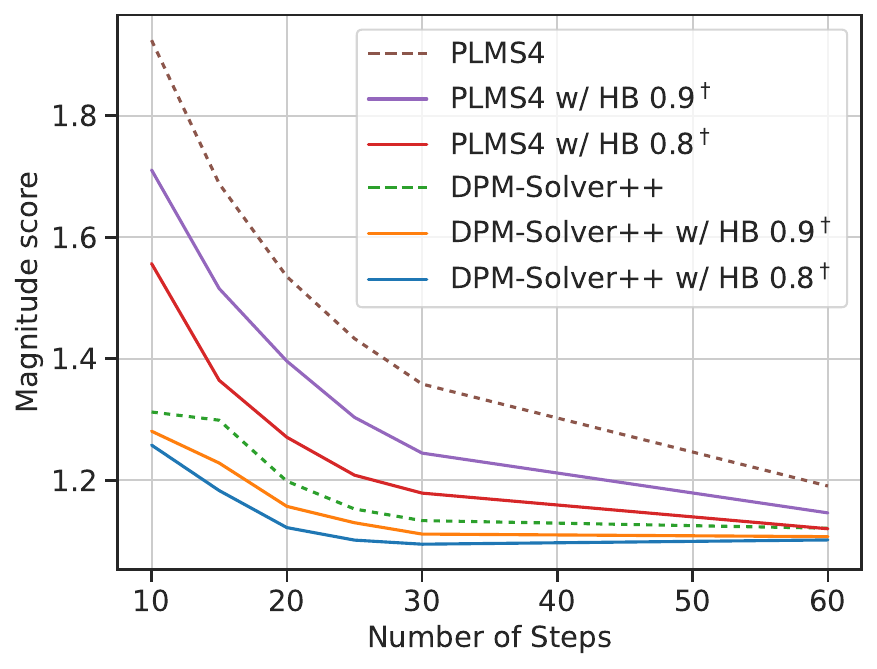}} & 
        \noindent\parbox[c]{0.21\columnwidth}{\includegraphics[width=0.21\columnwidth]{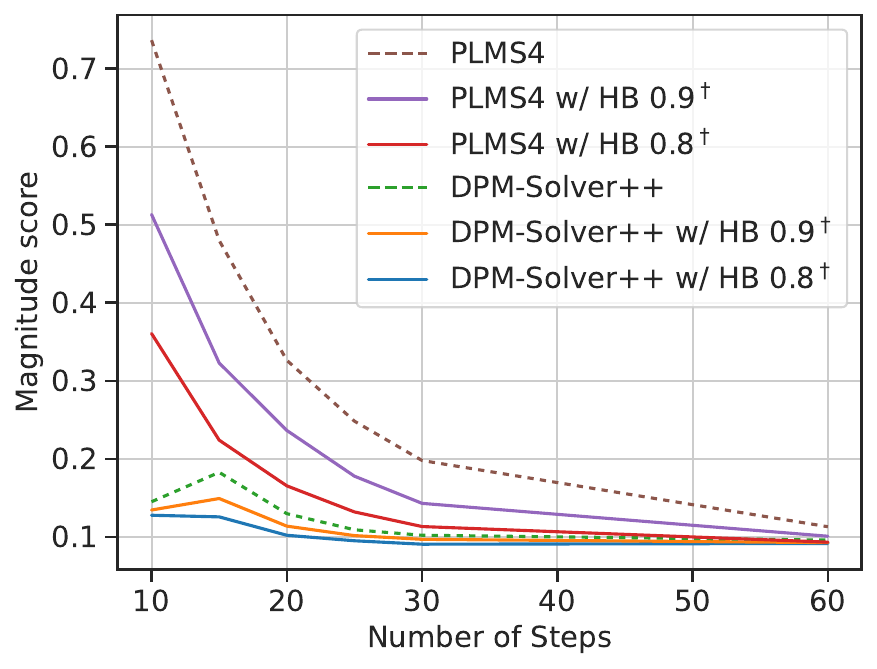}} &
        \noindent\parbox[c]{0.21\columnwidth}{\includegraphics[width=0.21\columnwidth]{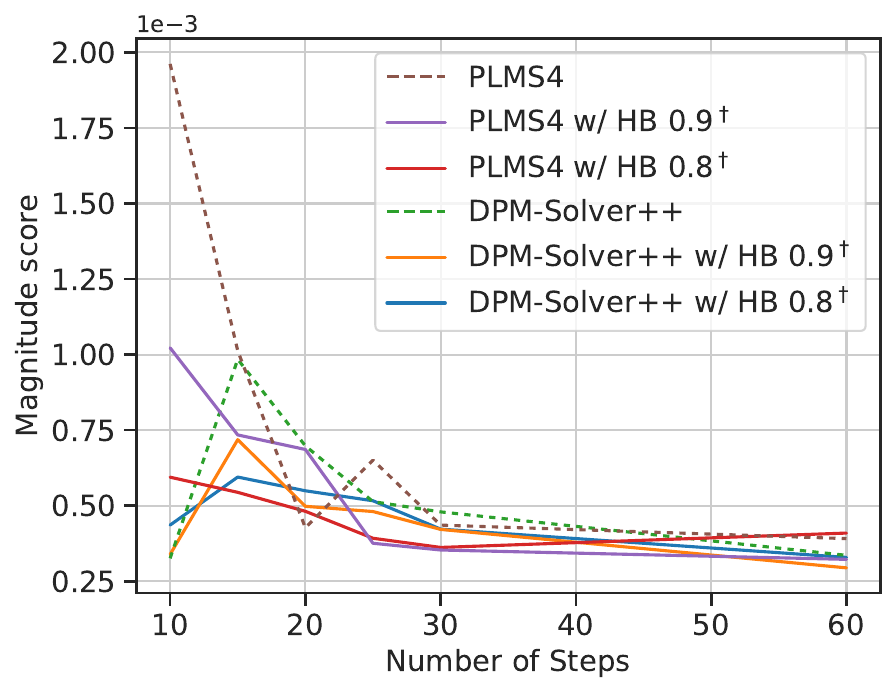}} \\

        \multicolumn{1}{c}{\shortstack[l]{\scriptsize kernel size \\ \scriptsize $k = 4$}} &
        \noindent\parbox[c]{0.21\columnwidth}{\includegraphics[width=0.21\columnwidth]{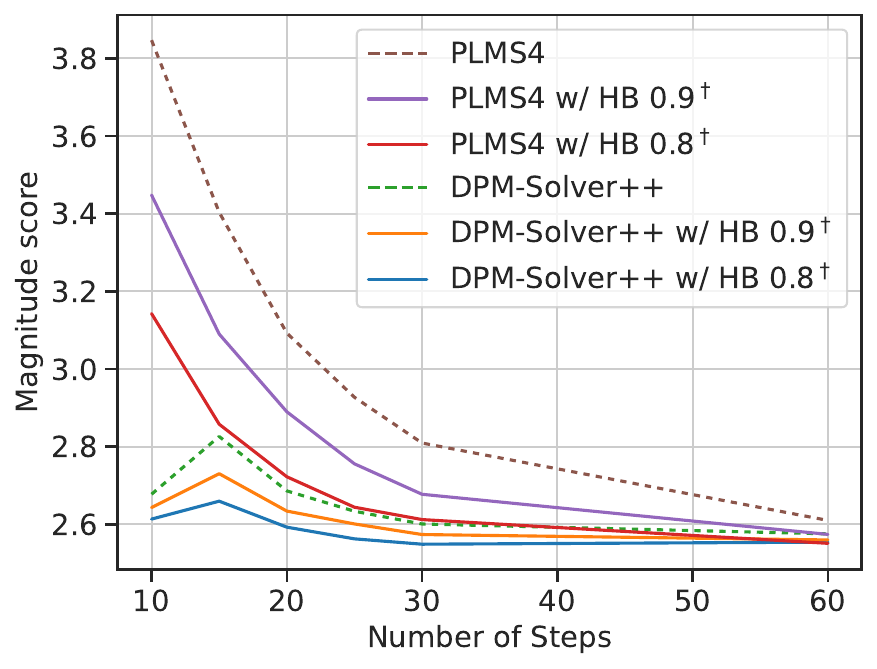}} & 
        \noindent\parbox[c]{0.21\columnwidth}{\includegraphics[width=0.21\columnwidth]{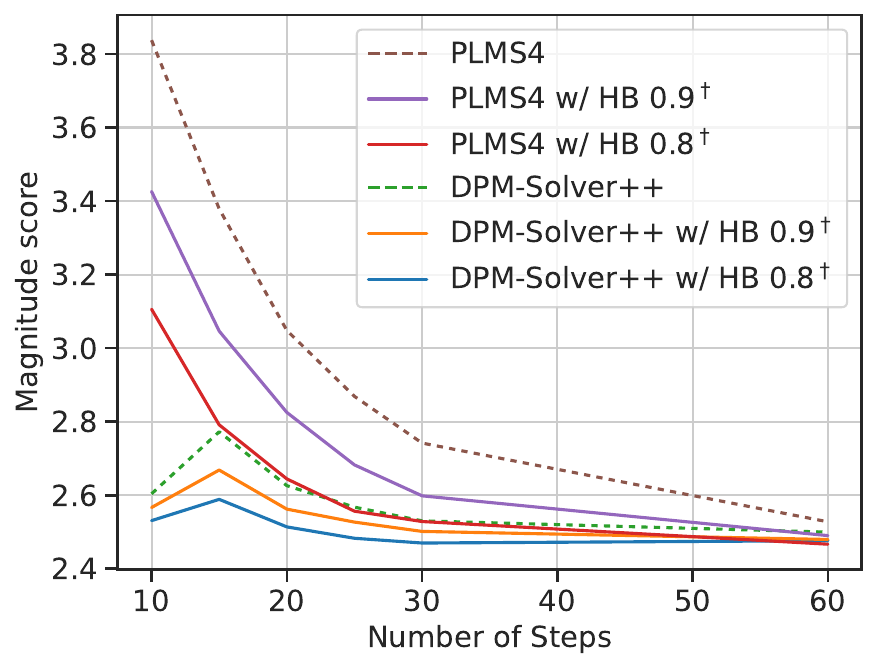}} & 
        \noindent\parbox[c]{0.21\columnwidth}{\includegraphics[width=0.21\columnwidth]{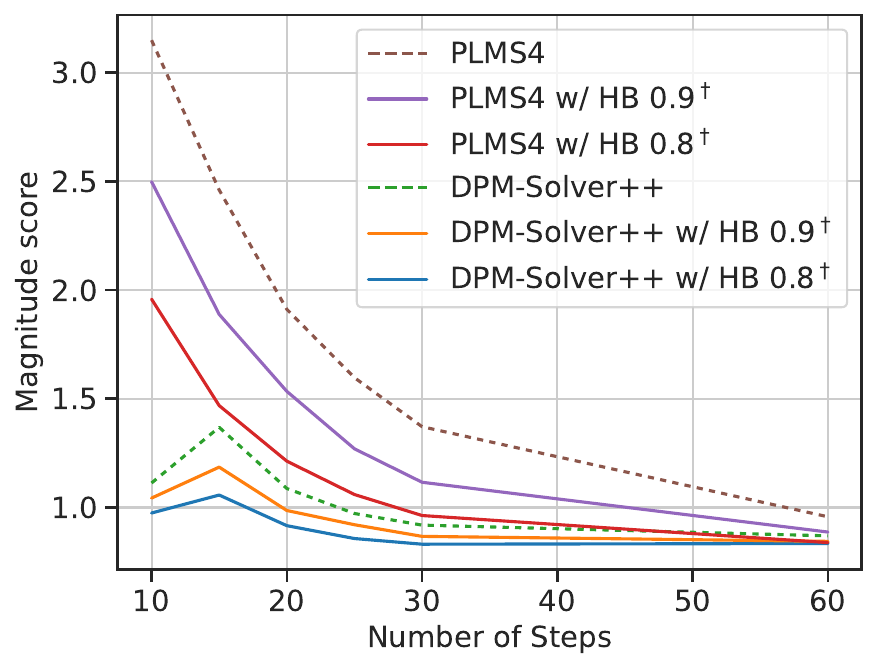}} &
        \noindent\parbox[c]{0.21\columnwidth}{\includegraphics[width=0.21\columnwidth]{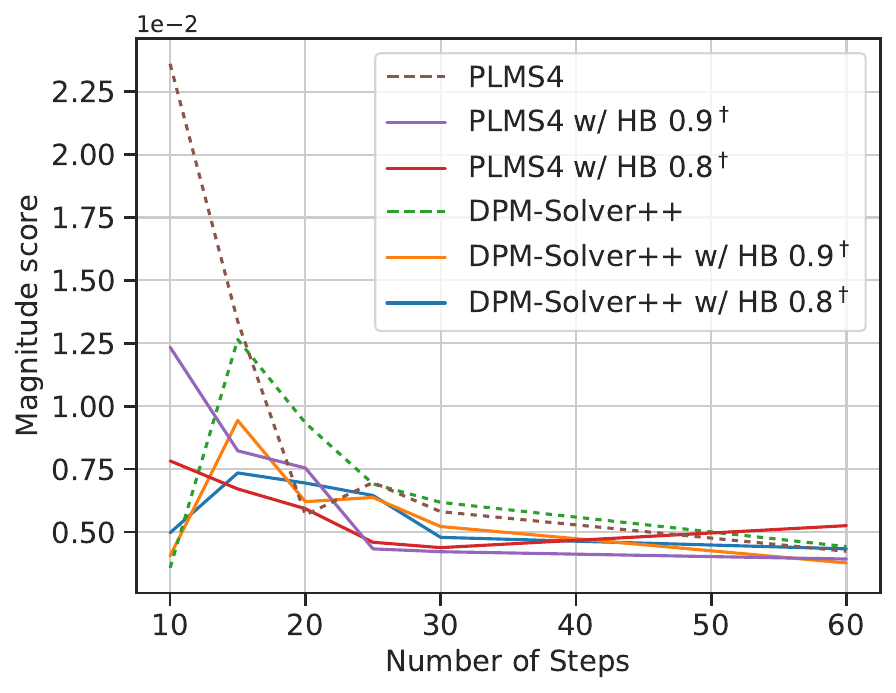}} \\

        \multicolumn{1}{c}{\shortstack[l]{\scriptsize kernel size \\ \scriptsize $k = 64$}} &
        \noindent\parbox[c]{0.21\columnwidth}{\includegraphics[width=0.21\columnwidth]{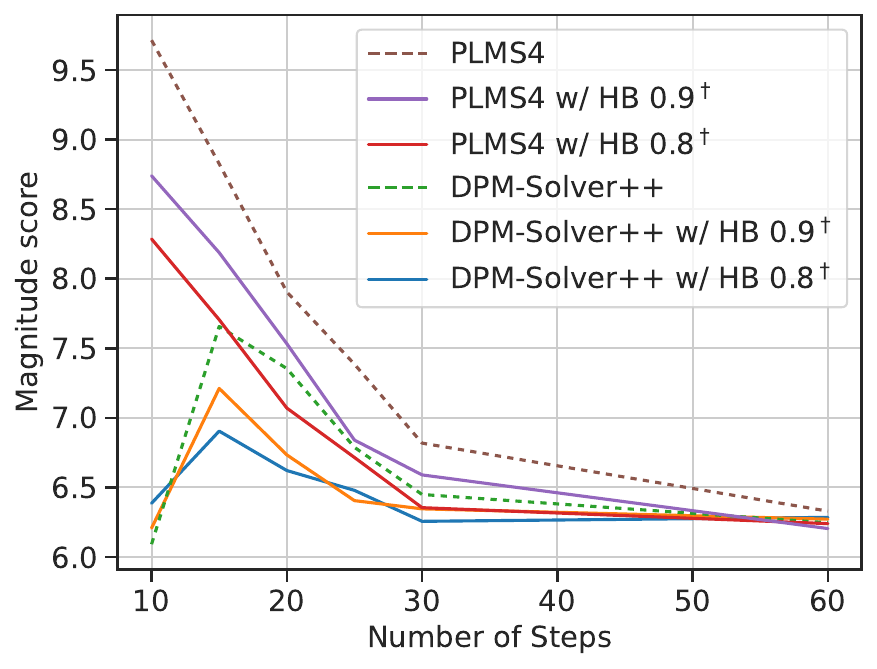}} & 
        \noindent\parbox[c]{0.21\columnwidth}{\includegraphics[width=0.21\columnwidth]{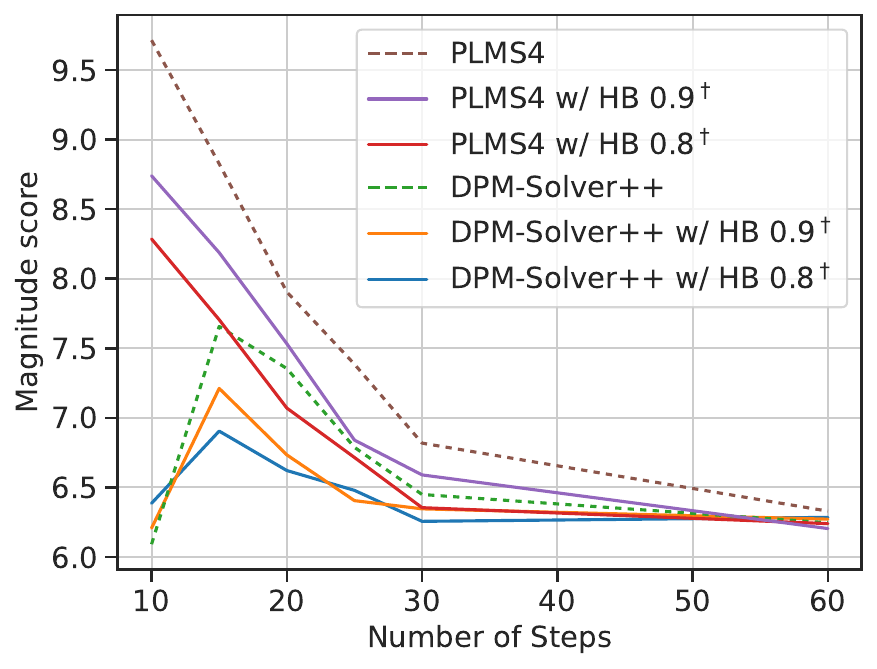}} & 
        \noindent\parbox[c]{0.21\columnwidth}{\includegraphics[width=0.21\columnwidth]{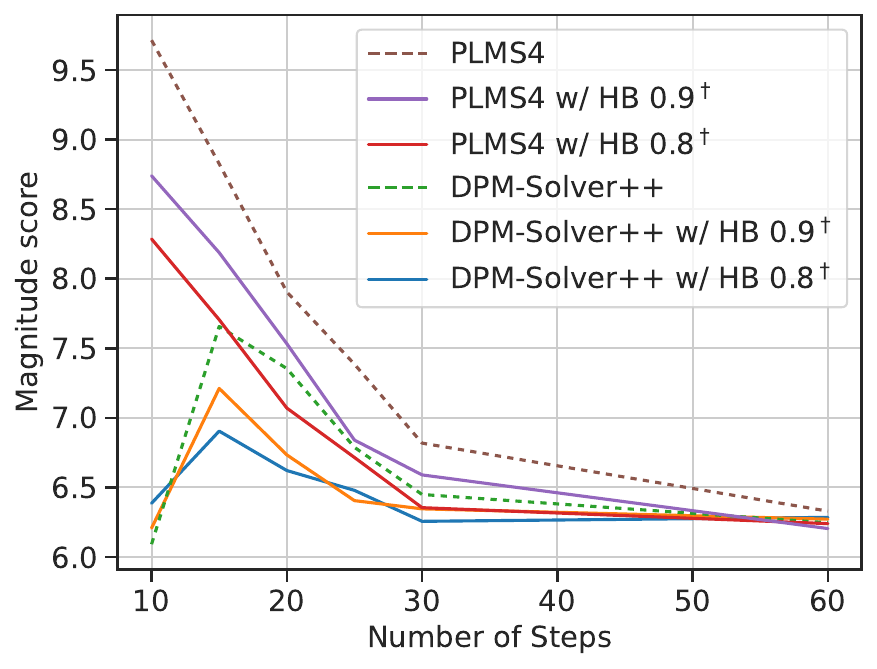}} &
        \noindent\parbox[c]{0.21\columnwidth}{\includegraphics[width=0.21\columnwidth]{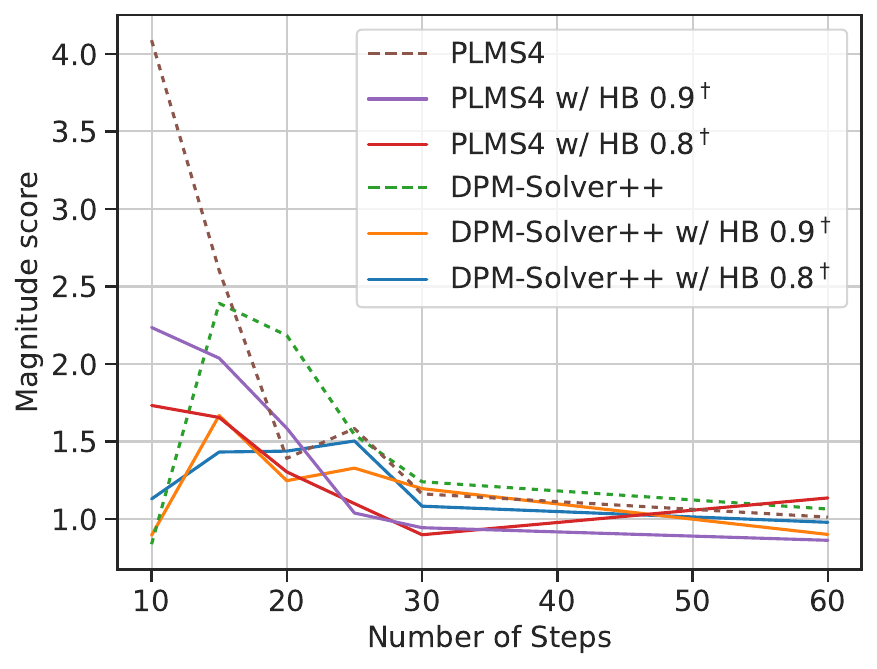}} \\

    \end{tabu}
    \caption{Comparison of magnitude scores on Anything V4 on different combinations of threshold $\tau$ and kernel size $k$ used in max-pooling. \#samples = 160}
    \label{fig:magnitude_tau_kernelsize}
\end{figure}

\subsection{Results on Alternative Models}

In this section, we present the findings from our analysis conducted on alternative diffusion models, namely Stable Diffusion 1.5, Waifu Diffusion V1.4, and Dreamlike Photoreal 2.0. The primary aim of this investigation is to assess the impact of different models on the magnitude score and determine whether the trends identified in Section \ref{sec:exp_sd} hold across diverse model architectures.

For this analysis, we employed the same magnitude score parameters as in Section \ref{sec:exp_sd}. The results of our examination are illustrated in Figure \ref{fig:magnitude_model}, which showcases the magnitude scores for each model.
One important observation is that the change in model architecture only affects the scale of the magnitude score, while the overall trend remains consistent across all models. 

\tabulinesep=1pt
\begin{figure}
    \centering
    \begin{tabu} to \textwidth {
        @{}c
        @{\hspace{10pt}}c
        @{\hspace{10pt}}c
        @{}
    }

        \multicolumn{1}{c}{\scriptsize Stable Diffusion 1.5}
        & \multicolumn{1}{c}{\scriptsize Waifu Diffusion V1.4}
        & \multicolumn{1}{c}{\scriptsize Dreamlike Photoreal 2.0}
        \\
        
        \noindent\parbox[c]{0.30\columnwidth}{\includegraphics[width=0.30\columnwidth]{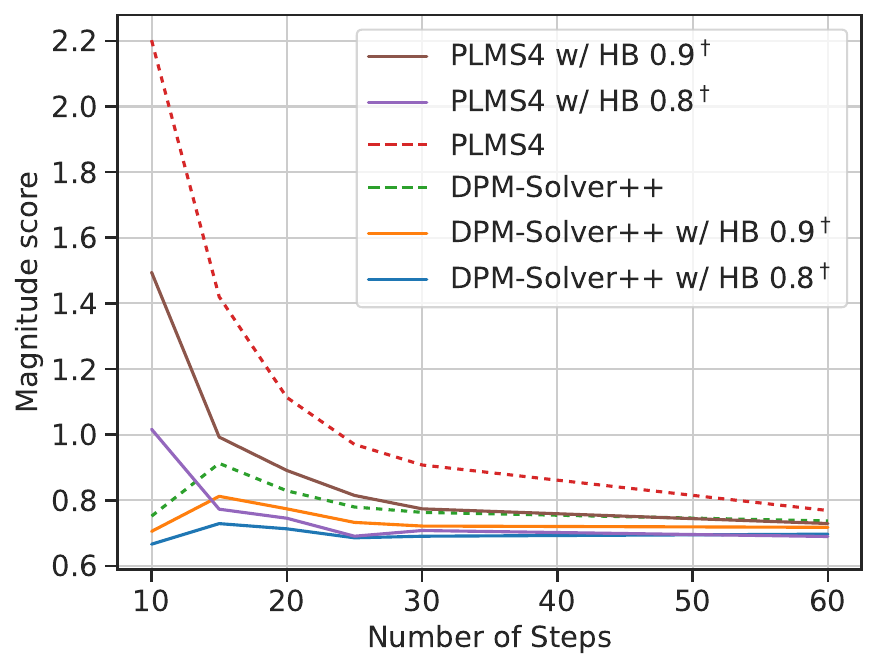}} & 
        \noindent\parbox[c]{0.30\columnwidth}{\includegraphics[width=0.30\columnwidth]{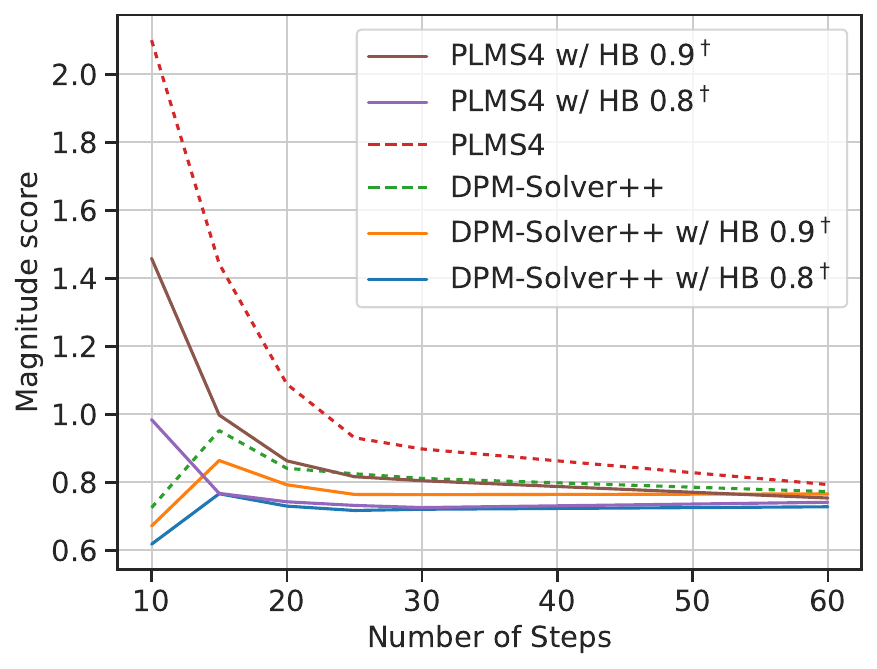}} & 
        \noindent\parbox[c]{0.30\columnwidth}{\includegraphics[width=0.30\columnwidth]{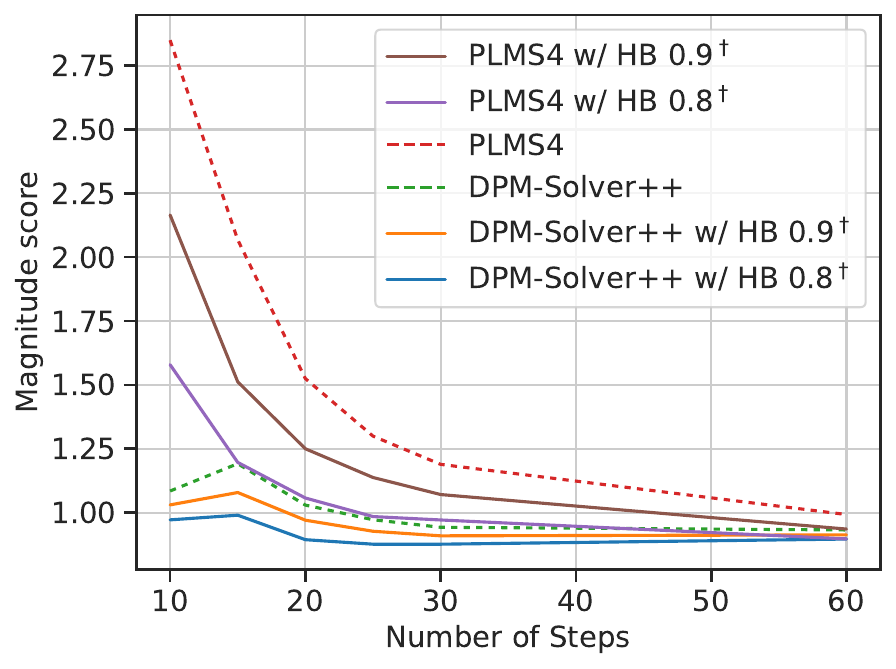}} \\

        \noindent\parbox[c]{0.30\columnwidth}{\includegraphics[width=0.30\columnwidth]{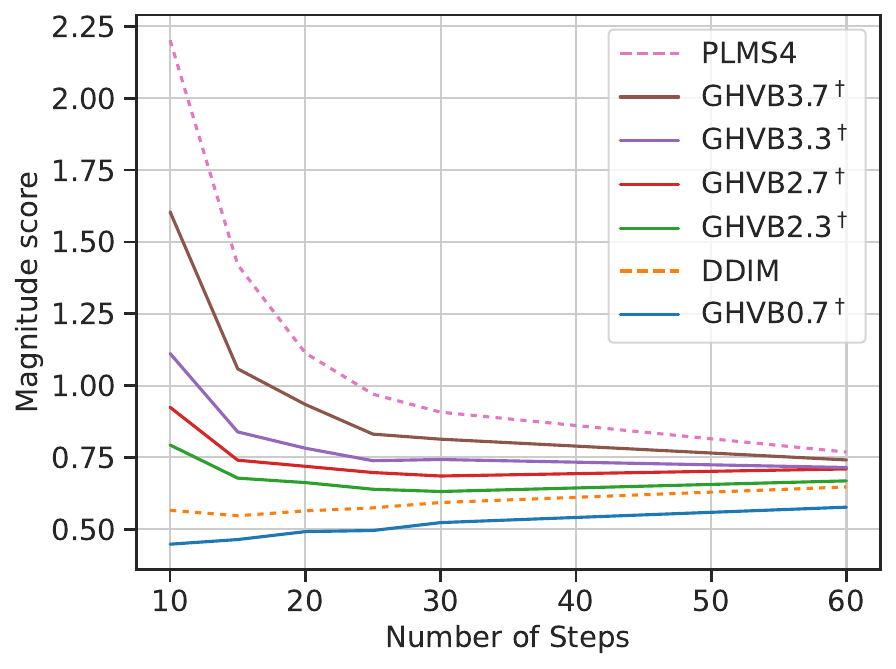}} & 
        \noindent\parbox[c]{0.30\columnwidth}{\includegraphics[width=0.30\columnwidth]{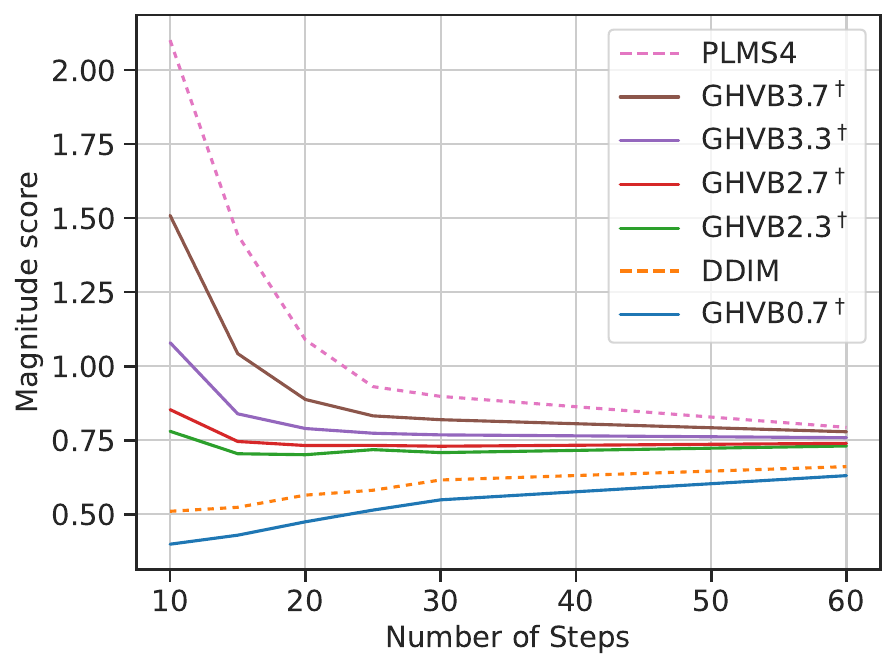}} & 
        \noindent\parbox[c]{0.30\columnwidth}{\includegraphics[width=0.30\columnwidth]{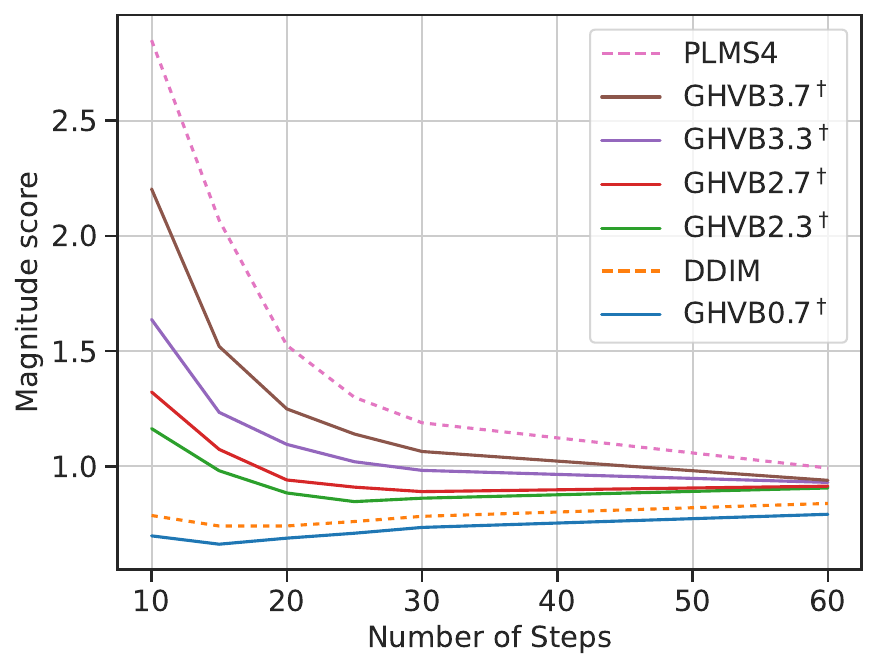}} \\

    \end{tabu}
    \caption{Comparison of magnitude scores in different diffusion models. \#samples = 160}
    \label{fig:magnitude_model}
\end{figure}


\section{Frequently Asked Questions} \label{apx:faqs}

\textbf{Q: What does the term ``divergence artifacts'' refer to?}

In this paper, the term ``divergence artifacts'' is used to describe visual anomalies that occur when the numerical solution diverges, resulting in unusually large magnitudes of the results. In the context of latent-based diffusion, we specifically define divergence artifacts as visual artifacts caused by latent codes with magnitudes that exceed the usual range. These artifacts commonly arise when the stability region of the numerical method fails to handle all eigenvalues of the system, leading to a divergent numerical solution. To visually demonstrate the presence of divergence artifacts, we have included Figure \ref{fig:latent_scaling}. This illustration showcases the process of starting with diffusion results and subsequently scaling the latent code within a $4\times4$ square located at the center of the latent image. This scaling is achieved by multiplying the latent code with a constant factor. As a result of this manipulation, divergence artifacts become distinctly visible, particularly at the center of the resulting image. This illustration provides a clear representation of the impact that scaling the latent code can have on the occurrence of divergence artifacts.


\textbf{Q: Can we directly interpolate two existing numerical methods instead of using the GHVB method?}

Indeed, this is possible. However, the order of the resulting method will be the lowest order of the two methods. To illustrate this point, let us consider a direct interpolation between the 1\ts{st}-order Euler method (AB1) and the 2\ts{nd}-order Adams-Bashford method (AB2), expressed as follows:

\begin{align}
x_{n+1} = x_n + \delta \left((1-\beta) f(x_n) + \beta\frac{3}{2}f(x_n) - \beta\frac{1}{2}f(x_{n-1})\right)
\end{align}

As outlined in Appendix \ref{apx:conv}, despite the orders of the interpolated methods, the resulting method is a 1\ts{st}-order method.

\begin{figure}
    \centering
    \begin{tabu} to \textwidth {
        @{}
        l@{\hspace{5pt}}
        c@{\hspace{5pt}}
        c@{\hspace{2pt}}
        c
        @{}
    }  
        &
        & \multicolumn{2}{c}{ Scaling factor} \\
        
        &  Original (no scaling) &  $\times 3.0$ &  $\times 6.0$  \\
        
        \shortstack[l]{Stable   Diffusion 1.5} &
        \noindent\parbox[c]{0.19\columnwidth}{\includegraphics[width=0.19\columnwidth]{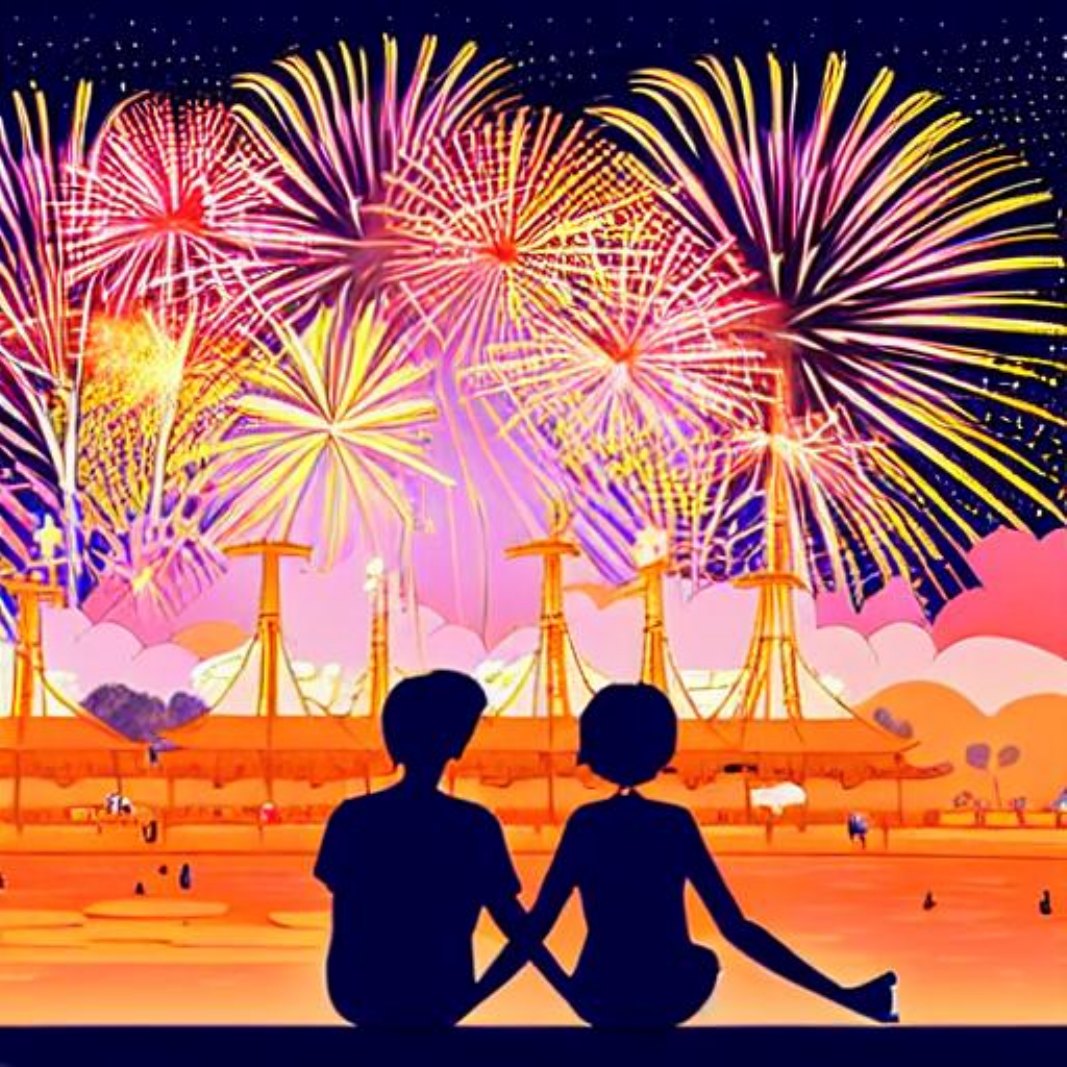}} & 
        \noindent\parbox[c]{0.19\columnwidth}{\includegraphics[width=0.19\columnwidth]{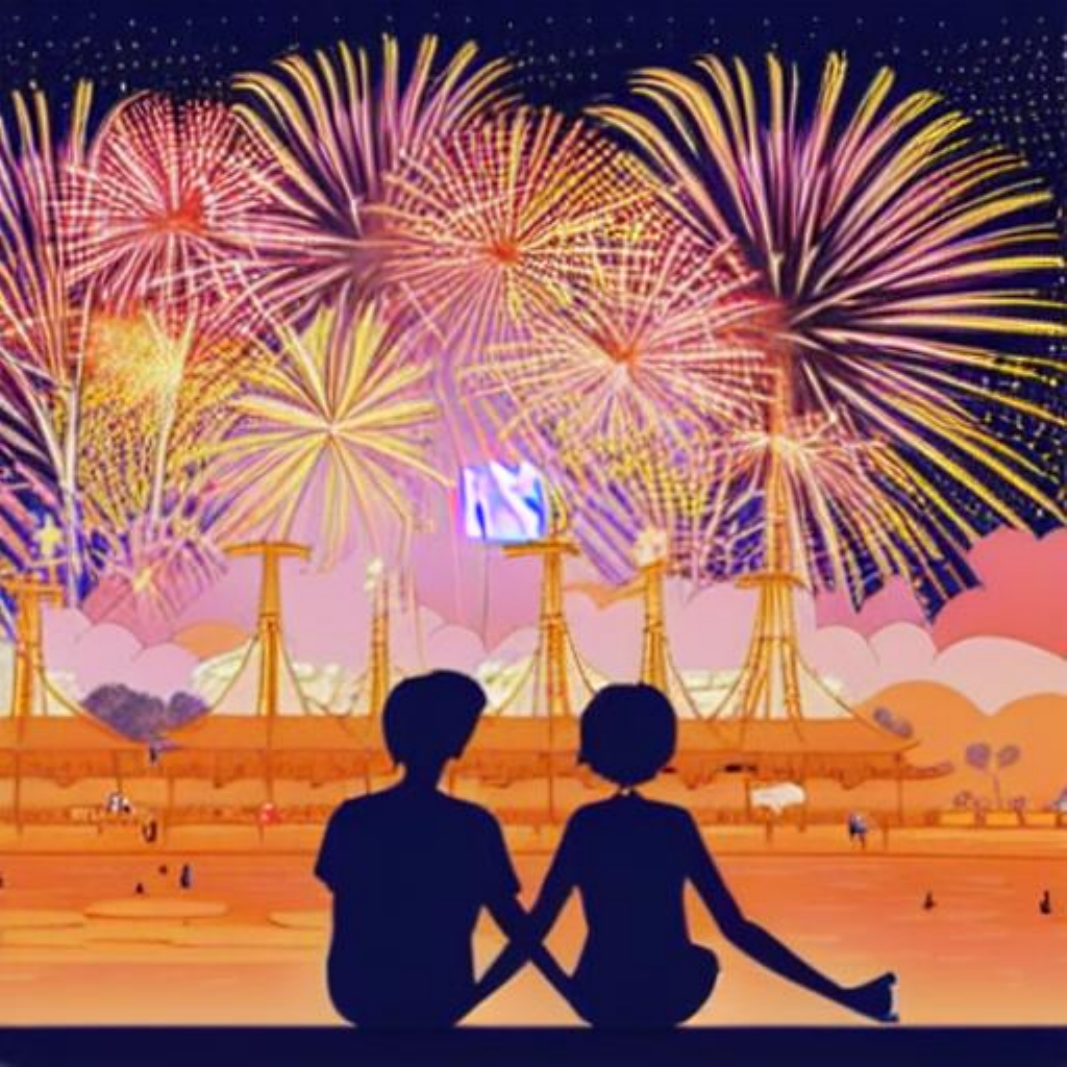}} & 
        \noindent\parbox[c]{0.19\columnwidth}{\includegraphics[width=0.19\columnwidth]{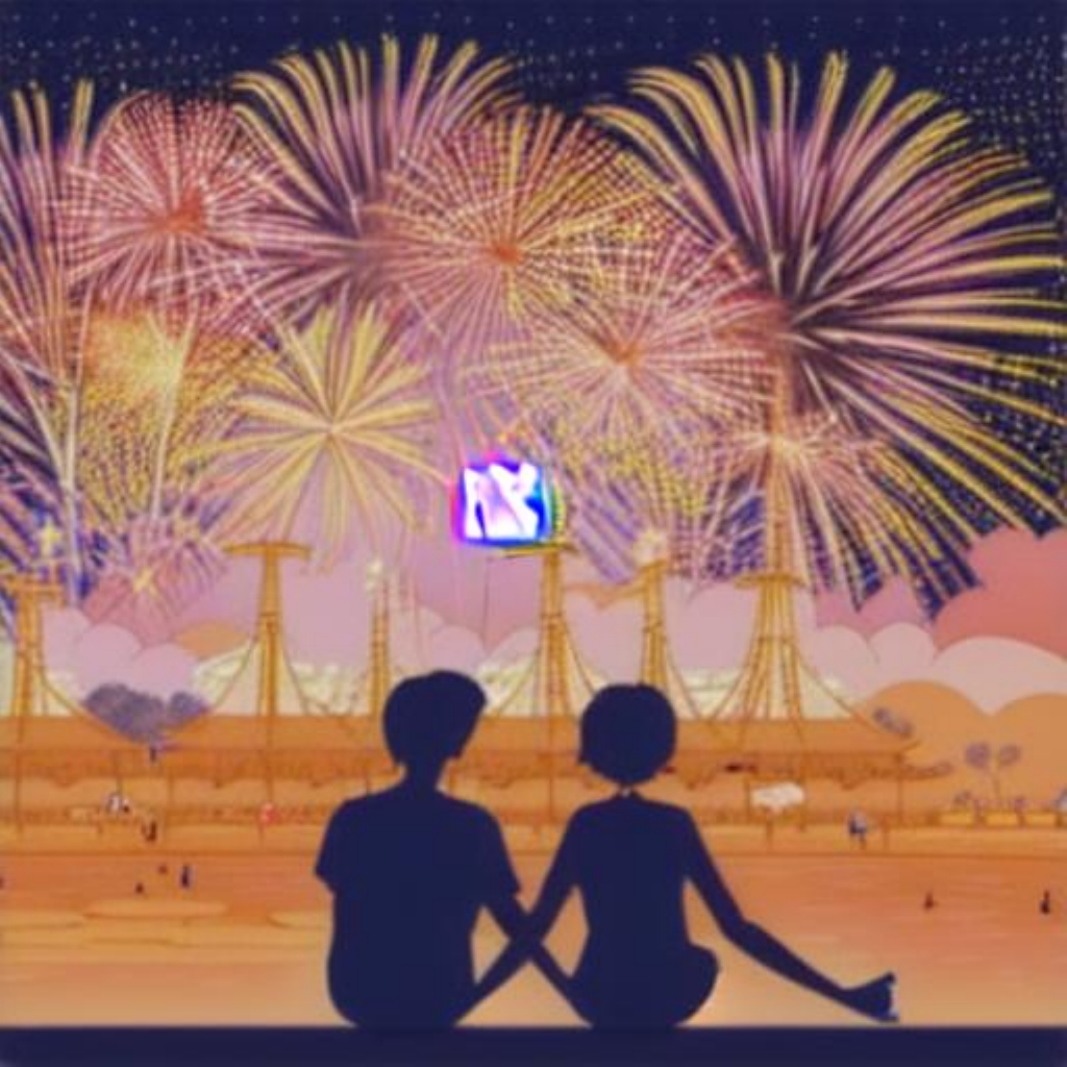}} \\

        \shortstack[l]{ Anything  Diffusion V4.0} &
        \noindent\parbox[c]{0.19\columnwidth}{\includegraphics[width=0.19\columnwidth]{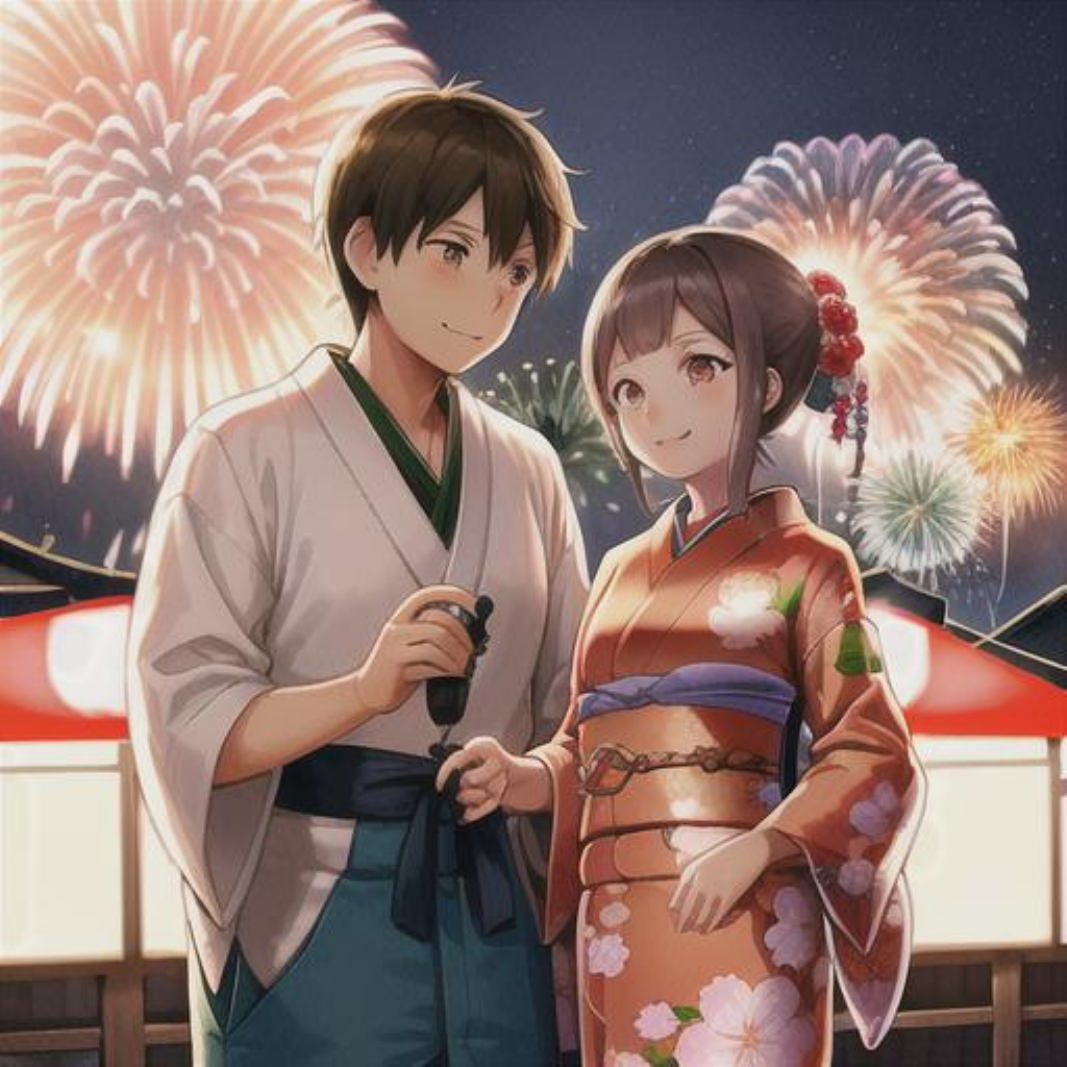}} & 
        \noindent\parbox[c]{0.19\columnwidth}{\includegraphics[width=0.19\columnwidth]{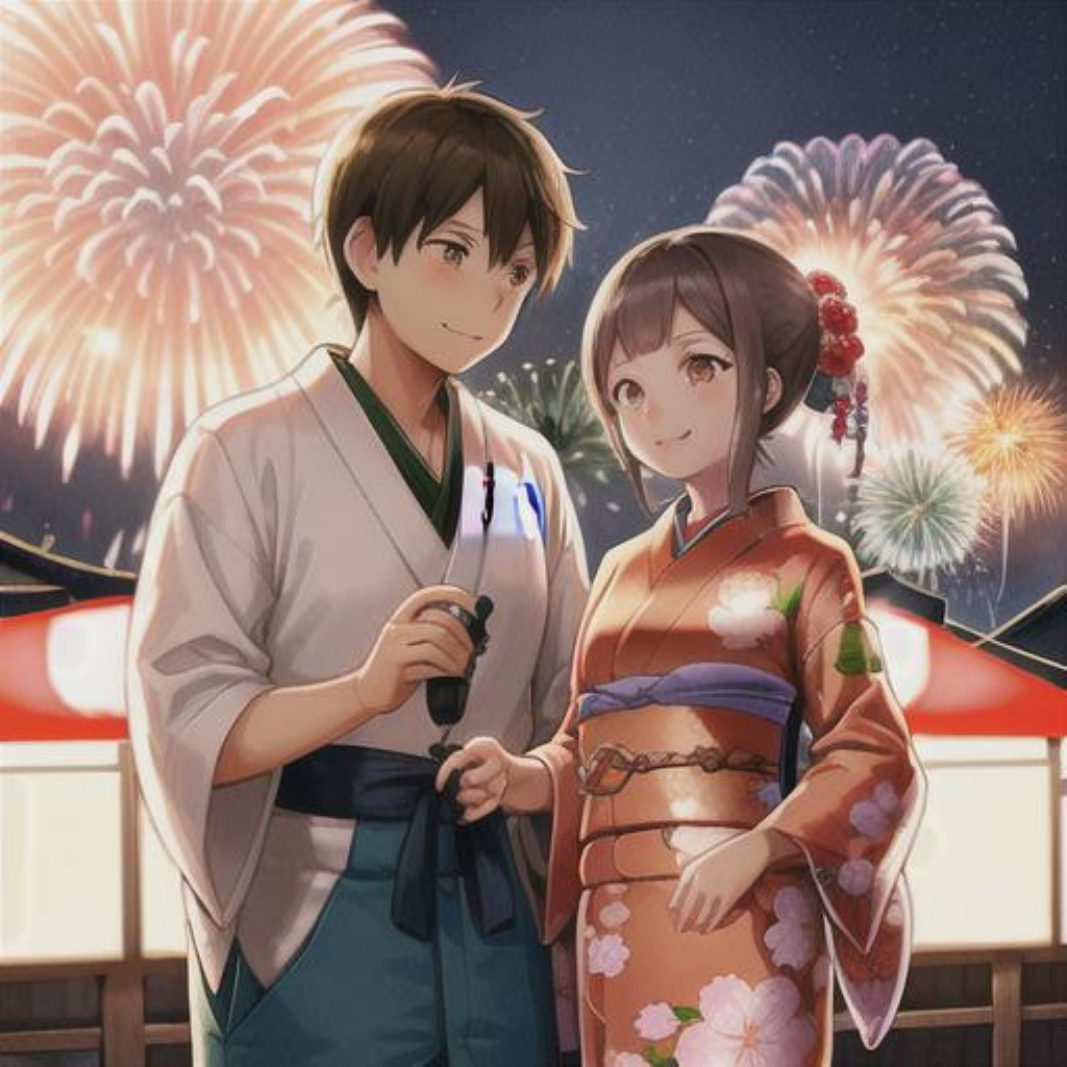}} & 
        \noindent\parbox[c]{0.19\columnwidth}{\includegraphics[width=0.19\columnwidth]{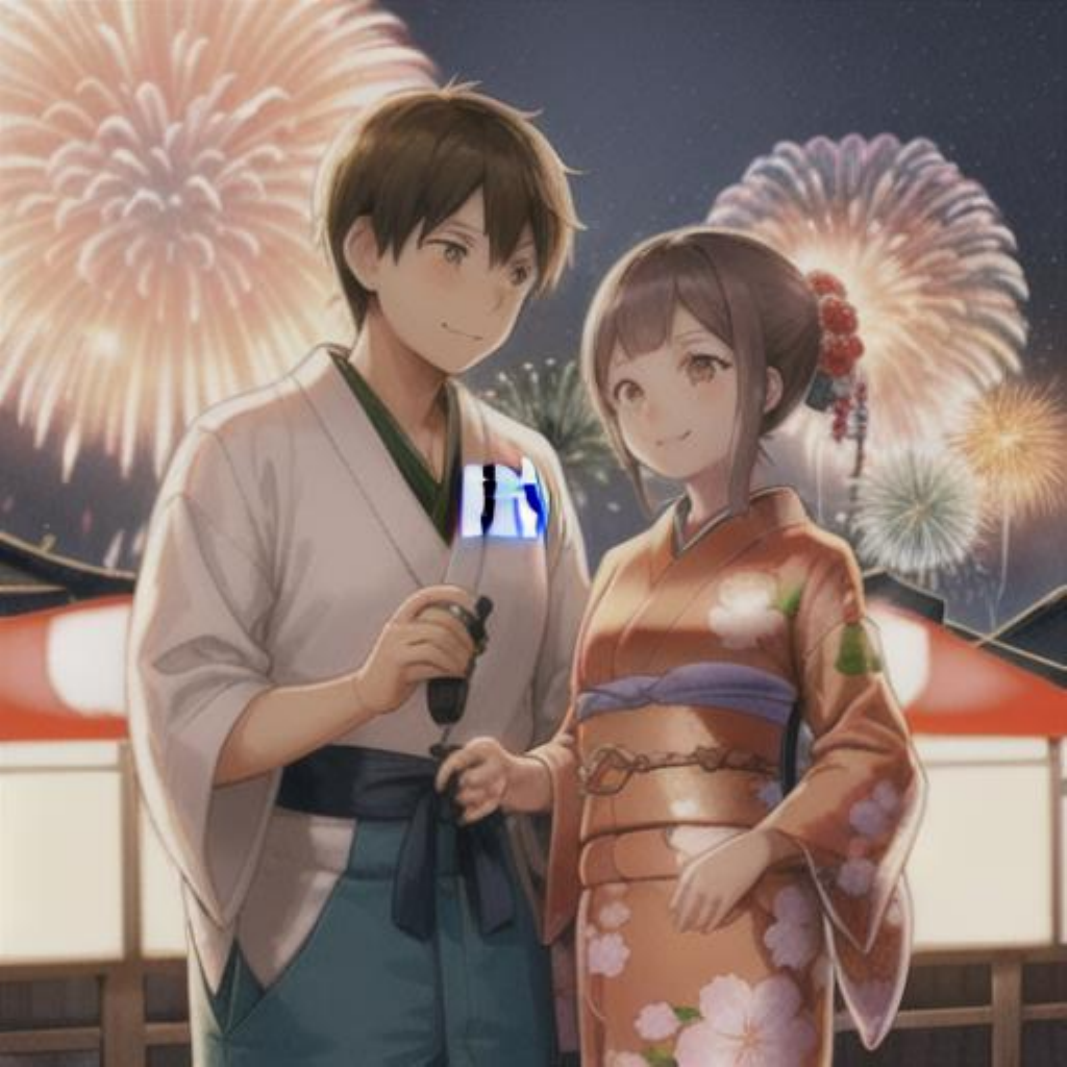}} \\

        \shortstack[l]{ Pastel-mix  Diffusion V4.0} &
        \noindent\parbox[c]{0.19\columnwidth}{\includegraphics[width=0.19\columnwidth]{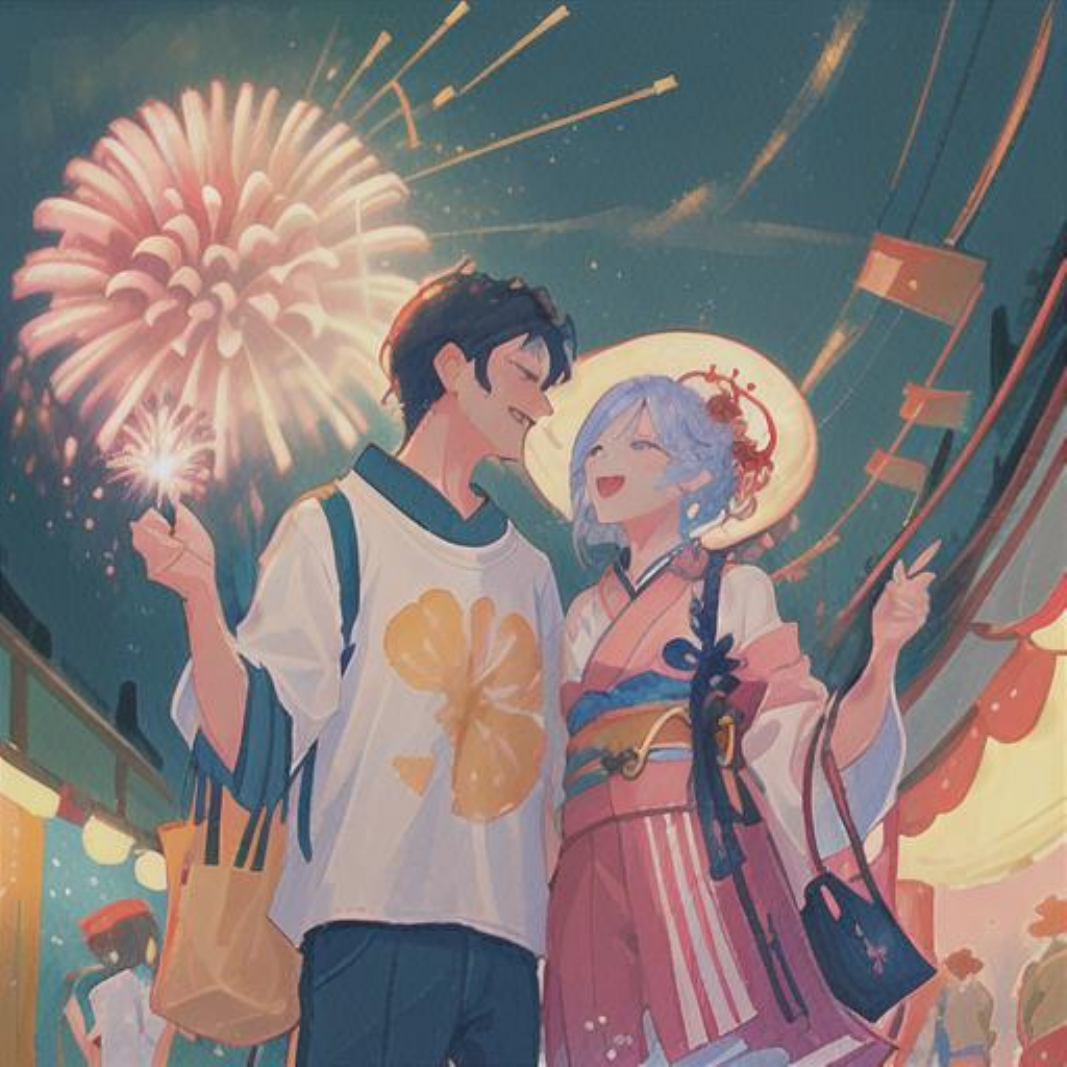}} & 
        \noindent\parbox[c]{0.19\columnwidth}{\includegraphics[width=0.19\columnwidth]{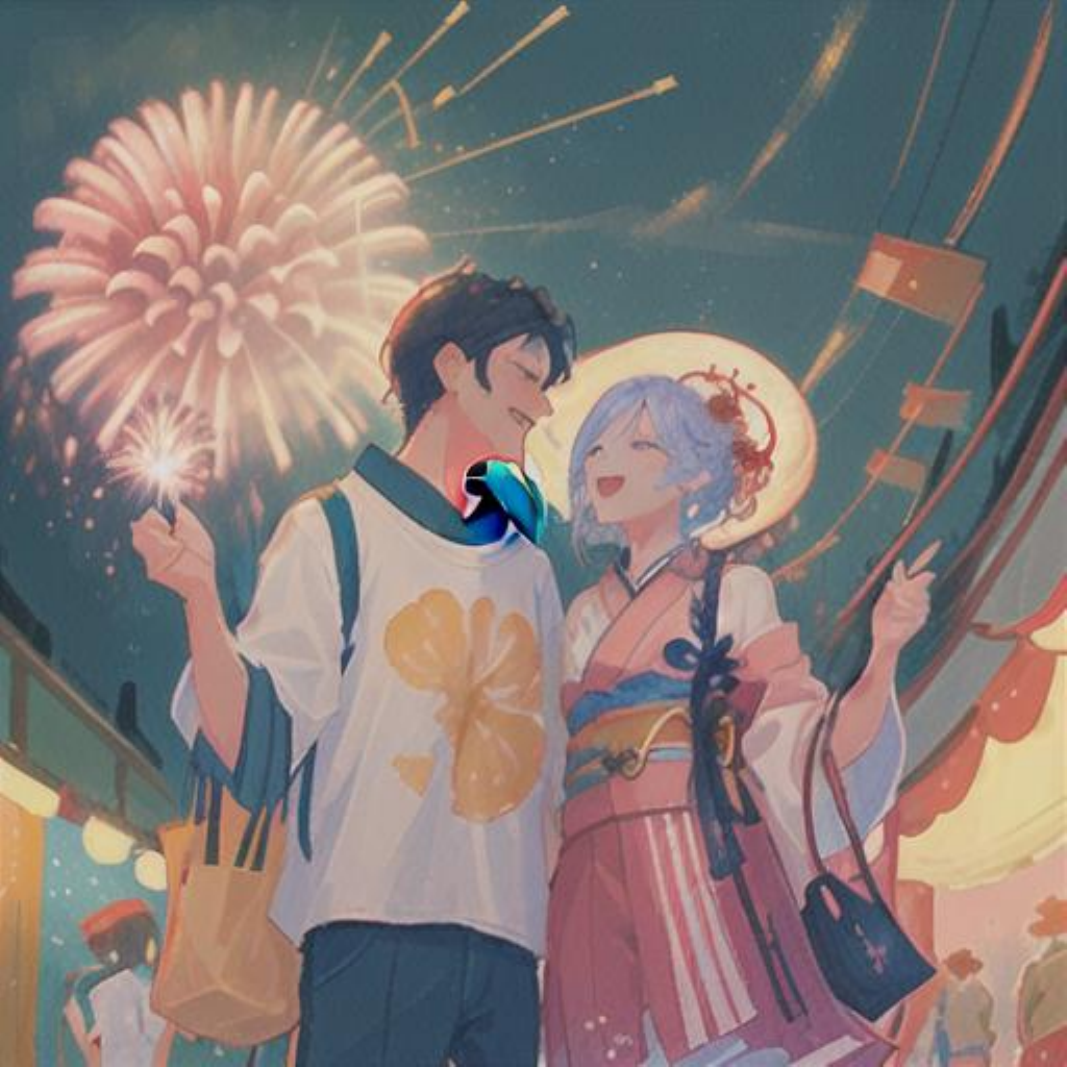}} & 
        \noindent\parbox[c]{0.19\columnwidth}{\includegraphics[width=0.19\columnwidth]{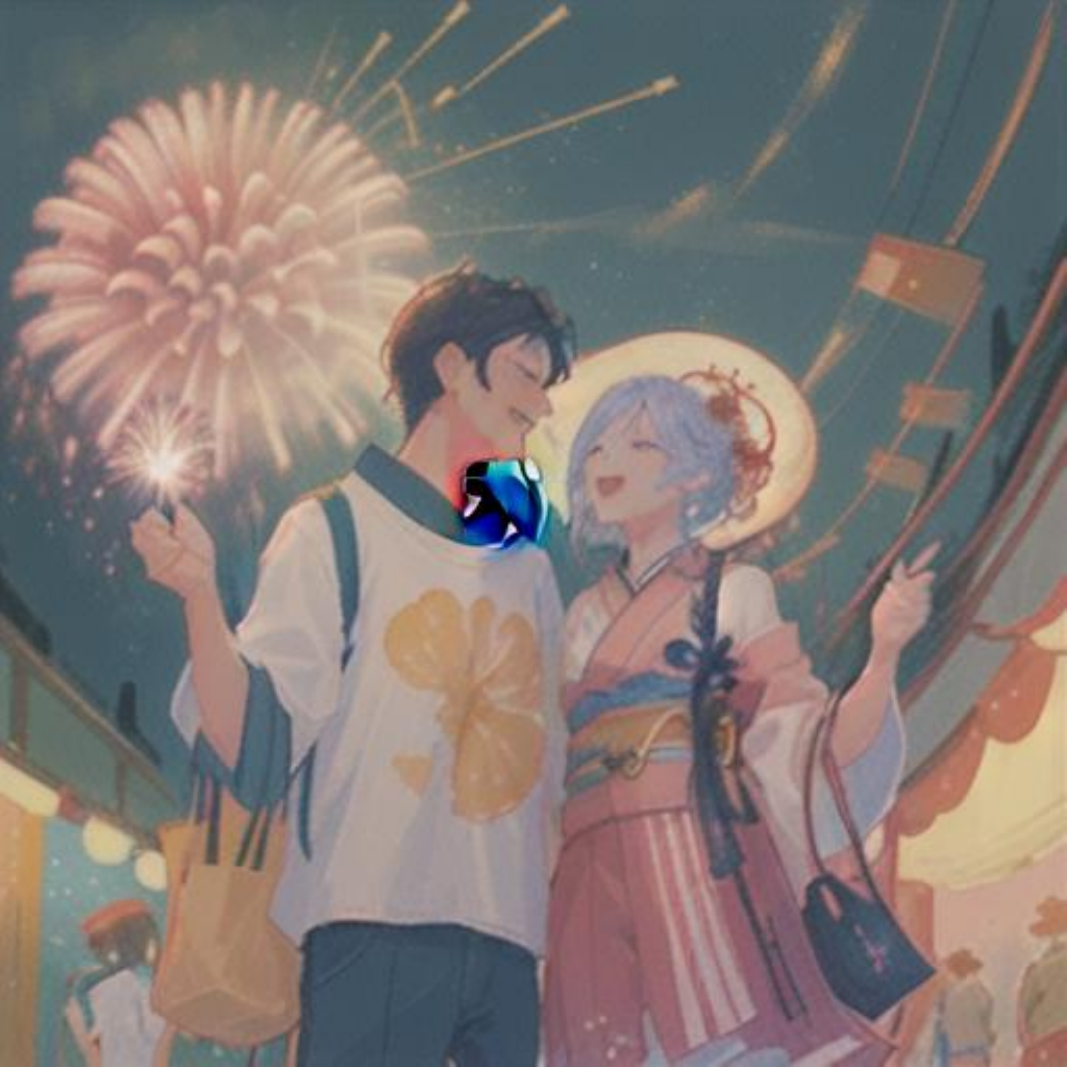}} \\
        
    \end{tabu}
    \caption{Visualization of divergence artifacts through latent code scaling. The figure illustrates the diffusion results obtained by scaling the latent code within a $4\times4$ square positioned at the center of the latent image. This scaling process involves multiplying the latent code with a constant factor. As a consequence, the resulting image showcases the emergence of divergence artifacts, which are particularly prominent at the center. The samples presented in this figure were generated using PLMS4 \cite{liu2022pseudo} with a guidance scale of 15 and 250 sampling steps. Prompt: "A beautiful illustration of a couple looking at fireworks in a summer festival in Japan"}
    
    \label{fig:latent_scaling}
\end{figure}

\end{document}